\newcommand{\CircledTall}[1]{\Circled{\vphantom{f}#1}}
\newcommand{\evalshort}{%
    \mathclose{}\left.\vphantom{\big|}\right|%
}
\DeclareMathOperator{\E}{\mathbb{E}}
\DeclareMathOperator*{\argmax}{arg\,max}
\DeclareMathOperator*{\argmin}{arg\,min}
\newcommand{\R}{\mathbb{R}}
\DeclareMathOperator{\diag}{diag}
\DeclareMathOperator{\spn}{span}
\DeclareMathOperator{\sym}{Sym}
\newcommand{\gd}{gradient descent\xspace}
\newcommand{\ngd}{normalized gradient descent\xspace}
\newcommand{\gflow}{gradient flow\xspace}
\newcommand{\rmsnorm}{Scalar RMSProp\xspace}
\newcommand{\rmsprop}{RMSProp\xspace}
\newtheoremstyle{spaced} %
  {5pt}    %
  {0pt}    %
  {\normalfont}  %
  {}       %
  {\bfseries}  %
  {.}      %
  {.5em}   %
  {}       %
\theoremstyle{spaced}
\DeclareMathOperator{\sdcp}{SDCP}
\newcommand{\wbar}{\overline{w}}
\newcommand{\Seff}{S^{\text{eff}}}
\renewcommand{\paragraph}[1]{\noindent\textbf{#1}\quad}
\newtheorem{corollary}{Corollary}
\newtheorem{lemma}{Lemma}
\newtheorem{proposition}{Proposition}
\newtheorem{remark}{Remark}
\newtheorem{fact}{Fact}
\newtheorem{definition}{Definition}
\newtheorem{example}{Example}
\newcommand{\succeqOver}[1]{\succeq\!_{#1}\;}
\newcommand{\preceqOver}[1]{\preceq\!_{#1}\;}
\def\@@bfil{\leaders \vrule \@height \ht\z@ \@depth \z@ \hfill}%
\def\@bLfil{\@@bfil}%
\def\@bRfil{\@@bfil}%
\def\resetbraceratio{\gdef\@bLfil{\@@bfil}\gdef\@bRfil{\@@bfil}}%
\def\setbraceratio#1#2{%
  \let\@bLfil\relax%
  \multido{\iA=1+1}{#1}{\gappto\@bLfil{\@@bfil}}%
  \let\@bRfil\relax%
  \multido{\iA=1+1}{#2}{\gappto\@bRfil{\@@bfil}}%
}
\def\upbracefill{$\m@th\setbox\z@\hbox{$\braceld$}\bracelu\@bLfil\bracerd\braceld\@bRfil\braceru$}
\def\downbracefill{$\m@th\setbox\z@\hbox{$\braceld$}\braceld\@bLfil\braceru\bracelu\@bRfil\bracerd$}
\newcommand{\anote}[1]{\textcolor{blue}{AD: #1}}
\newcommand{\jnote}[1]{\textcolor{red}{JC: #1}}
\renewcommand{\anote}[1]{}
\renewcommand{\jnote}[1]{}
\newcounter{specialfigprefix}
\newenvironment{specialfigures}{%
    \setcounter{specialfigprefix}{\value{figure}}
    \stepcounter{specialfigprefix}

    \setcounter{figure}{0}
    
}{%
    \setcounter{figure}{\value{specialfigprefix}}

}
  \newcounter{saved@figure}
  \newcounter{letterfigure}
  \newenvironment{letterfigures}{%
    \setcounter{saved@figure}{\value{figure}}%
    \setcounter{letterfigure}{0}%

    \let\orig@figure\figure
    \let\endorig@figure\endfigure

    \renewenvironment{figure}[1][]{%
      \refstepcounter{letterfigure}%
      \orig@figure[##1]%
    }{%
      \endorig@figure
    }%
  }{%
    \let\figure\orig@figure
    \let\endfigure\endorig@figure

    \setcounter{figure}{\value{saved@figure}}%
    \addtocounter{figure}{1}%

  }
\begin{document}

\begin{center}
    {\LARGE Understanding Optimization in Deep Learning with Central Flows} \\[2em]
    \begin{minipage}{0.45\textwidth}
    \centering
    Jeremy Cohen* \\
    Carnegie Mellon and Flatiron Institute \\
    \href{https://jmcohen.github.io}{\texttt{jmcohen.github.io}}
  \end{minipage}
  \begin{minipage}{0.45\textwidth}
    \centering
    Alex Damian* \\
    Princeton University \\
    \href{https://alex-damian.github.io}{\texttt{alex-damian.github.io}}
  \end{minipage}
  \\[1.5em]
  \begin{minipage}{0.3\textwidth}
    \centering
    Ameet Talwalkar \\
    Carnegie Mellon University
  \end{minipage}
  \begin{minipage}{0.3\textwidth}
    \centering
    J. Zico Kolter \\
    Carnegie Mellon University
  \end{minipage}
  \begin{minipage}{0.3\textwidth}
    \centering
    Jason D. Lee \\
    Princeton University
  \end{minipage}
  \\[2em]
\end{center}

\begin{NoHyper}
\def\thefootnote{*}\footnotetext{Equal contribution; author ordering determined by coin flip over a Zoom call \citep{kingma2014adam}.  Please direct correspondence to both the first authors (see websites for latest contact information).  Alex Damian is now at Harvard and Jason D. Lee is now at U.C. Berkeley.  This is the full version of a paper that was published at ICLR 2025.
}
\end{NoHyper}

\begin{abstract}
Traditional theories of optimization cannot describe the dynamics of optimization in deep learning, even in the simple setting of deterministic training.  
The challenge is that optimizers typically operate in a complex, oscillatory regime called the \emph{edge of stability}.  In this paper, we develop theory that can describe the dynamics of optimization in this regime.
Our key insight is that while the \emph{exact} trajectory of an oscillatory optimizer may be challenging to analyze, the time-averaged (i.e. smoothed) trajectory is often much more tractable.  To analyze an optimizer, we derive a differential equation called a \emph{central flow} that characterizes this time-averaged trajectory. We empirically show that these central flows can predict long-term optimization trajectories for generic neural networks with a high degree of numerical accuracy.  By interpreting these central flows, we are able to understand  how gradient descent makes progress even as the loss sometimes goes up; how adaptive optimizers ``adapt'' to the local loss landscape; and how adaptive optimizers implicitly navigate towards regions where they can take larger steps.  
Our results suggest that central flows can be a valuable theoretical tool for reasoning about optimization in deep learning.
\end{abstract}

\section{Introduction}

While there is a rich body of work on the theory of optimization, few works attempt to analyze optimization in ``real'' deep learning settings. Instead, even works motivated by deep learning often rely on unrealistic assumptions such as convexity, or restrict their analyses to simplified models. Practitioners cannot use such theories to reason directly about their optimization problems.
Our goal in this paper is to develop optimization theory that applies \emph{directly} to deep learning problems. This is a difficult task: prior research has shown that, even in the seemingly simple setting of deterministic (i.e. full-batch) training, optimization typically operates in a complex, oscillatory regime called the \emph{edge of stability} (EOS) \citep{xing2018walk, wu2018dynamical, jastrzębski2018on, jastrzebski2020the, cohen2021gradient, cohen2022adaptive}.  
The dynamics of optimization in this regime cannot be captured by traditional optimization theory.

\vspace{-5px}
\begin{figure}[b]
\centering
\captionsetup[subfigure]{justification=centering}
\includegraphics[width=\textwidth]{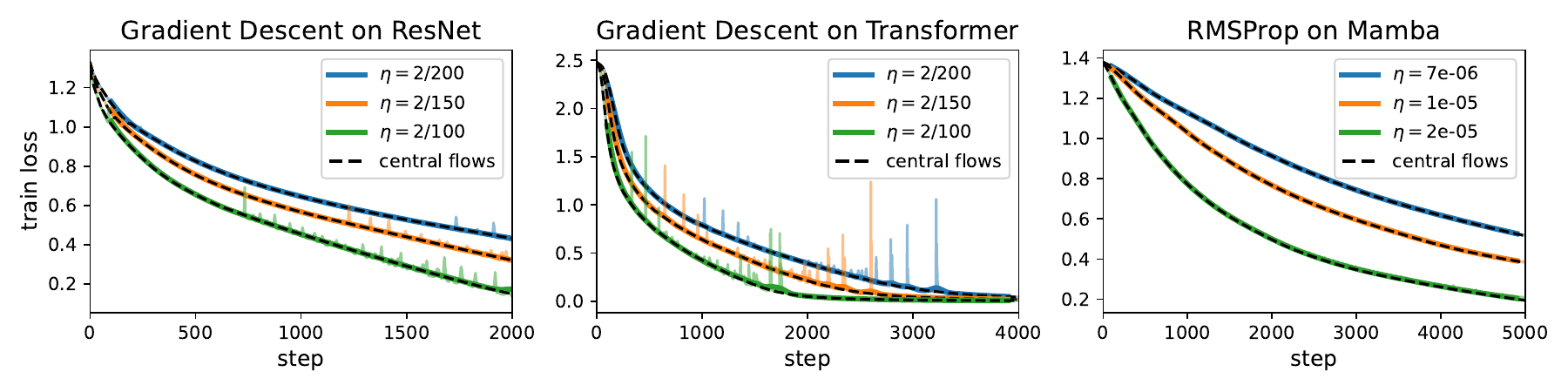}
\caption{\textbf{Our theory accurately predicts long-term optimization trajectories of practical neural networks}.\quad  We hold our theory to the high standard of rendering accurate \emph{numerical} predictions about the optimization of practical (i.e. non-toy) neural networks.  For example, this figure shows that our central flows can accurately predict the time-averaged (smoothed) loss curves of gradient descent and RMSProp on various practical architectures.\vspace{-5px}}
\label{fig:figure-one-losscurves}
\end{figure}

In this paper, we devise a methodology for analyzing these oscillatory deep learning dynamics.  Our key insight is that while the \emph{fine-grained} trajectory of an oscillatory optimizer may be challenging to analyze, the \emph{time-averaged} (i.e. locally smoothed) trajectory is often much more tractable.  To analyze an optimization algorithm, we derive a differential equation called a \emph{central flow} which explicitly captures this time-averaged trajectory (\Cref{fig:central-flow-cartoon}). Being a smooth curve, the central flow is a simpler object than the original oscillatory trajectory.  Hence, by interpreting the central flow, we can reason more easily about the original optimizer. 

\begin{figure}[t]
    \centering
    \includegraphics[width=\linewidth]{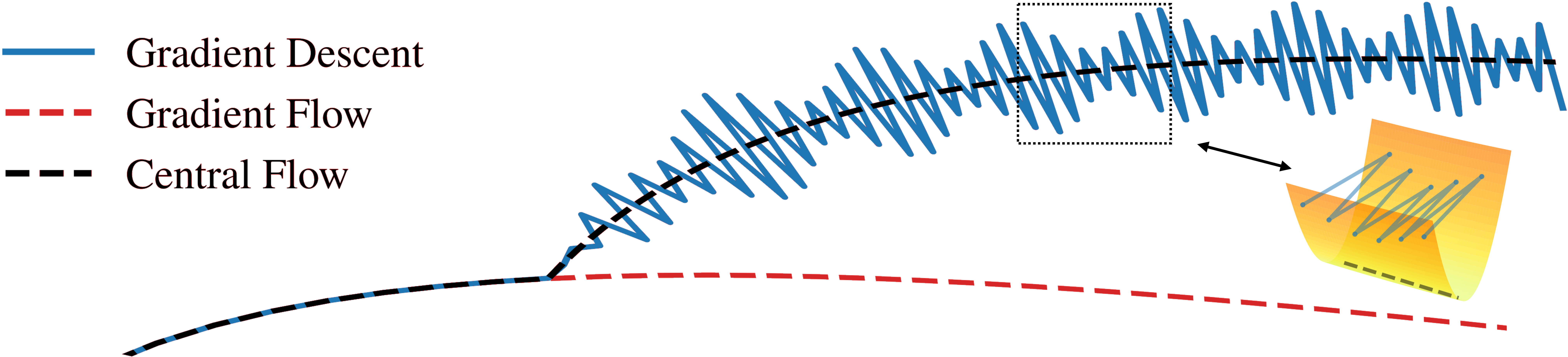}
    \caption{\textbf{The central flow models the time-averaged (i.e. smoothed) trajectory of the oscillatory optimizer}. In this illustrative cartoon of the weight-space dynamics, gradient descent (blue) takes an oscillatory path through weight space.  The central flow (black) is a smooth curve that characterizes this trajectory, whereas gradient flow (red) takes a different path.
    As illustrated in the inset, an oscillatory optimizer can be visualized as moving through a ``valley'' while bouncing between the ``valley walls'' \citep{xing2018walk,cohen2021gradient,Wen2024UnderstandingWL}.}
    \label{fig:central-flow-cartoon}
\end{figure}

We start in \Cref{sec:gd} by analyzing gradient descent, the simplest optimizer.
We first explain why traditional analyses cannot capture the typical dynamics of gradient descent in deep learning, and we then present a new analysis that does capture these dynamics.
The product of this analysis is a central flow.
We use this central flow to understand various aspects of gradient descent's behavior, such as how the train loss can behave non-monotonically over the short-term while nevertheless decreasing over the long term.
We then examine a simple adaptive optimizer in \Cref{sec:rmsprop_norm}, before turning to RMSProp (i.e. Adam without momentum) in  \Cref{sec:rmsprop}. 
We show that much of the behavior of these optimizers is actually \emph{implicit} in their oscillatory dynamics, and we render such behaviors \emph{explicit} via our central flow analysis.
In particular, our central flows reveal how these adaptive optimizers: (1) implicitly adapt their step size(s) to the local curvature, and (2) implicitly steer towards lower-curvature regions where they can take larger steps.

We focus in this paper on the simple, idealized setting of deterministic (i.e. full-batch) training.  However, we emphasize that similar optimization dynamics have been observed in the practical stochastic setting \citep{jastrzębski2018on, jastrzebski2020the, andreyev2024edge}.  We view our analysis of deterministic optimization as a necessary stepping stone to a subsequent analysis of stochastic optimization. 

While we derive each central flow using informal mathematical reasoning, we show that these flows can accurately predict long-term optimization trajectories in a variety of deep learning settings --- a high standard of empirical proof.  
Thus, we believe that central flows hold promise as a framework for analyzing, reasoning about, and perhaps even inventing, deep learning optimizers.

\begin{tcolorbox}[left=2pt,right=2pt,top=3pt,bottom=3pt,colback=white!0,halign=center]
We strongly encourage readers to look at the companion \href{http://centralflows.github.io}{\textbf{blog version}} of this paper for an interactive exposition with animations, as well as the accompanying \href{http://github.com/centralflows/centralflows}{\textbf{code}} we used to simulate the central flows.\footnotemark
\end{tcolorbox}
\footnotetext{The blog version is \href{http://centralflows.github.io}{\texttt{centralflows.github.io}} and the code is at \href{http://github.com/centralflows/centralflows}{\texttt{github.com/centralflows/centralflows}}.}

\newpage

{\small \tableofcontents}

\newpage
\section{Related Work}
\label{sec:related_work}

\paragraph{Edge of stability} 
The dynamics of optimization in deep learning remain poorly understood, even in the seemingly simple setting of deterministic (i.e. full-batch) training. 
Indeed, recent research showed that gradient descent on neural networks typically operates in a regime termed the ``edge of stability'' (EOS) in which (1) the largest Hessian eigenvalue equillibrates around the \emph{critical threshold} $2/\eta$, and (2) the algorithm oscillates along high-curvature directions without diverging \citep{xing2018walk, wu2018dynamical, jastrzębski2018on, jastrzebski2020the, cohen2021gradient}. These dynamics could not be explained by existing optimization theory, which led \citet{cohen2021gradient} to  observe that there was no explanation for how or why gradient descent can function properly in deep learning.

Subsequently, several studies sought to theoretically explain EOS dynamics. Some works rigorously analyzed EOS dynamics on specific objective functions 
\citep{agarwala2022secondorder, ahn2024learning, chen2023edge, even2023s, kreisler2023gradient, song2023trajectory, li2022analyzing, wu2024implicit, zhu2023understanding}, while other works \citep{arora2022understanding, lyu2022understanding, damian2023selfstabilization} gave generic analyses based on a local \emph{third-order} Taylor expansion of the loss, which is one order higher than is normally used in the theoretical analysis of gradient descent.
Similar arguments were first used by \citet{blanc2020implicit} to study implicit regularization in SGD.
Our work is most directly inspired by \citet{damian2023selfstabilization}, as their analysis applies to generic objective functions, and holds throughout training, not just near convergence.
However, whereas they analyze the \emph{fine-grained} oscillatory dynamics, we argue that analyzing the \emph{time-averaged} dynamics is simpler, and is sufficient for most purposes.

\paragraph{Continuous-time models for optimization}
The standard continuous-time model for \gd is the gradient flow. \citet{barrett2021implicit, smith2021origin} argued that \gd is, instead, better approximated by a \emph{modified} gradient flow that is augmented with a penalty on the squared gradient norm.  We find that on deep learning objectives, this modified flow improves slightly over gradient flow in the stable regime, but fails in the edge of stability regime, where most of the discrepancy between \gd and gradient flow originates (see  \Cref{sec:igr}).  \citet{rosca2023continuous} proposed a flow that can model oscillations by using complex numbers. However, this flow still cannot track the long-term trajectory of \gd in EOS regime.

Many works propose to model the dynamics of stochastic optimizers using stochastic differential equations (SDEs) 
(\citealp{li2017stochastic}; \citealp{li2021validity}, \citealp{malladi2022adaptivesde}; \citealp{compagnoni2023sde, compagnoni2024adaptive}). In the full batch limit, where SGD reduces to \gd, these SDEs reduce to \gflow, which is a poor approximation to \gd at the edge of stability. Thus, these SDEs cannot be accurate in all hyperparameter regimes. Further, even when these SDEs do well-approximate the real optimizer trajectory, the SDE trajectories are themselves oscillatory, and accordingly can possess behaviors that are \emph{implicit} in the oscillatory dynamics.  Our central flows, by contrast, average out the oscillations and render all such behaviors \emph{explicit}.  Developing an analogue of the central flow for stochastic optimization is an interesting open question (see \Cref{sec:discussion}).  While some works do aim to explicitly characterize the time-averaged trajectory of SGD \citep{blanc2020implicit, damian2021label, li2022happens}, existing analyses only apply in limiting regimes (e.g. $\eta \to 0$), and only when the loss is already near zero.

\paragraph{Understanding adaptive optimizers}
\citet{ma2022qualitative} observed that RMSProp and Adam oscillate, and \citet{cohen2022adaptive} showed that such dynamics can be viewed as an adaptive version of the edge of stability.%
\citet{khaled2023dowg} and \citet{mishkin2024directional} observed that on  quadratic functions, certain adaptive optimizers implicitly adapt their effective step size to the maximum stable step size; we show this holds more generally, beyond quadratics.
The phenomenon we call ``acceleration via regularization.'' explains experiments in \citet{roulet2024stepping} and \citet{wang2024improvinggeneralizationconvergenceenhancing}.
Many works have also conducted rigorous convergence analyses of adaptive optimizers, generally focused on deriving rates of convergence to a global minimizer or stationary point \citep{duchi2011adaptive, reddi2019convergence, chen2018convergence, chen2018universal, zaheer2018adaptive, zou2019sufficient, defossez2020simple, li2024frac, chen2022towards, wang2024closing, yang2024two, guo2021novel, shi2021rmsprop, zhang2023adam, crawshaw2022robustness, li2024convergence, wang2024convergence, hong2024convergence, zhang2024convergence, wang2022provable, hubler2024parameter}.

\paragraph{Dynamical systems}
Our work likely has rich connections to various topics from the theory of dynamical systems such as the method of averaging and slow-fast systems \citep[e.g.,][]{guckenheimer1983nonlinear}.  We hope that these connections can be explored by future work.

\section{Gradient Descent}
\label{sec:gd}
The simplest first-order optimizer is deterministic \gd with a fixed learning rate $\eta$:
\begin{align}
    w_{t+1} = w_t - \eta \nabla L(w_t). \label{eq:gd}
\end{align}
Perhaps surprisingly, \citet{cohen2021gradient} showed that traditional optimization analyses cannot capture the typical dynamics of \gd in deep learning.  We now present a new analysis that \emph{does} capture these dynamics.

\begin{itemize}
    \item In \Cref{sec:gd:dynamics}, we describe the typical dynamics of gradient descent in deep learning, and we explain why these oscillatory \emph{edge of stability} dynamics cannot be captured by traditional optimization theory.
    \item In \Cref{sec:gd:deriving}, we show that while the \emph{exact} oscillatory trajectory may be hard to analyze, the \emph{time-averaged} trajectory is more tractable.  We derive a central flow that characterizes this time-averaged trajectory.
    \item In \Cref{sec:gd:interpreting} we use this central flow to understand the behavior of gradient descent. For example, we show that while gradient descent's loss curve is non-monotonic, it can be viewed as the superposition of the loss along the central flow, plus a contribution from the oscillations. The central flow loss is a smoothly varying quantity that monotonically decreases, and therefore constitutes a hidden progress metric for gradient descent.
\end{itemize}

Our analysis of gradient descent will set the stage for subsequent analyses of more complex optimizers.

\subsection{The Dynamics of Gradient Descent}\label{sec:gd:dynamics}

\begin{figure}[b!]
\centering
\vspace{-15px}
\includegraphics[width=0.85\textwidth]{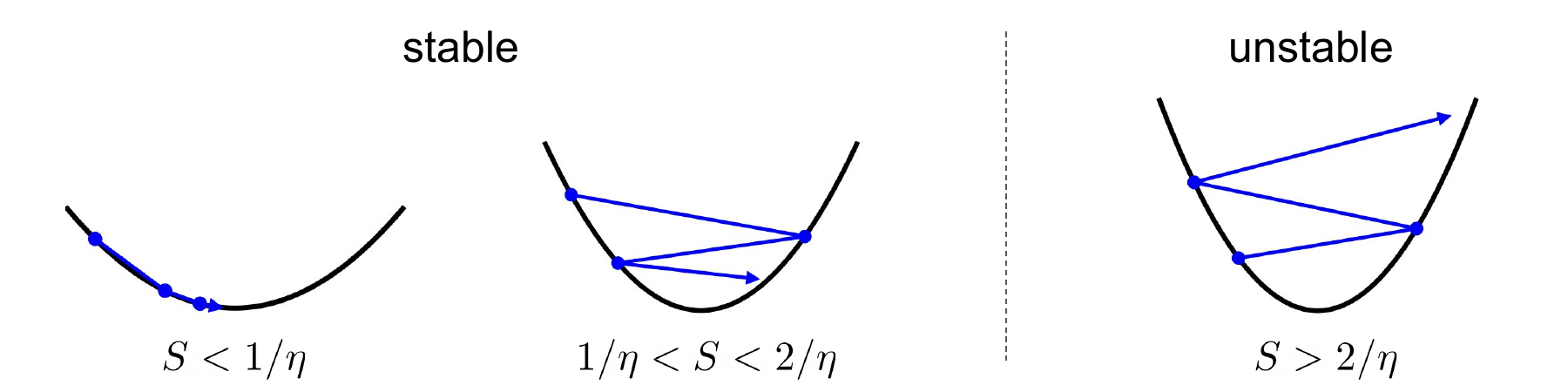}
\caption{\textbf{Gradient descent on a quadratic function. } Consider gradient descent with learning rate $\eta$ on a quadratic function $\frac{1}{2} S x^2$, with sharpness $S$.  If $S > 2/\eta$, gradient descent oscillates with exponentially growing magnitude.}
\label{fig:gd:quadratic}
\end{figure}

To understand the oscillatory dynamics of \gd in deep learning, it is instructive to first consider the case of quadratic objective functions.  On quadratic functions, \gd oscillates if the \emph{curvature} (i.e. Hessian) is too large relative to the learning rate.
For example, consider a one-dimensional quadratic objective $L(x) = \tfrac{1}{2} S x^2$, which has global curvature $S$.
Under \gd with learning rate $\eta$, the iterates $\{x_t\}$ evolve via $x_{t+1} = (1 - \eta S) x_t$.  If $S$ exceeds the \emph{critical threshold} $2/\eta$, then $(1 - \eta S) < -1$, so the iterate $x_t$ flips signs and grows in magnitude at each step, i.e. gradient descent oscillates with exponentially growing magnitude, as shown in \Cref{fig:gd:quadratic}.  
More generally, on a quadratic objective in multiple dimensions, the curvature is quantified by the Hessian matrix, and gradient descent oscillates with exponentially growing magnitude along Hessian eigenvectors with eigenvalues exceeding $2/\eta$.\footnote{For an explicit derivation of this well-known fact, see \citet[Proposition 1]{cohen2021gradient}.  Note that an exception is if the initial iterate has exactly zero alignment with these Hessian eigenvectors. 
However, this event has probability zero under any typical random initialization.}

Of course, deep learning objectives $L(w)$ are not globally quadratic.  Still, at any point $w$ in weight space, the objective can be locally approximated by a quadratic Taylor expansion around $w$.  The dynamics of \gd on this quadratic are controlled by the largest eigenvalue of the Hessian $H(w)$, which we call the \emph{sharpness} $S(w)$:
 \begin{align}
    S(w) := \lambda_1(H(w)).
\end{align}
If the sharpness $S(w)$ exceeds $2/\eta$, then \gd on the quadratic Taylor approximation would oscillate with exponentially growing magnitude along the top Hessian eigenvector(s).  This argument suggests that \gd cannot function properly in regions of weight space where the sharpness $S(w)$ exceeds $2/\eta$.

In light of this discussion, why does gradient descent converge in deep learning? The natural explanation is that the sharpness remains below $2/\eta$ throughout training.  In other words, if we define the ``stable region'' $\{w: S(w) \le 2/\eta\}$ as the subset of weight space where the sharpness is bounded by $2/\eta$, then one might suppose that gradient descent remains inside the stable region throughout training, as depicted in the cartoon \Cref{fig:gd:stable-region}(a). This is the picture suggested by traditional analyses of gradient descent.\footnote{We are referring to analyses which assume $L$-smoothness, i.e. Lipschitzness of the gradient / boundedness of the Hessian spectral norm.  This assumption is usually stated as a global condition, but analyses generally only require it to hold locally, in the vicinity of the trajectory.}

\begin{figure}[t!]
\centering
\vspace{-15px}
\begin{subfigure}[t]{0.42\textwidth}
    \centering
    \includegraphics[width=0.95\textwidth]{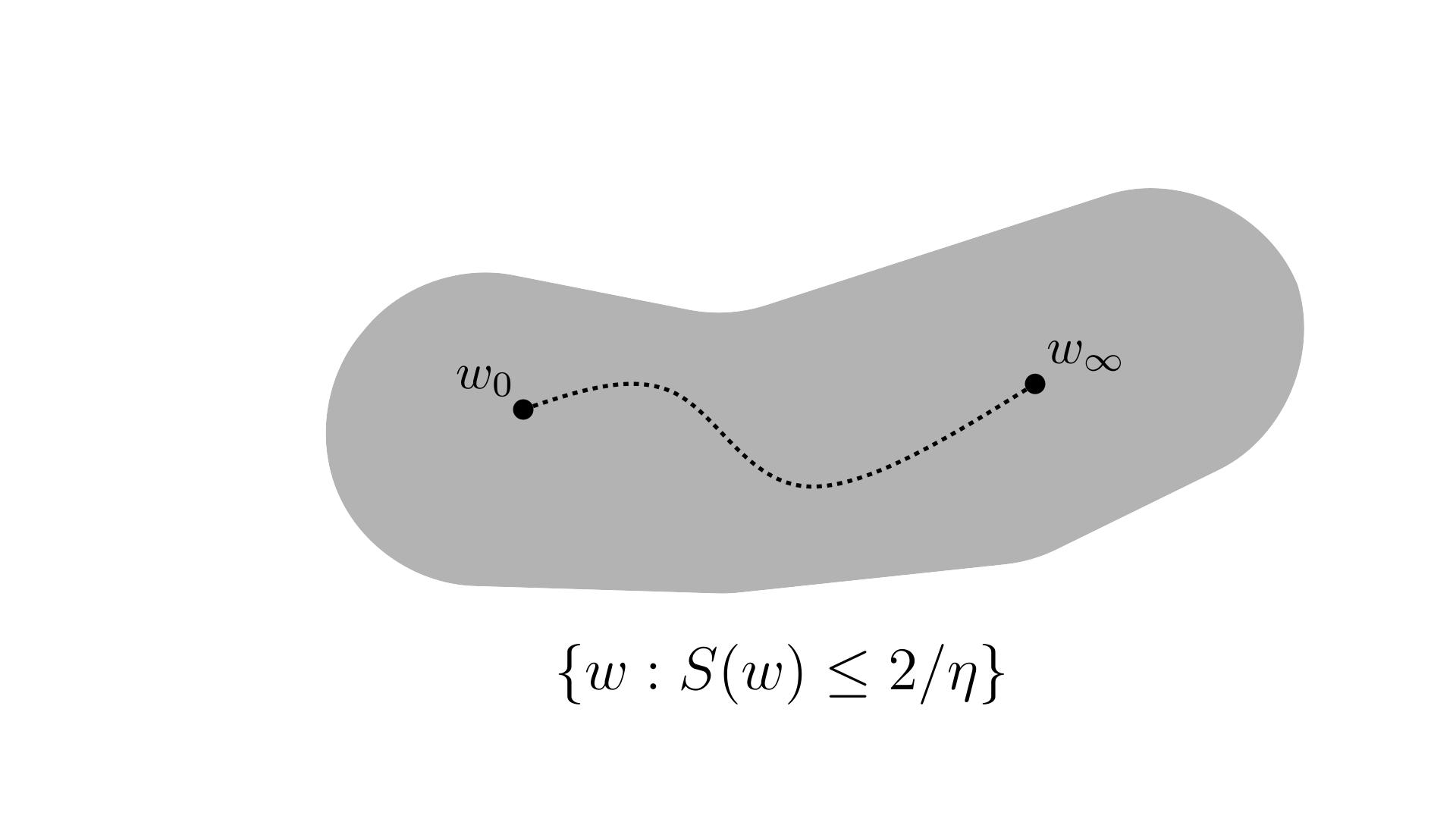}
    \caption{\textbf{Expectation:} gradient descent stays throughout training inside the \emph{stable region} (gray), the subset of weight space where the sharpness is bounded by $2/\eta$.}
    \label{fig:gd:stable-region:expectation}
\end{subfigure}
\hspace{0.03\textwidth}
\begin{subfigure}[t]{0.42\textwidth}
    
    \centering
    \includegraphics[width=0.95\textwidth]{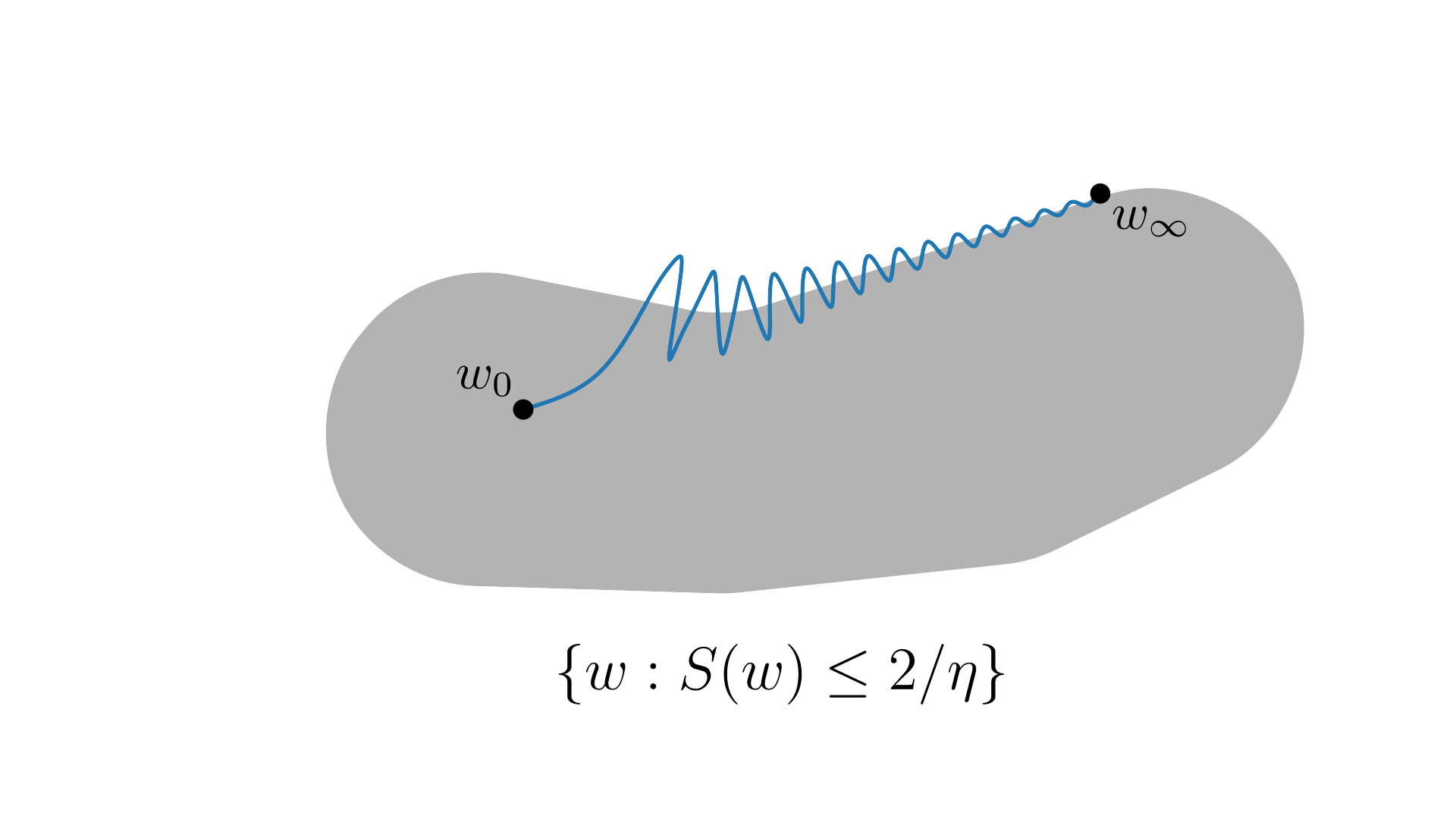}
    \caption{\textbf{Reality:} gradient descent frequently exits the stable region, but dynamically steers itself back inside.}
    \label{fig:gd:stable-region:reality}
\end{subfigure}
\caption{\textbf{Why does gradient descent converge in deep learning?}  The reality (\textbf{right}) is dramatically different from the picture suggested by traditional theory (\textbf{left}).}
\label{fig:gd:stable-region}
\end{figure}

Yet, \citet{cohen2021gradient} observed a very different reality when training neural networks using gradient descent. \Cref{fig:gradient-descent-typical} depicts a typical gradient descent trajectory, with important events annotated \textcolor{black}{\CircledTall{a}} - \textcolor{black}{\CircledTall{g}}.  Initially, the sharpness rises \textcolor{black}{\CircledTall{a}}.\footnote{Throughout this paper, we report the Hessian eigenvalues measured not at the iterates themselves, but rather at the second-order midpoints between the iterates (see \Cref{appendix:experimental-details:implementation}).  This results in plots that are slightly crisper, while retaining all essential features.} Indeed, it is a robust empirical phenomenon, dubbed \emph{progressive sharpening}, that the sharpness tends to rise when training neural networks.\footnote{Progressive sharpening remains theoretically unexplained.  Our goal in this paper is not to understand the origin of progressive sharpening in deep learning, but rather to understand the dynamics of gradient descent on objective functions which may or may not possess this property.}\footnote{In the literature, progressive sharpening has also been called a ``narrowing valley'' \citep{liu2025focus, liu2025neuralthermodynamiclawslarge} and ``lower loss as sharper'' loss landscape structure \citep{li2023loss, bai2025adaptive}.} Soon enough, the sharpness rises past the critical threshold $2/\eta$.  Once this happens, gradient descent begins to oscillate with growing magnitude along the top Hessian eigenvector,\footnote{ To compute ``displacement along top Hessian eigenvector'' for \Cref{fig:gradient-descent-typical}, we let $t_0$ be the first step of the figure (i.e. step 2990), we let $u$ be the top Hessian eigenvector computed at step $t_0$, and we report $u^\top(w_t - w_{t_0})$.} just as one would predict from a quadratic Taylor approximation \textcolor{black}{\CircledTall{b}}. These oscillations grow large enough that the train loss starts to go up rather than down  \textcolor{black}{\CircledTall{c}}.
Yet, gradient descent does not diverge. 
Instead, something odd happens: as if ``by magic,'' the sharpness rapidly \emph{drops}  \textcolor{black}{\CircledTall{d}}. Indeed, it drops all the way below the critical threshold $2/\eta$, after which point the oscillations start to shrink in magnitude  \textcolor{black}{\CircledTall{e}}, as one would expect from a new quadratic Taylor approximation. The unexplained rapid drop in the sharpness has conveniently prevented gradient descent from diverging.\footnote{This process is similar to the ``catapult'' phenomenon observed in \citet{lewkowycz2020large} at initialization.  Indeed, the EOS dynamics with one unstable eigenvalue resemble a sequential series of catapults.  However, the dynamics with >1 unstable eigenvalues are more complex.} Similar dynamics recur throughout the rest of training: gradient descent oscillates without diverging along the highest-curvature direction(s),\footnote{As gradient descent oscillates along the high-curvature directions, it can be visualized as moving through a ``valley'' while bouncing between the ``walls'' of the valley \citep{xing2018walk,cohen2021gradient,Wen2024UnderstandingWL}.} as the sharpness stays dynamically regulated around the critical threshold $2/\eta$ \textcolor{black}{\CircledTall{f}}.  Meanwhile, the train loss decreases over the long run, but behaves non-monotonically over the short run \textcolor{black}{\CircledTall{g}}.

Intuitively, whereas the traditional theory implies that gradient descent remains inside the stable region throughout training, as in \Cref{fig:gd:stable-region}(a), in reality gradient descent is frequently exiting the stable region, but is somehow steering itself back inside, as in \Cref{fig:gd:stable-region}(b). \citet{cohen2021gradient} dubbed these dynamics \emph{edge of stability} (EOS), and noted that they could not be explained by traditional optimization theory.

\begin{figure}[t]
\centering
\includegraphics[width=\textwidth]{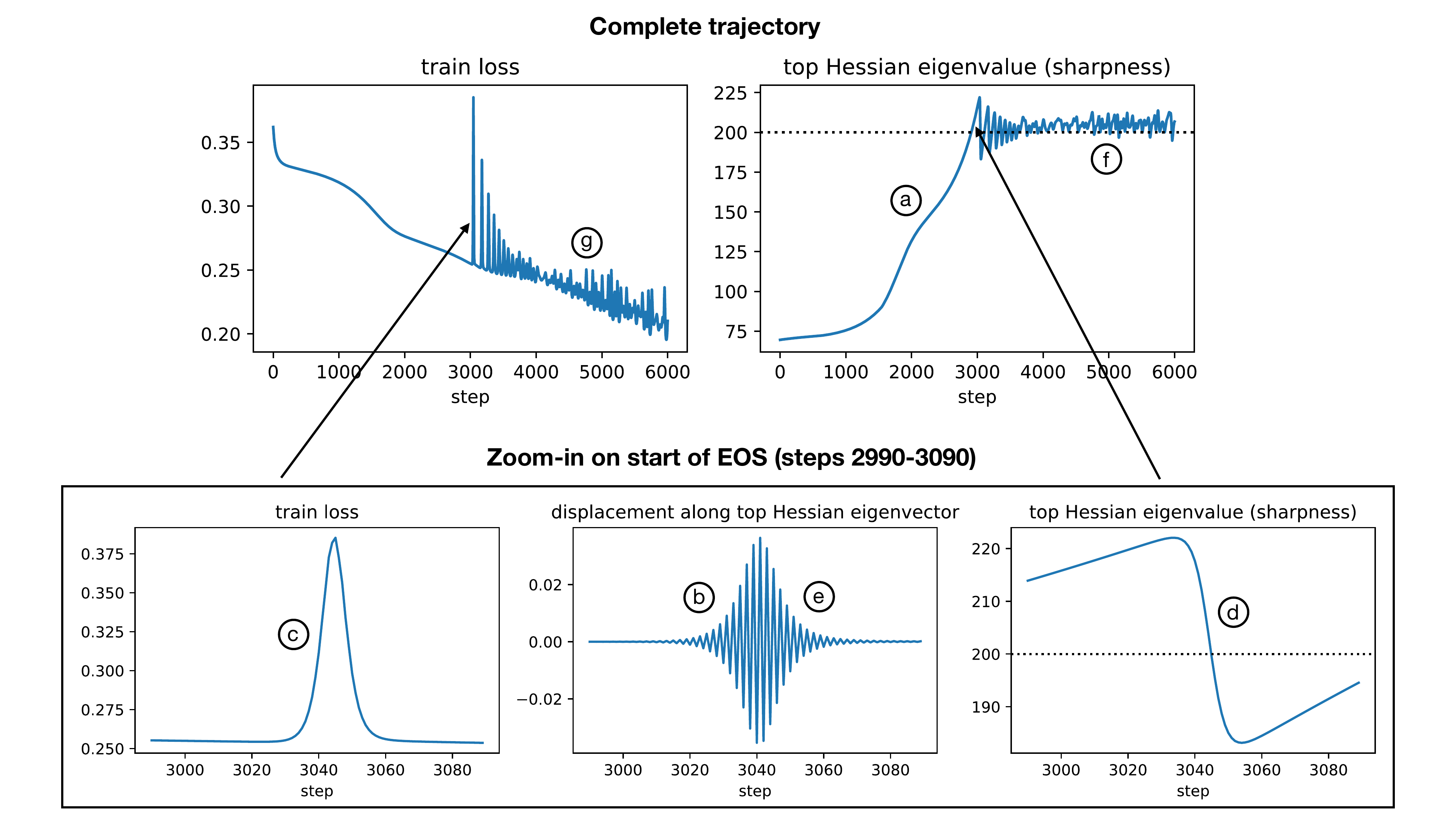}
\caption{\textbf{A typical gradient descent trajectory in deep learning}.  We train a neural network using gradient descent with step size $\eta = 0.01$.  The top row shows the long-term trajectory, while the bottom row zooms in on a particular time segment.  \textbf{(a)} The sharpness rises, reaching the critical threshold $2/\eta$ around step 2900. \textbf{(b)} Once the sharpness crosses the critical threshold $2/\eta$, gradient descent oscillates with growing magnitude along the top Hessian eigenvector. \textbf{(c)}  These oscillations cause the train loss to go up rather than down.  \textbf{(d)} However, gradient descent does not diverge; instead, as if ``by magic'', the sharpness decreases until falling below $2/\eta$.  \textbf{(e)}  Once the sharpness is below $2/\eta$, the oscillations shrink.   Throughout the rest of training: \textbf{(f)} the sharpness stays regulated around the critical threshold $2/\eta$ and \textbf{(g)} the train loss behaves non-monotonically over short timescales, while decreasing over long timescales.  \textit{Details}: the network is a Vision Transformer trained on a subset of CIFAR-10 using MSE loss. }
\vspace{10px}
\label{fig:gradient-descent-typical}
\hrule
\end{figure}

\newpage
\citet{damian2023selfstabilization} showed that the key for understanding these surprising dynamics is to Taylor-expand the objective to \emph{third} order, which is one order higher than traditionally used in analyses of gradient descent.  A third-order Taylor expansion reveals the crucial ingredient missing from traditional optimization theory:
\begin{tcolorbox}[left=2pt,right=2pt,top=3pt,bottom=3pt,colback=white!0,halign=center]
Oscillations along the top Hessian eigenvector automatically trigger reduction of the top Hessian eigenvalue.
\end{tcolorbox}

\begin{minipage}{0.6\textwidth}
Let us informally sketch this argument. Suppose that gradient descent is oscillating around a reference point $\overline{w}$, along the top Hessian eigenvector $u$, with current magnitude $x$, so that (illustration on right):
\begin{align}
    w = \overline{w} + xu.
\end{align}
\end{minipage}
\begin{minipage}{0.4\textwidth}
    \quad \quad \quad
    \includegraphics[width=0.5\linewidth]{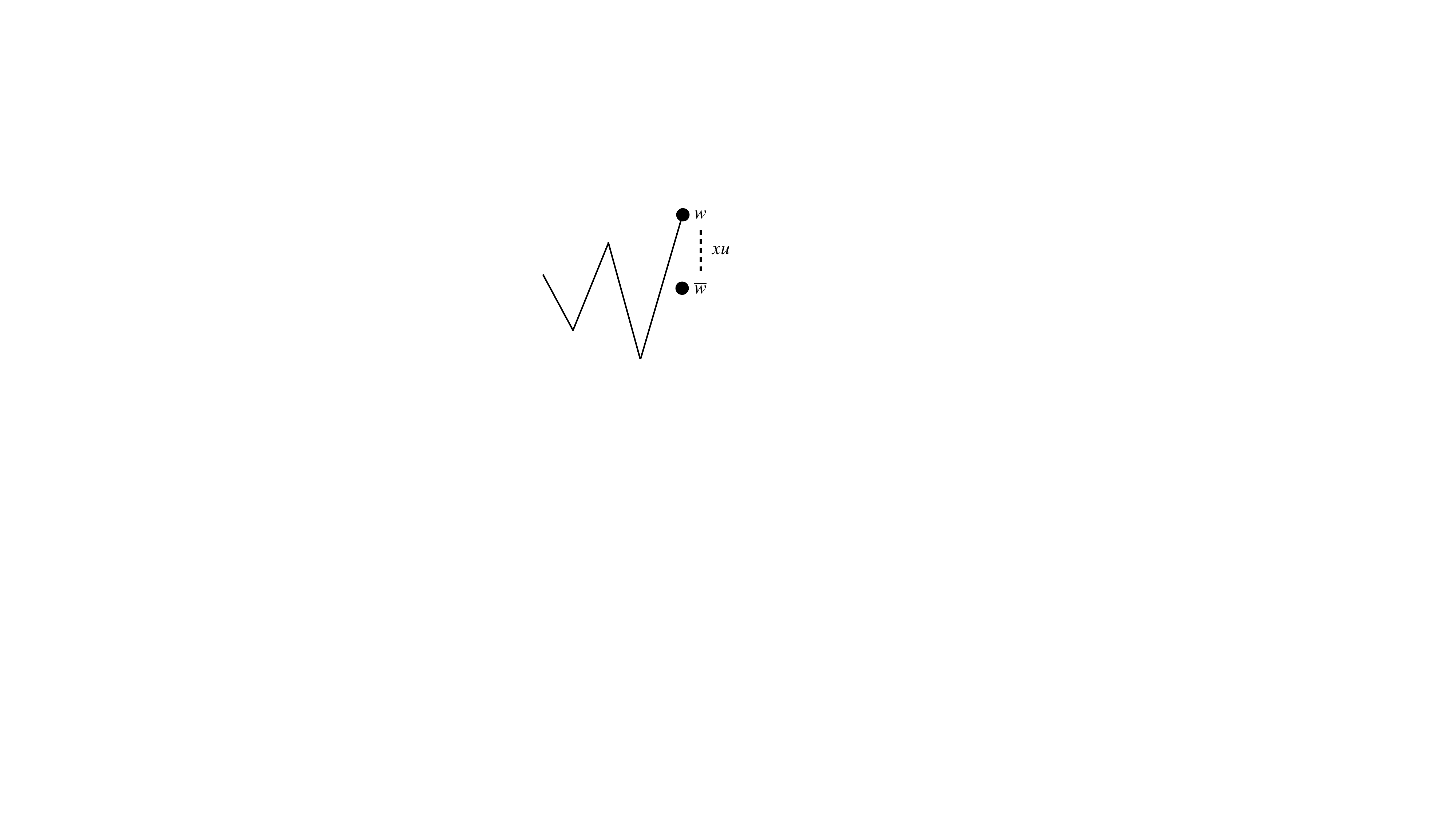}
\end{minipage}

Due to the oscillation, the optimizer follows the gradient at $w$ rather than the gradient at $\overline{w}$.  How do the two relate?  
A Taylor expansion of $\nabla L$ around $\overline{w}$ yields: 
\begin{align}
\nabla L(\overline{w} + xu)
= \stackrel{\raisebox{0.35em}{\scriptsize \textcolor{red}{first term}}}{
    \nabla L(\overline{w})
}
+ \stackrel{\raisebox{0.35em}{\scriptsize \textcolor{red}{second term}}}{
\underbrace{xH(\overline{w}) u}_{\color{red} {= \, x S(\overline{w}) u}}
}
+ \mathcal{O}(x^2).
\end{align}
Since $u$ is an eigenvector of $H(\overline{w})$ with eigenvalue $S(\overline{w})$, we recognize the second term as $x S(\overline{w}) u$. 
This term causes a negative gradient step computed at $\overline{w} + xu$ to move in the $-u$ direction.  In other words, this term is causing \gd to oscillate back and forth along the top Hessian eigenvector $u$, as predicted by the traditional theory.
The ``magic'' comes from the \emph{next} term, which arises from third-order terms in the Taylor expansion of the loss:
\begin{align}
\textcolor{gray}{
\nabla L(\overline{w} + xu)
= \stackrel{\raisebox{0.35em}{\scriptsize \textcolor{gray}{first term}}}{
    \nabla L(\overline{w})
}
+ \stackrel{\raisebox{0.35em}{\scriptsize \textcolor{gray}{second term}}}{xS(\overline{w}) u}}
+ \stackrel{\raisebox{0.15em}{\scriptsize \textcolor{red}{third term}}}{
\underbrace{\tfrac{1}{2} x^2 \, \nabla_{\overline{w}}[ u^T H(\overline{w}) u]}_{ \,\color{red} {= \tfrac{1}{2} x^2 \nabla S(\overline{w})}}
}
+ \mathcal{O}(x^3).
\label{eq:cubic-taylor}
\end{align}
Since $u^T H(\overline{w}) u = S(\overline{w})$, we recognize this term as $\frac{1}{2}x^2 \nabla S(\overline{w})$, where $\nabla S$ is none other than the \emph{gradient of the sharpness}.\footnote{Technically, equating $\nabla_{\overline{w}} [u^T H(\overline{w}) u ] = \nabla_{\overline{w}} S(\overline{w})$ requires invoking Danskin's theorem.  This is made precise in \Cref{sec:misc-math}, \Cref{lem:gd_taylor_proof}.}
Thus, a negative gradient step computed at $\overline{w} + xu$ implicitly takes a negative gradient step \emph{on the sharpness} with step size $\tfrac{1}{2} \eta x^2$.  This is the key ingredient missing from the traditional theory.  When \gd exits the stable region, it oscillates along the top Hessian  eigenvector, just as the traditional theory predicts; but what the traditional theory fails to anticipate is that these oscillations in turn perform gradient descent \emph{on the sharpness}, thereby steering the trajectory back into the stable region automatically. 

Note that traditional optimization theory fails to capture the basic \emph{causal structure} of the optimization process: gradient descent converges not because the sharpness is ``already'' small, but rather due to an automatic negative feedback mechanism that \emph{keeps} the sharpness small.

\citet{damian2023selfstabilization} analyzed the EOS dynamics in the special case where only one Hessian eigenvalue has crossed the critical threshold $2/\eta$, as in steps \textasciitilde2900-3600 in \Cref{fig:gd:multiple}.  In this setting, the dynamics consist of consecutive cycles in which: (1) the sharpness rises above $2/\eta$; (2) this triggers growing oscillations along the top Hessian eigenvector; (3) such oscillations reduce sharpness via \cref{eq:cubic-taylor}, pushing it below $2/\eta$; (4) the oscillations consequently shrink in magnitude.\footnote{Notice that the drop in the sharpness is rapid, yielding a sawtooth-like plot for the evolution of the sharpness. 
 This is because the strength of the sharpness reduction effect is proportional to $x^2$.  When $x$ is small, the sharpness-reduction effect is negligible, but when $x$ grows larger, the effect quickly becomes strong.  See \citet{damian2023selfstabilization} for a simplified ODE model of the joint dynamics between $x$ and sharpness.}
However, a more common situation is when \emph{multiple} Hessian eigenvalues have reached $2/\eta$, as in steps \textasciitilde3600-4000 in \Cref{fig:gd:multiple}.  Here, \gd oscillates simultaneously along all the corresponding eigenvectors,\footnote{With $k>1$ unstable eigenvalues, the corresponding eigenvectors are not individually identifiable; instead, one should think of \gd as oscillating within the $k$-dimensional eigenspace spanned by the $k$ eigenvectors at the edge of stability.} and these oscillations cause all such eigenvalues to remain dynamically regulated around $2/\eta$.

\begin{figure}[b!]
\centering
\includegraphics[width=\textwidth]{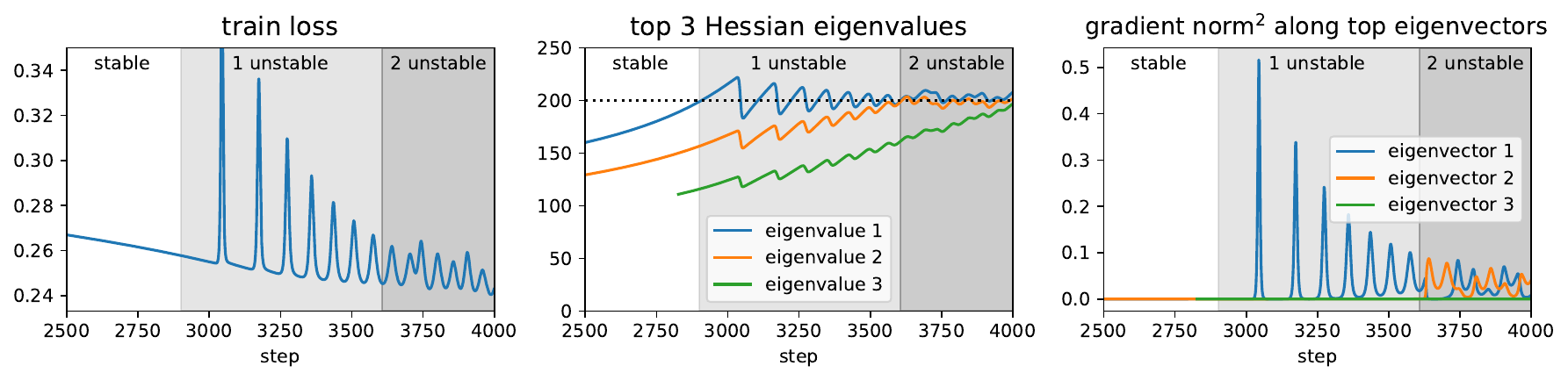}
\caption{\textbf{Multiple Hessian eigenvalues can be at the edge of stability.}  From steps \textasciitilde2900-3600, one Hessian eigenvalue is at the edge of stability, and gradient descent oscillates along the top Hessian eigenvector. From steps \textasciitilde3600-4000, two Hessian eigenvalues are at the edge of stability, and gradient descent oscillates simultaneously along both the corresponding eigenvectors. The number of oscillating directions can be easily read off from the right plot, which shows the squared norm of the gradient when projected onto each of the top 3 Hessian eigenvectors.}
\label{fig:gd:multiple}
\end{figure}

Unfortunately, analyzing EOS dynamics in fine-grained detail is challenging \citep{damian2023selfstabilization}.  The difficulty arises from the need to account for the mutual interactions between the oscillations and the curvature.  Even in the special case of one unstable eigenvalue, these dynamics are nonlinear and highly sensitive to initial conditions. The more typical case of multiple unstable eigenvalues is even harder to analyze: the dynamics with $k$ unstable eigenvalues do not decouple into $k$ independent systems, and instead involve $O(k^2)$ mutually interacting quantities, yielding complex and often chaotic behavior.

Our key insight in this paper is that a fine-grained analysis of the EOS dynamics may not be necessary.  Rather, we argue that the more important question is: what \emph{macroscopic} (i.e. long-term) trajectory does gradient descent take through weight space?  In the next section, we will use a heuristic time-averaging argument to characterize this macroscopic trajectory. Our analysis will not only recover the main finding of \citet{damian2023selfstabilization} for a single unstable eigenvalue, but will also readily generalize to the challenging setting of multiple unstable eigenvalues.  

\subsection{Deriving the Gradient Descent Central Flow}
\label{sec:gd:deriving}

The standard continuous-time approximation to gradient descent is the \gflow:\footnote{We fold $\eta$ into the definition of gradient flow so that there is a correspondence between step $t$ of \gd and time $t$ of \gflow.  This will especially be useful when analyzing adaptive optimizers where the effective step size is a dynamic quantity.}
\begin{align}
    \frac{dw}{dt} = - \eta \nabla L(w). \label{eq:gflow}
\end{align}
\citet{cohen2021gradient} observed that trajectory of \gd is well-approximated\footnote{ \citet{barrett2021implicit} argued that the accuracy of the gradient flow approximation can be improved by adding a penalty on the squared gradient norm.  However, their modified flow does not hold in the EOS regime, and in the stable regime, we found that the accuracy improvement it brings is relatively small (\Cref{sec:igr}).  Therefore, for simplicity, we leave out any such term from our flows.}\footnote{It remains theoretically unexplained why \gflow is such a good fit to \gd. Existing bounds for the distance between \gd and \gflow increase exponentially with time, with an exponent determined by the most negative Hessian eigenvalue \citep{elkabetz2021continuous}.  Empirically, such bounds are overly conservative.}
by that of \gflow so long as training is \emph{stable}, i.e. so long as the sharpness $S(w)$ remains below $2/\eta$. However, once the sharpness reaches $2/\eta$ and the dynamics enter the EOS regime, gradient descent departs from the gradient flow trajectory and takes a different path, as illustrated in \Cref{fig:gd:three-flows}.\footnote{Even in the simplest setting of one unstable eigenvalue, capturing the EOS dynamics necessarily requires three variables: one for the oscillations along the top Hessian eigenvector, one for the top Hessian eigenvalue (sharpness), and one for the remaining directions.  Since visualizing three-dimensional dynamics is difficult, we will frequently resort to two-dimensional cartoons (e.g. \Cref{fig:gd:three-flows}).  Such a ``projection'' will necessarily drop information. Accordingly, \Cref{fig:gd:three-flows} captures sharpness and remaining directions, but leaves out the back-and-forth oscillations along the top Hessian eigenvector.  \Cref{fig:central-flow-cartoon}, by contrast, captures these back-and-forth oscillations but leaves out the sharpness.}

\begin{figure}[t!]
\centering
\includegraphics[width=\textwidth]{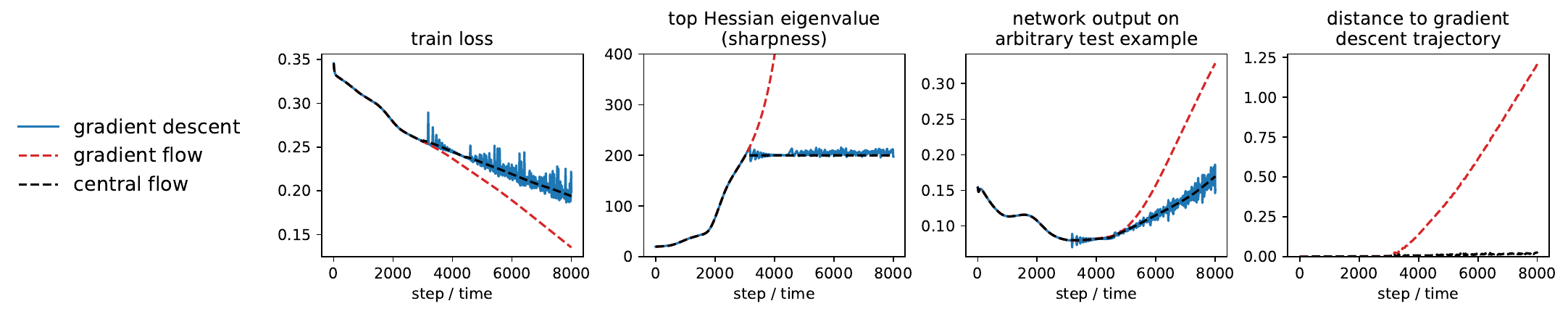}
\hrule
\vspace{0.75em}
    \begin{minipage}{0.55\textwidth}
    \vspace{5px}
    \caption{ \textbf{What macroscopic path does gradient descent take?} Gradient descent (blue) is well-approximated by gradient flow (red) so long as the sharpness is below $2/\eta$.  However, once \gd reaches the edge of stability, it takes a different path.  Our \emph{central flow} (black) approximates \gd even at the edge of stability.  The plots on top present data from an experiment (same as \Cref{fig:gradient-descent-typical}); the drawing on the right is a cartoon of the underlying weight-space dynamics.}
    \label{fig:gd:three-flows}
    \end{minipage}
    \quad
    \vrule
    \quad
    \begin{minipage}{0.40\textwidth}
    \includegraphics[width=0.9\textwidth]{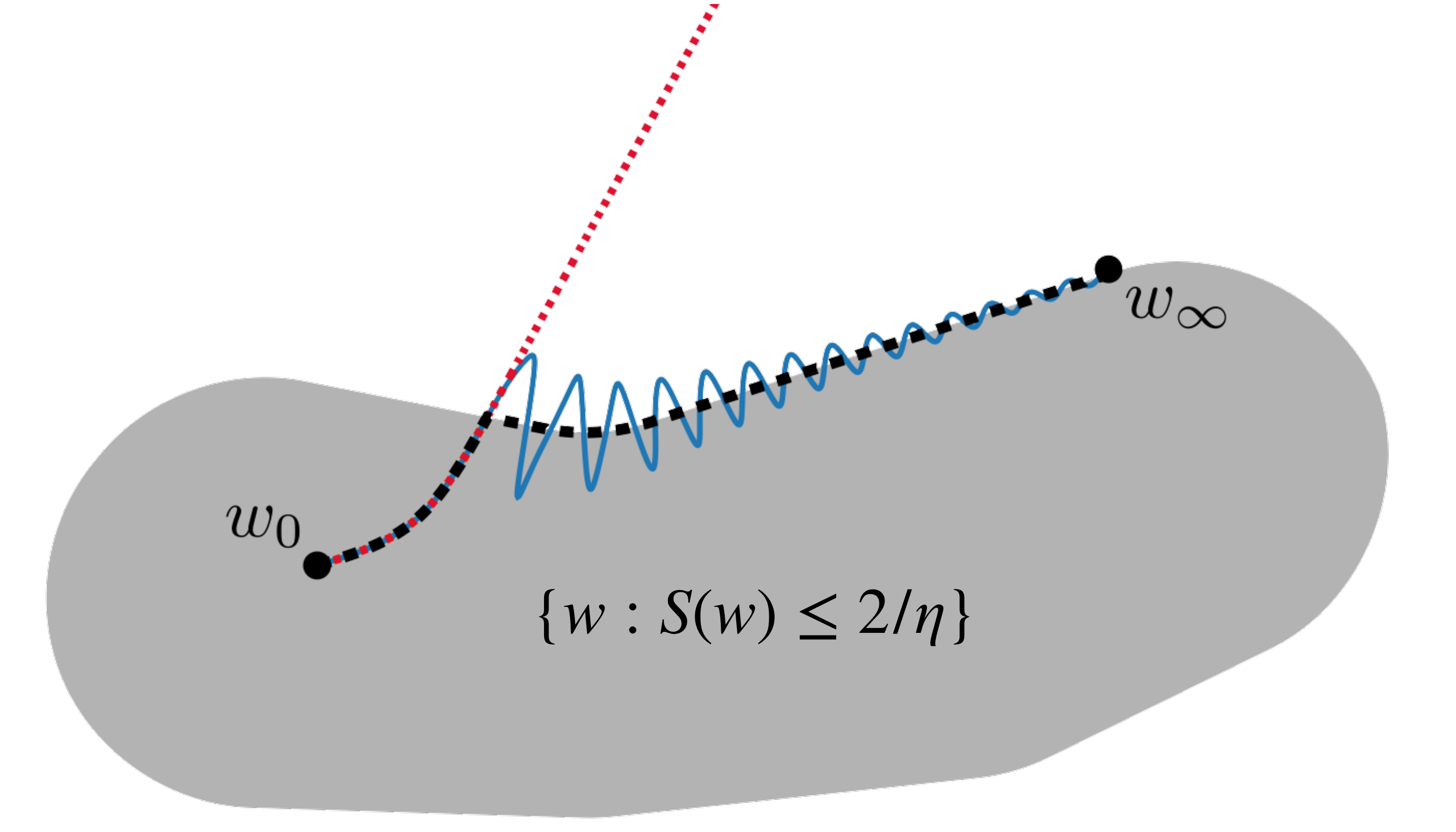}
    \end{minipage}
\end{figure}

We now derive a more general differential equation, which we call a central flow, that approximates the trajectory of gradient descent even at the edge of stability.
The central flow directly models the \emph{time-averaged} (i.e. smoothed) trajectory of the oscillatory optimizer. In other words, the central flow averages out the oscillations while retaining their lasting effect on the macroscopic trajectory. 
We will derive the central flow using a heuristic time-averaging argument, and we will empirically demonstrate that it can accurately predict long-term optimization trajectories on a variety of neural networks with a high degree of numerical accuracy, as illustrated in \Cref{fig:gd:three-flows}.

\begin{tcolorbox}[enhanced jigsaw,colback=white!10,left=2pt,right=2pt,top=2pt,bottom=2pt,halign=flush center]
We will abuse notation and use $\E$ to denote ``local time-averages'' of deterministic quantities --- see \Cref{appendix:time-average} for discussion. The gradient descent central flow is intended to model the time-averaged trajectory $\E[w_t]$. To simplify notation, we will also use $\overline{w}_t := \E[w_t]$ to denote the time-averaged trajectory.
\end{tcolorbox}

\subsubsection{Warm-up: the Special Case of One Unstable Eigenvalue}\label{sec:gd:single}
We will introduce our time-averaging methodology by analyzing the special case when only the largest Hessian eigenvalue has crossed the critical threshold $2/\eta$, and \gd oscillates along a single direction --- the corresponding eigenvector.  Our fully general analysis, given later in \Cref{sec:gd:multi}, will allow for an arbitrary number of eigenvalues to be at the edge of stability, and for eigenvalues to enter and leave the edge of stability.

Thus, in this section, we start our analysis at the instant when the sharpness $S(w)$ first reaches $2/\eta$. From this point onward, we will model the \gd trajectory by:
\begin{align}
    w_t = \overline{w}_t + x_t u_t,
\end{align}
where $w_t$ is the gradient descent iterate, $\overline{w}_t$ is the time-averaged iterate, $u_t$ is the top Hessian eigenvector at $\overline{w}_t$, and $x_t$ denotes the displacement between $w_t$ and $\overline{w}_t$ along the $u_t$ direction.\footnote{\label{footnote:oscillate_dS_single} This is a simplification.  In reality, we know that \gd is displaced from $\overline{w}$ in at least \emph{two} directions: the $u$ direction and the $\nabla S(\overline{w})$ direction, with the latter responsible for the fluctuations in the sharpness. However, when modeling the time-averaged gradient, we will only account for the displacement in the $u$ direction.  This is analogous to \citet[Assumption 5]{damian2023selfstabilization}. The success of our experiments validates this simplification.}
Note that by definition, $\E[x_t] = 0$, i.e. the time-averaged displacement is zero. To track the evolution of the time-averaged iterate $\overline{w}_t$, we time-average both sides of the gradient descent update \cref{eq:gd}:
\begin{align}
    \overline{w}_{t+1} = \overline{w}_t - \eta \underbrace{\E[\nabla L(w_t)]}_{\mathclap{\text{time-averaged gradient}}}.\label{eq:gd:expected_update}
\end{align}
That is, the time-averaged iterates follow the (negative) time-averaged gradient. 
To approximate the time-averaged gradient, we first Taylor-expand the gradient $\nabla L$ around the time-averaged iterate $\overline{w}_t$:
\begin{align}
    \nabla L(w_t) = \underbrace{\nabla L(\overline{w}_t)}_{\text{gradient at }\overline{w}_t} + \underbrace{x_t \, S(\overline{w}_t) u_t}_{\text{oscillation}} \, + \underbrace{\tfrac{1}{2} \, x_t^2 \, \nabla S(\overline{w}_t)}_{\text{sharpness reduction}} + \; \mathcal{O}(x^3).
    \label{eq:gd:cubic_taylor}
\end{align}
We then take the time average of both sides, averaging over the $x$ oscillations. This reflects an implicit assumption that the $x$ oscillations are happening fast relative to the remaining training dynamics:\footnote{When time-averaging \cref{eq:gd:cubic_taylor}, we assume that the eigenvector $u$ changes slowly relative to the displacement $x$ so that $\E[x_t u_t] \approx \E[x_t] u_t$.}
\begin{align}
    \E[\nabla L(w_t)] \enskip\approx\enskip \nabla L(\overline{w}_t) +   \underbrace{\cancel{ \E[x_t] S(\overline{w}_t) u_t}}_{0\text{ because }\E[x_t] = 0} \; + \; \underbrace{\tfrac{1}{2} \E[x_t^2] \nabla S(\overline{w}_t)}_{\mathclap{\text{implicit sharpness penalty}}} \enskip . \label{eq:gd:expected_gradient_single}
\end{align}
This calculation shows that the time-averaged gradient $\E[\nabla L(w_t)]$ is equal to the gradient at the time-averaged iterate $\nabla L(\overline{w}_t)$, plus an implicit sharpness penalty whose strength is proportional to $\E[x_t^2]$, the variance of the oscillations at step $t$. Substituting \cref{eq:gd:expected_gradient_single} into \cref{eq:gd:expected_update} and switching to continuous time, we therefore model the time-averaged iterates $\overline{w}_t$ by the sharpness-penalized gradient flow $w(t)$ defined by:
\begin{align}
    \frac{dw}{dt} = - \eta \qty\Big[\;\nabla L(w) + \underbrace{\tfrac{1}{2} \sigma^2(t) \nabla S(w) }_{\mathclap{\text{implicit sharpness penalty}}}\;].
    \label{eq:gd:single_ansatz}
\end{align}
Here, $\sigma^2(t)$ is a still-unknown quantity intended to model $\E[x_t^2]$, the instantaneous variance of the oscillations at time $t$. This quantity also controls the strength of the implicit sharpness penalty.
To determine $\sigma^2(t)$, we argue that only one value is consistent with the observed behavior of \gd.
Empirically, once the sharpness reaches the critical threshold $2/\eta$, it does not continue to rise indefinitely; rather, it remains dynamically regulated around $2/\eta$. Thus, we will enforce that the central flow never increases the sharpness $S(w(t))$ past $2/\eta$. The time derivative of the sharpness under a flow of the form \cref{eq:gd:single_ansatz} can be easily computed using the chain rule:
\begin{align}
    \frac{dS(w)}{dt} = \ev{\nabla S(w), \frac{dw}{dt}} = \underbrace{\vphantom{\tfrac{1}{2}}\eta \ev{\nabla S(w), -\nabla L(w)}}_{\mathclap{\substack{\text{change in sharpness}\\\text{under gradient flow}}}} \;-\; \underbrace{\tfrac{1}{2}\eta \sigma^2(t) \norm{\nabla S(w)}^2}_{\mathclap{\substack{\text{sharpness reduction}\\\text{from oscillations}}}}.\quad\quad
    \label{eq:gd:dSdt}
\end{align}
When the first term, the change in sharpness under the \gflow, is \emph{negative}, \gd will leave the edge of stability and will once again follow gradient flow --- this is made precise in \Cref{sec:gd:multi}. Therefore, we focus on the case where this first term is positive, i.e. where progressive sharpening holds. As the sharpness is currently at $2/\eta$ and must remain at $2/\eta$, we must have that $\tfrac{dS(w)}{dt} = 0$. Since $\frac{dS(w)}{dt}$ is affine in $\sigma^2(t)$, we can easily solve for the unique value of $\sigma^2(t)$ that ensures $\tfrac{dS(w)}{dt} = 0$:
\begin{align}
    \sigma^2(t) = \frac{2 \ev{\nabla S(w), -\nabla L(w)}}{\norm{\nabla S(w)}^2} .
    \label{eq:gd_x}
\end{align}
Intuitively, this is the unique $\sigma^2(t)$ for which the downward force of oscillation-induced sharpness reduction ``cancels out'' the upwards force of progressive sharpening so the sharpness remains locked at $2/\eta$. The central flow for a single unstable eigenvalue is given by substituting this $\sigma^2(t)$ into \cref{eq:gd:single_ansatz}:
\begin{align}
    \frac{dw}{dt} = - \eta \qty\Big[\;\nabla L(w) + \tfrac{1}{2} \sigma^2(t) \, \nabla S(w)] \quad \text{where} \quad \sigma^2(t) =  \frac{2 \ev{\nabla S(w), -\nabla L(w)}}{\norm{\nabla S(w)}^2}.
    \label{eq:gd:single}
\end{align}

\begin{figure}[t!]
\centering
\includegraphics[width=\textwidth]{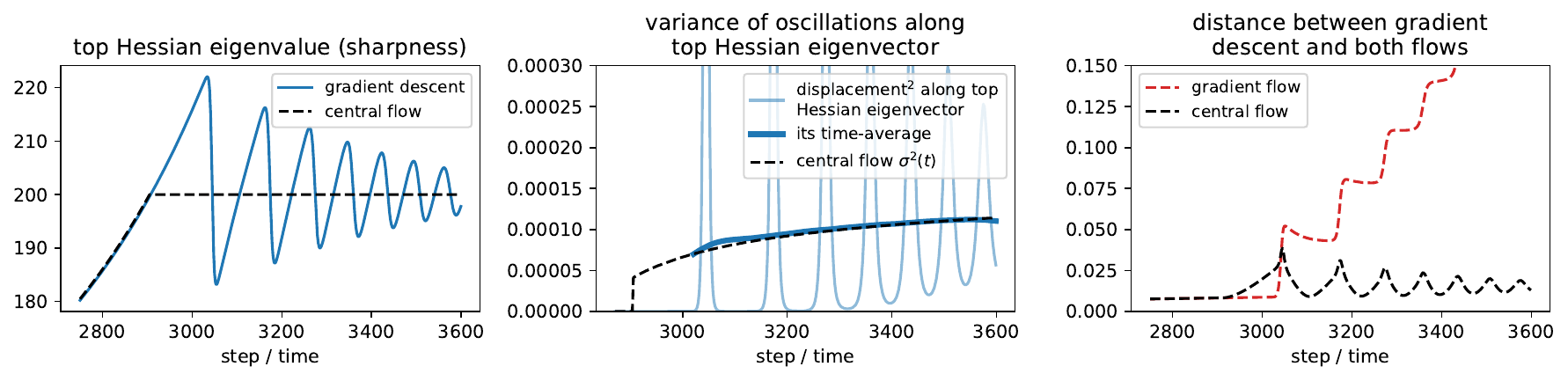}
\caption{ \textbf{Illustrating the central flow with one unstable eigenvalue}.  
\textbf{Left}: the sharpness (top Hessian eigenvalue) cycles around $2/\eta$ under gradient descent, and stays capped at $2/\eta$ under the central flow.  \textbf{Center}: the central flow's $\sigma^2(t)$ accurately predicts the variance of the oscillations.  In light blue, we plot $(u(t)^\top (w_t - w(t)))^2$, the squared displacement between gradient descent $w_t$ and the central flow $w(t)$ along $u(t)$, the top Hessian eigenvector at $w(t)$.  In dark blue, using Gaussian smoothing, we plot its time average, i.e. the empirical variance of the oscillations.  Observe that this is well-predicted by the central flow's $\sigma^2(t)$, \cref{eq:gd:single}, plotted in black. \textbf{Right}: the Euclidean distance $\| w_t - w(t) \|$ between gradient descent and the central flow (black) stays small over time, indicating that the central flow accurately predicts the long-term trajectory of gradient descent.  In contrast, the distance (red) between gradient descent and the gradient flow  \cref{eq:gflow} grows large over time. This figure depicts the same ViT trajectory as \Cref{fig:gradient-descent-typical}.}
\label{fig:demonstrate-cf-one}
\end{figure}

\Cref{fig:demonstrate-cf-one} demonstrates this flow in action.
We run gradient flow until the sharpness hits $2/\eta$, and then switch to \cref{eq:gd:single} at that time.  (The complete central flow, defined in the next section, will handle such switches automatically.)
Observe that the distance in weight space between gradient descent and the central flow remains small over time,\footnote{Observant readers might notice that the distance between \gd and the central flow starts to grow once the sharpness hits $2/\eta$ and is actually initially larger than the distance between \gd and the gradient flow.  This is because the central flow has already started to apply sharpness regularization, but the discrete \gd trajectory has not yet done so.} verifying that the central flow accurately predicts the long-term trajectory of \gd.
Moreover, observe that $\sigma^2(t)$ from \cref{eq:gd_x} accurately predicts the empirical variance of the oscillations along the top Hessian eigenvector, further demonstrating that our time-averaging argument is accurately capturing gradient descent's behavior.

Intuitively, whereas gradient descent reduces sharpness in impulse-like spurts which are triggered whenever the oscillations grow large, the central flow applies a sharpness-reduction force continuously, with the same average strength over time.
That these two processes stay close over long timescales implies that the oscillations are only affecting the long-term \gd trajectory via their \emph{variance} rather than via their fine-grained details (e.g the precise shape of the light blue line in \Cref{fig:demonstrate-cf-one}, center). This is good news: while the fine-grained oscillations may be challenging to analyze, we have shown that their variance is easy to analyze, as there is only one possible value that is consistent with the observed edge of stability equilibrium.  In this way, we have successfully used a heuristic argument to solve for the time-averaged trajectory of \gd.

\paragraph{Interpretation as Projection}
While we have derived the central flow as a sharpness-penalized gradient flow, it can be equivalently interpreted as a \emph{projected} gradient flow.  In particular, simplifying \cref{eq:gd:single} gives:
\begin{wrapfigure}[11]{r}{0.3\textwidth}
\includegraphics[width=0.28\textwidth]{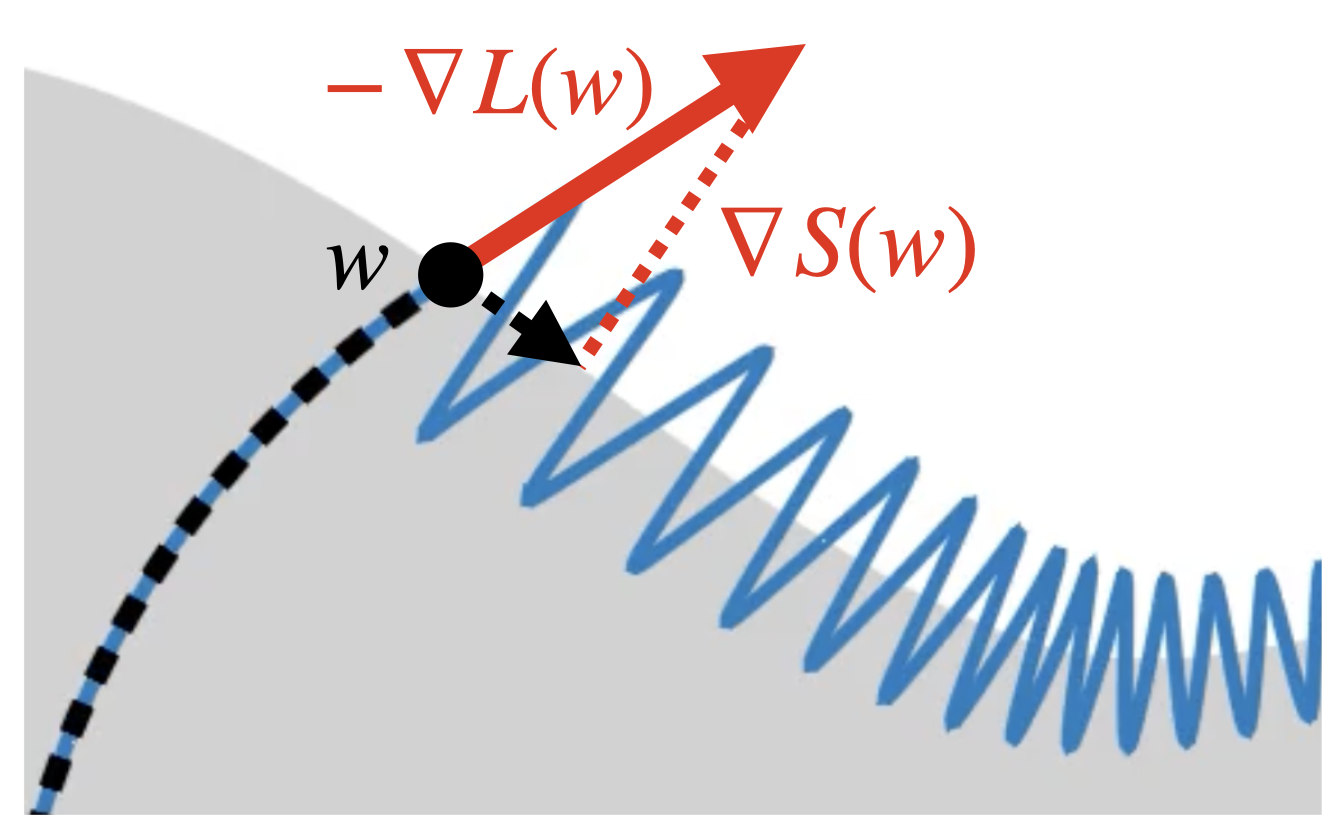}
\caption{The central flow projects out the $\nabla S(w)$ direction from the loss gradient $ \nabla L(w)$.}
\end{wrapfigure}
\begin{align}
    \frac{dw}{dt}
    &= -\eta \qty[I - \frac{\nabla S(w) \nabla S(w)^\top}{\norm{\nabla S(w)}^2}] \nabla L(w) = - \eta \Pi_{\nabla S(w)}^\perp \nabla L(w), \label{eq:sharpness-constrained-flow}
\end{align}
where $\Pi_{v}^\perp := I - \frac{vv^T}{\norm{v}^2}$ denotes the projection matrix onto the orthogonal complement of $v$.  This flow projects out the $\nabla S$ direction from the gradient $\nabla L$ to keep the sharpness fixed at $2/\eta$, as shown in the cartoon on the right.

Previously, \citet{damian2023selfstabilization} proved that under certain conditions, gradient descent at the edge of stability implicitly follows the trajectory of projected gradient descent constrained to the stable region.  We have thus nearly\footnote{The flow \cref{eq:sharpness-constrained-flow} actually differs slightly from the constrained trajectory in \citet{damian2023selfstabilization}, beyond being continuous rather than discrete. In \citet{damian2023selfstabilization}, the stable region was defined as the set where $S(w) \le 2/\eta$ \emph{and} the the loss is directionally minimized along the top Hessian eigenvector.  The latter condition prevents their theory from applying to certain models \citep[e.g.][Appendix A]{kreisler2023gradient}. Our central flow does not use the latter condition and hence does not suffer from such restrictions.} rederived their result in a simpler, albeit non-rigorous, manner.

The projection interpretation will be useful below for reasoning about gradient descent's behavior.

\subsubsection{The Fully General Case}\label{sec:gd:multi}

We will now derive the complete central flow, which applies in the fully general setting where any number of eigenvalues can be at the edge of stability, including zero.  When no eigenvalues are at the edge of stability, the central flow will automatically reduce to the gradient flow.
As above, we decompose the \gd trajectory as:
\begin{align}
    w_t = \overline{w}_t + \delta_t,
\end{align}
where $w_t$ is the gradient descent iterate, $\overline{w}_t := \E[w_t]$ is the time-averaged iterate, and $\delta_t$ denotes the displacement between $w_t$ and $\overline{w}_t$, i.e. the oscillation. Because \gd oscillates along the Hessian eigenvectors that are at the edge of stability, we model $\delta_t$ as lying within the span of these eigenvectors.\footnote{Similar to footnote \ref{footnote:oscillate_dS_single}, this neglects the motion in the top Hessian eigenvalues. The success of our experiments justifies this simplification.}
For example, in the case where only one direction is at the edge of stability, taking $\delta_t = x_t u_t$ recovers the analysis in \Cref{sec:gd:single}.
Note that by definition of $\overline{w}_t$, we have that $\E[\delta_t] = 0$.
As before, the time-averaged iterates follow the time-averaged gradient $\E[\nabla L(w_t)]$. To compute the time-averaged gradient, we first Taylor-expand the gradient around $\overline{w}_t$:
\begin{align}
    \nabla L(w_t) \enskip=\enskip \underbrace{\nabla L(\overline{w}_t)}_{\text{gradient at $\overline{w}$}} \;+\; \underbrace{H(\overline{w}_t) \delta_t}_\text{oscillation} \;+\; \tfrac{1}{2} \underbrace{\nabla_{\overline{w}_t} \; \delta_t^T H(\overline{w}_t) \delta_t}_{\mathclap{\text{implicit curvature penalty}}}  \;+\; \mathcal{O}(\|\delta_t\|^3).
\end{align}
The third term in this Taylor expansion reveals that the negative gradient at the iterate $w_t$ implicitly acts to decrease the directional curvature in the direction $\delta_t$.
Time-averaging both sides and rearranging the third term yields:
\begin{align}
    \E[\nabla L(w_t)] \enskip\approx\enskip \nabla L(\overline{w}_t) \;+\; \underbrace{\cancel{H(\overline{w}_t) \E[\delta_t]}}_{0\text{ because }\E[\delta_t]=0} \;+\; \tfrac{1}{2} \underbrace{\nabla_{\overline{w}_t} \ev{H(\overline{w}_t), \E[\delta_t \delta_t^T]}}_{\text{implicit curvature penalty}}, \label{eq:gd:expected_gradient_multi}
\end{align}
where we use $\ev{\cdot, \cdot}$ to denote the Frobenius inner product between two matrices, equivalent to flattening the matrices into vectors and taking the dot product.
Thus, we see that the time-averaged gradient is the gradient at the time-averaged iterate, plus an implicit curvature penalty whose strength and direction are determined by the \emph{covariance} of the oscillations $\E[\delta_t \delta_t^T]$.
Substituting \cref{eq:gd:expected_gradient_multi} into the time-averaged \gd update (eq. \ref{eq:gd:expected_update}) and switching to continuous time, we model the time-averaged iterates $\overline{w}_t$ by a differential equation of the form:
\begin{align}
    \frac{dw}{dt} = - \eta \qty\Big[\;\nabla L(w) + \tfrac{1}{2} \underbrace{\nabla_w \ev{H(w), \Sigma(t)}}_{\mathclap{\text{implicit curvature penalty}}}\;].
    \label{eq:gd:multi_ansatz}
\end{align}
Here, $\Sigma(t)$ is a still-unknown quantity intended to model $\E[\delta_t \delta_t^T]$, the instantaneous covariance of the oscillations at time $t$. 
This matrix also controls an implicit curvature penalty which penalizes the $\Sigma$-weighted Hessian $\langle \Sigma(t), H(w) \rangle$.\footnote{This is a weighted sum of all entries in the Hessian matrix, where each entry is weighted by the corresponding entry of $\Sigma(t)$.}
\begin{figure}[t!]
\centering
\includegraphics[width=\textwidth]{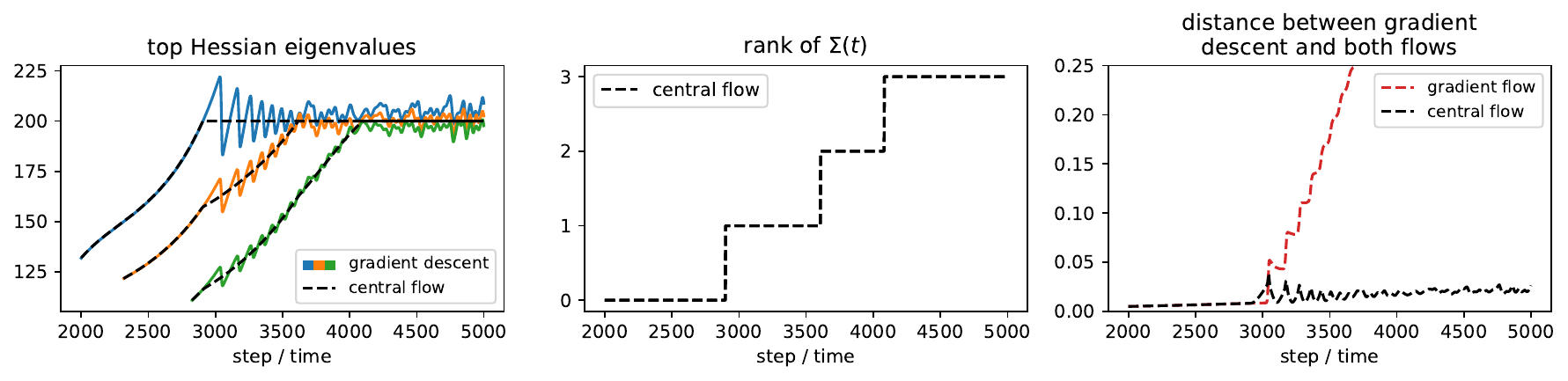}
\caption{ \textbf{Illustrating the central flow (general case)}.  
\textbf{Left}: whenever a Hessian eigenvalue rises to $2/\eta$, the central flow prevents it from increasing further.  \textbf{Center}: since $\Sigma(t)$ models the covariance of the oscillations, its rank is always equal to the number of Hessian eigenvalues at the edge of stability.  We show in \Cref{fig:cf-predict-oscillations} that $\Sigma(t)$ accurately predicts the covariance of oscillations.  \textbf{Right}: the Euclidean distance between gradient descent $w_t$ and the central flow $w(t)$ stays small over time, indicating that the central flow accurately predicts the long-term trajectory of gradient descent.  In contrast, the distance between \gd and the gradient flow \cref{eq:gflow} grows large over time.  This figure depicts the same ViT trajectory as \Cref{fig:gradient-descent-typical}.}
\label{fig:gd:demonstrate-cf-multiple}
\end{figure}
Similar as before, to determine $\Sigma(t)$, we impose three conditions:
\begin{enumerate}
    \item Since Hessian eigenvalues which reach the critical threshold $2/\eta$ do not continue to rise further, we impose the condition that $\Sigma(t)$ should not allow any Hessian eigenvalues to rise beyond $2/\eta$. 
    \item Since gradient descent oscillates within the span of the unstable eigenvectors, we impose the condition that $
    \Sigma(t)$, which models the covariance of these oscillations, should be supported\footnote{We mean that $\spn[\Sigma] \subseteq \mathcal{U}$, where $\mathcal{U}$ is the span of the Hessian eigenvectors with eigenvalue $2/\eta$.  Equivalently, we mean that $\Sigma$ can be written as $\Sigma = U X U^T$ where the $k$ columns of $U$ form a basis for the $k$-dimensional subspace $\mathcal{U}$, and $X$ is a $k \times k$ symmetric matrix.} within the span of the Hessian eigenvectors whose eigenvalue is equal to $2/\eta$.\footnote{For gradient descent, the unstable eigenvectors have eigenvalues which fluctuate around $2/\eta$.  However, for the central flow, the unstable eigenvectors will have eigenvalues which are exactly equal to $2/\eta$.}

    \item Since $\Sigma(t)$ models a covariance matrix, we impose the condition that $\Sigma(t)$ should be positive semidefinite.
\end{enumerate}

These three conditions turn out to imply a unique value of $\Sigma(t)$. In particular, we detail in \Cref{appendix:derivations:gd} that $\Sigma(t)$ must be the unique solution to a type of convex program known as a \emph{semidefinite complementarity problem} (SDCP), which are described in \Cref{sec:derivations:sdcp}.\footnote{Interestingly, complementarity problems arise frequently in the the study of contact mechanics. EOS can be interpreted as the \gd trajectory making ``contact'' with the boundary of the stable region, and then sliding along the boundary.} The central flow is defined as \cref{eq:gd:multi_ansatz} with this $\Sigma(t)$:
\begin{align}
    \frac{dw}{dt} = - \eta \qty\Big[\;\nabla L(w) + \tfrac{1}{2} \nabla_w \ev{H(w), \Sigma(t)}\;] \quad \text{where} \quad \Sigma(t) \text{ solves the SDCP in \cref{eq:appendix:gd:sigmat}.}
    \label{eq:gd:full-central-flow}
    \end{align}
A formal definition for the central flow is given in \Cref{appendix:derivations:gd}, \Cref{def:gd:flow:ode}. We note that $\Sigma(t)$ can be efficiently represented numerically as it is a low rank matrix, with rank at most the number of unstable eigenvalues. 

We now elaborate on the behavior of the central flow:
\begin{enumerate}
    \item \textbf{Stable regime:} If all Hessian eigenvalues are below $2/\eta$, then the SDCP returns $\Sigma(t) = 0$, and the central flow reduces to the gradient flow. 
    \item \textbf{One unstable eigenvalue}:  If one Hessian eigenvalue is at $2/\eta$, and if the gradient flow  would \emph{increase} this eigenvalue above $2/\eta$, then our analysis reduces to that of \Cref{sec:gd:single}.  In particular, the SDCP returns a rank-one matrix of the form $\Sigma(t) = \sigma^2 \, u \, u^\top$ where $u$ is the top Hessian eigenvector at $w$, and $\sigma^2$ is defined in \cref{eq:gd:single}.  On the other hand, if the gradient flow would \emph{decrease} this eigenvalue below $2/\eta$, then $\Sigma(t) = 0$, and the central flow will follow the gradient flow out of the edge of stability.
    \item \textbf{Multiple unstable eigenvalues:} In general, the SDCP returns a $\Sigma(t)$ which constrains all Hessian eigenvalues currently at $2/\eta$ from rising above that value.  Often, this $\Sigma(t)$ causes all Hessian eigenvalues currently at $2/\eta$ to remain fixed at $2/\eta$.\footnote{There is a unique $\Sigma$ that causes all Hessian eigenvalues currently at $2/\eta$ to remain fixed at $2/\eta$, and it can be found by solving a linear inverse, generalizing \cref{eq:gd_x}.  The solution to the SDCP coincides with this $\Sigma$ at almost all times.  However, this $\Sigma$ cannot be used to define the central flow, as it would never allow an eigenvalue to leave the edge of stability, and it is not necessarily PSD.}. However, it also allows for eigenvalues to leave EOS when appropriate.
\end{enumerate}

\begin{figure}[t!]
\centering
\includegraphics[width=\textwidth]{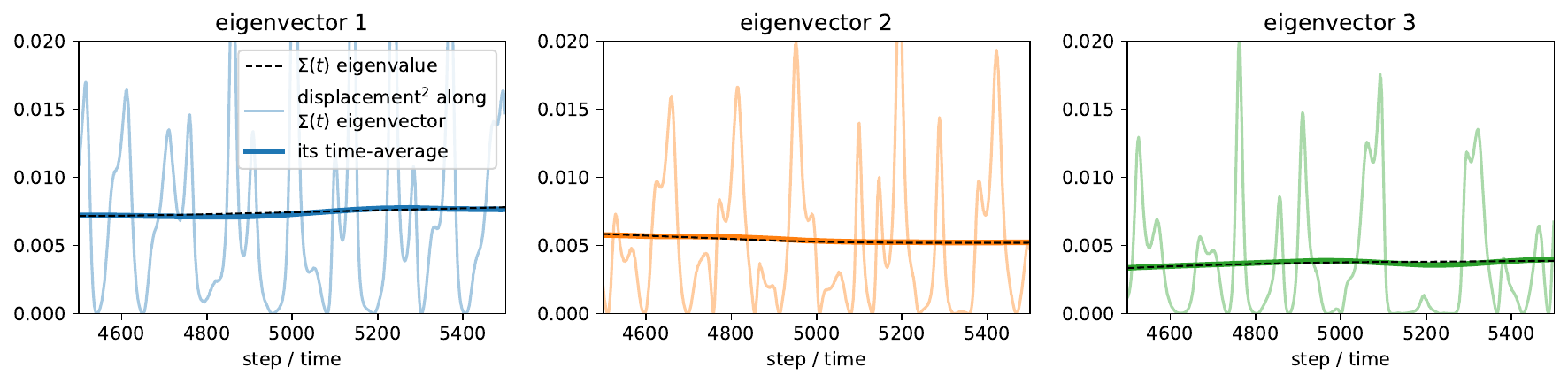}
\caption{\textbf{Central flow's $\Sigma(t)$ accurately predicts covariance of oscillations.}  We show that the central flow's $\Sigma(t)$ accurately predicts the covariance of gradient descent's oscillations.  For this stretch of training, there are 3 Hessian eigenvalues at the edge of stability, so $\Sigma(t)$ has 3 nonzero eigenvalues (subpanels).  In black, we plot each eigenvalue of $\Sigma(t)$; in colors, we plot the squared magnitude of gradient descent's displacement from the central flow along the corresponding eigenvectors (light = raw values, dark = time average using Gaussian smoothing).  Observe that each eigenvalue of $\Sigma(t)$ accurately predicts the instantaneous variance of oscillations along the corresponding eigenvector, even though these oscillations might initially appear erratic.  This figure depicts the same ViT trajectory as \Cref{fig:gradient-descent-typical}.}
\label{fig:cf-predict-oscillations}
\end{figure}
\Cref{fig:gd:demonstrate-cf-multiple} demonstrates the central flow in action.  Initially, all Hessian eigenvalues are below $2/\eta$, so $\Sigma(t) = 0$ and the central flow reduces to the gradient flow.
Once the top Hessian eigenvalue reaches $2/\eta$ around step 2900, $\Sigma(t)$ becomes a rank-one matrix, and the central flow keeps the top Hessian eigenvalue locked at $2/\eta$, as it mimics the effects of oscillating along the the top eigenvector direction.  Once the second Hessian eigenvalue also reaches $2/\eta$ around step 3600, $\Sigma(t)$ becomes a rank-two matrix, and the central flow keeps the top two eigenvalues both locked at $2/\eta$, as it mimics the effects of oscillating simultaneously along the top two eigenvector directions.  Throughout, the Euclidean distance between gradient descent's $w_t$ and the central flow's $w(t)$ stays small over time (right plot), indicating that the central flow accurately tracks the long-term trajectory of gradient descent.  In contrast, the distance between \gd and the \emph{gradient flow} \cref{eq:gflow} grows large over time.

In \Cref{fig:cf-predict-oscillations}, we show that the central flow's $\Sigma(t)$ accurately predicts the covariance with which \gd is oscillating around the central flow.
In particular, we show that each eigenvalue of $\Sigma(t)$ accurately predicts the instantaneous variance of oscillations along the corresponding eigenvector of $\Sigma(t)$.
We find it striking that our theory is able to accurately predict the covariance of these oscillations.  While the oscillations are erratic and might appear unpredictable, our findings reveal that a certain statistic --- their covariance --- is predictable after all.  Moreover, predicting this covariance seems to be sufficient to predict the long-term trajectory of \gd.

\paragraph{Interpretation as projection} The projection interpretation in \Cref{sec:gd:single} generalizes to the case of an arbitrary number of unstable eigenvalues. 
In particular, the central flow \cref{eq:gd:full-central-flow} can be written as a flow which orthogonally projects the negative gradient onto the so-called \emph{tangent cone} of the stable region $\mathbb{S} = \{w: S(w) \le 2/\eta\}$, which is the set of directions in which one can move while still staying, to first order, within the stable region:\footnote{In the interior of the stable region (i.e. $S(w) < 2/\eta$), the tangent cone is the entire space, so this projection is a no-op and the central flow reduces to the gradient flow.  When exactly one eigenvalue is at the edge of stability, the tangent cone is the half-space $\{v: \nabla S(w)^T v \le 0 \}$.}
 \begin{align}
     \frac{dw}{dt} = \eta \underbrace{\mathrm{proj}_{T_w \mathbb{S}} [- \nabla L(w)] }_{\substack{\text{project negative gradient onto} \\ \text{tangent cone $T_w \mathbb{S}$ of set $\mathbb{S}$} }} \quad \text{where} \quad  \mathbb{S} = \underbrace{\{w: S(w) \le 2/\eta\}}_{\text{stable region } \mathbb{S}}.
     \label{eq:projected-flow}
 \end{align}
A formal definition is given in \Cref{def:gd:flow:proj}.
This projection interpretation will be used in \Cref{sec:gd:interpreting} to show that the loss along the central flow decreases monotonically.

\paragraph{Where does deep learning come in?} 
Our principal claim is that, if initialized stably, gradient descent will approximately follow the central flow over the long term.
In the case where the sharpness does not rise to $2/\eta$ (e.g. on a quadratic objective, where the sharpness is constant), then the central flow reduces to the gradient flow, and so our claim reduces to the somewhat ``uninteresting'' claim that gradient descent will approximately follow the gradient flow.
The central flow only becomes nontrivial, and our claim only becomes ``interesting,'' in the event that the sharpness rises to $2/\eta$.  This empirically tends to happen on deep learning objectives.
However, we suspect that the central flow might also hold on other kinds of objectives where the sharpness rises to $2/\eta$ during gradient descent.

\subsubsection{Understanding the train loss curve and more}
\begin{figure}[b!]
\centering
\includegraphics[width=0.66\textwidth]{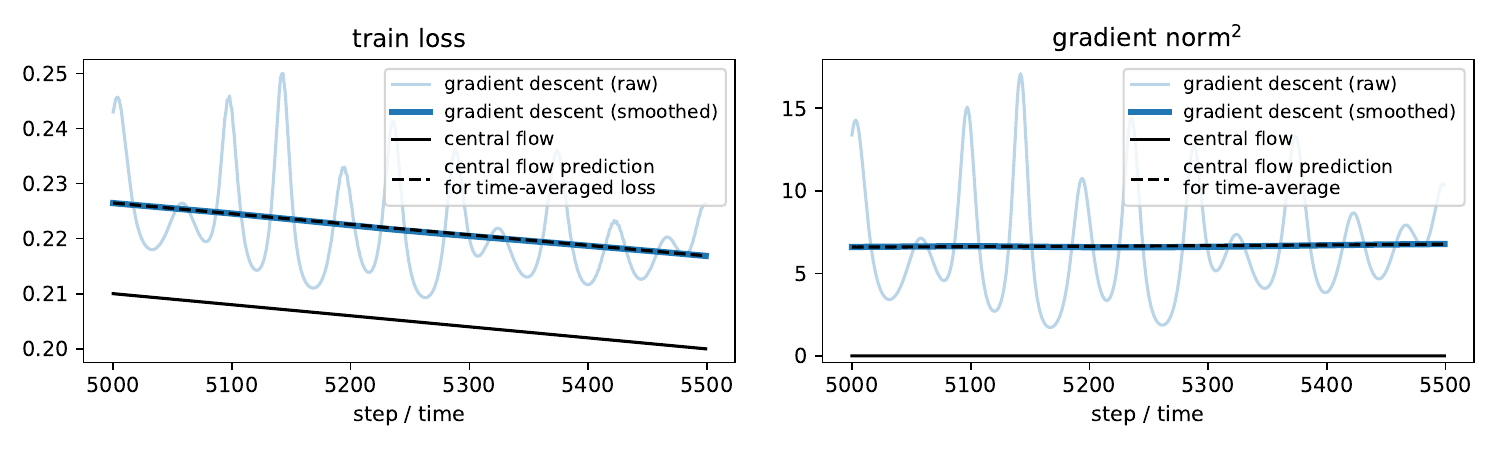}
\includegraphics[width=0.3\textwidth]{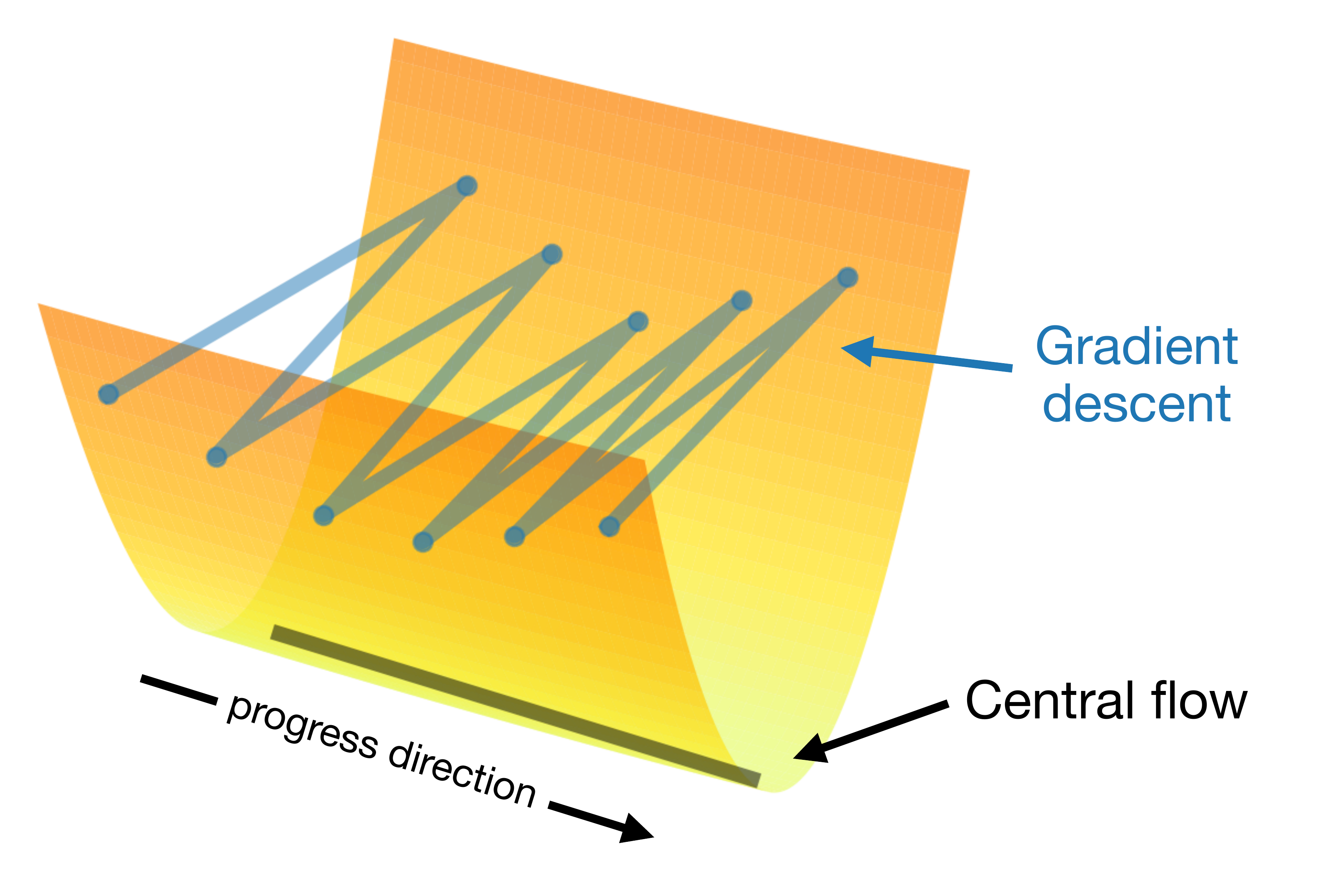}
\caption{The train loss and gradient norm$^2$ are larger along the raw \gd trajectory (light blue) than along the central flow (solid black). The intuitive reason is that an oscillatory optimizer can be visualized as oscillating between the walls of a ``valley'' (see cartoon on right).  Because the central flow models the covariance $\Sigma(t)$ of the oscillations, it can render predictions for the \emph{time-averaged} values of the loss and gradient norm$^2$ along the \gd trajectory, via \cref{eq:central-flow-predicted-loss} and \cref{eq:gd-predict-gradient-norm-sq} respectively (dashed black). These predictions are close to the empirical time-averaged values, computed via Gaussian smoothing (dark blue).  This figure depicts the same ViT as \Cref{fig:gradient-descent-typical}.
}
\label{fig:gd:cf-predictions}
\end{figure}
\Cref{fig:gd:cf-predictions} shows that the loss along the actual gradient descent trajectory is consistently \emph{higher} than the loss along the central flow.
The intuitive explanation is that when gradient descent oscillates along the top Hessian eigenvector(s), it can  be visualized as ``bouncing between valley walls'' \citep{xing2018walk, cohen2021gradient, Wen2024UnderstandingWL}, whereas the time-averaged iterates run nearly along the ``valley floor'' (called a ``river'' in \citet{Wen2024UnderstandingWL}), as illustrated on the right of \Cref{fig:gd:cf-predictions}.  The loss is higher on the valley walls, where the actual iterates are located, than on the valley floor, where the time-averaged iterates are located.\footnote{This does not mean that the loss landscape is ``locally convex'' in any deep sense (indeed, the Hessian generally has negative eigenvalues). It merely reflects that the optimizer is oscillating along directions of positive curvature.}

Fortunately, the central flow framework will still let us reason about the loss along the actual gradient descent trajectory. 
Recall that the central flow predicts not just the time-averaged iterates $w(t)$, but also the covariance of the oscillations $\Sigma(t)$.
In particular, the central flow models the \gd trajectory $\{w_t\}$ as:\footnote{We emphasize that the central flow does not model the ``distribution'' (so to speak) of the oscillations $\delta_t$.  Rather, it only models the second moment $\E[\delta_t \delta_t^T]$, under the theory that the macroscopic trajectory of \gd is completely characterized by this second moment.}
\begin{align*}
    w_t = w(t) + \delta_t, \quad \text{where} \quad \E[\delta_t] = 0 \quad \text{and} \quad \E[\delta_t \delta_t^T] = \Sigma(t).
\end{align*}
Thus, for any quantity $f(w)$ derived from the weights (e.g. loss or gradient norm), we can predict its time-averaged value $\E f(w_t)$ along the \gd trajectory $w_t$ by taking a quadratic Taylor expansion along the central flow $w(t)$, and time-averaging over $\delta_t$:
\begin{align}
\underbrace{\E[f(w_t)]}_{\substack{\text{time-averaged} \\ \text{value along trajectory}}} \approx \underbrace{f(w(t))}_{\text{value along central flow}} + \underbrace{\tfrac{1}{2} \ev{\nabla^2 f(w(t)), \Sigma(t)}}_{\text{contribution from oscillations}}.
\label{eq:central-flow-predictions}
\end{align}
For example, if $f$ is the loss $L$, then because $\Sigma(t)$ is supported on the Hessian eigenvectors with eigenvalue $2/\eta$:
\begin{align}
    \underbrace{\E[L(w_t)]}_{ \substack{\text{time-averaged} \\ \text{loss along trajectory}} } \approx \underbrace{L(w(t))}_{\text{loss along central flow}}  + \underbrace{\tfrac{1}{\eta} \, \tr \Sigma(t)}_{\text{contribution from oscillations}} := \bar L(t). \label{eq:central-flow-predicted-loss}
\end{align}
See \Cref{appendix:derivations:gd:sdcp} for an explicit derivation. \Cref{fig:gd:cf-predictions} shows that this prediction $\bar L(t)$ for the time-averaged loss closely matches the actual time-averaged loss (computed with Gaussian smoothing).
Both the central flow loss $L(w(t))$ and the predicted time-averaged loss $\bar L(t)$ model important quantities that are meaningful to DL practice:
\begin{itemize}
    \item The central flow's prediction for the time-averaged loss $\bar L(t)$ models the smoothed training loss curve, often monitored in practice by Tensorboard \citep{tensorflow2015} or Weights and Biases \citep{wandb}.
    \item The central flow loss $L(w(t))$ models the loss at the time-averaged iterate. This is similar to the loss at an exponential moving average of the weights, or the loss after annealing the learning rate \citep{sandler2023}.
\end{itemize}

The central flow perspective allows us to quantify both of these loss values and reason about them separately.

A similar point holds for other quantities,\footnote{Interestingly, for some quantities (such as the network outputs), we find that the value along the central flow is already an excellent approximation to the time-averaged value along the discrete optimizer trajectory, and \cref{eq:central-flow-predictions} is not necessary.} such as the gradient norm.  \Cref{fig:gd:cf-predictions} shows that the squared gradient norm along the central flow, $\|\nabla L(w(t))\|^2$ is much smaller than at the actual iterates, $\| \nabla L(w_t) \|^2$.  Intuitively, most of the gradient at the iterates is spent ``oscillating across the valley'' and cancels out over the long run.  The central flow's gradient is smaller because it leaves out these oscillations.  Yet, because the central flow models the covariance $\Sigma(t)$ of the oscillations, it can still predict the \emph{time-average} of the squared gradient norm at the iterates, using \cref{eq:gd-predict-gradient-norm-sq} from \Cref{appendix:derivations:gd:sdcp}. \Cref{fig:gd:cf-predictions} demonstrates the accuracy of this prediction.

\subsubsection{Empirical verification}
We empirically find that the central flow can accurately predict the long-term trajectory of gradient descent in a variety of deep learning settings.  For example, \Cref{fig:gd:archs} shows the central flow on several different deep learning settings (details in \Cref{sec:experiments}).  Observe that the central flow accurately predicts the weight-space trajectory, the covariance of the oscillations, and the time-averaged training loss curve.
\Cref{fig:experiments:gd:loss-curves-mse,fig:experiments:gd:loss-curves-ce} show that the central flow can accurately predict the time-averaged training loss curve at different learning rates across a variety of deep learning settings.  Our full set of \gd experiments can be found in \Cref{sec:bulk-experiments-gd}.

\begin{figure}[p!]
\centering
\includegraphics[width=0.95\textwidth]{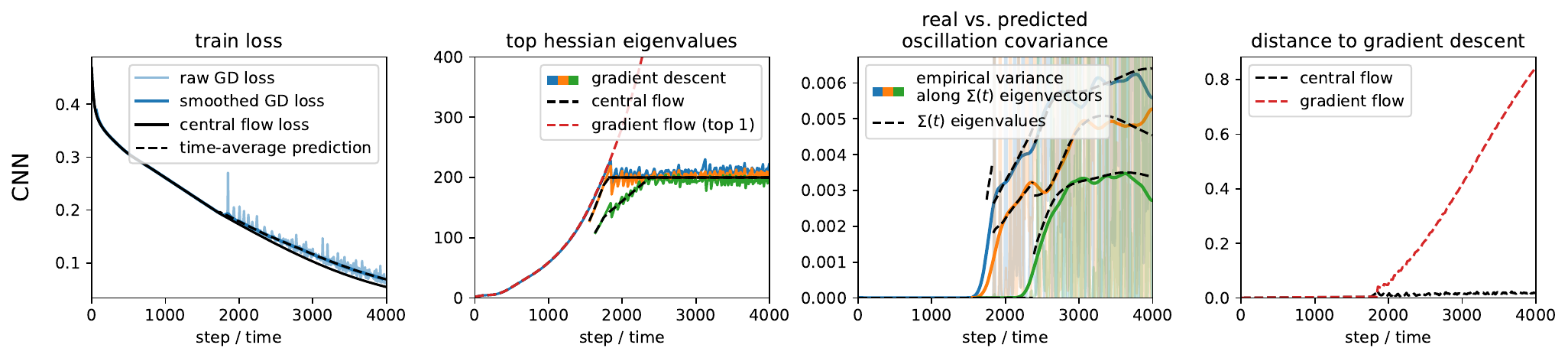}
\includegraphics[width=0.95\textwidth]{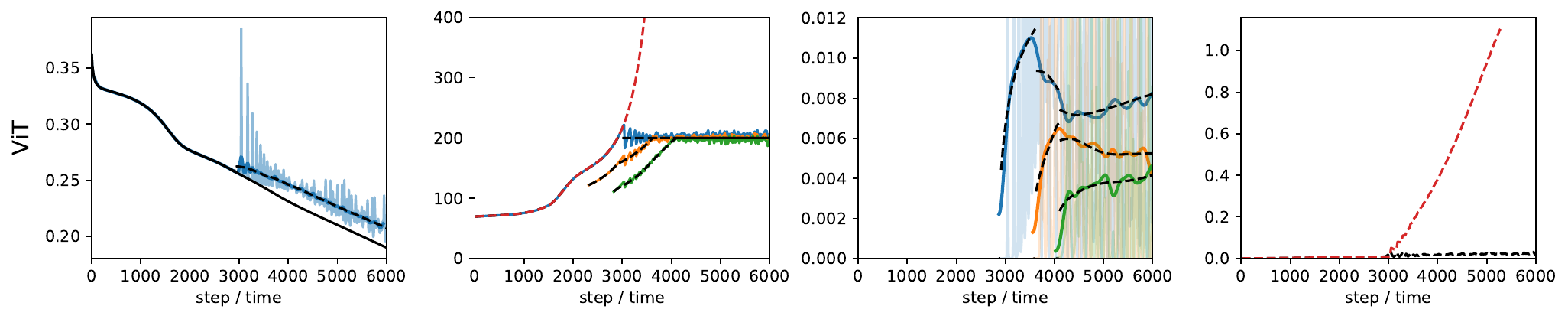}
\includegraphics[width=0.95\textwidth]{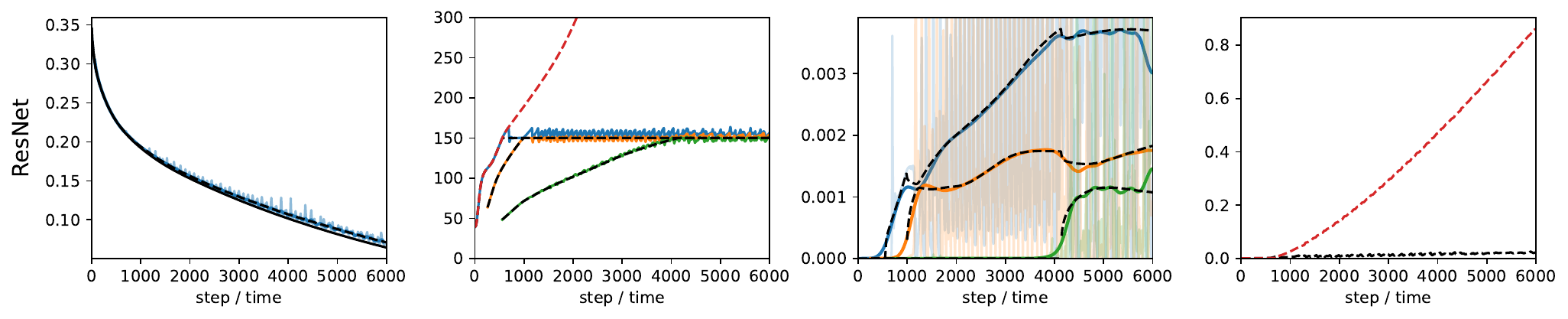}
\includegraphics[width=0.95\textwidth]{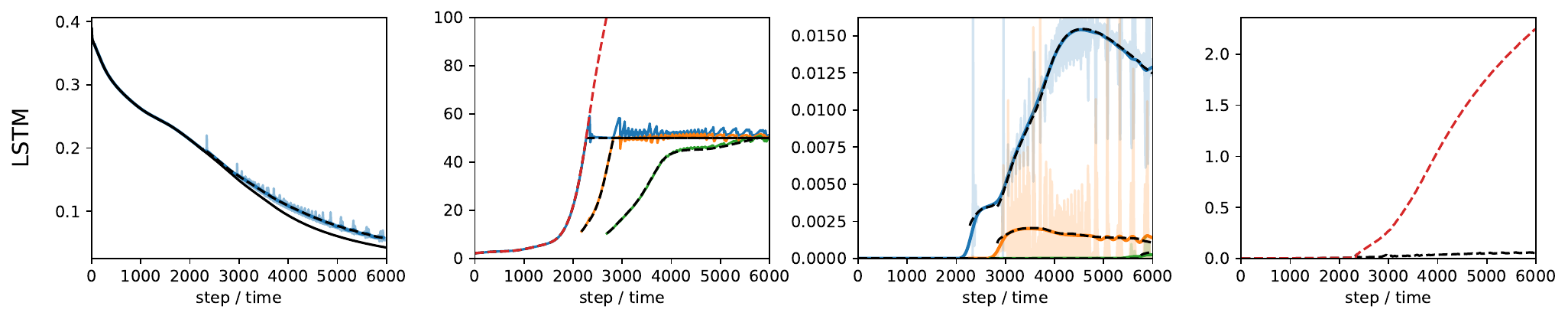}
\includegraphics[width=0.95\textwidth]{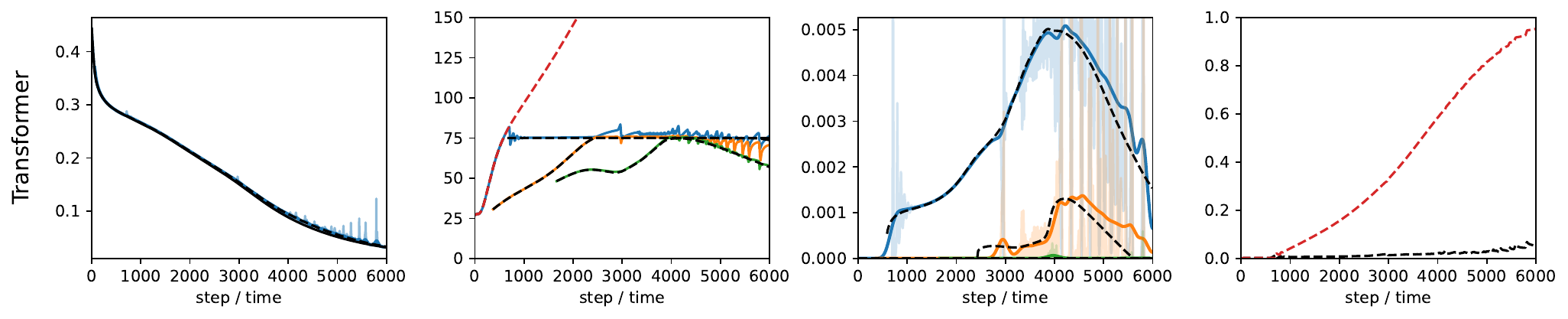}
\includegraphics[width=0.95\textwidth]{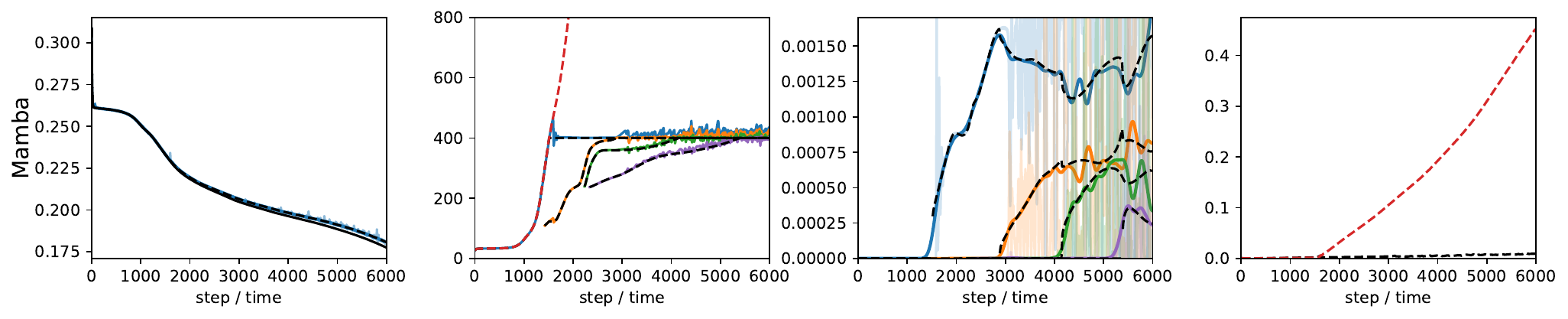}
\caption{\textbf{Verifying the gradient descent central flow across various architectures.} Across various architectures, the central flow accurately predicts the weight-space trajectory, the covariance of the oscillations, and the time-averaged loss curve.  These experiments all use MSE loss. See \Cref{sec:experiments} for more experimental details and \Cref{sec:bulk-experiments} for our full set of raw experiments.}
\label{fig:gd:archs}
\end{figure}

Nevertheless, our derivation relied on informal mathematical reasoning, and certain factors do empirically affect the quality of the central flow approximation.  First, the central flow tends to become less accurate as the learning rate $\eta$ is made increasingly large.  Second, on some deep learning problems, higher-order terms cause the central flow to slightly mispredict $\Sigma(t)$, causing error to accumulate over the long run.  Third, large spikes also can throw off the central flow.  The latter two issues empirically seem to be more common when the loss criterion is cross-entropy rather than MSE.  We discuss these points at greater length in \Cref{sec:experiments}. We hope that future work can rigorously understand the conditions under which the central flow does or does not approximate the \gd trajectory.

\subsection{Understanding Gradient Descent via its Central Flow}
\label{sec:gd:interpreting}
We have shown that the central flow is a smooth curve that characterizes the macroscopic trajectory of \gd.
We now explain why this makes it a useful theoretical tool for reasoning about optimization.

\begin{wrapfigure}[10]{R}{0.39\textwidth}
    \centering
    \vspace{-15px}
    \includegraphics[width=0.39\textwidth]{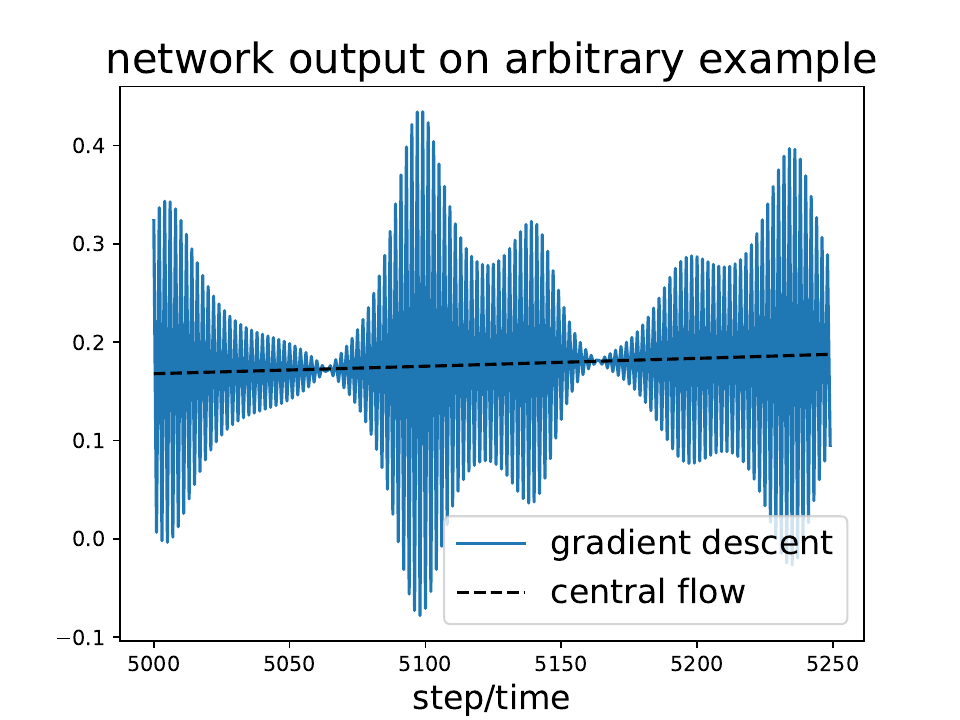}
    \label{fig:gd-interpreting-output}
\end{wrapfigure}
\paragraph{Averaging out oscillations reveals the underlying order} At the edge of stability, \gd's oscillations lead to wild fluctuations in many training-related quantities, such as the training loss and the network's predictions \citep{rosenfeld2024outliers}.
For example, the figure on the right shows the evolution under \gd of the network's prediction on an example.
One can see that along the actual gradient descent trajectory (blue), training proceeds erratically.
In contrast, the central flow (black) is a more coherent training process which makes steady, continuous progress over time.
By averaging out the oscillations, the central flow reveals the underlying order hidden beneath the chaotic oscillatory dynamics.\footnote{Arguably, the central flow can even be viewed as the ``true'' training process, with the actual \gd trajectory being merely a noisy realization of this idealized trajectory which is computationally cheap to obtain.}

\begin{wrapfigure}[11]{L}{0.34\textwidth}
    \centering
    \includegraphics[width=0.34\textwidth]{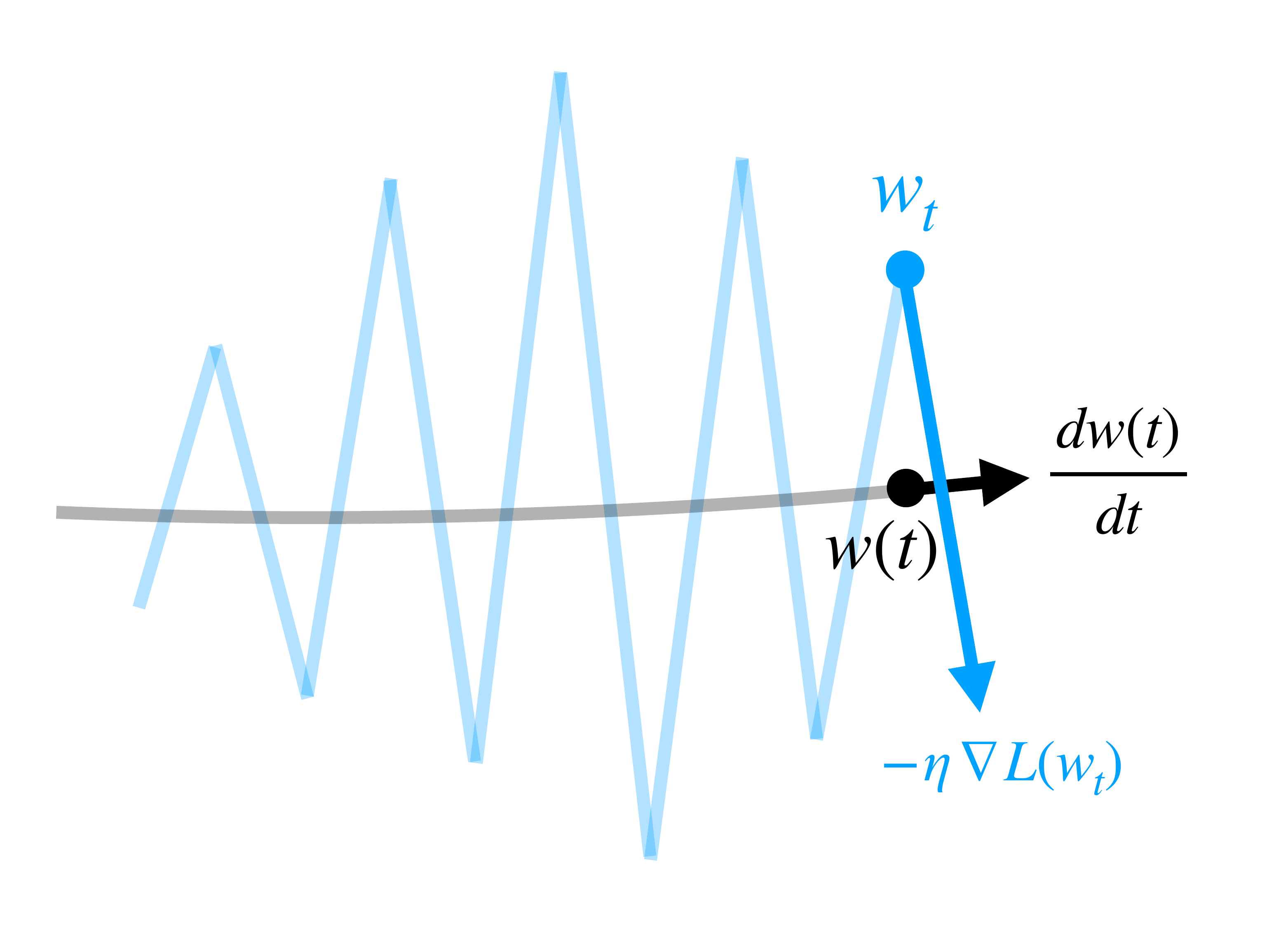}
\end{wrapfigure}
\paragraph{A smooth curve can be analyzed using calculus} Because the central flow is a smooth curve, we can leverage calculus to reason quantitatively about the dynamics of training.  Crucially, along the central flow, the time derivative $\tfrac{d w(t)}{dt}$ meaningfully reflects the optimizer's direction of motion over the near term (see cartoon on left).  In comparison, along the \gd trajectory, the analogous update $- \eta \nabla L(w_t)$ is dominated by oscillations and hence does \emph{not} meaningfully reflect the direction of motion over the near term --- only over the current step.

For any quantity $f(w)$ derived from the weights $w$, we can use the chain rule to compute its rate of change under the central flow: $\tfrac{df}{dt} = \langle \nabla f(w), \tfrac{dw}{dt} \rangle$. We will now use this to reason about the rate of loss decrease.

\begin{wrapfigure}[10]{R}{0.38\textwidth}
    \centering
    \vspace{-15px}
    \includegraphics[width=0.37\textwidth]{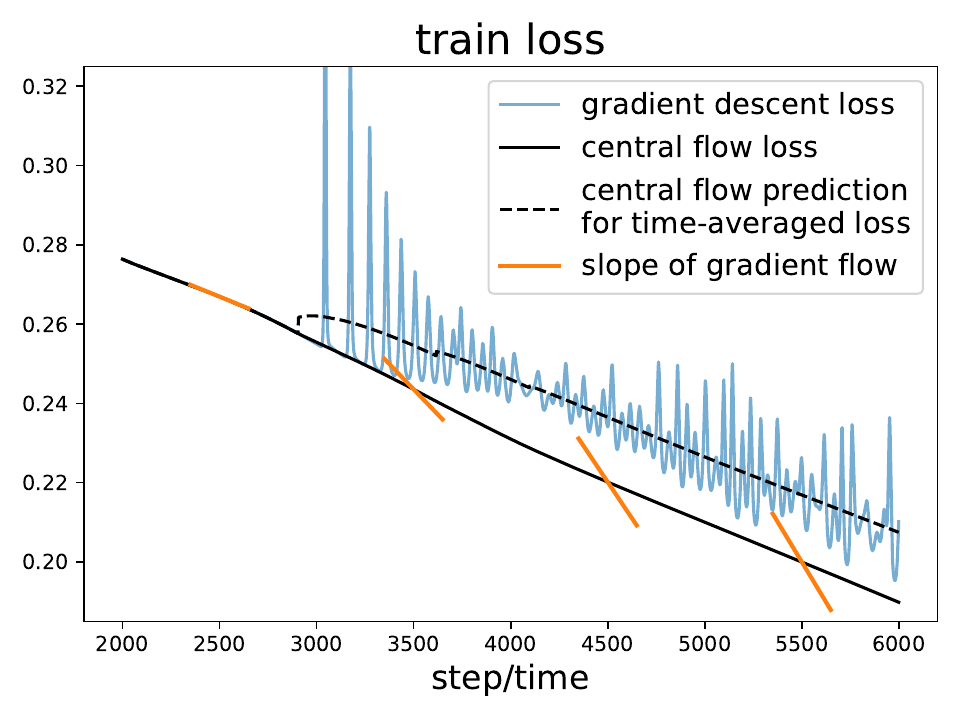}
    \label{fig:gd-interpreting-trainloss}
\end{wrapfigure}
\paragraph{Reasoning about training loss curve}
Consider the most basic question one can ask about an optimization algorithm: how fast is the loss going down?
For the ``raw'' gradient descent trajectory, the loss \emph{doesn't} always go down --- instead, the loss behaves non-monotonically over short timescales, while only decreasing over long timescales.
Thus, reasoning about the rate of loss decrease is challenging.
In contrast, under the central flow, the loss evolves smoothly, and its rate of decrease can be quantified using the chain rule: 
$\frac{dL(w)}{dt} = \ev{\nabla L(w), \frac{dw}{dt}}$.  Combining this with the projection interpretation (\Cref{def:gd:flow:proj}), one can easily prove that the loss along the central flow $L(w(t))$ is monotonically decreasing.  In other words, the central flow loss is a valid \textbf{potential function} for the optimization process:
\begin{proposition}
    Under the central flow $w(t)$, we have $\frac{d}{dt} L(w(t)) \le 0$.
    \label{prop:gd:loss-decrease}
\end{proposition}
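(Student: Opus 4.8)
The plan is to work directly from the projection form of the central flow, \cref{eq:projected-flow} and \Cref{def:gd:flow:proj}, which writes $\frac{dw}{dt} = \eta\, p(t)$ where $p(t) := \mathrm{proj}_{T_w \mathbb{S}}[-\nabla L(w)]$ is the Euclidean projection of the negative gradient onto the tangent cone $T_w\mathbb{S}$ of the stable region $\mathbb{S} = \{w : S(w) \le 2/\eta\}$. By the chain rule, $\frac{d}{dt} L(w(t)) = \langle \nabla L(w), \tfrac{dw}{dt}\rangle = \eta \langle \nabla L(w), p(t)\rangle = -\eta\langle -\nabla L(w), p(t)\rangle$, so it suffices to show that the projection of a vector onto $T_w\mathbb{S}$ makes a non-obtuse angle with that vector, i.e. $\langle -\nabla L(w), p(t)\rangle \ge 0$.

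The key structural fact is that $T_w\mathbb{S}$ is a closed convex cone containing the origin: in the interior of $\mathbb{S}$ it is all of $\R^d$, while on the boundary it is the set of directions $v$ along which the one-sided directional derivative of $S(w) = \lambda_1(H(w))$ is $\le 0$, which is convex because that directional derivative is a support-function-type (hence convex) function of $v$, and which is evidently scale-invariant and contains $0$. For the Euclidean projection onto any closed convex set $K$, the obtuse-angle criterion gives $\langle g - \mathrm{proj}_K(g),\, y - \mathrm{proj}_K(g)\rangle \le 0$ for every $y \in K$. Applying this with $g = -\nabla L(w)$, $p = \mathrm{proj}_K(g)$, and the two choices $y = 0 \in K$ and $y = 2p \in K$ (both lie in $K$ since it is a cone) yields $\langle g - p, p\rangle = 0$, hence $\langle g, p\rangle = \|p\|^2 \ge 0$. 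Substituting back gives $\frac{d}{dt} L(w(t)) = -\eta\langle g, p\rangle = -\eta\|p(t)\|^2 \le 0$, which is the claim — and in fact shows the central flow loss is strictly decreasing whenever $p(t) \ne 0$, i.e. whenever the flow is moving.

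As a consistency check, in the single-unstable-eigenvalue regime the tangent cone is the half-space $\{v : \langle \nabla S(w), v\rangle \le 0\}$, and when this constraint is active $p = -\Pi^\perp_{\nabla S(w)}\nabla L(w)$ by \cref{eq:sharpness-constrained-flow}, so $\frac{d}{dt}L(w(t)) = -\eta\,\|\Pi^\perp_{\nabla S(w)}\nabla L(w)\|^2 \le 0$, matching the general computation (here one uses that $\Pi^\perp_{\nabla S(w)}$ is an orthogonal projection, so idempotent and self-adjoint). The one step requiring genuine care — and the only real obstacle — is justifying that $T_w\mathbb{S}$ is a closed convex cone at boundary points where several Hessian eigenvalues coincide at $2/\eta$ and $S(w) = \lambda_1(H(w))$ is non-smooth; this rests on the precise notion of tangent cone fixed in \Cref{def:gd:flow:proj} together with the convexity of $v \mapsto D_v \lambda_1(H(w))$. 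Once the cone structure is granted, the remainder is just the elementary projection inequality above.
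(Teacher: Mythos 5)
Your proof is correct and takes essentially the same route as the paper's (\Cref{lem:gd_helper} together with \Cref{prop:gd:loss-decrease-restated}): apply the chain rule to the projection formulation \cref{eq:projected-flow}, establish the orthogonality identity $\langle -\nabla L(w), p\rangle = \|p\|^2$ for $p = \mathrm{proj}_{T_w\mathbb{S}}[-\nabla L(w)]$, and conclude from the non-negativity of the squared norm. The only minor difference is that you derive this identity from first principles via the obtuse-angle variational inequality for projection onto a closed convex cone (tested at $y=0$ and $y=2p$), whereas the paper invokes the Moreau decomposition \cref{eq:projection-moreau-orthogonal2} --- two equivalent statements of the same projection fact.
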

See \Cref{prop:gd:loss-decrease-restated} for the proof.  The intuition is that, even after the negative gradient is projected onto the tangent cone of the stable region, it will still be negatively aligned with the gradient.

While averaging out the oscillations yields a central flow with a smoothly decreasing loss curve, the oscillations still have an effect on this loss curve through their implicit curvature reduction effect, which can be shown to slow down training. In particular, whereas the unregularized gradient flow \cref{eq:gflow} decreases the loss at the speed $\frac{dL}{dt} = - \eta \|\nabla L(w)\|^2$, it is straightforward to show that the central flow optimizes at a slower speed:
\vspace{5px}
\begin{proposition}
    Under the central flow $w(t)$, we have $\tfrac{d}{dt} L(w(t)) \ge - \eta \|\nabla L(w(t))\|^2$.
    \label{prop:gd:slowdown}
\end{proposition}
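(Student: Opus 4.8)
The plan is to build on the projection form of the central flow, \cref{eq:projected-flow}, exactly as was done for \Cref{prop:gd:loss-decrease}. Write $K := T_{w}\mathbb{S}$ for the tangent cone of the stable region $\mathbb{S} = \{w : S(w) \le 2/\eta\}$ at the current point $w = w(t)$, so that the central flow satisfies $\tfrac{dw}{dt} = \eta\,\mathrm{proj}_{K}[-\nabla L(w)]$. By the chain rule, with $v := -\nabla L(w)$ and $p := \mathrm{proj}_{K}(v)$,
\[
\tfrac{d}{dt} L(w(t)) = \ev{\nabla L(w), \tfrac{dw}{dt}} = \eta\ev{-v, p} = -\eta\ev{v, p}.
\]
So the proposition reduces to two facts about projection onto a closed convex cone: first, $\ev{v, p} = \norm{p}^2$ (the defining orthogonality $\ev{v - p, p} = 0$ of the cone projection), and second, $\norm{p}^2 \le \norm{v}^2$.

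First I would record that $K$ is a closed convex cone: in the formal definition (\Cref{def:gd:flow:proj}) the relevant tangent cone is the polyhedral cone $\{d : \ev{\nabla_w(u^{\top} H(w) u), d} \le 0 \text{ for all unit } u \text{ in the } 2/\eta\text{-eigenspace}\}$, i.e. an intersection of half-spaces through the origin. With this in hand, the Moreau decomposition of $v$ with respect to $K$ and its polar cone $K^{\circ}$ gives $v = p + \mathrm{proj}_{K^{\circ}}(v)$ with the two summands orthogonal; hence $\ev{v,p} = \norm{p}^2$, and by the Pythagorean identity $\norm{v}^2 = \norm{p}^2 + \norm{\mathrm{proj}_{K^{\circ}}(v)}^2 \ge \norm{p}^2$. (Equivalently: $0 \in K$, so nonexpansiveness of the metric projection gives $\norm{p} = \norm{\mathrm{proj}_K(v) - \mathrm{proj}_K(0)} \le \norm{v}$.) Combining, $\tfrac{d}{dt} L(w(t)) = -\eta\norm{\mathrm{proj}_{K}(-\nabla L(w))}^2 \ge -\eta\norm{\nabla L(w(t))}^2$, which is the claim; the intermediate equality simultaneously re-proves \Cref{prop:gd:loss-decrease}.

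The only delicate point is justifying that these projection identities apply, namely that $K$ is genuinely a closed convex cone so that $\mathrm{proj}_{K}$ is single-valued and Moreau decomposition holds. The subtlety is the non-smoothness of $\lambda_1(H(w))$ when several eigenvalues coincide at $2/\eta$ — which is precisely why the flow is defined through the linearized tangent cone (an intersection of half-spaces) rather than through a naive constraint on $\nabla S$. Once the formal objects of \Cref{def:gd:flow:proj} are fixed this is immediate, so the real ``work'' is bookkeeping about which set $K$ denotes; the analytic content is just Pythagoras for the polar-cone decomposition. I would also sanity-check the degenerate case where no eigenvalue is at $2/\eta$: then $K$ is the whole space, $\mathrm{proj}_{K} = \mathrm{id}$, and the bound holds with equality, consistent with $\tfrac{dL}{dt} = -\eta\norm{\nabla L}^2$ under gradient flow.
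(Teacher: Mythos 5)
Your proposal is correct and follows essentially the same route as the paper: the paper first establishes $\tfrac{d}{dt}L(w(t)) = -\eta\,\|\mathrm{proj}_{T_{\mathbb{S}_\eta}(w)}[-\nabla L(w)]\|^2$ via the chain rule and the Moreau orthogonality identity $\langle v, \mathrm{proj}_C[v]\rangle = \|\mathrm{proj}_C[v]\|^2$ (its Lemma~\ref{lem:gd_helper}), then concludes with the Pythagorean inequality $\|\mathrm{proj}_C[v]\|\le\|v\|$ from the Moreau decomposition. Your inline version, including the nonexpansiveness alternative and the sanity check of the stable regime, matches the paper's argument in substance.
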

See \Cref{prop:gd:slowdown-restated} for the proof. The intuition is that because the central flow projects out the components of the loss gradient that would cause the sharpness to rise above $2/\eta$, it has less gradient available with which to decrease the loss.  This effect is illustrated in the figure on the previous page, which shows that at various points during training, the slope of the central flow loss curve is less steep than the rate of loss decrease under the gradient flow.  

\begin{wrapfigure}[9]{R}{0.35\textwidth}
    \centering
    \includegraphics[width=0.33\textwidth]{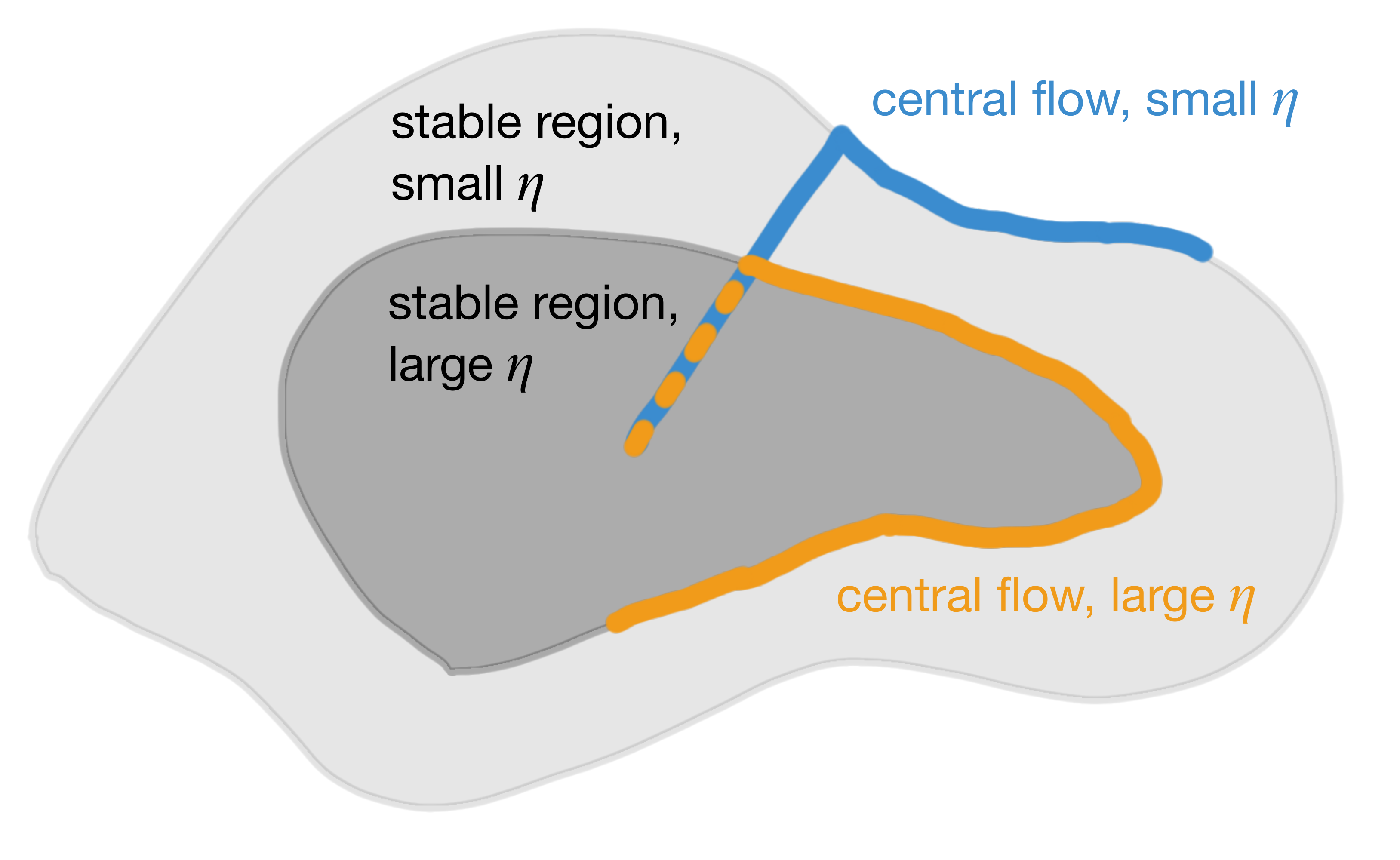}
    \label{fig:gd-hyperparameters}
\end{wrapfigure}
\paragraph{Understanding the effect of hyperparameters}
A notorious peculiarity of deep learning is that optimizer hyperparameters affect not just the speed of training, but also the particular path that the optimizer takes through weight space \citep[e.g.][]{keskar2017largebatch, jastrzębski2018on}. As a result, these hyperparameters can affect many properties of the final learned model, including its robustness and generalization.\footnote{Large learning rates are necessary for obtaining good generalization in some deep learning settings \citep[e.g.][]{li2019towards}.  However, obtaining the best generalization performance usually also requires stochastic optimization with a sufficiently small batch size.  Since our paper exclusively studies the deterministic setting, we decided to not focus on generalization in this paper.}  Such effects are \emph{implicit} in the \gd update \cref{eq:gd}.  In contrast, the central flow renders \emph{explicit} all effects of the learning rate hyperparameter $\eta$ on the optimization process, allowing us to disentangle these effects from one another.  Recall from \cref{eq:projected-flow} that the central flow is a projected gradient flow with learning rate $\eta$ that is constrained to the stable region $\mathbb{S} = \{w : S(w) \le 2/\eta\}$.  From this characterization, we see that the learning rate hyperparameter $\eta$ has two distinct effects on the central flow: (1) it acts as a time rescaling, which controls the speed of optimization without affecting the overall trajectory; and (2) it determines the stable region, which affects the overall trajectory.   Thus, increasing the learning rate constrains gradient descent to a smaller subset of weight space, but also allows it to traverse this set at a faster speed.\footnote{The learning rate $\eta$ that is optimal from an optimization perspective (i.e. that will decrease the loss the fastest) will depend on the trade-off between these two effects.   Empirically, we observe that for deterministic gradient descent, larger learning rates usually optimize faster (provided that training does not diverge), implying that the former effect is stronger.}

Having introduced the central flows framework with an analysis of \gd, we will now use this methodology to understand the behavior of two adaptive optimizers.

\newpage
\section{Scalar RMSProp}
\label{sec:rmsprop_norm}

As a stepping stone to the analysis of \rmsprop in \Cref{sec:rmsprop}, we first study ``\rmsnorm,'' a simplification of \rmsprop which uses one global step size, rather than separate step sizes for each coordinate:\footnote{Note that we have re-indexed $\nu$ compared to the standard definition of RMSProp (i.e. $\nu_{t+1} \to \nu_t$). This does not affect the trajectory and just ensures the effective learning rate at step $t$ is determined by $\nu_t$, rather than $\nu_{t+1}$, which simplifies the notation.}\footnote{This algorithm was also studied by \citet{lyu2022understanding}. However, their analysis only applies along a manifold of global minima, as $\eta \to 0$.}
\begin{align}
    \nu_t &= \beta_2 \nu_{t-1} + (1 - \beta_2) \| \nabla L(w_t) \|^2 \qc
    w_{t+1} = w_t - \frac{\eta}{\sqrt{\nu_{t}}} \nabla L(w_t). \qquad\qquad\qquad
\label{eq:rmsprop_norm}
\end{align}
The algorithm maintains an exponential moving average (EMA), $\nu$, of the squared gradient norm, and takes gradient steps of size $\eta / \sqrt{\nu}$, which we call the \emph{effective step size}.\footnote{The terms ``learning rate'' and ``step size'' are usually interchangeable.  In this paper, to avoid ambiguity, we will use the phrase ``learning rate'' to denote the hyperparameter, and ``step size'' or ``effective step size'' to denote the actual step sizes that are taken.}
The EMA hyperparameter $\beta_2$ is a knob that interpolates the algorithm between \gd when $\beta_2 = 1$ and normalized gradient descent (NGD) when $\beta_2 = 0$.\footnote{When $\beta_2 = 1$, \rmsnorm reduces to \gd with learning rate $\eta / \sqrt{\nu_0}$.
Conversely, when $\beta_2 = 0$, it reduces to \ngd with learning rate $\eta$: $w_{t+1} = w_t - \eta \cdot \frac{\nabla L(w_t)}{\| \nabla L(w_t) \|}$.}

While optimizers such as \rmsnorm are often said to utilize an ``adaptive step size,'' it has remained unclear what precise property of the local landscape the step size is being adapted \emph{to} \citep{orabona2020neural}.  In this section, we will use the central flows framework to answer this basic question.
After describing the dynamics of \rmsnorm in \Cref{sec:rmsnorm:mechanics} and deriving a central flow in \Cref{sec:rmsnorm:deriving}, we will interpret this flow to understand the optimizer's behavior in \Cref{sec:rmsnorm:interpreting}.  In particular:

\begin{itemize}
    \item In \Cref{sec:rmsnorm:interpreting:adapt}, we make precise how \rmsnorm adapts its step size to the local loss landscape.  Specifically, we show that the optimizer's dynamics implicitly set the effective step size to the value $2/S(w)$, where $S(w)$ is the current sharpness; this value is the \emph{largest stable step size} at the current weights $w$. 
    \item In \Cref{sec:rmsnorm:interpreting:regularize}, we show that step size adaptation is not the full story: \rmsnorm also implicitly regularizes curvature throughout training, and in fact,  at EOS, the hyperparameters $\eta, \beta_2$ only affect the time-averaged trajectory by modulating the strength of this curvature regularization.
    \item Bringing it all together, in \Cref{sec:rmsnorm:interpreting:acceleration} we describe how the interplay between step size adaptation and curvature regularization gives rise to a mechanism we call \emph{acceleration via regularization}, whereby the optimizer implicitly steers itself towards low-curvature regions where it can take larger steps. We show that this mechanism is key to the efficacy of \rmsnorm and to the function of its hyperparameters.
\end{itemize}

These points will generalize to \rmsprop in \Cref{sec:rmsprop}, but are simpler to understand for \rmsnorm.

\begin{figure}[b!]
\centering
\includegraphics[width=\textwidth]{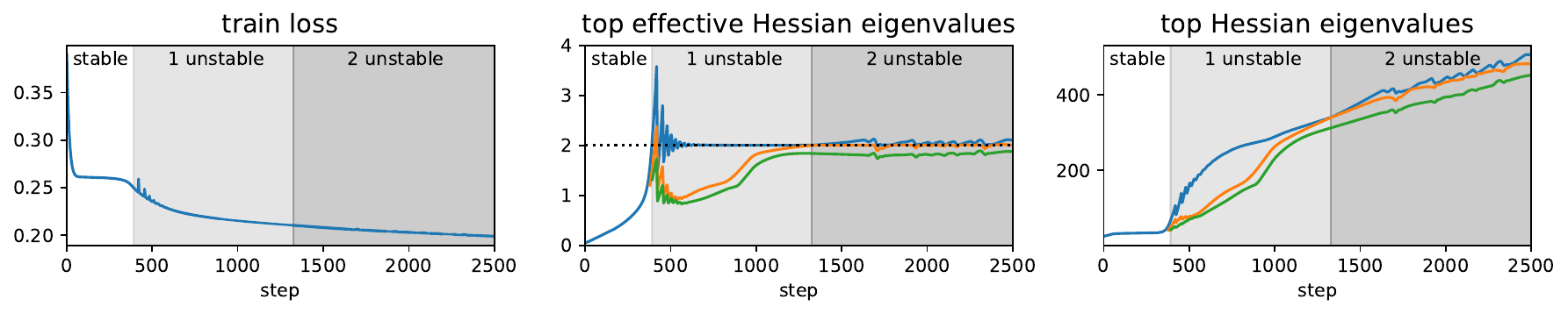}
\caption{\textbf{A typical Scalar RMSProp trajectory}.  We train a Mamba network on a sequence task using \rmsnorm with $\eta = 2/400$ and $\beta_2  = 0.99$.  While the top eigenvalues of the ``raw'' Hessian $H(w)$ evolve freely (right), the top eigenvalue of the \emph{effective} Hessian $\eta H(w)/\sqrt{\nu}$ equilibrates at the critical threshold $2$ (center). }
\label{fig:rmsprop-norm-intro}
\end{figure}

\subsection{The Dynamics of Scalar RMSProp}\label{sec:rmsnorm:mechanics}
The dynamics of \rmsnorm revolve around the \emph{effective sharpness}, defined as $\Seff := \eta S(w)/\sqrt{\nu}$.\footnote{While we could have defined effective sharpness as $S(w)/ \sqrt{\nu}$ so that it would equilibrate at $2/\eta$, this version makes the analysis easier.}
First, the effective sharpness controls the oscillations: when $\Seff > 2$, \rmsnorm oscillates with growing magnitude along high curvature direction(s).
Second, such oscillations in turn trigger a reduction of effective sharpness.
This occurs via a combination of two distinct mechanisms.
One mechanism, shared with \gd, is that oscillations implicitly reduce sharpness due to \cref{eq:gd:cubic_taylor}, thereby decreasing the effective sharpness via its \emph{numerator}.
The other mechanism, new to \rmsnorm, is that oscillations increase the gradient norm and hence $\nu$, thereby decreasing effective sharpness via its \emph{denominator}.
These dynamics give rise to a negative feedback loop that keeps the effective sharpness automatically regulated around the value $2$, as depicted in \Cref{fig:rmsprop-norm-intro}.
The fine-grained dynamics are complex and challenging to analyze, even in the case of a single unstable eigenvalue.
Fortunately, we will see in the next section that analyzing the \emph{time-averaged} dynamics is much simpler.

\subsection{Deriving the \protect\rmsnorm Central Flow}\label{sec:rmsnorm:deriving}

Recall that while \gd trains stably, it is well-approximated by \gflow.
One can derive an analogous ``stable flow'' for \rmsnorm 
\citep[cf.]{ma2022qualitative}:\footnote{\label{footnote:beta2} The $1-\beta_2 \to \frac{1-\beta_2}{\beta_2}$ correction is necessary for small values of $\beta_2$. For example, when $\beta_2 = 0$ (i.e. \ngd), $\nu_{t} = \norm{\nabla L(w_t)}^2$ so in the continuous time ODE, $\nu(t)$ needs to adapt ``instantly'' to $\norm{\nabla L(w(t))}^2$. See \Cref{sec:continuous-time-ema} for additional justification for this correction term.}
\begin{align}
    \frac{dw}{dt} = -\frac{\eta}{\sqrt{\nu}} \nabla L(w) ,\quad\quad \frac{d\nu}{dt} = \frac{1-\beta_2}{\beta_2}\qty\big[\norm{\nabla L(w)}^2 - \nu] \label{eq:rmsprop_norm_stable_flow}.
\end{align}
However, at the edge of stability, the trajectory of \rmsnorm deviates from \cref{eq:rmsprop_norm_stable_flow}. We will now derive a more general \emph{central flow} that characterizes the time-averaged trajectory even at EOS. In the main text, we will focus on the special case where one eigenvalue is, and remains at, the edge of stability. See \Cref{appendix:derivations:rmsprop_norm} for our full derivation which accounts for multiple eigenvalues at EOS and for eigenvalues entering and leaving EOS.

In \Cref{sec:gd:single}, we derived an approximation for the time-averaged gradient, $\E[\nabla L(w_t)]$. Using the first two terms of \cref{eq:gd:cubic_taylor}, we can also derive a time-averaged approximation for the squared gradient norm $\E[\norm{\nabla L(w_t)}^2]$:
\begin{align*}
    \E[\norm{\nabla L(w_t)}^2] \approx \norm{\nabla L(\overline{w}_t)}^2 + \cancel{2\ev{\nabla L(\overline{w}_t), u} S(\overline{w}_t) \E[x_t]} +  S(\overline{w}_t)^2 \E[x_t^2]
\end{align*}
where we again used $\E[x_t] = 0$ to ignore the middle term.
This calculation makes clear that larger oscillations (i.e. higher $\E[x_t^2]$) increase the squared gradient norm on average over time.
Based on these time averages, we make the ansatz that the joint dynamics of $(w_t,\nu_{t})$ follow a central flow $(w(t),\nu(t))$ of the form:
\begin{align}
    \frac{dw}{dt} = - \frac{\eta}{\sqrt{\nu}} \qty\Big[\underbrace{\nabla L(w) + \tfrac{1}{2} \sigma^2(t) \nabla S(w)}_{\E[\nabla L(w_t)]}] ,\quad\quad
    \frac{d\nu}{dt} = \frac{1-\beta_2}{\beta_2}\qty\Big[\underbrace{\norm{\nabla L(w)}^2 + S(w)^2 \sigma^2(t)}_{\E[\norm{\nabla L(w_t)}^2]} - \nu],
\label{eq:rmsprop_norm_flow}
\end{align}
where $\sigma^2(t)$ is a still-unknown quantity intended to model $\E[x_t^2]$, the instantaneous variance of the oscillations.
As in our analysis of \gd, there is a unique value of $\sigma^2(t)$ that maintains $\Seff(w,\nu) = 2$. To compute it, we expand $\frac{d\Seff}{dt}$ using the chain rule:
$
    \frac{d\Seff}{dt} = \langle \frac{\partial \Seff}{\partial w}, \frac{dw}{dt}\rangle + \frac{\partial \Seff}{\partial \nu} \cdot \frac{d\nu}{dt}
$.
Plugging in $\frac{dw}{dt}, \frac{d\nu}{dt}$ from \cref{eq:rmsprop_norm_flow} shows that $\frac{d\Seff}{dt}$ is linear in $\sigma^2$. Thus, there is a unique value of $\sigma^2$ that will ensure  $\frac{d\Seff}{dt} = 0$, which is given by:
\begin{align}
    \sigma^2(w;\eta,\beta_2) = \frac{\beta_2 \overbrace{\ev{-\nabla L(w), \nabla S(w)}}^{\mathclap{\text{progressive sharpening}}} + (1{-}\beta_2) \overbrace{\qty[S(w)^2/4 - \|\nabla L(w)\|^2/\eta^2]}^{\mathclap{\text{effect of mean reversion on $\nu$}}}}{\beta_2\underbrace{\tfrac{1}{2} \norm{\nabla S(w)}^2}_{\mathclap{\text{sharpness reduction}}} + (1{-}\beta_2) \underbrace{S(w)^2/\eta^2}_{\mathclap{\text{effect of oscillation on $\nu$}}}}.\label{eq:rmsprop_norm_x}
\end{align}
The central flow for \rmsnorm in the special case of one unstable eigenvalue is given by \cref{eq:rmsprop_norm_flow} with this value of $\sigma^2$.\footnote{The \rmsnorm central flow can be interpreted as a projected flow in the augmented space $(w,\nu)$ under a certain non-Euclidean norm. However, because this flow is not a gradient flow, it does not immediately suggest a decreasing potential function for \rmsnorm.}
The fully general central flow is given in \Cref{appendix:derivations:rmsprop_norm}, \Cref{def:rmsnorm:flow:ode}.
\Cref{fig:rmsprop-norm} illustrates how this central flow can accurately predict the long-term trajectory of \rmsnorm as well as the covariance with which \rmsnorm oscillates around that trajectory.  \Cref{fig:experiments:scalar-rmsprop:loss-curves-mse,fig:experiments:scalar-rmsprop:loss-curves-ce} show, in a variety of deep learning settings, that this central flow can accurately predict the loss curve of \rmsnorm at different learning rates.  \Cref{fig:experiments:scalar-rmsprop:beta2} shows that the central flow holds across different values of $\beta_2$.  See \Cref{sec:bulk-experiments-scalar-rmsprop} for the full set of raw experiments.

\begin{figure}[t]
\centering
\includegraphics[width=\textwidth]{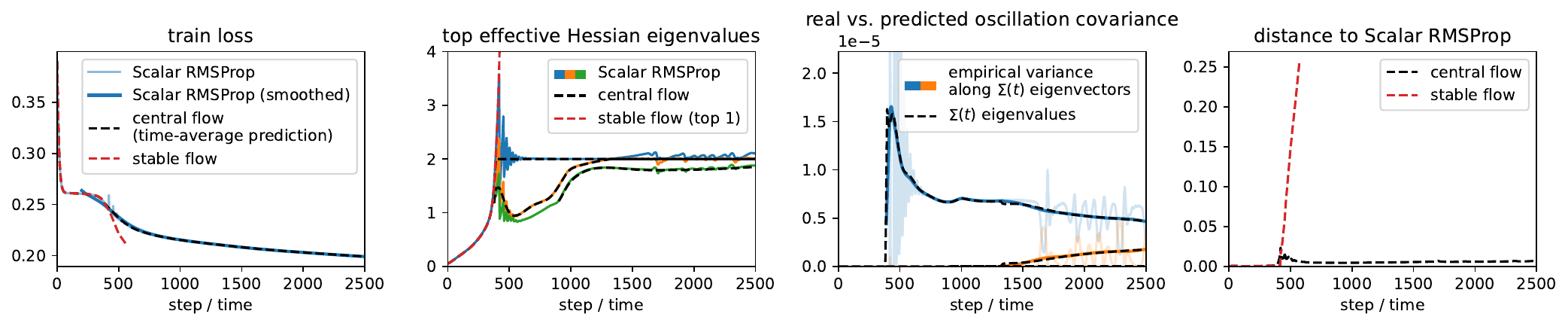}
\caption{\textbf{Central flow for Scalar RMSProp.} The central flow (black) accurately models the time-averaged trajectory of \protect\rmsnorm even at the edge of stability, whereas the naive stable flow (red) follows a different path. As with gradient descent, our analysis can accurately predict the covariance $\Sigma(t)$ with which \protect\rmsnorm oscillates around the central flow (third panel).  The setting is the same as \Cref{fig:rmsprop-norm-intro}.}
\label{fig:rmsprop-norm}
\end{figure}

The analysis in this section highlights the potential of our time-averaging methodology. With just a single invocation of the chain rule, we have characterized the long-term trajectory of a complex dynamical system involving mutual interactions between the oscillations, the sharpness, and the adaptive step size.

\subsection{Understanding Scalar RMSProp via its Central Flow}\label{sec:rmsnorm:interpreting}

We now interpret the \rmsnorm central flow to shed light on the behavior of the algorithm and the function of its hyperparameters $\eta$ and $\beta_2$.
Because the dynamics usually transition from stable to EOS quite early in training, we focus on interpreting the central flow in the EOS regime.\footnote{In the stable regime ($\Seff < 2$), the central flow is given by the stable flow \cref{eq:rmsprop_norm_stable_flow}. For this flow, $\frac{dw}{dt}$ is directly proportional to that of gradient flow, implying these flows traverse the same trajectory, just at a different speed (i.e. with a nonlinear time-rescaling). In this regime, the effective step size generally increases monotonically, so \rmsnorm follows \gflow with a learning rate warmup.}

\subsubsection{Implicit step size selection} \label{sec:rmsnorm:interpreting:adapt}
\begin{wrapfigure}[12]{r}{0.38\textwidth}
  \vspace{-18px}
  \includegraphics[width=\linewidth]{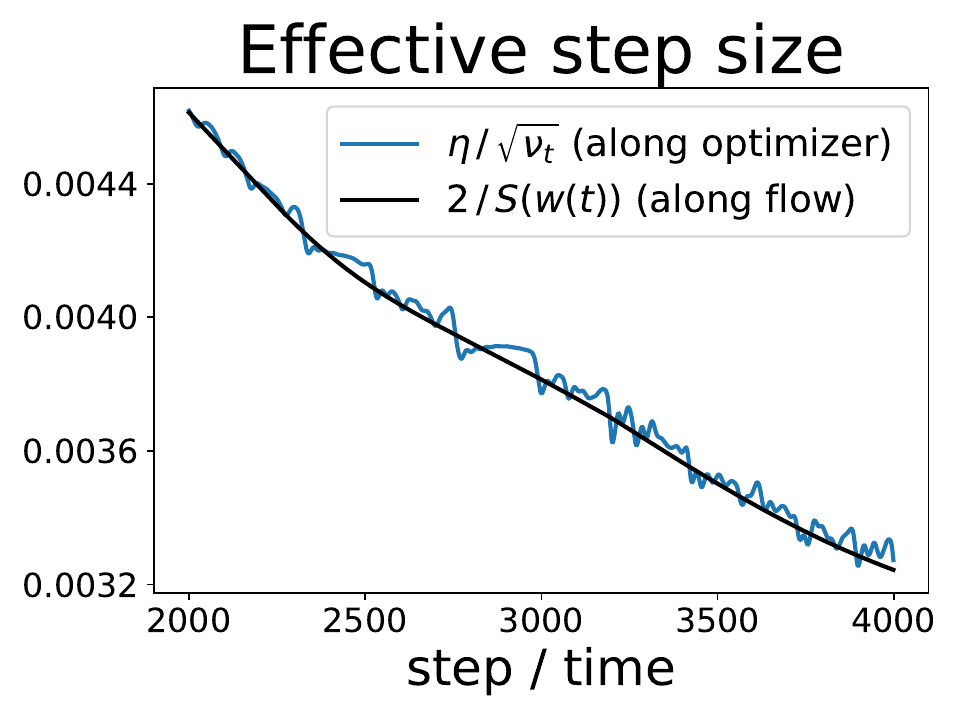}
\label{fig:rmsnorm-ess}
\end{wrapfigure}
The central flow renders \emph{explicit} the step size strategy that is \emph{implicit} in the oscillatory dynamics of \rmsnorm.
Recall that while the central flow is at EOS, the effective sharpness $\Seff := \eta S(w)/\sqrt{\nu}$ is fixed at $2$.  Indeed, this is the equilibrium condition that is automatically maintained by the dynamics of optimization.  This EOS condition can be rearranged into a statement about the effective step size:
\begin{align}
    \eta/\sqrt{\nu} = 2/S(w).
\end{align}
That is, at EOS, the effective step size along the central flow is always equal to the value $2/S(w)$.
Notably, the value $2/S(w)$ is the \emph{largest stable step size} for gradient descent at location $w$.  Thus, while \rmsnorm is at EOS, \textbf{the oscillatory dynamics continually adapt the effective step size to the current largest stable step size}, even as this value evolves throughout training. This is the precise sense in which \rmsnorm ``adapts'' its step size to the local loss landscape.

In principle, it would be possible for an optimizer to \emph{manually} compute the sharpness $S(w)$ at each iteration (e.g. by using the power method), and to manually set the step size to $2/S(w)$.  However, computing the sharpness would incur some computational overhead, whereas we have shown that \rmsnorm finds the maximum stable step size of $2/S(w)$ \emph{efficiently}, using no more computation than is already used by gradient descent (namely, one gradient computation per iteration).  This rich behavior is implicit in the algorithm's oscillatory dynamics.

Furthermore, note that even comprehending this behavior requires an appeal to some notion of time-averaging.  The effective step size is usually not \emph{exactly} at $2/S(w)$, but rather is fluctuating around $2/S(w)$.  The important point is that it is $2/S(w)$ on average over time.  The central flow perspective gives a way to reason about this behavior.

\subsubsection{Implicit curvature reduction} \label{sec:rmsnorm:interpreting:regularize}

Understanding the implicit step size strategy employed by \rmsnorm is not sufficient to fully characterize the behavior of the algorithm. To do so, we need to return to the central flow, which additionally accounts for the curvature regularization induced by oscillations.
In general, the \rmsnorm central flow is a joint flow over $(w, \nu)$. However, at EOS, because $\eta/\sqrt{\nu} = 2/S(w) \iff \nu = \tfrac{\eta^2 S(w)^2}{4}$, we can eliminate $\nu$ from the expression for $\frac{dw}{dt}$, and write the central flow in terms of $w$ alone:
\begin{align}
    \frac{dw}{dt} = - \underbrace{\frac{2}{S(w)}}_{\mathclap{\text{effective step size}}} \bigl[ \nabla L(w) + \underbrace{\frac{1}{2} \sigma^2(w;\eta,\beta_2) \nabla S(w)}_{\mathclap{\text{implicit sharpness penalty}}} \bigr]
    \label{eq:rmsnorm_central_flow_one_unstable}
\end{align}
where $\sigma^2(w;\eta,\beta_2)$ is given by \cref{eq:rmsprop_norm_x}. In other words, the time-averaged trajectory of \rmsnorm at EOS is essentially equivalent to that of the following simpler-to-understand algorithm: 

\begin{tcolorbox}[left=2pt,right=2pt,top=3pt,bottom=3pt,colback=white!0,halign=center]
At each iteration, compute the sharpness $S(w)$, and take a gradient step of size $2/S(w)$ on a sharpness-regularized objective, where the strength of the sharpness regularizer is given by \cref{eq:rmsprop_norm_x}.
\end{tcolorbox}

Interestingly, the hyperparameters $\eta,\beta_2$ are not used to determine the effective step size $2/S(w)$. Instead, their only role is to modulate $\sigma^2$, which controls the strength of the implicit sharpness penalty. The effect of the learning rate hyperparameter $\eta$ is to \emph{monotonically increase} $\sigma^2$ --- indeed, the numerator of \cref{eq:rmsprop_norm_x} is increasing in $\eta$ while the denominator is decreasing in $\eta$, which implies the overall expression for $\sigma^2$ is increasing in $\eta$. The simplest case is that of NGD, i.e. when $\beta_2 = 0$, for which \cref{eq:rmsprop_norm_x} reduces to $\sigma^2 \approx \frac{\eta^2}{4}$ (see \Cref{appendix:derivations:rmsprop_norm}). Meanwhile, the effect of the hyperparameter $\beta_2$ is to monotonically interpolate $\sigma^2$ between that of \ngd when $\beta_2 = 0$ and that of \gd when $\beta_2 = 1$.\footnote{We note that which of these is larger is situation dependent, so $\sigma^2$ can be either monotonically increasing or monotonically decreasing in $\beta_2$.  That said, because when $\beta_2 = 0$, $\sigma^2(w;\eta,0) \approx \eta^2/4$ and when $\beta_2 = 1$, $\sigma^2(w;\eta,1)$ is independent of $\eta$, a general rule is that for small learning rates, $\sigma^2$ is monotonically increasing in $\beta_2$, while for large learning rates, $\sigma^2$ is monotonically decreasing in $\beta_2$.}  The interpretations of $\eta,\beta_2$ generalize to the setting of multiple unstable eigenvalues, as detailed in \Cref{appendix:derivations:rmsprop_norm}, \Cref{prop:rmsnorm_multiple_monotonic}.

\begin{figure}[t!]
\centering
\includegraphics[width=\textwidth]{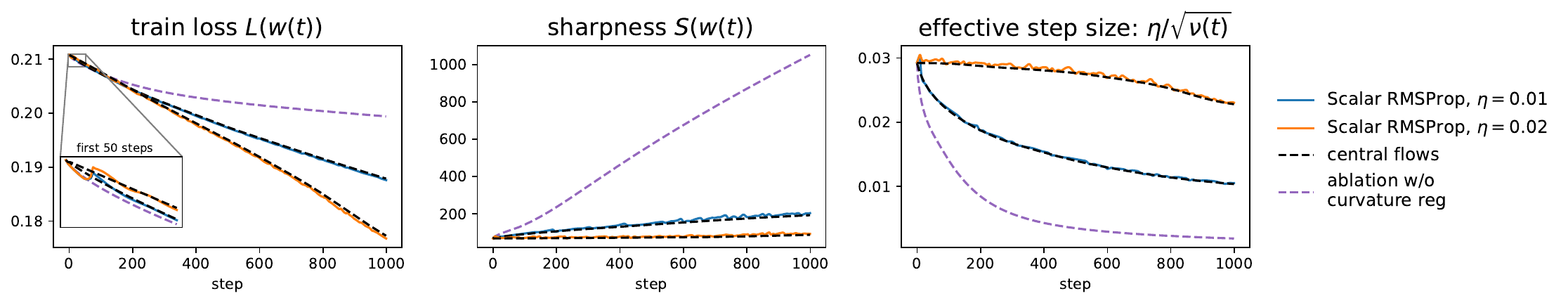}
\caption{\textbf{Implicit curvature regularization accelerates optimization for \protect \rmsnorm}.  Starting from the same initial point, we run \protect \rmsnorm at two different learning rates (blue and orange), alongside the corresponding central flows (black).  We also run an ablated flow $\frac{dw}{dt} = - \frac{2}{S(w)} \nabla L(w)$ which has curvature regularization removed (purple). All three flows use the same step size strategy, and differ only in the strength of implicit curvature regularization.  Initially (see inset), the flows with higher curvature regularization optimize slower; however, over the longer run, they take larger steps and optimize faster.  This figure is in the same setting as \Cref{fig:rmsprop-norm}.\protect\footnotemark}
\label{fig:rmsprop-norm-learning-rate}
\end{figure}

\subsubsection{Acceleration via regularization}\label{sec:rmsnorm:interpreting:acceleration}
To fully grasp the \emph{modus operandi} of \rmsnorm, it is necessary to consider the link between step size adaptation and curvature regularization.
By regularizing sharpness $S(w)$, \rmsnorm is able to steer itself towards regions where the maximal locally stable step size of $2/S(w)$ is larger.
In such regions, \rmsnorm can and does take larger steps.
Thus, \textbf{by regularizing sharpness, \rmsnorm enables itself to take larger steps later in training}. We call this mechanism \emph{acceleration via regularization}.
Our experiments suggest that this mechanism is a critical component of the algorithm's effectiveness. 
In \Cref{fig:rmsprop-norm-learning-rate}, we compare the \rmsnorm central flow to an ablated version which adapts the step size to $2/S(w)$ but does not regularize sharpness.
Over the long term, this ablated flow optimizes slower than the \rmsnorm central flow, because it traverses sharper regions of weight space in which it is forced to take smaller steps. (See \Cref{sec:supplementary-figures}, \Cref{fig:rmsnorm-acc-by-reg} for more settings.)

The mechanism of ``acceleration via regularization'' is also key for understanding the function of the learning rate hyperparameter $\eta$.
We have seen that at EOS, the only direct effect of $\eta$ on the central flow is to modulate the strength of sharpness regularization, with higher $\eta$ inducing stronger sharpness regularization.
Thus, counterintuitively, the \emph{instantaneous} effect of a higher $\eta$ is often to \emph{slow down} optimization.
However, as we illustrate in \Cref{fig:rmsprop-norm-learning-rate,fig:rmsnorm-acc-by-reg}, over longer timescales, higher $\eta$ steers the trajectory into lower-sharpness regions, in which \rmsnorm's effective step size will be larger, thereby tending to \emph{speed up} optimization. Thus, as one would expect of a learning rate hyperparameter, larger $\eta$ can accelerate optimization; however they do so through this \emph{indirect} mechanism.
\footnotetext{In this figure, for \rmsnorm, we show the train loss at the second-order midpoints between iterates (see \Cref{appendix:experimental-details:implementation}).}

\newpage
\section{RMSProp}
\label{sec:rmsprop}
We now study \rmsprop \citep{tieleman2012lecture}, which is equivalent to Adam \citep{kingma2014adam} without momentum.  \rmsprop maintains an EMA $\nu$ of the elementwise squared gradients $\nabla L(w)^{\odot 2}$, and uses \emph{per-coordinate} effective step sizes of $\eta / \sqrt{\nu}$:\footnote{Our analysis can accommodate both bias correction and an $\epsilon$-dampening (dividing by $\sqrt{\nu + \epsilon}$ rather than $\sqrt{\nu}$) which are used by Adam (see \Cref{sec:arbitrary-preconditioned}). However, to simplify exposition, the main text focuses on this simpler version of \rmsprop.}
\begin{align}
    \nu_{t} = \beta_2 \nu_{t-1} + (1 - \beta_2) \nabla L(w_t)^{\odot 2} \qc
    w_{t+1} = w_t - \frac{\eta}{\sqrt{\nu_t}} \odot \nabla L(w_t),\qquad\qquad
\label{eq:rmsprop}
\end{align}
where $\odot$ represents the entrywise product. 
\rmsprop can also be viewed as preconditioned gradient descent $w_{t+1} = w_t - P_t^{-1} \nabla L(w_t)$ with the dynamic preconditioner $P_t := \mathrm{diag}(\sqrt{\nu_t}/\eta)$.\footnote{Folding $\eta$ into the preconditioner is unconventional, but will make the analysis clearer.} 
While Adam employs the same dynamic preconditioner and has achieved widespread success in deep learning, it has remained unclear why this specific preconditioning strategy is so effective \citep{kunstner2019limitations, orabona2020neural, martens2020}.
A common folklore belief is that Adam/RMSProp adapts to the local ``curvature'' (i.e. Hessian).  However, it is a priori unclear how this can be so, since the algorithm uses the (squared) \emph{gradient}, not the Hessian, to update its preconditioner.

In this section, we use the central flows framework to understand the behavior of \rmsprop.
We will show that \rmsprop \emph{does} adapt to the local Hessian after all, but the reason is inextricably tied to its oscillatory dynamics, which have not been previously studied.

We start by describing the dynamics of \rmsprop in \Cref{sec:rmsprop:mechanics}.
We then derive a central flow in \Cref{sec:rmsprop:deriving}.  Finally, in \Cref{sec:rmsprop:interpreting}, we interpret this flow to understand the optimizer's behavior.  In particular:
\begin{itemize}
    \item In \Cref{sec:rmsprop:interpreting:stationary-preconditioner}, we show that \rmsprop's preconditioner is \emph{implicitly} determined by the algorithm's oscillatory dynamics, and we make this preconditioner \emph{explicit} for the first time. Specifically, we show that \rmsprop computes its preconditioner by solving a convex program (eq. \ref{eq:rmsprop_nu_convex_program}) involving the Hessian. This clarifies that \rmsprop is \textbf{implicitly a second-order optimizer}, despite only accessing the loss through first-order gradients.
    
    \item In \Cref{sec:rmsprop:interpreting:stationary-flow}, we show that, like \rmsnorm, the success of \rmsprop relies not only on this preconditioning strategy, but also on an \emph{acceleration via regularization} mechanism whereby implicitly regularizing curvature allows the optimizer to take larger steps later in training.
\end{itemize}

\begin{figure}[b!]
\centering
\includegraphics[width=\textwidth]{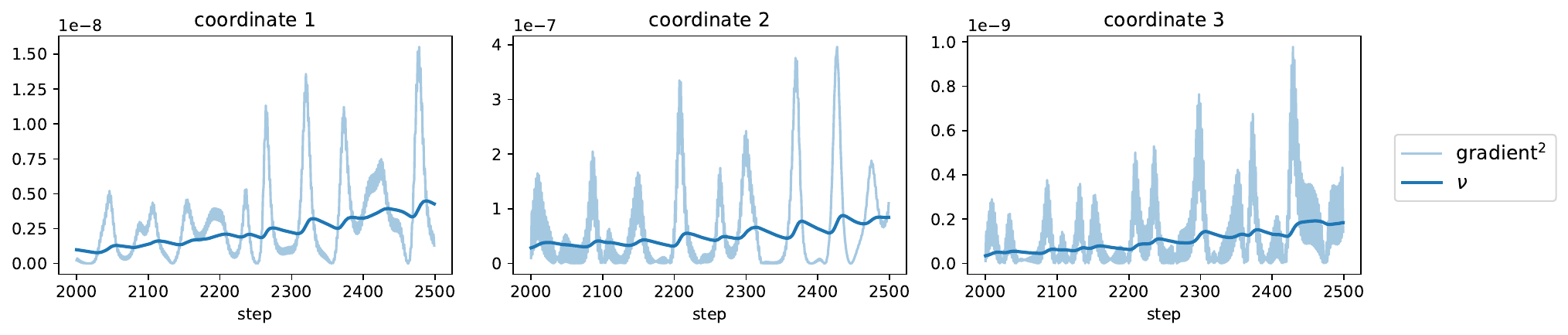}
\caption{\textbf{RMSProp $\nu$ is determined by oscillations.} 
While training a network using \rmsprop, we plot the squared gradient $\nabla L(w_t)^{\odot 2}$ (light blue) and its EMA $\nu_t$ (dark blue) at three coordinates (subpanels).  Due to the EOS oscillations, the squared gradient fluctuates, causing the EMA $\nu_t$ to also fluctuate.  Since this EMA is used to determine the effective step sizes $\eta / \sqrt{\nu_t}$, analyzing these dynamics is necessary for understanding \rmsprop's adaptivity. This network is a ResNet trained on a subset of CIFAR-10 using $\eta = 2$e-5, $\beta_2=0.99$ and MSE loss.}
\label{fig:rmsprop:squared-gradient-fluctuations}
\end{figure}
These basic insights into the functioning of \rmsprop are a prerequisite for a similar understanding of Adam, and may guide us in using these optimizers more effectively and in improving upon them.

\subsection{The Dynamics of RMSProp}\label{sec:rmsprop:mechanics}

To give some intuition into \rmsprop's behavior, \Cref{fig:rmsprop:squared-gradient-fluctuations} plots the dynamics of the squared gradient $\nabla L(w_t)^{\odot 2}$ and its EMA $\nu_t$ at several coordinates over a stretch of training.  Observe that the entries of the squared gradient fluctuate rapidly, causing their EMA to also fluctuate.
Since this EMA $\nu$ directly determines the effective step sizes $\eta / \sqrt{\nu}$, understanding the origin of this behavior is necessary to understand how \rmsprop sets its effective step sizes.

These fluctuations in the gradient arise because \rmsprop is operating in an oscillatory \emph{edge of stability} regime. 
To understand why \rmsprop oscillates, first consider running preconditioned gradient descent $w_{t+1} = w_t - P^{-1} \nabla L(w_t)$ on a quadratic function with Hessian $H$.  The resulting dynamics are controlled by the \emph{effective Hessian} $P^{-1} H$.
Namely, if any eigenvalues of this matrix exceed the critical threshold 2, then preconditioned GD will oscillate with exponentially growing magnitude along the corresponding (right) eigenvectors.\footnote{On a quadratic $\tfrac{1}{2} w^T H w$, this algorithm evolves via: $w_{t+1} = (I - P^{-1} H) w_t \implies w_t = (I - P^{-1} H) w_0$.  If $P^{-1} H$ has any eigenvalues greater than 2, $(I - P^{-1} H)$ has eigenvalues less than $-1$, and the iterates diverge along the corresponding right eigenvectors.} For \rmsprop in deep learning, both the Hessian $H(w_t)$ and the preconditioner $P_t = \diag(\sqrt{\nu_t} / \eta)$ can vary.  However, a local quadratic Taylor approximation suggests that \rmsprop will oscillate if the largest eigenvalue of the \emph{current} effective Hessian $P_t^{-1} H(w_t)$ exceeds the critical threshold 2.\footnote{With this argument, we are also implicitly assuming that the preconditioner evolves sufficiently slowly that its movement can be neglected.} We refer to this quantity as the effective sharpness $\Seff(w_t, \nu_t)$:
\begin{align}
    \Seff(w_t, \nu_t) := \lambda_1( P_t^{-1} H(w_t) ).
    \label{eq:rmsprop:effective-sharpness}
\end{align}
Paralleling the dynamics of gradient descent, \citet{cohen2022adaptive} observed that \rmsprop typically operates in an oscillatory EOS regime that revolves around the effective sharpness \cref{eq:rmsprop:effective-sharpness}.  On the one hand, oscillations ensue whenever the effective sharpness rises above the critical threshold $2$.\footnote{For \protect \rmsprop (and \protect \rmsnorm),  the effective sharpness \cref{eq:rmsprop:effective-sharpness} tends to rise \emph{both} because the curvature tends to rise (progressive sharpening) \emph{and} because the gradient (and hence $\nu$) tends to shrink.  Due to the second effect, \protect \rmsprop often enters EOS sooner, and for a large range of learning rates, than \protect \gd.  Further, \protect \rmsprop enters EOS even on quadratics, whereas \protect \gd does not.}  On the other hand, such oscillations reduce the effective sharpness, both by inducing implicit regularization of curvature (i.e. shrinking $H(w_t)$), and by growing the gradient and hence the preconditioner $P_t$.  The net result is that the effective sharpness equilibrates around the value 2 (see \Cref{fig:rmsprop}), as the optimizer oscillates along the top eigenvectors of the effective Hessian.

\subsection{Deriving the RMSProp Central Flow}\label{sec:rmsprop:deriving}

\begin{figure}[t]
\centering
\includegraphics[width=\textwidth]{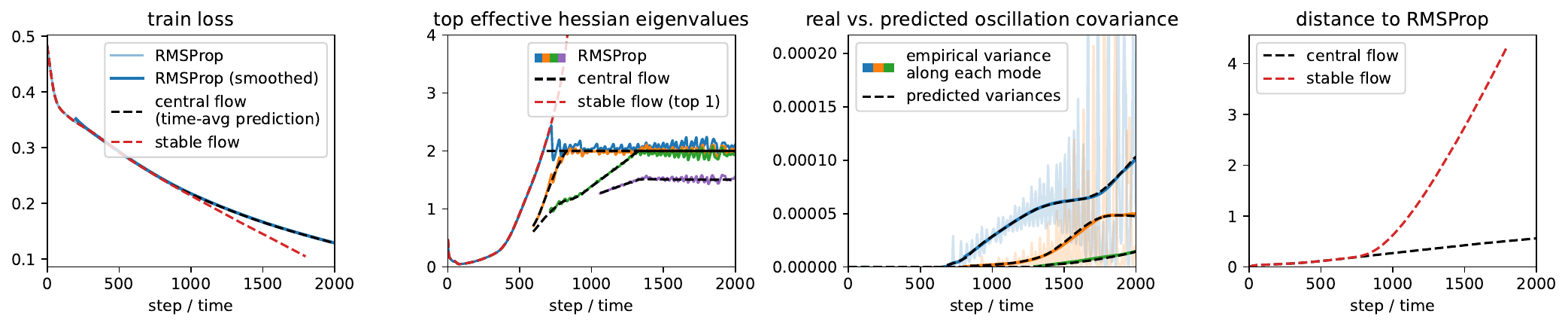}
\caption{\textbf{Central flow for RMSProp.} The \rmsprop central flow (black) accurately models the macroscopic trajectory of \protect\rmsprop even at EOS, whereas the naive stable flow (red) follows a different path. As for gradient descent and \protect\rmsnorm, we are able to predict the covariance $\Sigma(t)$ with which \protect\rmsprop oscillates around the central flow (third panel).  This figure is in the same setting as \Cref{fig:rmsprop:squared-gradient-fluctuations}.}
\label{fig:rmsprop}
\end{figure}

Similar as before, we now derive a central flow $(w(t),\nu(t))$ that jointly models the time-averaged dynamics of $w_t$, $\nu_t$.  We defer the full details to \Cref{appendix:derivations:rmsprop} and sketch the argument here.

If \rmsprop is oscillating around its time-averaged trajectory $\{\wbar_t\}$, so that $w_t = \overline{w}_t + \delta_t$, then the time average of the elementwise squared gradient is approximately:
\begin{align}
    \underbrace{\E[\nabla L(w_t)^{\odot 2}]}_{\substack{\text{time-average of} \\ \text{ squared gradient}}}
    &\approx \underbrace{\nabla L(\wbar_t)^{\odot 2}}_{\substack{\text{squared gradient at} \\ \text{time-averaged iterate}}} + \underbrace{\diag[H(\wbar_t) \E[\delta_t \delta_t^T] H(\wbar_t) ]}_{\substack{\text{contribution from oscillations}}}.
\end{align}
The first term is the squared gradient at the time-averaged iterate; the second term is the contribution to the squared gradient that originates from oscillating with covariance $\E[\delta_t \delta_t^T]$.

If we further assume then these oscillations are contained within the right eigenspace of the effective Hessian $\diag(\eta/\sqrt{\nu_t}) H(w_t)$ that corresponds to the eigenvalue 2, then the rightmost term simplifies as follows:
\begin{align}
    \underbrace{\E[\nabla L(w_t)^{\odot 2}]}_{\substack{\text{time-average of} \\ \text{ squared gradient}}}
    &\approx \underbrace{\nabla L(\wbar_t)^{\odot 2}}_{\substack{\text{squared gradient at} \\ \text{time-averaged iterate}}} + \underbrace{\frac{4\nu}{\eta^2} \odot \diag[\E[\delta_t \delta_t^T]]}_{\substack{\text{contribution from oscillations}}}
    \label{eq:rmsprop:time-average-squared-gradient}
\end{align}
Based on this calculation, and on the time-averaged gradient computed in \cref{eq:gd:expected_gradient_multi}, we make the ansatz that the time-averaged dynamics of $w_t, \nu_t$ follow a central flow $(w(t), \nu(t))$ with the functional form:
\begin{align}
    \begin{split}
    \frac{dw}{dt} = -\frac{\eta}{\sqrt{\nu}} \odot \qty[\underbrace{\nabla L(w) + \tfrac{1}{2} \nabla \langle \Sigma(t), H(w) \rangle}_{\E[\nabla L(w_t)]}], \quad
    \frac{d\nu}{dt} = \frac{1-\beta_2}{\beta_2} \qty[\underbrace{\nabla L(w)^{\odot 2} + \frac{4\nu}{\eta^2} \odot \diag[\Sigma(t)]}_{\E[\nabla L(w_t)^{\odot 2}]} - \nu].
    \end{split}
    \label{eq:rmsprop:ansatz}
\end{align}
To determine $\Sigma(t)$, we impose three conditions on this flow, analogous to those from \Cref{sec:gd:multi}.  As before, it can be shown that there is a unique matrix $\Sigma(t)$ satisfying these three conditions, and this matrix can be characterized as the solution to a semidefinite complementarity problem.  The \rmsprop central flow is defined as \cref{eq:rmsprop:ansatz} with this value of $\Sigma(t)$.
See \Cref{appendix:derivations:rmsprop}, \Cref{def:rmsprop:flow:ode} for a formal statement.

\Cref{fig:rmsprop} illustrates how this central flow can accurately predict the macroscopic trajectory $w(t)$ of \rmsprop, as well as the covariance $\Sigma(t)$ with which \rmsprop is oscillating around that trajectory.\footnote{To match the preconditioned geometry of the optimizer, we assess whether each eigenvalue of $P(\nu(t))^{1/2} \Sigma(t) \, P(\nu(t))^{1/2}$ accurately predicts the $P$-whitened variance of oscillations along the corresponding eigenvector; see \cref{eq:rmsprop-predict-oscillation-variance} in \Cref{appendix:derivations:rmsprop}.}  \Cref{fig:rmsprop:central-flow-predictions} shows how the central flow can accurately predict the time-average of the elementwise squared gradient via \cref{eq:rmsprop:time-average-squared-gradient}, as well as the macroscopic trajectory $\nu(t)$ of the EMA.  \Cref{fig:experiments:rmsprop:loss-curves-mse} and \Cref{fig:experiments:rmsprop:loss-curves-ce} in \Cref{sec:supplementary-figures} show in a variety of deep learning settings that the central flow can accurately predict the \rmsprop loss curve across different learning rates.  \Cref{fig:experiments:rmsprop:beta2} and \Cref{fig:experiments:rmsprop:eps} show that the central flow can accurately predict the \rmsprop trajectory at different values of $\beta_2$ and $\epsilon$, respectively. The full set of raw \rmsprop experiments can be found in \Cref{sec:bulk-experiments-rmsprop}.  

\begin{figure}[t!]
\centering
\includegraphics[width=0.75\textwidth]{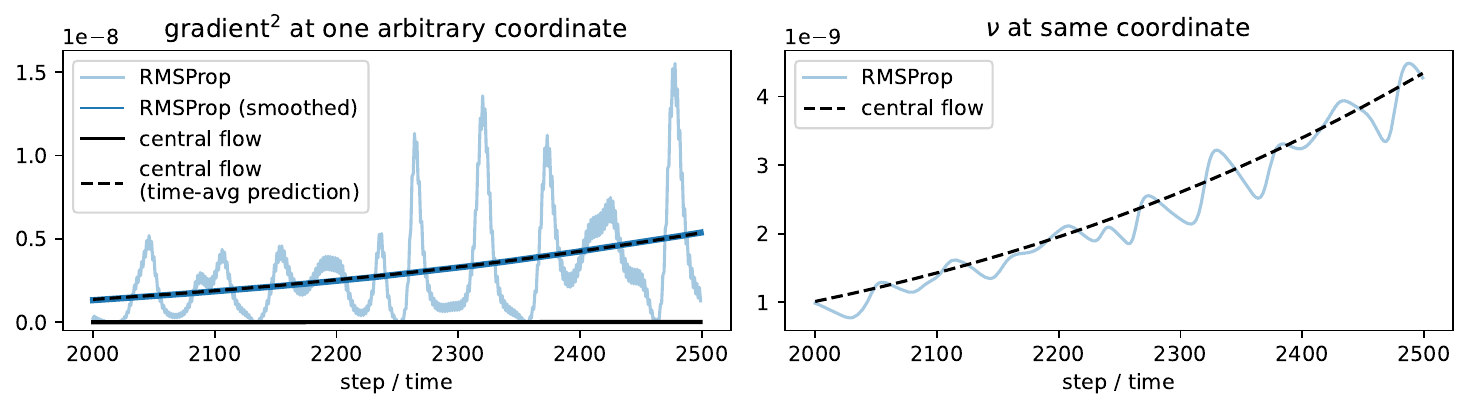}
\caption{\textbf{Central flow can successfully predict both the time-averaged gradient$^2$ (left) and the EMA $\nu$ (right).} 
On the left, we show that although the squared gradient is fluctuating erratically (recall \Cref{fig:rmsprop:squared-gradient-fluctuations}), its time-average can be predicted by the central flow using \cref{eq:rmsprop:time-average-squared-gradient}.  On the right, we show that the central flow's $\nu(t)$ accurately tracks the macroscopic trajectory of the real EMA $\nu_t$. This figure is in the same setting as \Cref{fig:rmsprop:squared-gradient-fluctuations}.}
\label{fig:rmsprop:central-flow-predictions}
\end{figure}

As with gradient descent, we find that the central flow approximation tends to become less accurate as the learning rate $\eta$ grows; see \Cref{sec:experiments} for our general discussion about the accuracy of the central flow.  In addition, we observe that the central flow for \rmsprop tends to be a bit less accurate overall than that for \gd, at least as measured by weight-space distance between the flow and the discrete optimizer.   Finally, we expect the central flow for \rmsprop to break down when $\beta_2$ becomes too close to zero (i.e. the sign GD limit), as then \rmsprop would no longer resemble preconditioned \gd with a slowly-changing preconditioner.

\subsection{Understanding RMSProp via its Central Flow}\label{sec:rmsprop:interpreting}
We now interpret the \rmsprop central flow to understand the behavior of \rmsprop, including how the algorithm sets its effective step sizes $\eta / \sqrt{\nu}$.
Because the dynamics usually transition from stable to EOS early in training, we focus on the EOS regime.

\subsubsection{The stationary preconditioner} \label{sec:rmsprop:interpreting:stationary-preconditioner}

\paragraph{Stationarity of $\nu$} 
Unfortunately, even at the edge of stability, $\nu(t)$ cannot be expressed as a closed-form function of $w(t)$ (as it could for \rmsnorm in \Cref{sec:rmsprop_norm}), and instead remains an independent variable that must be tracked. This reflects the fact that for any $w$, there are potentially many values for $\nu$ that could stabilize optimization, and the actual value used by \rmsprop depends on the history. Nevertheless, we will now see that under the \rmsprop central flow, $\nu$ often implicitly converges to a value that depends on the current $w$ alone.

Intuitively, the \rmsprop central flow \cref{eq:rmsprop:ansatz} involves two simultaneous processes of optimization (the $w$ dynamics) and preconditioner adaptation (the $\nu$ dynamics).
Suppose that the $\nu$ dynamics of preconditioner adaptation occur \emph{fast} relative to the $w$ dynamics of optimization, so that $\nu$ reaches a stationary point w.r.t the current weights $w$.
In \Cref{lem:rmsprop_nu_convex_program_unique} we show that for any $w$, there is in fact a \emph{unique} $\nu$ that satisfies the stationarity condition $\frac{d\nu}{dt} = 0$.  We call this unique $\nu$ the \emph{stationary $\nu$} for the weights $w$, denoted as $\overline{\nu}(w)$.
Empirically, we observe that $\nu(t)$ usually starts to attain its stationary value $\overline{\nu}(w(t))$ at some point during training (after the dynamics have entered EOS), and continues to match $\overline{\nu}(w(t))$ thereafter, even as this value evolves.  Indeed, \Cref{fig:rmsprop:stationary-nu-convergence} illustrates how $\nu(t)$ converges to $\overline{\nu}(w(t))$ both in cosine similarity (left) and coordinate-wise (right).  See \Cref{fig:cosnu-mse,fig:cosnu-ce,fig:nu-mse,fig:nu-ce} for more settings.\footnote{Since the speed of the $\nu$ dynamics in \cref{eq:rmsprop:ansatz} is controlled by the $\beta_2$ hyperparameter, one might suspect that $\nu(t)$ will converge quicker to $\overline{\nu}(w(t))$ when $\beta_2$ is smaller, and we confirm this in \Cref{fig:experiments:rmsprop:beta2-stationarity}.  Nevertheless, we emphasize that the quasistationarity of $\nu$ w.r.t $w$ empirically holds even when $\beta_2$ is relatively large (e.g. 0.99).  In keeping with our attitude throughout this paper, we do not claim to have an explanation.} 

\begin{figure}[t!]
\centering
\includegraphics[width=0.9\textwidth]{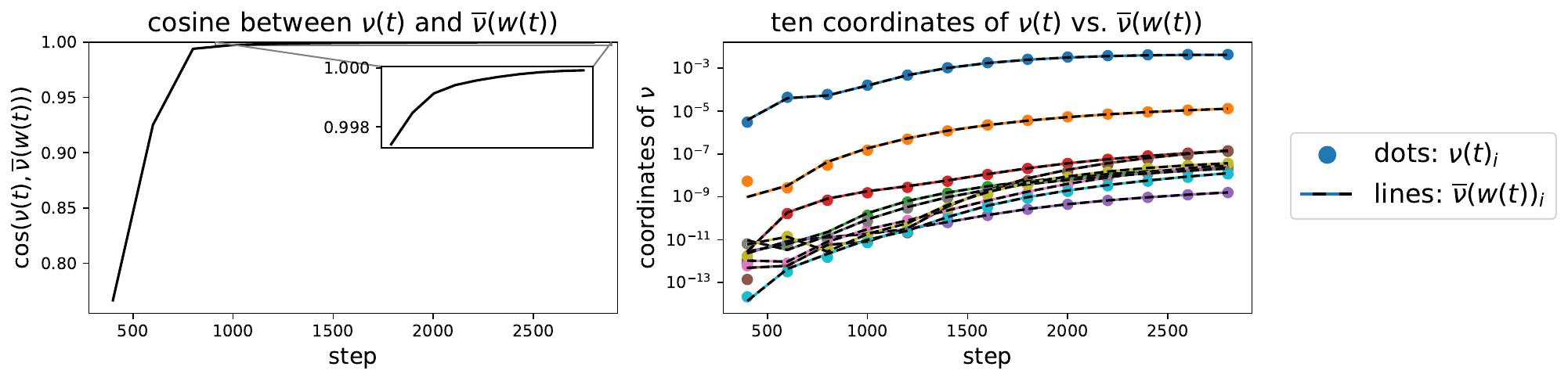}
\caption{\textbf{The EMA $\nu$ converges to its stationary value.} While running the \rmsprop central flow, we compare the actual EMA  $\nu(t)$ to its stationary value $\overline{\nu}(w(t))$ w.r.t the current weights $w(t)$.  On the left, we plot the cosine similarity between $\nu(t)$ and $\overline{\nu}(w(t))$; on the right, we compare ten individual coordinates (colors), spaced uniformly throughout the network.  The plots begin when training enters EOS, just before step 500.  Observe that after a bit of time, the cosine similarity between $\nu(t)$ and $\overline{\nu}(w(t))$ reaches high values (near 1), and the individual coordinates coincide as well. 
 This figure depicts the same setting as \Cref{fig:rmsprop:squared-gradient-fluctuations}; see \Cref{fig:cosnu-mse,fig:cosnu-ce,fig:nu-mse,fig:nu-ce} for more settings.}
\label{fig:rmsprop:stationary-nu-convergence}
\end{figure}
The stationarity of $\nu$ will allow us to reason about \rmsprop's preconditioning strategy with relative ease, i.e. without needing to account for the history of $\nu$.
At any weights $w$, we can view the corresponding \emph{stationary preconditioner} $\overline{P}(w) := \diag(\sqrt{\overline{\nu}(w)}/\eta)$ as ``the RMSProp preconditioner'' that is implicitly used by \rmsprop at weights $w$.
We will now interpret this preconditioner to gain insight into \rmsprop's preconditioning strategy.

\paragraph{Interpreting the stationary preconditioner} In \Cref{prop:stationary_nu}, we show that this stationary preconditioner $\overline{P}(w) := \diag(\sqrt{\overline{\nu}(w)}/\eta)$ is, remarkably, the optimal solution to a convex optimization problem over preconditioners:
\begin{align}
   \overline{P}(w) \quad:=\quad \argmin_{\mathclap{\text{$P$ diagonal, $P{\succeq}0$}}} \qquad \tr(P) + \tfrac{1}{\eta^2} \underbrace{\norm{\nabla L(w)}^2_{P^{-1}}}_{\mathclap{\text{optimization speed}}} \qq{such that} \underbrace{H(w) \preceq 2P}_{\text{local stability}}. \quad  \label{eq:rmsprop_nu_convex_program}
\end{align}
That is, \textbf{\rmsprop implicitly solves the convex program  \cref{eq:rmsprop_nu_convex_program} to compute its preconditioner}.\footnote{Interestingly, this SDP is the dual to the max-cut SDP: $\max_{\Sigma \succeq 0} \ev{\Sigma,H}$ such that $\Sigma_{ii} = 1$ for all $i$. Thus, this preconditioning strategy could be described as solving the max-cut SDP with the Hessian as the weight matrix, and using the resulting dual variable as its preconditioner.}  This is the precise sense in which RMSProp ``adapts'' its preconditioner to the local loss landscape.

We can now understand \rmsprop's preconditioning strategy by interpreting the optimization problem \cref{eq:rmsprop_nu_convex_program}.
The constraint $H(w) \preceq 2P$ is equivalent to $\Seff \le 2$ and hence stipulates that the preconditioner $P$ should keep \rmsprop locally stable.
The first term of the objective, $\tr(P)$,  is the sum of the inverse effective step sizes. If this were the only term in the objective, \rmsprop's preconditioning strategy could be simply summarized as maximizing the \emph{harmonic mean} of the effective step sizes while maintaining local stability --- a sensible preconditioning strategy.  Indeed, consider a variant of \cref{eq:rmsprop_nu_convex_program} with only the first term:
\begin{align}
   \hat{P}(w) := \argmin_{\mathclap{\text{$P$ diagonal, $P{\succeq}0$}}} \qquad \tr(P) \qq{such that} H(w) \preceq 2P. \quad  \label{eq:rmsprop_nu_first_term_only}
\end{align}
\begin{figure}[t!]
\centering
\includegraphics[width=1\textwidth]{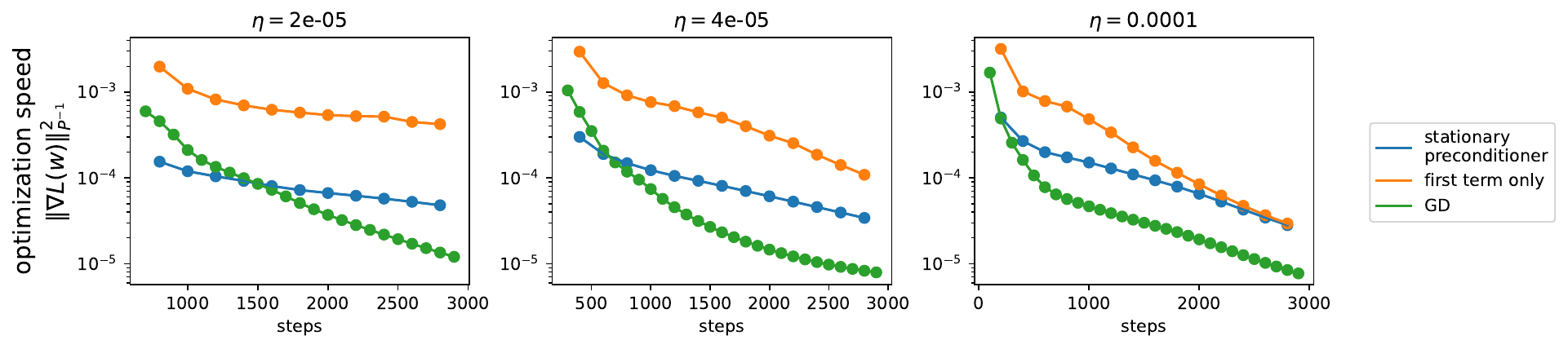}
\caption{\textbf{Optimization speeds for various preconditioners.} Along the \rmsprop central flow for various learning rates (columns), we assess the efficacy of three different preconditioners $P$: the \rmsprop stationary preconditioner \cref{eq:rmsprop_nu_convex_program}, in blue; a variant \cref{eq:rmsprop_nu_first_term_only} with only the first term, in orange; and the preconditioner corresponding to vanilla \gd with the largest locally stable learning rate, i.e. $P^{-1}=(2/S(w)) \, I$, in green.  We assess each preconditioner $P$ by reporting $\| \nabla L(w) \|^2_{P^{-1}} := \nabla L(w)^T P^{-1} \nabla L(w)$, the instantaneous rate of loss decrease under preconditioned gradient flow with preconditioner $P$.  Observe that the ``first term only'' preconditioner (orange) is much better than the vanilla GD (green) preconditioner, and is also better than the actual stationary preconditioner (blue).  The actual stationary preconditioner (blue) is usually better than vanilla GD (green), but not always, especially when $\eta$ is smaller.  See \Cref{fig:stationary-suboptimal-mse,fig:stationary-suboptimal-ce} for more experimental settings.}
\label{fig:rmsprop:optimization-speeds}
\end{figure}
\Cref{fig:rmsprop:optimization-speeds} demonstrates that this preconditioner is a substantial improvement over vanilla gradient descent.  (We describe in \Cref{appendix:derivations:rmsprop:stationary} how we numerically solve \cref{eq:rmsprop_nu_convex_program} and \cref{eq:rmsprop_nu_first_term_only}.)
Interestingly, if the diagonal constraint in \cref{eq:rmsprop_nu_first_term_only} were removed, and if $H(w)$ were PSD, then the optimization problem \cref{eq:rmsprop_nu_first_term_only} would have the closed-form solution $\hat{P}(w) = \frac{1}{2} H(w)$.  That is, the preconditioner $P$ would be a scaling of the Hessian, and preconditioned gradient descent would move in the same direction as Newton's method.\footnote{Even if $H(w)$ were not PSD, a similar point would hold: the optimization problem \cref{eq:rmsprop_nu_first_term_only} would have the closed-form solution $\hat{P} = \frac{1}{2} \Pi_{\mathbb{S}_+} H(w)$, where $\Pi_{\mathbb{S}_+}$ denotes projection onto the cone of positive semidefinite matrices.}

However, matters are complicated by the presence of the second term in the \cref{eq:rmsprop_nu_convex_program} objective.
The quantity $\norm{\nabla L(w)}^2_{P^{-1}}$ is the instantaneous rate of loss decrease under preconditioned gradient flow with preconditioner $P$. \emph{Minimizing} this term necessarily acts to \emph{slow down} optimization.\footnote{That is, for any $w$, the optimization speed $\norm{\nabla L(w)}^2_{P^{-1}}$ must necessarily be smaller (worse) for \cref{eq:rmsprop_nu_convex_program} than for \cref{eq:rmsprop_nu_first_term_only}.} Indeed, \Cref{fig:rmsprop:optimization-speeds} shows that the stationary preconditioner \cref{eq:rmsprop_nu_convex_program} underperforms the variant \cref{eq:rmsprop_nu_first_term_only} with only the first term.

Since the second term in \cref{eq:rmsprop_nu_convex_program} is proportional to $\tfrac{1}{\eta^2}$, its influence diminishes as the learning rate hyperparameter $\eta$ grows.  Indeed, it can be seen in \Cref{fig:rmsprop:optimization-speeds} that the performance of the stationary preconditioner tends closer to that of \cref{eq:rmsprop_nu_first_term_only} as the learning rate hyperparameter $\eta$ is made larger.  In the limit of large $\eta$, the second term vanishes entirely, and the stationary preconditioner reduces completely to \cref{eq:rmsprop_nu_first_term_only}. Interestingly, in this limit, the stationary preconditioner ceases to depend on $\eta$: for example, doubling $\eta$ will cause $\overline{\nu}$ to quadruple in scale (due to larger oscillations), while keeping the effective step sizes $\eta / \sqrt{\overline{\nu}}$ unchanged.  This parallels the situation for \rmsnorm in \Cref{sec:rmsprop_norm}, where the effective step size at EOS was $2/S(w)$, independent of $\eta$.

\paragraph{The stationary flow} 
\footnotetext{In this figure, for \rmsprop, we show the train loss at the second-order midpoints between iterates (see \Cref{appendix:experimental-details:implementation}).}
Substituting $\overline{P}$ into the central flow, we can obtain a \emph{stationary flow} over $w$ alone:
\begin{align}
     \frac{dw}{dt} = - \underbrace{\overline{P}(w)^{-1}}_{\mathclap{\substack{\text{stationary}\\\text{ preconditioner}}}} \qty\Big[\; \nabla L(w) +  \underbrace{ \tfrac{1}{2} \nabla_w \ev{\Sigma, H(w)} }_{\text{implicit curvature penalty}}\;],
     \label{eq:rmsprop-stationary-flow}
\end{align}
where $\Sigma = \Sigma(w;\eta;\beta_2)$ is defined as the solution to a certain semidefinite complementarity problem (\Cref{appendix:derivations:rmsprop:stationary}, \Cref{def:rmsprop:flow:stationary}). This model assumes that the $\nu$ dynamics (preconditioner adaptation) happen \emph{infinitely fast} relative to the $w$ dynamics (optimization), so that we can treat the preconditioner $P$ as always being fixed at its current stationary value $\overline{P}(w)$ (eq. \ref{eq:rmsprop_nu_convex_program}).
The appeal of this characterization is that it eliminates $\nu$ from the picture entirely, and expresses the time-averaged dynamics of \rmsprop as a closed system in $w$ alone.
Namely, it suggests that the time-averaged trajectory of \rmsprop is equivalent to that of the following simpler-to-understand algorithm:

\begin{tcolorbox}[left=2pt,right=2pt,top=3pt,bottom=3pt,colback=white!0,halign=center]
At each iteration, compute the preconditioner $\overline{P}(w)$ using \cref{eq:rmsprop_nu_convex_program} and then take a preconditioned gradient step using this preconditioner on a curvature-penalized objective.
\end{tcolorbox}

Empirically, we find that the stationary flow  \cref{eq:rmsprop-stationary-flow} is often a reasonable model for the \rmsprop trajectory.  For example, \Cref{fig:optspeed-mse,fig:optspeed-ce} show that the stationary flow can accurately predict the instantaneous rate of loss decrease at various points along the central flow trajectory, even though it only has access to $w(t)$ and not $\nu(t)$.  Meanwhile, \Cref{fig:stationary-flow-cnn-mse,fig:stationary-flow-transformer-mse} show that the stationary flow can tolerably predict the trajectory of \rmsprop over moderate timescales, although we note that its accuracy is not as high as the full central flow.\footnote{As one might expect, the stationary flow tends to only be an accurate model for \rmsprop once $\nu$ has reached stationarity.}

\begin{figure}[t]
\centering
\includegraphics[width=\textwidth]{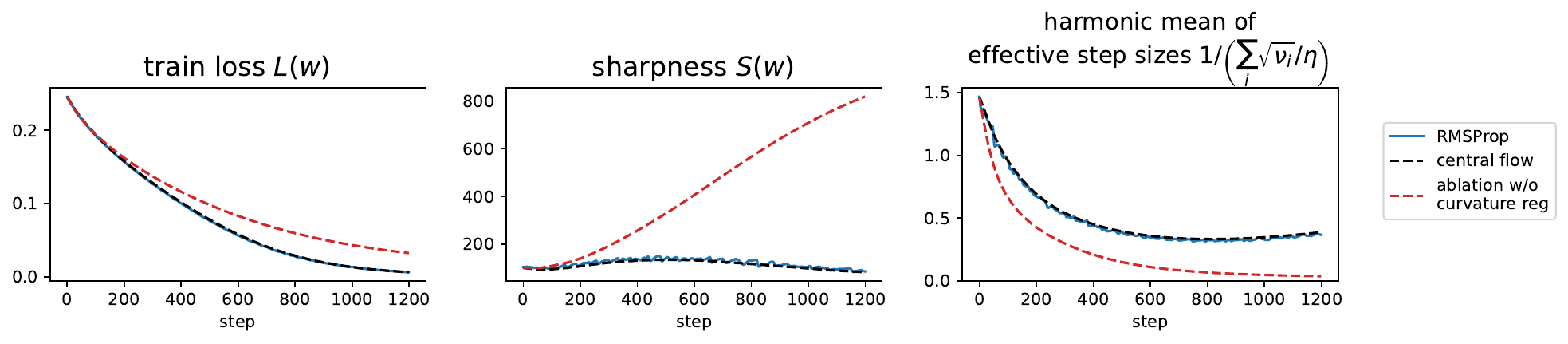}
\caption{\textbf{Implicit curvature regularization accelerates optimization for RMSProp}.  Starting from the same initial point, we compare RMSProp (blue) and its central flow (black) to an ablated flow where the implicit curvature regularization is disabled (red).  Relative to this ablated flow, the central flow takes a lower-curvature trajectory (middle), in which it takes larger steps (right) and optimizes faster (left).  The setting is the same as \Cref{fig:rmsprop:squared-gradient-fluctuations}.\protect\footnotemark} 
\label{fig:rmsprop:acc-via-reg}
\end{figure}
\subsubsection{Acceleration via regularization} 
\label{sec:rmsprop:interpreting:stationary-flow}
As with \rmsnorm, we find that \rmsprop's implicit curvature regularization enables it to optimize faster.  In \Cref{fig:rmsprop:acc-via-reg}, we show that when the curvature regularization is disabled, the \rmsprop central flow navigates into increasingly sharp regions, where it takes smaller steps, and optimizes slower (see \Cref{fig:rmsprop-acc-by-reg} for more settings).\footnote{To run this ablated flow, we manually set $\nabla H(w) = 0$ both in the expression for $\beta$ and in the expression for $\tfrac{dw}{dt}$ (see \Cref{appendix:derivations:rmsprop}).}

Establishing this claim theoretically is more difficult for \rmsprop than \rmsnorm.\footnote{Partly, the difficulty of analysis is due to the independent $\nu$ dynamics.  However, this analysis is also not easy under the stationary flow, because the second term in \cref{eq:rmsprop_nu_convex_program} causes the effective step sizes to depend not just on the current Hessian but also on the current gradient.}  However, in the limit of large $\eta$ and small $\beta_2$, it can be argued (\Cref{appendix:derivations:rmsprop:stationary}) that the stationary flow \cref{eq:rmsprop-stationary-flow} reduces to:
\begin{align}
    \frac{dw}{dt} &= - \hat{P}(w)^{-1} \qty[ \nabla L(w) + \frac{\eta^2}{4} \nabla \tr \hat{P}(w)],
\end{align}
where $\hat{P}(w)$ was defined in \cref{eq:rmsprop_nu_first_term_only}.
This model says that \rmsprop implicitly picks the diagonal preconditioner  with minimal trace (equivalently, the preconditioner $P$ where the effective learning rates $P^{-1}$ have maximal harmonic mean), and also implicitly moves in a direction in which the trace of this preconditioner will become even smaller.
The strength of the latter effect is controlled by $\eta$, and in fact, this is the only means by which the learning rate hyperparameter $\eta$ affects the trajectory, since the preconditioner $\hat{P}(w)$ is independent of $\eta$.
Thus, as with \rmsnorm, larger learning rates translate to larger steps, but only via this indirect mechanism.

\newpage
\section{Experiments}
\label{sec:experiments}

The goal of our experiments is to establish that each central flow accurately approximates the trajectory of its corresponding optimizer in a variety of deep learning settings, and to understand the circumstances under which this approximation breaks down.
Because it is computationally costly to discretize central flows, we experiment on small-scale networks and datasets.
Note that there is no evidence that scale itself fundamentally affects the dynamics of optimization in deep learning; for example, EOS dynamics have been observed at both smaller (e.g. CIFAR-10) and larger (e.g. ImageNet or WMT) scales, without noticeable differences.  Therefore, we expect that the central flow approximation would similarly hold true at larger scales, if such experiments were computationally feasible.

We emphasize that the central flow is a theoretical tool for understanding optimizer behavior, not a practical optimization method.
In practice, maintaining an exponential moving average of the iterates  \citep[e.g.,][]{morales2024exponential} is likely a computational feasible way to estimate the optimizer's time-averaged trajectory.

\paragraph{Architectures} We experiment on a diverse set of six architectures: a convolutional neural network (CNN), a ResNet \citep{he2015deep}, a Vision Transformer (ViT) \citep{dosovitskiy2021image}, an LSTM \citep{hochreiter1997long}, a (sequence) Transformer \citep{vaswani2017attention}, and a Mamba sequence model \citep{gu2024mamba}. 
 Architectural details can be found in \Cref{appendix:experimental-details:architectures}.

\paragraph{Datasets}  We test the vision architectures (CNN, ResNet, ViT) on a subset of CIFAR-10 \citep{krizhevsky2009learning}, and the sequence architectures (LSTM, Transformer, Mamba) on a synthetic sorting task \citep{karpathy2020mingpt}.
Further details on these datasets can be found in \Cref{appendix:experimental-details:datasets}.
For each architecture and each dataset, we test both cross-entropy loss and MSE loss.  As discussed below, the central flow tends to be somewhat more accurate with MSE loss.

\paragraph{Implementation}
Discretizing the central flows is somewhat nontrivial, as the flows are non-smooth at points where there is a change in the number of unstable eigenvalues (e.g. going from 0 to 1).
We describe our solution in \Cref{appendix:derivations:gd:discretizing} for \gd and in \Cref{appendix:derivations:arbitrary-preconditioned:discretizing} for a generic (potentially) adaptive optimizer.
The time complexity of each discretization step scales quadratically with the number of eigenvalues that are at the edge of stability.  Most of the computational cost arises from the need to continually re-estimate the top eigenvectors and eigenvalues of the (effective) Hessian, and to compute the necessary third derivatives (gradients of these eigenvalues).   Full implementation details can be found in \Cref{appendix:experimental-details:implementation}.

Our code can be found at:
\url{https://github.com/centralflows/centralflows}.

\subsection{Experimental Results}
To assess the accuracy of the central flow approximation, we run both the discrete optimizer and the central flow simultaneously, starting from the same initialization.
As a baseline, we also run the corresponding stable flow (e.g. for gradient descent, the gradient flow), which we expect to poorly approximate the discrete optimizer when the latter is at the edge of stability.

Our full experimental results, which can be found in \Cref{sec:bulk-experiments}, make clear that the central flow can accurately approximate the long-term optimization trajectory in a variety of deep learning settings.  We find that the weight-space distance between the discrete optimizer and the central flow generally stays small over time, and is much smaller than the distance between the discrete optimizer and the stable flow baseline.  Meanwhile, the network's predictions under the central flow generally match those of the discrete optimizer, whereas the stable flow takes a different path through function space.  The central flow can also accurately predict the time-averaged train loss curve and squared gradient norm curve, as well as the covariance of the discrete optimizer's oscillations around the central flow.

That said, the central flow approximation can break down in certain circumstances, which we now describe.
An interesting direction for future work would be to rigorously characterize the conditions under which the central flow does or does not approximate the real optimizer trajectory.

\paragraph{Sufficiently large learning rates} For all three optimizers studied in this paper, we reliably observe that as the learning rate hyperparameter is made increasingly large, the real optimization trajectory tends to deviate more from the central flow, as illustrated in \Cref{fig:flow-accuracy-lr}.   (As an extreme example, even if the optimizer is initialized stably, very large learning rates sometimes cause the real optimizer to explosively diverge in the middle of training, whereas this never happens under the central flow.)  We do not know whether some corrected version of the central flow would be more successful at capturing the real optimization trajectory in these scenarios, or whether the real trajectory is simply too chaotic to be captured by any flow.

\begin{figure}[t]
\centering
\includegraphics[width=\textwidth]{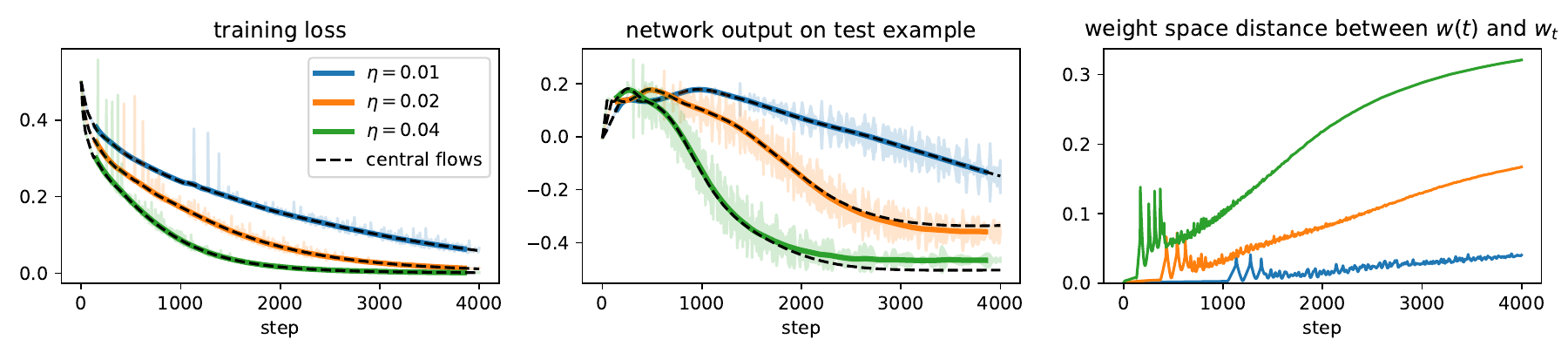}
\caption{\textbf{Central flow approximation is less accurate at larger learning rates}.  We run both gradient descent and its central flow at three learning rates (colors).  The larger the learning rate, the faster the growth in the accumulated approximation error (right).  Indeed, at larger learning rates, the network's output on an arbitrary test example can be visually seen to be slightly different between the central flow and gradient descent (middle).  Nevertheless, the central flow approximation is still accurate enough here to accurately capture the train loss curves (left). \textit{Details}: a CNN is trained on CIFAR-10 using MSE loss.}
\label{fig:flow-accuracy-lr}
\end{figure}

\paragraph{Higher-order terms}
Sometimes, the local curvature is not well-modeled by the cubic Taylor approximation, as is assumed by our theory.
This leads the central flow to mispredict $\Sigma(t)$, causing error to accumulate over the long run.
We elaborate on this failure mode in \Cref{sec:higher-order}.

\paragraph{Smoothness of architecture} The smoothness of the architecture seems to affect the accuracy of the central flow approximation; non-smooth components such as ReLU or max pooling often cause the quality of the approximation to break down.  For example, in \Cref{fig:beta-gelu-distances}, we show that as a network's activation function is interpolated from GeLU (smooth) to ReLU (non-smooth), the accuracy of the central flow approximation degrades, both in weight space and function space.  Note that it is not clear how best to precisely quantify ``smoothness'' in this context.

\paragraph{Large spikes} When the EOS dynamics lead to extremely large spikes (e.g. in the gradient norm), we have found such spikes can cause the real trajectory to deviate from the central flow, as illustrated in \Cref{fig:spikes-distance}. This may be related to the large learning rate issue described above.

\paragraph{Interactions with loss criterion}
We have empirically found that the higher-order terms issue and the large spike issue are more common with cross-entropy loss than with mean squared error loss (although we do not have a satisfactory explanation for these observations).  Consequently, the central flow approximation is often more accurate under the MSE loss than the cross-entropy loss.

Overall, despite these limitations, we argue that the central flow describes the behavior of the corresponding optimizer ``to a first approximation.''
Even in cases where the central flow is not a perfect quantitative match to the discrete trajectory, it may still capture the important qualitative trends.

\newpage
\section{Discussion}
\label{sec:discussion}

\subsection{Modeling decisions}

\paragraph{Deterministic setting}
Our analysis is restricted to the simple setting of deterministic (i.e. full-batch) training, whereas practical deep learning generally involves minibatch training. We study the full-batch setting because, as the simplest special case, understanding full-batch training is a necessary prerequisite for understanding minibatch training. However, we also believe that understanding full-batch training might suffice for some practical purposes, such as designing optimizers. For example, \citet{kunstner2024heavy} showed that the advantage of adaptive methods over SGD grows larger with larger batch sizes, suggesting that the relevant algorithmic principles can be best understood in the deterministic setting.

An interesting direction for future research is to try to extend our central flows methodology to the stochastic setting.
Like deterministic optimizers, stochastic optimizers are known to implicitly regularize the curvature along their trajectories, and in fact this effect is \emph{stronger} in the stochastic setting \citep{keskar2017largebatch,jastrzebski2020the,jastrzebski2021catastrophic,lee2023a,andreyev2024edge}.
However, extending the central flows methodology to the stochastic setting may be nontrivial; due to the randomness, it is not clear whether there exists a deterministic differential equation around which SGD oscillates.
An interesting question is whether there exists a differential equation that can predict derived metrics such as network predictions or training loss curves, even if it cannot model the weight-space trajectory of SGD. Finally, while our analysis in this paper sheds light on adaptive optimizers in the deterministic setting, these optimizers could exhibit substantially different behavior in the stochastic setting.

\paragraph{Black-box model of the loss}
Our analysis treats the loss function as a black box, and never uses that the optimization problem at hand involves training a neural network.
The advantage of this approach is its generality: we expect our analysis to apply to generic deep learning architectures and learning problems, including those that do not yet exist. The disadvantage, however, is that the predictions made by our theory are at the abstraction level of the \emph{loss landscape}, and would need to be further translated in order to make concrete claims about the network architecture or learning problem. For example, our theory tells us that the learning rate hyperparameter modulates the strength of an implicit sharpness penalty, but does not tell us how this sharpness penalty affects learning. Nor does our theory shed light on how different layers of the neural network are mechanistically implicated in progressive sharpening or sharpness reduction.

On the one hand, the loss landscape level of abstraction is in some sense ``natural'' --- the overall path that optimizers follow really does intrinsically depend on the (effective) sharpness. But on the other hand, understanding many important aspects of optimization in deep learning will likely require cracking open the black box a bit more.

\subsection{Takeaways from our analysis}

\paragraph{The unreasonable effectiveness of time-averaging}
Prior works on EOS show that it is challenging to analyze the oscillatory EOS dynamics in fine-grained detail. Our work shows that, perhaps surprisingly, simple heuristics allow us to analyze the \emph{time-averaged} trajectory with excellent numerical accuracy. Interestingly, the success of this time-averaging approach seems to imply that the oscillations only affect the macroscopic trajectory in an ergodic sense, i.e. via their \emph{covariance} rather than via their fine-grained details. An promising direction for future work is to identify realistic conditions under which our heuristic time-averaging arguments can be made rigorous.  

\paragraph{Necessity of third-order Taylor expansions}
While optimization theory generally relies on second-order Taylor expansions of the loss, \citet{damian2023selfstabilization} showed that a \emph{third-order} Taylor expansion is necessary for understanding the convergence of gradient descent; such a Taylor expansion reveals that oscillations implicitly trigger curvature reduction, a form of negative feedback which stabilizes optimization.
In this work, we have shown that a third-order Taylor expansion is similarly necessary for understanding the \emph{acceleration via mechanism} which underlies the success of adaptive optimizers.
Thus, our work further underscores the necessity of a third-order Taylor expansion when analyzing optimization in deep learning.

\paragraph{Oscillatory first-order methods are implicitly second-order methods}
Over the last decade, optimizers that explicitly use Hessian information have failed to outperform first-order adaptive optimizers which employ only gradient information.  Our work demystifies this observation. We have shown that that when first-order optimizers oscillate, they implicitly leverage second order information. Thus, even though \rmsprop is a first-order optimizer, it implicitly employs a second-order preconditioning strategy, detailed in \Cref{sec:rmsprop:interpreting:stationary-preconditioner}.
Further, this preconditioning strategy is efficient, requiring no more gradient queries than \gd does. 
An exciting direction for future work is to \emph{intentionally} design first-order adaptive methods with such implicit preconditioners in mind.

\paragraph{Adapting to curvature is not enough}
Traditional optimization theory views the curvature of the loss as a pre-existing feature of the optimization problem, and views the job of an optimizer as \emph{adapting} to this pre-existing curvature.  We have shown that the adaptive optimizers that we study do not merely passively adapt to the curvature; they also actively \emph{shape} the curvature along their trajectory, by steering away from high-curvature regions where they would need to take small steps.
Further, we have shown that this effect  is crucial for their optimization efficacy.  Thus, our work suggests that \emph{acceleration via regularization} is a vital design principle for adaptive optimizers.

\section{Limitations}

Before concluding, we review the limitations of our approach.
First, our analysis is currently nonrigorous and is ultimately supported by experiments rather than mathematical proof.  Second, our approach is currently restricted to the setting of deterministic (i.e. full-batch) training and does not yet apply to stochastic training or to momentum.  Third, it is quite computationally expensive to discretize the central flows, and doing so is only feasible for small networks on small datasets.  Fourth, the central flows tend to degrade in accuracy as the learning rate hyperparameter is made increasingly large, and are generally more accurate for MSE loss than cross-entropy loss.  Fifth, the central flows may not work well on networks with ReLU or other non-smooth architectural components.  Despite these limitations, we believe that central flows are the best available tool for reasoning about the dynamics of optimization in deep learning.

\section{Conclusion}

In this paper, we have developed a methodology for analyzing deep learning optimizers.
To analyze an optimization algorithm, we derive a \emph{central flow} which models the optimizer's time-averaged trajectory, rendering explicit what was previously implicit in the optimizer's oscillatory dynamics.
We have empirically shown that these central flows can accurately predict long-term optimization trajectories of neural networks, and by interpreting these flows we have obtained new insights about optimizers' behavior.

These advances are made possible by the fact that we adopt different goals from most works in optimization. Rather than try to characterize global convergence rates, we set ourselves the more modest goal of characterizing the \emph{local} optimization dynamics throughout training.
The local dynamics are important, they are more interesting than may have been assumed (even vanilla gradient descent gives rise to rich, complex dynamics), and they are empirically consistent across different deep learning settings, which suggests that a general theory is feasible.
We believe that similar analyses can be fruitfully conducted for other optimizers, and we hope to inspire work in that direction.

\newpage
\section*{Acknowledgements}
AD acknowledges support from an NSF Graduate Research Fellowship and a Jane Street Graduate Research Fellowship. JDL acknowledges support of Open Philanthropy, NSF IIS 2107304,  NSF CCF 2212262, NSF CAREER Award 2144994, and NSF CCF 2019844.
AT acknowledges support from National Science Foundation grants IIS1705121, IIS1838017, IIS2046613, IIS2112471, and funding from Meta, Morgan Stanley, Amazon, Google, and Scribe. JC would like to thank Jim and Marilyn Simons for their support of basic research via the Flatiron Institute. Any opinions, findings and conclusions or recommendations expressed in this material are those of the author(s) and do not necessarily reflect the views of any of these funding agencies.

The authors are grateful for feedback from Nikhil Ghosh, Yiding Jiang, Sam Sokota, and Zhili Feng.

\newpage
\bibliography{ref}

\begin{thebibliography}{103}
\providecommand{\natexlab}[1]{#1}
\providecommand{\url}[1]{\texttt{#1}}
\expandafter\ifx\csname urlstyle\endcsname\relax
  \providecommand{\doi}[1]{doi: #1}\else
  \providecommand{\doi}{doi: \begingroup \urlstyle{rm}\Url}\fi

\bibitem[Abadi et~al.(2015)Abadi, Agarwal, Barham, Brevdo, Chen, Citro, Corrado, Davis, Dean, Devin, Ghemawat, Goodfellow, Harp, Irving, Isard, Jia, Jozefowicz, Kaiser, Kudlur, Levenberg, Man\'{e}, Monga, Moore, Murray, Olah, Schuster, Shlens, Steiner, Sutskever, Talwar, Tucker, Vanhoucke, Vasudevan, Vi\'{e}gas, Vinyals, Warden, Wattenberg, Wicke, Yu, and Zheng]{tensorflow2015}
M.~Abadi, A.~Agarwal, P.~Barham, E.~Brevdo, Z.~Chen, C.~Citro, G.~S. Corrado, A.~Davis, J.~Dean, M.~Devin, S.~Ghemawat, I.~Goodfellow, A.~Harp, G.~Irving, M.~Isard, Y.~Jia, R.~Jozefowicz, L.~Kaiser, M.~Kudlur, J.~Levenberg, D.~Man\'{e}, R.~Monga, S.~Moore, D.~Murray, C.~Olah, M.~Schuster, J.~Shlens, B.~Steiner, I.~Sutskever, K.~Talwar, P.~Tucker, V.~Vanhoucke, V.~Vasudevan, F.~Vi\'{e}gas, O.~Vinyals, P.~Warden, M.~Wattenberg, M.~Wicke, Y.~Yu, and X.~Zheng.
\newblock {TensorFlow}: Large-scale machine learning on heterogeneous systems, 2015.
\newblock URL \url{http://tensorflow.org/}.
\newblock Software available from tensorflow.org.

\bibitem[Agarwala et~al.(2023)Agarwala, Pedregosa, and Pennington]{agarwala2022secondorder}
A.~Agarwala, F.~Pedregosa, and J.~Pennington.
\newblock Second-order regression models exhibit progressive sharpening to the edge of stability.
\newblock In \emph{Proceedings of the 40th International Conference on Machine Learning}, ICML'23, 2023.

\bibitem[Ahn et~al.(2024)Ahn, Bubeck, Chewi, Lee, Suarez, and Zhang]{ahn2024learning}
K.~Ahn, S.~Bubeck, S.~Chewi, Y.~T. Lee, F.~Suarez, and Y.~Zhang.
\newblock Learning threshold neurons via edge of stability.
\newblock \emph{Advances in Neural Information Processing Systems}, 36, 2024.

\bibitem[Andreyev and Beneventano(2024)]{andreyev2024edge}
A.~Andreyev and P.~Beneventano.
\newblock Edge of stochastic stability: Revisiting the edge of stability for sgd.
\newblock \emph{arXiv preprint arXiv:2412.20553}, 2024.

\bibitem[Arora et~al.(2022)Arora, Li, and Panigrahi]{arora2022understanding}
S.~Arora, Z.~Li, and A.~Panigrahi.
\newblock Understanding gradient descent on the edge of stability in deep learning.
\newblock In \emph{International Conference on Machine Learning}, pages 948--1024. PMLR, 2022.

\bibitem[Bai et~al.(2025)Bai, Zhou, Zhao, Li, Li, Xiong, Yang, Zhang, and Xu]{bai2025adaptive}
Z.~Bai, Z.~Zhou, J.~Zhao, X.~Li, Z.~Li, F.~Xiong, H.~Yang, Y.~Zhang, and Z.-Q.~J. Xu.
\newblock Adaptive preconditioners trigger loss spikes in adam.
\newblock \emph{arXiv preprint arXiv:2506.04805}, 2025.

\bibitem[Barrett and Dherin(2021)]{barrett2021implicit}
D.~Barrett and B.~Dherin.
\newblock Implicit gradient regularization.
\newblock In \emph{International Conference on Learning Representations}, 2021.
\newblock URL \url{https://openreview.net/forum?id=3q5IqUrkcF}.

\bibitem[Biewald(2020)]{wandb}
L.~Biewald.
\newblock Experiment tracking with weights and biases, 2020.
\newblock URL \url{https://www.wandb.com/}.
\newblock Software available from wandb.com.

\bibitem[Blanc et~al.(2019)Blanc, Gupta, Valiant, and Valiant]{blanc2020implicit}
G.~Blanc, N.~Gupta, G.~Valiant, and P.~Valiant.
\newblock Implicit regularization for deep neural networks driven by an ornstein-uhlenbeck like process.
\newblock In \emph{Annual Conference Computational Learning Theory}, 2019.
\newblock URL \url{https://api.semanticscholar.org/CorpusID:125944013}.

\bibitem[Bradbury et~al.(2018)Bradbury, Frostig, Hawkins, Johnson, Leary, Maclaurin, Necula, Paszke, Vander{P}las, Wanderman-{M}ilne, and Zhang]{jax2018github}
J.~Bradbury, R.~Frostig, P.~Hawkins, M.~J. Johnson, C.~Leary, D.~Maclaurin, G.~Necula, A.~Paszke, J.~Vander{P}las, S.~Wanderman-{M}ilne, and Q.~Zhang.
\newblock {JAX}: composable transformations of {P}ython+{N}um{P}y programs, 2018.
\newblock URL \url{http://github.com/jax-ml/jax}.

\bibitem[Burer and Monteiro(2005)]{burer2005local}
S.~Burer and R.~D. Monteiro.
\newblock Local minima and convergence in low-rank semidefinite programming.
\newblock \emph{Mathematical programming}, 103\penalty0 (3):\penalty0 427--444, 2005.

\bibitem[Cattaneo et~al.(2024)Cattaneo, Klusowski, and Shigida]{cattaneo2023implicit}
M.~D. Cattaneo, J.~M. Klusowski, and B.~Shigida.
\newblock On the implicit bias of {A}dam.
\newblock In \emph{Proceedings of the 41st International Conference on Machine Learning}, 2024.

\bibitem[Chen et~al.(2022)Chen, Shen, Zou, and Liu]{chen2022towards}
C.~Chen, L.~Shen, F.~Zou, and W.~Liu.
\newblock Towards practical adam: Non-convexity, convergence theory, and mini-batch acceleration.
\newblock \emph{Journal of Machine Learning Research}, 23\penalty0 (229):\penalty0 1--47, 2022.

\bibitem[Chen and Bruna(2023)]{chen2023edge}
L.~Chen and J.~Bruna.
\newblock Beyond the edge of stability via two-step gradient updates.
\newblock In \emph{Proceedings of the 40th International Conference on Machine Learning}, ICML'23. JMLR.org, 2023.

\bibitem[Chen et~al.(2019{\natexlab{a}})Chen, Liu, Sun, and Hong]{chen2018convergence}
X.~Chen, S.~Liu, R.~Sun, and M.~Hong.
\newblock On the convergence of a class of adam-type algorithms for non-convex optimization.
\newblock In \emph{International Conference on Learning Representations}, 2019{\natexlab{a}}.
\newblock URL \url{https://openreview.net/forum?id=H1x-x309tm}.

\bibitem[Chen et~al.(2019{\natexlab{b}})Chen, Yuan, Yi, Zhou, Chen, and Yang]{chen2018universal}
Z.~Chen, Z.~Yuan, J.~Yi, B.~Zhou, E.~Chen, and T.~Yang.
\newblock Universal stagewise learning for non-convex problems with convergence on averaged solutions.
\newblock In \emph{International Conference on Learning Representations}, 2019{\natexlab{b}}.
\newblock URL \url{https://openreview.net/forum?id=Syx5V2CcFm}.

\bibitem[Cohen et~al.(2021)Cohen, Kaur, Li, Kolter, and Talwalkar]{cohen2021gradient}
J.~Cohen, S.~Kaur, Y.~Li, J.~Z. Kolter, and A.~Talwalkar.
\newblock Gradient descent on neural networks typically occurs at the edge of stability.
\newblock In \emph{International Conference on Learning Representations}, 2021.
\newblock URL \url{https://openreview.net/forum?id=jh-rTtvkGeM}.

\bibitem[Cohen et~al.(2022)Cohen, Ghorbani, Krishnan, Agarwal, Medapati, Badura, Suo, Cardoze, Nado, Dahl, and Gilmer]{cohen2022adaptive}
J.~M. Cohen, B.~Ghorbani, S.~Krishnan, N.~Agarwal, S.~Medapati, M.~Badura, D.~Suo, D.~Cardoze, Z.~Nado, G.~E. Dahl, and J.~Gilmer.
\newblock Adaptive gradient methods at the edge of stability.
\newblock \emph{arXiv preprint arXiv:2207.14484}, 2022.

\bibitem[Compagnoni et~al.(2023)Compagnoni, Biggio, Orvieto, Proske, Kersting, and Lucchi]{compagnoni2023sde}
E.~M. Compagnoni, L.~Biggio, A.~Orvieto, F.~N. Proske, H.~Kersting, and A.~Lucchi.
\newblock An sde for modeling sam: Theory and insights.
\newblock In \emph{International Conference on Machine Learning}, pages 25209--25253. PMLR, 2023.

\bibitem[Compagnoni et~al.(2025)Compagnoni, Liu, Islamov, Proske, Orvieto, and Lucchi]{compagnoni2024adaptive}
E.~M. Compagnoni, T.~Liu, R.~Islamov, F.~N. Proske, A.~Orvieto, and A.~Lucchi.
\newblock Adaptive methods through the lens of {SDE}s: Theoretical insights on the role of noise.
\newblock In \emph{The Thirteenth International Conference on Learning Representations}, 2025.
\newblock URL \url{https://openreview.net/forum?id=ww3CLRhF1v}.

\bibitem[Cornet(1983)]{cornet1983ExistenceSlowSolutions}
B.~Cornet.
\newblock Existence of slow solutions for a class of differential inclusions.
\newblock \emph{Journal of Mathematical Analysis and Applications}, 96\penalty0 (1):\penalty0 130--147, Oct. 1983.
\newblock ISSN 0022-247X.
\newblock \doi{10.1016/0022-247X(83)90032-X}.

\bibitem[Crawshaw et~al.(2022)Crawshaw, Liu, Orabona, Zhang, and Zhuang]{crawshaw2022robustness}
M.~Crawshaw, M.~Liu, F.~Orabona, W.~Zhang, and Z.~Zhuang.
\newblock Robustness to unbounded smoothness of generalized signsgd.
\newblock \emph{Advances in Neural Information Processing Systems}, 35:\penalty0 9955--9968, 2022.

\bibitem[Damian et~al.(2021)Damian, Ma, and Lee]{damian2021label}
A.~Damian, T.~Ma, and J.~D. Lee.
\newblock Label noise sgd provably prefers flat global minimizers.
\newblock \emph{Advances in Neural Information Processing Systems}, 34:\penalty0 27449--27461, 2021.

\bibitem[Damian et~al.(2023)Damian, Nichani, and Lee]{damian2023selfstabilization}
A.~Damian, E.~Nichani, and J.~D. Lee.
\newblock Self-stabilization: The implicit bias of gradient descent at the edge of stability.
\newblock In \emph{The Eleventh International Conference on Learning Representations}, 2023.
\newblock URL \url{https://openreview.net/forum?id=nhKHA59gXz}.

\bibitem[Dauphin et~al.(2024)Dauphin, Agarwala, and Mobahi]{dauphin2024neglected}
Y.~Dauphin, A.~Agarwala, and H.~Mobahi.
\newblock How hessian structure explains mysteries in sharpness regularization.
\newblock \emph{Advances in Neural Information Processing Systems}, 37, 2024.

\bibitem[de~Carli~Silva and Tunçel(2018)]{silva2018strictcomplementaritymaxcutsdp}
M.~K. de~Carli~Silva and L.~Tunçel.
\newblock Strict complementarity in maxcut sdp, 2018.
\newblock URL \url{https://arxiv.org/abs/1806.01173}.

\bibitem[D{\'e}fossez et~al.(2022)D{\'e}fossez, Bottou, Bach, and Usunier]{defossez2020simple}
A.~D{\'e}fossez, L.~Bottou, F.~Bach, and N.~Usunier.
\newblock A simple convergence proof of adam and adagrad.
\newblock \emph{Transactions on Machine Learning Research}, 2022.
\newblock ISSN 2835-8856.
\newblock URL \url{https://openreview.net/forum?id=ZPQhzTSWA7}.

\bibitem[Dosovitskiy et~al.(2021)Dosovitskiy, Beyer, Kolesnikov, Weissenborn, Zhai, Unterthiner, Dehghani, Minderer, Heigold, Gelly, Uszkoreit, and Houlsby]{dosovitskiy2021image}
A.~Dosovitskiy, L.~Beyer, A.~Kolesnikov, D.~Weissenborn, X.~Zhai, T.~Unterthiner, M.~Dehghani, M.~Minderer, G.~Heigold, S.~Gelly, J.~Uszkoreit, and N.~Houlsby.
\newblock An image is worth 16x16 words: Transformers for image recognition at scale.
\newblock In \emph{International Conference on Learning Representations}, 2021.
\newblock URL \url{https://openreview.net/forum?id=YicbFdNTTy}.

\bibitem[Duchi et~al.(2011)Duchi, Hazan, and Singer]{duchi2011adaptive}
J.~Duchi, E.~Hazan, and Y.~Singer.
\newblock Adaptive subgradient methods for online learning and stochastic optimization.
\newblock \emph{Journal of machine learning research}, 12\penalty0 (7), 2011.

\bibitem[Elkabetz and Cohen(2021)]{elkabetz2021continuous}
O.~Elkabetz and N.~Cohen.
\newblock Continuous vs. discrete optimization of deep neural networks.
\newblock \emph{Advances in Neural Information Processing Systems}, 34:\penalty0 4947--4960, 2021.

\bibitem[Even et~al.(2024)Even, Pesme, Gunasekar, and Flammarion]{even2023s}
M.~Even, S.~Pesme, S.~Gunasekar, and N.~Flammarion.
\newblock (s)gd over diagonal linear networks: implicit bias, large stepsizes and edge of stability.
\newblock In \emph{Proceedings of the 37th International Conference on Neural Information Processing Systems}, NIPS '23, 2024.

\bibitem[Geiping et~al.(2022)Geiping, Goldblum, Pope, Moeller, and Goldstein]{geiping2022stochastic}
J.~Geiping, M.~Goldblum, P.~Pope, M.~Moeller, and T.~Goldstein.
\newblock Stochastic training is not necessary for generalization.
\newblock In \emph{International Conference on Learning Representations}, 2022.
\newblock URL \url{https://openreview.net/forum?id=ZBESeIUB5k}.

\bibitem[Ghosh et~al.(2023)Ghosh, Lyu, Zhang, and Wang]{ghosh2023implicit}
A.~Ghosh, H.~Lyu, X.~Zhang, and R.~Wang.
\newblock Implicit regularization in heavy-ball momentum accelerated stochastic gradient descent.
\newblock In \emph{The Eleventh International Conference on Learning Representations}, 2023.
\newblock URL \url{https://openreview.net/forum?id=ZzdBhtEH9yB}.

\bibitem[Goemans and Williamson(1995)]{goemans1995improved}
M.~X. Goemans and D.~P. Williamson.
\newblock Improved approximation algorithms for maximum cut and satisfiability problems using semidefinite programming.
\newblock \emph{Journal of the ACM (JACM)}, 42\penalty0 (6):\penalty0 1115--1145, 1995.

\bibitem[Gu and Dao(2024)]{gu2024mamba}
A.~Gu and T.~Dao.
\newblock Mamba: Linear-time sequence modeling with selective state spaces.
\newblock In \emph{First Conference on Language Modeling}, 2024.
\newblock URL \url{https://openreview.net/forum?id=tEYskw1VY2}.

\bibitem[Guckenheimer and Holmes(1983)]{guckenheimer1983nonlinear}
J.~Guckenheimer and P.~Holmes.
\newblock \emph{Nonlinear Oscillations, Dynamical Systems, and Bifurcations of Vector Fields}.
\newblock Springer-Verlag, 1983.

\bibitem[Guo et~al.(2021)Guo, Xu, Yin, Jin, and Yang]{guo2021novel}
Z.~Guo, Y.~Xu, W.~Yin, R.~Jin, and T.~Yang.
\newblock A novel convergence analysis for algorithms of the adam family.
\newblock \emph{arXiv preprint arXiv:2112.03459}, 2021.

\bibitem[He et~al.(2016)He, Zhang, Ren, and Sun]{he2015deep}
K.~He, X.~Zhang, S.~Ren, and J.~Sun.
\newblock Deep residual learning for image recognition.
\newblock In \emph{Proceedings of the IEEE conference on computer vision and pattern recognition}, pages 770--778, 2016.

\bibitem[Hochreiter and Schmidhuber(1997)]{hochreiter1997long}
S.~Hochreiter and J.~Schmidhuber.
\newblock Long short-term memory.
\newblock \emph{Neural Computation}, 1997.

\bibitem[Hong and Lin(2024)]{hong2024convergence}
Y.~Hong and J.~Lin.
\newblock On convergence of adam for stochastic optimization under relaxed assumptions.
\newblock \emph{Advances in Neural Information Processing Systems}, 37:\penalty0 10827--10877, 2024.

\bibitem[H{\"u}bler et~al.(2024)H{\"u}bler, Yang, Li, and He]{hubler2024parameter}
F.~H{\"u}bler, J.~Yang, X.~Li, and N.~He.
\newblock Parameter-agnostic optimization under relaxed smoothness.
\newblock In \emph{International Conference on Artificial Intelligence and Statistics}, pages 4861--4869. PMLR, 2024.

\bibitem[Ioffe and Szegedy(2015)]{ioffe2015batch}
S.~Ioffe and C.~Szegedy.
\newblock Batch normalization: Accelerating deep network training by reducing internal covariate shift.
\newblock In \emph{International conference on machine learning}, pages 448--456. pmlr, 2015.

\bibitem[Jastrzębski et~al.(2019)Jastrzębski, Kenton, Ballas, Fischer, Bengio, and Storkey]{jastrzębski2018on}
S.~Jastrzębski, Z.~Kenton, N.~Ballas, A.~Fischer, Y.~Bengio, and A.~Storkey.
\newblock On the relation between the sharpest directions of {DNN} loss and the {SGD} step length.
\newblock In \emph{International Conference on Learning Representations}, 2019.
\newblock URL \url{https://openreview.net/forum?id=SkgEaj05t7}.

\bibitem[Jastrzębski et~al.(2020)Jastrzębski, Szymczak, Fort, Arpit, Tabor, Cho*, and Geras*]{jastrzebski2020the}
S.~Jastrzębski, M.~Szymczak, S.~Fort, D.~Arpit, J.~Tabor, K.~Cho*, and K.~Geras*.
\newblock The break-even point on optimization trajectories of deep neural networks.
\newblock In \emph{International Conference on Learning Representations}, 2020.
\newblock URL \url{https://openreview.net/forum?id=r1g87C4KwB}.

\bibitem[Jastrzębski et~al.(2021)Jastrzębski, Arpit, Astrand, Kerg, Wang, Xiong, Socher, Cho, and Geras]{jastrzebski2021catastrophic}
S.~Jastrzębski, D.~Arpit, O.~Astrand, G.~B. Kerg, H.~Wang, C.~Xiong, R.~Socher, K.~Cho, and K.~J. Geras.
\newblock Catastrophic fisher explosion: Early phase fisher matrix impacts generalization.
\newblock In \emph{International Conference on Machine Learning}, pages 4772--4784. PMLR, 2021.

\bibitem[Karpathy(2020)]{karpathy2020mingpt}
A.~Karpathy.
\newblock mingpt - demo.ipynb.
\newblock \url{https://github.com/karpathy/minGPT/blob/master/demo.ipynb}, 2020.

\bibitem[Keskar et~al.(2017)Keskar, Mudigere, Nocedal, Smelyanskiy, and Tang]{keskar2017largebatch}
N.~S. Keskar, D.~Mudigere, J.~Nocedal, M.~Smelyanskiy, and P.~T.~P. Tang.
\newblock On large-batch training for deep learning: Generalization gap and sharp minima.
\newblock In \emph{International Conference on Learning Representations}, 2017.

\bibitem[Khaled et~al.(2023)Khaled, Mishchenko, and Jin]{khaled2023dowg}
A.~Khaled, K.~Mishchenko, and C.~Jin.
\newblock Dowg unleashed: An efficient universal parameter-free gradient descent method.
\newblock \emph{Advances in Neural Information Processing Systems}, 36:\penalty0 6748--6769, 2023.

\bibitem[Kingma and Ba(2015)]{kingma2014adam}
D.~P. Kingma and J.~Ba.
\newblock Adam: {A} method for stochastic optimization.
\newblock In Y.~Bengio and Y.~LeCun, editors, \emph{3rd International Conference on Learning Representations, {ICLR} 2015, San Diego, CA, USA, May 7-9, 2015, Conference Track Proceedings}, 2015.
\newblock URL \url{http://arxiv.org/abs/1412.6980}.

\bibitem[Knyazev(2001)]{knyazev2001toward}
A.~V. Knyazev.
\newblock Toward the optimal preconditioned eigensolver: Locally optimal block preconditioned conjugate gradient method.
\newblock \emph{SIAM Journal on Scientific Computing}, 23\penalty0 (2):\penalty0 517--541, 2001.

\bibitem[Kreisler et~al.(2023)Kreisler, Nacson, Soudry, and Carmon]{kreisler2023gradient}
I.~Kreisler, M.~S. Nacson, D.~Soudry, and Y.~Carmon.
\newblock Gradient descent monotonically decreases the sharpness of gradient flow solutions in scalar networks and beyond.
\newblock In \emph{International Conference on Machine Learning}, pages 17684--17744. PMLR, 2023.

\bibitem[Krizhevsky(2009)]{krizhevsky2009learning}
A.~Krizhevsky.
\newblock Learning multiple layers of features from tiny images.
\newblock Technical report, University of Toronto, 2009.

\bibitem[Kunstner et~al.(2019)Kunstner, Hennig, and Balles]{kunstner2019limitations}
F.~Kunstner, P.~Hennig, and L.~Balles.
\newblock Limitations of the empirical fisher approximation for natural gradient descent.
\newblock \emph{Advances in Neural Information Processing Systems}, 32, 2019.

\bibitem[Kunstner et~al.(2024)Kunstner, Yadav, Milligan, Schmidt, and Bietti]{kunstner2024heavy}
F.~Kunstner, R.~Yadav, A.~Milligan, M.~Schmidt, and A.~Bietti.
\newblock Heavy-tailed class imbalance and why adam outperforms gradient descent on language models.
\newblock In \emph{Neural Information Processing Systems}, 2024.

\bibitem[Lee and Jang(2023)]{lee2023a}
S.~Lee and C.~Jang.
\newblock A new characterization of the edge of stability based on a sharpness measure aware of batch gradient distribution.
\newblock In \emph{The Eleventh International Conference on Learning Representations}, 2023.
\newblock URL \url{https://openreview.net/forum?id=bH-kCY6LdKg}.

\bibitem[Lewkowycz et~al.(2020)Lewkowycz, Bahri, Dyer, Sohl-Dickstein, and Gur-Ari]{lewkowycz2020large}
A.~Lewkowycz, Y.~Bahri, E.~Dyer, J.~Sohl-Dickstein, and G.~Gur-Ari.
\newblock The large learning rate phase of deep learning: the catapult mechanism.
\newblock \emph{arXiv preprint arXiv:2003.02218}, 2020.

\bibitem[Li and Lin(2024)]{li2024frac}
H.~Li and Z.~Lin.
\newblock On the {O}($\sqrt{d}/t^{1/4}$) convergence rate of {RMSProp} and its momentum extension measured by $l_1$ norm: Better dependence on the dimension.
\newblock \emph{arXiv preprint arXiv:2402.00389}, 2024.

\bibitem[Li et~al.(2024)Li, Rakhlin, and Jadbabaie]{li2024convergence}
H.~Li, A.~Rakhlin, and A.~Jadbabaie.
\newblock Convergence of adam under relaxed assumptions.
\newblock \emph{Advances in Neural Information Processing Systems}, 36, 2024.

\bibitem[Li et~al.(2017)Li, Tai, and E]{li2017stochastic}
Q.~Li, C.~Tai, and W.~E.
\newblock Stochastic modified equations and adaptive stochastic gradient algorithms.
\newblock In \emph{Proceedings of the 34th International Conference on Machine Learning}, 2017.

\bibitem[Li et~al.(2023)Li, Xu, and Zhang]{li2023loss}
X.~Li, Z.-Q.~J. Xu, and Z.~Zhang.
\newblock Loss spike in training neural networks.
\newblock \emph{arXiv preprint arXiv:2305.12133}, 2023.

\bibitem[Li et~al.(2019)Li, Wei, and Ma]{li2019towards}
Y.~Li, C.~Wei, and T.~Ma.
\newblock Towards explaining the regularization effect of initial large learning rate in training neural networks.
\newblock \emph{Advances in Neural Information Processing Systems}, 32, 2019.

\bibitem[Li et~al.(2021)Li, Malladi, and Arora]{li2021validity}
Z.~Li, S.~Malladi, and S.~Arora.
\newblock On the validity of modeling {SGD} with stochastic differential equations ({SDE}s).
\newblock In A.~Beygelzimer, Y.~Dauphin, P.~Liang, and J.~W. Vaughan, editors, \emph{Advances in Neural Information Processing Systems}, 2021.
\newblock URL \url{https://openreview.net/forum?id=goEdyJ_nVQI}.

\bibitem[Li et~al.(2022{\natexlab{a}})Li, Wang, and Arora]{li2022happens}
Z.~Li, T.~Wang, and S.~Arora.
\newblock What happens after {SGD} reaches zero loss? --a mathematical framework.
\newblock In \emph{International Conference on Learning Representations}, 2022{\natexlab{a}}.
\newblock URL \url{https://openreview.net/forum?id=siCt4xZn5Ve}.

\bibitem[Li et~al.(2022{\natexlab{b}})Li, Wang, and Li]{li2022analyzing}
Z.~Li, Z.~Wang, and J.~Li.
\newblock Analyzing sharpness along gd trajectory: Progressive sharpening and edge of stability.
\newblock In \emph{Neural Information Processing Systems}, 2022{\natexlab{b}}.

\bibitem[Liu et~al.(2025{\natexlab{a}})Liu, Liu, and Gore]{liu2025focus}
Y.~Liu, Z.~Liu, and J.~Gore.
\newblock Focus: First order concentrated updating scheme.
\newblock \emph{arXiv preprint arXiv:2501.12243}, 2025{\natexlab{a}}.

\bibitem[Liu et~al.(2025{\natexlab{b}})Liu, Liu, Gore, and Tegmark]{liu2025neuralthermodynamiclawslarge}
Z.~Liu, Y.~Liu, J.~Gore, and M.~Tegmark.
\newblock Neural thermodynamic laws for large language model training, 2025{\natexlab{b}}.
\newblock URL \url{https://arxiv.org/abs/2505.10559}.

\bibitem[LucidRains(2024)]{lucidrains2024vitpytorch}
LucidRains.
\newblock Vision transformer - pytorch.
\newblock \url{https://github.com/lucidrains/vit-pytorch}, 2024.

\bibitem[Lyu et~al.(2022)Lyu, Li, and Arora]{lyu2022understanding}
K.~Lyu, Z.~Li, and S.~Arora.
\newblock Understanding the generalization benefit of normalization layers: Sharpness reduction.
\newblock \emph{Advances in Neural Information Processing Systems}, 35:\penalty0 34689--34708, 2022.

\bibitem[Ma et~al.(2022)Ma, Wu, and Weinan]{ma2022qualitative}
C.~Ma, L.~Wu, and E.~Weinan.
\newblock A qualitative study of the dynamic behavior for adaptive gradient algorithms.
\newblock In \emph{Mathematical and Scientific Machine Learning}, pages 671--692. PMLR, 2022.

\bibitem[Malladi et~al.(2022)Malladi, Lyu, Panigrahi, and Arora]{malladi2022adaptivesde}
S.~Malladi, K.~Lyu, A.~Panigrahi, and S.~Arora.
\newblock On the sdes and scaling rules for adaptive gradient algorithms.
\newblock In S.~Koyejo, S.~Mohamed, A.~Agarwal, D.~Belgrave, K.~Cho, and A.~Oh, editors, \emph{Advances in Neural Information Processing Systems}, volume~35, pages 7697--7711. Curran Associates, Inc., 2022.
\newblock URL \url{https://proceedings.neurips.cc/paper_files/paper/2022/file/32ac710102f0620d0f28d5d05a44fe08-Paper-Conference.pdf}.

\bibitem[Martens(2020)]{martens2020}
J.~Martens.
\newblock New insights and perspectives on the natural gradient method.
\newblock \emph{Journal of Machine Learning Research}, 21\penalty0 (146):\penalty0 1--76, 2020.
\newblock URL \url{http://jmlr.org/papers/v21/17-678.html}.

\bibitem[Mishkin et~al.(2024)Mishkin, Khaled, Wang, Defazio, and Gower]{mishkin2024directional}
A.~Mishkin, A.~Khaled, Y.~Wang, A.~Defazio, and R.~M. Gower.
\newblock Directional smoothness and gradient methods: Convergence and adaptivity.
\newblock \emph{Advances in Neural Information Processing Systems}, 2024.

\bibitem[Morales-Brotons et~al.(2024)Morales-Brotons, Vogels, and Hendrikx]{morales2024exponential}
D.~Morales-Brotons, T.~Vogels, and H.~Hendrikx.
\newblock Exponential moving average of weights in deep learning: Dynamics and benefits.
\newblock \emph{Transactions on Machine Learning Research}, 2024.
\newblock ISSN 2835-8856.
\newblock URL \url{https://openreview.net/forum?id=2M9CUnYnBA}.

\bibitem[Moreau(1962)]{moreau1962decomposition}
J.-J. Moreau.
\newblock D{\'e}composition orthogonale d'un espace hilbertien selon deux cones mutuellement polaires.
\newblock \emph{Comptes Rendus de l'Acad{\'e}mie des Sciences}, 255:\penalty0 238--240, 1962.

\bibitem[Nair and Hinton(2010)]{nair2010rectified}
V.~Nair and G.~E. Hinton.
\newblock Rectified linear units improve restricted boltzmann machines.
\newblock In \emph{Proceedings of the 27th International Conference on Machine Learning (ICML)}, 2010.

\bibitem[Orabona(2020)]{orabona2020neural}
F.~Orabona.
\newblock Neural networks (maybe) evolved to make adam the best optimizer.
\newblock \url{https://parameterfree.com/2020/12/06/neural-network-maybe-evolved-to-make-adam-the-best-optimizer/}, 2020.
\newblock Accessed: October 17, 2024.

\bibitem[Reddi et~al.(2018)Reddi, Kale, and Kumar]{reddi2019convergence}
S.~J. Reddi, S.~Kale, and S.~Kumar.
\newblock On the convergence of adam and beyond.
\newblock In \emph{International Conference on Learning Representations}, 2018.
\newblock URL \url{https://openreview.net/forum?id=ryQu7f-RZ}.

\bibitem[Rosca et~al.(2023)Rosca, Wu, Qin, and Dherin]{rosca2023continuous}
M.~Rosca, Y.~Wu, C.~Qin, and B.~Dherin.
\newblock On a continuous time model of gradient descent dynamics and instability in deep learning.
\newblock \emph{Transactions on Machine Learning Research}, 2023.
\newblock ISSN 2835-8856.
\newblock URL \url{https://openreview.net/forum?id=EYrRzKPinA}.

\bibitem[Rosenfeld and Risteski(2024)]{rosenfeld2024outliers}
E.~Rosenfeld and A.~Risteski.
\newblock Outliers with opposing signals have an outsized effect on neural network optimization.
\newblock In \emph{The Twelfth International Conference on Learning Representations}, 2024.
\newblock URL \url{https://openreview.net/forum?id=kIZ3S3tel6}.

\bibitem[Roulet et~al.(2024)Roulet, Agarwala, Grill, Swirszcz, Blondel, and Pedregosa]{roulet2024stepping}
V.~Roulet, A.~Agarwala, J.-B. Grill, G.~M. Swirszcz, M.~Blondel, and F.~Pedregosa.
\newblock Stepping on the edge: Curvature aware learning rate tuners.
\newblock In \emph{The Thirty-eighth Annual Conference on Neural Information Processing Systems}, 2024.
\newblock URL \url{https://openreview.net/forum?id=SEflLHIhhJ}.

\bibitem[Sandler et~al.(2023)Sandler, Zhmoginov, Vladymyrov, and Miller]{sandler2023}
M.~Sandler, A.~Zhmoginov, M.~Vladymyrov, and N.~Miller.
\newblock Training trajectories, mini-batch losses and the curious role of the learning rate, 2023.
\newblock URL \url{https://arxiv.org/abs/2301.02312}.

\bibitem[Shi et~al.(2021)Shi, Li, Hong, and Sun]{shi2021rmsprop}
N.~Shi, D.~Li, M.~Hong, and R.~Sun.
\newblock {RMS}prop converges with proper hyper-parameter.
\newblock In \emph{International Conference on Learning Representations}, 2021.
\newblock URL \url{https://openreview.net/forum?id=3UDSdyIcBDA}.

\bibitem[Smith et~al.(2021)Smith, Dherin, Barrett, and De]{smith2021origin}
S.~L. Smith, B.~Dherin, D.~Barrett, and S.~De.
\newblock On the origin of implicit regularization in stochastic gradient descent.
\newblock In \emph{International Conference on Learning Representations}, 2021.
\newblock URL \url{https://openreview.net/forum?id=rq_Qr0c1Hyo}.

\bibitem[Song and Yun(2023)]{song2023trajectory}
M.~Song and C.~Yun.
\newblock Trajectory alignment: Understanding the edge of stability phenomenon via bifurcation theory.
\newblock In \emph{Thirty-seventh Conference on Neural Information Processing Systems}, 2023.
\newblock URL \url{https://openreview.net/forum?id=PnJaA0A8Lr}.

\bibitem[Stewart(2011)]{stewart2011dynamics}
D.~E. Stewart.
\newblock \emph{Dynamics with Inequalities}.
\newblock Society for Industrial and Applied Mathematics, 2011.
\newblock \doi{10.1137/1.9781611970715}.
\newblock URL \url{https://epubs.siam.org/doi/abs/10.1137/1.9781611970715}.

\bibitem[Tieleman and Hinton(2012)]{tieleman2012lecture}
T.~Tieleman and G.~Hinton.
\newblock Lecture 6.5-rmsprop: Divide the gradient by a running average of its recent magnitude.
\newblock \emph{COURSERA: Neural networks for machine learning}, 4\penalty0 (2):\penalty0 26, 2012.

\bibitem[Torres-Leguet(2024)]{mamba_py}
A.~Torres-Leguet.
\newblock mamba.py.
\newblock \url{https://github.com/alxndrTL/mamba.py}, 2024.

\bibitem[Vaswani et~al.(2017)Vaswani, Shazeer, Parmar, Uszkoreit, Jones, Gomez, Kaiser, and Polosukhin]{vaswani2017attention}
A.~Vaswani, N.~Shazeer, N.~Parmar, J.~Uszkoreit, L.~Jones, A.~N. Gomez, {\L}.~Kaiser, and I.~Polosukhin.
\newblock Attention is all you need.
\newblock \emph{Advances in Neural Information Processing Systems}, 30, 2017.

\bibitem[Wang et~al.(2024{\natexlab{a}})Wang, Fu, Zhang, Zheng, and Chen]{wang2024closing}
B.~Wang, J.~Fu, H.~Zhang, N.~Zheng, and W.~Chen.
\newblock Closing the gap between the upper bound and lower bound of adam's iteration complexity.
\newblock \emph{Advances in Neural Information Processing Systems}, 36, 2024{\natexlab{a}}.

\bibitem[Wang et~al.(2024{\natexlab{b}})Wang, Zhang, Meng, Sun, Ma, and Chen]{wang2024convergence}
B.~Wang, H.~Zhang, Q.~Meng, R.~Sun, Z.-M. Ma, and W.~Chen.
\newblock On the convergence of adam under non-uniform smoothness: Separability from sgdm and beyond.
\newblock \emph{arXiv preprint arXiv:2403.15146}, 2024{\natexlab{b}}.

\bibitem[Wang et~al.(2024{\natexlab{c}})Wang, Zhang, Zhang, Meng, Sun, Ma, Liu, Luo, and Chen]{wang2022provable}
B.~Wang, Y.~Zhang, H.~Zhang, Q.~Meng, R.~Sun, Z.-M. Ma, T.-Y. Liu, Z.-Q. Luo, and W.~Chen.
\newblock Provable adaptivity of adam under non-uniform smoothness.
\newblock In \emph{Proceedings of the 30th ACM SIGKDD Conference on Knowledge Discovery and Data Mining}. Association for Computing Machinery, 2024{\natexlab{c}}.
\newblock ISBN 9798400704901.

\bibitem[Wang et~al.(2024{\natexlab{d}})Wang, Wang, He, Wang, Huang, Xiong, li, E, and Wu]{wang2024improvinggeneralizationconvergenceenhancing}
M.~Wang, J.~Wang, H.~He, Z.~Wang, G.~Huang, F.~Xiong, Z.~li, W.~E, and L.~Wu.
\newblock Improving generalization and convergence by enhancing implicit regularization.
\newblock In \emph{The Thirty-eighth Annual Conference on Neural Information Processing Systems}, 2024{\natexlab{d}}.
\newblock URL \url{https://openreview.net/forum?id=cjM2bhLOiC}.

\bibitem[Wen et~al.(2025)Wen, Li, Wang, Hall, Liang, and Ma]{Wen2024UnderstandingWL}
K.~Wen, Z.~Li, J.~S. Wang, D.~L.~W. Hall, P.~Liang, and T.~Ma.
\newblock Understanding warmup-stable-decay learning rates: A river valley loss landscape view.
\newblock In \emph{The Thirteenth International Conference on Learning Representations}, 2025.
\newblock URL \url{https://openreview.net/forum?id=m51BgoqvbP}.

\bibitem[Wu et~al.(2024)Wu, Braverman, and Lee]{wu2024implicit}
J.~Wu, V.~Braverman, and J.~D. Lee.
\newblock Implicit bias of gradient descent for logistic regression at the edge of stability.
\newblock \emph{Advances in Neural Information Processing Systems}, 36, 2024.

\bibitem[Wu et~al.(2018)Wu, Ma, and E]{wu2018dynamical}
L.~Wu, C.~Ma, and W.~E.
\newblock How sgd selects the global minima in over-parameterized learning: A dynamical stability perspective.
\newblock In S.~Bengio, H.~Wallach, H.~Larochelle, K.~Grauman, N.~Cesa-Bianchi, and R.~Garnett, editors, \emph{Advances in Neural Information Processing Systems}, volume~31. Curran Associates, Inc., 2018.
\newblock URL \url{https://proceedings.neurips.cc/paper_files/paper/2018/file/6651526b6fb8f29a00507de6a49ce30f-Paper.pdf}.

\bibitem[Wu and He(2018)]{wu2018group}
Y.~Wu and K.~He.
\newblock Group normalization.
\newblock In \emph{Proceedings of the European conference on computer vision (ECCV)}, pages 3--19, 2018.

\bibitem[Xing et~al.(2018)Xing, Arpit, Tsirigotis, and Bengio]{xing2018walk}
C.~Xing, D.~Arpit, C.~Tsirigotis, and Y.~Bengio.
\newblock A walk with sgd.
\newblock \emph{arXiv preprint arXiv:1802.08770}, 2018.

\bibitem[Yang et~al.(2024)Yang, Li, Fatkhullin, and He]{yang2024two}
J.~Yang, X.~Li, I.~Fatkhullin, and N.~He.
\newblock Two sides of one coin: the limits of untuned sgd and the power of adaptive methods.
\newblock \emph{Advances in Neural Information Processing Systems}, 36, 2024.

\bibitem[Zaheer et~al.(2018)Zaheer, Reddi, Sachan, Kale, and Kumar]{zaheer2018adaptive}
M.~Zaheer, S.~Reddi, D.~Sachan, S.~Kale, and S.~Kumar.
\newblock Adaptive methods for nonconvex optimization.
\newblock \emph{Advances in Neural Information Processing Systems}, 31, 2018.

\bibitem[Zhang et~al.(2025)Zhang, Zhou, and Zou]{zhang2024convergence}
Q.~Zhang, Y.~Zhou, and S.~Zou.
\newblock Convergence guarantees for {RMSP}rop and adam in generalized-smooth non-convex optimization with affine noise variance.
\newblock \emph{Transactions on Machine Learning Research}, 2025.
\newblock ISSN 2835-8856.
\newblock URL \url{https://openreview.net/forum?id=QIzRdjIWnS}.

\bibitem[Zhang et~al.(2022)Zhang, Chen, Shi, Sun, and Luo]{zhang2023adam}
Y.~Zhang, C.~Chen, N.~Shi, R.~Sun, and Z.-Q. Luo.
\newblock Adam can converge without any modification on update rules.
\newblock \emph{Advances in Neural Information Processing Systems}, 35:\penalty0 28386--28399, 2022.

\bibitem[Zhu et~al.(2023)Zhu, Wang, Wang, Zhou, and Ge]{zhu2023understanding}
X.~Zhu, Z.~Wang, X.~Wang, M.~Zhou, and R.~Ge.
\newblock Understanding edge-of-stability training dynamics with a minimalist example.
\newblock In \emph{The Eleventh International Conference on Learning Representations}, 2023.
\newblock URL \url{https://openreview.net/forum?id=p7EagBsMAEO}.

\bibitem[Zou et~al.(2019)Zou, Shen, Jie, Zhang, and Liu]{zou2019sufficient}
F.~Zou, L.~Shen, Z.~Jie, W.~Zhang, and W.~Liu.
\newblock A sufficient condition for convergences of adam and rmsprop.
\newblock In \emph{Proceedings of the IEEE/CVF Conference on computer vision and pattern recognition}, pages 11127--11135, 2019.

\end{thebibliography}

\clearpage

\appendix

\newpage

\newcommand{\U}{\mathcal{U}}

\section{Central Flow Derivations}
\label{sec:flow_derivations}

In this appendix, we derive central flows for the three optimizers studied in this paper: \gd (\Cref{appendix:derivations:gd}), \rmsnorm (\Cref{appendix:derivations:rmsprop_norm}), and \rmsprop (\Cref{appendix:derivations:rmsprop}).  We reiterate that these derivations rely on informal mathematical reasoning; our claim that the central flow accurately matches the optimizer trajectory is ultimately supported \emph{empirically} by the experiments described in \Cref{sec:experiments}.  An interesting direction for future work is to prove that, under realistic conditions, the discrete optimizer follows the central flow.

\subsection{Preliminaries}

\subsubsection{Notation}
\label{sec:flow_derivations:notation}

We use $L(w)$ to denote the training objective, a function of weights $w \in \R^d$. We will assume that $L$ is three-times differentiable. We will frequently use $H(w)$ as a shorthand for $\nabla^2 L(w)$, the Hessian matrix at $w$.

 We use $\ev{A, B}$ to denote the Frobenius inner product $\tr(A^\top B)$ between two matrices $A$ and $B$, equivalent to flattening the matrices into vectors and taking the dot product. We use $\ker$ and $\spn$ to denote the kernel and span of a matrix, and $\dim$ to denote the dimension of a vector space.

 We use $\sym(\R^d)$ to denote the set of $d \times d$ symmetric matrices. For a $k$-dimensional subspace $\U \subseteq \R^d$, we use $\sym(\U)$ to denote the set of $d \times d$ symmetric matrices whose span is contained within $\U$.  Equivalently, this is the set of matrices that can be written as $U X U^\top$, where $U \in \R^{d \times k}$ is a basis for $\U$ and $X \in \sym(\R^k)$.

For a subspace $\U \subseteq \R^d$ and a matrix $A \in \R^{d \times d}$, we use $A \evalshort_\U$ to denote the \emph{restriction} of $A$ to $\U$, that is,
\begin{align}
    A \evalshort_\U := \Pi_\U \, A \, \Pi_\U,
\end{align}
where $\Pi_\U$ is the matrix that projects onto $\U$, e.g. $\Pi_\U = U U^\top$ for any orthogonal basis $U$ of $\U$.

For a subspace $\U \subseteq \R^d$ and a matrix $A \in \sym(\R^d)$, we write $A \succeqOver{\U} 0$ to denote that $A$ is positive semidefinite (PSD) over the subspace $\U$, i.e.   $u^\top A u \ge 0 \; \forall u \in \U$, or equivalently, $A \evalshort_{\U} \succeq 0$.
We analogously define $A \preceqOver{\U} 0$ to mean that $A$ is negative semidefinite (NSD) over $\U$.

We will need to work with higher-order tensors, though we will try our best to keep the tensor notation at a minimum.  For an order-$k$ tensor $T$ and an order-$s$ tensor $X$, with $s < k$, we define the contraction $T[X]$ as the order-$(k-s)$ tensor obtained by multiplying $T$ and $X$ componentwise along the last $s$ coordinates of $T$ and all coordinates of $X$, and then summing over those coordinates.
$$
(T[X])_{i_1,\ldots,i_{k-s}} \;:=\; \sum_{j_1,\ldots,j_s}
T_{i_1,\ldots,i_{k-s},j_1,\ldots,j_s} \, X_{j_1,\ldots,j_s}.
$$
We will also write $T[u_1,\ldots,u_s] $ for the result of contracting $T$ sequentially with the vectors $u_1,\ldots,u_s$ one at a time, equivalent to $T[u_1 \otimes \ldots \otimes u_s]$.  When $T$ is fully symmetric, permuting $u_1,\ldots,u_s$ does not change the result.

It can be helpful to reshape a higher-order tensor into a matrix; this viewpoint lets us work with higher-order tensors while using the familiar language of linear algebra.
For example, consider $\nabla^3 L(w)$, the order-3 tensor of third derivatives.
By flattening together the first two dimensions, we could view this as a matrix of shape $\R^{d^2 \times d}$.  In tensor product notation, we are identifying $\nabla^3 L$ as an element of $\R^{d^2} \otimes \R^d$. (We note that understanding tensor product notation is not necessary for understanding this paper.)
One could similarly identify  $\nabla^3 L$ as an element of $\R^{d \times d} \otimes \R^d$ or $\sym(\R^d) \otimes \R^d$.
These views naturally correspond to linear operators; for example, an element of the tensor product space $\sym(\R^d) \otimes \R^d$ can be treated as a linear operator $\R^d \to \sym(\R^d)$.  We will switch freely among the full-tensor, tensor-product, and operator views as convenient.

In particular, we will use the notation $\nabla H(w) \in \sym(\R^d) \otimes \R^{d}$  to denote the reshaping of $\nabla^3 L(w)$ that collects together the first two indices (i.e. the ``gradient of the Hessian''):
\begin{align*}
    \nabla H(w)_{ij,p} := \nabla^3 L(w)_{ijp} = \frac{\partial H_{ij}(w)}{ \partial w_p }.
\end{align*}
Intuitively, for any direction $v \in \R^d$, the operator $\nabla H(w): \R^d \to \sym(\R^d)$ returns the directional derivative of the Hessian $H(w)$ when moving in the direction $v$:
\begin{align*}
    \qty[\nabla H(w)[v]]_{ij} = \sum_{p=1}^d \frac{\partial H_{ij}(w)}{\partial w_p}\,v_p.
\end{align*}
Meanwhile, for any matrix $\Sigma \in \sym(\R^d)$, the transpose operator $\nabla H(w)^\top: \sym(\R^d) \to \R^d$ returns the gradient of the $\Sigma$-weighted Hessian $\ev{\Sigma, H(w)}$:
\begin{align*}
    \qty[\nabla H(w)^\top[\Sigma]]_p = \sum_{i=1}^d\sum_{j=1}^d \frac{\partial H_{ij}(w)}{\partial w_p} \Sigma_{ij} \implies \nabla H(w)^\top[\Sigma ] =  \nabla_w \ev{\Sigma, H(w)}.
\end{align*}

For a vector space $V$, we abbreviate $V \otimes V$ as $V^{\otimes 2}$, e.g. we abbreviate $\sym(\R^d) \otimes \sym(\R^d)$ as $\sym(\R^d)^{\otimes 2}$.

\subsubsection{Third-order Taylor expansions}
\label{sec:third-order-taylor-expansions}

If $L: \R^d \to \R$ is $k$-times differentiable, then Taylor's theorem for the $k$-th order Taylor expansion of $L$ is:
\begin{align*}
    L(w + \delta) = \sum_{j=0}^k \frac{1}{j!} \nabla^j L(w)[\delta^{\otimes j}] + o(\norm{\delta}^k).
\end{align*}
In particular, the third-order Taylor expansion of $L$ around any $w\in \R^d$ is given by:
\begin{align}
    L(w + \delta) = L(w) + \nabla L(w)[\delta] + \tfrac{1}{2} \nabla^2 L(w)[\delta, \delta] + \tfrac{1}{6} \nabla^3 L(w)[\delta, \delta, \delta] + o(\|\delta\|^3).
\end{align}
Likewise, the second-order Taylor expansion of the \emph{gradient} $\nabla L$ around $w$ is:
\begin{align}
    \nabla L(w + \delta) = \nabla L(w) + \nabla^2L(w)[\delta] + \tfrac{1}{2} \nabla^3 L(w)[\delta, \delta] + o(\| \delta\|^2).
\end{align}
Since $\nabla^3 L(w)[\delta, \delta] = \nabla^3 L(w) [\delta \delta^\top]$, and recalling our $H$, $\nabla H$ notation from the preceding section, we can equivalently write this Taylor expansion in the form:
\begin{align}
    \nabla L(w + \delta) = \nabla L(w) + H(w)[\delta] + \tfrac{1}{2} \nabla H(w)^\top[\delta \delta^\top] + o(\| \delta\|^2).
    \label{eq:gradient-taylor-expansion}
\end{align}
This is the form that we will directly use in our central flow derivations.

\subsubsection{Complementarity}\label{sec:derivations:complementarity}

A \emph{complementarity relation} is a constraint on two non-negative variables which enforces that at least one of the variables is zero, i.e. both cannot be strictly positive.  An example is the following set of three conditions over the two scalar-valued variables $x, y \in \R$:
\begin{align}
    x \ge 0,\quad y\ge 0, \quad xy = 0.
\end{align}
By convention, such a condition is often abbreviated using the shorthand notation:
\begin{align}
    0 \le x \perp y \ge 0.
\end{align}
Complementarity relations can be extended to vectors and matrices.  We will be particularly interested in the matrix case.  For two symmetric matrices $X, Y \in \sym(\R^d)$, consider the following complementarity relation:
\begin{align}
    X \succeq 0,\quad Y\succeq 0, \quad \langle X, Y \rangle = 0, \label{eq:matrix-complementarity}
\end{align}
which we will often abbreviate using the shorthand:
\begin{align}
    0 \preceq X \perp Y \succeq 0.
\end{align}
This condition is equivalent to $X,Y$ being PSD with orthogonal spans (\Cref{sec:arbitrary-preconditioned}, \Cref{fact:orthogonal-spans}).
It is also equivalent to $X,Y$ being PSD with $\spn X \subseteq \ker Y$ (\Cref{sec:arbitrary-preconditioned}, \Cref{fact:orthogonal-spans-corollary}).

\subsubsection{Semidefinite complementarity problems}
\label{sec:derivations:sdcp}
The \emph{semidefinite complementarity problem} (SDCP) will be a recurring primitive throughout this work.
An SDCP asks to find a matrix $\Sigma \in \sym(\R^d)$ that is complementary to an affine function of itself.
Namely, let $\alpha \in \sym(\R^d)$ be a symmetric matrix and let $\beta \in \sym(\R^d)^{\otimes 2}$ be a tensor, viewed as a linear operator over symmetric matrices $\sym(\R^{d}) \to \sym(\R^d)$.
The semidefinite complementarity problem is to find a matrix $\Sigma \in \sym(\R^d)$ such that:
\begin{align}
    0 \preceq \Sigma \perp \alpha + \beta[\Sigma] \succeq 0. \label{eq:sdcp-full}
\end{align}
This is a generalization of the well-studied linear complementarity problem from vectors in the non-negative orthant to matrices in the positive semidefinite cone.

\begin{remark}
    It is easily verified that if $\beta^{-1}[-\alpha] \succeq 0$, then the linear inverse $\Sigma = \beta^{-1}[-\alpha]$ is a solution to the SDCP \cref{eq:sdcp-full}.
    Interestingly, along the central flows, this will be the solution to the SDCP at almost all times.
    \label{remark:linear-inverse}
\end{remark}

\begin{remark}
    In the scalar-valued case, where $\alpha, \beta \in \R$, one can verify by case-checking that the SDCP
    \begin{align}
        0 \le \sigma^2 \perp \alpha + \beta \sigma^2 \ge 0
    \end{align}
    has the closed-form solution $\sigma^2 = \max(- \tfrac{\alpha}{\beta}, 0)$ provided that $\beta > 0$.
    \label{remark:sdcp-1d}
\end{remark}

It will be useful to restrict the domain of an SDCP to an arbitrary subspace $\U \subseteq \R^d$.
Recall that we use $\sym(\U)$ to denote the set of symmetric matrices whose span is contained within $\U$.
We thus have the following definition:
\begin{definition}[Semidefinite Complementarity Problem]
For a subspace $\U \subseteq \R^d$, matrix $\alpha \in \sym(\U)$, and tensor $\beta \in \sym(\U)^{\otimes 2}$, we define the solution set of the SDCP as:
\begin{align}
    \sdcp_\U(\alpha,\beta) := \{\Sigma \in \sym(\U) ~:~ 0 \preceq \Sigma \perp \alpha + \beta[\Sigma] \succeqOver{\U} 0\}. \label{eq:sdcp}
\end{align}
\end{definition}
A priori, it is unclear whether the SDCP has zero, one, or many solutions.
The following lemma shows that the SDCP always has one unique solution if $\beta$ is symmetric and positive definite as an operator over $\sym(\U)$, i.e if:
\begin{align*}
    \beta[\Sigma, \Sigma'] = \beta[\Sigma', \Sigma] \quad \forall \, \Sigma, \Sigma' \in \sym(\U) \qand \beta[\Sigma, \Sigma] > 0 \quad \forall \, \Sigma \in \sym(\U) \backslash \{0\}.
\end{align*}
\begin{lemma}
    If $\beta$ is symmetric and positive definite over $\sym(\U)$, then the cardinality of the solution set satisfies $\abs{\sdcp_\U(\alpha,\beta)}=1$.
    \label{lemma:sdcp:unique}
\end{lemma}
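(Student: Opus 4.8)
The plan is to identify the SDCP with the first-order optimality conditions of a strictly convex quadratic program, so that both existence and uniqueness fall out of convexity. Concretely, I would introduce the objective $f(\Sigma) := \tfrac12 \beta[\Sigma,\Sigma] + \ev{\alpha,\Sigma}$ on the closed convex cone $K := \{\Sigma \in \sym(\U) : \Sigma \succeq 0\}$, which is nonempty since $0 \in K$. Because $\beta$ is self-adjoint and positive definite over the finite-dimensional space $\sym(\U)$, the quadratic form $\beta[\cdot,\cdot]$ is bounded below by $c\norm{\cdot}^2$ for some $c > 0$; hence $f$ is strictly convex and coercive, and therefore attains a unique minimizer $\Sigma^\star$ over $K$.

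Next I would show that the minimizers of $f$ over $K$ are exactly the elements of $\sdcp_\U(\alpha,\beta)$. Since $f$ is convex and differentiable with $\nabla f(\Sigma) = \alpha + \beta[\Sigma]$, a point $\Sigma \in K$ minimizes $f$ over $K$ if and only if it satisfies the variational inequality $\ev{\alpha + \beta[\Sigma],\, \Sigma' - \Sigma} \ge 0$ for all $\Sigma' \in K$. Using that $K$ is a cone, substituting $\Sigma' = 0$ and $\Sigma' = 2\Sigma$ forces $\ev{\alpha + \beta[\Sigma],\Sigma} = 0$, and what remains is $\ev{\alpha + \beta[\Sigma],\Sigma'} \ge 0$ for all PSD $\Sigma' \in \sym(\U)$; testing this against rank-one $\Sigma' = uu^\top$ with $u \in \U$ gives $u^\top(\alpha + \beta[\Sigma])u \ge 0$, i.e. $\alpha + \beta[\Sigma] \succeqOver{\U} 0$. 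Together with $\Sigma \succeq 0$, these are precisely the three conditions in \cref{eq:sdcp}. The converse direction is the same computation in reverse, using the elementary fact that $\ev{A,B} \ge 0$ whenever $A \succeqOver{\U} 0$ and $B \in \sym(\U)$ with $B \succeq 0$ (since $\ev{A,B} = \ev{A\evalshort_\U, B}$, and the trace inner product of two PSD matrices is nonnegative). Consequently $\sdcp_\U(\alpha,\beta)$ coincides with the set of minimizers of $f$ over $K$, which is nonempty, establishing existence.

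Uniqueness then follows from strict convexity of $f$; alternatively, I would spell out a self-contained monotonicity argument: given two solutions $\Sigma_1,\Sigma_2$ with $y_i := \alpha + \beta[\Sigma_i]$, the complementarity and semidefiniteness conditions give $\ev{\Sigma_1 - \Sigma_2,\, y_1 - y_2} = -\ev{\Sigma_1,y_2} - \ev{\Sigma_2,y_1} \le 0$, whereas by linearity $\ev{\Sigma_1 - \Sigma_2,\, y_1 - y_2} = \beta[\Sigma_1 - \Sigma_2,\, \Sigma_1 - \Sigma_2] \ge 0$, with equality only when $\Sigma_1 = \Sigma_2$ by positive definiteness of $\beta$. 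Hence $\Sigma_1 = \Sigma_2$, so $\abs{\sdcp_\U(\alpha,\beta)} = 1$.

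I expect the main obstacle to be the bookkeeping in the middle step: carefully converting the cone-constrained variational inequality into the exact form $0 \preceq \Sigma \perp \alpha + \beta[\Sigma] \succeqOver{\U} 0$, keeping track of the restriction to $\U$ (that it suffices to test dual feasibility against rank-one matrices $uu^\top$ with $u \in \U$, and that $\ev{A\evalshort_\U, B} = \ev{A,B}$ for $B \in \sym(\U)$). Everything else — strict convexity, coercivity, the first-order optimality characterization of constrained convex minima, and the monotonicity computation — is standard. If one preferred to avoid the optimization detour, existence could instead be obtained via a fixed-point argument applied to the natural map $\Sigma \mapsto \Pi_K\bigl(\Sigma - (\alpha + \beta[\Sigma])\bigr)$, but the convex-program route seems cleanest.
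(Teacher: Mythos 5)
Your proposal is correct and takes essentially the same route as the paper: reduce the SDCP to the first-order optimality conditions of the strictly convex quadratic program $\min_{\Sigma \in \sym(\U),\, \Sigma\succeq 0}\ \langle\alpha,\Sigma\rangle + \tfrac{1}{2}\beta[\Sigma,\Sigma]$, then read off existence and uniqueness from strict convexity. You spell out more of what the paper compresses into the phrase ``the KKT conditions are exactly'' --- coercivity, the cone-constrained variational inequality, the reduction to rank-one test matrices, and the identity $\langle A,B\rangle=\langle A\evalshort_\U,B\rangle$ for $B\in\sym(\U)$ --- and your alternative monotonicity proof of uniqueness is the same argument the paper isolates as its \Cref{lem:sdcp-monotonicity}, so nothing here is a genuinely different idea, just a more self-contained writeup of the same one.
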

\begin{proof}
    Consider the following quadratic program with a semidefinite constraint:
    \begin{align}
        \min_{\Sigma \in \sym(\U)} \ev{\alpha,\Sigma} + \tfrac{1}{2} \beta[\Sigma,\Sigma] \quad \text{subject to} \quad \Sigma \succeq 0. \label{eq:sdcp-quadratic-program}
    \end{align}
    As $\beta \succ 0$,
    the objective is strictly convex so there is a unique minimizer $\Sigma^\star$. The KKT conditions for $\Sigma^\star$ are exactly $0 \preceq \Sigma^\star \perp \alpha + \beta[\Sigma^\star] \succeq 0$ and $\Sigma \in \sym(\U)$, so $\Sigma^\star \in \sdcp_\U(\alpha,\beta)$. Similarly if $\Sigma \in \sdcp_\U(\alpha,\beta)$ then $\Sigma$ satisfies the KKT conditions for the strictly convex semidefinite quadratic program, so $\Sigma = \Sigma^\star$.
\end{proof}
Note that this lemma is a straightforward adaptation of a standard argument for linear complementarity problems.
In the case where the lemma applies and the solution to the SDCP is unique, we will overload notation and use $\sdcp_\U(\alpha,\beta)$ to denote this unique solution.

\paragraph{Efficient computation}
    Fortunately, solving $\sdcp_\U(\alpha, \beta)$ does not actually require materializing $\alpha \in \sym(\R^d)$ and $\beta \in \sym(\R^d)^{\otimes 2}$ in full.
    Let $k := \dim \U$ denote the dimension of $\U$, which is typically $\ll d$, and let $U \in \R^{d \times k}$ denote a basis for $\U$.
    Then any $\Sigma \in \sym(\U)$ can be expressed as $\Sigma = U X U^\top$ for some $X \in \sym(\R^k)$.  The SDCP condition \cref{eq:sdcp} then reduces to a $k$-dimensional SDCP over $\R^k$:
    \begin{align}
        0 \preceq X \perp \alpha_U + \beta_U[X] \succeq 0 \iff X \in \sdcp_{\R^k}(\alpha_U, \beta_U),
    \end{align}
    where the matrix $\alpha_U \in \sym(\R^k)$ is defined as:
    \begin{align}
        \alpha_U:= U^\top \alpha U \quad\iff\quad (\alpha_U)_{ij} = u_i^\top \alpha u_j \label{eq:sdcp-basis-alpha}
    \end{align}
    and the tensor $\beta_U \in  \sym(\R^k)^{\otimes 2}$ is defined via its action as:
    \begin{align}
        \beta_U[X] := U^\top \beta[U X U^\top] U \quad\iff\quad (\beta_U)_{ijpq} = \beta[u_i,u_j,u_p,u_q]. \label{eq:sdcp-basis-beta}
    \end{align}
    Thus, to solve the original $d$-dimensional problem $\Sigma \in \sdcp_\U(\alpha, \beta)$, one can instead solve the $k$-dimensional problem $X \in \sdcp_{\R^k}(\alpha_U, \beta_U)$, and then represent $\Sigma = U X U^\top$.

    To solve $\sdcp_{\R^k}(\alpha_U, \beta_U)$, we use a standard convex solver to solve the convex program \cref{eq:sdcp-quadratic-program}:
    \begin{align}
        \min_{X \in \sym(\R^k)} \ev{\alpha_U, X} + \tfrac{1}{2} \beta_U[X, X] \quad \text{subject to} \quad X \succeq 0. \label{eq:sdcp-basis-quadratic-program}
    \end{align}

\subsubsection{On local time averaging}\label{appendix:time-average}

We intentionally do not specialize to a specific notion of ``local time-average''. The only properties of the local time-averaging operator $\E$ that we use are:
\begin{enumerate}
    \item linearity, i.e. $\E[f+g] = \E[f] + \E[g]$ and $\E[cf] = c \E[f]$ for any constant $c$
    \item the local time average of a constant $c$ is itself: $\E[c] = c$
    \item in the EOS regime when the sharpness fluctuates around $2/\eta$, the time-average is coarse enough to smooth out these fluctuations so that $S(\E[w_t]) = 2/\eta$
\end{enumerate}
One reason why we do not further define the time-averaging operator is that even the appropriate timescale for the averaging operation (e.g. window size or kernel bandwidth) depends nontrivially on the local dynamics.  Recall that in the relatively simple setting of one unstable eigenvalue that was analyzed in \citet{damian2023selfstabilization}, the EOS dynamics consists of consecutive cycles where the sharpness rises above, then falls below, the critical threshold $2/\eta$.  In this setting, it is natural to choose an averaging timescale so as to average over a cycle.  Yet, the analysis of \citet{damian2023selfstabilization} shows that the length of the cycle depends on the initial position of the iterate along the top Hessian eigenvector at the instant where the sharpness crosses $2/\eta$ (the closer the iterate is to the directionwise optimum, the longer the cycle).  Hence, even the choice of timescale is very nontrivial.

\subsubsection{Smoothness of the central flows}
\label{appendix:smoothness}
Central flows are ordinary differential equations of the form: $\frac{dw(t)}{dt} = f(w)$, where $f$ is not continuous everywhere.
Thus, the ODE should be interpreted in the sense of Carathéodory: $w(t)$ is only differentiable at \emph{almost} all $t$, and can have points of non-differentiability where the left and right derivatives differ.  For example, at the instant when the gradient descent central flow first reaches EOS, the right and left derivatives of $w(t)$ will differ, as the left derivative is $-\eta \nabla L(w)$ while the right derivative is $-\eta \Pi_{\nabla S(w)}^\perp \nabla L(w)$.  However, the central flow is \emph{right}-differentiable for all $t$.  Therefore, when we write $\tfrac{d}{dt}$ (e.g. in \Cref{prop:gd:loss-decrease}), it can either be interpreted as holding for almost all $t$, or it can be alternatively interpreted as a statement about the right derivative.

\subsection{Gradient Descent}\label{appendix:derivations:gd}

We now derive the central flow for gradient descent.  In \Cref{sec:gd:single} we considered the special case where one eigenvalue is at the edge of stability, and is continuing to remain there.  The complete central flow, derived here, applies in the more general setting where multiple eigenvalues are potentially at the edge of stability.  It also allows eigenvalues to enter and leave the edge of stability when appropriate.

This section is structured as follows:
\begin{enumerate}
    \item First, in \Cref{appendix:derivations:gd:dvi}, we  formulate the central flow as a \emph{differential complementarity problem} (DCP): a dynamical system defined implicitly by combining differential equations with complementarity constraints.
    \item Next, in \Cref{appendix:derivations:gd:sdcp}, we show that this DCP can be re-formulated into an ordinary differential equation with an explicit right-hand side that involves the solution to a semidefinite complementarity problem.
    \item In \Cref{appendix:derivations:gd:projection}, we show that the central flow can be equivalently formulated as a \emph{projected gradient flow} that projects the negative gradient onto the tangent cone of the stable region.  Leveraging this projection interpretation, we prove that the central flow decreases the loss monotonically (\Cref{prop:gd:loss-decrease}), but at a slower rate than the unregularized gradient flow (\Cref{prop:gd:slowdown}).
    \item Finally, in \Cref{appendix:derivations:gd:discretizing}, we describe how to discretize the central flow in practice.
\end{enumerate}

\subsubsection{The Differential Complementarity Problem (DCP) formulation}
\label{appendix:derivations:gd:dvi}
The central flow $w(t)$ will model the time-averaged trajectory of gradient descent $\{w_t\}$:
\begin{align}
    w(t) := \E[w_t].
\end{align}
Let $\delta_t := w_t - w(t)$ denote the displacement between gradient descent and the central flow (i.e. ``the oscillation'') at step/time $t$.  From the definition of $w(t)$ as the time-average, it follows that $\E[\delta_t] = 0$.
Let $\Sigma(t) := \E[\delta_t \delta_t^\top]$ denote the covariance of these oscillations.
That is, we are modeling the gradient descent trajectory as:
\begin{align}
    w_t = w(t) + \delta_t, \quad \text{where} \quad \E[\delta_t] = 0 \quad  \text{and}  \quad \E[\delta_t \delta_t^\top] = \Sigma(t).
\end{align}
Recall that gradient descent oscillates along the eigenvectors that are at the edge of stability.
When the Hessian has multiple eigenvalues at $2/\eta$, the corresponding eigenvectors are not individually identifiable, since any linear combination of eigenvectors is also an eigenvector. Instead, what is identifiable is the corresponding eigenspace, i.e. the linear subspace comprising all such eigenvectors. We refer to this eigenspace as the \emph{critical subspace}:
\begin{definition}[Critical subspace for gradient descent]
    Given weights $w \in \mathbb{R}^d$, the \emph{critical subspace} $\U(w) \subseteq{\R^d}$ is defined as the Hessian's eigenspace corresponding to the eigenvalue $2/\eta$:
    \begin{align}
        \U(w) := \ker\qty[H(w) - \tfrac{2}{\eta} I] = \left \{u \in \R^d: H(w) \, u = \tfrac{2}{\eta} u \right \}.
    \end{align}
    \label{def:critical-subspace-gd}
\end{definition}
Thus, we assume that the oscillations $\{\delta_t\}$ are fully contained within the critical subspace:
\begin{align}
    \delta_t \in \U(w(t)) \iff \spn[\Sigma(t)] \subseteq \U(w(t)).
\end{align}
We will now derive the central flow.
By Taylor expansion of $\nabla L$ around $w(t)$ (see \Cref{sec:third-order-taylor-expansions}, \cref{eq:gradient-taylor-expansion}), the gradient at step $t$ is:
\begin{align}
    \nabla L(w_t) \approx \nabla L(w(t)) + H (w(t)) \delta_t + \tfrac{1}{2} \nabla H(w(t))^\top[\delta_t \delta_t^\top].
\end{align}
Time-averaging both sides and using that $\E[\delta_t]=0$ and $\E[\delta_t \delta_t^\top] = \Sigma(t)$ gives:
\begin{align}
    \E[\nabla L(w_t)] \approx \nabla L(w(t)) + \tfrac{1}{2} \nabla H(w(t))^\top[\Sigma(t)].
\end{align}
Therefore, we make the ansatz that the central flow $w(t)$ takes the form:
\begin{align}
    \frac{dw}{dt} = -\eta \qty[\nabla L(w) + \tfrac{1}{2} \nabla H(w)^\top[\Sigma(t)]],
    \label{eq:appendix:gd:ansatz}
\end{align}
for some unknown $\Sigma(t)$ which we will now determine.

To solve for $\Sigma(t)$, we impose three conditions for all times $t$: 
\begin{enumerate}
    \item \textbf{PSD:} As a covariance matrix, $\Sigma(t)$ is positive semidefinite (PSD): $\Sigma(t) \succeq 0$.
    \item \textbf{Stability:} The sharpness remains bounded by $2/\eta$: $H(w(t)) \preceq (2/\eta) I$.
    \item \textbf{Complementarity:} The oscillations are contained within the critical subspace: $\spn[\Sigma(t)] \subseteq \U(w(t))$.
\end{enumerate}
It will now be helpful to define the ``residual'' matrix $A(w)$ as:
\begin{align}
    A(w) := \tfrac{2}{\eta} I - H(w).
\end{align}
Notably, $\ker A(w)$ is precisely the critical subspace $\U(w)$, i.e. the eigenspace of $H(w)$ with eigenvalue $2/\eta$. With this notation, Conditions 1-3 can be expressed as:
\begin{align}
    \underbrace{\Sigma(t) \succeq 0}_{\text{PSD}}, \quad \underbrace{A(w(t)) \succeq 0}_{\text{stability}}, \quad \underbrace{\spn[\Sigma(t)] \subseteq  \ker A(w(t))}_{\text{complementarity}}.
\end{align}
As discussed in \Cref{sec:derivations:complementarity}, these three conditions are equivalent to the complementarity relation: 
\begin{align}
    \Sigma(t) \succeq 0, \quad A(w(t)) \succeq 0,  \quad  \Sigma(t) \perp A(w(t)),
\end{align}
which we write more compactly as:
\begin{align}
    0 \preceq \Sigma(t) \perp A(w(t)) \succeq 0. \label{eq:gd:complementarity-relation}
\end{align}

We say $(w(t),\Sigma(t))$ follow the central flow if they satisfy \cref{eq:appendix:gd:ansatz} along with this complementarity relation:\footnote{We only require $w(t),\Sigma(t)$ to satisfy \cref{eq:appendix:gd:ansatz} for ``almost all $t$'' as $w(t)$ may not be differentiable when an eigenvalue enters or leaves EOS. This is in line with the standard definition of a differential complementarity problem.}

\newtheoremstyle{named}{1em}{1em}{\itshape}{}{\bfseries}{}{1em}{\thmnote{#3}}
\theoremstyle{named}
\newtheorem*{flow}{}

\begin{definition}[Gradient Descent Central Flow, DCP Formulation] \label{def:gd:flow:dcp}
    We say that $\{(w(t),\Sigma(t))\}_{t \ge 0}$ follow the gradient descent central flow if, for almost all $t$, they satisfy \cref{eq:appendix:gd:ansatz} along with the conditions: $0 \preceq \Sigma(t) \perp A(w(t)) \succeq 0$, where $A(w) := \frac{2}{\eta} I - H(w)$.
\end{definition}
\Cref{def:gd:flow:dcp} is an example of a \emph{differential complementarity problem} (DCP) \citep{stewart2011dynamics}, which are described in more detail in \Cref{sec:derivations:dcp}.  In a DCP, $\Sigma(t)$ is not defined explicitly, but is rather defined \emph{implicitly} via a complementarity relation that the trajectory $(w(t), \Sigma(t))$ is required to satisfy.

A priori, it is not clear that a feasible $\Sigma(t)$ exists or is unique.  To give an explicit expression for $\Sigma(t)$, and thereby turn \Cref{def:gd:flow:dcp} into an ODE with an explicit right-hand side, we will next show that for almost all times $t$, $\Sigma(t)$ must be the unique solution to a certain semidefinite complementarity problem.

\subsubsection{The Ordinary Differential Equation (ODE) formulation}
\label{appendix:derivations:gd:sdcp}

Before deriving the ODE formulation of the central flow, let us first explain why the DCP formulation, \Cref{def:gd:flow:dcp}, fails to immediately specify $\Sigma(t)$.
At any instant $t$, there can be multiple $\Sigma$'s which satisfy $0 \preceq \Sigma \perp A(w(t)) \succeq 0$; for example, the trivial choice $\Sigma = 0$ always works.
Yet, most of these $\Sigma$'s would cause the stability constraint $A(w(t+\epsilon)) \succeq 0$ to be violated if the dynamics \cref{eq:appendix:gd:ansatz} are run for an infinitesimal amount of time $\epsilon$.  Intuitively, we need to meld together the static constraint $0 \preceq \Sigma(t) \perp A(w(t)) \succeq 0$ with the dynamics \cref{eq:appendix:gd:ansatz}.

To do so, we appeal to \Cref{corollary:right_continuous_sdcp} in \Cref{sec:derivations:dcp}.
This result essentially ``differentiates'' the complementarity relation $0 \preceq \Sigma(t) \perp A(w(t)) \succeq 0$, to yield a new complementarity relation between $\Sigma(t)$ and the time derivative $\frac{d}{dt} A(w(t))$.
In particular, \Cref{corollary:right_continuous_sdcp} implies that under the flow defined by \Cref{def:gd:flow:dcp}, we must have:
\begin{align}
    0 \preceq \Sigma(t) \perp \tfrac{d}{dt} A(w(t)) \succeqOver{\U(w)}  0. \label{eq:gd:differential-complementarity-relation}
\end{align}
We have thus turned a ``position-level'' constraint on the residual $A(w(t))$ into a ``velocity-level'' constraint on its time derivative $\tfrac{d}{dt} A(w(t))$. We now expand $\tfrac{d}{dt} A(w(t))$ to reveal its dependence on $\Sigma(t)$:
\begin{align*}
    \tfrac{d}{dt} A(w(t)) &= \nabla A(w) \qty[\tfrac{dw}{dt}] \tag{chain rule} \\
    &= - \nabla H(w) \qty[ \tfrac{dw}{dt} ] \tag{definition of $A$} \\
    &= - \nabla H(w) \qty[  -\eta \qty[ \nabla L(w) + \tfrac{1}{2} \nabla H(w)^\top[\Sigma(t)] ] ] \tag{form of $\tfrac{dw}{dt}$} \\
    &=  \underbrace{\eta \nabla H(w) \qty[\nabla L(w)]}_{=:\alpha(w)} + \underbrace{\tfrac{1}{2} \eta \nabla H(w) \nabla H(w)^\top }_{=:\beta(w)} \qty[\Sigma(t)] \tag{linearity}
\end{align*}
This reveals that $\tfrac{d}{dt} A(w(t))$ is \emph{affine} in $\Sigma(t)$.
Namely, if we define the matrix $\alpha(w)$ and tensor $\beta(w)$ as:
\begin{align}
    \alpha(w) := \eta \nabla H(w) \qty[\nabla L(w)] \in \sym(\R^d), \quad \beta(w) := \tfrac{\eta}{2} \nabla H(w) \nabla H(w)^\top \in \sym(\R^d)^{\otimes 2}, \label{eq:appendix:gd:alpha-beta}
\end{align}
then  $\tfrac{d}{dt} A(w(t))$ is given by the affine expression:
\begin{align}
     \tfrac{d}{dt} A(w(t)) = \alpha(w) + \beta(w)[\Sigma(t)].
\end{align}

Substituting this into \cref{eq:gd:differential-complementarity-relation} implies that $\Sigma(t)$ must satisfy the complementarity relation:
\begin{align}
    0 \preceq \Sigma(t) \perp \alpha(w) + \beta(w)[\Sigma(t)]  \succeqOver{\U(w)} 0. \label{eq:gd:sdcp}
\end{align}
Since $\Sigma(t) \in \sym(\U(w))$, this is precisely an SDCP (\Cref{sec:derivations:sdcp}) defined over the critical subspace $\U(w)$:
\begin{align}
\Sigma(t) \in \sdcp_{\U(w)} \qty(\alpha(w), \beta(w)). \label{eq:appendix:gd:sigmat}
\end{align}

Thus, we are now ready to state the ODE formulation of the central flow.

\begin{definition}[Gradient Descent Central Flow, ODE Formulation]\label{def:gd:flow:ode}
    We say $\{w(t)\}_{t \ge 0}$ follows the \gd central flow if for almost all $t \ge 0$, $w(t)$ satisfies \cref{eq:appendix:gd:ansatz} for some $\Sigma(t) \in \sdcp_{\U(w(t))}\qty(\alpha(w(t)), \beta(w(t)))$.
\end{definition}

This DCP-to-ODE conversion can be straightforwardly generalized to a broader class of DCPs, and this is done in \Cref{sec:derivations:dcp}, \Cref{lem:dcp_to_sdcp}.  Our subsequent central flow derivations will directly invoke \Cref{lem:dcp_to_sdcp}.

\paragraph{Existence and uniqueness of SDCP solution $\Sigma(t)$}
Recall from \Cref{sec:derivations:sdcp}, \Cref{lemma:sdcp:unique} that the SDCP \cref{eq:gd:sdcp} will have a unique solution $\Sigma(t)$ when $\beta(w)$ is symmetric and positive definite as a linear operator over $\sym(\U)$.
Due to its outer product structure, $\beta(w)$ is always symmetric and positive \emph{semi}-definite.
If $\beta$ is also full rank as an operator acting on $\sym(\U(w))$, then it is positive \emph{definite}, and $\Sigma(t)$ is unique.
Empirically, in our experiments, we always do observe that this full-rank condition is satisfied (it is equivalent to full-rankness of $\beta_U(w)$ defined below in \cref{eq:alpha_beta_U_efficient}), and thus $\Sigma(t)$ is unique.
Note that existence and uniqueness of $\Sigma(t)$ does not necessarily imply existence and uniqueness of the central flow ODE.

\paragraph{Existence and uniqueness of central flow}
Provided that $\beta(w)$ is full-rank as an operator on $\sym(\U(w))$ for all $w$, prior results imply existence and uniqueness for the gradient descent central flow (see \Cref{sec:derivations:dcp}).

\paragraph{One unstable eigenvalue}
As a sanity check, we now verify that when there is one eigenvalue at the edge of stability (i.e. when the critical subspace has dimension 1), \Cref{def:gd:flow:ode} recovers the central flow defined in \Cref{sec:gd:single}.

In general, a subspace $\U$ of dimension 1 has the form $\U = \spn u$
 for some $u \in \R^d$, so $\sym(\U) = \{ \sigma^2 \, u u^\top : \sigma^2 \in \R \}$, and $\sdcp_\U(\alpha, \beta)$ reduces to a 1-dimensional SDCP:
 \begin{align*}
     \sdcp_\U(\alpha, \beta) = \sigma^2 u u^\top \qc \sigma^2 = \sdcp_\R(\alpha_u, \beta_u) \qc \underbrace{\alpha_u := u^\top \alpha \, u}_{\in \R} \qc \beta_u :=  \underbrace{u^\top \beta[u u^\top ] u}_{\in \R}.
 \end{align*}
 Thus, $\sigma^2$ has the closed-form solution described in \Cref{remark:sdcp-1d}:
\begin{align*}
\sigma^2 = \max \qty(- \frac{\alpha_u}{ \beta_u } , 0).
\end{align*}
Therefore, when there is one eigenvalue at the edge of stability, $\Sigma(t)$ from \cref{eq:appendix:gd:sigmat} becomes:
\begin{align*}
    \Sigma(t) = \sigma^2 u u^\top \qc \sigma^2 = \max \qty( - \frac{ \alpha_u(w)}{\beta_u(w)} ) \qc \alpha_u(w) = u^\top \alpha(w) u \qc \beta_u(w) = u^\top \beta(w)[ u u^\top] u,
\end{align*}
where $u \in \R^d$ is the top eigenvector of $H(w)$ at $w$, and $\alpha(w), \beta(w)$ were defined in \cref{eq:appendix:gd:alpha-beta}.
These simplify to:
\begin{align}
    \alpha_u(w) = \eta \nabla L(w) ^\top \nabla S(w) \qc \beta_u(w) = \tfrac{\eta}{2} \| \nabla S(w) \|^2,
\end{align}
where we recall that $S(w)$ denotes the top eigenvalue of $H(w)$ at $w$.
Therefore:
\begin{align}
    \sigma^2 = \max \qty( \frac{- 2\nabla L(w)^\top \nabla S(w) }{  \|\nabla S(w)\|^2 }, 0 ).
\end{align}
When $\ev{-\nabla L(w), \nabla S(w)} > 0$, i.e. when progressive sharpening holds, this recovers \cref{eq:gd_x}.
Else, $\sigma^2 = 0$, and the central flow will leave the edge of stability.

Finally, since $\Sigma(t) = \sigma^2 u u^\top$, \cref{eq:appendix:gd:ansatz} reduces to:
\begin{align*}
    \frac{dw}{dt} = - \eta \qty[ \nabla L(w) + \tfrac{1}{2} \sigma^2 \nabla S(w) ],
\end{align*}
which recovers \cref{eq:gd:single}.

\paragraph{Predicting time-averages}
The central flow can predict the time-average of various quantities, such as the loss or squared gradient norm, along the \gd trajectory.  For any quantity $f(w)$, we write $\bar{f}(t)$ for the central flow's prediction for $\E[f(w_t)]$ at step $t$.

For example, the central flow's prediction $\bar L(t)$ for the time-averaged loss $\E[L(w_t)]$ at step $t$ is given by:
\begin{align}
    \E[L(w_t)] &=\E[L(w(t) + \delta_t)] \nonumber \\
     &\approx \E\qty[L(w(t)) + \nabla L(w(t))^\top\delta_t  + \tfrac{1}{2} \delta_t^\top H(w(t)) \delta_t ] \hspace{10px} \nonumber \tag{Taylor expansion} \\
    &= L(w(t)) + \tfrac{1}{2} \ev{ H(w(t)), \Sigma(t)} \nonumber \tag{$\E[\delta_t] = 0$, $\E[\delta_t \delta_t^\top] = \Sigma(t)$}  \\
    &= L(w(t)) + \tfrac{1}{2} \tr \qty[\tfrac{2}{\eta} \Sigma ]    \tag{$H \Sigma = \tfrac{2}{\eta} \Sigma$} \nonumber \\
    &= L(w(t)) + \tfrac{1}{\eta} \tr \Sigma(t) \nonumber \\
    &:= \bar L(t).  \label{eq:gd-predict-loss}
\end{align}
Similarly, the prediction for the time-averaged squared gradient norm $\E[\|\nabla L(w_t)\|^2]$ at step $t$ is:
\begin{align}
    \E[\|\nabla L(w_t)\|^2] &\approx \E\qty[ \| \nabla L(w(t)) + H(w(t)) \delta_t \|^2 ] \nonumber \\
    &= \|\nabla L(w(t)) \|^2 + \ev{ H^2(w(t)), \Sigma(t)} \nonumber \\   
    &= \| \nabla L(w(t)) \|^2 + \tfrac{4}{\eta^2} \tr \Sigma(t). \nonumber \\
    &=:  \overline{\| \nabla L (t)\|^2}
    \label{eq:gd-predict-gradient-norm-sq}
\end{align}
In general, for any function $f(w)$, the central flow predicts that the time-average of $f(w_t)$ at step $t$ is:
\begin{align}
     \E[f(w_t)] &\approx \E \qty[ f(w(t)) + \nabla f(w(t))^\top \delta_t + \tfrac{1}{2} \delta_t^\top \nabla^2 f(w(t)) \delta_t ] \nonumber \\
    &=f(w(t)) + \tfrac{1}{2} \ev{\nabla^2 f(w(t)), \Sigma(t)} \nonumber \\
    &:= \bar{f}(t) \label{eq:gd-predict-general}
\end{align}
(Note: our prediction \cref{eq:gd-predict-gradient-norm-sq} for the squared gradient norm does not fit this template, as we choose to do a first-order of expansion of $\nabla L$ and then take the norm, rather than do a second-order expansion of $f(w) = \| \nabla L(w) \|^2$.)

The central flow can also predict the covariance with which \gd oscillates around the central flow.
Let $\Sigma(t) = V(t) \, \Lambda(t) \,  V(t)^\top$ be the (reduced) eigenvalue decomposition of the rank-$k$ matrix $\Sigma(t)$, where $V(t) \in \R^{d \times k}$ and $\Lambda(t) \in \diag(\R^k)$.  Define $x_t := V(t)^\top (w_t - w(t)) \in \R^k$ as the displacement of \gd from the central flow along these eigenvectors.
Then the central flow predicts that the covariance of these displacements is:
\begin{align*}
    \E[x_t x_t^\top] = V(t)^\top \E[\delta_t \delta_t^\top] V(t) = V(t)^\top \Sigma(t) V(t) = \Lambda(t).
\end{align*}
In particular, if we consider the $i$-th diagonal entry, the central flow predicts that the variance of oscillations along the $i$-th eigenvector of $\Sigma(t)$ should be equal to the $i$-th eigenvalue of $\Sigma(t)$:
\begin{align}
    \E \qty[\qty(v_i(t)^\top(w_t - w(t)))^2] = \lambda_i(t). \label{eq:gd-predict-oscillation-variance}
\end{align}

\paragraph{Basis-dependent version}
Naively computing the central flow's $\tfrac{dw}{dt}$ would be impractical, as storing $\Sigma(t)$ and $\alpha(w)$ would require $O(d^2)$ space, and storing $\beta(w)$ would require $O(d^4)$ space.
Fortunately, because all necessary quantities are supported on the low-rank critical subspace, the central flow's $\tfrac{dw}{dt}$ can be computed efficiently using only $O(k^2d + k^4)$ space, where $k = \dim \U(w)$ is the dimension of the critical subspace, which is typically $\ll d$.

In particular, fix $t$, and let $U \in \R^{d \times k}$ be a basis for the critical subspace $\U(w(t))$.
Then recall from \Cref{sec:derivations:sdcp} that $\Sigma(t)$ can be represented as $\Sigma(t) = U X U^\top$ for some low-dimensional matrix $X \in \sym(\R^k)$ that solves\linebreak$X \in \sdcp_{\R^k}(\alpha_U(w), \beta_U(w))$, where $\alpha_U \in \sym(\R^k)$ and $\beta_U \in \sym(\R^k)^{\otimes 2}$ were defined in \cref{eq:sdcp-basis-alpha,eq:sdcp-basis-beta}.

Now we define $H_U(w) := U^\top H(w) U \in \sym(\R^k)$ and its gradient $\nabla H_U(w) \in \sym(\R^k) \otimes \R^d$:
\begin{align}
    H_U(w)_{ij} := u_i^\top H(w) u_j \qand \nabla H_U(w)_{ij} := \nabla_w [u_i^\top H(w) u_j]. \label{eq:appendix:gd:nabla-H-U}
\end{align}
The tensor $\nabla H_U(w)$ only requires $O(k^2d)$ space to store, and can be computed in $O(k^2 d)$ time by looping over all pairs $(u_i, u_j)$ of columns of $U$ and computing the third derivative $\nabla_w \qty[u_i^\top H(w) u_j] \in \R^d$.
Crucially, computing $\tfrac{dw}{dt}$ only requires access to the smaller $\nabla H_U(w)$ rather than the full $\nabla H(w)$.
To see this, note that:
\begin{align}
    \nabla H_U(w)[v] = U^\top \nabla H(w)[v] U \qand \nabla H_U(w)^\top[X] = \nabla H(w)^\top[U X U^\top].
\end{align}
Thus, the central flow \cref{eq:appendix:gd:ansatz} takes the form:
\begin{align}
    \frac{dw}{dt} &= -\eta \qty[ \nabla L(w) + \tfrac{1}{2} \nabla H^\top_U(w)[X] ], \label{eq:central-flow-efficient}
\end{align}
and $\alpha_U(w) \in \sym(\R^k), \beta_U(w) \in \sym(\R^k)^{\otimes 2}$ take the form:
\begin{align}
    \alpha_U(w) = \eta \nabla H_U(w)\qty[\nabla L(w)]  \qc
    \beta_U(w) = \tfrac{\eta}{2} \nabla H_U(w) \, \nabla H_U(w)^\top \label{eq:alpha_beta_U_efficient}.
\end{align}
Thus, to compute $\tfrac{dw}{dt}$, we can compute $\nabla H_U(w)$, then use this to compute $\alpha_U(w)$ and $\beta_U(w)$ via \cref{eq:alpha_beta_U_efficient}, then solve the $k$-dimensional problem $X \in \sdcp_{\R^k}(\alpha_U(w), \beta_U(w))$, and then compute $\tfrac{dw}{dt}$ via \cref{eq:central-flow-efficient}.

In practice, due to the non-smoothness of the central flow, we do not discretize the central flow by computing $\tfrac{dw}{dt}$ and taking an Euler step; instead, we directly discretize the DCP formulation, as described in \Cref{sec:derivations:dcp:discretize}.

The time-averaged predictions can also be computed efficiently given a basis.  If we pick $U$ to be orthonormal $(U^\top U = I)$, then the central flow's prediction \cref{eq:gd-predict-loss} for the time-averaged training loss at step $t$ is:
\begin{align}
    \bar L(t) &:= L(w(t)) +  \tfrac{1}{\eta} \tr \qty[U X U^\top ] \nonumber \\
     &= L(w(t)) +  \tfrac{1}{\eta} \tr \qty[ X U^\top U ] \nonumber \\
    &= L(w(t)) +  \tfrac{1}{\eta} \tr X. \label{eq:appendix:gd:time-average-loss-basis}
\end{align}
Similarly, the prediction \cref{eq:gd-predict-gradient-norm-sq} for the time-averaged squared gradient norm at step $t$ is:
\begin{align}
    \overline{ \| \nabla L(t) \|^2} &:=  \|\nabla L(w(t))\|^2 +  \tfrac{4}{\eta^2} \tr \qty[ U X U^\top ]. \nonumber \\
    &=  \|\nabla L(w(t))\|^2 +  \tfrac{4}{\eta^2}  \tr X. \label{eq:appendix:gd:time-average-sq-gradient-norm-basis}
\end{align}
In general, for any function $f(w)$, the prediction \cref{eq:gd-predict-general} can be computed as:
\begin{align*}
    \bar f(t) &= f(w(t)) + \tfrac{1}{2} \ev{U^\top \nabla^2 f(w(t)) U, X}.
\end{align*}
As for predicting the oscillation covariance, we can evaluate both sides of \cref{eq:gd-predict-oscillation-variance} without needing to materialize $\Sigma(t)$ in full.
If $X = U_X  \, \Lambda(t) \, U_X^\top$ denotes the eigenvalue decomposition of $X$, and if we define $V(t) = U U_X$, then $\Sigma(t) = V(t) \, \Lambda(t) \, V(t)^\top$ is the eigenvalue decomposition of $\Sigma(t)$. 
Note that $U_X$ and $X$ will depend on the basis $U$, while $V(t)$ and $\Lambda(t)$ are independent of $U$.

\paragraph{Smoothness of the central flow}
At a finite set of times, a new eigenvalue enters or leaves the edge of stability. 
We refer to these instants as \emph{breakpoints}.
In between the breakpoints, $\Sigma(t)$ is continuous and $w(t)$ is differentiable.
Moreover, the SDCP is solved by the linear inverse $\Sigma = -U \beta_U^{-1}[\alpha_U] U^\top$ where $\alpha_U,\beta_U$ are defined in \cref{eq:alpha_beta_U_efficient} (see \Cref{remark:linear-inverse}).  Further, $\tfrac{d}{dt} A(w(t)) \evalshort_{\U(w)} = 0$, i.e. all Hessian eigenvalues that are at EOS remain fixed at $2/\eta$.
However, at the breakpoints, $\Sigma(t)$ is discontinuous and $w(t)$ is not differentiable (although they are still right-continuous and right-differentiable, respectively).

\subsubsection{The Projection Formulation}
\label{appendix:derivations:gd:projection}

In this section we will show that the \gd central flow (\Cref{def:gd:flow:dcp} and \Cref{def:gd:flow:ode}) can be equivalently interpreted as projected gradient flow constrained to the \emph{stable set} $\mathbb{S}_\eta$, i.e. the subset of weight space where gradient descent is locally stable:
\begin{align}
    \mathbb{S}_\eta := \{w ~:~ S(w) \le 2/\eta\}.
\end{align}
For general constrained optimization problems, a projected gradient flow projects the negative gradient onto the \emph{tangent cone} of the constraint set before taking an infinitesimal step.  The tangent cone consists of the set of allowable directions that would not cause any constraints to be violated.

In our case, the tangent cone $T_{\mathbb{S}_\eta}(w)$ of the stable set $\mathbb{S}_\eta$ at the point $w \in \mathbb{S}_\eta$ is the set of directions that, to first order, would not increase the sharpness if we moved in that direction from $w$. This tangent cone is given by:
\begin{align}
    T_{\mathbb{S}_\eta}(w) = \{z \in \mathbb{R}^d: \nabla H(w)[z] \preceqOver{\U(w)} 0 \}.
\end{align}
Note that this is a convex cone, since it is closed under linear combinations with non-negative weights.

We use $\mathrm{proj}_M(\cdot)$ to denote the usual Euclidean projection onto a set $M \subseteq \R^d$:
\begin{align}
    \mathrm{proj}_M(v) = \argmin_{z \in M}  \| v - z \|_2^2.
\end{align}

Projecting a vector onto the tangent cone of the stable set involves solving a certain SDCP:

\begin{lemma}\label{lem:gd:SDCP_to_proj}
    The projection of a vector $v \in \R^d$ onto the tangent cone of $ \mathbb{S}_\eta$ at $w \in \mathbb{S}_\eta$ is given by:
    \begin{align}
        \mathrm{proj}_{T_{\mathbb{S}_\eta}(w)}[v] = v - \tfrac{1}{2} \nabla H(w)^\top[\Sigma]
    \quad \text{where} \quad
        \Sigma \in \sdcp_{\U(w)}\qty(-\nabla H(w)[v], \tfrac{1}{2} \nabla H(w) \nabla H(w)^\top),
    \end{align}
    where $\U(w) := \ker \qty[H(w) - \tfrac{2}{\eta} I]$ is the critical subspace (\Cref{def:critical-subspace-gd}).
\end{lemma}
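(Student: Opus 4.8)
The plan is to characterize the projection onto the tangent cone $T_{\mathbb{S}_\eta}(w)$ as the solution to a constrained quadratic minimization, write down its KKT conditions, and recognize those conditions as an SDCP. First I would set up the projection as the convex program $\min_{z} \tfrac12\|v-z\|_2^2$ subject to $z \in T_{\mathbb{S}_\eta}(w)$, i.e. subject to $\nabla H(w)[z] \preceqOver{\U(w)} 0$. This is a convex problem: the objective is strictly convex, and the constraint set is the convex cone described in the excerpt, so the minimizer exists and is unique. I would then introduce a matrix-valued Lagrange multiplier $\Sigma \in \sym(\U(w))$ for the semidefinite constraint $\nabla H(w)[z]\evalshort_{\U(w)} \preceq 0$ (equivalently $-\nabla H(w)[z] \succeqOver{\U(w)} 0$), and write the Lagrangian $\mathcal{L}(z,\Sigma) = \tfrac12\|v-z\|_2^2 + \ev{\Sigma, \nabla H(w)[z]}$, with $\Sigma \succeq 0$.

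Next I would extract the KKT conditions. Stationarity in $z$: using $\ev{\Sigma, \nabla H(w)[z]} = \ev{\nabla H(w)^\top[\Sigma], z}$ (the adjoint relation recorded in \Cref{sec:flow_derivations:notation}), the gradient in $z$ gives $-(v-z) + \nabla H(w)^\top[\Sigma] = 0$, hence $z = v - \nabla H(w)^\top[\Sigma]$. Wait — I should be careful about the factor of $\tfrac12$: the lemma states $z = v - \tfrac12 \nabla H(w)^\top[\Sigma]$, so I would simply absorb a factor of $2$ into the definition of the multiplier (replace $\Sigma$ by $\tfrac12\Sigma$ in the Lagrangian, or equivalently scale at the end); this is a cosmetic normalization that also matches the $\tfrac12$ appearing in the $\beta$ tensor. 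With the rescaled multiplier, stationarity reads $z = v - \tfrac12\nabla H(w)^\top[\Sigma]$. The remaining KKT conditions are: primal feasibility $-\nabla H(w)[z] \succeqOver{\U(w)} 0$, dual feasibility $\Sigma \succeq 0$, and complementary slackness $\ev{\Sigma, \nabla H(w)[z]} = 0$. Substituting the expression for $z$ into $-\nabla H(w)[z]$ gives
\begin{align*}
    -\nabla H(w)[z] = -\nabla H(w)[v] + \tfrac12 \nabla H(w)\nabla H(w)^\top[\Sigma],
\end{align*}
which is exactly the affine map $\alpha + \beta[\Sigma]$ with $\alpha = -\nabla H(w)[v]$ and $\beta = \tfrac12\nabla H(w)\nabla H(w)^\top$. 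Thus the KKT system becomes precisely $0 \preceq \Sigma \perp \alpha + \beta[\Sigma] \succeqOver{\U(w)} 0$, i.e. $\Sigma \in \sdcp_{\U(w)}(-\nabla H(w)[v], \tfrac12\nabla H(w)\nabla H(w)^\top)$, as claimed.

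Finally I would address sufficiency and the restriction to $\U(w)$. Because the quadratic program is convex with a feasible region that (for a vector $v$, and with the constraint being a cone through the origin) satisfies a trivial Slater-type condition ($z=0$ is strictly feasible whenever the constraint is nontrivial, and if $H(w)$ has no eigenvalue at $2/\eta$ the constraint is vacuous and both sides reduce to $v$), strong duality holds and the KKT conditions are both necessary and sufficient for optimality; hence any $\Sigma$ solving the SDCP yields the projection via $z = v - \tfrac12\nabla H(w)^\top[\Sigma]$, and conversely. One subtlety I would spell out: the constraint only involves $\nabla H(w)[z]$ through its restriction to $\U(w)$, so without loss of generality the multiplier $\Sigma$ can be taken in $\sym(\U(w))$ — any component of $\Sigma$ outside $\U(w)$ contributes nothing to the Lagrangian and can only hurt dual feasibility — which is why the SDCP is posed over the subspace $\U(w)$; this is where I would invoke the identification $\sym(\U(w)) = \{U X U^\top\}$ from the preliminaries. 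The main obstacle is getting the duality/KKT bookkeeping right for a \emph{matrix} (semidefinite) constraint living on a subspace — in particular justifying strong duality for the conic semidefinite constraint and handling the $\evalshort_{\U(w)}$ restrictions cleanly — but this is standard convex-analysis machinery rather than a deep difficulty, and the algebra connecting the KKT system to the $(\alpha,\beta)$ of the SDCP is routine once the $\tfrac12$ normalization is fixed.
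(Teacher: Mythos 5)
Your proof takes essentially the same route as the paper's: set up the projection as a semidefinite-constrained quadratic program, introduce a multiplier $\Sigma \in \sym(\U(w))$, read off the KKT conditions, and observe that after substituting the stationarity relation $z = v - \tfrac12 \nabla H(w)^\top[\Sigma]$ the complementarity and feasibility conditions are exactly the SDCP $0 \preceq \Sigma \perp -\nabla H(w)[v] + \tfrac12\nabla H(w)\nabla H(w)^\top[\Sigma] \succeqOver{\U(w)} 0$. The only cosmetic difference is that the paper parameterizes by $\delta = z - v$ with objective $\|\delta\|^2$, which makes the $\tfrac12$ appear directly from stationarity rather than via the multiplier rescaling you perform; both are correct. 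One small quibble on your side remark: $z = 0$ is \emph{not} strictly feasible for the constraint $-\nabla H(w)[z] \succeqOver{\U(w)} 0$, since it gives $\nabla H(w)[0] = 0$, which lies on the boundary of the cone rather than its interior, so your Slater argument as phrased does not go through. This does not affect the main derivation (the paper's own proof also just reads off the KKT conditions and does not establish strong duality; existence and uniqueness of the SDCP solution is handled separately via positive-definiteness of $\beta$ in \Cref{lemma:sdcp:unique}), but the claim should be dropped or replaced with a correct sufficient condition (e.g.\ existence of $z$ with $\nabla H(w)[z] \precOver{\U(w)} 0$, which holds when $\nabla H(w)$ is surjective onto $\sym(\U(w))$).
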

\begin{proof}
    Recall that the tangent cone of $\mathbb{S}_\eta$ is the set: $\{z\in \R^d ~:~ \nabla H(w)[z] \preceqOver{\U(w)} 0\}$.  Therefore, the projection of $v$ onto this set is given by:
    \begin{align}
        \mathrm{proj}_{T_{\mathbb{S}_\eta}(w)}[v] = v + \delta^*,
    \end{align}
    where the perturbation $\delta^*$ is the optimal solution to the optimization problem:
    \begin{align}
        \min_\delta \|\delta\|^2 \qq{such that} \nabla H(w)[v + \delta] \preceqOver{\U(w)} 0 \label{eq:projection_optimization_problem}.
    \end{align}
    This is a quadratic program with a semidefinite constraint.
    Introducing a dual variable $\Sigma \in \sym(\U(w))$, the KKT conditions for this optimization problem are:
    \begin{align}
        \underbrace{\delta = -\tfrac{1}{2} \nabla H(w)^\top[\Sigma]}_{\text{stationarity}} \qc \underbrace{\ev{\Sigma, \nabla H(w)[v + \delta]} = 0}_{\text{complementary  slackness}} \qc \underbrace{\nabla H(w) [v + \delta] \preceqOver{\U(w)} 0}_{\text{primal feasibility}} \qc \underbrace{\Sigma \succeq 0}_{\text{dual feasibility}}.
    \end{align}
    Substituting the first condition into the middle two yields the following three conditions:
    \begin{align}
        0 \preceq \Sigma \perp -\nabla H(w)[v] + \tfrac{1}{2} \nabla H(w)\nabla H(w)^\top[\Sigma] \succeqOver{\U(w)} 0.
        \label{eq:projection:conditions}
    \end{align}
    We recognize these as precisely the characterization of an SDCP:
    \begin{align}
    \Sigma \in \sdcp_{\U(w)} \qty(-\nabla H(w)[v], \tfrac{1}{2} \nabla H(w) \nabla H(w)^\top).
    \label{eq:gd:projection-sdcp}
    \end{align}
    Therefore, if $\Sigma$ satisfies \cref{eq:gd:projection-sdcp}, then $\delta = - \tfrac{1}{2} \nabla H(w)^\top[\Sigma]$ is an optimal solution to the optimization problem \cref{eq:projection_optimization_problem}, and $v + \delta$ is the desired projection.
\end{proof}

We are now ready to state the projection formulation of the \gd central flow:

\begin{definition}[Gradient Descent Central Flow, Projection Formulation]\label{def:gd:flow:proj} We say that $\{w(t)\}_{t \ge 0}$ follows the \gd central flow if for almost all $t$,
    \begin{align}
        \frac{dw}{dt} = \mathrm{proj}_{T_{\mathbb{S}_\eta}(w)}[-\eta \nabla L(w)] \qq{where} \mathbb{S}_\eta := \{w ~:~ S(w) \le 2/\eta\}.
        \label{eq:gd_general_projection}
    \end{align}
\end{definition}
The equivalence between the projection formulation (\Cref{def:gd:flow:proj}) and the ODE formulation (\Cref{def:gd:flow:ode}) follows from applying \Cref{lem:gd:SDCP_to_proj} to the vector $v = - \eta \nabla L(w)$ and noting $\sdcp(\alpha,\beta) = \sdcp(c \alpha, c \beta)$ for $c > 0$.

\paragraph{Understanding the projection formulation}
We now give intuition for the projection formulation:
\begin{itemize}
    \item When $S(w) < 2/\eta$, $w$ is in the interior of $\mathbb{S}_\eta$ so $\U(w) = \emptyset$, the tangent cone is the entire space, and the projection is the identity map. Therefore \cref{eq:gd_general_projection} reduces to \gflow.

    \item When there is a single eigenvalue at $2/\eta$, $w$ is on the boundary of $\mathbb{S}_\eta$ and the tangent cone is given by the halfspace: $T_{\mathbb{S}_\eta}(w) = \{v:\ev{\nabla S(w), v} \le 0\}$. If the negative gradient lies outside this halfspace (i.e. if \gflow threatens to increase the sharpness above $2/\eta$), then the projection onto the halfspace is given by the projection onto the hyperplane: $-\eta \Pi_{\nabla S(w)}^\perp \nabla L(w)$ (see \Cref{fig:projection-illustration}). But, if the negative gradient already lies in the halfspace, the projection is the identity map, so the central flow follows \gflow and leaves EOS.

    \item In general, computing the projection onto $T_{\mathbb{S}_\eta(w)}$ requires solving a semidefinite quadratic program for which $\Sigma$ is the Lagrangian dual variable. The KKT conditions of this quadratic program are equivalent (up to a constant) to the SDCP that defines $\Sigma$ above.
\end{itemize}

\begin{figure}[t]
    \centering
    \includegraphics[width=0.45\linewidth]{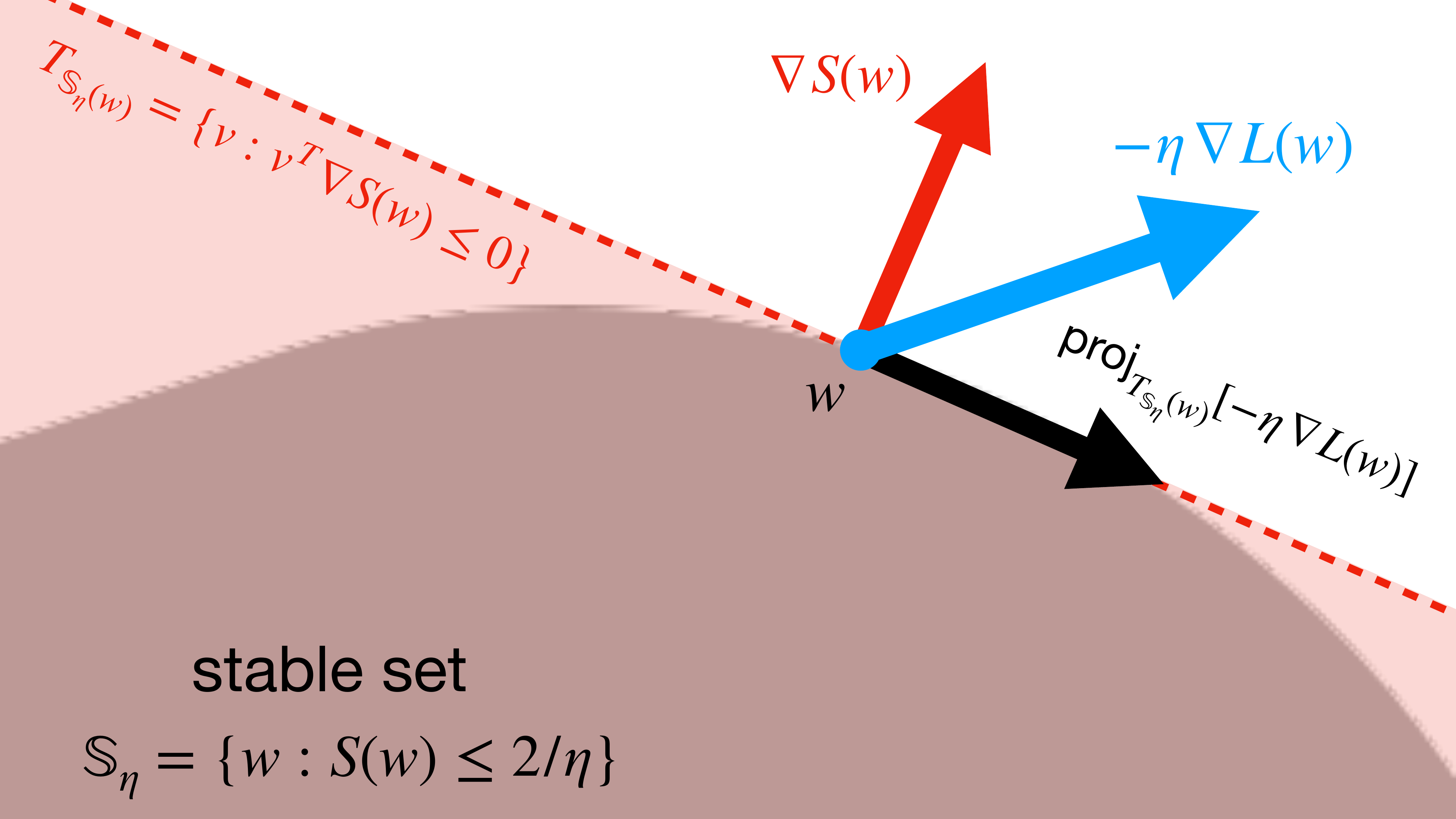}
    \hspace{0.1cm}
    \includegraphics[width=0.45\linewidth]{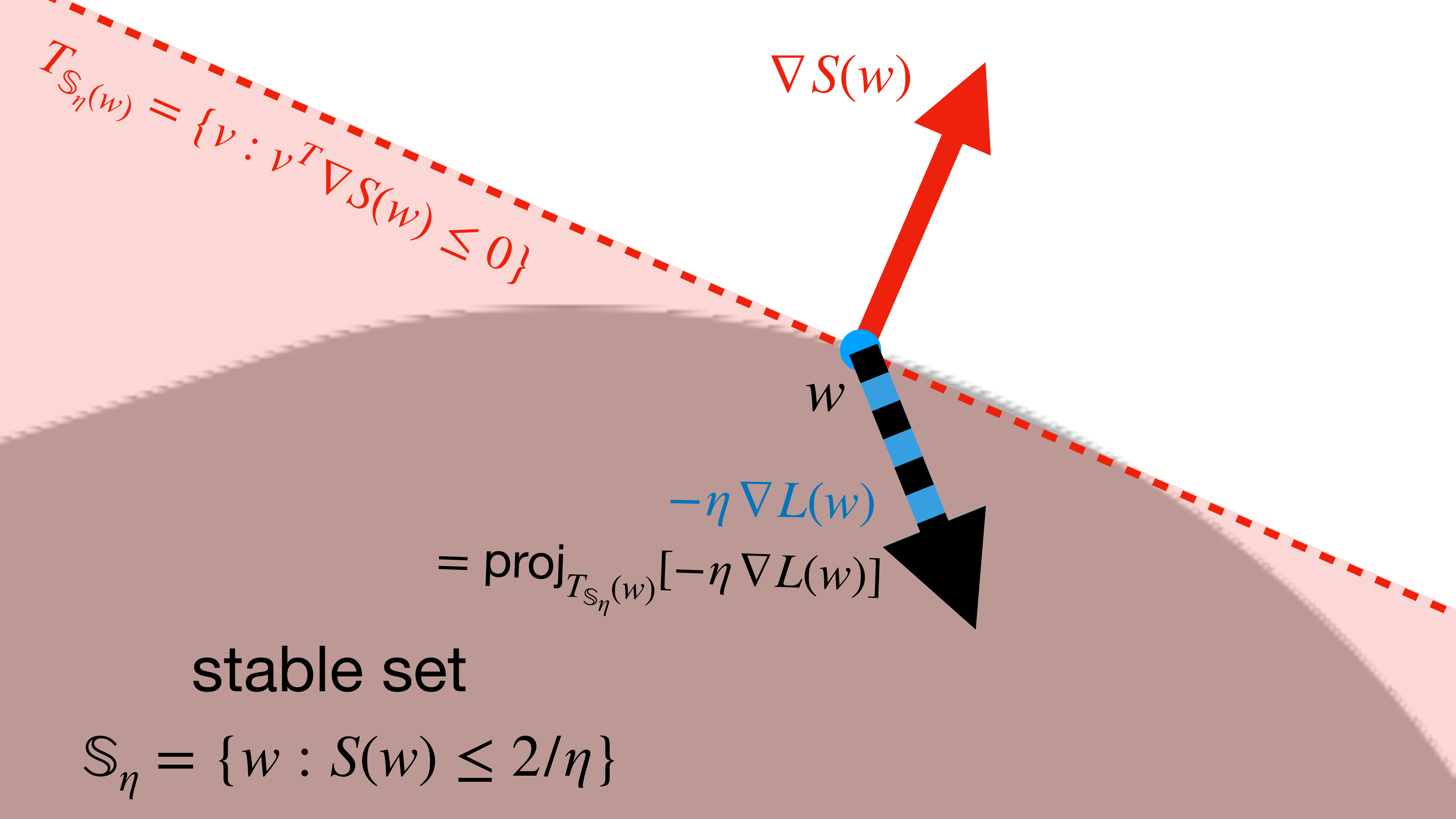}
    \caption{This cartoon illustrates projecting onto the tangent cone of the stable set  $T_{\mathbb{S}_\eta(w)}$ in the case where one eigenvalue is at the edge of stability.  The iterate $w$ is on the border of the stable set (grey blob).  The tangent cone is the half-space $\{v: v^\top \nabla S(w) \le 0\}$ (shaded red).  \textbf{Left}: on the one hand, if the negative gradient (blue arrow) points \emph{out of} the stable set, then the projection (black arrow) removes the component aligned with $S(w)$ (red arrow).  \textbf{Right}: on the other hand, if the negative gradient (blue arrow) already points \emph{into} the stable set, then the projection (black arrow) does nothing.}
    \label{fig:projection-illustration}
\end{figure}

\paragraph{Properties of projection} This projection formulation is helpful because Euclidean projection onto a convex cone shares some useful properties with Euclidean projection onto a linear subspace.  We will use these properties below to reason about the rate of loss decrease.

First, projection onto a convex cone $C$ is positive homogeneous: for any scalar $c > 0$ we have:
\begin{align}
    \mathrm{proj}_{C}[cv] = c \; \mathrm{proj}_{C}[v].
    \label{eq:projection-positive-homogeneous}
\end{align}
In our case, this can also be seen directly by combining the characterization of the projection in \Cref{lem:gd:SDCP_to_proj}, with the identity $X \in \sdcp(\alpha, \beta) \iff cX \in \sdcp(c \alpha, \beta)$.

Second, by the Moreau decomposition \citep{moreau1962decomposition}, any vector $v$ can be orthogonally decomposed into the projection onto a convex cone $C$ and the projection onto its dual cone $C^*$:
\begin{align}
    v = \mathrm{proj}_{C}[v] + \mathrm{proj}_{C^*}[v] \quad \text{where} \quad \langle \mathrm{proj}_{C}[v], \mathrm{proj}_{C^*}[v] \rangle = 0.
\end{align}
In particular, this implies that:
\begin{align}
\langle \mathrm{proj}_{C}[v], v - \mathrm{proj}_{C}[v] \rangle = 0.
\label{eq:projection-moreau-orthogonal}
\end{align}
and:
\begin{align}
\ev{v, \mathrm{proj}_{C}[v]} = \| \mathrm{proj}_{C}[v]\|^2.
\label{eq:projection-moreau-orthogonal2}
\end{align}
In our case, $C$ is the tangent cone to the stable set at $w$, its dual cone $C^*$ is the so-called \emph{normal cone} to the stable set at $w$: $\{\nabla H(w)^\top[\Sigma]: \Sigma \in \sym(\U(w)), \Sigma \succeq 0 \}$, and \cref{eq:projection-moreau-orthogonal} can be proved by rearranging the complementarity relation in \cref{eq:projection:conditions}.

\paragraph{Rate of loss decrease}
We now use the projection formulation (\Cref{def:gd:flow:proj}) to reason about the rate of loss decrease under the central flow.  We first show the following helper lemma:
\begin{lemma}\label{lem:gd_helper}
    Under the gradient descent central flow (\Cref{def:gd:flow:proj}), for almost all $t$ we have
    \begin{align}
        \frac{dL(w)}{dt} = - \eta \norm{\mathrm{proj}_{T_{\mathbb{S}_\eta}(w)}[- \nabla L(w)]}^2.
    \end{align}
\end{lemma}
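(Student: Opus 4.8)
The plan is to combine three ingredients already assembled in the excerpt: the chain rule, positive homogeneity of the projection onto a convex cone, and the Moreau-decomposition orthogonality identity. First I would fix a time $t$ at which $w(t)$ is differentiable (this is all but a finite set of times, since by the smoothness discussion the central flow is differentiable away from the breakpoints), and apply the chain rule to write $\frac{d}{dt}L(w(t)) = \langle \nabla L(w), \frac{dw}{dt}\rangle$.

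Next I would substitute the projection formulation (\Cref{def:gd:flow:proj}), $\frac{dw}{dt} = \mathrm{proj}_{T_{\mathbb{S}_\eta}(w)}[-\eta \nabla L(w)]$, and pull the scalar $\eta > 0$ out of the projection using positive homogeneity of projection onto a convex cone, \cref{eq:projection-positive-homogeneous}, to get $\frac{dw}{dt} = \eta\, \mathrm{proj}_{T_{\mathbb{S}_\eta}(w)}[-\nabla L(w)]$. Hence
\begin{align*}
\frac{dL(w)}{dt} = \Big\langle \nabla L(w),\ \eta\, \mathrm{proj}_{T_{\mathbb{S}_\eta}(w)}[-\nabla L(w)] \Big\rangle = -\eta \Big\langle -\nabla L(w),\ \mathrm{proj}_{T_{\mathbb{S}_\eta}(w)}[-\nabla L(w)] \Big\rangle.
\end{align*}

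Finally I would invoke the Moreau-decomposition consequence \cref{eq:projection-moreau-orthogonal2}, namely $\langle v, \mathrm{proj}_C[v]\rangle = \|\mathrm{proj}_C[v]\|^2$, applied with $v = -\nabla L(w)$ and $C = T_{\mathbb{S}_\eta}(w)$ (which is a convex cone, as noted right after its definition). This turns the inner product into a squared norm and yields $\frac{dL(w)}{dt} = -\eta\,\|\mathrm{proj}_{T_{\mathbb{S}_\eta}(w)}[-\nabla L(w)]\|^2$, as claimed; the "for almost all $t$" qualifier simply records that the chain rule step is valid wherever $w(t)$ is differentiable, which by \Cref{appendix:smoothness} is almost everywhere (and the identity also holds for the right derivative everywhere).

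There is no real obstacle here — the lemma is essentially a bookkeeping computation. The only point requiring a word of care is the non-differentiability of the central flow at breakpoints, which is handled by the "almost all $t$" phrasing (alternatively by reading $\frac{d}{dt}$ as the right derivative throughout, since the projection formulation holds for the right derivative everywhere). Everything else is an immediate chaining of \cref{eq:projection-positive-homogeneous} and \cref{eq:projection-moreau-orthogonal2}.
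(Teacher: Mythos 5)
Your proposal is correct and matches the paper's proof step for step: chain rule, substitute the projection formulation, pull out $\eta$ by positive homogeneity (\cref{eq:projection-positive-homogeneous}), and close with the Moreau orthogonality identity (\cref{eq:projection-moreau-orthogonal2}). The remarks on differentiability and the almost-everywhere qualifier are sound and consistent with \Cref{appendix:smoothness}.
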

\begin{proof}
    By the chain rule, we have
    \begin{align*}
        \frac{dL(w)}{dt}
        &= \ev{\nabla L(w), \frac{dw}{dt}} \\
        &= \ev{\nabla L(w), \mathrm{proj}_{T_{\mathbb{S}_\eta}(w)}[-\eta \nabla L(w)]} \\
        &= - \eta \ev{- \nabla L(w), \mathrm{proj}_{T_{\mathbb{S}_\eta}(w)}[- \nabla L(w)]} \\
        &= - \eta \norm{\mathrm{proj}_{T_{\mathbb{S}_\eta}(w)}[- \nabla L(w)]}^2,
    \end{align*}
    where the first line is the chain rule, the second line is the projection formulation of the central flow, the third line is due to positive homogeneity of the projection operation \cref{eq:projection-positive-homogeneous}, and the last line is due to the orthogonality of the projection \cref{eq:projection-moreau-orthogonal2}.
\end{proof}
A simple corollary is that the training loss monotonically decreases under the \gd central flow:

\begingroup
\renewcommand{\theproposition}{\ref{prop:gd:loss-decrease}}
\begin{proposition}[Restated]
    Under the GD central flow (\Cref{def:gd:flow:proj}), for almost all $t$, the loss curve $L(w(t))$ is monotonically decreasing:
    \begin{align}
        \frac{dL(w(t))}{dt} \le 0.
    \end{align}
\end{proposition}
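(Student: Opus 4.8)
The plan is to obtain this as an immediate corollary of \Cref{lem:gd_helper}. That lemma states that under the projection formulation of the central flow (\Cref{def:gd:flow:proj}),
\begin{align*}
\frac{dL(w)}{dt} = -\eta \norm{\mathrm{proj}_{T_{\mathbb{S}_\eta}(w)}[-\nabla L(w)]}^2
\end{align*}
for almost all $t$. Granting this, the proposition follows instantly: the right-hand side is $-\eta$ times a squared Euclidean norm, and since $\eta > 0$ this is nonpositive. So once \Cref{lem:gd_helper} is established there is nothing more to prove, and I would present the proposition's proof as a single line invoking it.

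The substantive content therefore lies in \Cref{lem:gd_helper}, which I would prove in three short steps. First, apply the chain rule, $\frac{dL(w)}{dt} = \langle \nabla L(w), \frac{dw}{dt}\rangle$, and substitute $\frac{dw}{dt} = \mathrm{proj}_{T_{\mathbb{S}_\eta}(w)}[-\eta\nabla L(w)]$ from \Cref{def:gd:flow:proj}. Second, use that $T_{\mathbb{S}_\eta}(w)$ is a convex cone, so Euclidean projection onto it is positive homogeneous (\cref{eq:projection-positive-homogeneous}), letting us pull the scalar out: $\mathrm{proj}_{T_{\mathbb{S}_\eta}(w)}[-\eta\nabla L(w)] = \eta\,\mathrm{proj}_{T_{\mathbb{S}_\eta}(w)}[-\nabla L(w)]$. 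Third, invoke the Moreau-decomposition identity $\langle v, \mathrm{proj}_C[v]\rangle = \norm{\mathrm{proj}_C[v]}^2$ for a closed convex cone $C$ (\cref{eq:projection-moreau-orthogonal2}), applied with $v = -\nabla L(w)$ and $C = T_{\mathbb{S}_\eta}(w)$. Chaining these three identities gives exactly the claimed formula.

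The only subtlety worth flagging — but not one that constitutes a genuine obstacle — is that $w(t)$ is not everywhere differentiable: it has breakpoints where a Hessian eigenvalue enters or leaves the edge of stability (see \Cref{appendix:smoothness}), so all the derivative identities above are understood in the Carathéodory sense, i.e. for almost all $t$, or equivalently as statements about right derivatives. There is no hard step here: the one nontrivial ingredient, the orthogonality identity for projection onto a convex cone, is a standard fact already recorded in the preceding discussion, and the rest is the chain rule plus a homogeneity argument. The intuition to state alongside the proof is that even after the negative gradient is projected onto the tangent cone of the stable region, the resulting direction remains negatively correlated with the gradient (indeed, by the Moreau identity, the correlation equals the squared norm of the projection), so the loss cannot increase.
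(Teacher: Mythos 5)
Your proposal is correct and takes essentially the same route as the paper: the proposition is reduced to \Cref{lem:gd_helper}, which is itself proved by the chain rule, positive homogeneity of cone projection (\cref{eq:projection-positive-homogeneous}), and the Moreau orthogonality identity (\cref{eq:projection-moreau-orthogonal2}). Your remark about the Carathéodory / almost-all-$t$ interpretation likewise matches the paper's treatment in \Cref{appendix:smoothness}.
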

\begin{proof}
The claim follows by combining \Cref{lem:gd_helper} with the fact that a norm is always non-negative.
\label{prop:gd:loss-decrease-restated}
\end{proof}
\endgroup

Another simple corollary is that the central flow decreases the loss at a less steep rate than would gradient flow.  In other words, the oscillations induce a slowdown in the rate of loss decrease:
\begingroup
\renewcommand{\theproposition}{\ref{prop:gd:slowdown}}
\begin{proposition}[Restated]
    Under the GD central flow (\Cref{def:gd:flow:proj}), for almost all $t$, the slope of the loss curve is less steep than that of gradient flow:
    \begin{align}
        -\eta \|\nabla L(w(t))\|^2 \le \frac{dL(w(t))}{dt}.
    \end{align}
\end{proposition}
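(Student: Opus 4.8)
The plan is to combine \Cref{lem:gd_helper} with the elementary fact that Euclidean projection onto a closed convex cone cannot increase the norm of a vector. By \Cref{lem:gd_helper}, for almost all $t$ we have
\begin{align*}
    \frac{dL(w(t))}{dt} = -\eta \norm{\mathrm{proj}_{T_{\mathbb{S}_\eta}(w)}[-\nabla L(w)]}^2,
\end{align*}
so it suffices to show that $\norm{\mathrm{proj}_{T_{\mathbb{S}_\eta}(w)}[-\nabla L(w)]}^2 \le \norm{\nabla L(w)}^2$; multiplying by $-\eta$ and flipping the inequality then yields the claim.

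For the norm bound I would invoke the Moreau decomposition \citep{moreau1962decomposition}, already recorded above as the orthogonal decomposition $v = \mathrm{proj}_{C}[v] + \mathrm{proj}_{C^*}[v]$ with $\ev{\mathrm{proj}_{C}[v], \mathrm{proj}_{C^*}[v]} = 0$, applied with the convex cone $C = T_{\mathbb{S}_\eta}(w)$ and the vector $v = -\nabla L(w)$. The orthogonality gives the Pythagorean identity
\begin{align*}
    \norm{\nabla L(w)}^2 = \norm{v}^2 = \norm{\mathrm{proj}_{C}[v]}^2 + \norm{\mathrm{proj}_{C^*}[v]}^2 \;\ge\; \norm{\mathrm{proj}_{C}[v]}^2 = \norm{\mathrm{proj}_{T_{\mathbb{S}_\eta}(w)}[-\nabla L(w)]}^2,
\end{align*}
since the dual-cone term is nonnegative. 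Substituting back into the expression from \Cref{lem:gd_helper} gives $\frac{dL(w(t))}{dt} = -\eta \norm{\mathrm{proj}_{T_{\mathbb{S}_\eta}(w)}[-\nabla L(w)]}^2 \ge -\eta \norm{\nabla L(w(t))}^2$, as desired.

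There is no real obstacle here: the only ``work'' is recognizing that the quantity appearing in \Cref{lem:gd_helper} is a projection of $-\nabla L(w)$, and that projecting onto the tangent cone (which, being a cone, contains the origin) is norm-nonincreasing. Alternatively, if one prefers to avoid citing Moreau, the same bound follows from the complementarity characterization \cref{eq:projection:conditions}: writing $p := \mathrm{proj}_{T_{\mathbb{S}_\eta}(w)}[-\nabla L(w)]$ and $-\nabla L(w) = p + r$, the decomposition \cref{eq:projection-moreau-orthogonal} gives $\ev{p,r}=0$, hence $\norm{\nabla L(w)}^2 = \norm{p}^2 + \norm{r}^2 \ge \norm{p}^2$. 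I would present whichever version is cleanest given the lemmas already in hand; the Moreau route is the shortest since \cref{eq:projection-moreau-orthogonal} is already established in the text. The ``almost all $t$'' caveat is inherited verbatim from \Cref{lem:gd_helper} and needs no separate treatment, and by right-differentiability of the central flow (\Cref{appendix:smoothness}) the inequality may equivalently be read as a statement about the right derivative.
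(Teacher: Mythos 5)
Your proposal is correct and matches the paper's proof essentially verbatim: both invoke \Cref{lem:gd_helper} and then bound $\norm{\mathrm{proj}_{T_{\mathbb{S}_\eta}(w)}[-\nabla L(w)]}^2 \le \norm{\nabla L(w)}^2$ via the orthogonal Moreau decomposition \cref{eq:projection-moreau-orthogonal}. The alternative route you mention at the end is in fact the same argument the paper writes out, so there is no meaningful difference.
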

\label{prop:gd:slowdown-restated}
\endgroup
\begin{proof}
Projecting a vector $v$ onto a convex cone $C$ always makes the norm smaller:
\begin{align*}
    \|v\|^2 &= \| \mathrm{proj}_{C}[v] + [v - \mathrm{proj}_C[v]] \|^2 \\
    &= \| \mathrm{proj}_{C}[v]\|^2 + \| v - \mathrm{proj}_{C}[v] \|^2  \\
    &\ge \| \mathrm{proj}_{C}[v]\|^2,
\end{align*}
where the second line is due to the orthogonality of $\mathrm{proj}_{C}[v]$ and $v - \mathrm{proj}_{C}[v]$ i.e. \cref{eq:projection-moreau-orthogonal}, and the third line is due to the non-negativity of a square.

Thus, recalling \Cref{lem:gd_helper}, we have:
\begin{align*}
    \frac{dL(w)}{dt} &= - \eta \norm{\mathrm{proj}_{T_{\mathbb{S}_\eta}(w)}[- \nabla L(w)]}^2 \\
    &\ge - \eta \|- \nabla L(w) \|^2  \\
    &= - \eta \|\nabla L(w)\|^2.
\end{align*}
\end{proof}

\paragraph{Why not start with the projection formulation?}
Since the projection formulation of the central flow is arguably the simplest one, one might ask why we first went through the DCP/ODE formulations before arriving at the projection formulation.  After all, since the sharpness equilibrates at $2/\eta$ at EOS, one might think that a projected gradient flow constrained to the set $\{w: S(w) \le 2/\eta\}$ is already a natural approximation.  The trouble with this thinking is that there are actually an infinite number of flows which keep the sharpness locked at $2/\eta$, moving within the tangent cone of the stable set. Among these, the significance of the central flow (\Cref{def:gd:flow:proj}) is that it follows the particular vector within the tangent cone that is \emph{closest} (in Euclidean distance) to the negative gradient; that is, it makes the \emph{smallest perturbation} to the negative gradient that will force it inside the tangent cone.  A priori, there is no reason why this should be the case, and thus jumping straight to the projection formulation would be arbitrary. In fact, we will see in our analyses of \rmsnorm and \rmsprop that the central flows \emph{do not} pick the closest tangent vector in Euclidean distance and their central flows cannot be interpreted as a projected gradient flow.

\subsubsection{Discretizing the \protect \gd central flow}
\label{appendix:derivations:gd:discretizing}

Discretizing the central flow is nontrivial, because the flow is nonsmooth at points where the dimension of the critical subspace (i.e. the number of unstable eigenvalues) undergoes a change.
To discretize the flow, we directly discretize the DCP formulation (\Cref{def:gd:flow:dcp}) rather than going through the ODE formulation (\Cref{def:gd:flow:ode}).
We describe our general procedure for discretizing DCPs in \Cref{sec:derivations:dcp:discretize}. 
Let us now describe how this general procedure specializes to the gradient descent case.

We use $w^{(t)}$, $\Sigma^{(t)}$ to denote our estimate for the central flow's $w(t)$, $\Sigma(t)$.
Let $\epsilon > 0$ be the discretization step size, e.g. $\epsilon = 0.25$.
For some tolerance $\tau > 0$, e.g. $\tau = \tfrac{0.05}{\eta}$, we will regard Hessian eigenvalues greater than $\tfrac{2}{\eta} - \tau$ as those which might become unstable in the next discretization time step.

At each discretization step, we first compute all Hessian eigenvalues that are greater than $\tfrac{2}{\eta} - \tau$, as well as the corresponding eigenvectors.
Let $k$ be the number of such eigenvalues, let $D \in \R^{k \times k}$ be a diagonal matrix containing such eigenvalues on the diagonal, and let $U \in \R^{d \times k}$ be the corresponding orthonormal eigenvectors.
Then, we compute the tensor $\nabla H_U$ as in \cref{eq:appendix:gd:nabla-H-U}, though note that $U$ now refers to a basis of eigenvectors whose eigenvalues are \emph{almost} 2 rather than exactly equal to 2.
Then, we compute $\alpha_U$ and $\beta_U$ as in \cref{eq:alpha_beta_U_efficient}.
Then, we solve the following $k$-dimensional SDCP:
\begin{align}
    X^{(t)} = \sdcp_{\R^k}(\tfrac{2}{\eta} I - D + \epsilon \; \alpha_U, \epsilon \, \beta_U),
\end{align}
so that $\Sigma^{(t)} = U X^{(t)} U^\top$.  Then, we update the weights via:
\begin{align}
    w^{(t+\epsilon)} &= w^{(t)} - \epsilon \; \eta \qty[\nabla L(w^{(t)}) + \tfrac{1}{2} \nabla H_U^\top[X^{(t)} ] ].
\end{align}
To predict the time-average of the train loss, the squared gradient, and the covariance of the oscillations, we use \cref{eq:appendix:gd:time-average-loss-basis}, \cref{eq:appendix:gd:time-average-sq-gradient-norm-basis}, and \cref{eq:gd-predict-oscillation-variance}, respectively.

\subsection{Scalar RMSProp}\label{appendix:derivations:rmsprop_norm}

We model the \rmsnorm iterates $\{w_t\}$ as oscillating around a central flow $w(t)$ with covariance $\Sigma(t)$. That is, if $\delta_t := w_t - w(t) $ denotes the displacement (``the oscillation''), then $\E[\delta_t] = 0$ and $\E[\delta_t \delta_t^\top] = \Sigma(t)$.
Let $\nu(t) := \E[\nu_t]$ model the time-averaged $\nu_t$, and we will frequently neglect the distinction between the two, implicitly assuming that $\nu_t$ concentrates tightly around $\nu(t)$.

A similar argument as for \gd implies that the time-averaged gradient is approximately:
\begin{align}
    \E[\nabla L(w_t)] \approx \nabla L(w(t)) + \tfrac{1}{2} \nabla H(w(t))^\top [\Sigma(t)]. \label{eq:scalar-rmsprop:time-averaged-gradient}
\end{align}
Meanwhile, we approximate the time-average of the squared gradient norm as:\footnote{We note here that this is not a ``faithful'' second-order Taylor expansion of the gradient $\nabla L(w_t)$ around $w(t)$. We are implicitly assuming that $\norm{\E[\nabla L(w_t)]}^2 \approx \norm{\nabla L(\E[w_t])}^2$ which neglects the term $\nabla^3 L(w(t))[\nabla L(w(t)), \Sigma(t)]$. This omission simplifies the expressions and our numerical experiments still successfully predict $\Sigma(t)$ across a variety of datasets and architectures, which justifies this omission.}
\begin{align}
    \E[\| \nabla L(w_t) \|^2] &\approx \E[ \|\nabla L(w(t)) + H (w(t)) \, \delta_t\|^2 ] \nonumber\\
    &= \|\nabla L(w(t))\|^2 + \E[\|H(w(t)) \, \delta_t\|^2 ] \label{eq:scalar-rmsprop:grad-norm-sq}    
\end{align}
where the first line is a first-order Taylor expansion of $\nabla L$ around $w(t)$ and the second line is because $\E[\delta_t] = 0$.

We define the critical subspace $\U(w, \nu)$ as the eigenspace of the effective Hessian $\tfrac{\eta}{\sqrt{\nu}} H(w)$ corresponding to the eigenvalue 2:
\begin{align}
    \U(w, \nu) := \ker\qty[\tfrac{\eta}{\sqrt{\nu}} H(w) - 2I].
\end{align}
We model \rmsnorm as oscillating within the critical subspace, i.e. we assume that $\delta_t \in \U(w(t), \nu(t))$.

This implies that $H(w(t)) \, \delta_t = \tfrac{2 \sqrt{\nu(t)}}{\eta} \delta_t$, which lets us simplify \cref{eq:scalar-rmsprop:grad-norm-sq} as:
\begin{align}
    \E[\| \nabla L(w_t) \|^2]& \approx \|\nabla L(w(t))\|^2 + \E[\|H(w(t)) \, \delta_t\|^2 ] \nonumber \\
    &=  \|\nabla L(w(t)) \|^2 + \tfrac{4 \nu(t)}{\eta^2} \E[\|\delta_t\|^2] \nonumber \\
    &= \|\nabla L(w(t)) \|^2 + \tfrac{4 \nu(t)}{\eta^2} \tr\Sigma(t), \label{eq:scalar-rmsprop:grad-norm-sq-better}
\end{align}
where the final line is because $\E[\|\delta_t\|^2] = \E[\tr(\delta_t \delta_t^\top)] = \tr \Sigma(t)$.

Based on the time averages \cref{eq:scalar-rmsprop:time-averaged-gradient} and \cref{eq:scalar-rmsprop:grad-norm-sq-better}, we make the ansatz that the joint dynamics of $(w_t, \nu_t)$ can be modeled by a central flow $(w(t), \nu(t))$ of the form:
\begin{align}
\begin{split}
    \frac{dw}{dt} &= -\frac{\eta}{\sqrt{\nu}}\qty[\nabla L(w) + \tfrac{1}{2} \nabla H(w)^\top[\Sigma(t)]] \\
    \frac{d\nu}{dt} &= \tfrac{1-\beta_2}{\beta_2}\qty[\norm{\nabla L(w)}^2 + \tfrac{4 \nu}{\eta^2} \mathrm{tr}(\Sigma(t)) - \nu]. \label{eq:appendix:rmsnorm:ansatz}
\end{split}
\end{align}
As with \gd, to determine $\Sigma(t)$ we will impose three conditions for all times $t$:
\begin{enumerate}
    \item \textbf{PSD:} As a covariance matrix, $\Sigma(t)$ is positive semidefinite (PSD), i.e. $\Sigma(t) \succeq 0$.
    \item \textbf{Stability:} The effective sharpness remains bounded by $2$, i.e. $\tfrac{\eta}{\sqrt{\nu}} H(w(t)) \preceq 2 I$.
    \item \textbf{Complementarity:} The oscillations are contained within the critical subspace, i.e $\spn \Sigma(t) \subseteq \U(w(t), \nu(t))$.
\end{enumerate}

To write these concisely, it will be convenient to define the matrix-valued ``residual'' function $A(w, \nu)$ by:
\begin{align}
    A(w,\nu) := \tfrac{2 \sqrt{\nu}}{\eta} I - H(w),    
\end{align}
so that stability is equivalent to $A(w,\nu) \succeq 0$ and the critical subspace is precisely $\U(w, \nu) = \ker A(w, \nu)$.  With this notation, we can concisely express the above three conditions as a semidefinite complementarity relation:
\begin{align*}
    0 \preceq \Sigma(t) \perp A(w(t),\nu(t)) \succeq 0.
\end{align*}
We say that $(w(t),\Sigma(t))$ follow the \rmsnorm central flow if they follow \cref{eq:appendix:rmsnorm:ansatz} along with this semidefinite complementarity relation:
\begin{definition}[Scalar RMSProp Central Flow, DCP Formulation]\label{def:rmsnorm:flow:dcp}
    We say that $\{(w(t),\nu(t),\Sigma(t))\}_{t \ge 0}$ satisfy the \rmsnorm central flow if they satisfy \cref{eq:appendix:rmsnorm:ansatz} and $0 \preceq \Sigma(t) \perp A(w(t),\nu(t)) \succeq 0$ for almost all $t$.
\end{definition}

Since this DCP can be expressed in the general form described in \Cref{sec:derivations:dcp}, we can use \Cref{lem:dcp_to_sdcp} to convert it into an equivalent ODE.
In particular, if $\widetilde w := [w,\nu]^\top$ denotes the augmented state, we can write \cref{eq:appendix:rmsnorm:ansatz} as:
\begin{align*}
    \frac{d\widetilde{w}}{dt} = f\qty(\widetilde{w}) + B\qty(\widetilde{w})[\Sigma] \qq{where} 
    f\qty(\widetilde{w}) := \begin{bmatrix}
        -\frac{\eta}{\sqrt{\nu}} \nabla L(w) \\ \frac{1-\beta_2}{\beta_2} [\norm{\nabla L(w)}^2 - \nu]
    \end{bmatrix} \qand
    B\qty(\widetilde{w})[\Sigma] := \begin{bmatrix}
        -\frac{\eta}{2\sqrt{\nu}} \nabla H(w)^\top[\Sigma] \\ \frac{1-\beta_2}{\beta_2} \cdot \frac{4\nu}{\eta^2} \tr \Sigma
    \end{bmatrix},
\end{align*}
and we can write the complementarity relation as $0 \preceq \Sigma(t) \perp A(\widetilde{w}(t)) \succeq 0$.

Therefore, \Cref{lem:dcp_to_sdcp} implies that
$$\Sigma(t) \in \sdcp_{\U(w, \nu)}\qty(\alpha(w, \nu),\beta(w, \nu)),$$
where the matrix $\alpha(w, \nu) \in \sym(\R^d)$ and tensor $\beta(w, \nu) \in \sym(\R^d)^{\otimes 2}$ are defined by:
\begin{align}
    \alpha\qty(w, \nu)
    &:= \nabla A\qty(\widetilde{w})[f(\widetilde{w})] \nonumber \\
    &=
    \begin{bmatrix}
        -\nabla H(w) & \frac{1}{\eta \sqrt{\nu}} I
    \end{bmatrix}\begin{bmatrix}
        -\frac{\eta}{\sqrt{\nu}} \nabla L(w) \\ \frac{1-\beta_2}{\beta_2} \qty[\norm{\nabla L(w)}^2 - \nu]
    \end{bmatrix} \nonumber \\
    &= \frac{\eta}{\sqrt{\nu}} \nabla H(w)[\nabla L(w)] + \frac{1}{\eta \sqrt{\nu}} \frac{1-\beta_2}{\beta_2} \qty[\norm{\nabla L(w)}^2 - \nu] I \label{eq:scalar-rmsprop-alpha}
\end{align}
and
\begin{align}
    \beta\qty(w, \nu)[\Sigma]
    &:= \nabla A\qty(\widetilde{w})B(\widetilde{w})[\Sigma] \nonumber \\
    &=
    \begin{bmatrix}
        -\nabla H(w) & \frac{1}{\eta \sqrt{\nu}} I
    \end{bmatrix}\begin{bmatrix}
        -\frac{\eta}{2\sqrt{\nu}} \nabla H(w)^\top[\Sigma] \\ \frac{1-\beta_2}{\beta_2} \cdot \frac{4\nu}{\eta^2} \tr \Sigma
    \end{bmatrix} \nonumber \\
    &= \frac{\eta}{2\sqrt{\nu}}\nabla H(w) \nabla H(w)^\top[\Sigma] + \frac{4\sqrt{\nu}}{\eta^3} \cdot \frac{1-\beta_2}{\beta_2} \tr[\Sigma] I. \label{eq:scalar-rmsprop-beta}
\end{align}

Note that $\beta(\widetilde{w})$ is indeed symmetric PSD, as for \gd:
\begin{align*}
    &\beta(\tilde w)[\Sigma,\Sigma'] = \frac{\eta}{2 \sqrt{\nu}} \ev{\nabla H(w)^\top[\Sigma], \nabla H(w)^\top[\Sigma']} + \frac{4\sqrt{\nu}}{\eta^3} \cdot \frac{1-\beta_2}{\beta_2} \tr[\Sigma]\tr[\Sigma'] \\
    &\implies \beta(\tilde w)[\Sigma,\Sigma] = \frac{\eta}{2 \sqrt{\nu}} \norm{\nabla H(w)^\top[\Sigma]}^2 + \frac{4\sqrt{\nu}}{\eta^3} \cdot \frac{1-\beta_2}{\beta_2} \tr[\Sigma]^2 \ge 0.
\end{align*}
This gives us the ODE formulation of the \rmsnorm central flow:
\begin{definition}[\rmsnorm Central Flow, ODE Formulation]\label{def:rmsnorm:flow:ode}
    We say that $\{w(t),\nu(t)\}_{t \ge 0}$ follow the \rmsnorm central flow if for almost all $t$, they satisfy \cref{eq:appendix:rmsnorm:ansatz} for some $$\Sigma(t) \in \sdcp_{\U(w(t), \nu(t))}\qty(\alpha(w(t), \nu(t)),\beta(w(t), \nu(t))).$$
\end{definition}

\paragraph{One unstable eigenvalue}
As a sanity check, we now verify that this formulation recovers \cref{eq:rmsprop_norm_x} when there is one eigenvalue at the edge of stability.
Just as with \gd (see above), in this setting $\Sigma(t)$ reduces to:
\begin{align*}
    \Sigma(t) = \sigma^2 u u^\top \qc \sigma^2 = \max \qty( - \frac{ \alpha_u(w, \nu)}{\beta_u(w, \nu)}, 0 ) \qc \alpha_u(w, \nu) = u^\top \alpha(w, \nu) u \qc \beta_u(w, \nu) = u^\top \beta(w, \nu)[ u u^\top] u,
\end{align*}
where $u \in \R^d$ is the unit-norm top eigenvector of $H(w)$ at $w$. Simplifying, we have:
\begin{align*}
    \alpha_u(w,\nu) &= \frac{\eta}{\sqrt{\nu}} \ev{\nabla L(w), \nabla S(w)} + \frac{1}{\eta \sqrt{\nu}} \cdot \frac{1-\beta_2}{\beta_2}\qty[\norm{\nabla L(w)}^2 - \nu]  \\
    \beta_u(w,\nu) &= \frac{\eta}{2\sqrt{\nu}}\norm{\nabla S(w)}^2 + \frac{4 \sqrt{\nu}}{\eta^3} \cdot \frac{1-\beta_2}{\beta_2}.
\end{align*}
Then, when $\alpha_u(w, \nu)$ is negative, we have:
\begin{align}
    \sigma^2 &= -\frac{\alpha_u(w,\nu)}{\beta_u(w,\nu)} \nonumber \\
    &= - \frac{\frac{\eta}{\sqrt{\nu}} \ev{\nabla L(w), \nabla S(w)} + \frac{1}{\eta \sqrt{\nu}} \cdot \frac{1-\beta_2}{\beta_2} \qty[\norm{\nabla L(w)}^2 - \nu]}{\frac{\eta}{2\sqrt{\nu}}\norm{\nabla S(w)}^2 + \frac{4 \sqrt{\nu}}{\eta^3} \cdot \frac{1-\beta_2}{\beta_2}} \tag{definition of $\alpha_u,\beta_u$} \\
    &= \frac{\beta_2 \ev{-\nabla L(w), \nabla S(w)} + \frac{1}{\eta^2} \cdot (1-\beta_2) \qty[\nu - \norm{\nabla L(w)}^2]}{\beta_2 \cdot \tfrac{1}{2}\norm{\nabla S(w)}^2 + (1-\beta_2) \frac{4\nu}{\eta^4}} \tag{simplify} \\
    &= \frac{\beta_2 \ev{-\nabla L(w), \nabla S(w)} + (1-\beta_2) \qty[\frac{S(w)^2}{4} - \frac{\norm{\nabla L(w)}^2}{\eta^2}]}{\beta_2 \cdot \tfrac{1}{2}\norm{\nabla S(w)}^2 + (1-\beta_2) S(w)^2/\eta^2}.\qquad\qquad~ \tag{$\nu = \frac{\eta^2 S(w)^2}{4}$ at EOS}
\end{align}
which indeed recovers \cref{eq:rmsprop_norm_x}.
On the other hand, when $\alpha_u(w, \nu)$ is positive, $\sigma^2 = 0$ and the central flow reduces to the stable flow \cref{eq:rmsprop_norm_stable_flow}.

\paragraph{Normalized gradient descent}
As $\beta_2 \to 0$, \rmsnorm becomes normalized gradient descent, and the formula \cref{eq:rmsprop_norm_x} for $\sigma^2(w; \eta, \beta_2)$ reduces to:
\begin{align*}
    \sigma^2(w; \eta)  = \frac{\eta^2}{4} - \frac{\norm{\nabla L(w)}^2}{S(w)^2}.
\end{align*}
In practice, we observe that the first term generally dominates the second term. Thus, for NGD with one unstable eigenvalue, we have approximately:
\begin{align}
    \sigma^2(w; \eta) \approx \frac{\eta^2}{4}. \label{eq:ngd-approximation}
\end{align}
This clearly illustrates how $\sigma^2$ grows monotonically with $\eta$.
\label{remark:ngd}

\paragraph{Does the loss decrease?}
Whereas the \gd central flow decreases the loss monotonically (\Cref{prop:gd:loss-decrease}), this is not true for the \rmsnorm central flow.
Indeed, we have seen that with one unstable eigenvalue, the \rmsnorm central flow takes the form:
\begin{align*}
    \frac{dw}{dt} = -\frac{2}{S(w)} \qty[\nabla L(w) + \frac{1}{2} \sigma^2(w; \eta, \beta_2) \nabla S(w)].
\end{align*}
Thus, by the chain rule, the rate of change in the loss is given by:
\begin{align*}
    \frac{dL}{dt} &= \ev{\nabla L(w), \frac{dw}{dt}} \\
    &= -\frac{2}{S(w)} \qty[ \|\nabla L(w)\|^2 + \frac{1}{2} \sigma^2(w; \eta, \beta_2) \ev{\nabla L(w), \nabla S(w)} ].
\end{align*}
If progressive sharpening holds, i.e. if $\ev{\nabla L(w), \nabla S(w)} < 0$, then the second term is acting to \emph{increase} $L$, and for sufficiently large values of $\sigma^2$, this increase will outweigh the decrease from the first term, causing $L$ to go up.
Indeed, if progressive sharpening holds, then $\tfrac{dL}{dt} > 0$ so long as:
\begin{align*}
    \sigma^2(w; \eta, \beta_2)  > \frac{2 \|\nabla L(w)\|^2}{- \ev{\nabla L(w), \nabla S(w)}}.
\end{align*}
This can indeed occur.
For instance, if we consider the case of normalized gradient descent ($\beta_2 \to 0$) and if we make the approximation described in \cref{eq:ngd-approximation}, then $\tfrac{dL}{dt} > 0$ so long as the learning rate $\eta$ satisfies:
\begin{align}
    \eta > \sqrt{\frac{8 \|\nabla L(w)\|^2}{-\ev{\nabla L(w), \nabla S(w)}}}.
\end{align}
Thus, for sufficiently large learning rates $\eta$, the train loss will go up rather than down under the central flow.

\jnote{say it's an open question whether there is a potential function that does decrease?}

\paragraph{The effect of the hyperparameters $\eta, \beta_2$}
We can use the \rmsnorm central flow to reason about the effect of the algorithm's hyperparameters.  In \Cref{sec:rmsnorm:interpreting}, we discussed the case of a single eigenvalue at EOS; we showed that at any point $w \in \R^d$ in weight space, the strength $\sigma^2$ of the implicit sharpness regularization is monotonically increasing in $\eta$.  We also showed that $\beta_2$ monotonically interpolates $\sigma^2$ between a certain value for NGD ($\beta_2= 0$) and a certain value for GD ($\beta_1 = 1$). \Cref{prop:rmsnorm_multiple_monotonic} extends both of these results to the fully general setting where more than one eigenvalue can be at EOS, and eigenvalues can enter or leave EOS.

\begin{proposition}\label{prop:rmsnorm_multiple_monotonic}
    Fix an initial point $w = w(0)$. Let $\U \subseteq \R^d$ be the top eigenspace of $H(w)$, and assume that $\nabla H(w)$ has full rank on $\U$.  For a choice of parameters $\eta,\beta_2$, let $\nu = \nu(0) = \eta^2 S(w)^2/4$ so that the effective sharpness is $2$. Run the \rmsnorm central flow starting at $(w, \nu)$ with parameters $(\eta,\beta_2)$ to get $\{(w(t), \nu(t), \Sigma(t))\}_{t \ge 0}$. Then:
    \begin{itemize}
        \item $\tr \Sigma(0)$ is non-decreasing in $\eta$, i.e. larger learning rate leads to larger oscillations. Furthermore, $\frac{dS(w)}{dt}\evalshort_{t=0}$ is non-increasing in $\eta$, i.e. larger learning rates lead to more curvature reduction.
        \item $\tr \Sigma(0)$ is either non-decreasing or non-increasing in $\beta_2$, i.e. $\beta_2$ has a monotonic effect on oscillation size. Furthermore, if $\tr \Sigma(0)$ is non-decreasing in $\beta_2$, $\frac{dS(w)}{dt}\evalshort_{t=0}$ is non-increasing in $\beta_2$ and vice-versa, i.e. larger oscillations always lead to smaller curvature.
    \end{itemize}
\end{proposition}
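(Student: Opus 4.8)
The plan is to reduce everything to the structure of the SDCP at $t=0$ and to exploit that, at $t=0$, the critical subspace $\U$ and the residual $A(w,\nu)\evalshort_\U = 0$ are fixed (they depend only on $w,\nu$, not on $\eta,\beta_2$ separately once we enforce effective sharpness $2$). By the basis reduction of \Cref{sec:derivations:sdcp}, writing $\Sigma(0) = UXU^\top$ with $U$ an orthonormal basis of $\U$, the relevant $k$-dimensional SDCP is $X \in \sdcp_{\R^k}(\alpha_U, \beta_U)$ with $\alpha_U, \beta_U$ obtained from \cref{eq:scalar-rmsprop-alpha,eq:scalar-rmsprop-beta}. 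The first step is to extract the explicit $(\eta,\beta_2)$-dependence of $\alpha_U$ and $\beta_U$: from \cref{eq:scalar-rmsprop-alpha}, using $\nu = \eta^2 S(w)^2/4$, one gets $\alpha_U = \tfrac{2}{S(w)}\,\nabla H_U[\nabla L(w)] \;+\; \tfrac{2}{\eta^2 S(w)}\cdot\tfrac{1-\beta_2}{\beta_2}\bigl[\|\nabla L(w)\|^2 - \tfrac{\eta^2 S(w)^2}{4}\bigr] I_k$, and from \cref{eq:scalar-rmsprop-beta}, $\beta_U = \tfrac{1}{S(w)}\nabla H_U\nabla H_U^\top \;+\; \tfrac{2}{\eta^2 S(w)}\cdot\tfrac{1-\beta_2}{\beta_2}\,(\tr\cdot) I_k$ as an operator on $\sym(\R^k)$. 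The key structural observation is that $\beta_U$ is the sum of a fixed positive definite operator and a nonnegative multiple of the ``trace'' operator $X \mapsto (\tr X) I_k$, and that the $(\eta,\beta_2)$-dependent pieces of both $\alpha_U$ and $\beta_U$ point in the same ``isotropic'' direction $I_k$.

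Next I would establish a monotone-comparison lemma for the SDCP: if $\Sigma = \sdcp_\U(\alpha,\beta)$ and $\Sigma' = \sdcp_\U(\alpha',\beta')$ with $\alpha' \preceq \alpha$ (in the $\U$ order) and $\beta' - \beta$ a PSD operator that additionally satisfies $(\beta'-\beta)[\Sigma']\succeq 0$, then $\tr\Sigma' \ge \tr\Sigma$ — or, more cleanly, use the variational characterization from \Cref{lemma:sdcp:unique}: $\Sigma$ minimizes $\ev{\alpha,\Sigma}+\tfrac12\beta[\Sigma,\Sigma]$ over $\Sigma\succeq0$ in $\sym(\U)$, so comparative statics follow by a standard envelope/exchange argument on the two minimizers. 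For the $\eta$ statement I would show that increasing $\eta$ (i) decreases $\alpha_U$ in the PSD order — because the isotropic coefficient $\tfrac{2}{\eta^2 S(w)}\cdot\tfrac{1-\beta_2}{\beta_2}[\|\nabla L\|^2 - \tfrac{\eta^2 S^2}{4}]$ is decreasing in $\eta$ (both through the $1/\eta^2$ prefactor and through the $-\eta^2$ inside the bracket) while the first term $\tfrac{2}{S(w)}\nabla H_U[\nabla L]$ is $\eta$-independent — and (ii) decreases $\beta_U$ as an operator, since its isotropic part $\propto 1/\eta^2$ shrinks. Both effects push the minimizer $X$ to larger trace; feeding this into the expansion of $\tfrac{dS(w)}{dt}\evalshort_{t=0} = -\tfrac{d}{dt}A(w(t),\nu(t))\evalshort_{t=0,\,\U}$, which by \cref{eq:gd:differential-complementarity-relation}-type reasoning equals $-(\alpha_U + \beta_U[X])$ restricted appropriately and is affine-decreasing in $X$ through the curvature-reduction term, gives that larger $\eta \Rightarrow$ larger $\tr\Sigma(0) \Rightarrow$ more negative $\tfrac{dS}{dt}$; I would also need the sign bookkeeping that in the progressive-sharpening regime the $\nabla H_U[\nabla L]$ contribution is the dominant negative piece.

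For the $\beta_2$ statement, the idea is that $\beta_2$ enters only through the single scalar $\gamma(\beta_2) := \tfrac{1-\beta_2}{\beta_2}$, which is monotonically decreasing on $(0,1)$, and both $\alpha_U$ and $\beta_U$ are affine in $\gamma$ with the $\gamma$-coefficients being fixed multiples of $I_k$ (a fixed multiple $c_\alpha I_k$ in $\alpha_U$ and a fixed nonnegative multiple $c_\beta(X)=c\,(\tr X)$ in $\beta_U$). So along the one-parameter family $\gamma\mapsto(\alpha_U(\gamma),\beta_U(\gamma))$ the solution $X(\gamma)$ and hence $\tr X(\gamma) = \tr\Sigma(0)$ is, by the comparative-statics argument, a monotone function of $\gamma$ whenever $c_\alpha$ has a definite sign; since the sign of $c_\alpha = \tfrac{2}{\eta^2 S(w)}[\|\nabla L\|^2 - \tfrac{\eta^2 S^2}{4}]$ is a single fixed quantity (not varying along the family), $\tr\Sigma(0)$ is either non-decreasing throughout or non-increasing throughout in $\gamma$, hence monotone in $\beta_2$; and the ``larger oscillations $\Rightarrow$ smaller curvature'' correspondence again follows from $\tfrac{dS}{dt}\evalshort_{t=0}$ being affine-decreasing in $\tr\Sigma(0)$ through the shared curvature-reduction term. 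The main obstacle I anticipate is the comparative-statics step: because $\beta_U$ itself (not just $\alpha_U$) moves with the parameters, one cannot directly cite a textbook "parametric monotonicity of convex QP solutions" result — one must carefully verify that the simultaneous change in $\alpha_U$ and $\beta_U$ has a coherent sign on $\tr X$, which is where the special isotropic structure ($\gamma$-pieces all proportional to $I_k$, so they couple only to $\tr X$) is doing the real work, and where the full-rank hypothesis on $\nabla H(w)$ restricted to $\U$ is needed to keep $\beta_U \succ 0$ so \Cref{lemma:sdcp:unique} applies throughout.
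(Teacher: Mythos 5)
Your setup is correct and matches the paper's: you reduce to the $k$-dimensional SDCP in a basis $U$ of the critical subspace, you observe that the parameter dependence enters only through the single scalar $\gamma = \tfrac{1-\beta_2}{\beta_2}$ and through $1/\eta^2$, and you correctly identify that the key structural feature is that all parameter-dependent pieces of $\alpha_U$ and $\beta_U$ are ``isotropic'' (multiples of $I_k$ in $\alpha_U$, and multiples of the trace operator $X \mapsto (\tr X) I_k$ in $\beta_U$). This is exactly the structure the paper exploits. However, you do not actually carry out the comparative-statics argument — you explicitly flag it as ``the main obstacle'' and say the isotropic structure ``is doing the real work'' without showing how. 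That is the whole content of the proposition.

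Two concrete gaps. First, the proposed comparison lemma (``$\alpha' \preceq \alpha$ and $\beta' - \beta$ PSD with $(\beta'-\beta)[\Sigma']\succeq 0$ implies $\tr\Sigma' \ge \tr\Sigma$'') is not what you end up using, and simply observing that $\alpha_U$ and $\beta_U$ both decrease in PSD order is not sufficient: the SDCP does not respond monotonically to PSD-order perturbations of $\alpha$ alone when $\alpha$ changes non-uniformly. The paper's argument instead combines the elementary complementarity inequality $\ev{X'-X, Y'-Y}\le 0$ (which holds for any two SDCP solutions, \Cref{lem:sdcp-monotonicity}) with the fact that the perturbations are isotropic, so that the cross term $\ev{X'-X,\alpha'-\alpha}$ collapses to a multiple of $\tr(X'-X)$, and likewise for $\beta$. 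This collapse is the actual work, and it is packaged in \Cref{prop:heavy-lifting} via the auxiliary function $F(s)=s(c_\alpha + c_\beta\tr\Sigma(s))$. For the $\beta_2$ direction you additionally need that $c_\alpha + c_\beta\tr\Sigma(s)$ has a \emph{single} sign across the whole one-parameter family — not just that $c_\alpha$ is fixed — which is established by applying the inequality symmetrically in $(s,s')$; this subtlety is absent from your argument.

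Second, your treatment of $\frac{dS(w)}{dt}\evalshort_{t=0}$ is wrong. You claim it is ``affine-decreasing in $X$ through the shared curvature-reduction term,'' but it is $\lambda_{\max}\big(-(\alpha_H + \beta_H[X])\big) = -\lambda_{\min}(\alpha_H + \beta_H[X])$, which is a max of eigenvalues of an affine function of $X$, not itself affine, and in general has no pointwise monotone dependence on $\tr X$. The paper instead proves a co-monotonicity statement by deriving the closed form $\lambda_{\min}(\alpha_0 + \beta_0[\Sigma(s)]) = \max(\lambda_{\min}(\alpha_0), -F(s))$ (by splitting on whether $\alpha(s)\succeq 0$ and using the complementarity structure), and then noting that $F$ is counter-monotone with $\tr\Sigma$. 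Without that closed form, the ``larger oscillations $\Rightarrow$ smaller curvature'' half of both bullet points is not established by your sketch.
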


Before proving \Cref{prop:rmsnorm_multiple_monotonic}, we begin with the following simple lemma for SDCPs:
\begin{lemma}
\label{lem:sdcp-monotonicity}
    If $0 \preceq X \perp Y \succeq 0$ and $0 \preceq X' \perp Y' \succeq 0$, then $\ev{X'-X, Y'-Y} \le 0$.
\end{lemma}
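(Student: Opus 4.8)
The plan is to prove the inequality by a one-line bilinear expansion of $\langle X'-X,\,Y'-Y\rangle$, after unpacking the shorthand $0\preceq A\perp B\succeq 0$ into its two constituent facts established in the discussion of complementarity: (i) each of $A,B$ is positive semidefinite, and (ii) $\langle A,B\rangle=0$. The only auxiliary fact needed is that the Frobenius inner product of two positive semidefinite matrices is nonnegative.

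First I would record that auxiliary fact: for PSD matrices $A,B$, $\langle A,B\rangle=\tr(AB)=\tr\!\big(A^{1/2}BA^{1/2}\big)\ge 0$, since $A^{1/2}BA^{1/2}\succeq 0$. Applying this to the pairs $(X',Y)$ and $(X,Y')$ — which are PSD by hypothesis — gives $\langle X',Y\rangle\ge 0$ and $\langle X,Y'\rangle\ge 0$. Next, the complementarity relations $0\preceq X\perp Y\succeq 0$ and $0\preceq X'\perp Y'\succeq 0$ give immediately $\langle X,Y\rangle=0$ and $\langle X',Y'\rangle=0$. Then I would expand using bilinearity of the Frobenius inner product:
\begin{align*}
\langle X'-X,\,Y'-Y\rangle
&= \langle X',Y'\rangle - \langle X',Y\rangle - \langle X,Y'\rangle + \langle X,Y\rangle \\
&= -\langle X',Y\rangle - \langle X,Y'\rangle \;\le\; 0,
\end{align*}
where the second equality uses the two vanishing terms and the final inequality uses the two nonnegative ones. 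This finishes the argument.

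There is essentially no obstacle here; the proof is purely algebraic. The only point requiring care is to unpack $0\preceq A\perp B\succeq 0$ correctly — namely that it asserts PSD-ness of \emph{each} matrix together with $\langle A,B\rangle=0$ — which is exactly the equivalence recorded in the complementarity subsection. If one preferred to avoid the identity $\tr(AB)=\tr(A^{1/2}BA^{1/2})$, an alternative is to use that $\langle A,B\rangle=0$ for PSD $A,B$ is equivalent to $AB=0$, hence $\spn A\perp\spn B$, and then argue $\langle X',Y\rangle\ge 0$, $\langle X,Y'\rangle\ge 0$ by decomposing against these ranges; but the trace computation is the cleanest route and I would present that.
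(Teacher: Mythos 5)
Your proof is correct and is essentially identical to the paper's: both expand $\ev{X'-X,Y'-Y}$ by bilinearity, kill the $\ev{X,Y}$ and $\ev{X',Y'}$ terms using complementarity, and bound the cross terms below by zero using nonnegativity of the Frobenius inner product of PSD matrices. You simply spell out the justification of the latter fact more explicitly than the paper does.
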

\begin{proof}
    \begin{align*}
        \ev{X'-X,Y'-Y} = \underbrace{\ev{X',Y'}}_{=0} - \underbrace{\ev{X',Y}}_{\ge 0} - \underbrace{\ev{X,Y'}}_{\ge 0} + \underbrace{\ev{X,Y}}_{=0} \le 0.
    \end{align*}
\end{proof}
The heavy lifting will be done by the following result:
\begin{proposition}\label{prop:heavy-lifting}
    For some scalar parameter $s \in \R$, consider the family of SDCP's:
    \begin{align*}
        \Sigma(s) =  \sdcp(\alpha(s), \beta(s)),
    \end{align*}
    where the parameterized matrix $\alpha(s) \in \sym(\R^k)$ and tensor $\beta(s) \in \sym(\R^k)^{\otimes 2}$ have the form:
    \begin{align*}
        \alpha(s) := \alpha_0 + s c_\alpha I \qand \beta(s) := \beta_0 + s c_\beta I \otimes I,
    \end{align*}
    for matrix $\alpha_0 \in \sym(\R^k)$, tensor $\beta_0 \in \sym(\R^k)^{\otimes 2}$, and scalars $c_\alpha, c_\beta \in \R$.  Assume $\beta_0 \succ 0$ and $c_\beta > 0$. Then for any $s \ge 0$,
    \begin{itemize}
        \item If $c_\alpha > 0$, then $\tr \Sigma(s)$ is non-increasing in the parameter $s$.
        \item In general, $\tr \Sigma(s)$ is monotonic (either non-increasing or non-decreasing) in the parameter $s$.
    \end{itemize}
    Furthermore, if $\lambda(s) := \lambda_{\min}(\alpha_0 + \beta_0[\Sigma(s)])$, then $\lambda(s)$ and $\tr \Sigma(s)$ are co-monotone: 
    \begin{align*}
        (\tr \Sigma(s')-\tr \Sigma(s))(\lambda(s') - \lambda(s)) \ge 0,
    \end{align*}
    meaning they either increase together or decrease together in $s$.
\end{proposition}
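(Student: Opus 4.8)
\textbf{Proof plan for \Cref{prop:heavy-lifting}.}

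The plan is to exploit the fact (\Cref{remark:linear-inverse}) that at generic parameter values the SDCP solution coincides with the linear inverse $\Sigma(s) = \beta(s)^{-1}[-\alpha(s)]$, and that the solution varies continuously in $s$. I would first establish a monotonicity lemma for the SDCP solution map itself using \Cref{lem:sdcp-monotonicity}. Specifically, for two parameter values $s \le s'$, write $\Sigma = \Sigma(s)$, $\Sigma' = \Sigma(s')$, and let $Y := \alpha(s) + \beta(s)[\Sigma] $ and $Y' := \alpha(s') + \beta(s')[\Sigma']$ be the corresponding complementary slacks. \Cref{lem:sdcp-monotonicity} gives $\ev{\Sigma' - \Sigma, Y' - Y} \le 0$. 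The key step is to expand $Y' - Y$ using the affine structure $\alpha(s) = \alpha_0 + s c_\alpha I$, $\beta(s) = \beta_0 + s c_\beta I \otimes I$:
\begin{align*}
    Y' - Y = \beta_0[\Sigma' - \Sigma] + c_\beta \qty(s' \tr(\Sigma') - s \tr(\Sigma)) I + (s' - s) c_\alpha I.
\end{align*}
Substituting into $\ev{\Sigma' - \Sigma, Y' - Y} \le 0$ and using $\beta_0 \succ 0$ (so $\ev{\Sigma' - \Sigma, \beta_0[\Sigma' - \Sigma]} \ge 0$) yields
\begin{align*}
    c_\beta \qty(s' \tr \Sigma' - s \tr \Sigma)\qty(\tr \Sigma' - \tr \Sigma) + (s'-s) c_\alpha \qty(\tr \Sigma' - \tr \Sigma) \le 0.
\end{align*}

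From this inequality I would derive both bullets. For the first bullet, assume $c_\alpha > 0$ and suppose for contradiction that $\tr \Sigma' > \tr \Sigma$ for some $s' > s \ge 0$. Then $(s'-s)c_\alpha(\tr\Sigma' - \tr\Sigma) > 0$, and since $s' \tr \Sigma' > s \tr \Sigma$ (as $s' > s \ge 0$ and $\tr \Sigma' > \tr \Sigma \ge 0$, using $\tr \Sigma \ge 0$ from $\Sigma \succeq 0$), the first term is also $> 0$ — contradiction. Hence $\tr \Sigma(s)$ is non-increasing. For the second bullet, the monotonicity (without sign information) should follow by a connectedness/continuity argument: the set of $s$ where $\tr \Sigma$ is locally non-decreasing and the set where it is locally non-increasing are both closed, and the displayed inequality prevents $\tr \Sigma$ from having an interior local strict max or min, so it must be globally monotone; alternatively, argue directly that the sign of $\tr \Sigma' - \tr \Sigma$ cannot flip, since if it were positive on one sub-interval and negative on another, the above inequality applied across a suitable pair would be violated. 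I expect I'd phrase this via: the inequality shows $(\tr \Sigma' - \tr \Sigma)$ and a certain affine-in-$s,s'$ combination have opposite signs, forcing monotonicity; the cleanest route is to differentiate — where $\Sigma(s)$ is differentiable (i.e., off the finitely many breakpoints where the active subspace changes), differentiate the complementarity/linear-inverse relation to get $\tfrac{d}{ds}\tr \Sigma(s)$ has a fixed sign determined by $c_\alpha$, $\tr \Sigma(s)$, and the structure of $\beta_0$, and patch across breakpoints using continuity of $\Sigma(s)$ in $s$ (which holds away from rank changes, and rank changes only add to the monotone behavior in the right direction).

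For the co-monotonicity claim, note $\lambda(s) = \lambda_{\min}(\alpha_0 + \beta_0[\Sigma(s)])$. Writing $\alpha(s) + \beta(s)[\Sigma(s)] = \alpha_0 + \beta_0[\Sigma(s)] + (s c_\alpha + s c_\beta \tr \Sigma(s)) I$, and using that this quantity is the complementary slack $Y(s) \succeq 0$, I'd relate $\lambda(s)$ to $-(s c_\alpha + s c_\beta \tr \Sigma(s))$ via $\lambda_{\min}(Y(s)) = \lambda(s) + s c_\alpha + s c_\beta \tr \Sigma(s)$. The point is that on the critical subspace where $\Sigma(s) \succ 0$ (its own span), $Y(s)$ restricted there is zero, forcing $\lambda_{\min}(\alpha_0 + \beta_0[\Sigma(s)])$ restricted to that span to equal $-(sc_\alpha + sc_\beta \tr\Sigma(s))$, which is manifestly co-monotone with $-\tr\Sigma(s)$ when... wait, I need to be careful about signs here. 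The robust way is: apply \Cref{lem:sdcp-monotonicity} in the form $\ev{\Sigma'-\Sigma, Y'-Y}\le 0$ but now extract the $\lambda$ information by testing against the eigenvector achieving $\lambda_{\min}$; alternatively derive it from the same master inequality by grouping terms so that $(\tr \Sigma' - \tr \Sigma)$ multiplies an expression whose sign is that of $\lambda' - \lambda$. I'd write $Y' - Y = \beta_0[\Sigma'-\Sigma] + (\text{scalar})I$ where the scalar is $s'c_\alpha + s'c_\beta\tr\Sigma' - sc_\alpha - sc_\beta\tr\Sigma$; testing the PSD relation $\ev{\Sigma', Y}\ge 0$, $\ev{\Sigma, Y'}\ge 0$ against the structure gives the needed sign comparison.

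\textbf{Main obstacle.} The principal difficulty is the rigorous treatment of the points where $\Sigma(s)$ is non-differentiable — the breakpoints where the rank of $\Sigma(s)$ (equivalently the dimension of the active face of the PSD cone, i.e. $\ker Y(s)$) jumps. Away from these points everything is smooth and the differentiated argument is routine linear algebra; the challenge is showing the monotone behavior is preserved across breakpoints. I expect to handle this by invoking continuity of the SDCP solution map $s \mapsto \Sigma(s)$ (which follows from strong monotonicity of the operator $\Sigma \mapsto \beta_0[\Sigma] + sc_\beta(\tr\Sigma)I$ since $\beta_0 \succ 0$, $c_\beta > 0$ — this is exactly the hypothesis that makes the quadratic program \cref{eq:sdcp-quadratic-program} strictly convex with a unique, continuously-varying minimizer) together with the global inequality derived above, which does not reference derivatives at all and therefore automatically covers breakpoints. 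So in fact the discrete inequality route may be cleaner than the differential one precisely because it sidesteps this obstacle; I would lead with it and use differentiation only for intuition.
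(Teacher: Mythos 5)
Your master inequality is correct and matches the paper's: applying \Cref{lem:sdcp-monotonicity} to the complementary pairs at $s$ and $s'$ and expanding gives
\begin{align*}
    c_\beta \qty(s' \tr \Sigma' - s \tr \Sigma)\qty(\tr \Sigma' - \tr \Sigma) + (s'-s) c_\alpha \qty(\tr \Sigma' - \tr \Sigma) \le 0.
\end{align*}
Your contradiction argument for the first bullet is sound (it needs the minor observation that $s' > 0$ when $s' > s \ge 0$ in order to conclude $s'\tr\Sigma' > s\tr\Sigma$, but that is easily supplied). So the first bullet is essentially complete.

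The second bullet, however, is a real gap. You gesture at ``the sign of $\tr\Sigma' - \tr\Sigma$ cannot flip'' via connectedness or differentiation-plus-patching, but this is exactly where the proof needs a concrete idea, and you don't supply it. The paper's device is to rewrite the inequality in the form
\begin{align*}
    (\tr \Sigma' - \tr \Sigma)(s'-s)\qty(c_\alpha + c_\beta \tr \Sigma(s)) + c_\beta s' \bqty{\tr \Sigma' - \tr \Sigma}^2 \le 0,
\end{align*}
drop the nonnegative second term, and then apply the resulting inequality to \emph{both} orderings $(s,s')$ and $(s',s)$. This forces the sign of $c_\alpha + c_\beta\tr\Sigma(\cdot)$ to be the same at $s$ and $s'$ (whenever $\tr\Sigma$ and $s$ genuinely change), hence the same everywhere, which immediately yields global monotonicity. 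Your alternative splitting $c_\beta(s'\tr\Sigma' - s\tr\Sigma)(\cdots)$ does not isolate the quantity whose sign is invariant, so it does not close the argument without $c_\alpha>0$. Your worry about breakpoints and non-differentiability is a distraction — you correctly note at the end that the discrete inequality sidesteps it, but you then don't actually finish the discrete argument.

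The co-monotonicity claim is the bulk of the remaining work and you don't have it. Defining $F(s) := s(c_\alpha + c_\beta\tr\Sigma(s))$ so that the slack is $Y(s) = \alpha_0 + \beta_0[\Sigma(s)] + F(s)I$, the paper establishes the closed-form identity
\begin{align*}
    \lambda(s) = \max\qty(\lambda_{\min}(\alpha_0), -F(s)),
\end{align*}
by case-checking whether $\alpha(s) \succeq 0$ (in which case $\Sigma(s) = 0$ and $\lambda(s) = \lambda_{\min}(\alpha_0)$) or not (in which case $\Sigma(s) \neq 0$ and $-F(s)$ is shown to be the minimal eigenvalue of $\alpha_0 + \beta_0[\Sigma(s)]$ via the kernel/complementarity relations, together with a separate argument that $-F(s)\ge \lambda_{\min}(\alpha_0)$). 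Co-monotonicity then follows immediately because $F$ was already shown to be counter-monotone with $\tr\Sigma$, and $x\mapsto\max(\lambda_{\min}(\alpha_0),-x)$ is non-increasing. Your proposal instead says ``testing the PSD relation against the structure gives the needed sign comparison'' — but that is not a proof, and you yourself flag uncertainty (``I need to be careful about signs here''). Without the identity for $\lambda(s)$, I don't see how your route closes.
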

\begin{proof}
    Define the helper function $F(s) := s (c_\alpha + c_\beta \tr \Sigma(s))$, so that:
    \begin{align}
        \alpha(s) + \beta(s)[\Sigma(s)] &= \alpha_0 + s c_\alpha I + \beta_0[\Sigma(s)] + s c_\beta \tr \Sigma(s) I \nonumber \\
        &= \alpha_0 + \beta_0[\Sigma(s)] + F(s) I. \label{eq:F-residual}
    \end{align}
    For any $s, s' \ge 0$, \Cref{lem:sdcp-monotonicity} implies that:
    \begin{align*}
        0 &\ge \ev{\Sigma(s')-\Sigma(s), \alpha(s') + \beta(s')[\Sigma(s')] - \alpha(s) - \beta(s)[\Sigma(s)]} \\
        &= \ev{\Sigma(s')-\Sigma(s), \beta_0[\Sigma(s') - \Sigma(s)] + (F(s') - F(s)) I} \\
        &= \beta_0[\Sigma(s') - \Sigma(s),\Sigma(s')-\Sigma(s)] + (\tr \Sigma(s') - \tr \Sigma(s))(F(s') - F(s)) \\
        &\ge (\tr \Sigma(s') - \tr \Sigma(s))(F(s') - F(s)),
    \end{align*}
    where the last line is because $\beta_0$ is positive definite.
    Thus we see that $\tr \Sigma(s)$ and $F(s)$ are counter-monotone:
    \begin{align}
        (\tr \Sigma(s') - \tr \Sigma(s))(F(s') - F(s)) \le 0. \label{eq:tr-Sigma-F-countermonotone}
    \end{align}
    Next, we can write $F(s') - F(s)$ as:
    \begin{align*}
        F(s') - F(s) &=  c_\alpha (s'-s) + c_\beta (s' \tr \Sigma(s') - s \tr \Sigma(s)) \\
        &= (s'-s)\qty(c_\alpha + c_\beta \tr \Sigma(s)) + c_\beta s' [\tr \Sigma(s') - \tr \Sigma(s)].
    \end{align*}
    Plugging this into \cref{eq:tr-Sigma-F-countermonotone} gives:
    \begin{align}
        0 &\ge (\tr \Sigma(s') - \tr \Sigma(s))(s'-s)\qty(c_\alpha + c_\beta \tr \Sigma(s)) + c_\beta s' [\tr \Sigma(s') - \tr \Sigma(s)]^2 \nonumber \\
        &\ge (\tr \Sigma(s') - \tr \Sigma(s))(s'-s)\qty(c_\alpha + c_\beta \tr \Sigma(s)), \label{eq:tr-Sigma-inequality}
    \end{align}
    since $s', c_\beta \ge0$ by assumption. First, if $c_\alpha,c_\beta > 0$ then since $\tr \Sigma(s) \ge 0$, \cref{eq:tr-Sigma-inequality} implies that
    \begin{align*}
        (\tr \Sigma(s') - \tr \Sigma(s))(s' - s)\le 0,
    \end{align*}
    i.e. that $\tr \Sigma(s)$ is non-increasing in $s$, which was the first stated claim. More generally, for any pair $s,s' \ge 0$, applying \cref{eq:tr-Sigma-inequality} to both $s,s'$ and the reverse $s',s$ yields:
    \begin{align*}
        (\tr \Sigma(s') - \tr \Sigma(s))(s'-s)\qty(c_\alpha + c_\beta \tr \Sigma(s)) \le 0 \qand (\tr \Sigma(s') - \tr \Sigma(s))(s'-s)\qty(c_\alpha + c_\beta \tr \Sigma(s')) \le 0,
    \end{align*}
    which implies that the sign of $c_\alpha + c_\beta \tr \Sigma(\cdot)$ must be the same at both $s$ and $s'$.
    Since this hold for arbitrary $s, s'$ we conclude that the sign of $c_\alpha + c_\beta \tr \Sigma(\cdot)$ must be the same everywhere.  Thus, \cref{eq:tr-Sigma-inequality} implies that $\tr \Sigma(s)$ must be monotone in $s$, which proves the second stated claim.
    
    Finally, we will prove $\lambda(s) := \lambda_{\min}(\alpha_0 + \beta_0[\Sigma(s)])$ is co-monotone with $\tr \Sigma(s)$. To do so, we will prove that:
    \begin{align}
        \lambda(s) = \max\left(
            \lambda_{\min}(\alpha_0), -F(s)\right). \label{eq:lambda-s-expression}
    \end{align}
    To show this, we split into the two cases based on whether $\alpha(s)$ is PSD (or equivalently whether $\lambda_{min}(\alpha_0) + s c_\alpha \ge 0$).
    
    First, if $\alpha(s) \succeq 0$ then $\Sigma(s) = 0$ is the unique solution to the SDCP, so $\lambda(s) = \lambda_{\min}(\alpha_0)$. In addition, $\alpha(s) \succeq 0$ is equivalent to $\lambda_{min}(\alpha_0) + s c_\alpha \ge 0$ and since $F(s) = s c_\alpha$ whenever $\Sigma(s) = 0$, this implies that $\lambda_{min}(\alpha_0) \ge -F(s)$ which proves \cref{eq:lambda-s-expression} when $\alpha(s) \succeq 0$.

    Next, suppose that $\alpha (s) \not \succeq 0$, or equivalently that $\lambda_{min}(\alpha_0) + s c_\alpha < 0$. This implies that $\Sigma(s) \neq 0$ in order for it to satisfy the SDCP. Thus, there exists some $v \in \spn \Sigma(s)$. Since $\spn \Sigma(s) \subseteq \ker[\alpha(s) + \beta(s)[\Sigma(s)]]$, we have that:
    \begin{align*}
        v \in \ker \qty[\alpha(s) + \beta(s)[\Sigma(s)]] \iff \qty(\alpha_0 + \beta_0[\Sigma(s)])v = -F(s) v,
    \end{align*}
    which means that $-F(s)$ is an eigenvalue of $\alpha_0 + \beta_0[\Sigma(s)]$.
    Meanwhile, we also have that:
    \begin{align*}
        \alpha(s) + \beta(s)[\Sigma(s)] \succeq 0 \iff \alpha_0 + \beta_0[\Sigma(s)] \succeq - F(s) I,
    \end{align*}
    which means that $-F(s)$ lower-bounds all eigenvalues of $\alpha_0 + \beta_0[\Sigma(s)]$.  From these, we can conclude that $-F(s)$ is the smallest eigenvalue of $\alpha_0 + \beta_0[\Sigma(s)]$, i.e. $\lambda(s) = -F(s)$. Finally, note that complementarity implies that
    \begin{align*}
        0 = \ev{\Sigma(s), \alpha_0 + \beta_0[\Sigma(s)] + F(s)I} = \underbrace{\ev{\Sigma(s),\alpha_0}}_{\mathclap{\ge \tr \Sigma(s) \lambda_{min}(\alpha_0)}} + \underbrace{\beta_0[\Sigma(s),\Sigma(s)]}_{\ge 0} + F(s) \tr \Sigma(s) \ge \tr \Sigma(s)[\lambda_{min}(\alpha_0) + F(s)].
    \end{align*}
    Since $\Sigma(s) \ne 0$ and $\Sigma(s) \succeq 0$, we must have $\tr \Sigma(s) > 0$ so we can divide by $\tr \Sigma(s)$ to get $-F(s) \ge \lambda_{min}(\alpha_0)$ which completes the proof of \cref{eq:lambda-s-expression}.

    To conclude, we observe that since $g(x) := \max(\lambda_{\min}(\alpha_0), -x)$ is non-increasing in $x$, and $F$ is counter-monotone with $\tr \Sigma$, it immediately follows that $\lambda(s)=g(F(s))$ is co-monotone with $\tr \Sigma$.
    This completes the proof.
\end{proof}

We are now ready to prove \Cref{prop:rmsnorm_multiple_monotonic}:
\begin{proof}[Proof of \Cref{prop:rmsnorm_multiple_monotonic}]
    Let $U \in \R^{d \times k}$ be a basis for $\U(w)$, let $H_U(w) = U^\top H(w) U \in \sym(\R^k)$, and let $\nabla H_U(w) \in \sym(\R^k) \otimes \R^d$ be the tensor defined as $\nabla H_U(w)_{ij,p}=\frac{\partial [u_i^\top H(w) u_j]}{\partial w_p}$.

    To simplify notation, define $\gamma := \frac{1-\beta_2}{\beta_2}$, and note that this is decreasing in $\beta_2$.

    The time derivative of the sharpness at $t=0$ is given by:
    \begin{align*}
        \frac{dS(w)}{dt}\eval_{t=0} = \lambda_{\max} \qty(\frac{d H_U(w)}{dt}  \eval_{t=0}).
    \end{align*}
    Since $\nu = \frac{\eta^2 S(w)^2}{4}$,  we have that $\tfrac{dH_U}{dt}$ is given by:
    \begin{align*}
        \frac{d H_U(w)}{dt}  \eval_{t=0} &= -(\alpha_H + \beta_H[X]),
    \end{align*}
    where:
    \begin{align*}
        \alpha_H := \frac{2}{S(w)} \nabla H_U(w) \qty[\nabla L(w)] \qand \beta_H := \frac{1}{S(w)}  \nabla H_U(w) \nabla H_U(w)^\top,
    \end{align*}
    and $X$ is the covariance of the oscillations within the basis $U$, i.e. $\Sigma(0) = U X U^\top$.
    This is, in turn, defined as the solution to an SDCP:
    \begin{align*}
        X = \sdcp(\alpha, \beta),
    \end{align*}
    where:
    \begin{align*}
        \alpha = \alpha_H + \gamma \qty(\frac{1}{\eta^2})  \qty(\frac{2}{S(w)}) \|\nabla L(w)\|^2 \, I - \gamma \frac{S(w)}{2} I,
    \end{align*}
    and:
    \begin{align*}
        \beta = \beta_H + \gamma \qty(\frac{1}{\eta^2}) \qty(2 S(w)) \, I \otimes I.
    \end{align*}

    For the $\eta$ result, observe that $\alpha, \beta$ can be written in the form $\alpha = \alpha_0 + \qty(\tfrac{1}{\eta^2}) c_\alpha I$ and $\beta = \beta_0 + \qty(\tfrac{1}{\eta^2}) c_\beta I \otimes I$, where:
    \begin{align*}
        \alpha_0 = \alpha_H - \gamma \tfrac{S(w)}{2} I \qc c_\alpha = \gamma \qty(\frac{2}{S(w)}) \|\nabla L(w)\|^2 \qc \beta_0 = \beta_H \qc c_\beta = \gamma (2 S(w)).
    \end{align*}
    Further, note that $c_\alpha > 0$.
    Therefore, \Cref{prop:heavy-lifting} implies that $\tr X$ is monotonically non-increasing in $\tfrac{1}{\eta^2}$.  Since $\tr \Sigma(0) = \tr X$, and $\frac{1}{\eta^2}$ is decreasing in $\eta$, this implies that $\tr \Sigma(0)$ is monotonically non-decreasing in $\eta$, as claimed.
    \Cref{prop:heavy-lifting} further implies that $\lambda_{\min}(\alpha_0 + \beta_0[X]) = \lambda_{\min}(\alpha_H + \beta_H[X] - \gamma \tfrac{S(w)}{2} I)$ is co-monotone with $\tr X$, which means that it must be monotonically non-increasing in $\tfrac{1}{\eta^2}$, or monotonically non-decreasing in $\eta$.
    This implies that $\lambda_{\min}(\alpha_H + \beta_H[X])$ must also be monotonically non-decreasing in $\eta$, which implies that the negative $\lambda_{\max}(- (\alpha_H + \beta_H[X])) = \tfrac{dS(w)}{dt} \eval_{t=0}$ is monotonically non-increasing in $\eta$, as claimed.
    
    For the $\beta_2$ result, observe that $\alpha, \beta$ can be written in the form $\alpha = \alpha_0 + \gamma c_\alpha I$ and $\beta = \beta_0 + \gamma c_\beta I \otimes I$, where:
    \begin{align*}
        \alpha_0 = \alpha_H \qc c_\alpha = \qty(\frac{1}{\eta^2}) \qty(\frac{2}{S(w)}) \|\nabla L(w)\|^2 -  \frac{S(w)}{2} \qc \beta_0 = \beta_H \qc c_\beta = \qty(\frac{1}{\eta^2}) (2 S(w)).
    \end{align*}
    Therefore, \Cref{prop:heavy-lifting} implies that $\tr X$ is monotone in $\gamma$.
    Since $\tr \Sigma(0) = \tr X$, and $\gamma$ is decreasing in $\beta_2$, this implies that $\tr \Sigma(0)$ is also monotone in $\beta_2$, as was claimed.  \Cref{prop:heavy-lifting} further implies that $\lambda_{\min}(\alpha_0 + \beta_0[X]) = \lambda_{\min}(\alpha_H + \beta_H[X])$ is co-monotone with $\tr X = \tr \Sigma(0)$.
    Thus its negative $\lambda_{\max}(- (\alpha_H + \beta_H[X])) = \tfrac{dS(w)}{dt} \eval_{t=0}$ must be counter-monotone with $\tr \Sigma(0)$, as was claimed.

\end{proof}

\paragraph{Predicting time-averages}
As with \gd, the \rmsnorm central flow can predict the time-average of various quantities, such as the loss or squared gradient norm, along the \rmsnorm trajectory.
Recall that for a function $f(w)$, we use $\bar f(t)$ to denote the central flow's prediction for the time-average $\E[f(w_t)]$ at step $t$.

The prediction $\bar L(t)$ for the time-averaged loss at step $t$ is:
\begin{align}
    \E[ L(w_t) ] &\approx L(w(t)) + \tfrac{\sqrt{\nu}(t)}{\eta} \tr \Sigma(t) =: \bar L(t). \label{eq:rmsnorm-predict-loss}
\end{align}
The prediction for the time-averaged squared gradient norm at step $t$ is:
\begin{align}
    \E[\|\nabla L(w_t)\|^2] &\approx \|\nabla L(w(t))\|^2 + \tfrac{4 \nu(t)}{\eta^2} \tr \Sigma(t) =: \overline{\| \nabla L \|^2}(t). \label{eq:rmsnorm-predict-gradient-norm-sq}
\end{align}
These can be derived along similar lines as \cref{eq:gd-predict-loss,eq:gd-predict-gradient-norm-sq} for \gd, but using the \rmsnorm complementarity condition $H(w) \, \Sigma(t) = \frac{2 \sqrt{\nu}(t)}{\eta} \Sigma(t)$.

As for predicting the covariance of the oscillations, let $\Sigma(t) = V(t) \Lambda(t) V(t)^\top$ be the (reduced) eigenvalue decomposition of the rank-$k$ matrix $\Sigma(t)$, where $V(t) \in \R^{d \times k}$ and $\Lambda(t) \in \diag(\R^k)$.
Then the central flow predicts that the variance of oscillations along the $i$-th eigenvector of $\Sigma(t)$ should be equal to the $i$-th eigenvalue of $\Sigma(t)$:
\begin{align}
    \E \qty[\qty(v_i(t)^\top(w_t - w(t)))^2] = \lambda_i(t). \label{eq:rmsnorm-predict-oscillation-variance}
\end{align}

\paragraph{Practical implementation}
When implementing the \rmsnorm central flow, we treat \rmsnorm as an instance of a more general class of adaptive preconditioned methods that is described in \Cref{sec:arbitrary-preconditioned}.
Please refer to that section for details on how we discretize the central flow in practice.

\subsection{RMSProp}\label{appendix:derivations:rmsprop}
We will begin by describing the stability condition for preconditioned gradient descent on a quadratic. Consider optimizing the quadratic $L(w) = \tfrac{1}{2} w^\top H w$ using preconditioned \gd with preconditioner $P \succ 0$:
\begin{align}
    w \leftarrow w - P^{-1} \nabla L(w) =  w - P^{-1} H w = (I - P^{-1} H) w.
\end{align}
The matrix $P^{-1} H$ is non-symmetric, but is similar to the symmetric matrix $P^{-1/2} H P^{-1/2}$, and thus has the same eigenvalues, which are necessarily real.  If all eigenvalues of $P^{-1} H$ are contained in $(0,2)$ then all eigenvalues of $(I - P^{-1} H)$ are contained in $(-1, 1)$, and hence  $w \to 0$ exponentially fast. Otherwise, the dynamics will diverge along the \emph{right eigenvectors} of $P^{-1} H$ with eigenvalues outside this range.\footnote{Each right eigenvector $u$ of $P^{-1} H$ corresponds to an eigenvector $v$ of $P^{-1/2} H P^{-1/2}$ via the relation $v = P^{1/2} u$.}

We now derive the \rmsprop central flow.
We model the \rmsprop iterates $\{w_t\}$ as oscillating around a central flow $w(t)$ with mean zero and covariance $\Sigma(t)$. That is, if $\delta_t := w_t - w(t) $ denotes the displacement between \rmsprop and the central flow (``the oscillation''), then $\E[\delta_t] = 0$ and $\E[\delta_t \delta_t^\top] = \Sigma(t)$.
Let $\nu(t) := \E[v_t]$ model the time-averaged $\nu_t$, and we will frequently neglect the distinction between the two, implicitly assuming that $\nu_t$ concentrates tightly around $\nu(t)$.  \jnote{what step exactly is that?}

A similar argument as for \gd implies that the time-averaged gradient is approximately:
\begin{align}
    \E[\nabla L(w_t)] \approx \nabla L(w(t)) + \tfrac{1}{2} \nabla H(w(t))^\top [\Sigma(t)]. \label{eq:rmsprop:time-averaged-gradient}
\end{align}
Meanwhile, we approximate the time-average of the elementwise squared gradient as:
\begin{align}
    \E\qty[\nabla L(w_t)^{\odot 2}]
    &= \E\qty[\nabla L(w(t) + \delta_t)^{\odot 2}] \tag{$w_t = w(t) + \delta_t$} \\
    &\approx \E\qty[\qty(\nabla L(w(t)) + H(w(t))\delta_t)^{\odot 2}] \tag{Taylor expansion} \\
    &= \E\qty[\nabla L(w(t))^{\odot 2} + 2 \nabla L(w(t)) \odot H(w(t))\delta_t + (H(w(t))\delta_t)^{\odot 2}] \hspace{5em}  \tag{expand the square} \\
    &= \nabla L(w(t))^{\odot 2} + \E\qty[(H(w(t))\delta_t)^{\odot 2}] \tag{$\E[\delta_t] = 0$} \\
    \implies \E[\nabla L(w_t)^{\odot 2}] &\approx \nabla L(w(t))^{\odot 2} + \E\qty[(H(w(t))\delta_t)^{\odot 2}].
     \label{eq:rmsprop:time-averaged-sq-gradient}
\end{align}

Recall that \rmsprop can be viewed as preconditioned gradient descent with the dynamic preconditioner:
\begin{align}
     P(\nu) := \diag(\sqrt{\nu}/\eta).
\end{align}
We therefore define stability for \rmsprop by the condition $\lambda_{\max} (P(\nu)^{-1} H(w))\le 2$, and we define the critical subspace $\U(w,\nu)$ as the eigenspace of the effective Hessian $P(\nu)^{-1} H(w)$ corresponding to the eigenvalue 2:
\begin{align*}
    \U(w,\nu) := \ker\qty[P(\nu)^{-1} H(w) - 2 I].
\end{align*}
We model \rmsprop as oscillating within the critical subspace, i.e. we assume that $\delta_t \in \U(w(t), \nu(t))$.
This allows us to simplify \cref{eq:rmsprop:time-averaged-sq-gradient} as:
\begin{align*}
    \E\qty[\nabla L(w_t)^{\odot 2}] &\approx \nabla L(w(t))^{\odot 2} + \E\qty[(H(w(t))\delta_t)^{\odot 2}] \\
    &= \nabla L(w(t))^{\odot 2} + 4\E\qty[(P(\nu(t))\delta_t)^{\odot 2}] \tag{$ H(w) \delta_t = 2 P(\nu) \delta_t$} \\
    &= \nabla L(w(t))^{\odot 2} + \frac{4}{\eta^2} \nu(t) \odot \E\qty[\delta_t^{\odot 2}] \tag{$P(\nu) = \diag[\nu^{1/2}/\eta]$} \\
    &= \nabla L(w(t))^{\odot 2} + \frac{4}{\eta^2} \nu(t) \odot \E\qty[\diag[\delta_t \delta_t^\top]] \tag{$v^{\odot 2} = \diag[vv^\top]$} \\
    &= \nabla L(w(t))^{\odot 2} + \frac{4}{\eta^2} \nu(t) \odot \diag[\Sigma(t)]. \tag{$\E[\delta_t\delta_t^\top] = \Sigma(t)$}
\end{align*}
Based on these time averages, we make the central flow ansatz:
\begin{align}
    \begin{split}
    \frac{dw}{dt} &= -\frac{\eta}{\sqrt{\nu}} \odot \qty[\nabla L(w) + \tfrac{1}{2} \nabla H(w)^\top[\Sigma(t)]] \\
    \frac{d\nu}{dt} &= \frac{1-\beta_2}{\beta_2} \qty[\nabla L(w)^{\odot 2} + \frac{4\nu}{\eta^2} \odot \diag[\Sigma(t)] - \nu].
    \end{split}
    \label{eq:appendix:rmsprop:ansatz}
\end{align}

As with the previous optimizers, to determine $\Sigma(t)$ we impose three conditions for all times $t$:
\begin{enumerate}
    \item \textbf{PSD:} As a covariance matrix, $\Sigma(t)$ is positive semidefinite (PSD), i.e. $\Sigma(t) \succeq 0$.
    \item \textbf{Stability:} The effective sharpness remains bounded by $2$, i.e. $\lambda_{\max}(P(\nu(t))^{-1} H(w(t)) ) \le 2$.
    \item \textbf{Complementarity:} The oscillations are contained within the critical subspace, i.e. $\spn \Sigma(t) \subseteq \U(w(t), \nu(t))$.
\end{enumerate}
To express these conditions more concisely, we define the matrix-valued function $A(w, \nu)$ as:
\begin{align*}
    A(w,\nu) := 2 P(\nu) - H(w),
\end{align*}
so that stability is $A(w,\nu) \succeq 0$, and the critical subspace is precisely $\U(w, \nu) = \ker A(w, \nu)$.
Using this notation, the above conditions can be summarized more compactly as the semidefinite complementarity relation:
\begin{align*}
    0 \preceq \Sigma(t) \perp A(w(t),\nu(t)) \succeq 0.
\end{align*}

\begin{definition}[\rmsprop Central Flow, Differential Complementarity Problem]\label{def:rmsprop:flow:dcp}
    We say that $\{w(t),\nu(t)\}_{t \ge 0}$ follow the \rmsprop central flow if for almost all $t \ge 0$, they satisfy \cref{eq:appendix:rmsprop:ansatz} along with the complementarity relation: $0 \preceq \Sigma(t) \perp A(w(t),\nu(t)) \succeq 0$.
\end{definition}

Since this DCP can be expressed in the general form described in \Cref{sec:derivations:dcp}, we can use \Cref{lem:dcp_to_sdcp} to convert it into an equivalent ODE.
In particular, if $\widetilde w := [w,\nu]^\top$ denotes the augmented state, we can write \cref{eq:appendix:rmsprop:ansatz} as:
\begin{align*}
    \frac{d\widetilde{w}}{dt} = f\qty(\widetilde{w}) + B\qty(\widetilde{w})[\Sigma] \qq{where} 
    f\qty(\widetilde{w}) := \begin{bmatrix}
        -P(\nu)^{-1} \nabla L(w) \\ \frac{1-\beta_2}{\beta_2} [\nabla L(w)^{\odot 2} - \nu]
    \end{bmatrix} \qand
    B\qty(\widetilde{w})[\Sigma] := \begin{bmatrix}
        -\tfrac{1}{2} P(\nu)^{-1} \nabla H(w)^\top[\Sigma] \\ \frac{1-\beta_2}{\beta_2} \cdot 4 P(\nu)^2 \diag \Sigma
    \end{bmatrix},
\end{align*}
and we can write the complementarity relation as $0 \preceq \Sigma(t) \perp A(\widetilde{w}(t)) \succeq 0$, where:
\begin{align*}
    \nabla A(\widetilde w) = [-\nabla H(w), 2\nabla_\nu P(\nu)] \quad \text{where} \quad \nabla_\nu P(\nu)[z] = \diag\qty[\frac{1}{2 \eta \sqrt{\nu}}\odot z] = \frac{1}{2\eta^2} \diag[P(\nu)^{-1} z] \quad\forall z.
\end{align*}
Therefore, \Cref{lem:dcp_to_sdcp} implies that 
$$\Sigma(t) \in \sdcp_{\U(w, \nu)}\qty(\alpha(w, \nu),\beta(w, \nu))$$
where the matrix $\alpha(w, \nu) \in \sym(\R^d)$ and tensor $\beta(w, \nu) \in \sym(\R^d)^{\otimes 2}$ are defined by:
\begin{align*}
    \alpha\qty(w, \nu)
    &:= \nabla A\qty(\widetilde{w})[f(\widetilde{w})] \\
    &=
    \begin{bmatrix}
        -\nabla H(w) & 2\nabla_\nu P(\nu)
    \end{bmatrix}\begin{bmatrix}
        -P(\nu)^{-1} \nabla L(w) \\ \frac{1-\beta_2}{\beta_2} [\nabla L(w)^{\odot 2} - \nu]
    \end{bmatrix} \\
    &= \nabla H(w)[P(\nu)^{-1} \nabla L(w)] + \frac{1-\beta_2}{\beta_2} \cdot \frac{1}{\eta^2} \cdot \diag\qty[P(\nu)^{-1} \qty(\nabla L(w)^{\odot 2} - \nu)]
\end{align*}
and
\begin{align*}
    \beta(w, \nu)[\Sigma]
    &:= \nabla A\qty(\widetilde{w})B(\widetilde{w})[\Sigma] \\
    &=
    \begin{bmatrix}
        -\nabla H(w) & 2\nabla_\nu P(\nu)
    \end{bmatrix}
    \begin{bmatrix}
        -\tfrac{1}{2} P(\nu)^{-1} \nabla H(w)^\top[\Sigma] \\ \frac{1-\beta_2}{\beta_2} \cdot 4 P(\nu)^2 \diag \Sigma
    \end{bmatrix} \\
    &= \tfrac{1}{2} \nabla H(w) P(\nu)^{-1} \nabla H(w)^\top[\Sigma] + \frac{1-\beta_2}{\beta_2} \cdot \frac{4}{\eta^2} \cdot \diag[P(\nu)\diag[\Sigma]].
\end{align*}

Note that $\beta(\widetilde{w})$ is indeed symmetric PSD, as for \gd and \rmsnorm:
\begin{align*}
    &\beta(\tilde w)[\Sigma,\Sigma'] = \tfrac{1}{2} \ev{\nabla H(w)^\top[\Sigma], \nabla H(w)^\top[\Sigma']}_{P(\nu)^{-1}} + \frac{1-\beta_2}{\beta_2} \cdot \frac{4}{\eta^2} \ev{\diag[\Sigma], \diag[\Sigma']}_{P(\nu)} \\
    &\implies \beta(\tilde w)[\Sigma,\Sigma] = \tfrac{1}{2} \norm{\nabla H(w)^\top[\Sigma]}^2_{P(\nu)^{-1}} + \frac{1-\beta_2}{\beta_2} \cdot \frac{4}{\eta^2} \norm{\diag[\Sigma]}_{P(\nu)}^2 \ge 0.
\end{align*}
This gives us the ODE formulation of the \rmsprop central flow:
\begin{definition}[\rmsprop Central Flow, ODE Formulation]\label{def:rmsprop:flow:ode}
    We say that $\{w(t),\nu(t)\}_{t \ge 0}$ follow the \rmsprop central flow if, for almost all $t$, they satisfy \cref{eq:appendix:rmsprop:ansatz} with $$\Sigma(t) \in \sdcp_{\U(w(t), \nu(t))}\qty(\alpha(w(t), \nu(t)),\beta(w(t), \nu(t))).$$
\end{definition}

\paragraph{Predicting time-averages}
As with the previous optimizers, the \rmsprop central flow can predict the time-average of various quantities, such as the loss or squared gradient norm, along the \rmsprop trajectory.
Recall that for a function $f(w)$, we use $\bar f(t)$ to denote the central flow's prediction for the time-average $\E[f(w_t)]$ at step $t$.
See \Cref{sec:arbitrary-preconditioned} for the derivation of the following statements.  We write $P(t) := P(\nu(t))$ for brevity.

The prediction $\bar L(t)$ for the time-averaged loss at step $t$ is:
\begin{align}
    \E[ L(w_t) ] &\approx L(w(t)) + \tr \qty[P(t) \, \Sigma(t)] =: \bar L(t). \label{eq:rmsprop-predict-loss}
\end{align}
The prediction for the time-averaged squared gradient norm at step $t$ is:
\begin{align}
    \E[\|\nabla L(w_t)\|^2] &\approx \| \nabla L(w(t)) \|^2 + 4 \tr \qty[ P(t) \, \Sigma(t) \, P(t)^\top  ] =: \overline{\| \nabla L \|^2}(t). \label{eq:rmsprop-predict-gradient-norm-sq}
\end{align}
As for predicting the covariance of the oscillations, let $\Sigma(t) = V(t) \, \Lambda(t) \, V(t)^\top$ be the (reduced) eigenvalue decomposition of the rank-$k$ matrix $P^{1/2}(t) \, \Sigma(t) \, P^{1/2}(t)$, where $V(t) \in \R^{d \times k}$ and $\Lambda(t) \in \diag(\R^k)$.
Then the central flow predicts that the $P$-whitened variance of oscillations along the $i$-th eigenvector of $P(t)^{1/2} \, \Sigma(t) \, P(t)^{1/2}$ should be equal to the $i$-th eigenvalue of that matrix:
\begin{align}
    \E \qty[\qty(v_i(t)^\top P(t)^{1/2}(w_t - w(t)))^2] = \lambda_i(t). \label{eq:rmsprop-predict-oscillation-variance}
\end{align}
See \Cref{sec:arbitrary-preconditioned} for a discussion of why we predict the $P$-whitened covariance of oscillations.

\paragraph{Practical implementation}
When implementing the \rmsprop central flow, we treat \rmsprop as an instance of a more general class of adaptive preconditioned methods that is described in \Cref{sec:arbitrary-preconditioned}.
Please refer to that section for details on how we discretize the central flow in practice.

\subsubsection{Stationarity analysis}
\label{appendix:derivations:rmsprop:stationary}

In this appendix, we provide supporting derivations for our analysis of \rmsprop's stationary preconditioner.

\paragraph{Stationary preconditioner}
The \rmsprop central flow \cref{eq:appendix:rmsprop:ansatz} is a joint flow over $(w, \nu)$.
However, suppose that the $\nu$ dynamics occur ``fast'' relative to the $w$ dynamics, so that $\nu$ is always ``fully caught up'' to the current $w$. 
For any fixed $w$, solving \cref{eq:appendix:rmsprop:ansatz} for the stationarity condition $\frac{d\nu}{dt} = 0$ gives the condition:
\begin{align}
    \nu = \nabla L(w)^{\odot 2} + \frac{4}{\eta^2} \, \nu \odot \diag[\Sigma].\label{eq:appendix:rmsprop:stationary:dnu_dt}
\end{align}
In addition, by the PSD, stability, and complementarity conditions, we have:
\begin{align}
    0 \preceq \Sigma \perp 2 P(\nu) - H(w) \succeq 0 \qq{where} P(\nu) := \diag \qty(\tfrac{\sqrt{\nu}}{\eta}). \label{eq:appendix:rmsprop:stationary:complementarity}
\end{align}

We will now show that for any $w$, there is a \emph{unique} pair $\nu, \Sigma$ that satisfies \cref{eq:appendix:rmsprop:stationary:dnu_dt,eq:appendix:rmsprop:stationary:complementarity}.
We will denote this pair as $\overline{\nu}(w)$ and $\overline{\Sigma}(w)$.
We will further show that the corresponding preconditioner $\diag(\sqrt{\overline{\nu}(w)}/\eta)$ is the unique optimum to the following convex optimization problem over diagonal preconditioners:
\begin{align}
    \argmin_{\mathclap{\text{$P$ diagonal, $P{\succeq}0$}}} \qquad \tr(P) + \tfrac{1}{\eta^2} \|\nabla L(w)\|^2_{P^{-1}} \qq{such that} H(w) \preceq 2P,  \label{eq:rmsprop_nu_convex_program2}
\end{align}
where $\|v\|^2_{P^{-1}} := v^\top P^{-1} v$.
We will denote this preconditioner as $\overline{P}(w)$.

We show this in two parts.
First, in \Cref{prop:stationary_nu} we prove that if $\nu, \Sigma$ satisfy \cref{eq:appendix:rmsprop:stationary:dnu_dt,eq:appendix:rmsprop:stationary:complementarity}, then $P(\nu)$ is an optimum for \cref{eq:rmsprop_nu_convex_program2}. Then, in \Cref{lem:rmsprop_nu_convex_program_unique} we show that the solution to \cref{eq:rmsprop_nu_convex_program2} is unique, implying that this must be the \emph{unique} optimum. At the end of this section, we describe how to numerically compute $\overline{P}(w)$ and $\bar{\nu}(w)$ when $H(w)$ is so large that it can only be feasibly accessed via matrix-vector products.

\begin{proposition}\label{prop:stationary_nu} Define $P(\nu) := \diag(\sqrt{\nu}/\eta)$. For any $g \in \R^d, H \in \sym(\R^d)$, if $\nu \in \R^d, \Sigma \in \sym(\R^d)$ satisfy:
\begin{align}
    \nu = g^{\odot 2} + \tfrac{4}{\eta^2} \nu \odot \diag(\Sigma) \label{eq:rmsprop-stationary-1}\\
    0 \preceq \Sigma \perp 2P(\nu) - H \succeq 0 \label{eq:rmsprop-stationary-2},
\end{align}
then $P(\nu)$ is an optimum for the convex program:
\begin{align}
        \argmin_{\mathclap{\text{$P$ diagonal, $P{\succeq}0$}}} \qquad \tr(P) + \tfrac{1}{\eta^2} g^\top P^{-1} g \qq{such that} H \preceq 2P. \label{eq:rmsprop_nu_convex_program3}
\end{align}
\end{proposition}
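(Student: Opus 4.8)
The plan is to verify that $P(\nu)$ satisfies the Karush--Kuhn--Tucker (KKT) conditions of the convex program \cref{eq:rmsprop_nu_convex_program3}, with the matrix $\Lambda := \tfrac{2}{\eta^2}\Sigma$ in the role of the dual variable for the linear matrix inequality constraint $H \preceq 2P$. Since \cref{eq:rmsprop_nu_convex_program3} is a convex program --- its objective $\tr(P) + \tfrac{1}{\eta^2}\, g^\top P^{-1} g = \sum_i p_i + \tfrac{1}{\eta^2}\sum_i g_i^2/p_i$ is convex on the set of positive-definite diagonal $P = \diag(p)$, and its feasible region is an intersection of convex sets --- the KKT conditions are \emph{sufficient} for global optimality, so no constraint qualification is needed. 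I would first note that we may take $\nu \succ 0$ entrywise (equivalently $P(\nu) \succ 0$; the coordinates with $g_i = 0$ can be handled with the convention $g_i^2/p_i = 0$), so that the constraint $P \succeq 0$ is strictly inactive and carries no multiplier, leaving only the LMI constraint active.

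Next I would write the Lagrangian $\mathcal{L}(P,\Lambda) = \tr(P) + \tfrac{1}{\eta^2}\, g^\top P^{-1} g - \langle \Lambda,\, 2P - H\rangle$ with $\Lambda \succeq 0$, and compute stationarity with respect to the $i$-th diagonal entry $p_i$ of $P = \diag(p)$. Using $\tfrac{\partial}{\partial p_i} g^\top P^{-1} g = -g_i^2/p_i^2$ and $\tfrac{\partial}{\partial p_i}\langle \Lambda, 2P\rangle = 2\Lambda_{ii}$, stationarity reads $1 - g_i^2/(\eta^2 p_i^2) = 2\Lambda_{ii}$ for every $i$.

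Then I would substitute the candidate $p_i = \sqrt{\nu_i}/\eta$, so that $\eta^2 p_i^2 = \nu_i$, together with the candidate $\Lambda_{ii} = \tfrac{2}{\eta^2}\Sigma_{ii}$; multiplying the stationarity identity through by $\nu_i$ turns it into $\nu_i - g_i^2 = \tfrac{4}{\eta^2}\nu_i \Sigma_{ii}$, which is exactly the coordinatewise form of hypothesis \cref{eq:rmsprop-stationary-1}. Thus stationarity holds. The remaining KKT conditions follow immediately from the complementarity hypothesis \cref{eq:rmsprop-stationary-2}: dual feasibility $\Lambda = \tfrac{2}{\eta^2}\Sigma \succeq 0$ is immediate from $\Sigma \succeq 0$; primal feasibility $2P(\nu) - H \succeq 0$ is its stability part; and complementary slackness $\langle \Lambda,\, 2P(\nu) - H\rangle = \tfrac{2}{\eta^2}\langle \Sigma,\, 2P(\nu) - H\rangle = 0$ is its orthogonality part. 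Having checked all KKT conditions at $(P(\nu),\Lambda)$, convexity of the program then yields that $P(\nu)$ is a global optimum of \cref{eq:rmsprop_nu_convex_program3}.

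I do not expect a substantive obstacle: the only "work" is bookkeeping --- differentiating the $P^{-1}$ term, tracking the scalar $\tfrac{2}{\eta^2}$ that makes the dual variable match the EMA stationarity equation, and dispatching the degenerate coordinates with $g_i = 0$. The one conceptual point worth stating is why KKT suffices here (because the problem is convex), which is also what \Cref{lem:rmsprop_nu_convex_program_unique} will leverage to upgrade "$P(\nu)$ is \emph{an} optimum" to "$P(\nu)$ is \emph{the unique} optimum," thereby justifying the notation $\overline{P}(w)$, $\overline{\nu}(w)$.
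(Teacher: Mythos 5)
Your proposal is correct and takes essentially the same approach as the paper: both identify $\tfrac{2}{\eta^2}\Sigma$ as the dual variable for the LMI constraint, reduce the stationarity condition to the coordinatewise identity $1 - g_i^2/(\eta^2 p_i^2) = 2\Lambda_{ii}$, and check that this is precisely \cref{eq:rmsprop-stationary-1} after substituting $p_i = \sqrt{\nu_i}/\eta$, with the remaining KKT conditions read off directly from \cref{eq:rmsprop-stationary-2}. Your explicit remark that KKT is sufficient by convexity (so no constraint qualification is needed) and your handling of the $g_i = 0$ coordinates are slightly more careful than the paper's write-up, but they do not change the substance.
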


\begin{proof}
    Parameterizing $P = \diag(p)$ for a vector $p \in \R^d$, the convex program \cref{eq:rmsprop_nu_convex_program3} can be written as:
\begin{align}
    \min_{p \in \R^d, \; p \ge 0} \sum_{i=1}^d p_i + \frac{1}{\eta^2} \frac{ g_i^2}{p_i} \qq{such that} H \preceq 2 \diag(p). 
\end{align}
Introducing a dual variable $Z \succeq 0$ for the semidefinite constraint, the Lagrangian is:
\begin{align}
    L(p, Z) &= \sum_{i=1}^d \qty[p_i + \frac{1}{\eta^2} \frac{ g_i^2}{p_i}] - \ev{Z, 2 \diag(p) - H}.
\end{align}
Therefore, the KKT conditions are:
\begin{align}
    \underbrace{1 - \frac{1}{\eta^2} \frac{g_i^2}{p_i^2} - 2 Z_{ii} = 0 \quad\forall i}_{\text{stationarity}} \qc
    \underbrace{H \preceq 2\diag(p)}
    _{\text{primal feasibility}} \qc
    \underbrace{Z \succeq 0}_{\text{dual feasibility}} \qc
    \underbrace{2 \diag(p) - H \perp Z \label{eq:fixed_point_kkt_point_relations}}_{\text{complementary slackness}},
\end{align}
as well as $p \ge 0$.
We claim that if $(\nu, \Sigma)$ satisfy \cref{eq:rmsprop-stationary-1,eq:rmsprop-stationary-2}, then $(p, Z) = (\frac{\sqrt{\nu}}{\eta}, \frac{2}{\eta^2} \Sigma)$ solve these KKT conditions. 
First, elementwise dividing both sides of \cref{eq:rmsprop-stationary-1} by $\nu$ gives:
\begin{align}
    1 - \frac{g_i^{2}}{\nu_i} - \qty(\frac{4}{\eta^2}) \Sigma_{ii} = 0 \quad\forall i
\end{align}
which is equivalent to the stationary condition in \cref{eq:fixed_point_kkt_point_relations} after substituting $p = \sqrt{\nu}/\eta$ and $Z = \frac{2}{\eta^2} \Sigma$.
Next, the remaining parts of \cref{eq:fixed_point_kkt_point_relations} are implied by \cref{eq:rmsprop-stationary-2}.
Finally, we must have $\sqrt{\nu} \ge 0$ or $P(\nu)$ would be imaginary.
\end{proof}

We now prove that the solution to the optimization problem \cref{eq:rmsprop_nu_convex_program3} is unique.
A custom proof is needed because, while both the objective and constraints of \cref{eq:rmsprop_nu_convex_program3} are convex, the objective is not \emph{strictly} convex.  

\begin{proposition}\label{lem:rmsprop_nu_convex_program_unique}
For any  $g \in \R^d, H \in \sym(\R^d)$, the solution to \cref{eq:rmsprop_nu_convex_program3} is unique.
\end{proposition}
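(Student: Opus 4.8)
The plan is to handle separately the two sources of potential non-uniqueness in the objective $f(p) := \sum_i p_i + \tfrac{1}{\eta^2}\sum_i g_i^2/p_i$: the ``generic'' coordinates $i$ with $g_i \neq 0$, where the $i$-th summand is strictly convex, and the ``degenerate'' coordinates $i \in S := \{i : g_i = 0\}$, where that summand is just the linear term $p_i$. First I would take two optima $p^{(1)}, p^{(2)}$ and observe that the entire segment $p^{(t)} := (1-t)p^{(1)} + t p^{(2)}$ is optimal (convexity of the feasible set and of $f$), so $t \mapsto f(p^{(t)})$ is constant, hence affine. Writing $f(p^{(t)}) = \sum_i p^{(t)}_i + \tfrac{1}{\eta^2}\sum_i g_i^2/p^{(t)}_i$ and using that a finite sum of convex functions of $t$ can be affine only if every summand is affine, each map $t \mapsto g_i^2/p_i^{(t)}$ must be affine; since $1/(\text{non-constant affine})$ is strictly convex, this forces $p^{(1)}_i = p^{(2)}_i$ whenever $g_i \neq 0$ (the expression is well-defined there because $f = \infty$ when $g_i\neq 0$ and $p_i = 0$, so both optima have $p_i > 0$). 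Thus the two optima agree off $S$ and have the same value of $\sum_{i\in S} p_i$, reducing the claim to uniqueness of the restricted linear program $\min\{\sum_{i\in S} p_i : p \geq 0,\ 2\diag(p) \succeq H,\ p_i = p^{(1)}_i\ \forall i\notin S\}$.

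For the reduced program I would invoke the KKT conditions, which apply since Slater holds (take $p_i$ large on $S$, recalling $p^{(1)}_i > 0$ on $S^c$). Writing $M(p) := 2\diag(p) - H$, introducing a PSD dual matrix $Z$ for the constraint $M(p)\succeq 0$ and multipliers $\mu_i \geq 0$ for $p_i\geq 0$, stationarity reads $1 - 2Z_{ii} - \mu_i = 0$ for $i\in S$, complementary slackness gives $\mu_i p_i = 0$ together with $\langle Z, M(p)\rangle = 0$, and this last identity combined with $Z, M(p)\succeq 0$ forces $\spn Z \subseteq \ker M(p)$. The fact I want to extract is: at any optimum $p$ of the reduced program, $Z_{ii} = \tfrac12$ for every $i \in S$ with $p_i > 0$, while every column of $Z$ lies in $\ker M(p)$.

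To finish, I would suppose $p^{(1)} \neq p^{(2)}$, set $\bar p := \tfrac12(p^{(1)} + p^{(2)})$ (again optimal, with some associated dual $Z$), and let $T := \{i : p^{(1)}_i \neq p^{(2)}_i\} \subseteq S$, which is nonempty. Since $M(\bar p) = \tfrac12 M(p^{(1)}) + \tfrac12 M(p^{(2)})$ is a convex combination of PSD matrices, $\ker M(\bar p) = \ker M(p^{(1)})\cap\ker M(p^{(2)})$; hence any $x \in \ker M(\bar p)$ satisfies $2\diag(p^{(1)}-p^{(2)})x = (M(p^{(1)}) - M(p^{(2)}))x = 0$, i.e. $x_i = 0$ for all $i \in T$. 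Because $\spn Z \subseteq \ker M(\bar p)$, writing $Z = \sum_k \lambda_k u_k u_k^\top$ with $\lambda_k > 0$, $u_k \in \ker M(\bar p)$ gives $Z_{ii} = \sum_k \lambda_k (u_k)_i^2 = 0$ for $i \in T$. But every $i \in T$ has $\bar p_i > 0$ (otherwise $p^{(1)}_i = p^{(2)}_i = 0$, contradicting $i\in T$), so the previous step forces $Z_{ii} = \tfrac12$ — a contradiction. Therefore $p^{(1)} = p^{(2)}$, which gives uniqueness.

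I expect the routine parts to be: verifying Slater's condition, checking that a minimizer exists at all (the feasible set is closed and the objective is coercive on it, since $p\geq 0$), and the elementary ``sum of convex is affine $\Rightarrow$ each summand is affine'' step. The one place that needs a little care is the kernel identity $\ker M(\bar p) = \ker M(p^{(1)})\cap\ker M(p^{(2)})$ and the passage from $\langle Z, M\rangle = 0$ to $\spn Z \subseteq \ker M$ (standard for the PSD cone, via $Z M = 0$); everything else is bookkeeping. An alternative, essentially equivalent route would be to argue that the optimal face of the restricted program is a single extreme point of the underlying spectrahedron, but the KKT argument above seems to be the cleanest way to exploit the degeneracy structure.
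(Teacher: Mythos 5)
Your proof shares the paper's decomposition into coordinates where $g_i \neq 0$ and where $g_i = 0$ (your $S$, the paper's $I$), and the first half --- that strict convexity of $p_i \mapsto g_i^2/p_i$ forces the two optima to agree on $S^c$ --- is the same idea the paper carries out by differentiating twice along the optimal segment. Where you genuinely diverge is the second half: for uniqueness on $S$ the paper sets up a reduced min-trace SDP and cites Proposition 1 of \citet{silva2018strictcomplementaritymaxcutsdp}, whereas you give a self-contained KKT contradiction --- at the midpoint $\bar p$ the dual $Z$ has $\spn Z \subseteq \ker M(\bar p) = \ker M(p^{(1)}) \cap \ker M(p^{(2)})$, every vector in this intersection vanishes on the disagreement set $T$, so $Z_{ii} = 0$ on $T$, contradicting $Z_{ii} = \tfrac12$ forced by stationarity and $\bar p_i > 0$ on $T$. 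Your route is cleaner, elementary, and avoids the external citation; it is essentially a direct proof of the strict-complementarity fact the paper imports.

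One step as written does not go through: your claim that Slater holds for the \emph{reduced} program (``take $p_i$ large on $S$'') is false in general. For example, with $g=(1,1,0)^\top$, $\eta=1$, and $H$ the all-ones matrix on coordinates $\{1,2\}$ (zeros in row and column $3$), the optimum is $p=(1,1,0)$, and the block $2\diag(p_{S^c})-H_{S^c,S^c}$ has eigenvalues $0$ and $2$; because the residual $M(p)$ is block-diagonal here, it is singular for \emph{every} value of $p_3$, so the reduced program has no strictly feasible point. The repair is simply to apply the KKT conditions of the \emph{original} problem at $\bar p$, where Slater manifestly holds (take $p$ to be a large multiple of the all-ones vector). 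The original KKT at $\bar p$ delivers exactly the two facts you use: stationarity on coordinates in $S$ reads $1 - 2Z_{ii} - \mu_i = 0$ and complementary slackness kills $\mu_i$ wherever $\bar p_i > 0$, giving $Z_{ii} = \tfrac12$ there; and $\langle Z, M(\bar p)\rangle = 0$ with both $Z, M(\bar p) \succeq 0$ forces $\spn Z \subseteq \ker M(\bar p)$. The rest of your contradiction then runs unchanged, and with that substitution your proof is correct.
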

\begin{proof}
    Assume there are two minimizers $P,P'$ and let $p := \diag(P), \delta := \diag(P'-P)$. Then by convexity, $\diag[p + \epsilon \delta]$ also minimizes \cref{eq:rmsprop_nu_convex_program2} for any $\epsilon \le 1$. Therefore, differentiating the objective function in this direction gives:
    \begin{align}
        \sum_i \delta_i\qty[1 - \frac{1}{\eta^2} \frac{g_i^2}{p_i^2}] = 0.
    \end{align}
    Taking another derivative implies that:
    \begin{align}
        \sum_i \frac{g_i^2}{p_i^3} \delta_i^2 = 0.
    \end{align}
    This implies that $\delta_i = 0$ in any direction where $g_i \ne 0$. Let \(I := \{ i : g_i = 0 \}\). From the above, \(\delta_i=0\) for \(i\in I^c\), so
\(p_{I^c} = p'_{I^c}\). It remains to show equality on \(I\).
Define \(G\) by
\[
  G[v_{I}]_i :=
    \begin{cases}
      p_i & i \in I^c \\
      v_i & i \in I
    \end{cases}.
\]
Define \(\mathcal{A}^\top[v_{I}] := \diag\!\big(G[v_{I}]\big) \oplus \diag[v_{I}]\).
Then both \(p_{I}, {p'}_{I}\) minimize the reduced SDP
\[
  \min_{p_{I}} \sum_{i\in I} p_i
  \quad \text{s.t.}\quad \tfrac{1}{2}H \oplus 0_{|I|\times |I|} \preceq \mathcal{A}^\top(p_{I}).
\]
Now we apply \citet[Proposition 1]{silva2018strictcomplementaritymaxcutsdp} with $(\mathcal{A},\mathbf{1}_{|I|})$. First, note that $\mathcal{A}[I_{d + |I|}] = 2 \mathbf{1}_{|I|}$ which satisfies the first condition. Next, for any $y \ne 0$, we can take $z = |y|$ to satisfy the second condition, as in the proof of \citep[Corollary 2]{silva2018strictcomplementaritymaxcutsdp}. Therefore $p_I = p'_I$, and as we have already shown equality on $I^c$, we must have $p = p'$.

\end{proof}

\paragraph{Stationary flow} Suppose that the $\nu$ dynamics (preconditioner adaptation) happen \emph{infinitely fast} relative to the $w$ dynamics (optimization), so that we can treat $\nu$ as always being fixed at its current stationary value $\overline{\nu}(w)$.
This motivates the stationary flow:

\begin{definition}[\rmsprop Stationary Flow]\label{def:rmsprop:flow:stationary}
    We say that $\{w(t)\}_{t \ge 0}$ follow the \rmsprop stationary flow if, for almost all $t$, they satisfy
    \begin{align}
    \begin{split}
    \frac{dw}{dt} &= -\frac{\eta}{\sqrt{\overline{\nu}(w)}} \odot \qty[\nabla L(w) + \tfrac{1}{2} \nabla H(w)^\top[\Sigma(t)]]
    \end{split}
    \label{eq:appendix:rmsprop:stationary}
    \end{align}
    with $$\Sigma(t) \in \sdcp_{\U(w(t), \overline{\nu}(w(t)))}\qty(\alpha(w(t), \overline{\nu}(w(t))),\beta(w(t), \overline{\nu}(w(t)))).$$
\end{definition}

Note that $\Sigma(t)$ is defined as the solution to an SDCP, and is \emph{not}, in general, equal to $\bar{\Sigma}(w(t))$. However, we often expect these to be close, especially for small $\beta_2$, and we can consider the approximation where $\Sigma(t)$ is replaced with $\bar{\Sigma}(w(t))$\footnote{Note that $\Sigma(t)$ would be exactly equal to $\bar \Sigma(w(t))$ if $\alpha(w, \nu)$ and $\beta(w, \nu)$ had only the second terms (i.e. the ones that scale with $\tfrac{1-\beta_2}{\beta_2}$) and not the first terms.}:
\begin{align}
    \frac{dw}{dt} = -\frac{\eta}{\sqrt{\bar{\nu}(w)}} \odot \qty[ \nabla L(w) + \tfrac{1}{2} \nabla H(w)^\top[\bar{\Sigma}(w)] ] .\label{eq:appendix:rmsprop:stationary-beta2-0}
\end{align}

\paragraph{Limit of large $\eta$}
In the limit of large $\eta$, the second term in the objective \cref{eq:rmsprop_nu_convex_program2} vanishes, and the stationary preconditioner $\overline{P}(w)$ tends towards the \emph{minimum-trace, diagonal stable preconditioner}, which we denote $\hat{P}(w)$:
\begin{align}
    \hat{P}(w) \; := \; \argmin_{\mathclap{\text{$P$ diagonal, $P{\succeq}0$}}} \qquad \tr(P) \qq{such that} H(w) \preceq 2P.  \label{eq:appendix:rmsprop:min-trace-preconditioner}
\end{align}
Note that the dual to this semidefinite program is the following semidefinite program:
\begin{align}
    \hat{Z}(w) := \argmax_{Z \succeq 0} \ev{Z,  H(w)} \quad \text{subject to} \quad Z_{ii} \le \tfrac{1}{2}
    \label{eq:appendix:rmsprop:min-trace-preconditioner-dual}
\end{align}

and a primal/dual optimal pair $\hat{P}(w), \hat{Z}(w)$ must satisfy the KKT conditions:
\begin{align}
    0 \preceq \hat{Z}(w) \perp 2 \hat{P}(w) - H(w) \succeq 0 \qc \hat{Z}_{ii}(w) = \tfrac{1}{2} \; \text{whenever} \; \hat P_{ii}(w)  > 0.
\end{align}

In the limit of large $\eta$, the stationary EMA $\bar{\nu}(w)$ and the stationary oscillation covariance $\bar{\Sigma}(w)$ become:
\begin{align}
    \bar{\nu}(w) \to \eta^2 \diag[\hat{P}(w)]^{\circ 2} \qand \bar{\Sigma}(w) \to \frac{\eta^2}{2} \hat{Z}(w).
\end{align}
As a result, the approximation \cref{eq:appendix:rmsprop:stationary-beta2-0} can be shown to be equivalent to:
\begin{align}
    \frac{dw}{dt} = - \hat{P}(w)^{-1} \qty[ \nabla L(w) + \tfrac{\eta^2}{4} \nabla \tr \hat{P}(w) ].
    \label{eq:appendix:rmsprop:stationary-beta2-0-large-lr}
\end{align}

This is because $\tr \hat{P}(w) = \ev{\hat{Z}(w), H(w)}$ by duality, so by Danskin's theorem:
\begin{align*}
    \nabla \tr \hat{P}(w) &= \nabla H(w)^\top [ \hat{Z}(w) ] \\
    &= \tfrac{2}{\eta^2} \nabla H(w)^\top [\bar{\Sigma}(w)].
\end{align*}
Hence, $\tfrac{\eta^2}{4} \nabla \tr \hat{P}(w) = \tfrac{1}{2} \nabla H(w)^\top[\bar{\Sigma}(w)]$, and \cref{eq:appendix:rmsprop:stationary-beta2-0-large-lr} follows.

\paragraph{Connection to MaxCut}
Interestingly, the SDP \cref{eq:appendix:rmsprop:min-trace-preconditioner-dual} is precisely the SDP relaxation of MaxCut \citep{goemans1995improved} where the Laplacian matrix of the graph is given by $\tfrac{1}{2} H(w)$.
Meanwhile, the SDP \cref{eq:appendix:rmsprop:min-trace-preconditioner} that defines $\hat{P}$ is the dual to the MaxCut SDP relaxation.

\paragraph{Numerically solving for the stationary preconditioner}
When the problem dimension $d$ is small, the optimization problem \cref{eq:rmsprop_nu_convex_program2} can be solved exactly using a standard convex solver, e.g. cvxpy.  But when $d$ is large (e.g. the number of weights in a reasonably sized neural network), solving \cref{eq:rmsprop_nu_convex_program2} exactly is not practical, as it is not even practical to materialize the matrix $H \in \R^{d \times d}$.  Therefore, we instead solve \cref{eq:rmsprop_nu_convex_program2} using a fixed point iteration which only requires access to $H$ using matrix-vector products.

We parameterize $\Sigma$ in the factorized form $\Sigma = DD^\top$ where $D \in \R^{d \times r}$ and $r$ is intended to be at least as large as the rank of $\Sigma$.  This is similar to the Burer-Monteiro factorization \citep{burer2005local}.  We start from a random initial guess for $D$ and then iteratively update $D$ and $\nu$ by:
\begin{align}
    \nu &\leftarrow g^{\odot 2} + (HD)^{\odot 2}\mathbf{1}, \\
    D &\leftarrow \frac{\eta}{2}\diag[\nu^{-1/2}] HD.
\end{align}
where the second update uses the $\nu$ that was just computed in the first update.

If this update scheme reaches a fixed point $(D, \nu)$, then we have:
\begin{align}
    \nu &= g^{\odot 2} + (HD)^{\odot 2}\mathbf{1}, \label{eq:appendix:rmsprop:fixed-point-1} \\
     HD &= \tfrac{2}{\eta}\diag[\nu^{1/2}] D \label{eq:appendix:rmsprop:fixed-point-2}.
\end{align}
If $(D, \nu)$ satisfy these two conditions, as well as the stability condition $H \preceq 2 \diag(\sqrt{\nu}/\eta)$, then it can be shown that $\Sigma = DD^\top$ and $\nu$ satisfy \cref{eq:rmsprop-stationary-1,eq:rmsprop-stationary-2}. Indeed, $\Sigma \succeq 0$ holds by construction, \cref{eq:rmsprop-stationary-1} follows by substituting \cref{eq:appendix:rmsprop:fixed-point-2} into \cref{eq:appendix:rmsprop:fixed-point-1}, and  \cref{eq:appendix:rmsprop:fixed-point-2} implies $[2 \diag(\sqrt{\nu}/\eta) - H]D = 0$ which implies $ 2 \diag(\sqrt{\nu}/\eta) - H \perp \Sigma$ provided that the stability condition holds.

Thus, if the update scheme reaches a fixed point \cref{eq:appendix:rmsprop:fixed-point-1,eq:appendix:rmsprop:fixed-point-2}, and if the stability condition $H \preceq 2 \diag(\sqrt{\nu} / \eta)$ is also satisfied there, then we know that $P = \diag(\sqrt{\nu}/\eta)$ solves the optimization problem \cref{eq:rmsprop_nu_convex_program2}.

Empirically, we observe that this update scheme does reach a fixed point in practice.  We moreover observe that if $r$ is sufficiently large (in particular, if it is as large as the rank of the true $\Sigma$), then the stability condition is satisfied at this fixed point, implying that the corresponding preconditioner indeed solves \cref{eq:rmsprop_nu_convex_program2}.  On the other hand, we observe that if $r$ is too small (less than the rank of the true $\Sigma$), then while the update scheme converges to a fixed point, the stability condition is not satisfied there.

\begin{algorithm}
\SetAlgoLined
\KwIn{Gradient $g \in \mathbb{R}^d$, Hessian-vector oracle $v \mapsto Hv$, learning rate $\eta$, rank parameter $r$, number of steps $\text{nsteps}$, tolerance parameter $\text{tol}_\nu$}
\KwOut{$\nu$, $P$, and $D$}

Initialize $D \in \mathbb{R}^{d \times r}$ with standard normal entries\;
\For{$i = 1$ \KwTo $\text{nsteps}$}{
    \If{i > 1}{
        $\nu_{\text{prev}} \leftarrow \nu$\;
    }
    $\nu \leftarrow g^{\odot 2} + \texttt{sum\_rows}((HD)^{\odot 2})$  \;
    $D \leftarrow \frac{\eta}{2}\text{diag}[\nu^{-1/2}] HD$ \;
}
\If{$\frac{\|\nu - \nu_{\text{prev}}\|_2}{\|\nu\|_2} \geq \text{tol}_\nu$}{
    \Return "Error: more steps needed"\;
}

$p \leftarrow \sqrt{\nu}/\eta$\;

\If{$\lambda_{\max}(\diag(p^{-{1/2}}) H \diag(p^{-1/2})) > 2$}{
    \Return "Error: higher $r$ needed"\;
}

\Return $\nu$, $\diag (p)$, $D$\;
\caption{Solving for the Stationary Preconditioner}
\end{algorithm}

\subsection{General Class of Adaptive Preconditioned Methods}
\label{sec:arbitrary-preconditioned}
In this section, we derive a central flow for a general class of adaptive preconditioned methods that subsumes \gd, \rmsnorm, and \rmsprop as special cases. In these special cases, this flow will reduce to the central flows that we have already derived, and whose accuracy we have verified empirically.  However, we do \emph{not} claim that the central flow derived in this section will be empirically accurate for \emph{any} method within this class.
Rather, we include this section because it allows us to treat all three considered optimizers in a unified manner, and to easily generalize our central flows to minor variants of the same algorithms (e.g. \gd with a learning rate schedule, \rmsprop with bias correction).   Our implementation in code is based on this formulation.

We consider methods which update some ``optimizer state'' $\nu \in \R^{d_\nu}$ based on the current gradient, and then take a preconditioned gradient step using some preconditioner $P(\nu)$ that is derived from this state:
\begin{align}
    \nu_t = \nu_{t-1} + G(\nu_{t-1}, \nabla L(w_t)) \qc w_{t+1} = w_t - P(\nu_t)^{-1} \nabla L(w_t). \label{eq:appendix:derivations:adaptive:update}
\end{align}
Here, $G: \R^{d_\nu} \times \R^d \to \R^{d_\nu}$ determines how the optimizer state $\nu \in \R^{d_\nu}$ is updated based on the gradient, and $P: \R^{d_\nu} \to \sym(\R^d)$ determines how the optimizer state affects the preconditioner.

This formulation is very general and includes a wide variety of optimizers including:
\begin{itemize}
    \item \emph{Vanilla GD:} ignore $\nu$ and set $P = \eta^{-1} I$.
    \item \emph{GD with a learning rate schedule $\eta(t)$:} Set $G(\nu,g) = 1$ so that $\nu(t) = t$, and $P(t) = \eta(t)^{-1} I$.
    \item \emph{Vanilla RMSProp:}\footnote{This formalism doesn't directly handle our small $\beta_2$ correction of $\beta_2 \to \frac{1-\beta_2}{\beta_2}$. We view this correction as ``orthogonal'' in the sense that it comes when deriving a stable flow analogue of the discrete time update $\nu_t = \nu_{t-1} + G(\nu_{t-1},\nabla L(w_t))$ when this takes the form of an EMA.} Set $G(\nu,g) = (1-\beta_2)[g^{\odot 2} - \nu]$ and $P(\nu) = \diag[\sqrt{\nu}/\eta]$
    \item \emph{RMSProp with $\epsilon$, bias correction, and learning rate schedule $\eta(t)$:} Set $\nu = [v,t]$, $G([v,t],g) = [(1-\beta_2)[g^{\odot 2} - v],1]$ and define $$P([v,t]) = \frac{1}{\eta(t)} \diag\qty[\sqrt{\frac{v}{1-\beta_2^t}} + \epsilon].$$
\end{itemize}
Note that this trick of embedding $t$ into the state variable $\nu$ allows us to automatically derive central flows for any smooth hyperparameter schedule (e.g. $\eta(t),\beta_2(t),\epsilon(t)$) as a simple corollary.

\begin{remark}
Not all algorithms of the form \cref{eq:appendix:derivations:adaptive:update} are necessarily sensible optimizers.  Moreover, we will see below that the central flow is only well-defined if $G$ and $P$ satisfy a certain condition (\Cref{remark:adaptive-beta-symmetry}).
Thus, it may make sense for future work to further restrict the formulation \cref{eq:appendix:derivations:adaptive:update}.
\end{remark}

The stability of the algorithm requires $\lambda_{\max} (P(\nu)^{-1} H(w)) \le 2$ or equivalently $H(w) \preceq 2 P(\nu)$. We define
\begin{align}
    A(w,\nu) := 2 P(\nu) - H(w),
\end{align}
so that stability is equivalent to $A(w,\nu) \succeq 0$. We define the critical subspace by $\U(w,\nu) := \ker A(w,\nu)$.

To derive the central flow, we model $w_t = w(t) + \delta_t$ with $\E[\delta_t] = 0$, $\E[\delta_t \delta_t^\top] = \Sigma(t)$, and $\delta_t \in \ker A(w(t),\nu(t))$. 
For ease of notation, we will sometimes use $g(w)$ as a shorthand for the gradient $\nabla L(w)$.

We will first compute the time-average of $G$, i.e. $\E[G(\nu, g(w_t))] = \E[G(\nu, g(w(t) + \delta_t))]$.
A first-order Taylor expansion of $g$ around any point $w$ yields:
\begin{align*}
    g(w+\delta) \approx g(w) + H(w) \, \delta.
\end{align*}

Meanwhile, a second-order Taylor expansion of $G$ in its second argument yields:
\begin{align}
    G(v, g + \Delta g) \approx G(\nu, g) + \nabla_g G(\nu, g)^\top \Delta g + \tfrac{1}{2} \nabla_g^2 G(\nu, g)[\Delta g \, \Delta g^\top],
\end{align}
where $\nabla_g^2 G(\nu, g) \in \R^{d_\nu} \otimes \sym(\R^{d})$ is a tensor that represents the Hessian of each entry of $G$.
Putting these together, with $\Delta g = H(w) \delta$, we have:
\begin{align*}
    G(\nu, g(w+\delta)) \approx G(\nu, g(w)) + \nabla_g G(\nu, g)^\top H(w) \delta +  \tfrac{1}{2} \nabla_g^2 G(\nu, g)[H(w) \, \delta \delta^\top \, H(w)],
\end{align*}

Taking the time-average over $\delta$ with $\E[\delta] = 0$ and $\E[\delta \delta^\top] = \Sigma$ yields:
\begin{align}
    \E[G(v, g(w + \delta)] &\approx G(\nu, g(w)) + \tfrac{1}{2} \nabla^2_g G(\nu, g(w)) \qty[ H(w) \, \Sigma \, H(w) ].
\end{align}
Since $H(w) \Sigma = 2 P(\nu) \Sigma$, we have $H(w) \Sigma H(w) = 4 P(\nu) \Sigma P(\nu)$, and therefore the second term becomes:
\begin{align}
    \E[G(v, g(w + \delta)] &\approx G(\nu, g(w)) + 2 \nabla^2_g G(\nu, g(w)) \qty[ P(\nu) \, \Sigma \, P(\nu) ].
\end{align}

These motivate the central flow ansatz:
\begin{align}
    \begin{split}
    \frac{dw}{dt} &= -P(\nu)^{-1} \qty[\nabla L(w) + \tfrac{1}{2} \nabla H(w)^\top[\Sigma]] \\
    \frac{d\nu}{dt} &= G(\nu,g(w)) + 2 \nabla^2_g G(\nu, g(w))[P(\nu) \, \Sigma \, P(\nu)].
    \end{split}\label{eq:appendix:generalP:ansatz}
\end{align}
We say that $\{w(t),\nu(t),\Sigma(t)\}_{t \ge 0}$ satisfy the DCP formulation of the central flow if for almost all $t \ge 0$ they satisfy \cref{eq:appendix:generalP:ansatz} along with the complementarity relation:
\begin{align*}
    0 \preceq \Sigma(t) \perp A(w(t),\nu(t)) \succeq 0.
\end{align*}

As for \gd, \rmsnorm, and \rmsprop, we can invoke \Cref{lem:dcp_to_sdcp} to obtain an equivalent ODE formulation that makes $\Sigma(t)$ explicit. We begin by writing \cref{eq:appendix:generalP:ansatz} as:
\begin{align*}
    \frac{d\widetilde{w}}{dt} = f\qty(\widetilde{w}) + B\qty(\widetilde{w})[\Sigma] \qq{where} 
    f\qty(\widetilde{w}) := \begin{bmatrix}
        -P(\nu)^{-1} \nabla L(w) \\ G(\nu,g(w))
    \end{bmatrix} \qand
    B\qty(\widetilde{w})[\Sigma] := \begin{bmatrix}
        -\tfrac{1}{2} P(\nu)^{-1} \nabla H(w)^\top[\Sigma] \\ 2 \nabla^2_g G(\nu,g(w)) \, [P(\nu) \, \Sigma \, P(\nu)]
    \end{bmatrix}.
\end{align*}
where the complementarity relation is $0 \preceq \Sigma \perp A(\widetilde{w}) \succeq 0$ with $\nabla A$ given by:
\begin{align*}
    \nabla A(\widetilde w) = [-\nabla H(w), 2\nabla P(\nu)].
\end{align*}
Therefore, \Cref{lem:dcp_to_sdcp} implies that:
\begin{align*}
    \Sigma(t) \in \sdcp_{\U(w, \nu)}\qty(\alpha(w, \nu),\beta(w, \nu)),
\end{align*}
where the matrix $\alpha(w, \nu) \in \sym(\R^d)$ and tensor $\beta(w, \nu) \in \sym(\R^d)^{\otimes 2}$ are defined by:
\begin{align*}
    \alpha\qty(\widetilde{w})
    &:= \nabla A\qty(\widetilde{w})[f(\widetilde{w})] \\
    &=
    \begin{bmatrix}
        -\nabla H(w) & 2\nabla P(\nu)
    \end{bmatrix}\begin{bmatrix}
        -P(\nu)^{-1} \nabla L(w) \\ G(\nu,g(w))
    \end{bmatrix} \\
    &= \nabla H(w)[P(\nu)^{-1} \nabla L(w)] + 2 \nabla P(\nu)[G(\nu,g(w))]
\end{align*}
and
\begin{align*}
    \beta\qty(\widetilde{w})[\Sigma]
    &:= \nabla A\qty(\widetilde{w})B(\widetilde{w})[\Sigma] \\
    &=
    \begin{bmatrix}
        -\nabla H(w) & 2\nabla P(\nu)
    \end{bmatrix}
    \begin{bmatrix}
        -\tfrac{1}{2} P(\nu)^{-1} \nabla H(w)^\top[\Sigma] \\ 2 \nabla^2_g G(\nu,g(w))[P(\nu) \Sigma P(\nu)]
    \end{bmatrix} \\
    &= \tfrac{1}{2} \nabla H(w) P(\nu)^{-1} \nabla H(w)^\top[\Sigma] + 4 \nabla P(\nu) \; \nabla^2_g G(\nu,g(w)) \, [ P(\nu) \, \Sigma \, P(\nu)].
\end{align*}

\begin{remark}
    \label{remark:adaptive-beta-symmetry}
    For \gd, \rmsnorm, and \rmsprop, the $\beta$ we derived was always symmetric PSD. However, at the current level of generality this is not always true:
    \begin{align*}
        \beta(w,\nu)[\Sigma,\Sigma'] = \tfrac{1}{2} \ev{\nabla H(w)^\top[\Sigma], \nabla H(w)^\top[\Sigma']}_{P(\nu)^{-1}} + \ev{\nabla P(\nu)^\top[\Sigma],\nabla^2_g G(\nu,g(w))[P(\nu) \Sigma' P(\nu)]}.
    \end{align*}
    While the first term is symmetric in $\Sigma,\Sigma'$, the second is not necessarily symmetric, so results about existence and uniqueness for solutions to the SDCP may no longer hold.
\end{remark}

\paragraph{Predicting time-averages}
The central flow's predictions for time-averages can be computed as follows. 
Recall that for a function $f(w)$, we use $\bar f(t)$ to denote the central flow's prediction for the time-average $\E[f(w_t)]$ at step $t$.
In what follows, we write $P(t) := P(\nu(t))$ for brevity.

The prediction for the time-averaged loss at time $t$ is:
\begin{align}
     \E[L(w_t)] &\approx \E\qty[L(w(t)) + \nabla L(w(t))^\top\delta_t  + \tfrac{1}{2} \delta_t^\top H(w(t)) \delta_t ] \hspace{10px} \tag{Taylor expansion} \nonumber \\
    &= L(w(t)) + \tfrac{1}{2} \ev{ H(w(t)), \Sigma(t)}  \tag{$\E[\delta_t] = 0$, $\E[\delta_t \delta_t^\top] = \Sigma(t)$} \nonumber \\
    &= L(w(t)) + \tfrac{1}{2} \tr \qty[2 P(t) \Sigma ]    \tag{$H \Sigma = 2 P \Sigma$} \nonumber \\
    &= L(w(t)) + \tr \qty[P(t) \, \Sigma(t)] \nonumber \\
    &=: \bar L(t). \label{eq:abstract-predict-loss}
\end{align}
Similarly, the prediction for the time-averaged squared gradient norm at time $t$ is:
\begin{align}
    \E[\|\nabla L(w_t)\|^2] &\approx \E\qty[ \| \nabla L(w(t)) + H(w(t)) \delta_t \|^2 ] \nonumber \\
    &= \|\nabla L(w(t)) \|^2 + \ev{ H^2(w(t)), \Sigma(t)} \nonumber \\   
    &= \| \nabla L(w(t)) \|^2 + 4 \tr \qty[ P(t) \, \Sigma(t) \, P(t)^\top  ] \nonumber \\
    &=: \overline{\| \nabla L \|^2}(t). \label{eq:abstract-predict-gradient-norm-sq}
\end{align}
The central flow can also predict the covariance of the oscillations.
The natural basis to examine the oscillations is the one in which the dynamics of each coordinate are decoupled under preconditioned gradient descent on the local quadratic Taylor approximation.   Thus, we define the $P$-whitened displacement $\hat{\delta}_t := P(t)^{1/2} \delta_t = P(t)^{1/2} (w_t - w(t))$, and the $P$-whitened covariance matrix of the oscillations $\E[\hat \delta_t \hat \delta_t^\top] = P(t)^{1/2} \, \Sigma(t) \, P(t)^{1/2}$.
Let $V(t) \, \Lambda(t) \, V^\top(t)$ be the eigenvalue decomposition of $P(t)^{1/2} \, \Sigma \, P(t)^{1/2}$, and define $x_t := V(t)^\top \hat \delta_t = V(t)^\top P(t)^{1/2} (w_t - w(t))$ as the $P$-whitened displacement between the discrete optimizer and the central flow along the eigenvectors $V(t)$.
Then the central flow predicts that:
\begin{align*}
    \E[x_t x_t^\top] =  V(t)^\top P(t)^{1/2} \E[ \delta_t \delta_t^\top] P(t)^{1/2} V(t) = V(t)^\top \, P(t)^{1/2} \, \Sigma(t) \, P(t)^{1/2} \, V(t) = \Lambda(t).
\end{align*}
In particular, the $P$-whitened variance of oscillations along the $i$-th eigenvector of $P(t)^{1/2} \, \Sigma(t) \, P(t)^{1/2}$ is predicted to be the $i$-th eigenvalue of that matrix:
\begin{align}
    \E \qty[\qty(v_i(t)^\top P(t)^{1/2}(w_t - w(t)))^2] = \lambda_i(t). \label{eq:abstract-predict-oscillation-variance}
\end{align}

\paragraph{Basis-dependent version}
We can use the general recipe given in \Cref{sec:derivations:dcp} to obtain a basis-dependent version of the central flow ODE that can be computed in time linear in $d$, when the preconditioner is diagonal.

Fix a time $t$, and we will often abbreviate $w(t)$, $\nu(t)$, and $P(\nu(t))$ as $w$, $\nu$ and $P$.
Let $U\in \R^{d \times k}$ be a basis for the critical subspace $\U(w, \nu)$.
Define $H_U(w) := U^\top H(w) U \in \sym(\R^k)$ and $P_U(\nu) := U^\top P(\nu) U \in \sym(\R^k) $, as well as their gradients $\nabla H_U(w) \in \sym(\R^k) \otimes \R^d$ and $\nabla P_U(\nu) \in \sym(\R^k) \otimes \R^{d_\nu}$.  Explicitly:
\begin{align}
    \nabla H_U(w)_{ij} = \nabla_w[u_i^\top H(w) u_j] \qand \nabla P_U(\nu)_{ij} = \nabla_\nu[ u_i^\top P(\nu) u_j ]. \label{eq:arbitrary:nabla-H-U}
\end{align}
Similarly, let $\nabla^2 G_{PU}(\nu, g(w)) \in \R^{d_\nu} \otimes \sym(\R^k)$ be the tensor defined as:
\begin{align}
    \nabla^2 {G_{PU}}(\nu, g(w))_{q,ij} = (Pu_i)^\top \, \nabla^2_g G(\nu, g(w))_{q} \, (P u_j). \label{eq:arbitrary:nabla_GPU}
\end{align}
Then the central flow takes the form:
\begin{align}
    \frac{dw}{dt} &= -P(\nu)^{-1} \qty[ \nabla L(w) + \tfrac{1}{2} \nabla H_U(w)^\top [X] ] \\
    \frac{d\nu}{dt} &= G(\nu, g(w)) + 2 \nabla^2 G_{PU}(\nu, g(w))[X] \\
    X &\in \sdcp_{\R^k} (\alpha_U(w, \nu), \beta_U(w, \nu)) \\
    \alpha_U(w, \nu) &= \nabla H_U(w) \qty[ P(\nu)^{-1}  \nabla L(w) ] + 2 \nabla P_U(\nu) \qty[ G(\nu, g(w)) ] \label{eq:arbitrary:alpha-basis} \\
    \beta_U(w, \nu) &= \tfrac{1}{2} \nabla H_U(w) P(\nu)^{-1} \nabla H_U(w)^\top + 4 \nabla P_U(\nu) \, \nabla^2 G_{PU}(\nu, g(w)). \label{eq:arbitrary:beta-basis}
\end{align}

Suppose that we pick the basis $U$ to be orthonormal w.r.t the preconditioner $P$, i.e. $U^\top P U = I$.
Then the central flow's prediction \cref{eq:abstract-predict-loss} for the time-averaged train loss can be efficiently computed as:
\begin{align}
    \bar{L}(t) &:= L(w(t)) + \tr \qty[P  U X U^\top ] \nonumber \\
     &= L(w(t)) + \tr \qty[ X U^\top P U ] \nonumber \\
    &= L(w(t)) + \tr \qty[ X ]. \label{appendix:derivations:adaptive:time-average-basis-loss}
\end{align}
Similarly, the prediction \cref{eq:abstract-predict-gradient-norm-sq} for the time-averaged squared gradient norm can be computed as:
\begin{align}
    \overline{\| \nabla L \|^2}(t) &=  \|\nabla L(w(t))\|^2 + 4 \tr \qty[ (P U) X (P U)^\top  ]. \label{appendix:derivations:adaptive:time-average-basis-sq-gradient-norm}
\end{align}
As for the oscillation covariance, we can compute both sides of \cref{eq:abstract-predict-oscillation-variance} without materializing $\Sigma(t)$ in full.  If $X = U_X \, \Lambda(t)\, U_X^\top$ is the eigenvalue decomposition of $X$, and $V(t) := P^{1/2} U U_X$, then $V(t) \, \Lambda(t) \, V(t)^\top$ is the eigenvalue decomposition of $P^{1/2} \, \Sigma(t) \, P^{1/2}$.
Note that $X$ and $U_X$ are dependent on the basis $U$, while $V(t)$ and $\Lambda(t)$ are independent of $U$.

\paragraph{Warning: variation in notation}
Although \gd can be cast as an instance of an adaptive preconditioned method (with $P = \eta^{-1} I$), the $U$ defined here is different from the $U$ in \Cref{appendix:derivations:gd:discretizing}, as the $U$ there was orthonormal, i.e. $U^\top U = I$, whereas the $U$ here is orthonormal w.r.t the preconditioner $P$, i.e. $U^\top P U = I$ or $U^\top U = \eta I$.
As a result, the $X$ defined here differs from the $X$ defined in \Cref{appendix:derivations:gd:discretizing} by a factor of $\eta$.

\subsubsection{Discretizing the central flow for a generic adaptive preconditioned method}
\label{appendix:derivations:arbitrary-preconditioned:discretizing}

We describe our general procedure for discretizing DCPs in \Cref{sec:derivations:dcp:discretize}. 
Let us now describe how this general procedure specializes to the case of our generic adaptive preconditioned method.

We use $w^{(t)}$, $\nu^{(t)}$, and $\Sigma^{(t)}$ to denote our estimate for the central flow's $w(t)$, $\nu(t)$, and $\Sigma(t)$.
Let $\epsilon > 0$ be the discretization step size, e.g. $\epsilon = 0.25$.
For some tolerance $\tau > 0$, e.g. $\tau = 0.05$, we will regard eigenvalues of the effective Hessian $P(\nu)^{-1} H(w)$ greater than $2 - \tau$ as those which might become unstable in the next discretization time step.\footnote{We remark that there is a slight discrepancy between the procedure described in \Cref{sec:derivations:dcp:discretize} and this one, as we look at eigenvalues of $P^{-1} H$ close to $2$, rather than eigenvalues of $2P - H$ close to $0$. Our procedure here is equivalent to looking at the \emph{generalized} eigenvalues of $2P-H$ with respect to $P$, i.e. $(2P-H) v = \lambda P v$ as these eigenvalues are exactly equal to the eigenvalues of $P^{-1} H$ shifted by $2$. These procedures produce the same trajectory as $\epsilon \to 0$, and even at finite $\epsilon$ we believe that the trajectories will empirically remain very close.}

At each discretization step, we do the following.
Abbreviate $w = w^{(t)}$ and $P = P(\nu^{(t)})$.
First, we compute all eigenvalues of the effective Hessian $P^{-1} H(w)$ that are greater than $2 - \tau$, as well as the corresponding eigenvectors.
Let $k$ be the number of such eigenvalues, let $D \in \diag(\R^k)$ be a diagonal matrix containing such eigenvalues on the diagonal, and let $U \in \R^{d \times k}$ be the corresponding eigenvectors, normalized so that they are orthonormal w.r.t $P$, that is, $U^\top P U = I$.
For example, one could set $U = P^{-1/2} \tilde{U}$, where the columns of $\tilde{U} \in \R^{d \times k}$ are orthonormal eigenvectors of $P^{-1/2} H(w) P^{-1/2}$.

Then, we compute the tensors $\nabla H_U$, $\nabla P_U$, $\nabla^2 G_{PU}$ as in \cref{eq:arbitrary:nabla-H-U,eq:arbitrary:nabla_GPU}, though note that $U$ now refers to a basis of eigenvectors whose eigenvalues are \emph{almost} 2 rather than exactly equal to 2.
Then, we compute $\alpha_U$ and $\beta_U$ as in \cref{eq:arbitrary:alpha-basis,eq:arbitrary:beta-basis}.
Then, we solve the following $k$-dimensional SDCP:
\begin{align}
    X^{(t)} = \sdcp_{\R^k}(2 I - D + \epsilon \, \alpha_U, \epsilon\, \beta_U),
\end{align}
so that $\Sigma^{(t)} = U X^{(t)} U^\top$.  Then, we update the weights and optimizer state via:
\begin{align}
    w^{(t+\epsilon)} &= w^{(t)} - \epsilon \; P(\nu^{(t)})^{-1} \qty[\nabla L(w^{(t)}) + \tfrac{1}{2} \nabla H_ U^\top[X^{(t)}] ] \label{eq:arbitrary:update-w}\\
    \nu^{(t+\epsilon)} &= \nu^{(t)} + \epsilon \qty[G(\nu^{(t)}, g(w^{(t)})) + 2 \nabla^2 G_{PU}(\nu^{(t)}, g(w^{(t)}))[X^{(t)}] ] \label{eq:arbitrary:update-nu}.
\end{align}
To predict the time-average of the train loss, the squared gradient, and the covariance of the oscillations, we use \cref{appendix:derivations:adaptive:time-average-basis-loss}, \cref{appendix:derivations:adaptive:time-average-basis-sq-gradient-norm}, and \cref{eq:abstract-predict-oscillation-variance}, respectively.

\subsection{Differential Complementarity Problems}\label{sec:derivations:dcp}

In this appendix, we give some brief background on differential complementarity problems \citep{stewart2011dynamics}, and we describe how to turn these into ordinary differential equations.

A \emph{differential complementarity problem} (DCP) \citep{stewart2011dynamics} is a dynamical system that is defined in terms of a complementarity relation.  In this paper, we will consider DCPs of the form:
\begin{align}
    \tfrac{d}{dt}w(t) = f(w(t)) + B(w(t))[\Sigma(t)] \quad \text{where} \quad 0 \preceq \Sigma(t) \perp A(w(t)) \succeq 0. \tag{DCP} \label{eq:dcp}
\end{align}
Here $f: \R^d \to \R^d$, $B: \R^d \to \R^d \otimes \sym(\R^d)$ and $A: \R^d \to \sym(\R^d)$ are given, and $w(t) \in \R^d$ and $\Sigma(t) \in \sym(\R^d)$ are dynamical variables that must respect \cref{eq:dcp} for almost all times $t$.

\begin{example}
    The following $1$-dimensional DCP models a particle moving to the right which hits a wall at $w = 1$:
    \begin{align*}
    \frac{dw(t)}{dt} = 1 - \Sigma(t) \qq{where} 0 \le \Sigma(t) \perp 1-w(t) \ge 0.
\end{align*}
    When $w(t) < 1$, complementarity forces $\Sigma(t) = 0$ so that $\frac{dw(t)}{dt} = 1$, i.e. the particle moves to the right. Once the particle has made contact with the wall at $w=1$, $\Sigma(t)$ must jump to $1$ so that $\frac{dw(t)}{dt^+} = 0$ in order to prevent the particle from violating the condition $w(t) \le 1$.  Thus, if $t^\star$ denotes the $t$ when $w(t)$ hits the wall, then we have $\Sigma(t) = 0$ for $t < t^\star$ and $\Sigma(t) = 1$ for $t > t^\star$. The choice of $\Sigma(t)$ at $t^\star$ itself is arbitrary and does not affect the DCP.
\end{example}

Throughout our derivations, we will search for a $\Sigma(t)$ that is right-continuous, e.g. in the above setting we would define $\Sigma(t^\star) = 1$, so that the leftwards force is applied the instant that $w(t)$ reaches $1$.  

To turn \cref{eq:dcp} into an ordinary differential equation with an explicit right-hand side, we will now prove some additional constraints that the system must satisfy.

First, we prove that if $A(w(t)) \succeq 0$ for all times $t$, then the right derivative $\tfrac{d}{dt^+} A(w(t))$ must be PSD over the subspace $\ker A(w(t))$.
We abbreviate $A(w(t))$ as $A(t)$.

\begin{lemma}[Semidefinite Tangent Cone]\label{lem:semidefinite_tangent_cone}
    Let $A: \R \to \sym(\R^d)$ be a matrix-valued function such that $A(t) \succeq 0$ for all $t$. If $A$ is right-differentiable at $t$, $\tfrac{d}{dt^+} A(t) \succeqOver{\U} 0$ where $\U = \ker A(t)$. If $A$ is differentiable at $t$, $\tfrac{d}{dt} A(t) \evalshort_
\U= 0$.
\end{lemma}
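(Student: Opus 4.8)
\textbf{Proof plan for \Cref{lem:semidefinite_tangent_cone}.}
The plan is to reduce the matrix statement to a scalar statement by testing against vectors in the kernel. Fix $t$ and let $\U = \ker A(t)$. Take any unit vector $u \in \U$, so $u^\top A(t) u = 0$. Define the scalar function $\phi_u(s) := u^\top A(s) u$. By hypothesis $A(s) \succeq 0$ for all $s$, hence $\phi_u(s) \ge 0$ for all $s$, and $\phi_u(t) = 0$. So $\phi_u$ attains a (global, hence local) minimum at $s = t$. Since $A$ is right-differentiable at $t$, so is $\phi_u$, with right derivative $\tfrac{d}{ds^+}\phi_u(s)\big|_{s=t} = u^\top \big(\tfrac{d}{dt^+}A(t)\big) u$.

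The key elementary fact I would invoke is: if a function $\phi$ has a local minimum at an interior point $t$ and is right-differentiable there, then its right derivative at $t$ is $\ge 0$ (otherwise $\phi(t+\epsilon) < \phi(t)$ for small $\epsilon > 0$, contradicting minimality). Applying this to $\phi_u$ gives $u^\top \big(\tfrac{d}{dt^+}A(t)\big) u \ge 0$ for every $u \in \U$, which by definition is exactly $\tfrac{d}{dt^+}A(t) \succeqOver{\U} 0$, i.e. $\Pi_\U \big(\tfrac{d}{dt^+}A(t)\big) \Pi_\U \succeq 0$.

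For the second claim, suppose $A$ is (two-sided) differentiable at $t$. Then $\phi_u$ is differentiable at $t$ with a local minimum there, so Fermat's rule gives $\phi_u'(t) = u^\top \big(\tfrac{d}{dt}A(t)\big) u = 0$ for every $u \in \U$. A symmetric matrix $M := \tfrac{d}{dt}A(t)$ that satisfies $u^\top M u = 0$ for all $u$ in the subspace $\U$ must have $\Pi_\U M \Pi_\U = 0$: indeed the restriction $M\evalshort_\U$ is a symmetric matrix whose associated quadratic form vanishes identically on $\U$, and a symmetric matrix with identically zero quadratic form is the zero matrix (by polarization, $u^\top M v = \tfrac14[(u{+}v)^\top M (u{+}v) - (u{-}v)^\top M(u{-}v)] = 0$ for all $u,v \in \U$). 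Hence $\tfrac{d}{dt}A(t)\evalshort_\U = 0$, as claimed.

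There is essentially no hard obstacle here; the only point requiring a little care is making sure the one-sided derivative of the scalar test function $\phi_u$ is genuinely $u^\top(\tfrac{d}{dt^+}A(t))u$, which follows because $\phi_u$ is a fixed linear functional composed with $A$, so right-differentiability of $A$ transfers directly. The ``local minimum $\Rightarrow$ nonnegative right derivative'' step and the polarization step are both standard. This lemma then feeds directly into \cref{eq:gd:differential-complementarity-relation} and the analogous steps for the other optimizers, via \Cref{corollary:right_continuous_sdcp}.
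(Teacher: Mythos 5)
Your proof is correct and takes essentially the same approach as the paper's: both reduce the matrix statement to scalar quadratic forms $u^\top A(\cdot) u$ for $u \in \ker A(t)$, use nonnegativity plus vanishing at $t$ to force a sign on the one-sided difference quotient, and (for two-sided differentiability) combine the two one-sided bounds to conclude the restriction vanishes. The only presentational difference is that the paper manipulates the difference quotient directly and concludes $M|_\U = 0$ from $M \succeqOver{\U} 0$ and $M \preceqOver{\U} 0$, whereas you package the first part as ``local minimum $\Rightarrow$ nonnegative right derivative'' (which is proved by that same difference quotient) and the second via Fermat's rule followed by an explicit polarization step --- the same mathematics, spelled out a bit more.
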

\begin{proof}
    Let $u \in \ker A(t)$. Then for any $\epsilon > 0$,
    \begin{align*}
        u^\top \qty[\frac{A(t+\epsilon)-A(t)}{\epsilon}] u = u^\top \qty[\frac{A(t+\epsilon)}{\epsilon}] u \ge 0.
    \end{align*}
    Taking $\epsilon \to 0$ proves that $\tfrac{d}{dt^+} A(t) \succeqOver{\U} 0$. By reversing time if $A$ is left differentiable at $t$ then $\tfrac{d}{dt^-}A(t) \preceqOver{\U} 0$. Combining these inequalities shows that if $A$ is differentiable at $t$ then $\frac{d}{dt} A(t) \evalshort_{\U} = 0$.
\end{proof}

Next, we show that if the complementarity relation $0 \preceq \Sigma(t) \perp A(w(t)) \succeq 0$ holds for all times $t$, then the right derivative $\tfrac{d}{dt^+} A(w(t))$ must satisfy its own complementarity relation: $0 \preceq \Sigma(t) \perp \tfrac{d}{dt} A(w(t)) \succeqOver{\ker A(w(t))} 0$.

\begin{lemma}[Differentiating the complementarity relation]\label{corollary:right_continuous_sdcp}
    If $A: \R \to \sym(\R^d)$ is right-differentiable at $t$, $0 \preceq \Sigma(t) \perp A(t) \succeq 0$ for all $t$, and $\Sigma(\cdot)$ is right-continuous at $t$, then $0 \preceq \Sigma(t) \perp \frac{d}{dt^+} A(t) \succeqOver{\U} 0$ where $\U = \ker A(t)$.
\end{lemma}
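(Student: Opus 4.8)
The plan is to verify the three parts of the complementarity relation $0 \preceq \Sigma(t) \perp \tfrac{d}{dt^+}A(t) \succeqOver{\U} 0$ separately. The part $\Sigma(t) \succeq 0$ is immediate from the hypothesis $0 \preceq \Sigma(t) \perp A(t) \succeq 0$; moreover, since $\Sigma(t)$ and $A(t)$ are PSD with $\langle \Sigma(t), A(t)\rangle = 0$, the characterization of matrix complementarity (\Cref{fact:orthogonal-spans-corollary}) gives $\spn \Sigma(t) \subseteq \ker A(t) = \U$, so in particular $\Sigma(t) \in \sym(\U)$. The part $\tfrac{d}{dt^+}A(t) \succeqOver{\U} 0$ is exactly \Cref{lem:semidefinite_tangent_cone} applied to the function $A(\cdot)$, using that $A(s)\succeq 0$ for all $s$ and $\U = \ker A(t)$.

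The substance is the orthogonality condition $\langle \Sigma(t), \tfrac{d}{dt^+}A(t)\rangle = 0$, which I would establish by sandwiching it between $\ge 0$ and $\le 0$. For the lower bound, fix the (constant) matrix $\Sigma(t)$ and consider the scalar function $s \mapsto \langle \Sigma(t), A(s)\rangle$. It vanishes at $s=t$ by complementarity at time $t$, and it is nonnegative for every $s$ because it is the trace of a product of two PSD matrices; hence the difference quotient $\tfrac{1}{s-t}\langle \Sigma(t), A(s)-A(t)\rangle$ is $\ge 0$ for $s>t$, and letting $s\to t^+$ with right-differentiability of $A$ gives $\langle \Sigma(t), \tfrac{d}{dt^+}A(t)\rangle \ge 0$.

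For the upper bound I would instead let $s$ vary inside $\Sigma$ as well: for $s>t$, complementarity at time $s$ gives $\langle \Sigma(s), A(s)\rangle = 0$, while $\langle \Sigma(s), A(t)\rangle \ge 0$ since both are PSD, so $\tfrac{1}{s-t}\langle \Sigma(s), A(s)-A(t)\rangle \le 0$. Now take $s\to t^+$: the matrix quotient $\tfrac{A(s)-A(t)}{s-t}$ converges to $\tfrac{d}{dt^+}A(t)$ by right-differentiability, and $\Sigma(s)\to\Sigma(t)$ by the assumed right-continuity of $\Sigma(\cdot)$; since the Frobenius pairing is bilinear and continuous, the limit of the product equals the product of the limits, yielding $\langle \Sigma(t), \tfrac{d}{dt^+}A(t)\rangle \le 0$. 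Combining the two bounds gives equality, and together with the first two parts this completes the proof.

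The one place I would be careful is the limit in the upper-bound step, where both factors of the inner product move with $s$: this is precisely where right-continuity of $\Sigma$ is used (and cannot be omitted), though it needs nothing deeper than continuity of the bilinear pairing. Everything else is purely algebraic or a direct appeal to \Cref{lem:semidefinite_tangent_cone}.
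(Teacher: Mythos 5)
Your proof is correct and takes essentially the same route as the paper's: the upper bound on $\langle \Sigma(t), \tfrac{d}{dt^+}A(t)\rangle$ is identical (complementarity at time $s>t$ plus right-continuity of $\Sigma$), and the lower bound you obtain directly from monotonicity of the scalar function $s \mapsto \langle \Sigma(t), A(s)\rangle$ (nonnegative, vanishing at $s=t$), whereas the paper routes the lower bound through the already-established fact $\tfrac{d}{dt^+}A(t) \succeqOver{\U} 0$ together with $\spn\Sigma(t) \subseteq \U$ — a purely cosmetic difference, since both reduce to the same one-sided difference-quotient inequality.
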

\begin{proof}
    The only condition that needs to be checked is $\Sigma(t) \perp \frac{d}{dt^+} A(t)$, since $\Sigma(t) \succeq 0$ is given and $\frac{d}{dt^+} A(t)\succeqOver{\U} 0$ is implied by \Cref{lem:semidefinite_tangent_cone}.
    First, for any $\epsilon > 0$, we have $\Sigma(t+\epsilon) \perp A(t+ \epsilon)$, which implies:
    \begin{align*}
        \ev{\Sigma(t+\epsilon),\frac{A(t+\epsilon)-A(t)}{\epsilon}} = -\ev{\Sigma(t+\epsilon),\frac{A(t)}{\epsilon}} \le 0.
    \end{align*}
    Taking $\epsilon \to 0$ and using right-continuity of $\Sigma(\cdot)$ implies $\ev{\Sigma(t), \frac{d}{dt^+}A(t)} \le 0$.
    Meanwhile, the assumption $0 \preceq \Sigma(t) \perp A(t) \succeq 0$ implies that $\spn \Sigma(t) \subseteq \U$.  Since $\tfrac{d}{dt^+}A(t) \succeqOver{\U} 0$ by \Cref{lem:semidefinite_tangent_cone}, we have that $\ev{\Sigma(t), \frac{d}{dt^+}A(t)} \ge 0$.
    Since $\ev{\Sigma(t), \frac{d}{dt^+}A(t)}$ is both $\le 0$ and $\ge 0$, it must be $=0$, i.e. $\Sigma(t) \perp \tfrac{d}{dt^+} A(t)$ as desired.
\end{proof}
Note that as discussed in \Cref{appendix:smoothness}, throughout the rest of this work, when we write $\tfrac{dw}{dt}$ we either mean $\tfrac{dw}{dt^+}$ or we mean that the statement holds for almost all $t$.

Now we have enough information to solve for $\Sigma(t)$. Explicitly, if $\U(w) := \ker A(w)$, we can define
\begin{align*}
    \alpha(w) := \nabla A(w)[f(w)] \in \sym(\R^d) \qc \beta(w) := \nabla A(w) B(w) \in \sym(\R^d)^{\otimes 2}.
\end{align*}
\begin{lemma}\label{lem:dcp_to_sdcp}
    If $w(t),\Sigma(t)$ satisfy \cref{eq:dcp} for almost all $t$, and $\Sigma(\cdot)$ is right-continuous, then $$\Sigma(t) \in \sdcp_{\U(w)}\qty(\alpha(w(t)),\beta(w(t)))$$ for all $t$, where $\U(w) = \ker A(w)$. Furthermore if $\beta(w(t))$ is symmetric positive definite as an operator on $\sym(\U(w))$ for all $t$, then $\Sigma(t) = \Sigma(w(t))$ is unique.
\end{lemma}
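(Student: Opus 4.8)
The plan is to reduce this to \Cref{corollary:right_continuous_sdcp} by differentiating the constraint $A(w(t)) \succeq 0$ along the flow. First I would dispose of the ``static'' half of the SDCP characterization: the complementarity relation $0 \preceq \Sigma(t) \perp A(w(t)) \succeq 0$, which is part of \cref{eq:dcp}, already gives $\Sigma(t) \succeq 0$, and by \Cref{fact:orthogonal-spans-corollary} it also gives $\spn \Sigma(t) \subseteq \ker A(w(t)) = \U(w(t))$, so $\Sigma(t) \in \sym(\U(w(t)))$. The real work is to produce the ``dynamic'' complementarity between $\Sigma(t)$ and $\alpha(w(t)) + \beta(w(t))[\Sigma(t)]$.

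Next I would apply the chain rule to the composite function $t \mapsto A(w(t))$. At every $t$ where $w(\cdot)$ is right-differentiable and the ODE in \cref{eq:dcp} holds,
\[
\tfrac{d}{dt^+} A(w(t)) = \nabla A(w(t))\big[\tfrac{dw}{dt}\big] = \nabla A(w(t))\big[f(w(t)) + B(w(t))[\Sigma(t)]\big] = \alpha(w(t)) + \beta(w(t))[\Sigma(t)],
\]
where the second equality substitutes the DCP dynamics and the third uses linearity of $\nabla A(w(t))$ together with the definitions $\alpha(w) := \nabla A(w)[f(w)]$ and $\beta(w) := \nabla A(w) B(w)$. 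Thus $\tfrac{d}{dt^+} A(w(t))$ is an explicit affine function of $\Sigma(t)$.

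Then I would invoke \Cref{corollary:right_continuous_sdcp}: since $A(w(t)) \succeq 0$ for all $t$, $A(w(\cdot))$ is right-differentiable, $0 \preceq \Sigma(t) \perp A(w(t)) \succeq 0$, and $\Sigma(\cdot)$ is right-continuous, that lemma yields $0 \preceq \Sigma(t) \perp \tfrac{d}{dt^+} A(w(t)) \succeqOver{\U(w(t))} 0$. Substituting the expression from the previous step gives $0 \preceq \Sigma(t) \perp \alpha(w(t)) + \beta(w(t))[\Sigma(t)] \succeqOver{\U(w(t))} 0$, which, together with $\Sigma(t) \in \sym(\U(w(t)))$, is exactly the statement $\Sigma(t) \in \sdcp_{\U(w(t))}(\alpha(w(t)), \beta(w(t)))$. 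For the uniqueness claim, I would simply note that when $\beta(w(t))$ is symmetric and positive definite as an operator on $\sym(\U(w(t)))$, \Cref{lemma:sdcp:unique} says this SDCP has a singleton solution set, so $\Sigma(t)$ is forced to equal that unique solution, which we name $\Sigma(w(t))$.

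The main obstacle is a bookkeeping one rather than a conceptual one: making the chain-rule step and the application of \Cref{corollary:right_continuous_sdcp} valid at \emph{every} $t$ rather than merely almost every $t$, since \cref{eq:dcp} is only imposed a.e. The resolution is to lean on the regularity hypotheses already baked in: $w(\cdot)$ is (absolutely) continuous so $A(w(\cdot)) \succeq 0$ holds everywhere, the pointwise complementarity $0 \preceq \Sigma(t) \perp A(w(t)) \succeq 0$ extends from a.e.\ to all $t$ by right-continuity of $\Sigma(\cdot)$, and the one-sided derivative $\tfrac{d}{dt^+}A(w(t))$ is governed by the a.e.-valid ODE through the same one-sided limiting argument used inside the proof of \Cref{corollary:right_continuous_sdcp} (take $\epsilon \to 0^+$ in difference quotients, using right-continuity to pass $\Sigma(t+\epsilon) \to \Sigma(t)$). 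Once this point is handled, everything else is linear algebra plus direct citation of already-established lemmas.
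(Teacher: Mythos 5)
Your proposal is correct and follows the same path as the paper's own proof: apply the chain rule to $A(w(t))$, use the DCP dynamics and the definitions of $\alpha,\beta$ to write $\tfrac{d}{dt^+}A(w(t))=\alpha(w(t))+\beta(w(t))[\Sigma(t)]$, and invoke \Cref{corollary:right_continuous_sdcp} to turn the position-level complementarity into the velocity-level SDCP relation. You are somewhat more thorough than the paper: you explicitly derive $\Sigma(t)\in\sym(\U(w(t)))$ from the static complementarity via \Cref{fact:orthogonal-spans-corollary}, you address how the a.e.\ hypothesis is upgraded to all $t$ using right-continuity, and you prove the uniqueness clause by citing \Cref{lemma:sdcp:unique} --- a clause the paper's proof silently omits.
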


\begin{proof}
    First, \Cref{corollary:right_continuous_sdcp} implies that
    \begin{align*}
        0 \preceq \Sigma(t) \perp \tfrac{d}{dt} A(w(t)) \succeqOver{\U(w(t))} 0
    \end{align*}
    where $\U(w) = \ker A(w(t))$. By the chain rule,
    \begin{align*}
        \frac{d}{dt} A(w(t)) = \nabla A(w(t))\qty[\frac{dw}{dt}] = \nabla A(w(t))\qty[f(w(t)) + B(w(t))[\Sigma(t)]].
    \end{align*}
    Therefore, using that $\Sigma \in \sym(\U(w))$, we can expand $\frac{d}{dt} A(w(t))$ as:
    \begin{align*}
        \frac{d}{dt} A(w(t)) = \alpha(w(t)) + \beta(w(t))[\Sigma(t)],
    \end{align*}
    which implies that: $$\Sigma(t) \in \sdcp_{\U(w(t))}\qty(\alpha(w(t)),\beta(w(t))).$$
\end{proof}

This lets us turn \cref{eq:dcp} into an ODE:
\begin{tcolorbox}
\begin{equation}
\begin{gathered}
    \frac{dw}{dt} = f(w) + B(w)[\Sigma] \qq{where} \Sigma \in \sdcp_{\U(w)}\qty(\alpha(w),\beta(w)) \\
    \alpha(w) := \nabla A(w)[f(w)] \qc \beta(w) := \nabla A(w) B(w) \qc \U(w) := \ker A(w)
\end{gathered}\tag{DCP to ODE}
\end{equation}
\end{tcolorbox}

We can also define an equivalent basis-dependent version of this ODE, which makes clear that simulating the ODE only requires time and space that scale linearly in the dimension $d$.
For each $t$, let $U \in \R^{d \times k}$ denote a basis for the critical subspace $\U(w(t))$, where $k = \dim \U(w(t))$. Then we can write \cref{eq:dcp} in terms of $U$ as follows:
\begin{tcolorbox}
\begin{align}
\begin{split}
    \frac{dw}{dt} &= f(w) + B(w)[UXU^\top] \\
    X &\in \sdcp_{\R^k}(\alpha_U(w),\beta_U(w)) \\
    \alpha_U(w) &:= U^\top \alpha(w) U \in \sym(\R^k) \\
    \beta_U(w)[X] &:= U^\top \qty(\beta(w)[UXU^\top]) U \in \sym(\R^k) \quad\forall X \in \sym(\R^k).
\end{split}\tag{DCP to ODE, basis-dependent}
\end{align}
\end{tcolorbox}

Thus, to compute $\tfrac{dw}{dt}$, one need only compute $\alpha_U \in \sym(\R^k)$ and $\beta_U \in \sym(\R^k)^{\otimes 2}$, then solve a $k$-dimensional SDCP to obtain $X \in \sym(\R^k)$, then form $\tfrac{dw}{dt}$ in terms of $X$.

In practice, due to nonsmoothness of the DCP, we do not discretize it by computing $\tfrac{dw}{dt}$ and then taking Euler steps.
Instead, we discretize the DCP directly, as described below in \Cref{sec:derivations:dcp:discretize}.

\paragraph{Relation to the broader DCP literature}
    In the literature on differential complementarity problems, it is a standard practice to turn a DCP into an ODE by differentiating the constraints.
    In particular,  \cref{eq:dcp} is known as a \emph{pure index-one DCP} \citep[Section 5.2.1]{stewart2011dynamics}, because it needs to be differentiated exactly once in order to be turned into an explicit ODE.    

\paragraph{Existence and uniqueness of the ODE}
Existence and uniqueness for projected gradient flows (including the \gd central flow) is guaranteed by \citet{cornet1983ExistenceSlowSolutions}, under the assumption that $\beta(w)$ is positive definite on $\sym(\U(w))$ for all $w$, where $\U(w) = \ker A(w)$ denotes the critical subspace.  We view this assumption as mild.
Existence and uniqueness can be argued for DCPs more generally (including the other central flows) under the assumption that $\beta$ is positive definite over all $\sym(\R^d)$ \citep{stewart2011dynamics}.
However, this assumption is too strong for our setting; indeed, for \gd, $\beta$ has rank at most $d$ and hence cannot span the $\tfrac{d(d+1)}{2}$ dimensional space of $\sym(\R^d)$ whenever $d > 1$.  However, we suspect that under additional reasonable regularity conditions, this could be relaxed to a condition that $\beta$ need only be positive definite over $\sym(\ker A(w))$.

\paragraph{Smoothness of the ODE}
The ODE is non-smooth at \emph{breakpoints} where the dimension of $\U(w)=\ker A(w)$ changes.  In between the breakpoints, $\Sigma(t)$ is continuous and $w(t)$ is differentiable.
Moreover, the solution to the SDCP is given by the linear inverse $\Sigma(t) = U \beta^{-1}_U(w)[\alpha_U(w)] U^\top$ and we have $\tfrac{d}{dt} A(t) \evalshort_{\U(w)} = 0$.
At the breakpoints, however, $\Sigma(t)$ is discontinuous and $w(t)$ is not differentiable (although they are right-continuous and right-differentiable, respectively)

\subsubsection{Discretizing the DCP}
\label{sec:derivations:dcp:discretize}
We now describe how we discretize \cref{eq:dcp} in practice.
Let $\epsilon>0$ denote the discretization step size, and consider a grid of time steps $\mathcal T=\{t_0,t_1,\hdots,t_N\}$ that are spaced apart by $\epsilon$. For any $t\in\mathcal T$, let $w^{(t)}$ denote our estimate for the central flow’s $w(t)$, and let $\Sigma^{(t)}$ denote our estimate for the central flow’s $\Sigma(t)$.
In what follows, we will use $t\in\mathcal T$ to denote the current time step and $t+\epsilon\in\mathcal T$ to denote the next one.

It is not straightforward to discretize \cref{eq:dcp} using time-stepping. For example, it doesn’t make sense to search for a pair $w^{(t+\epsilon)},\,\Sigma^{(t)}$ that satisfies
\begin{align*}
    w^{(t+\epsilon)} \;=\; w^{(t)} + \epsilon\,\qty( f\qty(w^{(t)}) + B\qty(w^{(t)})\bqty{\Sigma^{(t)}} )
    \qand 0 \preceq \Sigma^{(t)} \perp A\qty(w^{(t)}) \succeq 0,
\end{align*}
as this would always be satisfied by $\Sigma^{(t)}=0$. We could instead search for a pair $w^{(t+\epsilon)},\,\Sigma^{(t)}$ which satisfies
\begin{align*}
    w^{(t+\epsilon)} \;=\; w^{(t)} + \epsilon\,\qty( f\qty(w^{(t)}) + B\qty(w^{(t)})\bqty{\Sigma^{(t)}} )
    \qand 0 \preceq \Sigma^{(t)} \perp A\qty(w^{(t+\epsilon)}) \succeq 0,
\end{align*}
where the complementarity constraint is enforced at time $t+\epsilon$, rather than at time $t$. However, it is still difficult to handle this constraint due to the nonlinearity of $A$. We will therefore \emph{linearize} $A$ around $w^{(t)}$.
Define $A^{\text{lin}}(w)$ as the linearization of $A$ around the current point $w^{(t)}$:
\begin{align}
    A^{\text{lin}}(w) \;:=\; A\qty(w^{(t)}) + \nabla A\qty(w^{(t)})\bqty{\,w - w^{(t)}\,}.
\end{align}
We can therefore look for a choice of $w^{(t+\epsilon)}$ and $\Sigma^{(t)}$ that together satisfy
\begin{align}
    w^{(t+\epsilon)} \;=\; w^{(t)} + \epsilon\,\qty( f\qty(w^{(t)}) + B\qty(w^{(t)})\bqty{\Sigma^{(t)}} )
    \qand 0 \preceq \Sigma^{(t)} \perp A^{\text{lin}}\qty(w^{(t+\epsilon)}) \succeq 0.
\end{align}
This implies the following set of conditions on $\Sigma^{(t)}$ alone:
\begin{align}
    0 \preceq \Sigma^{(t)} \perp 
    A\qty(w^{(t)}) + \epsilon \; \nabla A\qty(w^{(t)})\bqty{ f\qty(w^{(t)}) + B\qty(w^{(t)})\bqty{\Sigma^{(t)}} },
\end{align}
which we recognize as precisely an SDCP:
\begin{align}
    \Sigma^{(t)} \;\in\; \sdcp_{\mathbb{R}^d}\qty(
        A\qty(w^{(t)}) + \epsilon\, \nabla A\qty(w^{(t)})\bqty{ f\qty(w^{(t)}) },\;
        \epsilon\, \nabla A\qty(w^{(t)})\, B\qty(w^{(t)})
    ).
\end{align}
Thus one could solve for $\Sigma^{(t)}$ above, and then take an Euler step on $w$:
\begin{align*}
    w^{(t+\epsilon)} \;\leftarrow\; w^{(t)} + \epsilon\,\qty( f\qty(w^{(t)}) + B\qty(w^{(t)})\bqty{\Sigma^{(t)}} ).
\end{align*}
Unfortunately, it is not possible to directly run this ``idealized'' time-stepping scheme, as it is impractical to formulate or solve an SDCP over $\R^d$.
Therefore, we instead approximate it by projecting it onto the bottom eigendirections of $A$ which are ``close'' to the stability threshold at $0$. \jnote{spell out} In particular, if $U \in \R^{d \times k}$ is a basis of these directions and $\U = \spn U$, then we require $\Sigma^{(t)} \in \sym(\U)$, and we enforce:
\begin{align}
    0 \preceq \Sigma^{(t)} \perp A^{\text{lin}}\qty(w^{(t+\epsilon)}) \succeqOver{\U} 0.
\end{align}
This boils down to an SDCP over the low-dimensional subspace $\U$:
\begin{align}
    \Sigma^{(t)} \in \sdcp_{\U}\qty(
        A\qty(w^{(t)}) + \epsilon\, \nabla A\qty(w^{(t)})\bqty{ f\qty(w^{(t)}) },\;
        \epsilon\, \nabla A\qty(w^{(t)})\, B\qty(w^{(t)})
    ).
\end{align}
As described in \Cref{sec:derivations:sdcp}, solving this SDCP only requires solving a $k$-dimensional SDCP:
\begin{align}
    \Sigma^{(t)} \;=\; U\, X^{(t)} U^\top
    \qc
    X^{(t)} \;=\; \sdcp\qty(
        A_U\qty(w^{(t)}) + \epsilon \, \alpha_U\qty(w^{(t)}),\;
        \epsilon\, \beta_U\qty(w^{(t)})
    ),
\end{align}
where $A_U(w) \in \sym(\R^k)$, $\alpha_U(w) \in \sym(\R^k)$, and $\beta_U(w) \in \sym(\R^k)^{\otimes 2}$ are defined as
\begin{align}
    A_U(w) &:= U^\top A\qty(w)\, U, \\
    \alpha_U(w) &:= U^\top \bqty{ \nabla A\qty(w)\bqty{ f\qty(w) } } U, \\
    \beta_U(w)[X] &:= U^\top \bqty{ \nabla A\qty(w)\bqty{ B\qty(w)\bqty{U X U^\top} } } U.
\end{align}
Then, we update the weights using:
\begin{align*}
    w^{(t+\epsilon)} \;=\; w^{(t)} + \epsilon \; \qty( f\qty(w^{(t)}) + B\qty(w^{(t)})\bqty{U X^{(t)} U^\top} ).
\end{align*}
For the central flows in this work, $A$ and $B$ are such that $\alpha_U$, $\beta_U$, and $\tfrac{dw}{dt}$ can be computed efficiently.

\paragraph{Correctness of this discretization scheme} Under suitable conditions on $f,A,B$, this time-stepping scheme will converge to the solution of the DCP as $\epsilon \to 0$ \citep{stewart2011dynamics}. For technical reasons involving the rank of $\nabla A B$, our paper will not rigorously prove the convergence of this time-stepping scheme for our central flows. However, we do empirically observe that the dynamics converge to a limiting curve as $\epsilon \to 0$.

\subsection{Continuous-time approximation to an EMA}
\label{sec:continuous-time-ema}
In this appendix, we justify our continuous-time approximation to an exponential moving average (EMA).

Consider a discrete-time EMA of a continuous process $f(t)$ that is sampled at integer-valued times:
\begin{align}
    \nu_t = \beta_2 \nu_{t-1} + (1 - \beta_2) f(t). \label{eq:discrete-ema}
\end{align}
What is a good continuous-time approximation $\nu(t)$ to $\nu_t$?  This question arises when we derive stable and central flows for \rmsnorm and \rmsprop (where $f$ is the squared gradient norm or the elementwise squared gradient, respectively).

Subtracting $\nu_{t-1}$ from both sides of \cref{eq:discrete-ema} and rearranging yields:
\begin{align}
    \nu_t - \nu_{t-1} = (1 - \beta_2) (f(t) - \nu_{t-1}).
\end{align}
This suggests the following continuous-time approximation to \cref{eq:discrete-ema}:
\begin{align}
    \nu'(t) = (1 - \beta_2)(f(t) - \nu({t})) \label{eq:naive-continuous-ema}.
\end{align}
However, this approximation breaks down for small $\beta_2$.  Indeed, as $\beta_2 \to 0$, the discrete-time EMA \cref{eq:discrete-ema} adapts ``infinitely fast'' so that $\nu_t \approx f(t)$, yet the naive continuous-time approximation \cref{eq:naive-continuous-ema} does not have this property. Therefore, to obtain a continuous-time approximation that works well even for small $\beta_2$, we use the following alternative approximation:
\begin{align}
    \nu'(t) = \left(\frac{1 - \beta_2}{\beta_2} \right)(f(t) - \nu(t)) \label{eq:better-continuous-ema}.
\end{align}
The $\beta_2$ in the denominator ensures that when $\beta_2\approx 0$, $\nu(t)$ adapts ``infinitely'' fast to $f(t)$.

To give intuition for our approximation \cref{eq:better-continuous-ema}, suppose that we are using an EMA to track a one-dimensional \emph{linear} process $f(t)$ (i.e. $f'(t)$ is constant).
Then both the discrete and continuous time EMAs have closed forms.
The closed-form solution to the discrete time EMA \cref{eq:discrete-ema} can be written as:
\begin{align}
    \nu_t = f(\underbrace{t - \tau}_\text{time-lag}) + \underbrace{\beta_2^t \qty[\nu_0 - f(-\tau)]}_\text{burn in} \qq{where} \tau := \frac{\beta_2}{1-\beta_2}. \label{eq:appendix_ema_discrete}
\end{align}
Since the burn-in term vanishes exponentially with time, the steady state is that $\nu_t$ will track $f(t)$ with a ``time delay'' of $\tau := \frac{\beta_2}{1-\beta_2}$. 

Similarly, for a continuous-time EMA of the form $\nu'(t) = \gamma \qty[f(t) - \nu(t)]$ for some $\gamma$, the general solution is:
\begin{align}
    \nu(t) = f(\underbrace{t - \tau}_\text{time-lag}) + \underbrace{e^{-\gamma t} \qty[\nu(0) - f(-\tau)]}_\text{burn in} \qq{where} \tau := 1/\gamma.
    \label{eq:appendix_ema_continuous}
\end{align}
Thus, the steady state is that $\nu(t)$ will track $f(t)$ with a ``time delay'' of $\tau = 1/\gamma$.

To ensure that the continuous-time EMA asymptotically matches the discrete-time EMA, we need to set $\gamma$ so that the time delays match:
\begin{align*}
    \frac{\beta_2}{1-\beta_2} = \frac{1}{\gamma} \implies \gamma = \frac{1-\beta_2}{\beta_2},
\end{align*}
This motivates our choice of scaling factor in \cref{eq:better-continuous-ema}.

Note that if we instead wanted to match the \emph{burn in}, we would set $\gamma = \log(1/\beta_2)$. However, we believe it is more important to match the time-delay than the burn in.

\subsection{Miscellaneous math}
\label{sec:misc-math}
Here, we state some miscellaneous mathematical facts that are used elsewhere.

\begin{fact}\label{lem:gd_taylor_proof}
    Let $L(w)$ be three-times differentiable, and let $S(w) := \lambda_1(H(w))$ denote the top Hessian eigenvalue.
Suppose that the top Hessian eigenvalue at $\wbar$ has multiplicity 1, and let $u$ be the top Hessian eigenvector at $\overline{w}$. Then, for $w = \wbar + xu$, we have:
    \begin{align}
        \nabla L(w) = \nabla L(\overline{w}) + S(\overline{w}) x u + \tfrac{x^2}{2} \nabla S(\overline{w}) + o(x^2).
    \end{align}
\end{fact}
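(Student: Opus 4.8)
The plan is to combine the second-order Taylor expansion of the gradient (\cref{eq:gradient-taylor-expansion}) with a short perturbation-theory computation that identifies the resulting third-order term with $\nabla S(\overline w)$.

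First I would apply \cref{eq:gradient-taylor-expansion} with $\delta = xu$:
\[
\nabla L(\overline w + xu) = \nabla L(\overline w) + x\,H(\overline w)u + \tfrac{x^2}{2}\,\nabla H(\overline w)^\top[uu^\top] + o(x^2).
\]
Since $u$ is the top eigenvector of $H(\overline w)$, we have $H(\overline w)u = S(\overline w)u$, which turns the first-order term into $S(\overline w)\,x\,u$. Hence the statement reduces to showing $\nabla H(\overline w)^\top[uu^\top] = \nabla S(\overline w)$.

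For this step I would invoke standard perturbation theory for a simple eigenvalue: because $\lambda_1(H(\overline w))$ has multiplicity $1$, on a neighborhood of $\overline w$ one can choose a $C^1$ map $w \mapsto u(w)$ with $u(w)$ a unit-norm eigenvector of $H(w)$ of eigenvalue $S(w)$ and $u(\overline w) = u$; in particular $S$ is differentiable at $\overline w$ and $S(w) = u(w)^\top H(w)\,u(w)$. Differentiating this identity coordinatewise and evaluating at $\overline w$ gives
\[
\partial_{w_p} S(\overline w) = 2\,\partial_{w_p}u(\overline w)^\top H(\overline w)\,u + u^\top \partial_{w_p}H(\overline w)\,u.
\]
Using $H(\overline w)u = S(\overline w)u$, the first term equals $2 S(\overline w)\,\partial_{w_p}u(\overline w)^\top u$, which vanishes because differentiating $\|u(w)\|^2 \equiv 1$ forces $\partial_{w_p}u(\overline w)^\top u = 0$. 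Thus $\partial_{w_p} S(\overline w) = u^\top \partial_{w_p}H(\overline w)\,u = \bigl[\nabla H(\overline w)^\top[uu^\top]\bigr]_p$, i.e. $\nabla S(\overline w) = \nabla H(\overline w)^\top[uu^\top]$. Substituting back into the Taylor expansion yields the claim.

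The only real (and mild) obstacle is the appeal to eigenvector perturbation theory to produce the smooth local eigenvector $u(w)$ — this is precisely where the multiplicity-one hypothesis enters, and without it $S$ need not be differentiable at $\overline w$; the rest is routine calculus. An alternative phrasing of the key step is Danskin's theorem applied to $S(w) = \max_{\|v\|=1} v^\top H(w)v$, with the envelope term vanishing because $u$ is a maximizer, but the direct computation above is cleaner and sidesteps subtleties about differentiability of a pointwise maximum.
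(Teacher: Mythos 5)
Your proof is correct and matches the paper's in substance: same Taylor expansion of $\nabla L$, same reduction of the middle term via $H(\overline w)u = S(\overline w)u$, and the same key identity $\nabla S(\overline w) = \nabla^3 L(\overline w)[u,u]$, which the paper obtains via Danskin's theorem and which it explicitly notes is equivalent to ``the standard formula for the derivative of an eigenvalue'' — exactly the perturbation-theory computation you spell out. The only difference is that you unpack that standard formula (smooth eigenvector field from the implicit function theorem for a simple eigenvalue, vanishing of the eigenvector-derivative term via $\|u(w)\|\equiv 1$) rather than invoking the envelope theorem as a black box; the paper's route is shorter, yours is more self-contained, but they are the same argument.
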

\begin{proof}
    We begin by writing the Taylor expansion of $\nabla L(w)$ around $\overline{w}$:
    \begin{align}
        \nabla L(w) = \nabla L(\overline{w}) + H(\overline{w}) x u + \tfrac{x^2}{2} \nabla^3 L(\overline{w})[u,u] + o(x^2).
    \end{align}
    Because $u$ is an eigenvector of $H(\overline{w})$ with eigenvalue $S(\overline{w})$, the second term can be simplified to $S(\overline{w}) x u$. Finally, by Danskin's theorem (or equivalently the standard formula for the derivative of an eigenvalue):
    \begin{align}
        \nabla_{\overline{w}} S(\overline{w}) = \nabla_{\overline{w}}\qty[\max_{\norm{v} = 1} v^\top H(\overline{w}) v] = \nabla_{\overline{w}} \qty[u^\top H(\overline{w}) u] = \nabla^3 L(\overline{w})[u,u]
    \end{align}
    where $u$ is the argmax of the second expression, i.e. the top eigenvector of the Hessian at $\overline{w}$.
\end{proof}

\begin{fact}
    For PSD matrices $X, Y \succeq 0$, it holds that $\tr(X Y) = 0$ if and only if $\spn X \perp \spn Y$.
    \label{fact:orthogonal-spans}
\end{fact}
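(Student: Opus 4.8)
The plan is to prove both directions of the equivalence by reducing to the simple one-dimensional fact that a sum of nonnegative terms is zero if and only if each term is zero. First I would pick an orthonormal eigenbasis $v_1, \ldots, v_r$ of $Y$ with strictly positive eigenvalues $\mu_1, \ldots, \mu_r > 0$, so that $Y = \sum_{i=1}^r \mu_i v_i v_i^\top$ and $\spn Y = \spn\{v_1,\ldots,v_r\}$. Then by linearity of the trace,
\begin{align*}
    \tr(XY) = \sum_{i=1}^r \mu_i \, \tr(X v_i v_i^\top) = \sum_{i=1}^r \mu_i \, v_i^\top X v_i.
\end{align*}
Since $X \succeq 0$, each term $v_i^\top X v_i \ge 0$, and since $\mu_i > 0$, this is a sum of nonnegative terms.

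For the forward direction, suppose $\tr(XY) = 0$. Then the displayed sum is zero, so $v_i^\top X v_i = 0$ for every $i$. Because $X$ is PSD, $v_i^\top X v_i = 0$ implies $X v_i = 0$ (e.g. writing $X = B^\top B$, we get $\|B v_i\|^2 = 0$, hence $B v_i = 0$, hence $X v_i = 0$). Thus every $v_i \in \ker X$, so $\spn Y \subseteq \ker X$, which is exactly $\spn X \perp \spn Y$ (the column space of a symmetric matrix is the orthogonal complement of its kernel). For the reverse direction, suppose $\spn X \perp \spn Y$, equivalently $\spn Y \subseteq \ker X$. Then $X v_i = 0$ for each $i$, so each term in the sum vanishes and $\tr(XY) = 0$.

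There is no real obstacle here; the only thing to be slightly careful about is justifying the step ``$X \succeq 0$ and $v^\top X v = 0$ imply $Xv = 0$,'' which I would handle via a square-root or Cholesky-type factorization of $X$ as above, and the identification of $\spn$ of a symmetric matrix with $(\ker)^\perp$. I would also note that the same argument, applied symmetrically, shows the condition is equivalent to $\spn Y \subseteq \ker X$ and to $\spn X \subseteq \ker Y$, matching the usage of this fact in \Cref{sec:derivations:complementarity}.
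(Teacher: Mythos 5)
Your proof is correct. It takes a genuinely different (though equally standard) route from the paper. The paper's argument is purely ``global'': for the nontrivial direction it uses the matrix square root of both factors, noting $\tr(XY) = \|X^{1/2}Y^{1/2}\|_F^2$, deduces $X^{1/2}Y^{1/2} = 0$ and hence $XY = 0$, and then reads off orthogonality of the spans from $XY=0$. You instead spectrally decompose $Y$, reduce $\tr(XY)$ to a nonnegative sum $\sum_i \mu_i\, v_i^\top X v_i$ over the positive-eigenvalue eigenvectors of $Y$, and conclude coordinate by coordinate, using the factorization of $X$ only to get the scalar implication $v^\top X v = 0 \Rightarrow Xv = 0$. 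The paper's version is a bit slicker and symmetric in $X,Y$; yours is more elementary and makes the mechanism (a sum of nonnegative quantities vanishing) explicit, which you correctly flag as the crux. One small point worth noting: both proofs quietly use that $\spn$ of a symmetric PSD matrix is the orthogonal complement of its kernel, and you state this explicitly, which is cleaner. Your concluding remark that the condition is also equivalent to $\spn Y \subseteq \ker X$ and $\spn X \subseteq \ker Y$ matches the paper's \Cref{fact:orthogonal-spans-corollary}.
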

\begin{proof}
    First, if $\spn X \perp \spn Y$, then $\spn Y \subseteq \ker X$.
    Thus, it must hold for every $u$ that $X Y u = 0$, implying that $XY = 0$ and hence $\tr(XY) = 0$.

    For the other direction, assume that $\tr(XY) = 0$. Then:
    \begin{align*}
        0 = \tr(XY) = \tr(Y^{1/2} X^{1/2} X^{1/2} Y^{1/2}) =  \norm{X^{1/2} Y^{1/2}}_F^2.
    \end{align*}
    This norm can only be zero if $X^{1/2} Y^{1/2} = 0$.
    Multiplying on the right by $Y^{1/2}$ and the left by $X^{1/2}$ gives $XY = 0$.  This implies that $\spn X \perp \spn Y$, as for any $x \in \spn X$ and $y \in \spn Y$, we have $x=Xu$ and $y = Yv$ for some $u,v$, and so
    \begin{align*}
        x^\top y = (Xu)^\top (Yv) = u^\top X Y v = 0.
    \end{align*}
\end{proof}
Note that since the span of a symmetric matrix is orthogonal to its kernel, $\spn X \perp \spn Y$ is equivalent to $\spn X \subseteq \ker Y$ and to $\spn Y \subseteq \ker X$.
Thus the following corollary is immediate:
\begin{corollary}
    For PSD matrices $X, Y \succeq 0$, it holds that $\tr(X Y) = 0$ if and only if $\spn X \subseteq \ker Y$.
    \label{fact:orthogonal-spans-corollary}
\end{corollary}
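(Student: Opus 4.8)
The plan is to obtain Corollary \ref{fact:orthogonal-spans-corollary} as an immediate consequence of Fact \ref{fact:orthogonal-spans}, which has already been proved above: for PSD matrices $X, Y \succeq 0$, one has $\tr(XY) = 0$ if and only if $\spn X \perp \spn Y$. Thus the only thing left to establish is the purely linear-algebraic equivalence $\spn X \perp \spn Y \iff \spn X \subseteq \ker Y$, after which the corollary follows by chaining equivalences.

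First I would recall that for any symmetric matrix $Y \in \sym(\R^d)$, the spectral theorem gives the orthogonal decomposition $\R^d = \spn Y \oplus \ker Y$, with the two summands mutually orthogonal (the eigenvectors with nonzero eigenvalue span $\spn Y$, and those with zero eigenvalue span $\ker Y$). In particular $(\spn Y)^\perp = \ker Y$. Consequently, for any subspace $V \subseteq \R^d$, we have $V \perp \spn Y \iff V \subseteq (\spn Y)^\perp = \ker Y$. Applying this with $V = \spn X$ yields $\spn X \perp \spn Y \iff \spn X \subseteq \ker Y$, and combining with Fact \ref{fact:orthogonal-spans} gives $\tr(XY) = 0 \iff \spn X \subseteq \ker Y$, as claimed.

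There is no genuine obstacle here: the sole mathematical content is the identity $(\spn Y)^\perp = \ker Y$ for symmetric $Y$, and everything else is a rephrasing of Fact \ref{fact:orthogonal-spans}. If one preferred a self-contained argument that does not cite the Fact as a black box, one could instead reprove it inline using symmetry: $\tr(XY) = \tr(Y^{1/2} X^{1/2} X^{1/2} Y^{1/2}) = \|X^{1/2} Y^{1/2}\|_F^2$, so $\tr(XY) = 0 \iff X^{1/2} Y^{1/2} = 0 \iff XY = 0 \iff \spn Y \subseteq \ker X \iff \spn X \subseteq \ker Y$, where the last step again uses that for symmetric matrices the span is the orthogonal complement of the kernel. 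Either route is short; I would present the first, since Fact \ref{fact:orthogonal-spans} is already available in the excerpt.
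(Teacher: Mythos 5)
Your proof is correct and matches the paper's own argument: the paper likewise derives the corollary directly from Fact~\ref{fact:orthogonal-spans} by observing that for a symmetric matrix the span is the orthogonal complement of the kernel, so $\spn X \perp \spn Y$ is equivalent to $\spn X \subseteq \ker Y$. The alternative self-contained route you sketch is also fine, but unnecessary given that Fact~\ref{fact:orthogonal-spans} is already available.
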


\newpage
\section{Experimental Details}
\label{appendix:experimental-details}

\subsection{Implementation details}
\label{appendix:experimental-details:implementation}

Our code can be found at: \url{http://github.com/centralflows/centralflows}.

In order to reuse code between the central flows for \gd, \rmsnorm, and \rmsprop, we cast all three optimizers as instances of the generic adaptive preconditioned method that is described in \Cref{sec:arbitrary-preconditioned}.
This template assumes that the weights are updated via a preconditioned gradient step of the form $w_{t+1} = w_t - P(\nu_t)^{-1} \nabla L(w_t)$, where $P(\nu_t)$ is a preconditioner that is derived from some optimizer state $\nu_t$ that is in turn updated based on the gradients. For example, for \gd with learning rate $\eta$, the preconditioner is simply $P = \eta^{-1} I$.  The effective Hessian is defined as $P(\nu_t)^{-1} H(w_t)$, and the EOS condition is that the largest eigenvalue of this matrix (the effective sharpness $\Seff$) is 2.  See \Cref{sec:arbitrary-preconditioned} for more information.

\paragraph{Eigenvalue computation} To regularly recompute the top eigenvalues and eigenvectors of the effective Hessian, we use the LOBPCG algorithm \citep{knyazev2001toward}, which only requires access to the Hessian via Hessian-vector products, and which allows us to warm-start using the previously computed eigenvectors.  We were originally inspired by the LOBPCG implementation in Jax's \texttt{jax.experimental.sparse.linalg} \citep{jax2018github}.

\paragraph{How many eigenvalues to track?}  For all processes (i.e. discrete optimizers, central flows, stable flows), we track all eigenvalues of the effective Hessian that are above the threshold $1.5$.  We then track the same number of eigenvalues of the Hessian.  Note that for \gd and \rmsnorm the Hessian eigenvalues are trivially related to the effective Hessian eigenvalues, whereas for \rmsprop we need to do an extra eigenvalue solve to obtain the Hessian eigenvalues.

\paragraph{Discretizing the stable flow} To discretize the stable flows (e.g. gradient flow), we use Euler's method.
To discretize for one unit of time, we pick some integer \texttt{nsubsteps}, we set $\epsilon = 1/$\texttt{nsubsteps}, and we repeat $w \leftarrow w+ \epsilon \, \frac{dw}{dt}$ for \texttt{nsubsteps} times.
We dynamically adapt \texttt{nsubsteps} based on the current effective sharpness $\Seff$.
The basic criterion of update stability requires that $\epsilon < 2 / \Seff$.
To be on the safe side, and to guard against any implicit discretization effects, we enforce the stronger condition that $\epsilon < 0.5 / \Seff$, or equivalently that \texttt{nsubsteps} $\ge \lceil 2 \Seff \rceil$.
We also enforce a floor of \texttt{nsubsteps} $\ge 4$.
Thus, we set $\texttt{nsubsteps} = \max(4, \lceil 2 \Seff \rceil)$.

Since discretizing the stable flow would take a prohibitively long time in regions of weight space where the effective sharpness is too high, we automatically terminate the stable flow if the effective sharpness exceeds a certain threshold (we used \texttt{100}).

\paragraph{Discretizing the central flow} To discretize the central flows, we use the scheme described in \Cref{appendix:derivations:arbitrary-preconditioned:discretizing}.  This is in turn an instance of the general scheme described in \Cref{sec:derivations:dcp:discretize} for discretizing differential complementarity problems.
At each discretization time step, we do the following:
\begin{enumerate}
    \item \label{step:compute-top-eigenvectors} We compute the top eigenvalues and eigenvectors of the effective Hessian $P(\nu)^{-1} H(w)$.  
    In particular, we compute all eigenvalues greater than $2 - \tau$ for some small tolerance $\tau > 0$ (we use \texttt{0.05}), as well as the corresponding eigenvectors.  Let $k$ be the number of such eigenvalues.
    To compute eigenvalues of the non-symmetric matrix $P(\nu)^{-1} H(w)$, we first use warm-started LOBPCG to compute the eigenvalues $D \in \diag(\R^{k})$ and orthonormal eigenvectors $\tilde U \in \R^{d \times k}$ of the symmetric matrix $P(\nu)^{-1/2} H(w) P(\nu)^{-1/2}$.  The eigenvalues of $P(\nu)^{-1} H(w)$ are then $D$, and the eigenvectors are $U = P(\nu)^{-1/2} \tilde U \in \R^{d \times k}$.  Note that these eigenvectors are orthonormal w.r.t the preconditioner $P(\nu)$, i.e. $U^\top P(\nu) U = I$.
    \item \label{step:compute-nabla-H} We compute the third-derivative tensor $\nabla_U H(w) \in \sym(\R^k) \otimes \R^d$ defined in \cref{eq:arbitrary:nabla-H-U} by looping over all pairs $(u_i, u_j)$ of columns of $U\in \R^{d \times k}$ and computing the third derivative $\nabla_w [u_i^\top H(w) u_j] \in \R^d$, which can be done using automatic differentiation.  Note that due to the symmetry, it is only necessary to compute and store the $k+1 \choose 2$ ``upper triangular'' entries of this tensor, rather than the full $k^2$.
    \item We compute the tensor $\nabla P_U(\nu) \in \sym(\R^k) \otimes \R^{d_\nu}$ defined in \cref{eq:arbitrary:nabla-H-U}, which measures the gradient of the preconditioner w.r.t the optimizer state $\nu \in \R^{d_\nu}$.  Our implementation uses automatic differentiation to do this, though one could also simply hard-code the derivatives for the various optimizers of interest.  We similarly compute the tensor $\nabla^2 G_{PU}(\nu, g(w)) \in \R^{d_{\nu}} \otimes \sym(\R^k)$ defined in \cref{eq:arbitrary:nabla_GPU}, where $g(w) = \nabla L(w)$ and $d_\nu$ is the dimension of the optimizer state $\nu$.  This tensor measures the Hessian of the optimizer state w.r.t the gradient of the weights.  For these tensors, we also only need to compute and store the $k+1 \choose 2$ upper triangular entries.
    \item Using $\nabla H_U(w)$, $\nabla P_U(\nu)$, and $\nabla^2 G_{PU}(\nu, g(w))$, we compute the tensors $\alpha_U(w, \nu) \in \sym(\R^k)$ and $\beta_U(w, \nu) \in \sym(\R^k)^{\otimes 2}$ defined in \cref{eq:arbitrary:alpha-basis,eq:arbitrary:beta-basis}.  Here we simply materialize the full tensors, as $k$ is small.
    \item We solve the semidefinite complementarity problem:
    \begin{align*}
        X = \sdcp_{\R^k}(2 I - \Lambda + \epsilon \, \alpha_U, \epsilon\, \beta_U).
    \end{align*}
    We do so by formulating the SDCP as a semidefinite-constrained quadratic program \cref{eq:sdcp-basis-quadratic-program}, as described in \Cref{sec:derivations:sdcp}, and solving this using the convex programming library \texttt{cvxpy}.
    \item We take an Euler step of size $\epsilon$ on the weights $w$ and optimizer state $\nu$, as given in \cref{eq:arbitrary:update-w,eq:arbitrary:update-nu}.
\end{enumerate}

To discretize the flow for one unit of time, we pick some integer \texttt{nsubsteps}, set the discretization step size as $\epsilon = 1 / $\texttt{nsubsteps}, and repeat the above process for \texttt{nsubsteps} times. 
Note that because the central flows keep the effective sharpness controlled at 2, it is not necessary to dynamically adapt \texttt{nsubsteps} throughout training, as \texttt{nsubsteps} $\ge 2$ always sufffices to ensure stability.  We therefore used the fixed values \texttt{nsubsteps} $=4$ and $\epsilon = 0.25$.

The computational cost of each discretization step is dominated by the cost of computing the top eigenvalues and eigenvectors in step \ref{step:compute-top-eigenvectors} above, as well as the cost of computing the third derivatives in step \ref{step:compute-nabla-H} above.  The computational cost of the rest of the steps (including solving the SDCP) is negligible.  The time complexity of each discretization step scales quadratically with the number of eigenvalues that are at the edge of stability, $k$, as ${k+1 \choose 2} = \Theta(k^2)$ third derivatives need to be computed.

\paragraph{Verifying the central flow}
To assess whether the central flow accurately models the discrete optimizer, we run both processes simultaneously.  That is, we repeatedly both (a) take a step on the discrete optimizer; and (b) discretize the central flow for one unit of time.  Let $w_t$ denote the discrete optimizer's iterate at step $t$, and let $w^{(t)}$ denote our estimate for the central flow solution at time $t$.

To verify that the central flow approximates the long-term weight-space trajectory of the discrete optimizer, we record $\| w^{(t)} - w_t \|$, the Euclidean distance between the discrete optimizer and the central flow.

We use \cref{appendix:derivations:adaptive:time-average-basis-loss,appendix:derivations:adaptive:time-average-basis-sq-gradient-norm} to compute the central flow's predictions for the time-average of the train loss and squared gradient norm.
For any quantity $f$ (e.g. loss or squared gradient norm), let $\bar{f}(w^{(t)})$ denote the central flow's prediction for the time-average of $f$ at step/time $t$.
To assess the accuracy of this prediction, we compare $\{\bar{f}(w^{(t)})\}$ against a Gaussian smoothing of the empirical time series $\{f(w_t)\}$.  That is, for each $t$, we compare:
\begin{align*}
    \bar{f}(w^{(t)}) \quad \text{vs.} \quad \qty(\frac{1}{\sum_j c_j}) \sum_{j}^{} c_{j} f(w_{t+j}) \quad \text{where} \quad c_j = \exp\qty(-\frac{j^2}{2 \sigma^2}),
\end{align*}
where $\sigma^2$ is the bandwidth of the Gaussian kernel.  We describe below momentarily how we determine $\sigma^2$.

\jnote{get the details right about the wraparound}

\jnote{say that for GD (and perhaps for scalar rmsprop too), we do some transformation so as to return the eigenvalues of sigma rather than preconditioned sigma }

When predicting the covariance of the oscillations, we compare both sides of \cref{eq:abstract-predict-oscillation-variance}.  Let $(\lambda^{(t)}_i, v_i^{(t)})$ denote the $i$-th eigenvalue and eigenvector of the matrix $P^{1/2}(\nu^{(t)}) \, \Sigma^{(t)} \, P^{1/2}(\nu^{(t)})$, and define $x_i^{(t)} := \ev{v_i^{(t)}, P^{1/2}(\nu^{(t)}) (w(t) - w_t) }^2$.
Then for each eigenvalue index $i$, we compare the predicted time series $\{ \lambda_i^{(t)} \}$ against a Gaussian smoothing of the empirical time series $\{x_i^{(t)}\}$.  For \gd and \rmsnorm, where $P$ is a scalar, we post-hoc rescaled both quantities by $P$ so as to report the oscillations in terms of $\Sigma$ rather than $P^{1/2} \Sigma P^{1/2}$.

For some plots (e.g. those in the main paper), we picked the Gaussian kernel's bandwidth by visual inspection (we emphasize that it is not possible to turn a bad prediction into a good prediction by adjusting the bandwidth).  
In fact, for some figures (e.g. \Cref{fig:rmsprop-norm,fig:rmsprop}), we found it best to re-tune the bandwidth \emph{within} an experiment, whenever the number of unstable eigenvalues underwent a change.
On the other hand, for the plots on the ``bulk'' experiments section (\Cref{sec:bulk-experiments}), we picked a single bandwidth somewhat arbitrarily and used this for all experiments.

\paragraph{Warm-start} In our experiments, we first run the discrete optimizer for 10-15 steps and then use this as an initialization for the discrete optimizer, central flow, and stable flow.  The first reason why we do this is that the effective sharpness is sometimes much larger than $2$ at the original initialization (particularly for the adaptive optimizers), yet comes down below 2 within the very first few steps of training.   (The central flow is not currently defined when the effective sharpness is greater than 2.)  Another reason is that even when the effective sharpness is less than 2 at initialization, we observed that the quality of the central flow approximation is sometimes enhanced by this warm start.
This is potentially due to the size of the gradients during the first few steps, and the central flow could possibly be improved during this phase by incorporating the implicit gradient norm penalty from \cite{barrett2021implicit}.  However, we think it is likely that this source of deviation between the discrete optimizer and the stable / central flows is negligible in the long run (see \Cref{sec:igr}), and thus we hypothesize that the warm-starting could be removed in cases where the effective sharpness is less than 2 at initialization.

\paragraph{Second-order midpoints}
When reporting metrics from the discrete optimizers (\gd, \rmsnorm, and \rmsprop, as opposed to their central flows), we usually report the top Hessian eigenvalues measured not at the optimizer iterates $\{w_t\}$ themselves, but at the (second order) midpoints  $\{\hat{w}_t\}$, defined as $\hat{w}_t := \frac{1}{4} \left[2 w_t - w_{t-1} -w_{t+1} \right]$ (so named because it is the midpoint of the midpoints $\tfrac{1}{2}[w_{t} - w_{t-1}]$ and $\tfrac{1}{2}[w_{t+1}-w_t]$).  This empirically makes the Hessian trajectories slightly crisper (less ``noisy''), while not altering the fundamental patterns.

\subsection{Architecture details}
\label{appendix:experimental-details:architectures}

Here we describe our architectures.  Note that our code for all architectures can be found at:\\
\url{http://github.com/centralflows/centralflows}. 

\begin{tcolorbox}[enhanced jigsaw,colback=blue!10,left=2pt,right=2pt,top=2pt,bottom=2pt]
Both our derivations and the analytic formulas for the central flows rely on higher-order information about the loss function (e.g. Hessians and third derivatives). As a result, we require that all of the architectures are smooth. This rules out the commonly used ReLU activation \citep{nair2010rectified}.
\end{tcolorbox}

\paragraph{CNN} Our CNN has four layers, an initial channel width of 32, and 3x3 convolutional kernels.  It uses the GeLU activation function, average pooling, and a linear readout layer.

\paragraph{ResNet} 
We use a ResNet \citep{he2015deep} with 20 layers and GeLU activations. We use GroupNorm \citep{wu2018group} in place of BatchNorm \citep{ioffe2015batch}, as we empirically find that BatchNorm often leads to sub/super-quadraticity (see the discussion in \Cref{sec:higher-order}).

\paragraph{Vision Transformer} We use the Vision Transformer (ViT) \citep{dosovitskiy2021image} implementation from \citet{lucidrains2024vitpytorch}. Our ViT has depth 3, embedding dimension 64, number of heads 8, MLP dimension 256, and patch size 4.  We initialize the weights and biases of the final linear layer to 0.5 times the default, as this makes the curvature lower at initialization, which allows us to run \gd experiments at a broader range of learning rates.  For unknown reasons, we found that the core PyTorch LayerNorm implementation (written in C++) leads to third derivatives being computed incorrectly; thus, we substituted in an alternative implementation written in vanilla PyTorch, which empirically fixed the issue.

\paragraph{LSTM}  Our LSTM \citep{hochreiter1997long} has 2 layers, an embedding dimension of 48, and a hidden dimension of 48.

\paragraph{(Sequence) Transformer} Our sequence transformer has 4 layers, an embedding dimension of 32, an MLP dimension of 128, and 4 attention heads.  We disabled dropout, to make the network deterministic.  As with the ViT (see above), we initialize the weights and biases of the final linear layer to be zero, and we substitute a vanilla PyTorch LayerNorm implementation in place of the default C++ LayerNorm implementation.

\paragraph{Mamba} We use the Mamba \citep{gu2024mamba} implementation from  \citet{mamba_py}. Our Mamba has 2 layers and a model dimension of 64.  Unfortunately, the efficient Mamba kernel based on parallel scan did not work with PyTorch higher-order autotiff, so we needed to use the naive implementation of Mamba, which is slow.

\subsection{Dataset details}
\label{appendix:experimental-details:datasets}
Here we describe our datasets.  The code can be found at:\\
\url{http://github.com/centralflows/centralflows}.

\paragraph{CIFAR-10}  We test the vision architectures on a subset of CIFAR-10 that contains 1000 training examples, all from the first 4 CIFAR-10 classes.  We use the standard preprocessing of subtracting the dataset-wide channel-wise mean, and dividing by the dataset-wide channel-wise standard deviation.  When training using MSE loss, we encode the ground truth class as 1 and the others as 0.

\paragraph{Sorting} We test the sequence architectures on the synthetic sorting task described in \citet{karpathy2020mingpt}.  The network is fed a sequence of numbers and is then tasked (via a language modeling loss) with returning these numbers in sorted order. We used numbers 1 through 4, and sequences of length 8.  The size of the training datraset was usually 1,000 (except for Mamba, where it was 250).

\clearpage
\section{Miscellaneous}
\subsection{Implicit gradient regularization}
\label{sec:igr}
Recall from \Cref{sec:gd:deriving} that when \gd is stable, we approximate its trajectory by the gradient flow:
\begin{align}
     \frac{dw}{dt} &= - \eta \, \nabla L(w).  \label{eq:igr:gflow}
\end{align}
In particular, the central flow automatically reduces to \cref{eq:igr:gflow} whenever sharpness $S(w) < 2/\eta$.

On the other hand, \citet{barrett2021implicit} argued that \gd with step size $\eta$ should instead be approximated by a \emph{modified} gradient flow with a penalty on the squared gradient norm:
\begin{align}
    \frac{dw}{dt} &= - \eta \, \nabla \left[ L(w) + \frac{\eta}4{ \|\nabla L(w)\|^2} \right ], \label{eq:igr} \\
    &= - \eta \left[ \nabla L(w) + \frac{\eta}{2}  H(w) \nabla L(w) \right]. \label{eq:igr:igr-flow}
\end{align}
Subsequently, \citet{rosca2023continuous} showed how to improve the approximation by incorporating higher-order penalties, and \citet{cattaneo2023implicit} extended the approach to adaptive optimizers.

Empirically, we find that in the stable regime, the modified gradient flow \cref{eq:igr} is indeed a better approximation to \gd than the vanilla gradient flow \cref{eq:igr:gflow}.  This observation is illustrated in \Cref{fig:igr:igr-cnn-zoom}.  However, all things considered, we observe that in the stable regime, the vanilla gradient flow is \emph{already} a good enough approximation to \gd (\Cref{fig:igr:igr-cnn-zoom}). While \gd does differ from \gflow in deep learning, the vast majority of this difference appears to be due to the curvature-reduction effect of oscillations in the edge of stability regime, not to discretization error that manifests even in the stable regime.  This point is illustrated in \Cref{fig:igr:igr-cnn}.  Thus, in the interest of simplicity, we left out any implicit gradient regularizer from our central flows.

\begin{figure}[b!]
    \centering
    \includegraphics[width=0.8\linewidth]{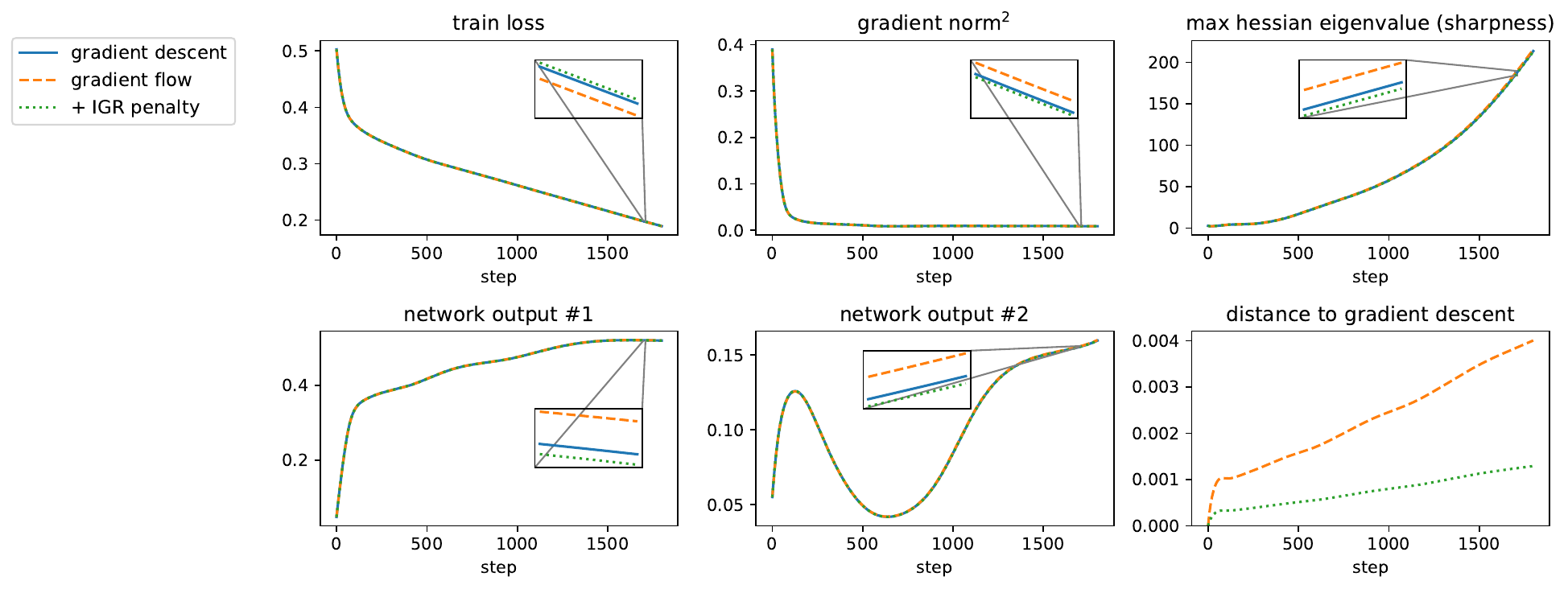}
    \caption{\textbf{In the stable regime, the IGR penalty marginally improves the accuracy of the gradient flow approximation, but this accuracy is already good.}  We train a network using gradient descent (blue), vanilla gradient flow \cref{eq:igr:gflow} (orange, dashed) and gradient flow with the IGR penalty \cref{eq:igr:igr-flow} (green, dotted), all with $\eta = 0.01$.  This figure shows the initial phase of training, when gradient descent is in the stable regime (i.e. sharpness is below $2/\eta)$.  Consistent with \citet{barrett2021implicit}, notice that in the stable regime, gradient flow + IGR is a visibly better approximation to \gd than vanilla gradient flow.  In particular, observe that the distance to the \gd trajectory is smaller for gradient flow + IGR than for vanilla gradient flow (bottom right).  Similarly, note that the network outputs on two examples, the train loss, and the squared gradient norm agree better under gradient flow + IGR than under vanilla gradient flow.  That said, notice that even the vanilla gradient flow is a good approximation to gradient descent in this regime.  Please refer to \Cref{fig:igr:igr-cnn} for the continuation of this experiment into the EOS regime.  \textit{Details}: a CNN is trained on a subset of CIFAR-10 using MSE loss.}
    \label{fig:igr:igr-cnn-zoom}
\end{figure}

\begin{figure}[t!]
    \centering
    \includegraphics[width=\linewidth]{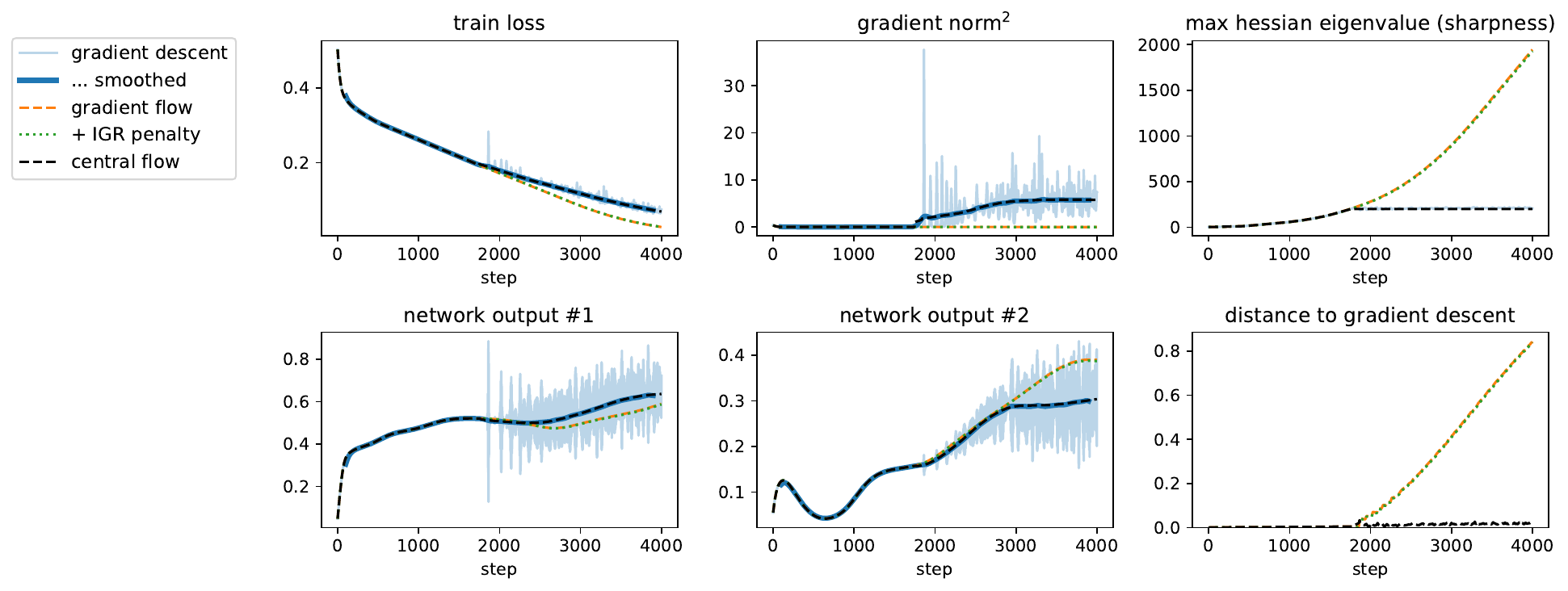}
    \caption{\textbf{In the EOS regime, the central flow accurately approximates the trajectory of gradient descent, whereas neither the original nor the IGR-penalized gradient flow does so.}  We continue the experiment from \Cref{fig:igr:igr-cnn-zoom} for more iterations, into the EOS regime.  Observe that neither the original gradient flow (orange, dashed) nor the IGR-penalized gradient flow (green, dotted) reasonably approximates the trajectory of gradient descent (blue) in this regime, whereas the central flow (black, dashed) does so.  In particular, notice that the distance from gradient descent to the central flow stays small, whereas the distance to both the original and IGR-penalized gradient flows grows large over time (bottom right).  Further, the central flow accurately predicts the time-averaged network outputs on two examples, as well as the train loss and squared gradient norm.}
    \label{fig:igr:igr-cnn}
\end{figure}

\paragraph{A subtle confounder}
Some works (e.g. \citet{geiping2022stochastic}) have observed that adding an \emph{explicit} squared gradient norm penalty can help full-batch training recover the superior generalization performance of minibatch training.  This would seem to support the argument of \citet{barrett2021implicit} that the implicit regularization of discrete gradient descent can be captured by a flow with a squared gradient norm penalty.  Yet, we believe that these results could be instead due to a subtle confounder: adding a squared gradient norm penalty changes the oscillatory EOS dynamics, and in particular, enhances the implicit curvature regularization.  

Consider running gradient descent with step size $\eta$, while adding an implicit gradient regularizer corresponding to some step size $\tau$.  The update rule is:
\begin{align}
    w_{t+1} &= w_t - \eta \left[\nabla L(w_t) + \tfrac{\tau}{2} H(w_t) \,\nabla L(w_t) \right]. \label{eq:igr:igr-gd}
\end{align}
On the one-dimensional quadratic function $L(w) = \tfrac{1}{2} S w^2$, the iterates would evolve according to:
\begin{align}
    w_{t+1} &= w_t - \eta \left[ S w_t + \tfrac{\tau}{2} S^2 w_t \right] \nonumber \\
    &= [1 - \eta S - \tfrac{1}{2} \eta\tau S^2] w_t.
\end{align}
Whereas vanilla \gd is unstable if $S > 2/\eta$, this iteration is unstable if $\eta S + \tfrac{1}{2} \eta \tau S^2 > 2 \iff S > \frac{\sqrt{1 + \tfrac{4 \tau}{\eta}} - 1}{\tau}$, which is $< \tfrac{2}{\eta}$. That is, it becomes unstable at \emph{lower} values of the sharpness $S$.  Accordingly, in line with the general EOS pattern, we find that on neural network objectives, while \gd implicitly constrains the sharpness to $2/\eta$, the update rule \cref{eq:igr:igr-gd} implicitly constrains the sharpness to this strictly smaller value (\Cref{fig:igr:confound}).  In other words, adding an explicit gradient norm penalty also results in stronger \emph{implicit} curvature regularization.  This acts as a subtle experimental confounder, which could explain the reports in the literature that explicitly penalizing the gradient norm substantially boosts generalization performance. 

Thus, we hypothesize that if the the IGR-penalized gradient flow \cref{eq:igr} were properly discretized, it would not yield improved generalization (as it would not substantially affect the trajectory).  Yet, if an IGR-penalized objective is optimized using a standard optimization algorithm, this could induce stronger implicit curvature regularization which substantially affects the trajectory and the generalization performance.

\paragraph{Momentum} Finally, we note that it is plausible that the IGR effect is negligible for vanilla \gd but relevant when momentum is used, as momentum amplifies the strength of the IGR effect \citep{ghosh2023implicit}.

\begin{figure}[t!]
    \centering
    \includegraphics[width=\linewidth]{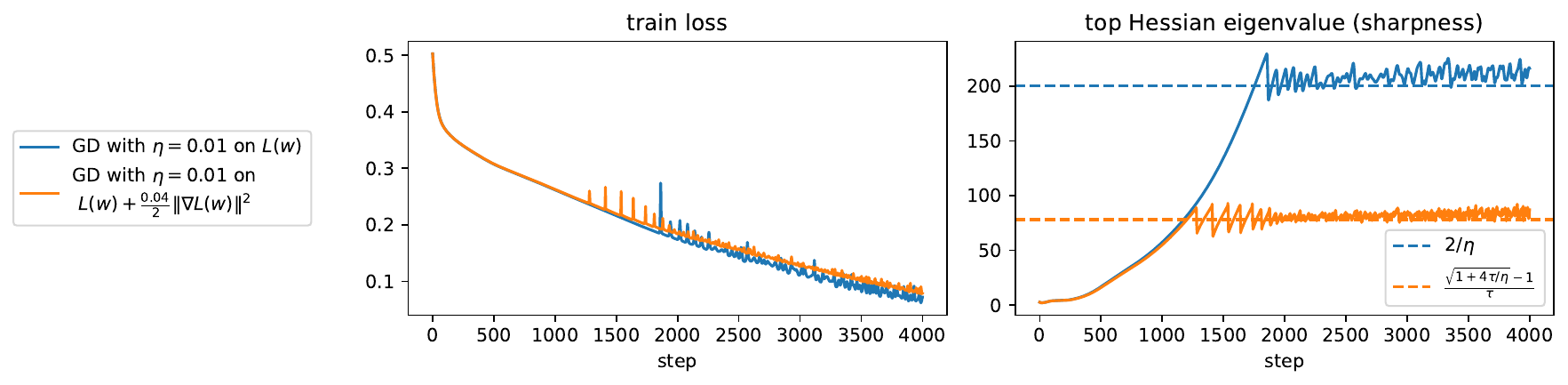}
    \caption{\textbf{A dangerous confounder: explicit \emph{gradient} regularization induces stronger implicit \emph{curvature} regularization.}  We train a CNN on a subset of CIFAR-10 using MSE loss.  In blue, we run gradient descent with step size $\eta = 0.01$ on the original objective $L(w)$.  In orange, we run gradient descent with step size $\eta=0.01$ on the implicitly regularized objective $L(w)  + \frac{\tau}{4} \|L(w) \|^2 $, with $\tau = 0.04$.  This mimics an attempt to capture the $\eta=0.04$ dynamics with an explicit gradient regularizer.  Observe that the explicit \emph{gradient} regularizer implicitly affects the \emph{curvature} dynamics, causing the sharpness to saturate at $\frac{\sqrt{1 + 4\tau/\eta}-1}{\tau} \approx 78.08$ instead of at $2/\eta= 200$. We believe that similar effects may be responsible for reports in the literature (e.g. \citet{geiping2022stochastic}) that explicit gradient norm regularization recovers the beneficial effects of large learning rates and small batch sizes.}
    \label{fig:igr:confound}
\end{figure}

\subsubsection{Implementation details}

To discretize the IGR-penalized gradient flow \cref{eq:igr:igr-flow}, we used a forward Euler scheme:
\begin{align}
    w^{(t+dt)} = w^{(t)} - \eta \, \epsilon \, \left[ \nabla L(w^{(t)}) + \tfrac{\eta}{2} H(w^{(t)}) \nabla L(w^{(t)}) \right]. \label{eq:igr:igr-euler}
\end{align}
We dynamically adapt the discretization step size $\epsilon$ based on the current sharpness (which we are already measuring).  On a quadratic function $L(w) = \tfrac{1}{2} Sw^2$ with sharpness $w$, the Euler method \cref{eq:igr:igr-euler} is convergent so long as:
\begin{align*}
    \epsilon \le \frac{2}{\eta S + \tfrac{1}{2} \eta^2 S^2}.
\end{align*}
To be on the safe side, and to try to avoid any implicit effects, we use a discretization step size of one-quarter that threshold. In particular, at every integer time $t$, we compute the sharpness $S(w)$, and set:
\begin{align*}
    m =  \lceil 2 \eta S + \eta^2 S^2 \rceil \quad \text{and} \quad \epsilon = 1/m,
\end{align*}
and we take $m$ Euler steps \cref{eq:igr:igr-euler} with discretization step size $\epsilon$.

\newpage
\subsection{Failure mode: higher-order terms}
\label{sec:higher-order}

Our theory models the objective using a local cubic Taylor approximation.
Sometimes, however, a cubic Taylor expansion is inadequate to capture the dynamics within the critical subspace, and this gives rise to a failure mode for the central flow, which was previously discussed in \citet[Appendix F]{damian2023selfstabilization}.

This failure mode is illustrated in \Cref{fig:higher-order:bad-sharpness}, which depicts a stretch of \gd where one eigenvalue is at the edge of stability and where the cyclic EOS dynamics have collapsed to a period-2 fixed point. (This makes for a simpler setting than the full cyclic dynamics, which helps us better illustrate the issue.)  Observe that the sharpness measured at the (second-order) midpoints between the \gd iterates is noticeably \emph{lower} than $2/\eta$, whereas the sharpness along the central flow is strictly \emph{equal} to $2/\eta$.
Further, observe that the actual squared displacement between \gd and the central flow is noticeably different from the central flow's prediction for this value, $\sigma^2(t)$.  This means that the central flow is applying the wrong strength of implicit sharpness regularization, which will cause error to accumulate over the long run.

These issues arise because the loss function along the top Hessian eigenvector is not well-modeled by its cubic Taylor expansion.
In \Cref{fig:higher-order:bad-curvature}, at various points during this stretch of training, we consider the line segment in between two successive iterates $\{\alpha w_{t} + (1 - \alpha) w_{t+1}: 0 \le \alpha \le 1 \}$, and we plot the curvature quantity $u^\top H(w) u$ along this line, where $u$ is the top Hessian eigenvector measured at the midpoint between the two iterates $\overline{w}$.
We also plot the first-order Taylor approximation of this curvature quantity, $ S(\overline{w}) + \nabla S(\overline{w})^\top (w - \overline{w})$, which arises from the local cubic Taylor approximation, as well as the second-order Taylor approximation of this curvature quantity, which arises from the local quartic Taylor approximation.
Observe that the curvature along this line segment is not well-modeled by its first-order Taylor approximation.
This is an indicator that the cubic Taylor approximation which we employ in our analysis is failing to hold within the local region that is being traversed via the oscillations.

\begin{figure}[b]
    \centering
    \includegraphics[width=0.6\linewidth]{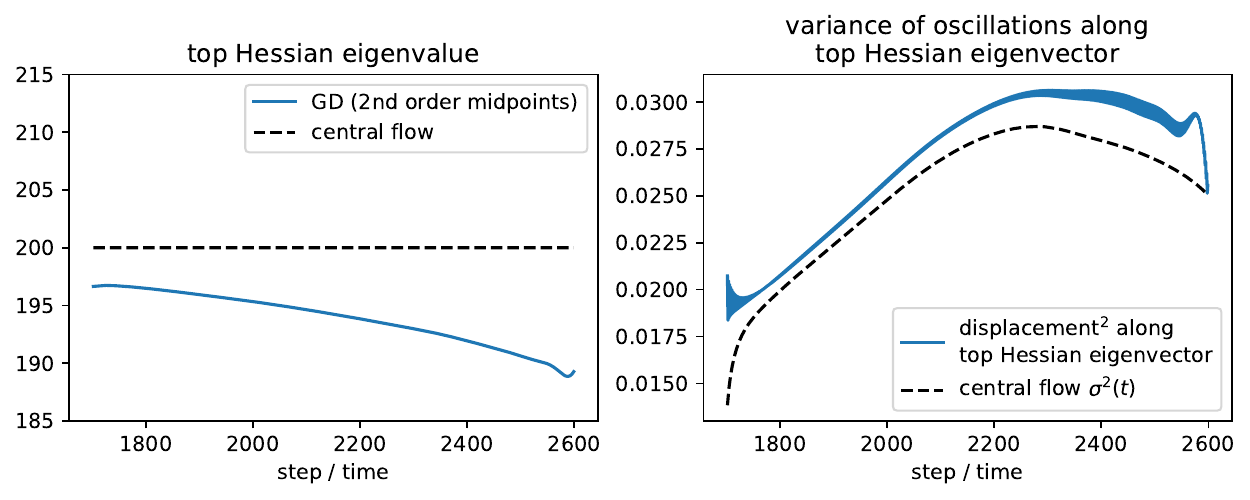}
    \caption{\textbf{Illustrating this failure mode for the central flow approximation.} This figure shows a segment of training which suffers from the failure mode discussed here.  Observe that the sharpness along the \gd trajectory (measured at the second-order midpoints) is different from the sharpness of the central flow, which is locked at $2/\eta$.  Further, the central flow poorly predicts the variance of the oscillations along the top Hessian eigenvector.  \textit{Details}: a CNN is trained on a two-class subset of CIFAR-10 with logistic loss.}
    \label{fig:higher-order:bad-sharpness}
\end{figure}

By contrast, \Cref{fig:higher-order:good-sharpness,fig:higher-order:good-curvature} depicts a different deep learning problem where the central flow approximation is more accurate, and where the local curvature \emph{is} well-described by the cubic Taylor expansion.

Please see \citet[Appendix F]{damian2023selfstabilization} for an extended discussion of this issue, in the special case of one unstable eigenvalue.
In this setting, the loss function can either be \emph{super}-quadratic along the top Hessian eigenvector, in which case the real curvature lies \emph{above} than its first-order Taylor approximation, and the curvature at the midpoint is \emph{less} than $2/\eta$; or it can be \emph{sub}-quadratic, in which case the real curvature lies \emph{below} its first-order Taylor approximation, and the curvature at the midpoint is \emph{greater} than $2/\eta$.
When multiple eigenvalues are unstable, we expect that the loss function could conceivably be subquadratic along some directions and superquadratic along others.

In the special case of one unstable eigenvalue, \citet{damian2023selfstabilization} derived a correction to their constrained trajectory (analogous to our central flow) which they empirically showed to match the real gradient descent trajectory even in the sub/super-quadratic setting.  Interestingly, with this correction, the implicit regularizer still takes the form $\sigma^2 \nabla S(w)$ for some $\sigma^2$; however, $\sigma^2$ cannot be determined solely from the local cubic Taylor approximation, and instead requires knowledge of the exact loss function along the top Hessian eigenvector direction.  It would be interesting to re-derive this correction under the central flow framework, and to extend it to the setting of multiple unstable eigenvalues.

\begin{figure}[h!]
    \centering
    \includegraphics[width=\linewidth]{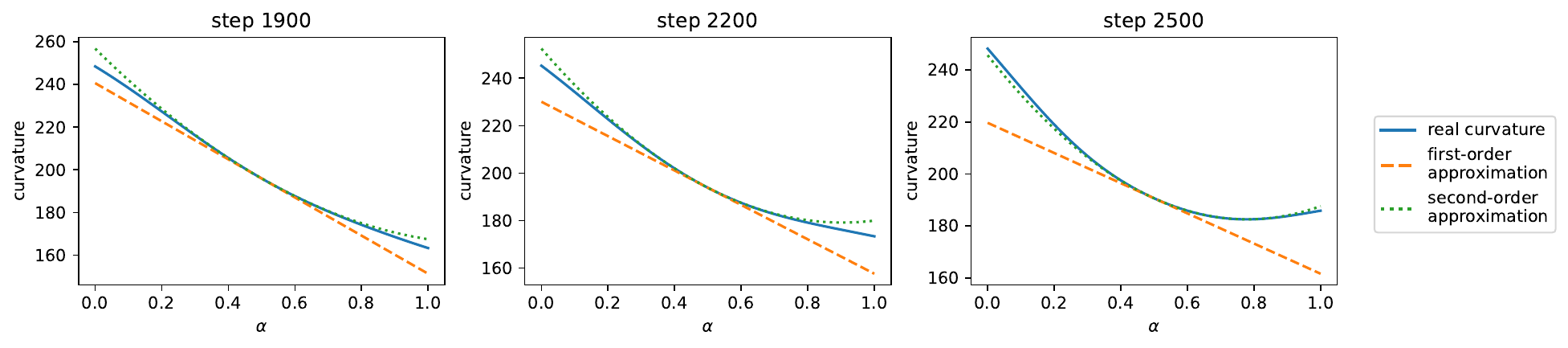}
    \caption{\textbf{When the central flow fails, the local cubic structure poorly predicts local curvature.} During the stretch of training depicted in \Cref{fig:higher-order:bad-sharpness}, at three different steps, we plot the curvature metric $u^\top H(w) u$ measured along the line segment between the current iterate and the next one: $\alpha w_{t} + (1-\alpha) w_{t+1}, \; 0 \le \alpha \le 1$, where $u$ denotes the top Hessian eigenvector measured at the midpoint between the two iterates.  Observe that this curvature metric (blue) is poorly predicted by its linear approximation around the midpoint (orange), which is based on the local cubic structure of the loss.}
    \label{fig:higher-order:bad-curvature}
\end{figure}

\begin{figure}[h!]
    \centering
    \includegraphics[width=0.6\linewidth]{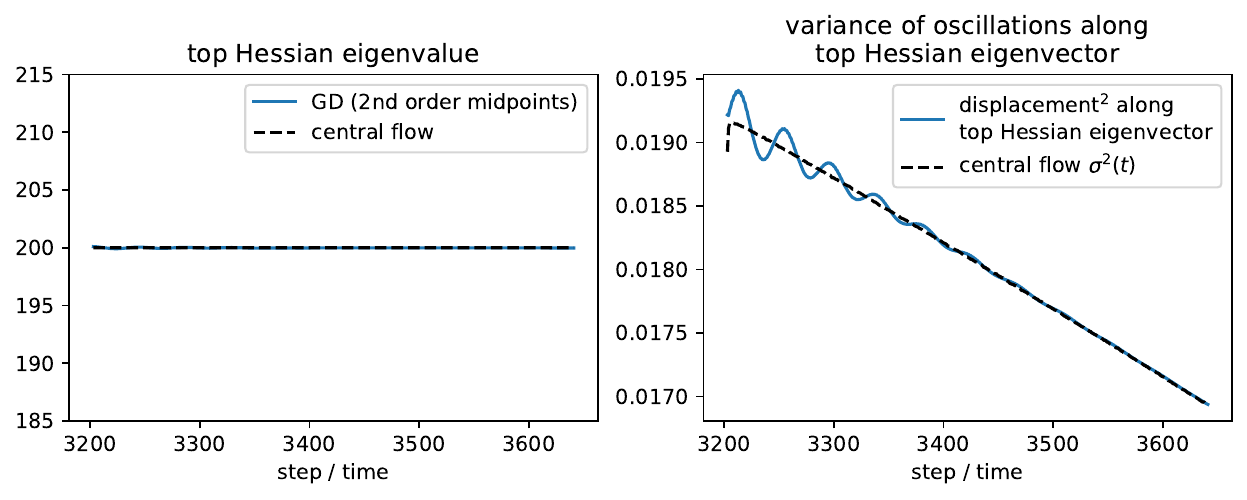}
    \caption{\textbf{A success case for the central flow.} This figure shows a segment of training which does \emph{not} suffer from this failure mode.  Observe that the sharpness along the \gd trajectory (measured at the second-order midpoints) is quite close to the sharpness along the central flow, which is locked at $2/\eta$.  Further, observe that the central flow accurately predicts the variance of the oscillations along the top Hessian eigenvector. \textit{Details}: a CNN is trained on a two-class subset of CIFAR-10 with MSE loss.}
    \label{fig:higher-order:good-sharpness}
\end{figure}

\begin{figure}[h!]
    \centering
    \includegraphics[width=0.9\linewidth]{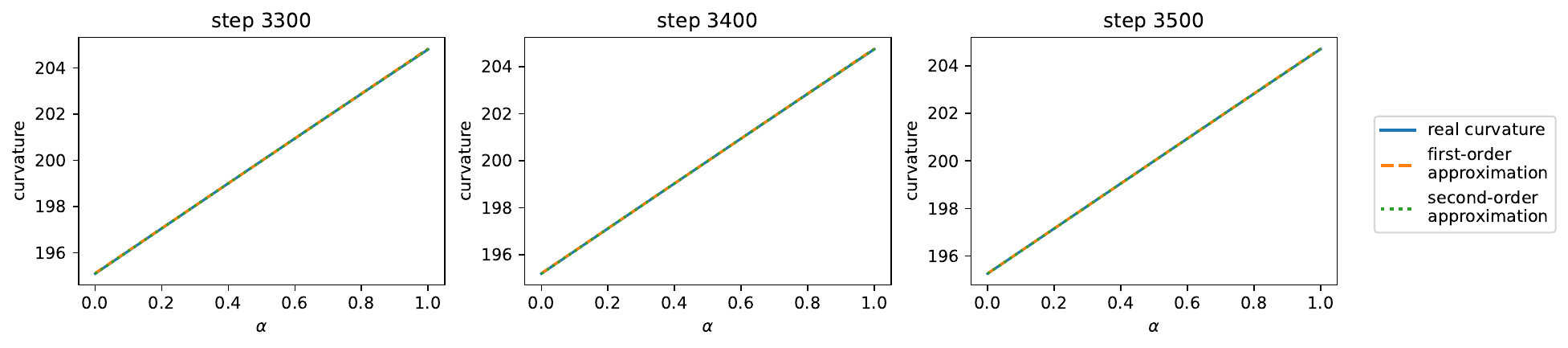}
    \caption{\textbf{When the central flow succeeds, the local cubic structure accurately predicts local curvature.} This figure shows the same curvature metric as \Cref{fig:higher-order:bad-curvature}, but in the ``success case'' setting of \Cref{fig:higher-order:good-sharpness}.  Observe that here, the local curvature is well-predicted by its local linearization.}
    \label{fig:higher-order:good-curvature}
\end{figure}

\clearpage
\section{Supplementary Figures}
\label{sec:supplementary-figures}

\begin{letterfigures}
\begin{figure}[H]
    \centering
    \includegraphics[width=\linewidth]{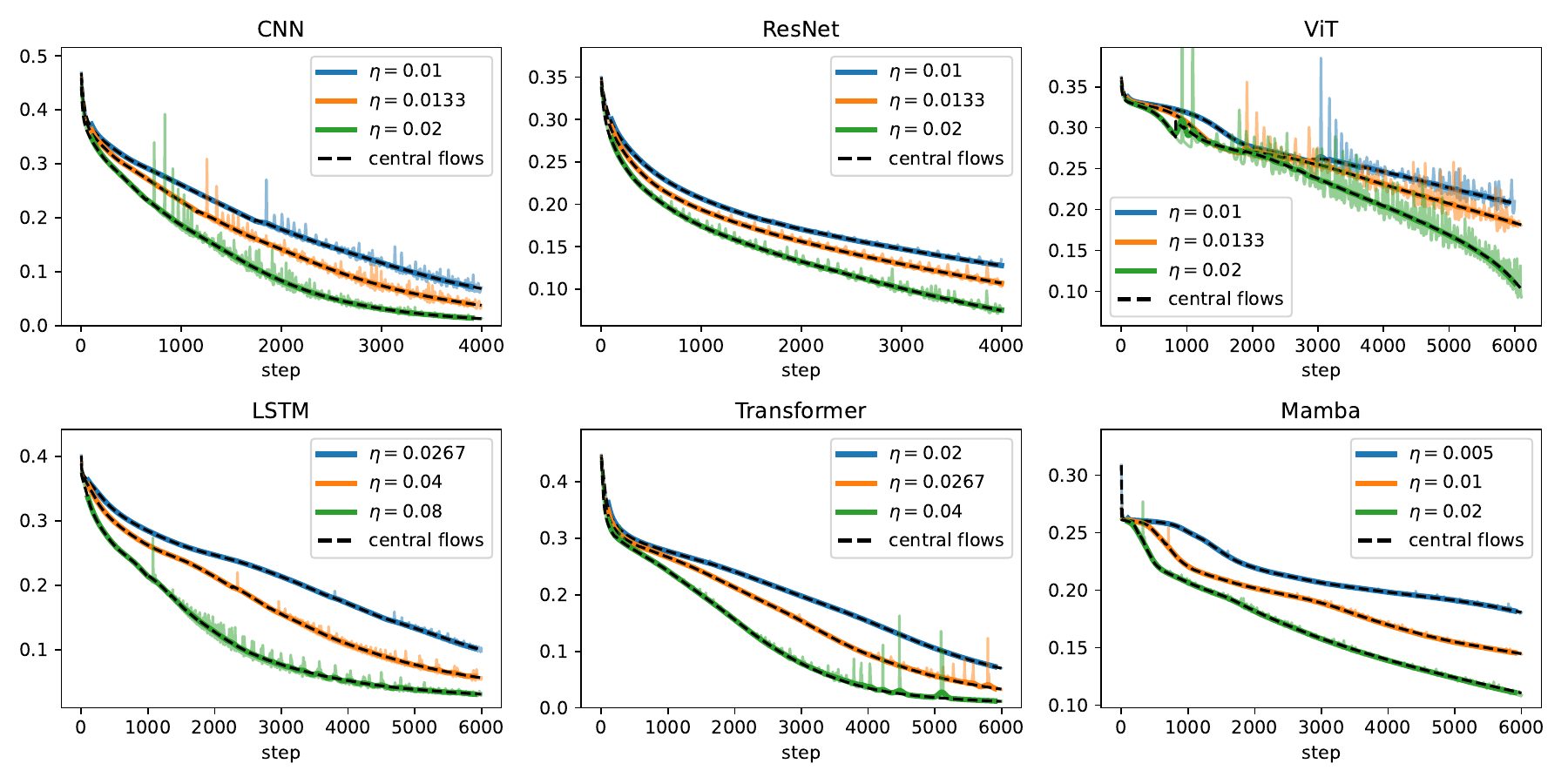}
    \caption{\textbf{Gradient descent: central flow predictions for loss curves (MSE).}  We compare the actual train loss curve (faint colors) and its Gaussian-smoothed version (thick colors) to the central flow's prediction \cref{eq:central-flow-predicted-loss} for the time-averaged loss curve (black dashed lines).
    Each subpanel is a different architecture, and each color is a different learning rate.  These plots use the mean squared error (MSE) loss; see \Cref{fig:experiments:gd:loss-curves-ce} for cross-entropy.}
    \label{fig:experiments:gd:loss-curves-mse}
\end{figure}

\begin{figure}[H]
    \centering
    \includegraphics[width=\linewidth]{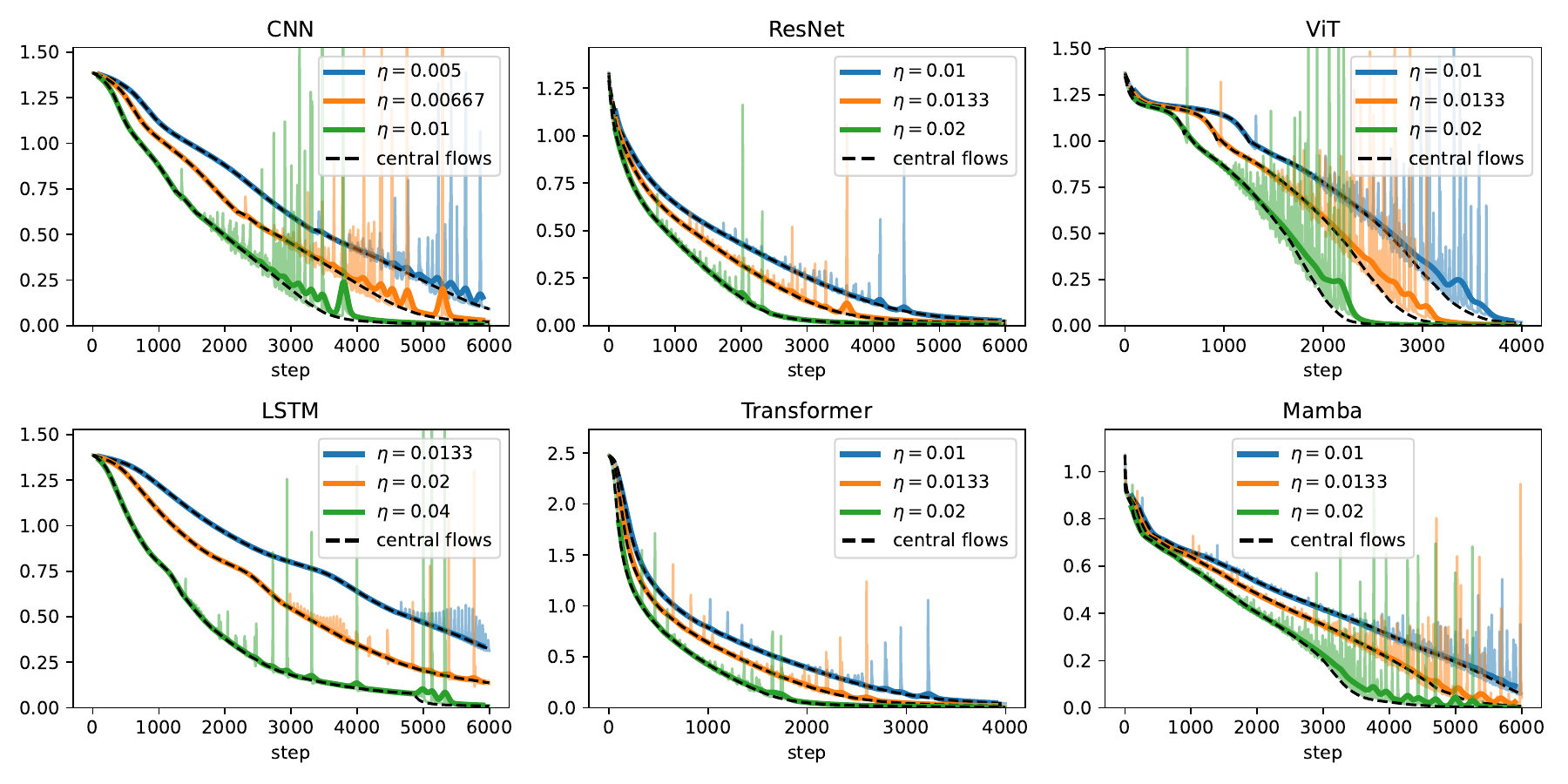}
    \caption{\textbf{Gradient descent: central flow predictions for loss curves (CE).}  Similar to \Cref{fig:experiments:gd:loss-curves-mse}, but for cross-entropy loss.  Here, the prediction is sometimes a bit off, especially at the end, and especially with larger learning rates; as discussed in \Cref{sec:experiments}, the central flow tends to be somewhat less accurate with cross-entropy loss.}
    \label{fig:experiments:gd:loss-curves-ce}
\end{figure}
\end{letterfigures}

\begin{letterfigures}
\begin{figure}[H]
    \centering
    \includegraphics[width=\linewidth]{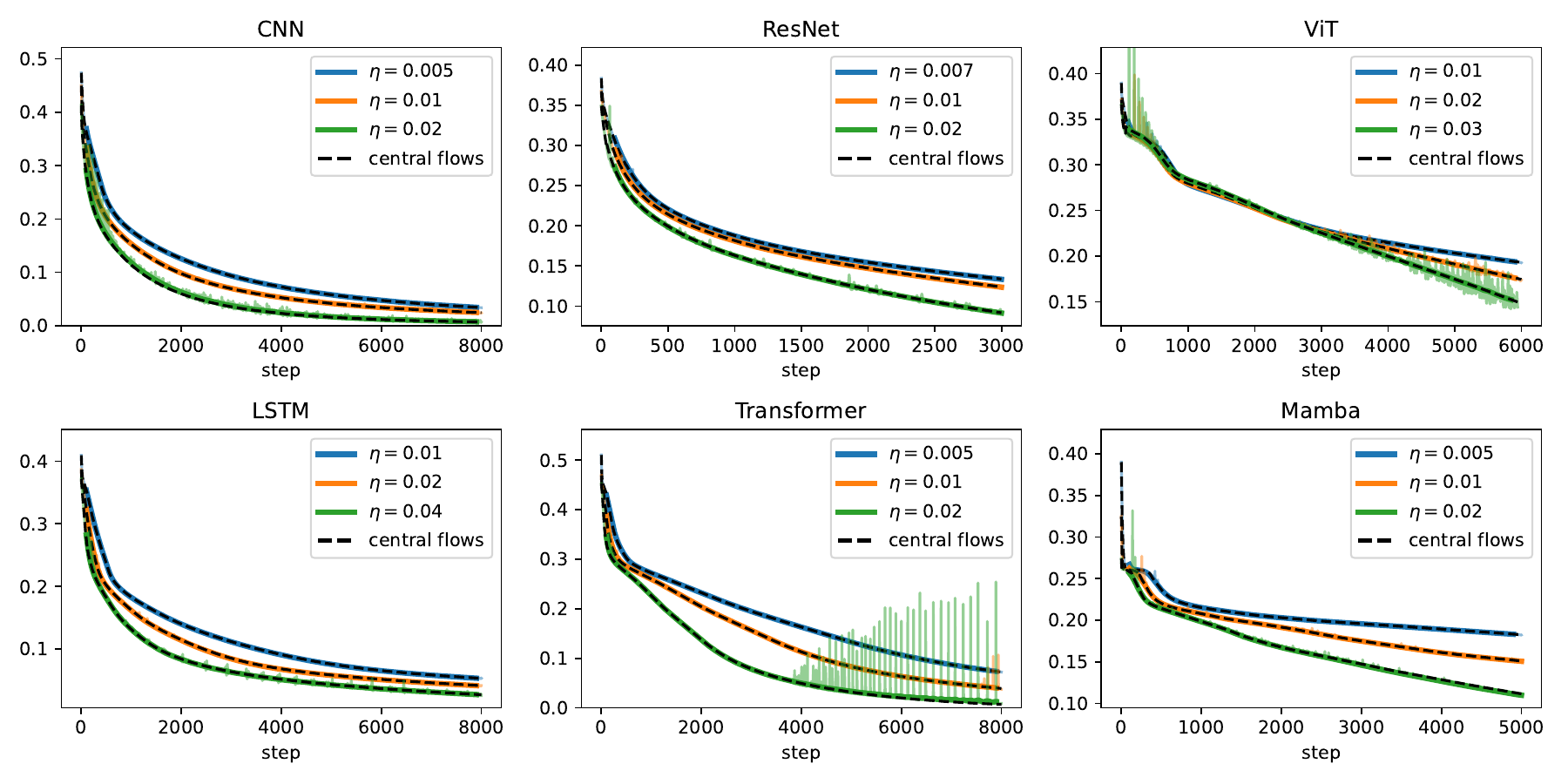}
    \caption{\textbf{Scalar RMSProp: central flow predictions for loss curves (MSE).}  We compare the actual train loss curve (faint colors) and its Gaussian-smoothed version (thick colors) to the central flow's prediction \cref{eq:rmsnorm-predict-loss} for the time-averaged loss curve (black dashed lines).
    Each subpanel is a different architecture, and each color is a different learning rate.  These plots use the mean squared error (MSE) loss; see \Cref{fig:experiments:scalar-rmsprop:loss-curves-ce} for cross-entropy.}
    \label{fig:experiments:scalar-rmsprop:loss-curves-mse}
\end{figure}

\begin{figure}[H]
    \centering
    \includegraphics[width=\linewidth]{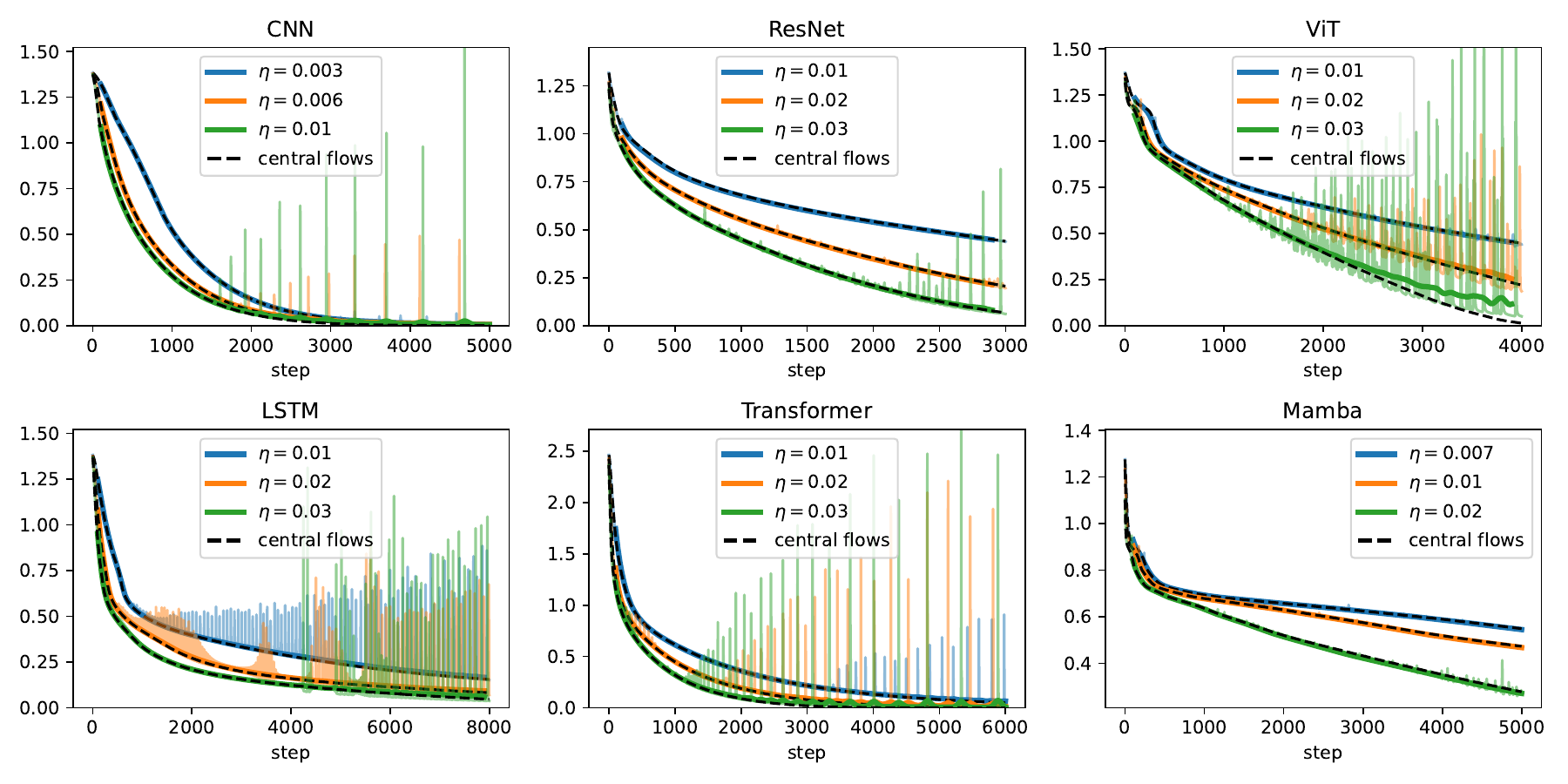}
    \caption{\textbf{Scalar RMSProp: central flow predictions for loss curves (CE).}  Similar to \Cref{fig:experiments:scalar-rmsprop:loss-curves-mse} but for cross entropy loss.}
    \label{fig:experiments:scalar-rmsprop:loss-curves-ce}
\end{figure}
\end{letterfigures}

\begin{letterfigures}
\begin{figure}[H]
    \centering
    \includegraphics[width=\linewidth]{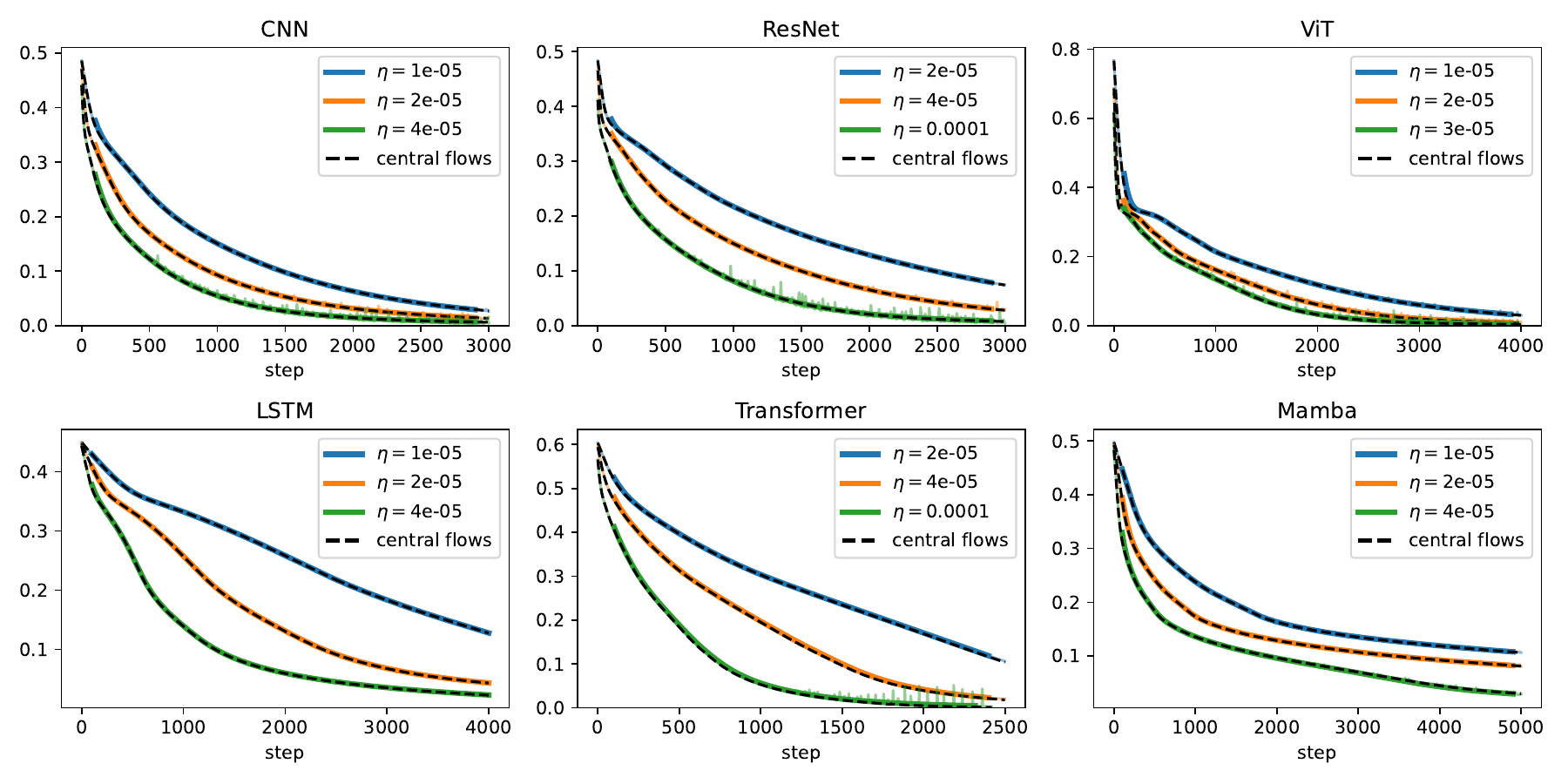}
    \caption{\textbf{RMSProp: central flow predictions for loss curves (MSE).}  We compare the actual train loss curve (faint colors) and its Gaussian-smoothed version (thick colors) to the central flow's prediction \cref{eq:rmsprop-predict-loss} for the time-averaged loss curve (black dashed lines).
    Each subpanel is a different architecture, and each color is a different learning rate.  These plots use the mean squared error (MSE) loss; see \Cref{fig:experiments:scalar-rmsprop:loss-curves-ce} for cross-entropy.}
    \label{fig:experiments:rmsprop:loss-curves-mse}
\end{figure}

\begin{figure}[H]
    \centering
    \includegraphics[width=\linewidth]{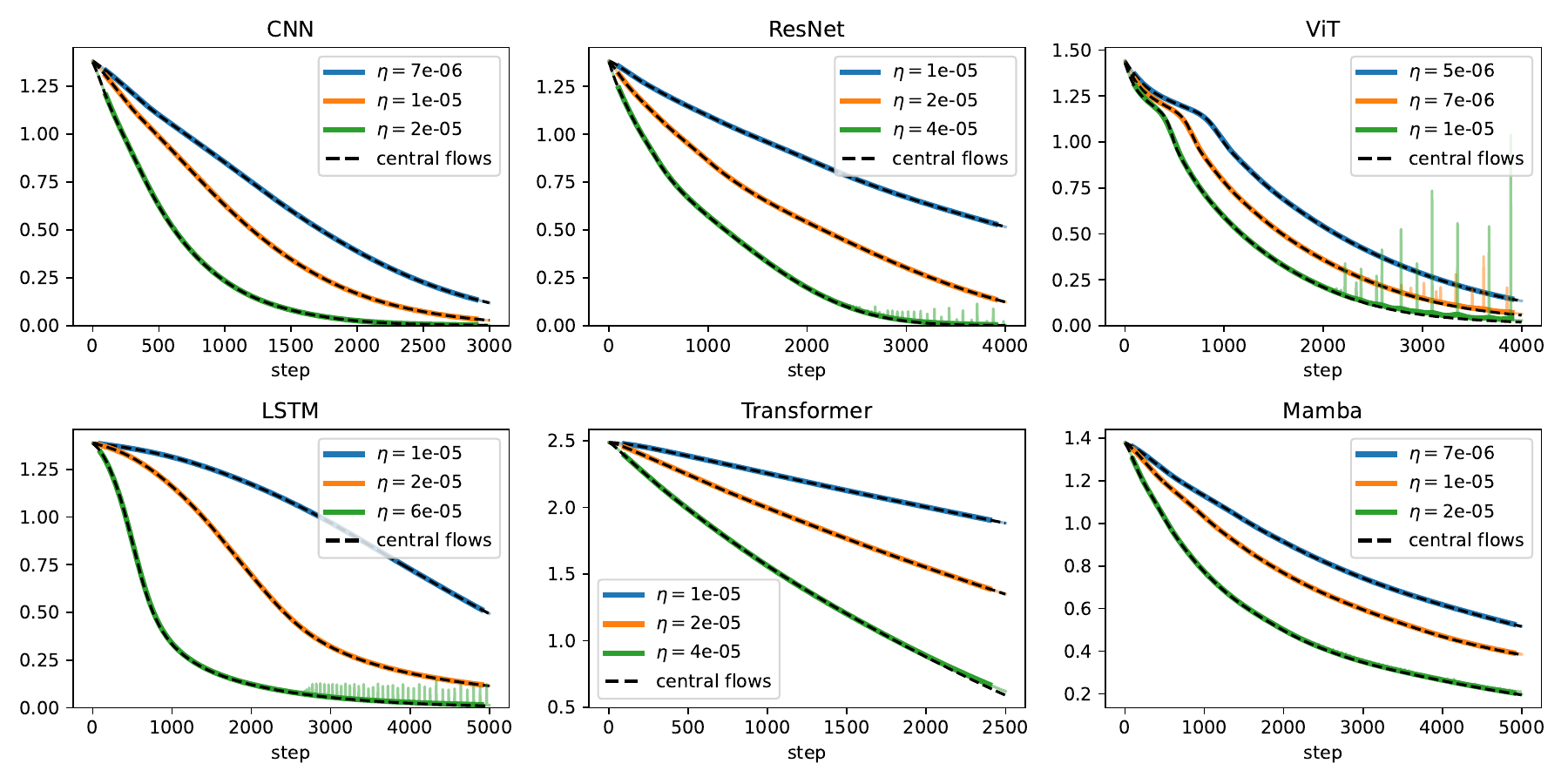}
    \caption{\textbf{RMSProp: central flow predictions for loss curves (CE).} Similar to \Cref{fig:experiments:rmsprop:loss-curves-mse} but for cross-entropy loss.}
    \label{fig:experiments:rmsprop:loss-curves-ce}
\end{figure}
\end{letterfigures}

\begin{figure}[H]
    \centering
    \includegraphics[width=\linewidth]{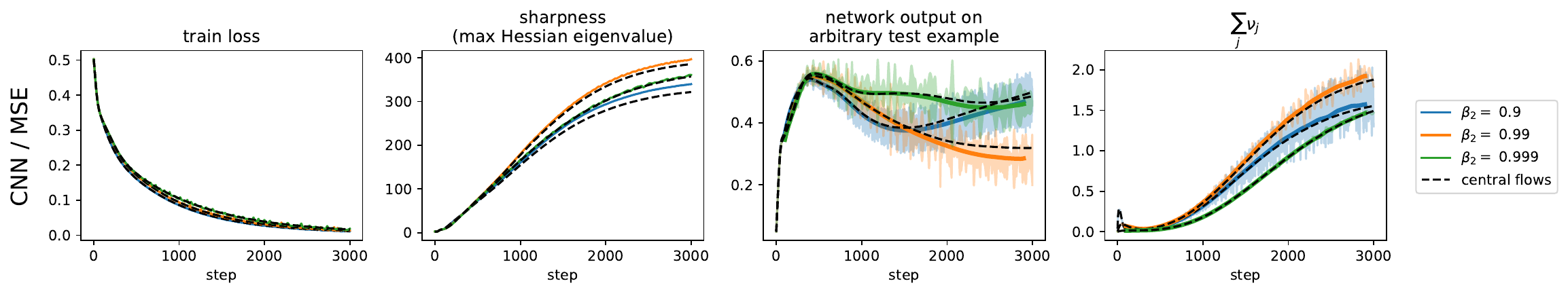}
    \includegraphics[width=\linewidth]{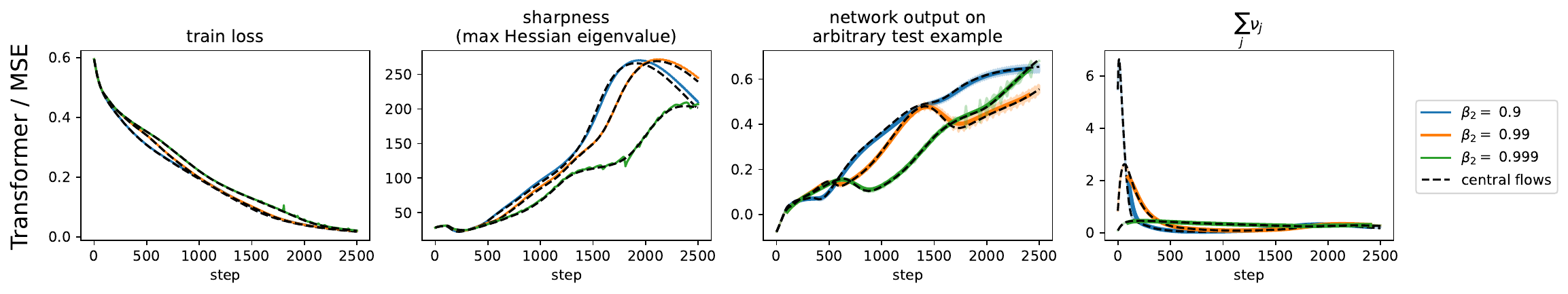}
    \caption{\textbf{Validating the RMSProp central flow across $\beta_2$ values}. We run both RMSProp and its central flow at multiple values of the $\beta_2$ hyperparameter (colors).  Observe that the central flows (black) well-approximate the trajectories of RMSProp.  \textit{Details}: the top row is a CNN trained on a subset of CIFAR-10 with MSE loss, $\eta = $ 2e-5, $\epsilon = $ 1e-8, and bias correction.   The bottom row is a Transformer trained on a synthetic sequence prediction task with MSE loss, $\eta =$ 4e-5, $\epsilon =$ 1e-8, and bias correction.}
    \label{fig:experiments:rmsprop:beta2}
\end{figure}

\begin{figure}[t]
    \centering
    \includegraphics[width=\linewidth]{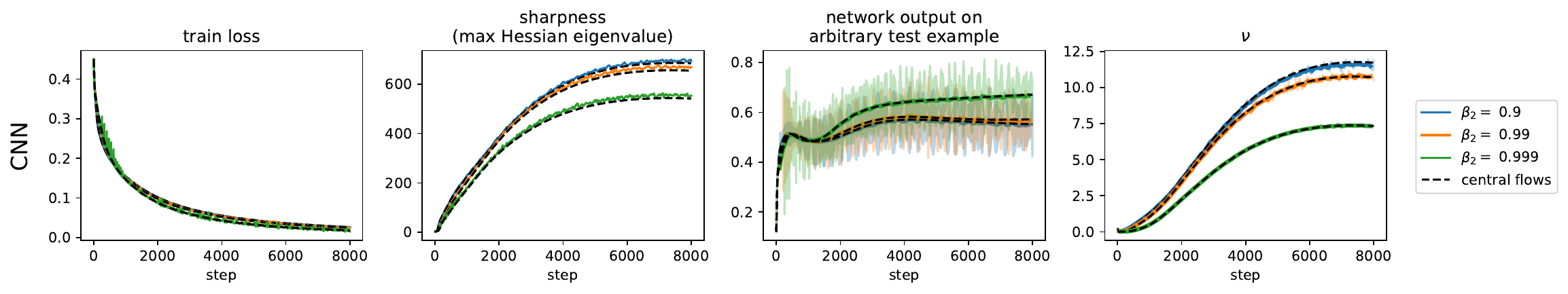}
    \caption{\textbf{Validating the Scalar RMSProp central flow across $\beta_2$ values}. We run both Scalar RMSProp and its central flow at multiple values of the $\beta_2$ hyperparameter (colors).  Observe that the central flows (black) well-approximate the trajectory of Scalar RMSProp.  \textit{Details}: a CNN is trained on a subset of CIFAR-10 with MSE loss, $\eta = 0.01$, and bias correction. }
    \label{fig:experiments:scalar-rmsprop:beta2}
\end{figure}

\begin{figure}[H]
    \centering
    \includegraphics[width=\linewidth]{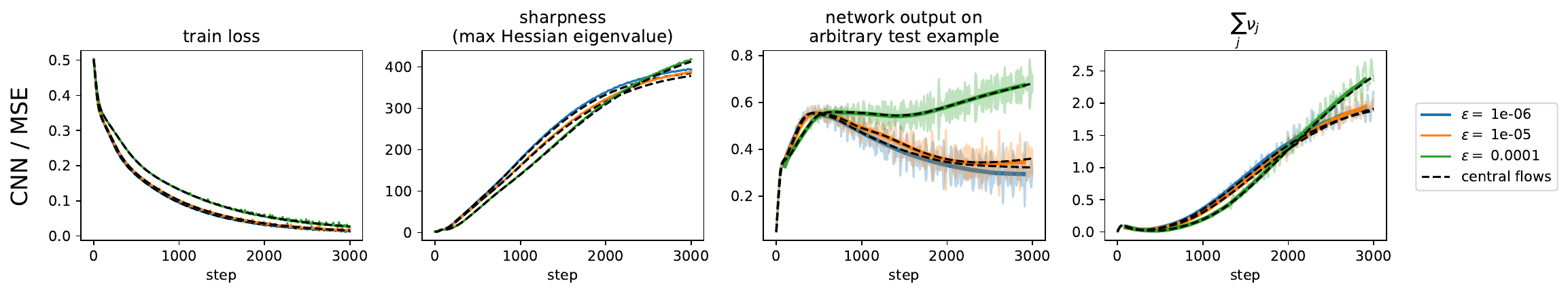}
    \includegraphics[width=\linewidth]{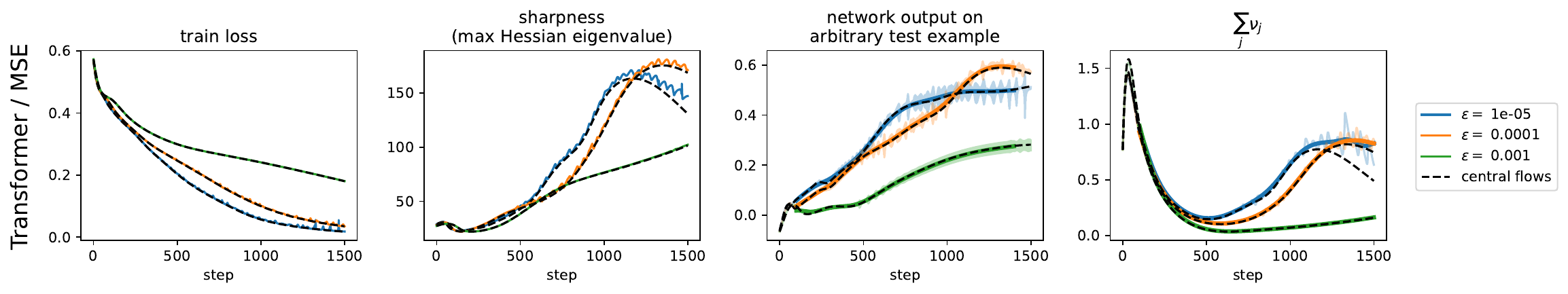}
    \caption{\textbf{Validating the RMSProp central flow across $\epsilon$ values}. We run both RMSProp and its central flow at multiple values of the $\epsilon$ hyperparameter (colors).  Observe that the central flows (black) well-approximate the trajectory of RMSProp.  \textit{Details}: the top row is a CNN trained on a subset of CIFAR-10 with MSE loss, $\eta = $ 2e-5, $\epsilon = $ 1e-8, and bias correction.  The bottom row is a Transformer trained on a synthetic sequence prediction task with MSE loss, $\eta =$ 4e-5, $\epsilon = $1e-8, and bias correction. }
    \label{fig:experiments:rmsprop:eps}
\end{figure}

\begin{figure}[h]
\centering
\includegraphics[width=0.4\textwidth]{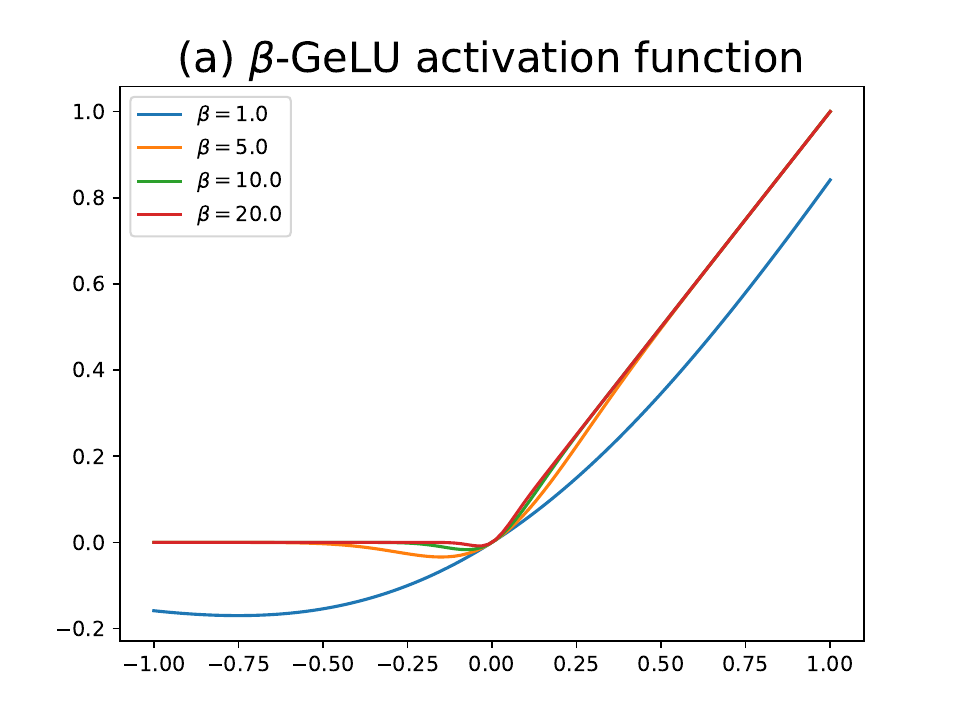}
\includegraphics[width=0.4\textwidth]{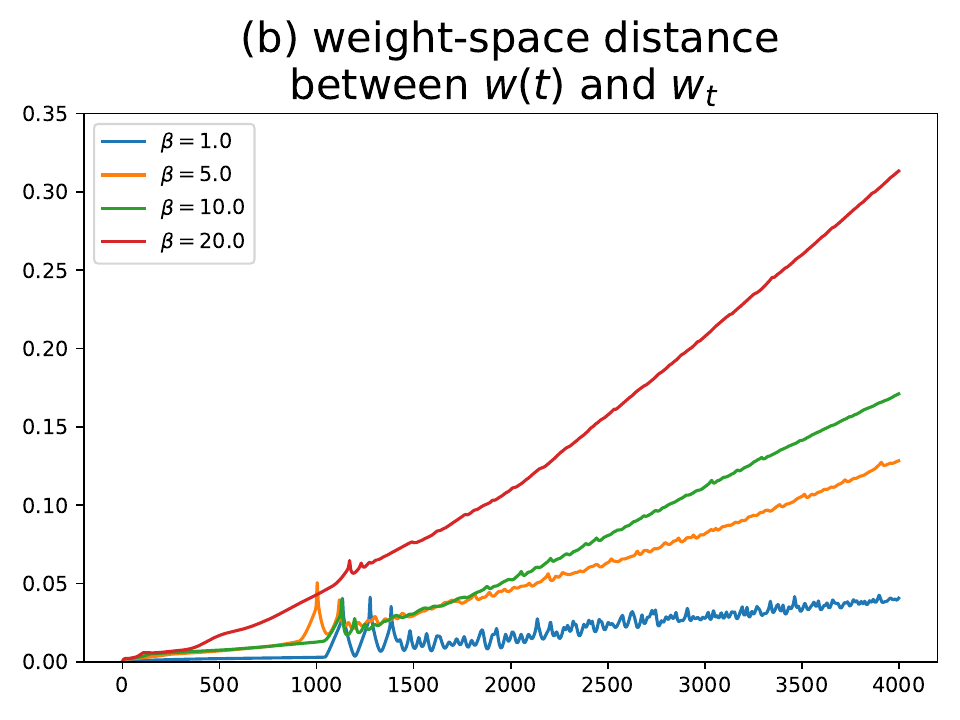}
\includegraphics[width=\textwidth]{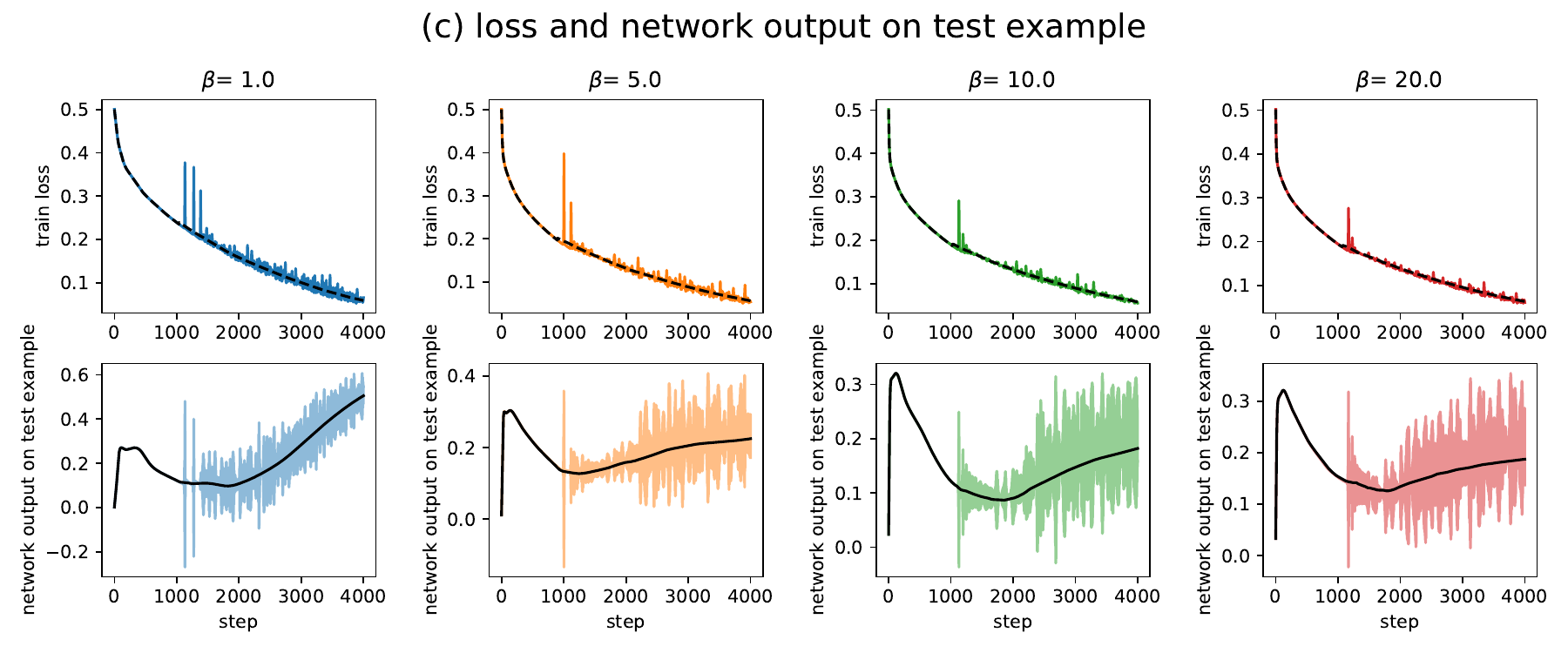}
\caption{\textbf{Accuracy of central flow degrades as activation function becomes less smooth.}  We consider networks with the $\beta$-GeLU activation function from \citet{dauphin2024neglected}, defined as $x \mapsto x\Phi(\beta x)$ where $\Phi$ is the standard Gaussian CDF.  This activation interpolates between (smooth) GeLU when $\beta=1$ and (non-smooth) ReLU when $\beta = \infty$.  Subfigure (a) plots this activation function with varying $\beta$.  Subfigure (b) shows that when $\beta$ is larger (i.e. when the activation is less smooth), the approximation error between the central flow $w(t)$ and the optimizer trajectory $w_t$ grows faster.  Subfigure (c) plots the loss curve, and the network's output on a test example, for both the optimizer trajectory and the central flow. Fortunately, even when $\beta = 20$, at which point $\beta$-GeLU is a very close approximation to ReLU, the central flow accurately predicts the overall training loss curve.}
\label{fig:beta-gelu-distances}
\end{figure}

\begin{figure}[H]
\centering
\includegraphics[width=\textwidth]{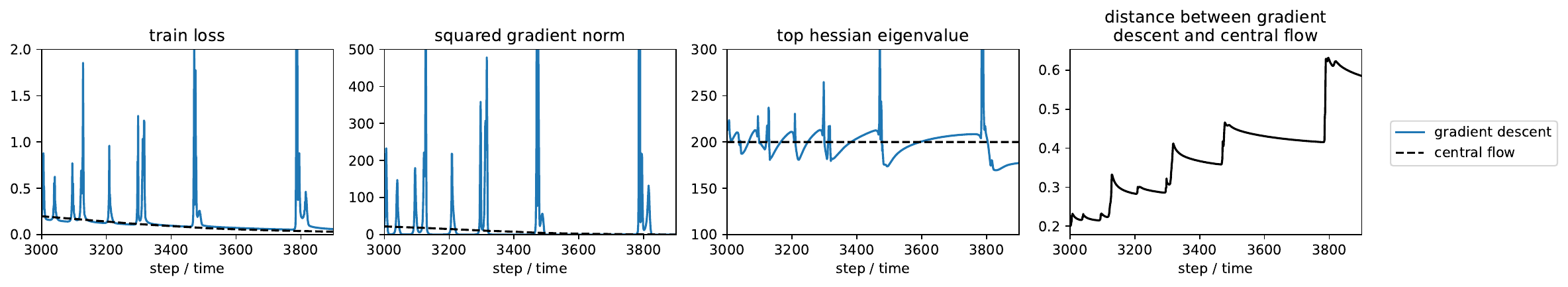}
\caption{\textbf{Large spikes degrade the accuracy of the central flow approximation}. Every few hundred iterations, there is a large spike (visible in e.g. the loss and the gradient norm), which causes the distance between gradient descent and the central flow to jump. For reasons we do not understand, such large spikes are relatively common during full-batch training with cross-entropy loss, especially near the end of training.  \textit{Details}: a CNN is trained on a subset of CIFAR-10 with cross-entropy loss, using gradient descent with $\eta = 0.01$.}
\label{fig:spikes-distance}
\end{figure}

\clearpage
\begin{figure}[H]
\centering
\includegraphics[width=0.8\linewidth]{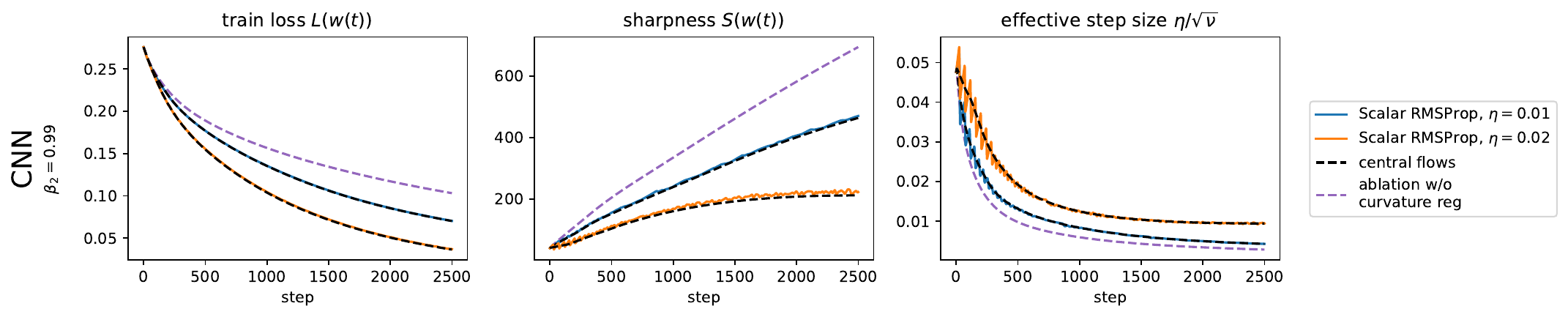}
\includegraphics[width=0.8\linewidth]{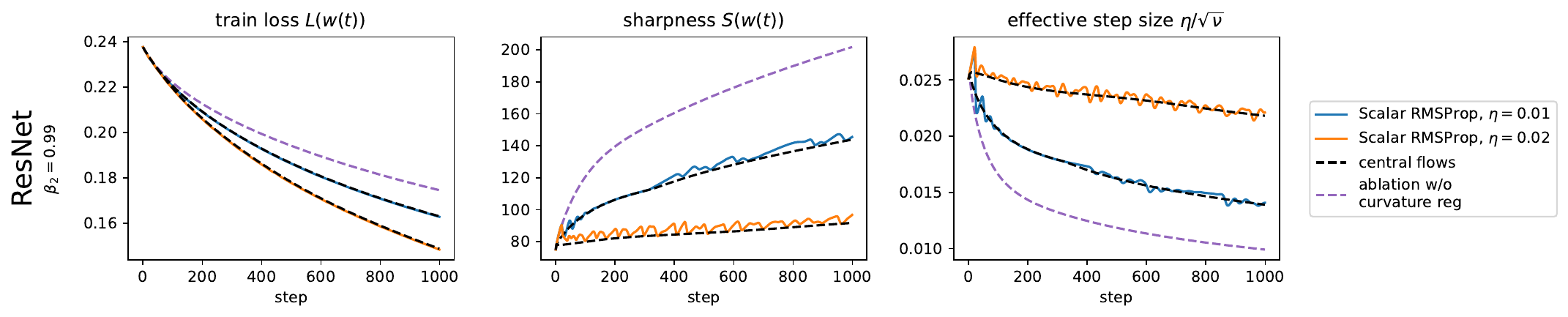}
\includegraphics[width=0.8\linewidth]{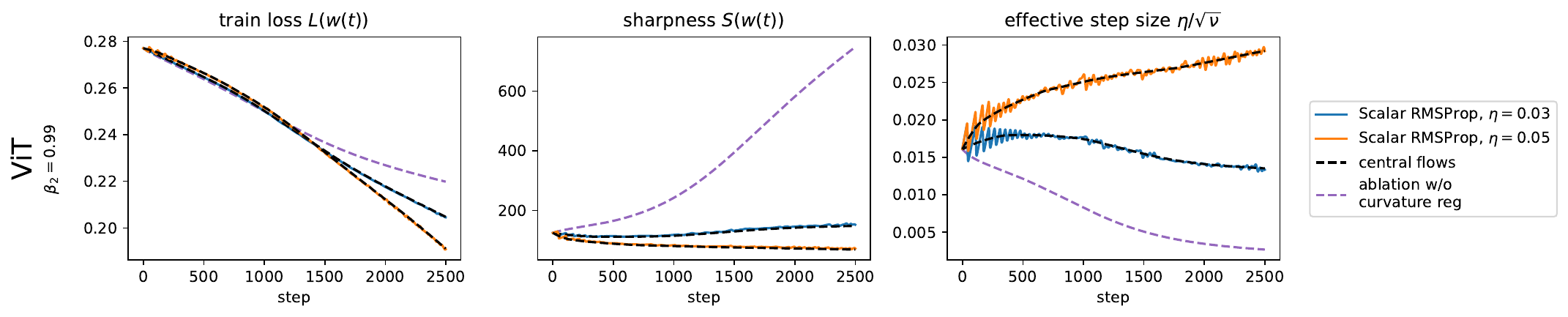}
\includegraphics[width=0.8\linewidth]{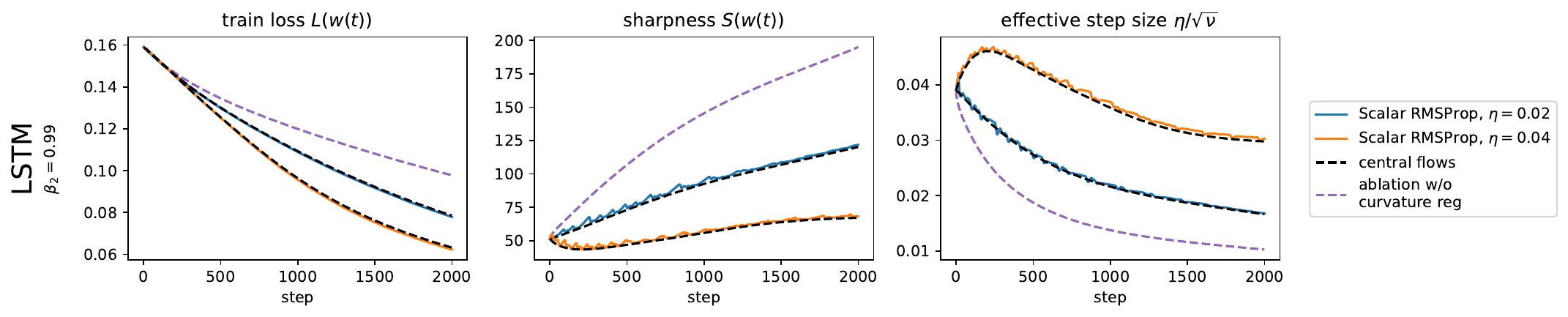}
\includegraphics[width=0.8\linewidth]{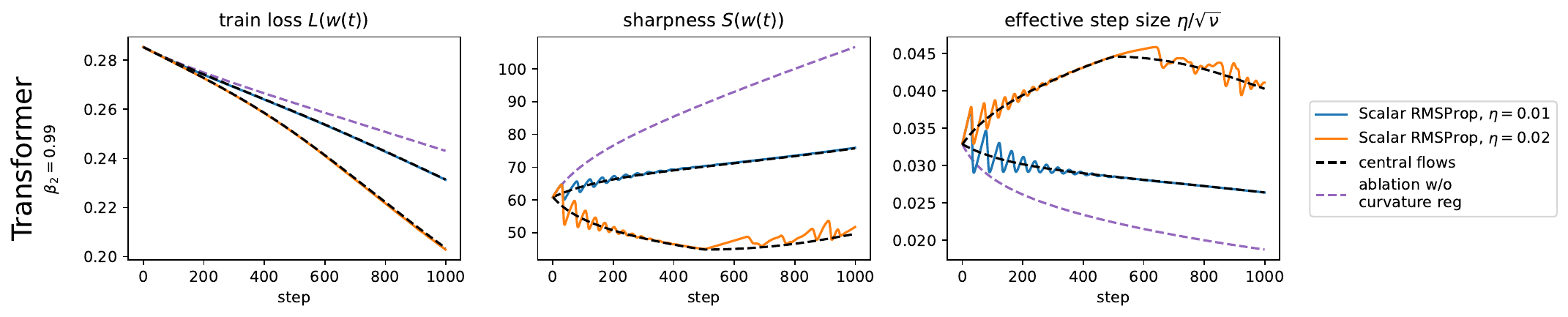}
\includegraphics[width=0.8\linewidth]{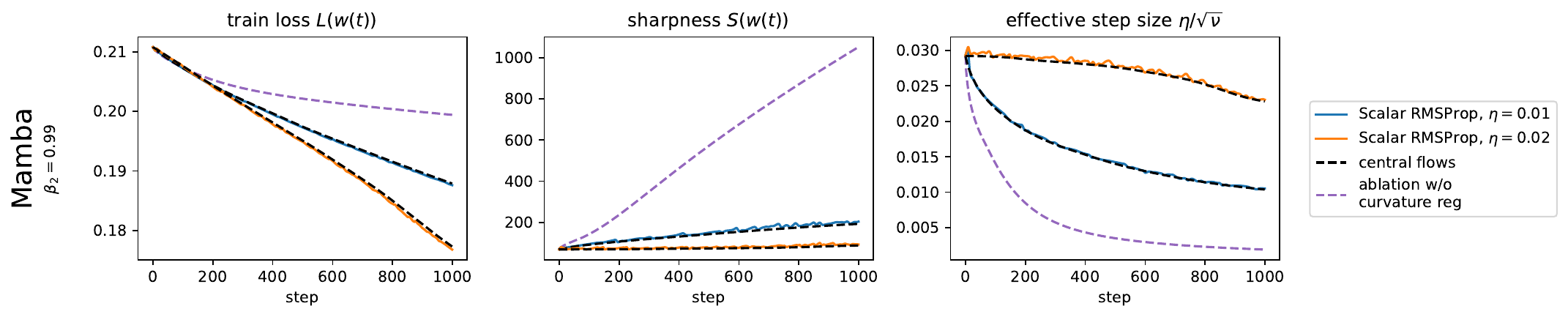}
\caption{\textbf{Implicit curvature reduction accelerates optimization for Scalar RMSProp}.  Starting from the same initialization, we run the Scalar RMSProp central flow at various learning rates, as well as an ablated flow $\frac{dw}{dt} = - \tfrac{2}{S(w)} \nabla L(w)$ with curvature regularization removed. These three flows all use the same step size strategy but differ in the strength of implicit curvature regularization.  Initially, the flows with higher curvature regularization often optimize slower; however, over the longer run, they are able to take larger steps and optimize faster.  Each row depicts a different deep learning setting.  All experiments use MSE loss.\protect\footnotemark}
\label{fig:rmsnorm-acc-by-reg}
\end{figure}
\footnotetext{For discrete \rmsnorm, we show the train loss not at the iterates themselves but at the second-order midpoints between iterates $\hat{w}_t := \tfrac{1}{4}[2w_t + w_{t-1} + w_{t+1}]$, which removes most of the oscillations along the top eigenvectors. This makes the train loss directly comparable to the flow losses.}

\newpage
\begin{figure}[H]
\vspace{-10px}
\centering
\includegraphics[width=0.85\linewidth]{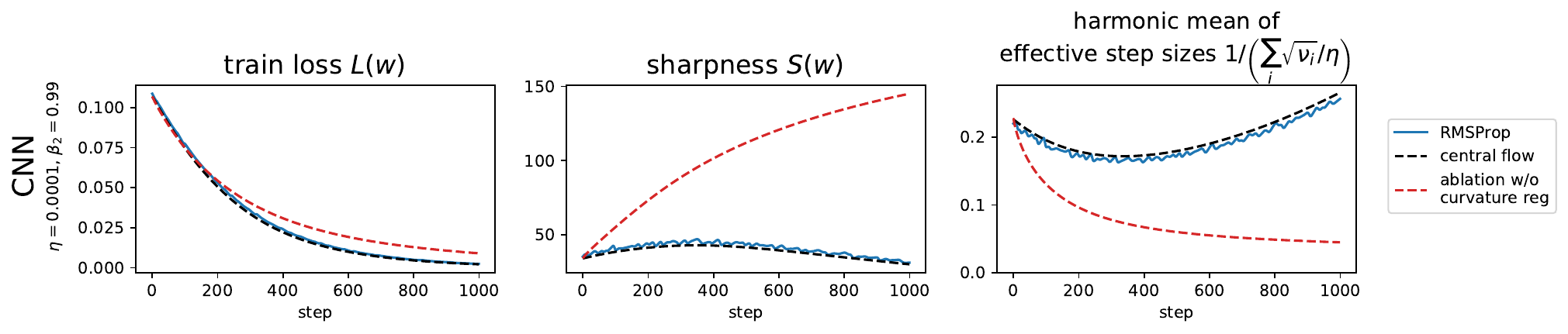}
\includegraphics[width=0.85\linewidth]{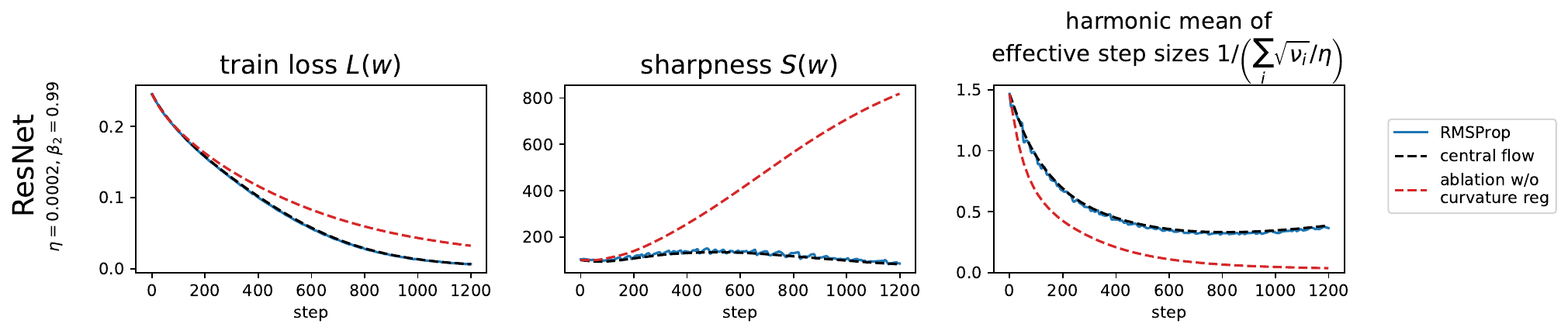}
\includegraphics[width=0.85\linewidth]{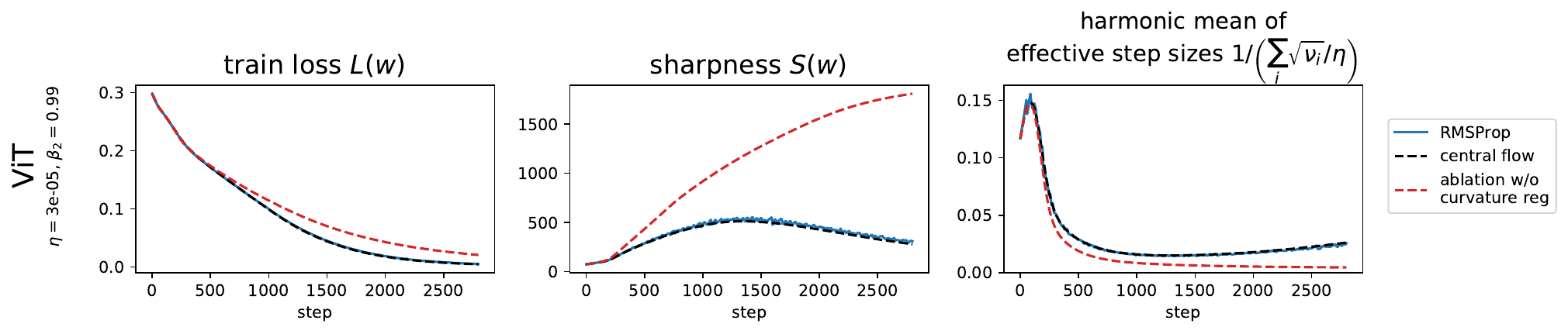}
\includegraphics[width=0.85\linewidth]{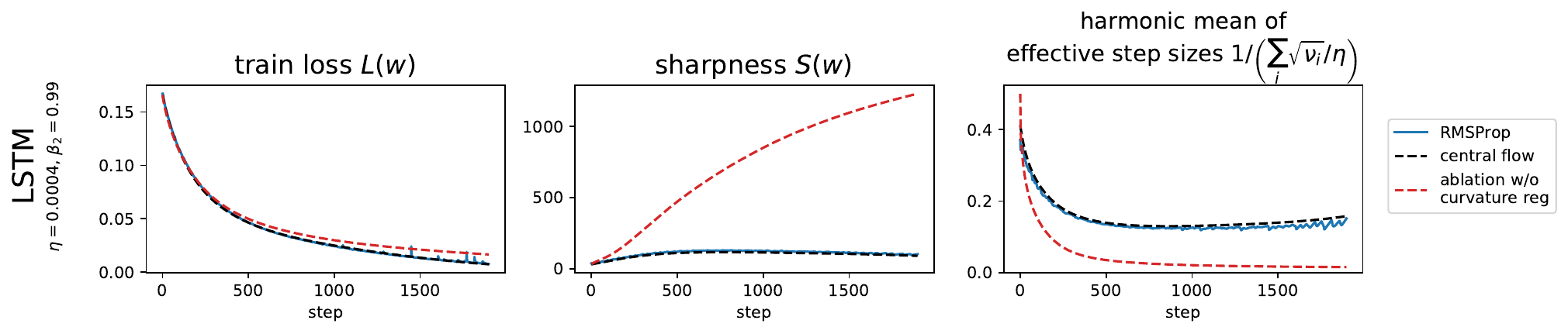}
\includegraphics[width=0.85\linewidth]{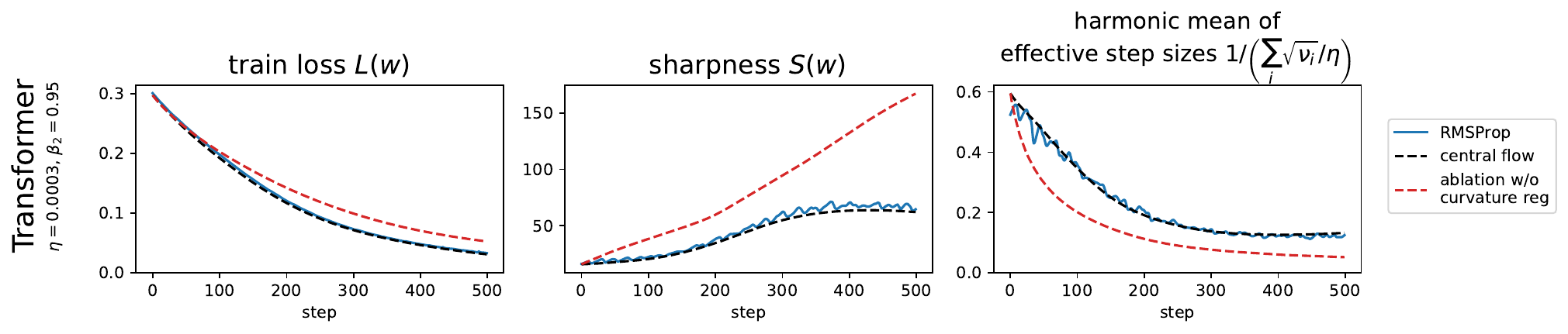}
\includegraphics[width=0.85\linewidth]{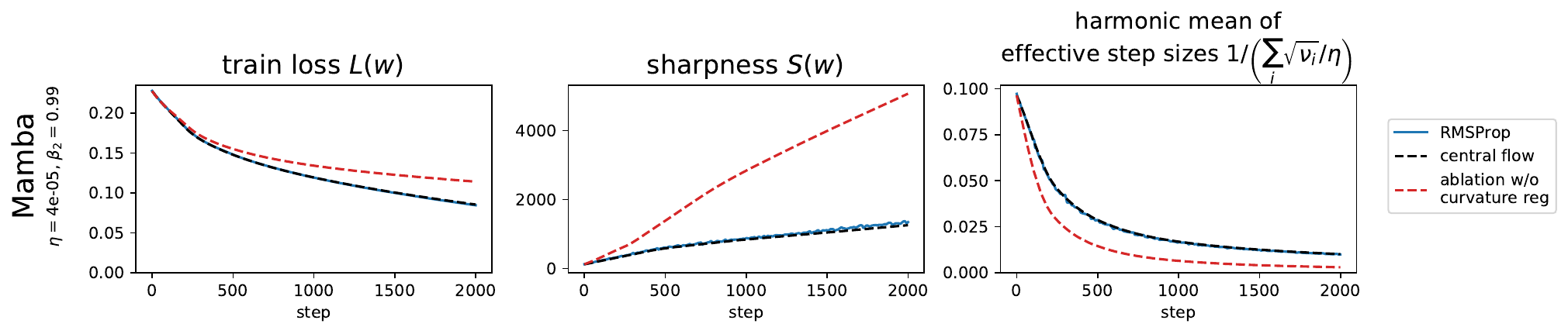}
\caption{\textbf{Implicit curvature reduction accelerates optimization for RMSProp}.  We compare RMSProp (blue) and its central flow (black) to an ablated flow (red) which leaves out the implicit curvature regularization, and maintains stability purely by the effect of oscillations on $\nu$. Over time, RMSProp and the central flow navigate to lower-curvature regions (center), where they take larger steps (right), and optimize faster (left) than the ablated flow. Each row is a different DL setting.  The left column plots the train loss,\protect\footnotemark the middle column plots the sharpness, and the right column plots the harmonic mean of the effective learning rates. These experiments all use MSE loss.}
\vspace{-15px}
\label{fig:rmsprop-acc-by-reg}
\end{figure}
\footnotetext{For discrete \rmsprop, we show the train loss not at the iterates themselves but at the second-order midpoints between iterates $\hat{w}_t := \tfrac{1}{4}[2w_t + w_{t-1} + w_{t+1}]$, which removes most of the oscillations along the top eigenvectors. This makes the train loss directly comparable to the flow losses.}

\newpage
\begin{letterfigures}
\begin{figure}[H]
\centering
\includegraphics[width=0.8\linewidth]{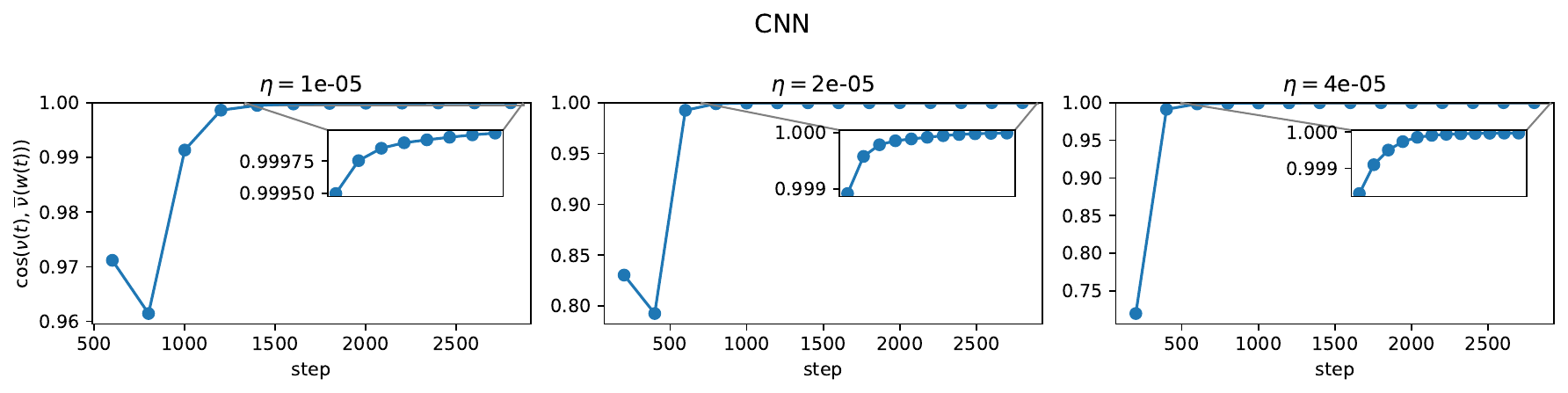}
\includegraphics[width=0.8\linewidth]{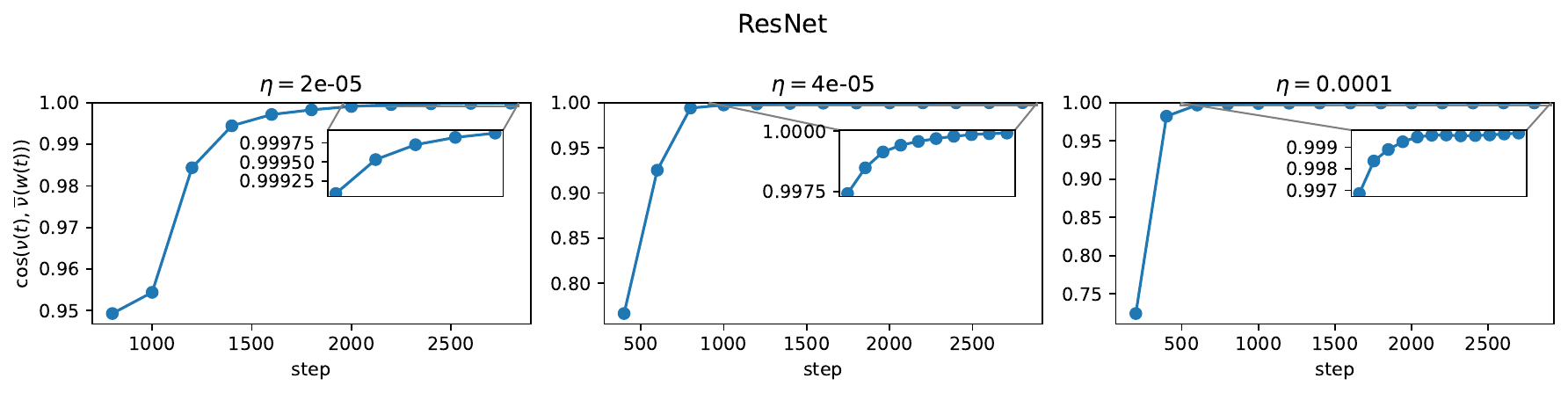}
\includegraphics[width=0.8\linewidth]{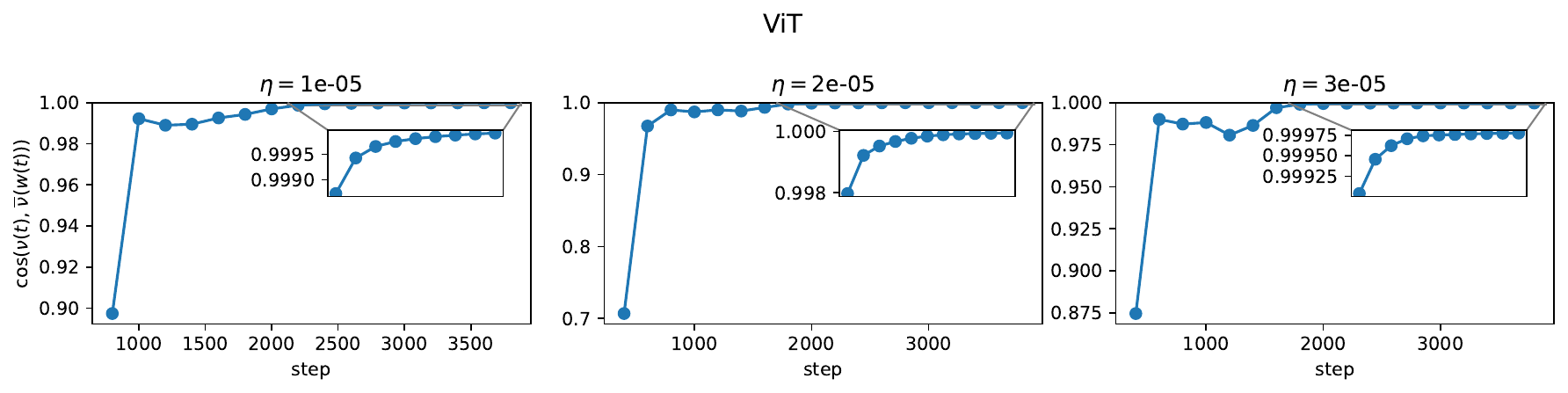}
\includegraphics[width=0.8\linewidth]{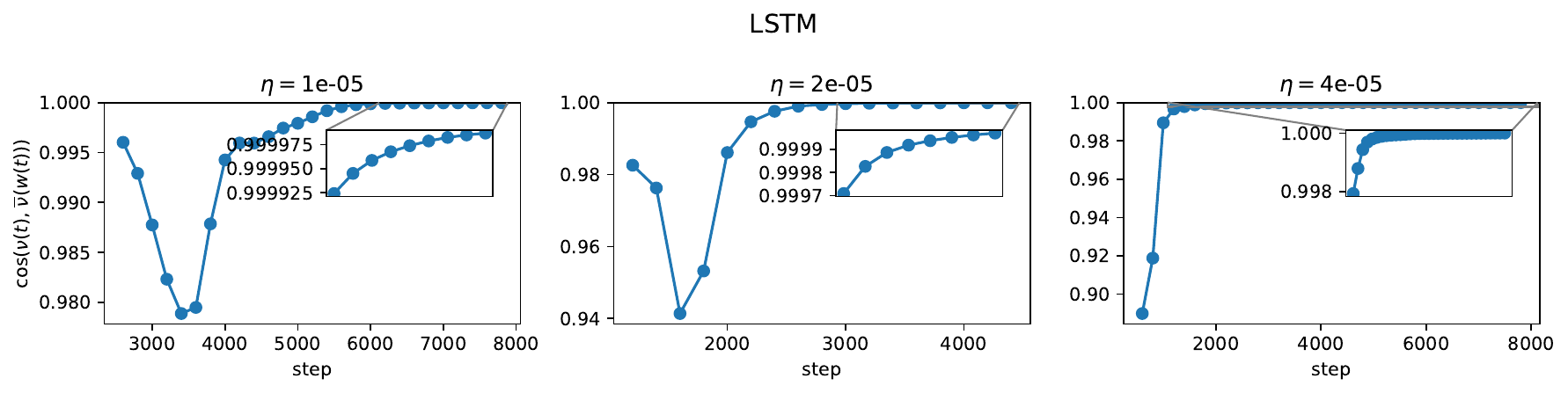}
\includegraphics[width=0.8\linewidth]{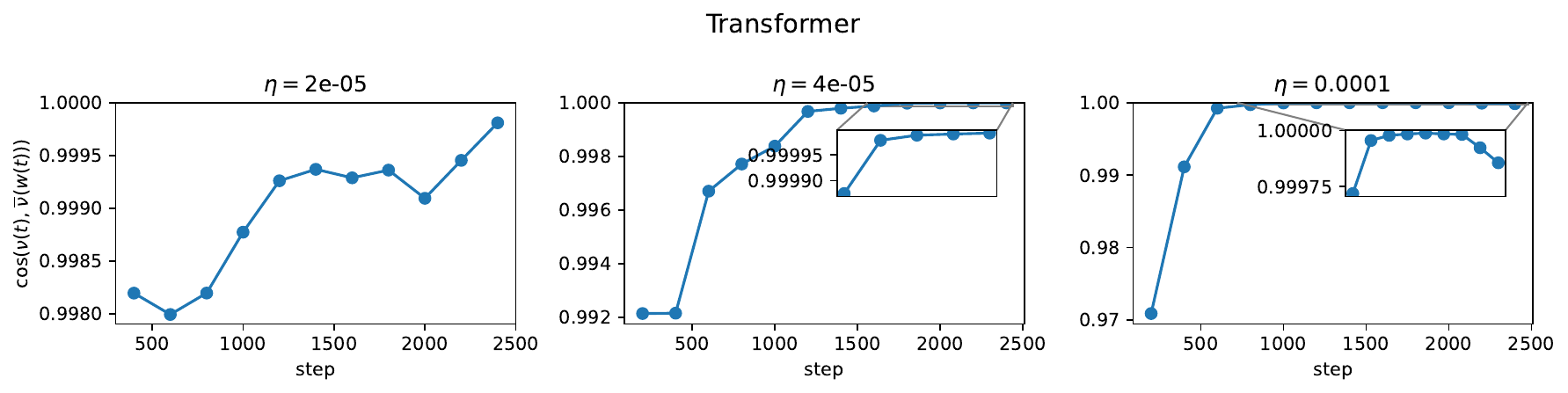}
\includegraphics[width=0.8\linewidth]{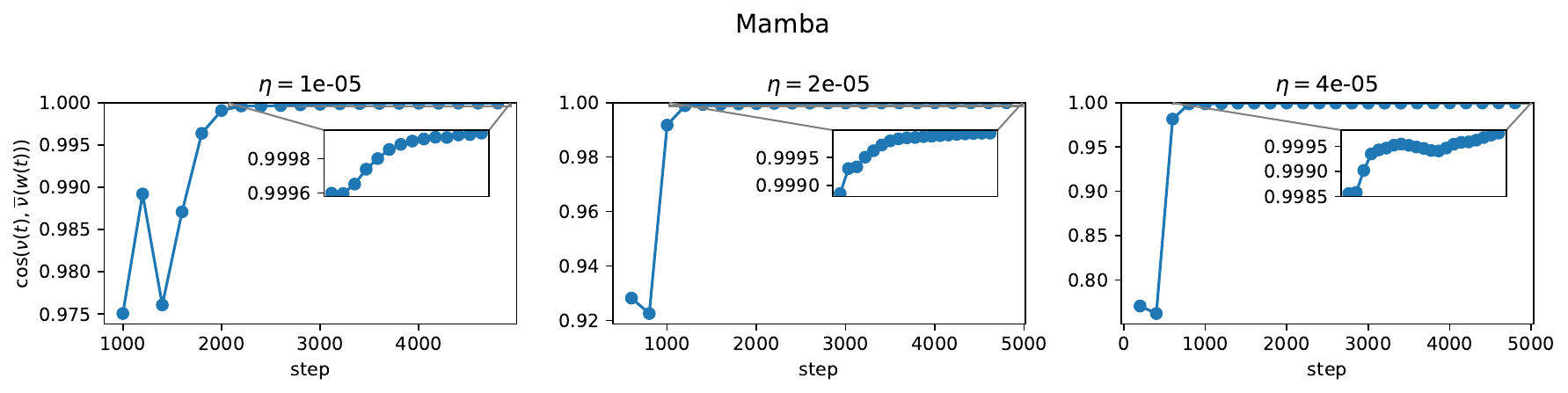}
    \caption{\textbf{The EMA $\boldsymbol{\nu}$ reaches stationarity during training (MSE).} While running the RMSProp central flow, beginning at the time when training enters EOS, we monitor the cosine similarity between the EMA $\nu(t)$ and the stationary EMA $\bar{\nu}(w(t))$.  This cosine similarity rises to high values (nearly 1) during training, implying that $\nu(t)$ reaches stationarity.}
    \label{fig:cosnu-mse}
\end{figure}

\begin{figure}[H]
\centering
\includegraphics[width=0.8\linewidth]{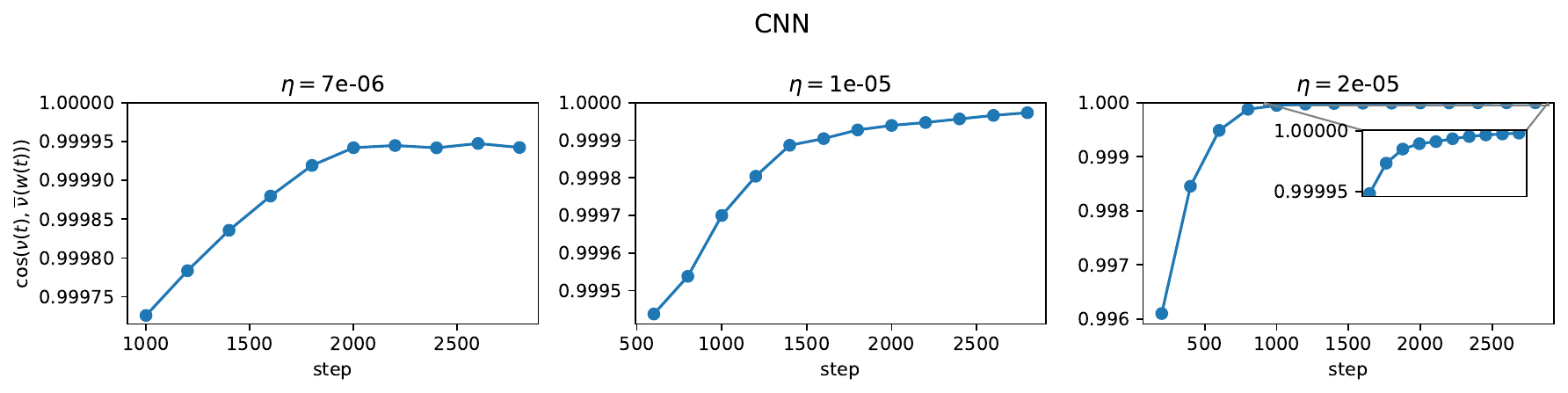}
\includegraphics[width=0.8\linewidth]{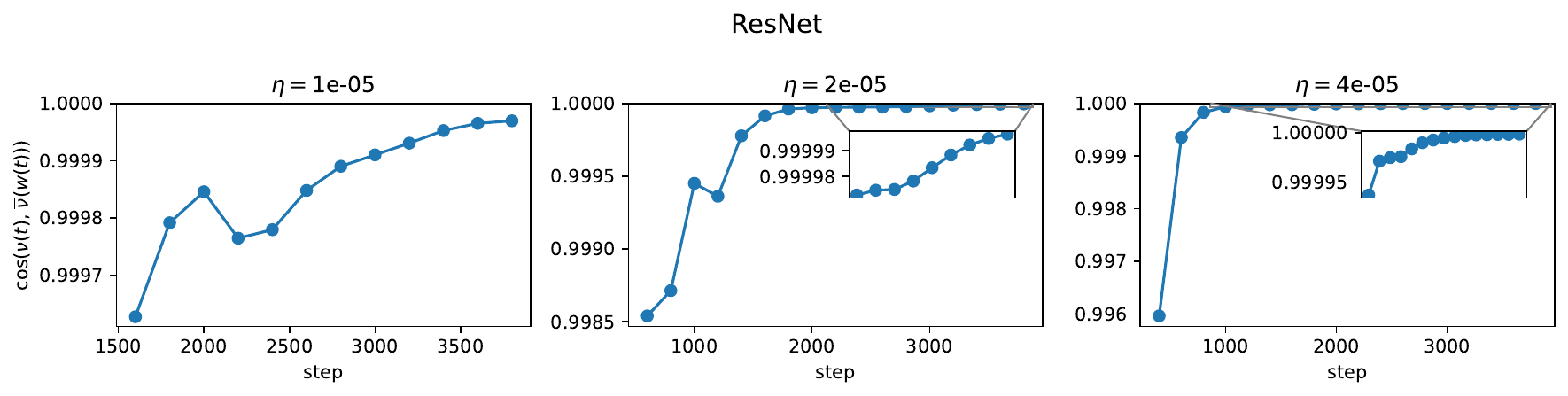}
\includegraphics[width=0.8\linewidth]{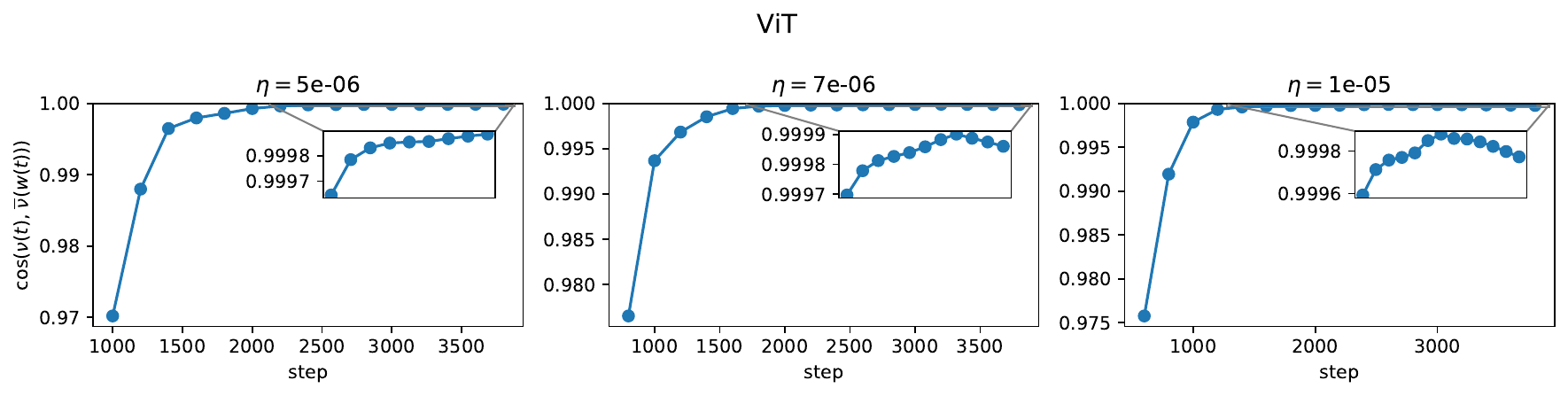}
\includegraphics[width=0.8\linewidth]{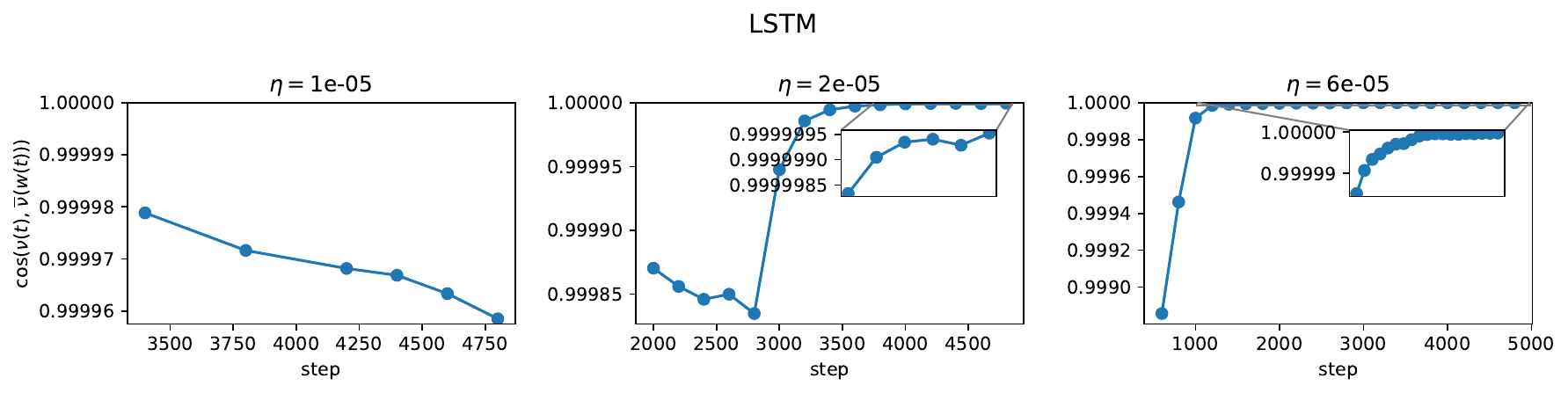}
\includegraphics[width=0.8\linewidth]{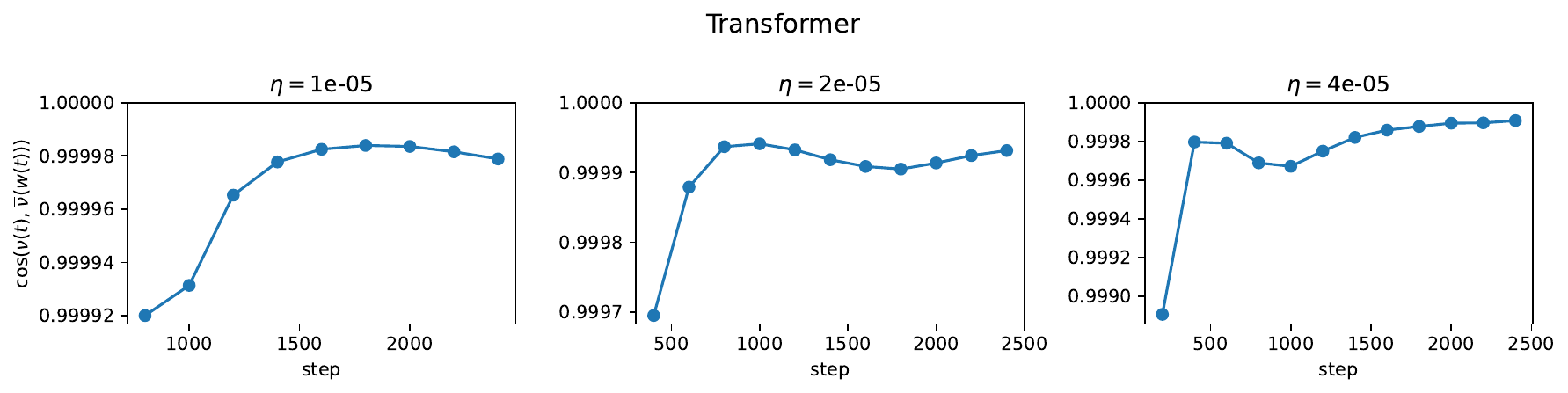}
\includegraphics[width=0.8\linewidth]{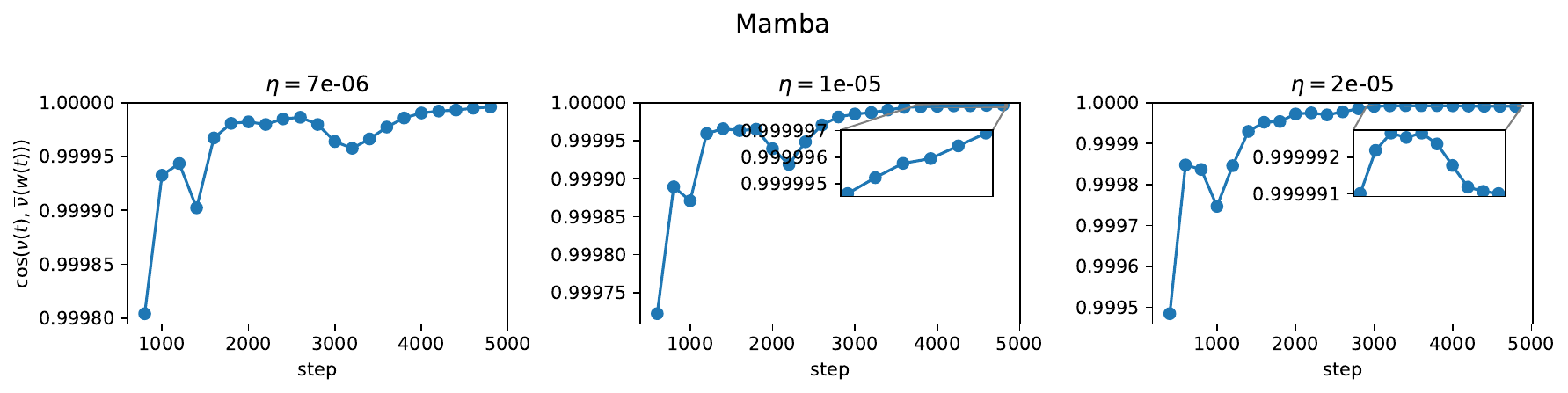}

    \caption{\textbf{The EMA $\boldsymbol{\nu}$ reaches stationarity during training (cross-entropy).} This figure is analogous to \Cref{fig:cosnu-mse} but for cross-entropy loss.}
    \label{fig:cosnu-ce}
\end{figure}
\end{letterfigures}

\begin{letterfigures}
\begin{figure}[H]
\centering
\includegraphics[width=0.8\linewidth]{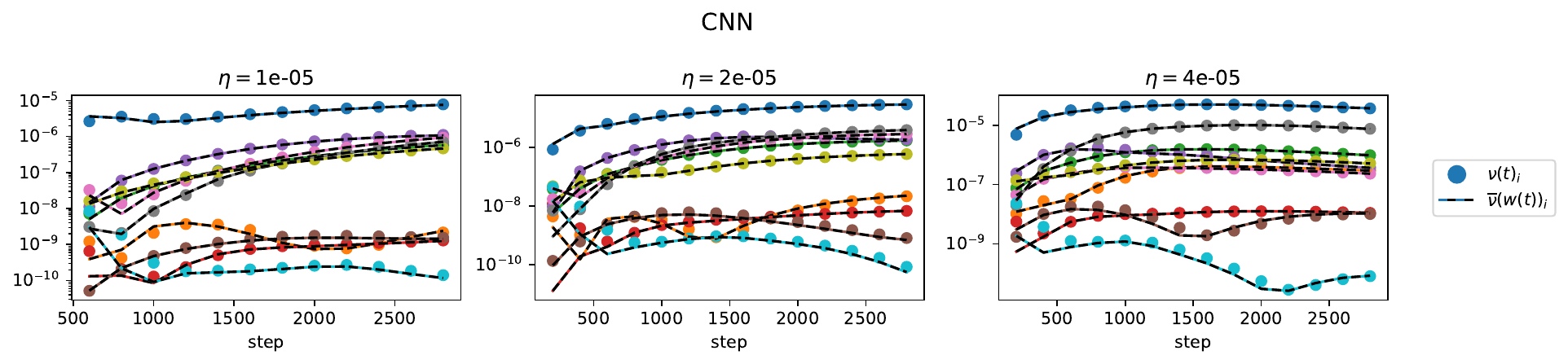}
\includegraphics[width=0.8\linewidth]{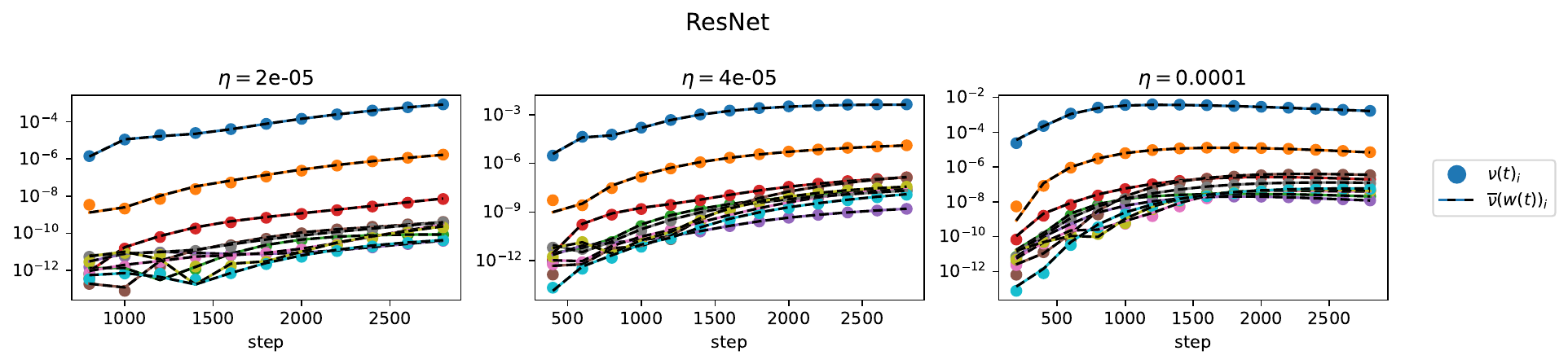}
\includegraphics[width=0.8\linewidth]{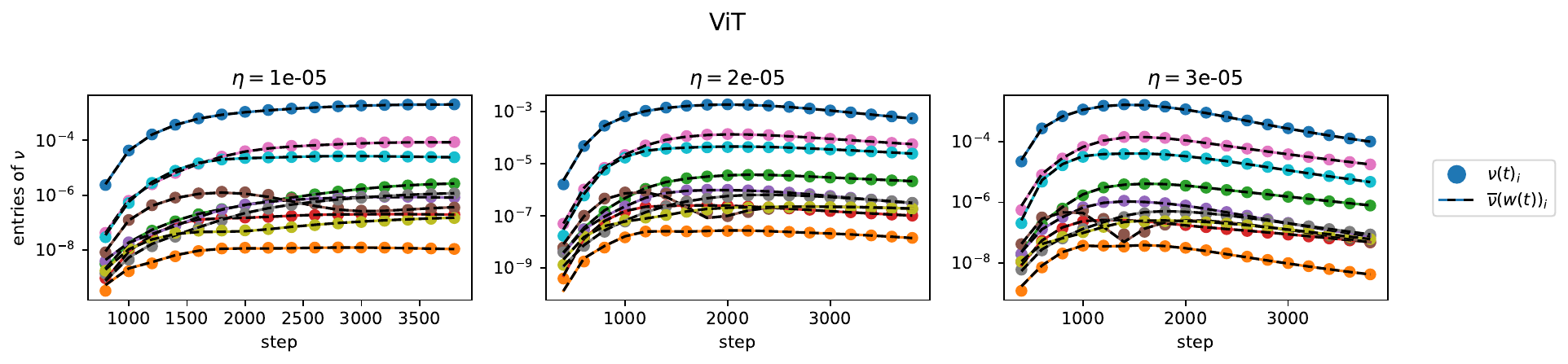}
\includegraphics[width=0.8\linewidth]{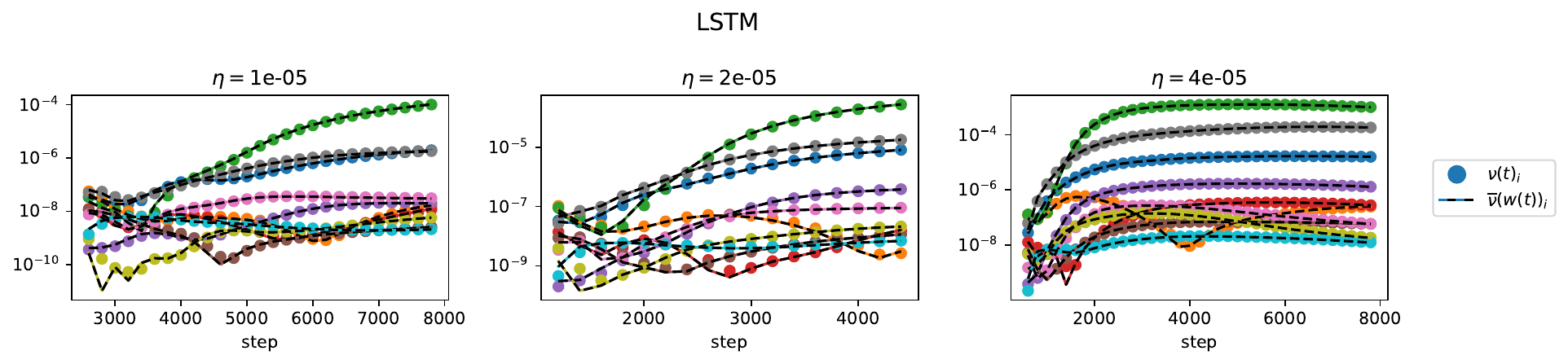}
\includegraphics[width=0.8\linewidth]{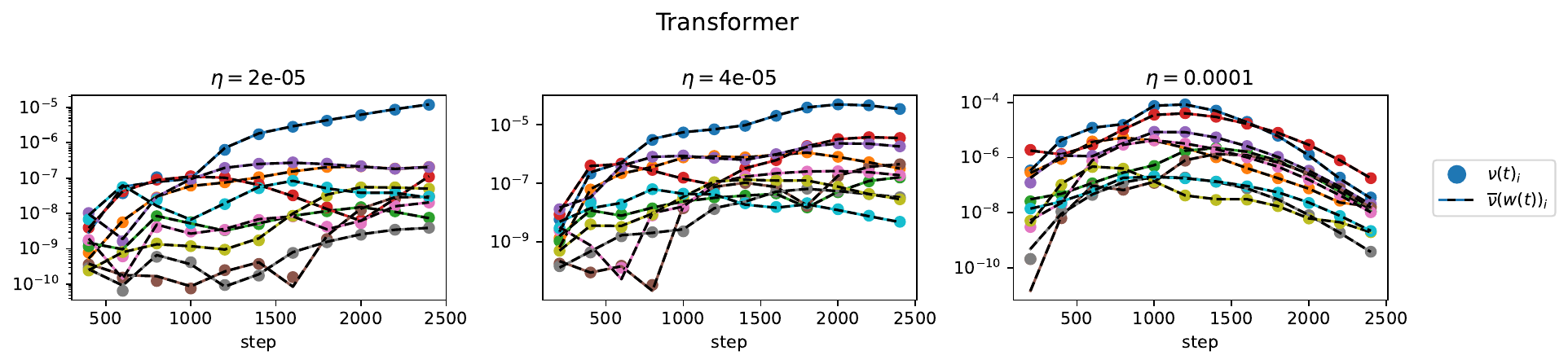}
\includegraphics[width=0.8\linewidth]{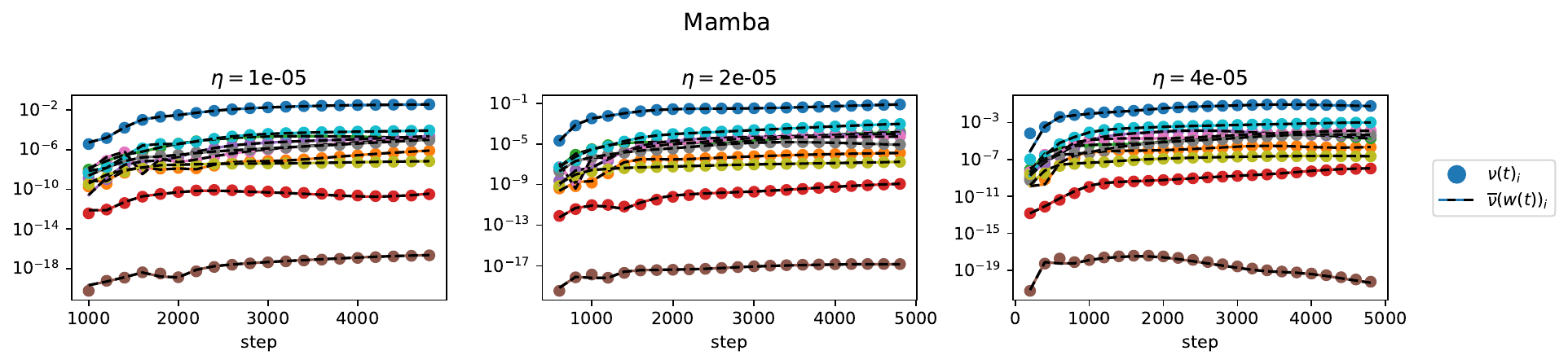}
    \caption{\textbf{Stationary EMA is accurate at a coordinate-wise level (MSE)}. While running the RMSProp central flow, starting at the time when training enters EOS, we plot the evolution of ten coordinates of the actual EMA $\nu(t)$ (dots) and the stationary EMA $\overline{\nu}(w(t))$ (half-black dashed lines).  Each color is a different coordinate, and the ten coordinates are uniformly spaced throughout the network. We can see that, starting soon after training reaches EOS, the stationary EMA $\overline{\nu}(w(t))$ becomes an excellent approximation to the real EMA $\nu(t)$, on a coordinatewise level.  We can also see that both the real EMA and the stationary EMA evolve significantly (in tandem) during this time.}
    \label{fig:nu-mse}
\end{figure}

\begin{figure}[H]
\centering
\includegraphics[width=0.8\linewidth]{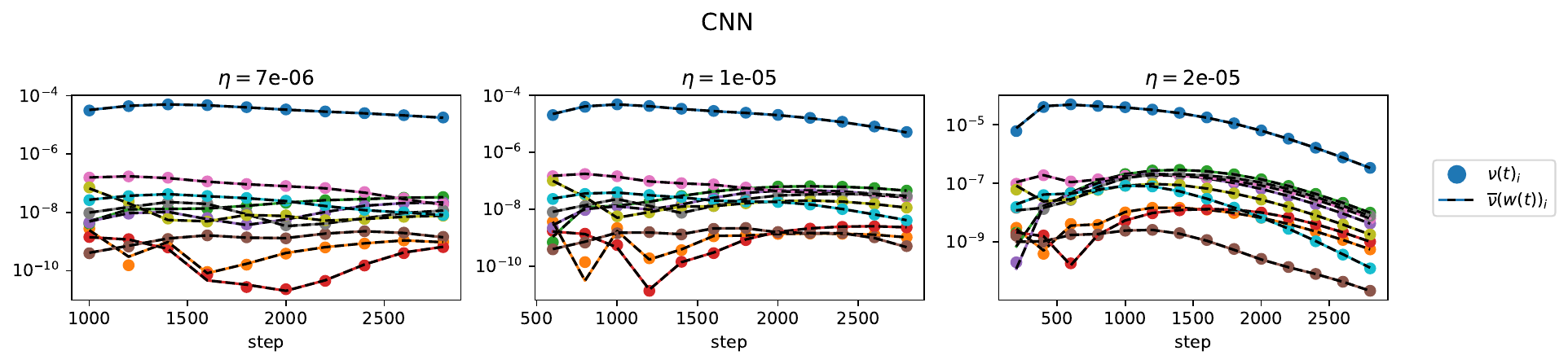}
\includegraphics[width=0.8\linewidth]{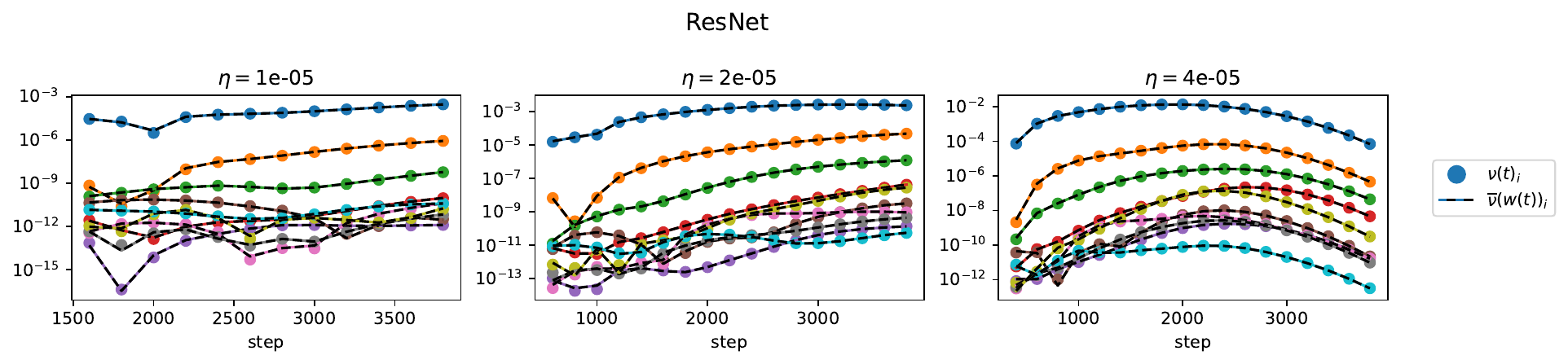}
\includegraphics[width=0.8\linewidth]{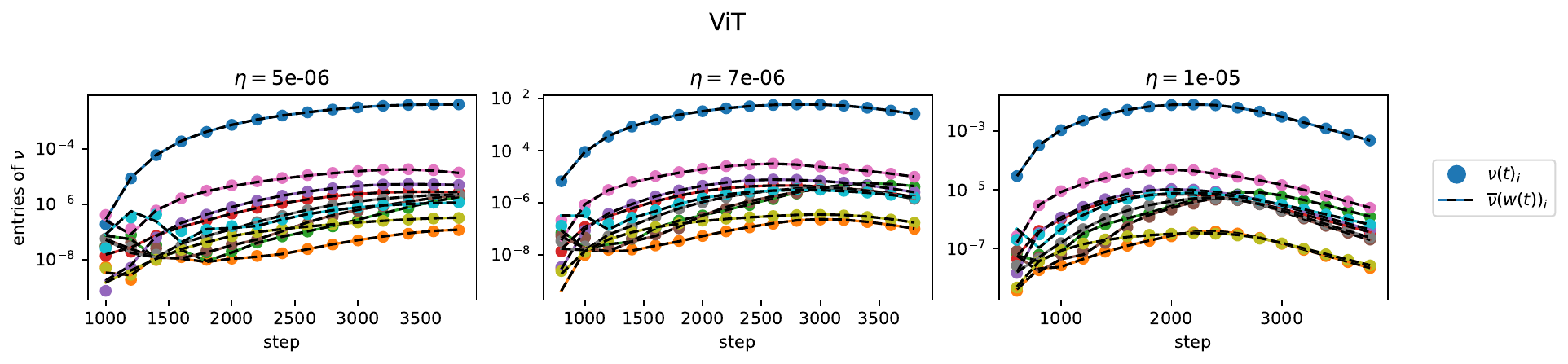}
\includegraphics[width=0.8\linewidth]{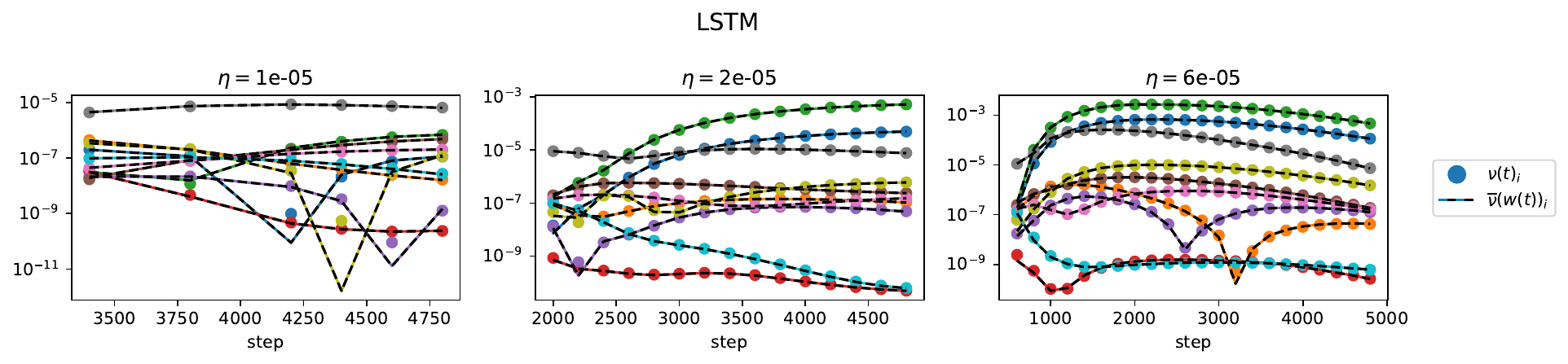}
\includegraphics[width=0.8\linewidth]{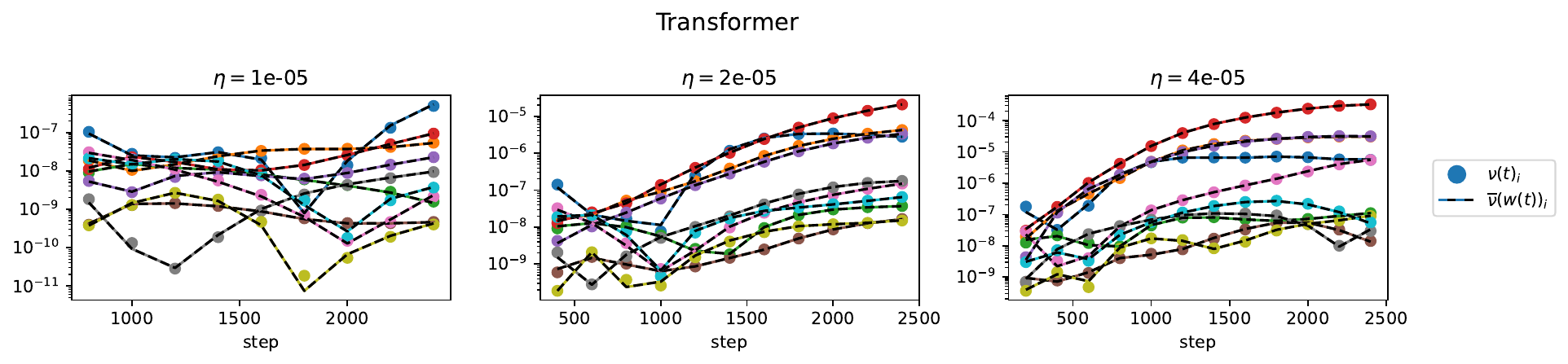}
\includegraphics[width=0.8\linewidth]{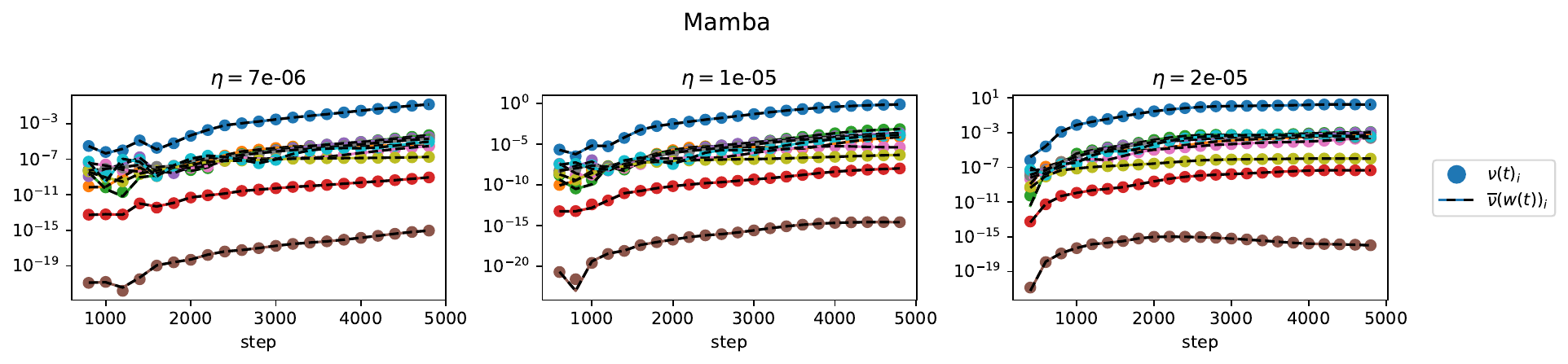}
    \caption{\textbf{Stationary EMA is accurate at a coordinate-wise level (CE)}. Analogous to \Cref{fig:nu-mse}, but for cross-entropy loss.}
    \label{fig:nu-ce}
\end{figure}
\end{letterfigures}

\begin{figure}[H]
\centering
\includegraphics[width=0.8\linewidth]{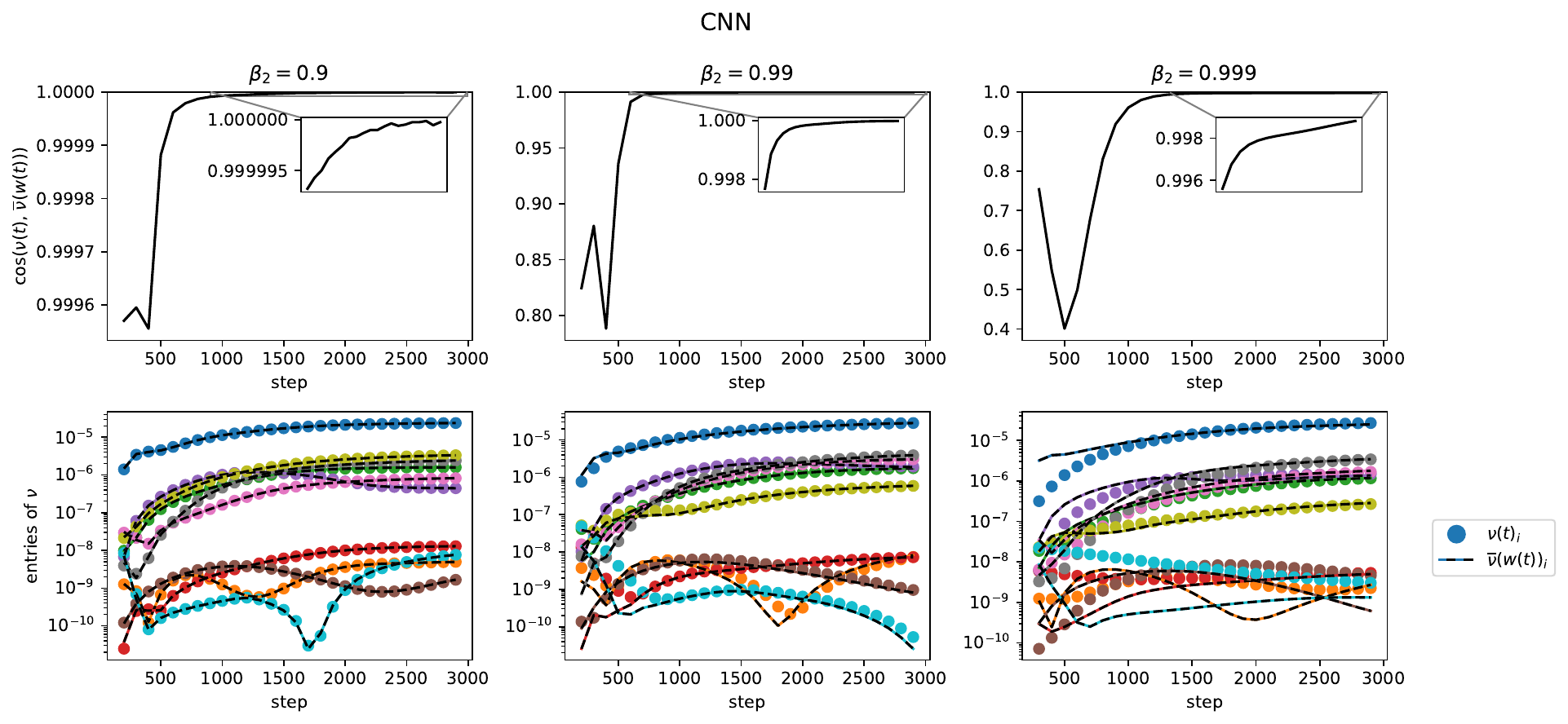}
    \caption{\textbf{Assessing how $\beta_2$ impacts the convergence of $
    \nu$ to stationarity.}  For multiple values of $\beta_2$ (columns), we monitor the closeness between $\nu(t)$ and the stationary value $\overline{\nu}(w(t))$ over time.  In particular, the top row reports the cosine similarity between these two vectors, and the bottom row compares ten individual coordinates (colors).  As one might expect, we see that when $\beta_2$ is smaller, $\nu(t)$ converges faster to $\overline{\nu}(w(t))$ and the ultimate similarity is higher.   Conversely, when $\beta_2$ is at the highest value of 0.999, some of the coordinates are noticeably off (e.g. the teal, brown, and orange coordinates.) \textit{Details:} a CNN is trained on a subset of CIFAR-10 using MSE loss at $\eta = $ 2e-5,  $\beta_2 \in \{0.9, 0.99, 0.999\}$, $\epsilon = $1e-8, and bias correction. }
    \label{fig:experiments:rmsprop:beta2-stationarity}
\end{figure}

\begin{letterfigures}
\begin{figure}[H]
\centering
\includegraphics[width=0.8\linewidth]{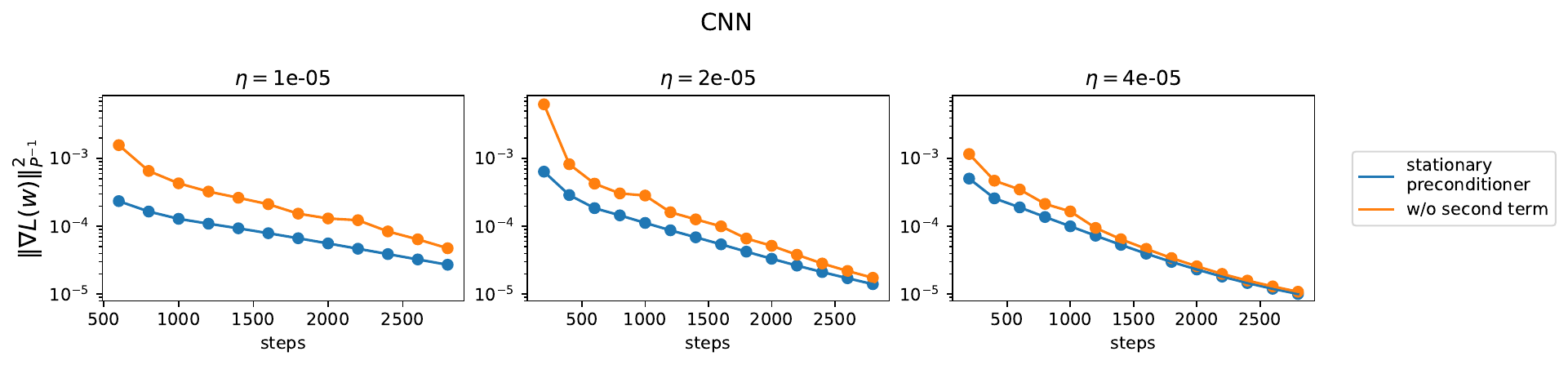}
\includegraphics[width=0.8\linewidth]{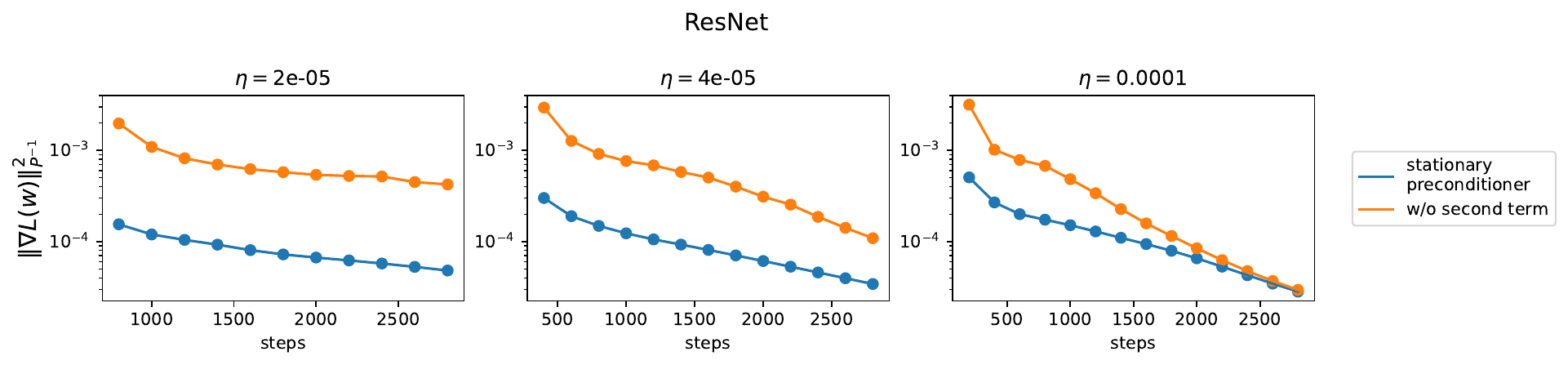}
\includegraphics[width=0.8\linewidth]{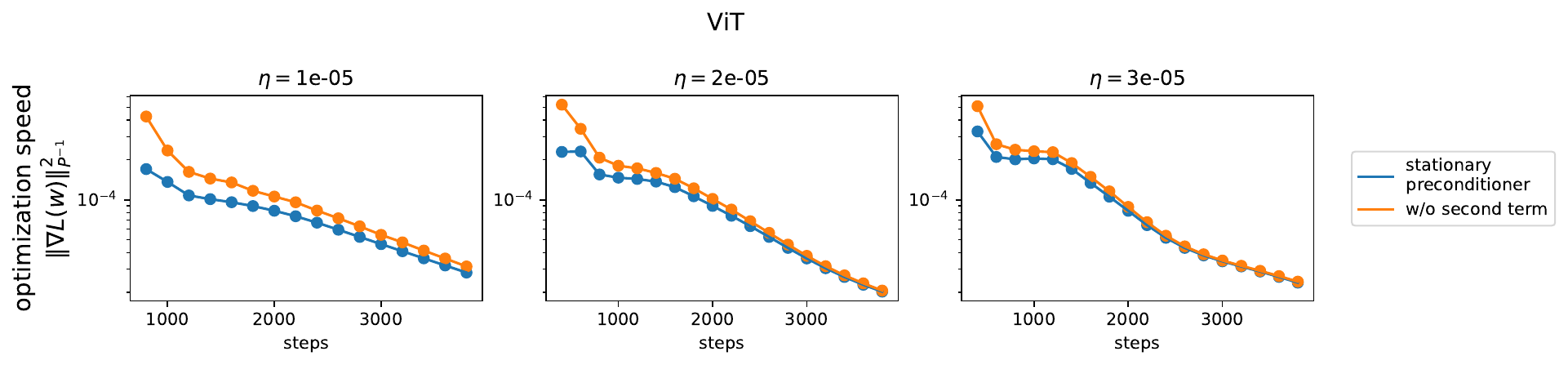}
\includegraphics[width=0.8\linewidth]{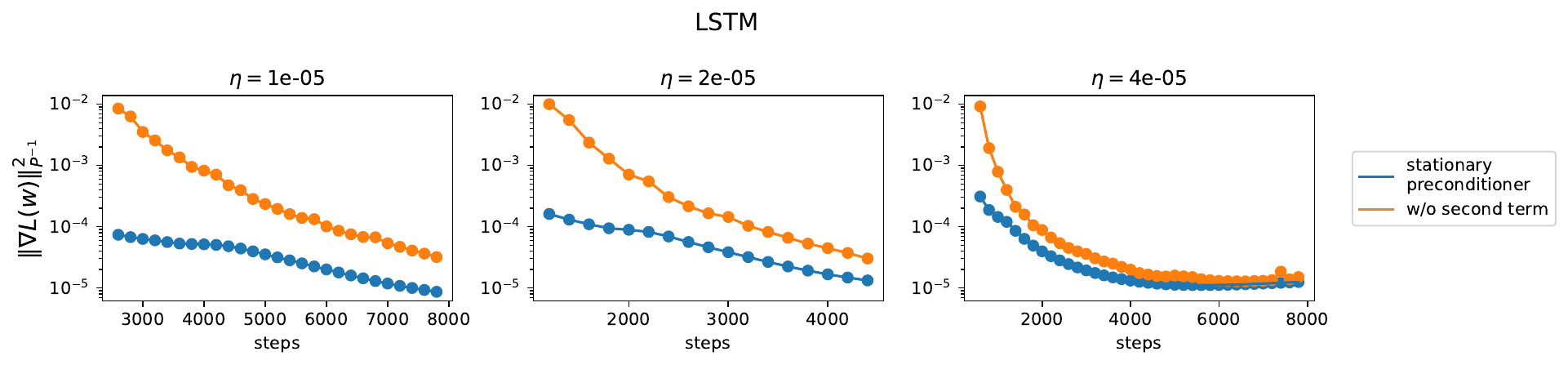}
\includegraphics[width=0.8\linewidth]{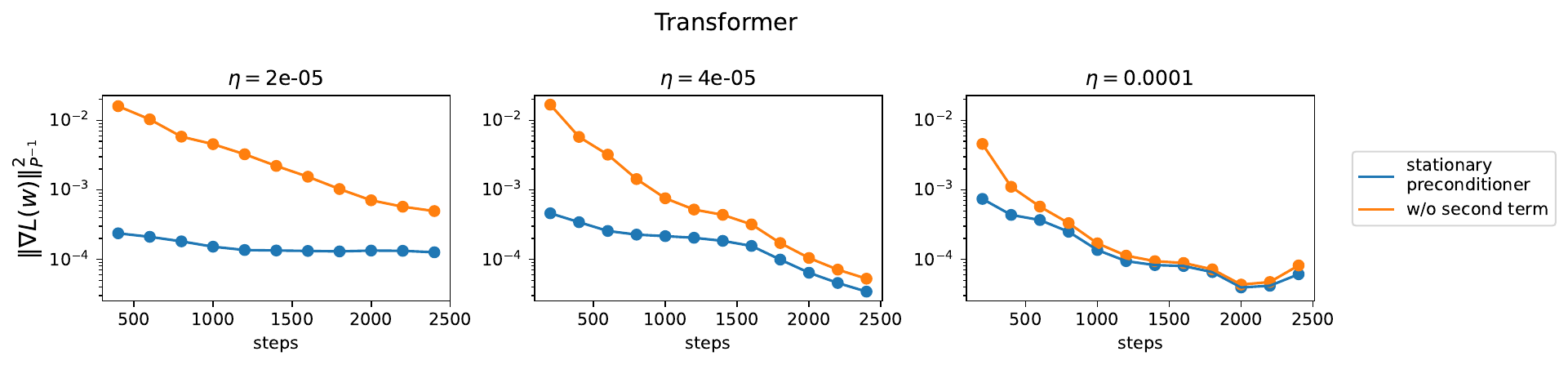}
\includegraphics[width=0.8\linewidth]{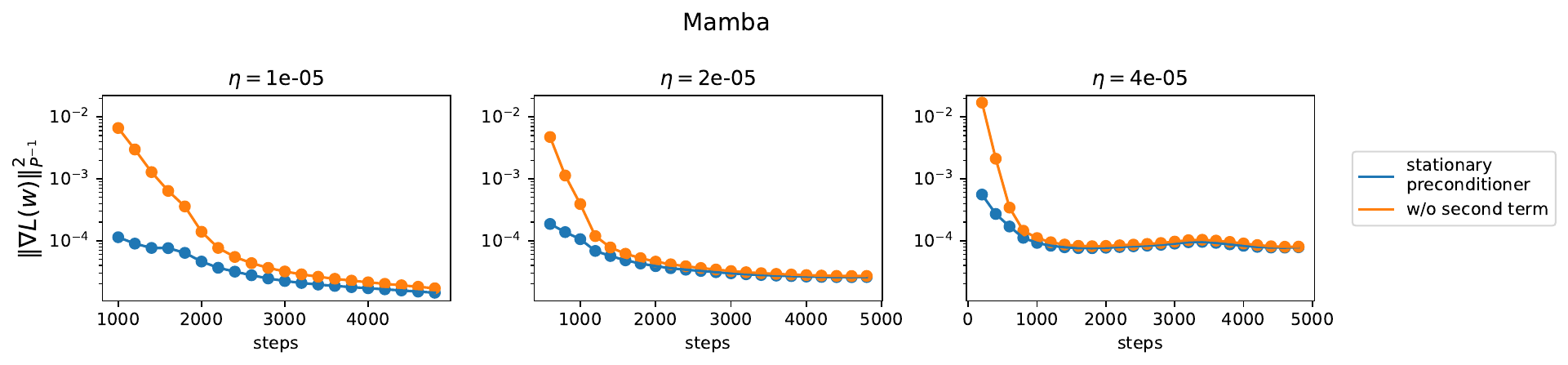}
    \caption{\textbf{RMSProp stationary preconditioner is suboptimal (MSE)}.  We compare the RMSProp stationary preconditioner, defined as the solution to the optimization problem \cref{eq:rmsprop_nu_convex_program}, to an alternative preconditioner defined as the solution to \cref{eq:rmsprop_nu_first_term_only}, a similar optimization problem but without the second term in the objective.  We assess each preconditioner $P$ by reporting $\| \nabla L(w) \|_{P^-1}^2 $, the instantaneous rate of decrease in the loss under the preconditioned gradient flow with preconditioner $P$. Observe that this value is higher under the alternative preconditioner (orange) than under the RMSProp stationary preconditioner (blue), meaning that the alternative preconditioner would decrease the loss faster.  The gap between the two preconditioners tends to be smaller when $\eta$ is larger, which is reasonable because the second term in \cref{eq:rmsprop_nu_convex_program} is proportional to $\frac{1}{\eta^2}$.}
\label{fig:stationary-suboptimal-mse}
\end{figure}

\begin{figure}[H]
\centering
\includegraphics[width=0.8\linewidth]{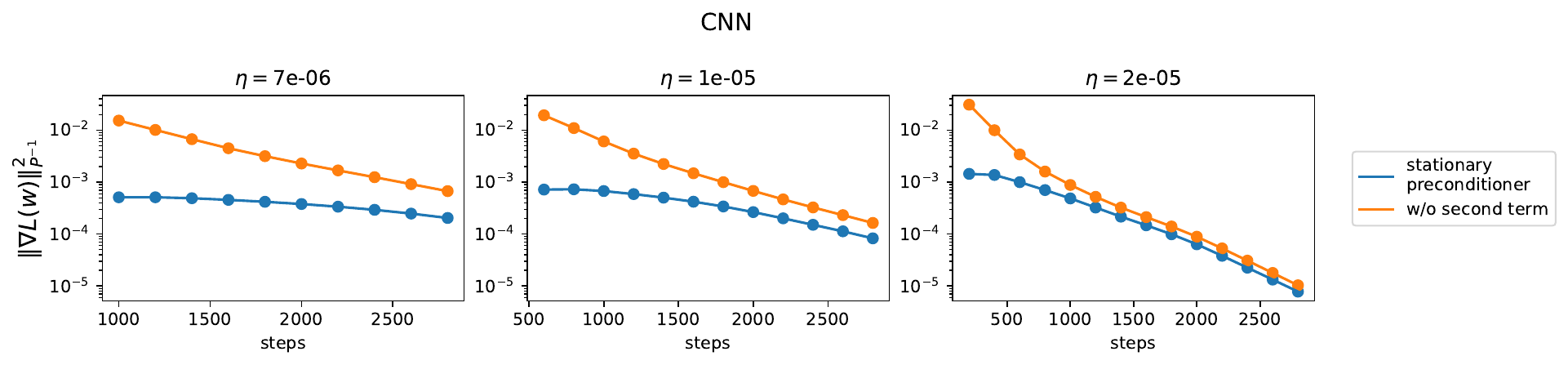}
\includegraphics[width=0.8\linewidth]{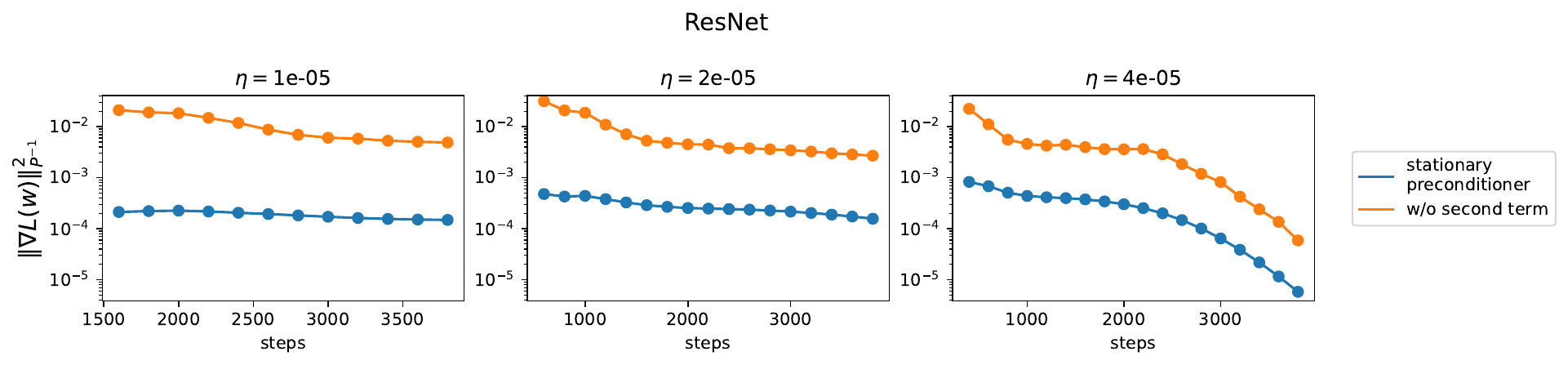}
\includegraphics[width=0.8\linewidth]{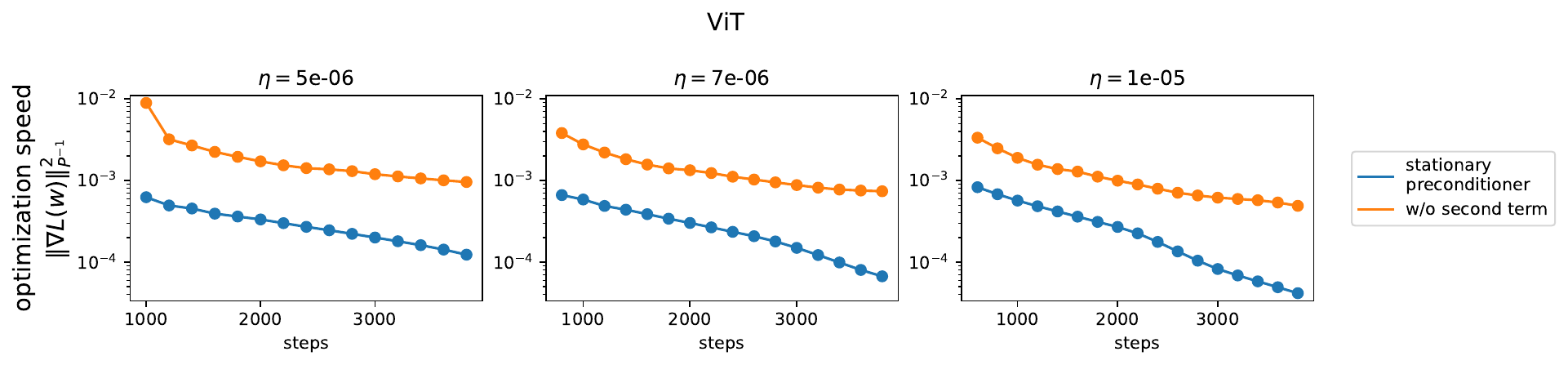}
\includegraphics[width=0.8\linewidth]{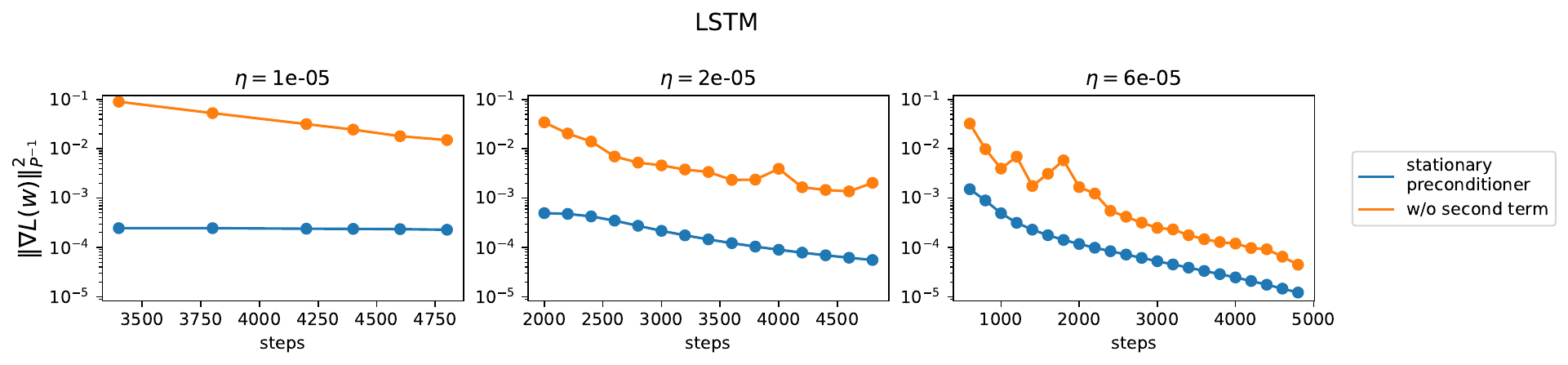}
\includegraphics[width=0.8\linewidth]{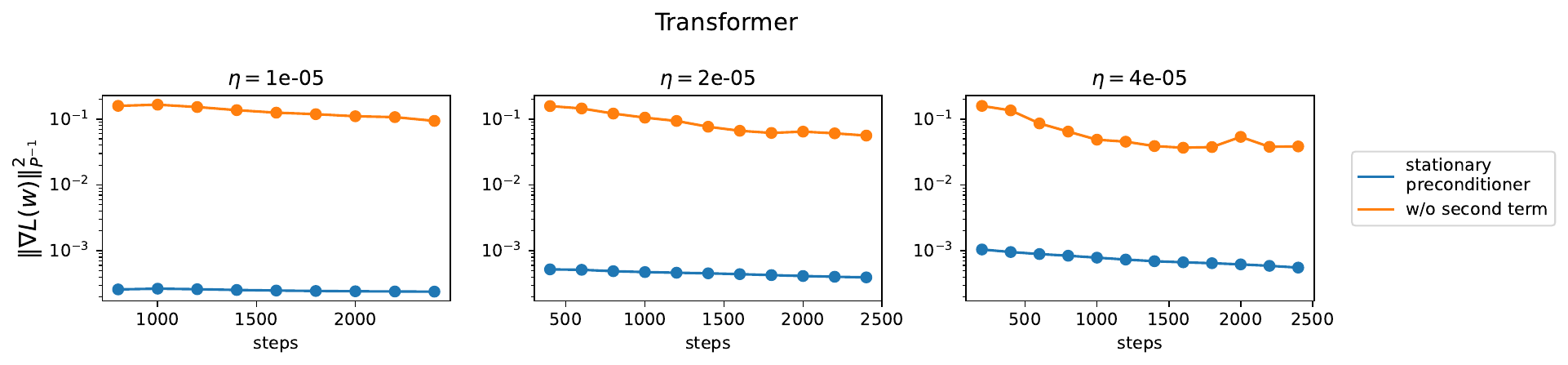}
\includegraphics[width=0.8\linewidth]{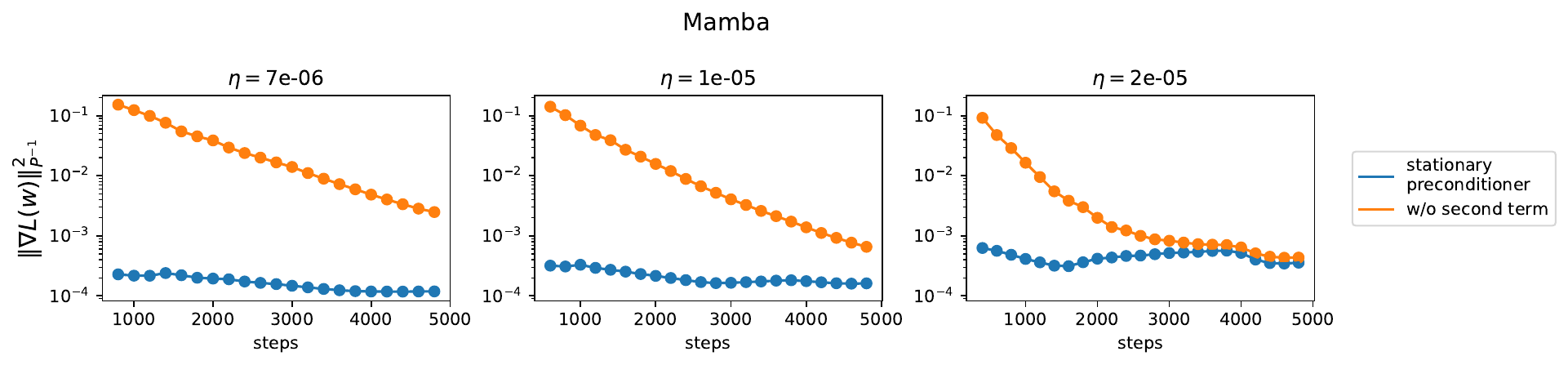}
    \caption{\textbf{RMSProp stationary preconditioner is suboptimal (cross-entropy)}. This figure is analogous to \Cref{fig:stationary-suboptimal-mse}, but for cross-entropy loss. }
    \label{fig:stationary-suboptimal-ce}
\end{figure}
\end{letterfigures}

\begin{letterfigures}
\begin{figure}[H]
\centering
\includegraphics[width=0.8\linewidth]{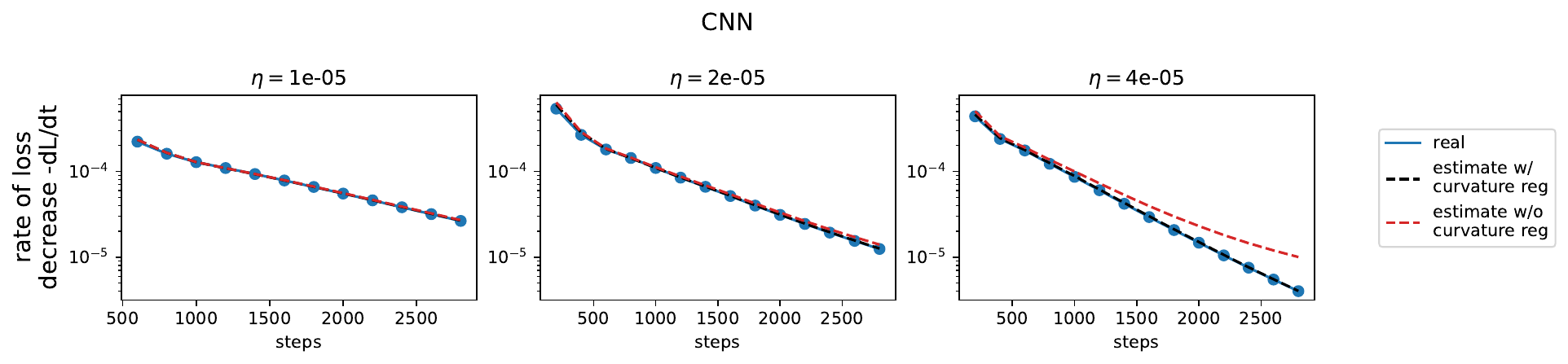}
\includegraphics[width=0.8\linewidth]{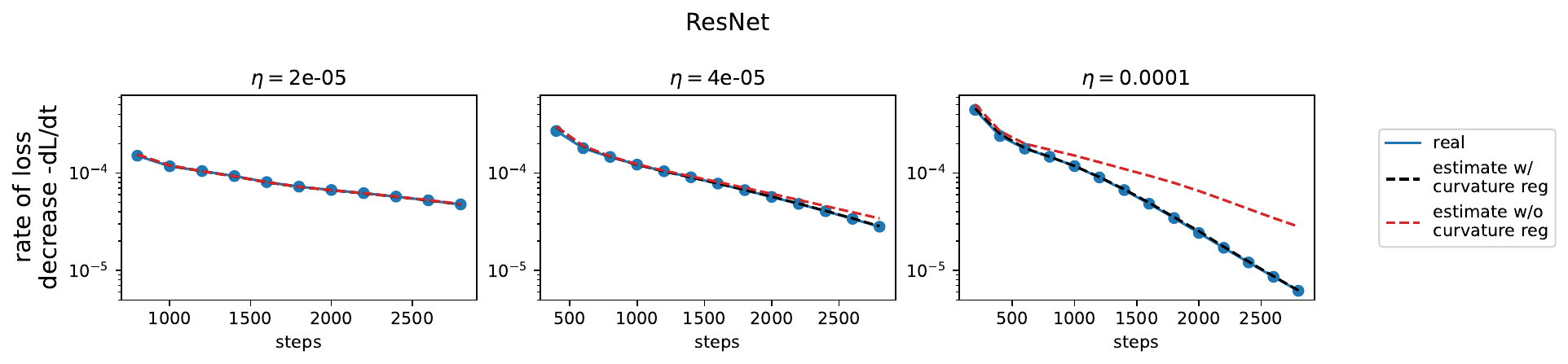}
\includegraphics[width=0.8\linewidth]{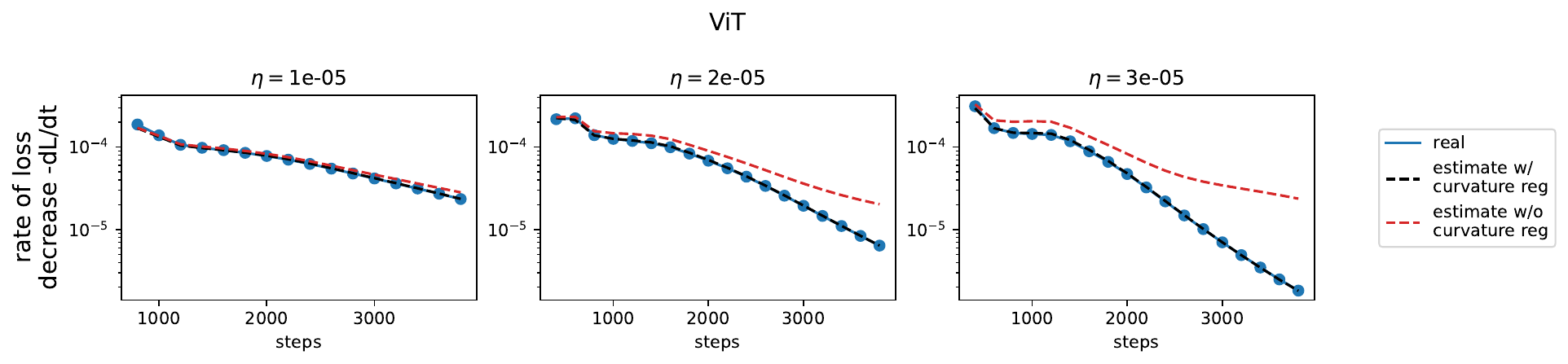}
\includegraphics[width=0.8\linewidth]{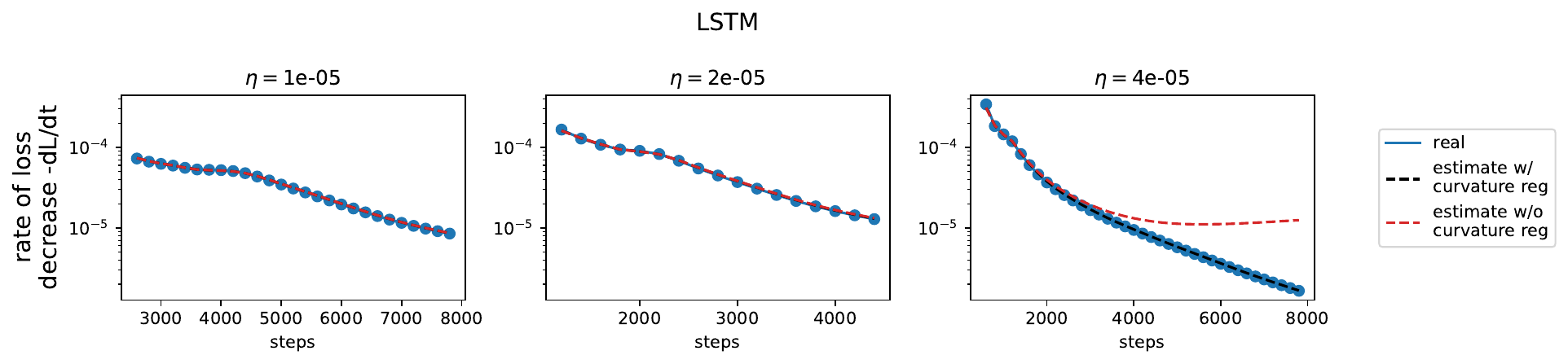}
\includegraphics[width=0.8\linewidth]{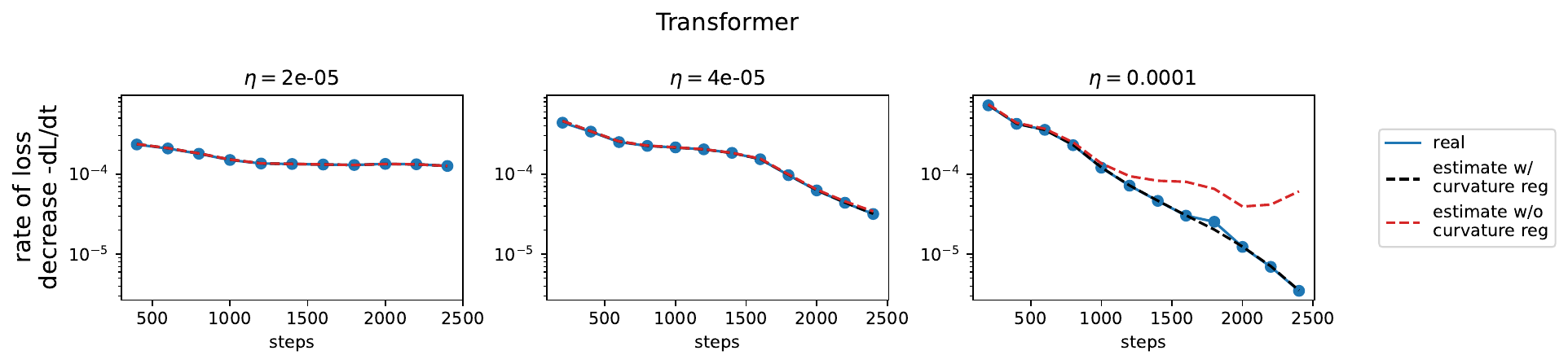}
\includegraphics[width=0.8\linewidth]{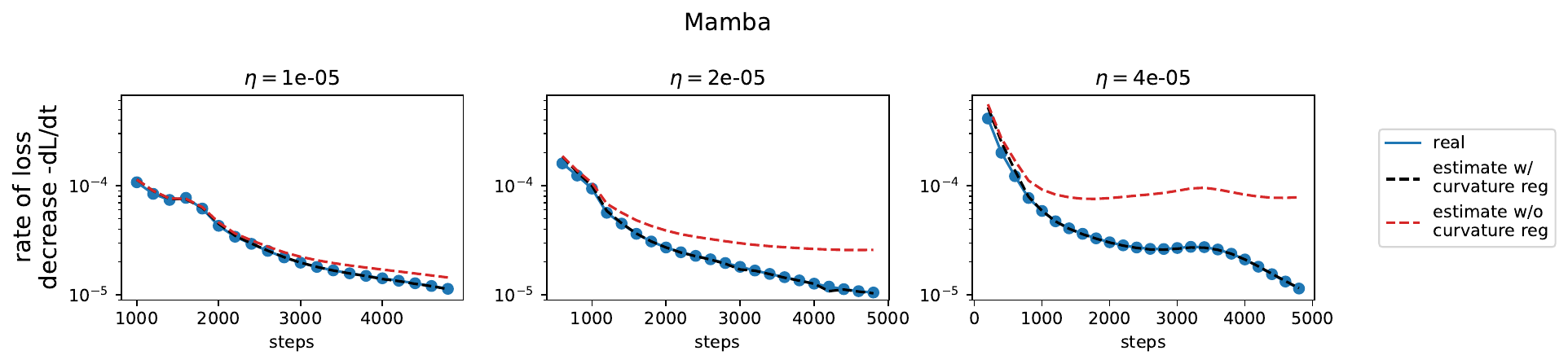}
    \caption{\textbf{Stationary flow accurately predicts the instantaneous speed of optimization (MSE)}.  The stationary flow \cref{eq:rmsprop-stationary-flow}, which incorporates an implicit curvature regularizer, predicts (black) the rate of loss decrease $-\tfrac{dL}{dt}$ (blue) more accurately than a naive estimate $\| \nabla L(w)\|^2_{\overline{P}^{-1}(w)}$ (in red) which uses the stationary preconditioner but does not incorporate curvature regularization.   Observe that the gap between the two estimates is larger when $\eta$ is larger, suggesting that, like Scalar RMSProp, the implicit regularization of RMSProp increases in strength with $\eta$.}
    \label{fig:optspeed-mse}
\end{figure}

\begin{figure}[H]
\centering
\includegraphics[width=0.8\linewidth]{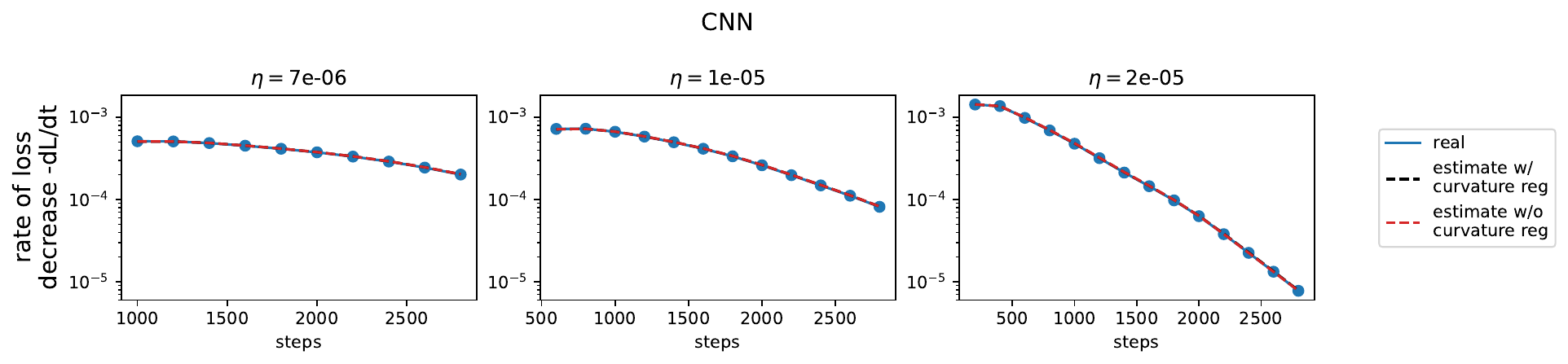}
\includegraphics[width=0.8\linewidth]{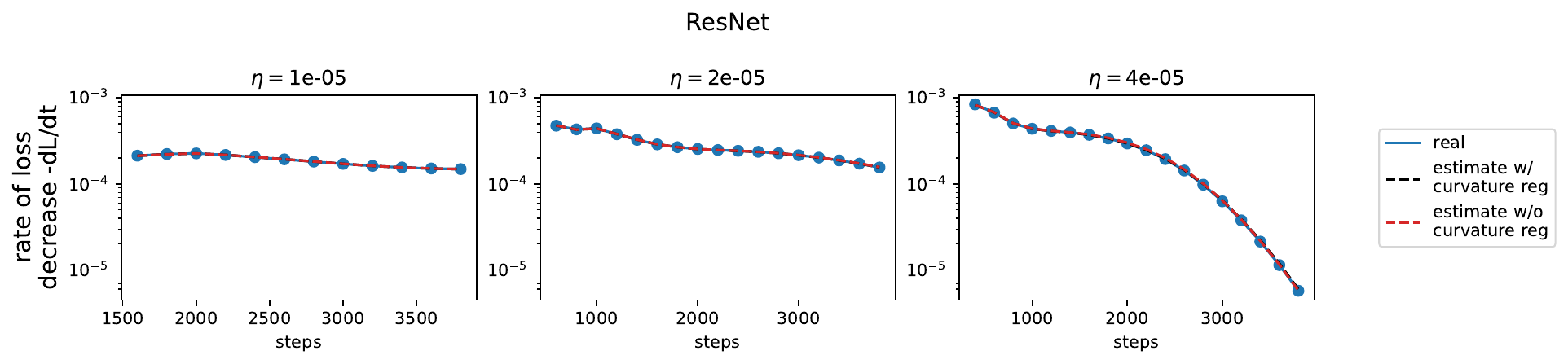}
\includegraphics[width=0.8\linewidth]{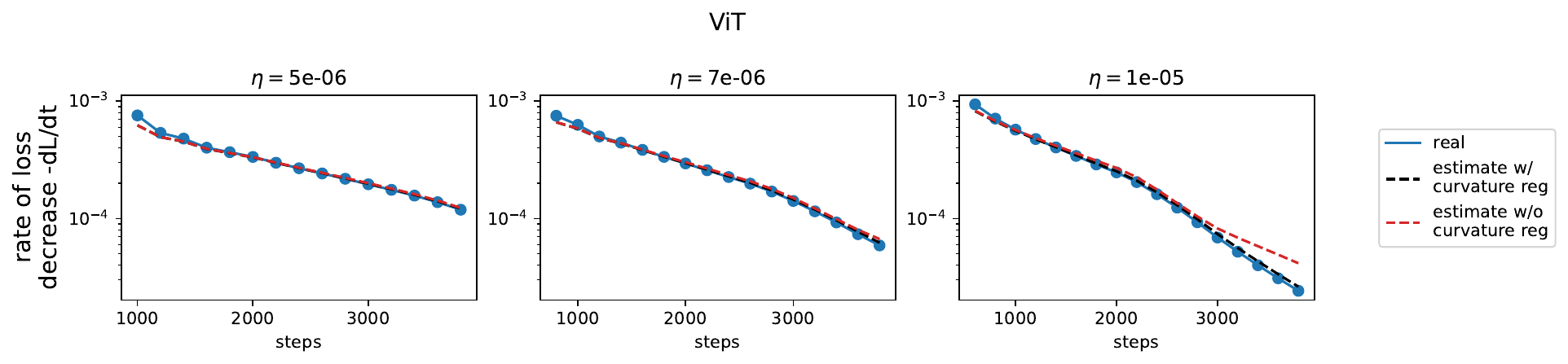}
\includegraphics[width=0.8\linewidth]{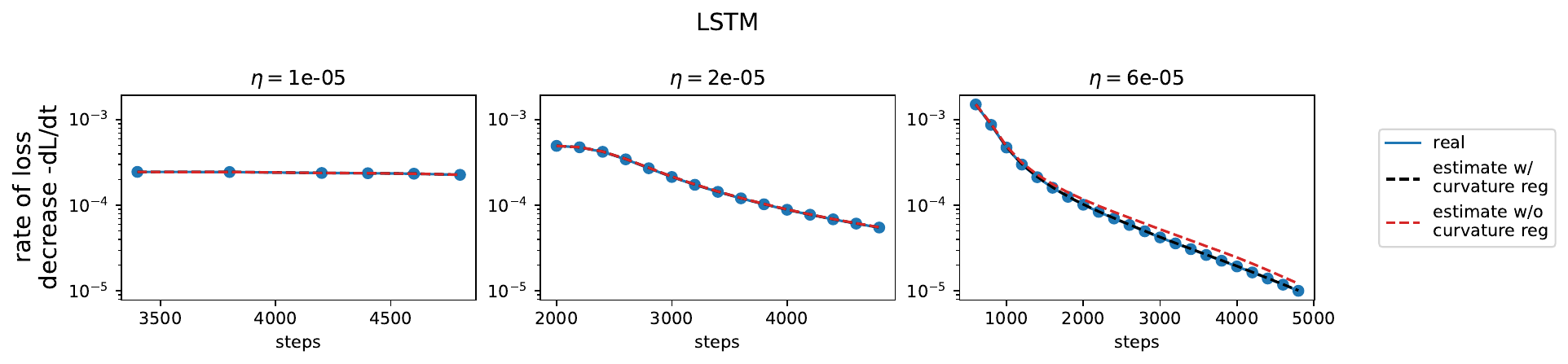}
\includegraphics[width=0.8\linewidth]{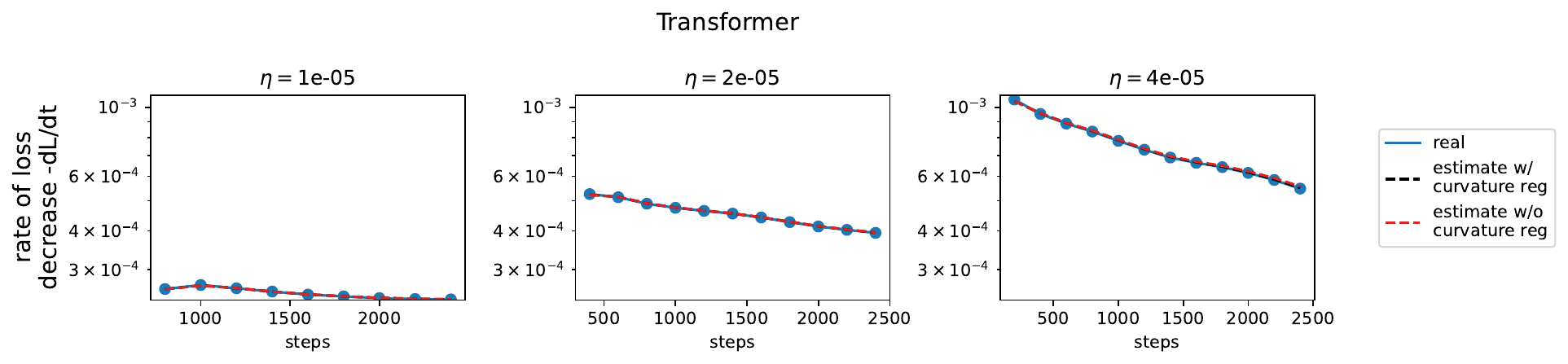}
\includegraphics[width=0.8\linewidth]{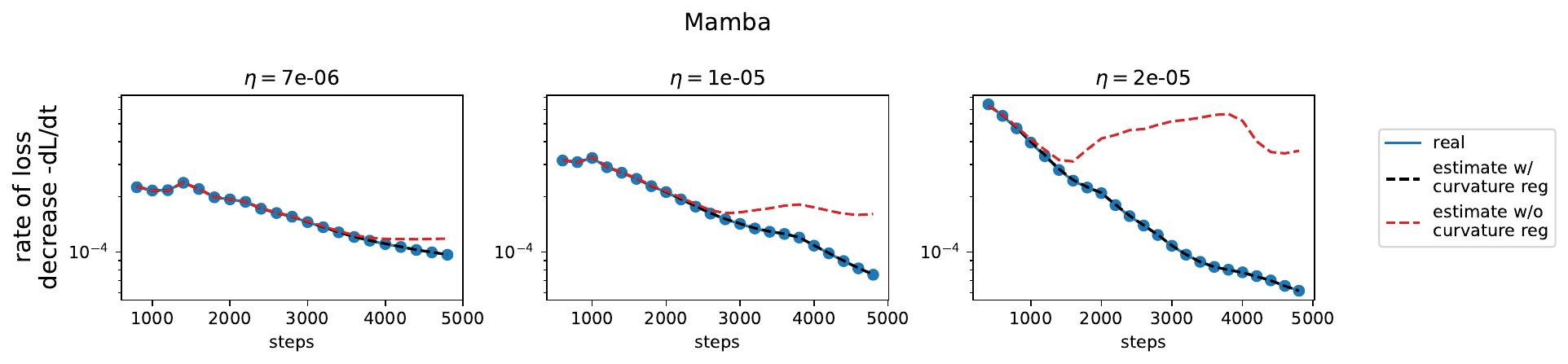}
    \caption{\textbf{Stationary flow accurately predicts the instantaneous speed of optimization (cross-entropy)}.  Same as \Cref{fig:optspeed-mse}, but with cross-entropy loss.}
    \label{fig:optspeed-ce}
\end{figure}
\end{letterfigures}

\begin{letterfigures}
\begin{figure}[H]
\centering
\includegraphics[width=\linewidth]{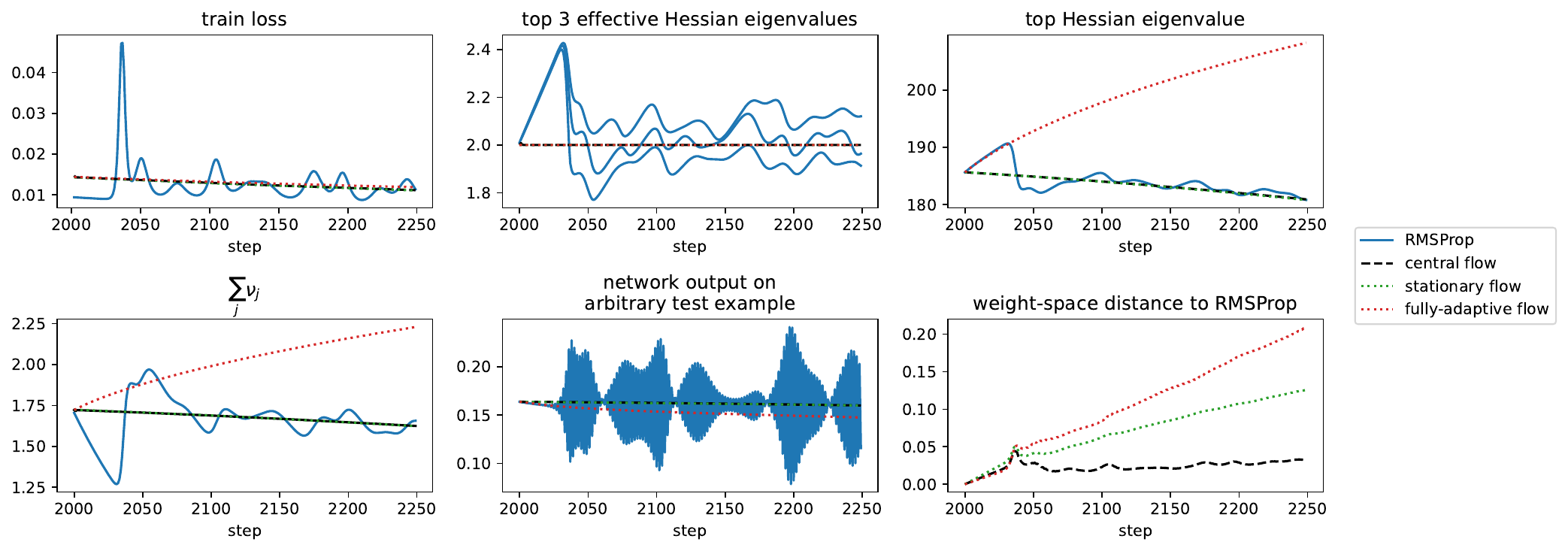}
    \caption{\textbf{Stationary flow can be accurate over moderate timescales}.  Starting at a point during training when $\nu$ has reached stationarity, we run the stationary flow \cref{eq:rmsprop-stationary-flow} (in green) alongside both RMSProp (in blue) and the central flow (in black).  As a baseline, we also run an ablated version of the stationary flow (in red) which adapts using the stationary $\nu$ but does not implicitly regularize curvature.  Observe that the stationary flow accurately tracks the central flow (and, in turn, RMSProp), whereas the baseline is a worse approximation.  This experiment uses a CNN trained on a subset of CIFAR-10 using MSE loss with hyperparameters $\eta = $ 4e-05, $\beta_2 = $ 0.99, and $\epsilon = $1e-8.}
    \label{fig:stationary-flow-cnn-mse}
\end{figure}

\begin{figure}[H]
\centering
\includegraphics[width=\linewidth]{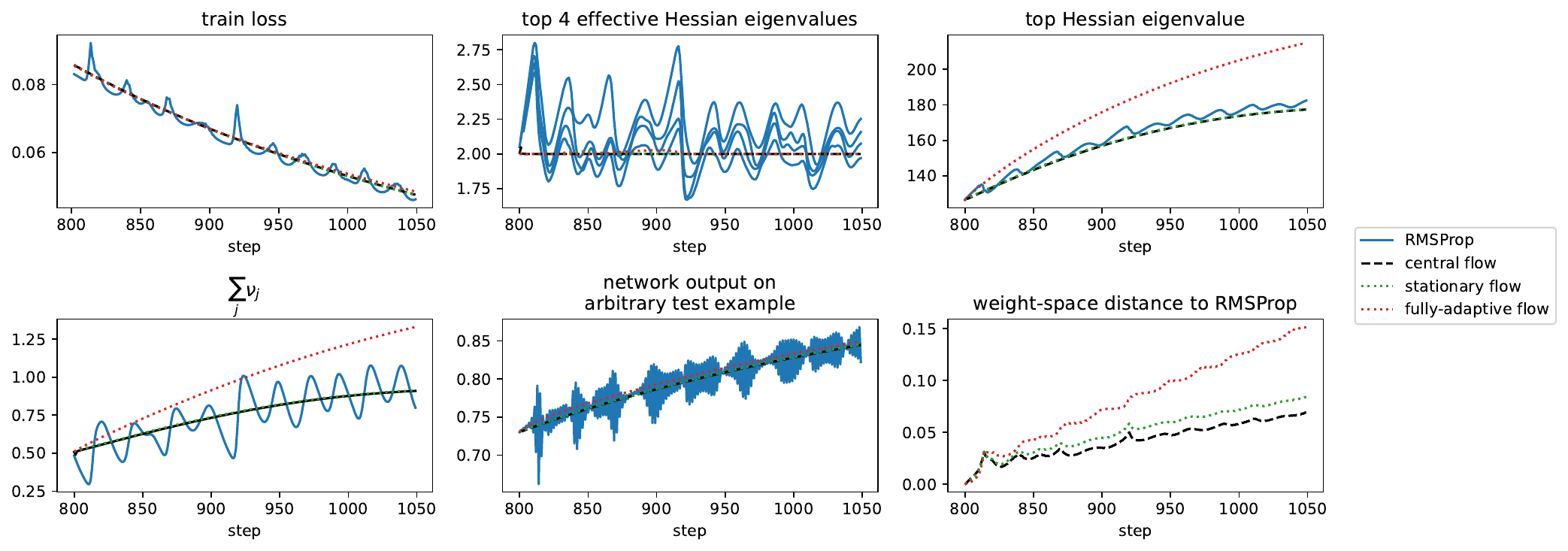}
    \caption{\textbf{Stationary flow can be accurate over moderate timescales}.  Same as \Cref{fig:stationary-flow-cnn-mse}, but using a Transformer trained on a synthetic sequence task using MSE loss with hyperparameters $\eta = $ 1e-4, $\beta_2 = $ 0.95, and $\epsilon = $1e-8.}
    \label{fig:stationary-flow-transformer-mse}
\end{figure}
\end{letterfigures}

\clearpage

\newpage
\section{Bulk Experimental Data}
\label{sec:bulk-experiments}
This section contains the bulk experimental data from our central flow experiments:
\begin{itemize}
    \item \Cref{sec:bulk-experiments-gd} contains \gd experiments. See \Cref{fig:experiments:gd:example} for a fully annotated example of a \gd trajectory.
    \item \Cref{sec:bulk-experiments-scalar-rmsprop} contains \rmsnorm experiments. See \Cref{fig:experiments:scalar-rmsprop:example} for a fully annotated example of a \rmsnorm trajectory.
    \item \Cref{sec:bulk-experiments-rmsprop} contains \rmsprop experiments. See \Cref{fig:experiments:rmsprop:example} for a fully annotated example of a \rmsprop trajectory.
\end{itemize}

\subsection{Gradient Descent}
\label{sec:bulk-experiments-gd}
\begin{figure}[H]
    \centering
    \includegraphics[width=0.8\linewidth]{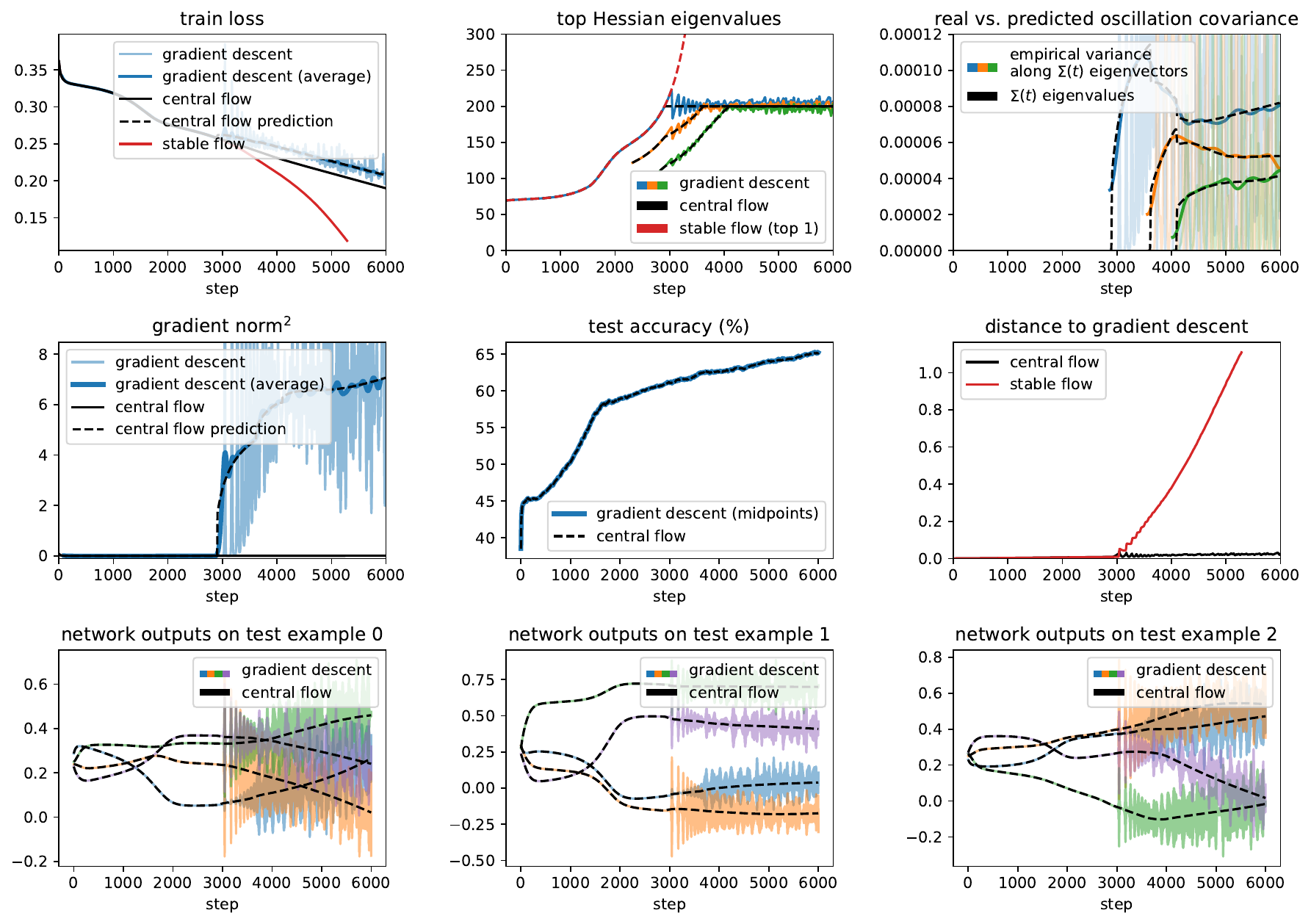}
    \caption{\small \textbf{Annotated example of a gradient descent experiment.} Using gradient descent with $\eta = 2/200$, we train a ViT on a subset of CIFAR-10.  The central flow (black) accurately models the trajectory of gradient descent (blue), whereas gradient flow (red) takes a different path.  As described in \Cref{appendix:experimental-details:implementation}, we terminate gradient flow once the sharpness gets too high.
    \\\textbf{Top left}: The loss along the central flow (solid black) decreases monotonically, whereas the loss along the \gd trajectory (light blue) behaves non-monotonically once the dynamics enter EOS.   While the \gd loss is higher than the central flow loss, the central flow can accurately predict the \emph{time-averaged} loss along the \gd trajectory, using  \cref{eq:gd-predict-loss} (dashed black); this can be seen to match the empirical time average of the \gd loss curve (dark blue).  Finally, the train loss along gradient flow (in red) decreases faster, because it follows a different, unregularized path.  \\\textbf{Top center}: We plot the top three Hessian eigenvalues under gradient descent (colors) and under the central flow (black).  Under GD, the top Hessian eigenvalues equilibrate around $2/\eta$; under the central flow they are fixed exactly at $2/\eta$. In red, we plot the top Hessian eigenvalue under the gradient flow, which rises beyond $2/\eta$. Note that for GD, we report the Hessian eigenvalues at the second-order midpoints (see \Cref{appendix:experimental-details:implementation}), rather than at the iterates themselves, as this makes for clearer plots.
    \\\textbf{Top right}: We show that the central flow's $\Sigma(t)$ accurately predicts the covariance of the oscillations.  In black, we plot the nonzero eigenvalues of $\Sigma(t)$; the number is always the same as the number of Hessian eigenvalues at $2/\eta$.   In faint colors, we plot the squared magnitude of the displacement between \gd and the central flow along each eigenvector of $\Sigma(t)$. In thick colors, we plot the time-averages of these displacements, i.e. the empirical variance of the oscillations along each eigenvector of $\Sigma(t)$.  Observe that the eigenvalues of $\Sigma(t)$ accurately predict the instantaneous variance of the oscillations along the corresponding eigenvectors, as we expect from \cref{eq:gd-predict-oscillation-variance}.
    \\\textbf{Middle left}: We plot the squared gradient norm along the \gd trajectory (light blue) and its empirical time-average (dark blue). In dashed black, we plot the central flow's prediction \cref{eq:gd-predict-gradient-norm-sq} for the time-averaged squared gradient norm along the trajectory; this prediction is quite accurate. In solid black, we plot the squared gradient norm along the central flow, which is much smaller, indicating that that most of the gradient norm comes from the oscillations.
    \\\textbf{Middle center}: We plot the test accuracy under gradient descent (blue) and the central flow (black).  For \gd, we report the test accuracy at second-order midpoints, as this removes much of the oscillations. Because the central flow matches the \gd trajectory, the test accuracy is nearly the same across both trajectories.
    \\\textbf{Middle right}: The Euclidean distance in weight space between \gd and the central flow (black) stays small over time, indicating that these two trajectories stay close.  By contrast, the distance between \gd and the \emph{gradient} flow (red) grows rapidly once the dynamics enter EOS.
    \\\textbf{Bottom row}: We show the network's final-layer predictions on three arbitrary examples.  Under \gd (colors) these predictions oscillate due to the oscillations in weight space.  Under the central flow (black), the predictions evolve smoothly while still following the same macroscopic path.} 
    \label{fig:experiments:gd:example}
\end{figure}

\begin{specialfigures}

    \begin{figure}[H]
        \centering
        \includegraphics[width=0.8\linewidth]{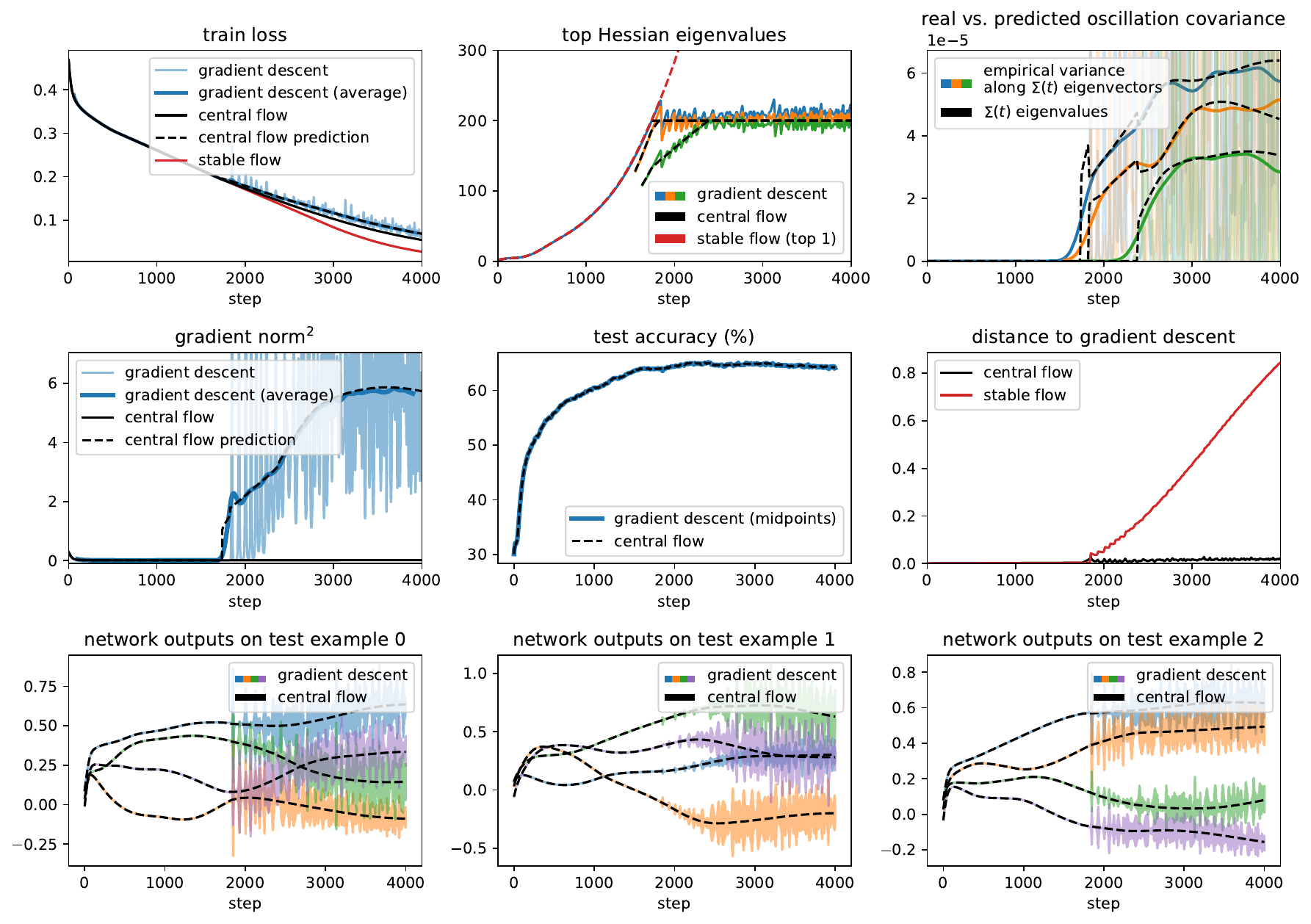}
        \caption{Gradient descent central flow for a CNN with MSE loss, $\eta=$ 0.005.}
        \label{fig:bulk-gd:mse-cnn-0}
    \end{figure}
                
    \begin{figure}[H]
        \centering
        \includegraphics[width=0.8\linewidth]{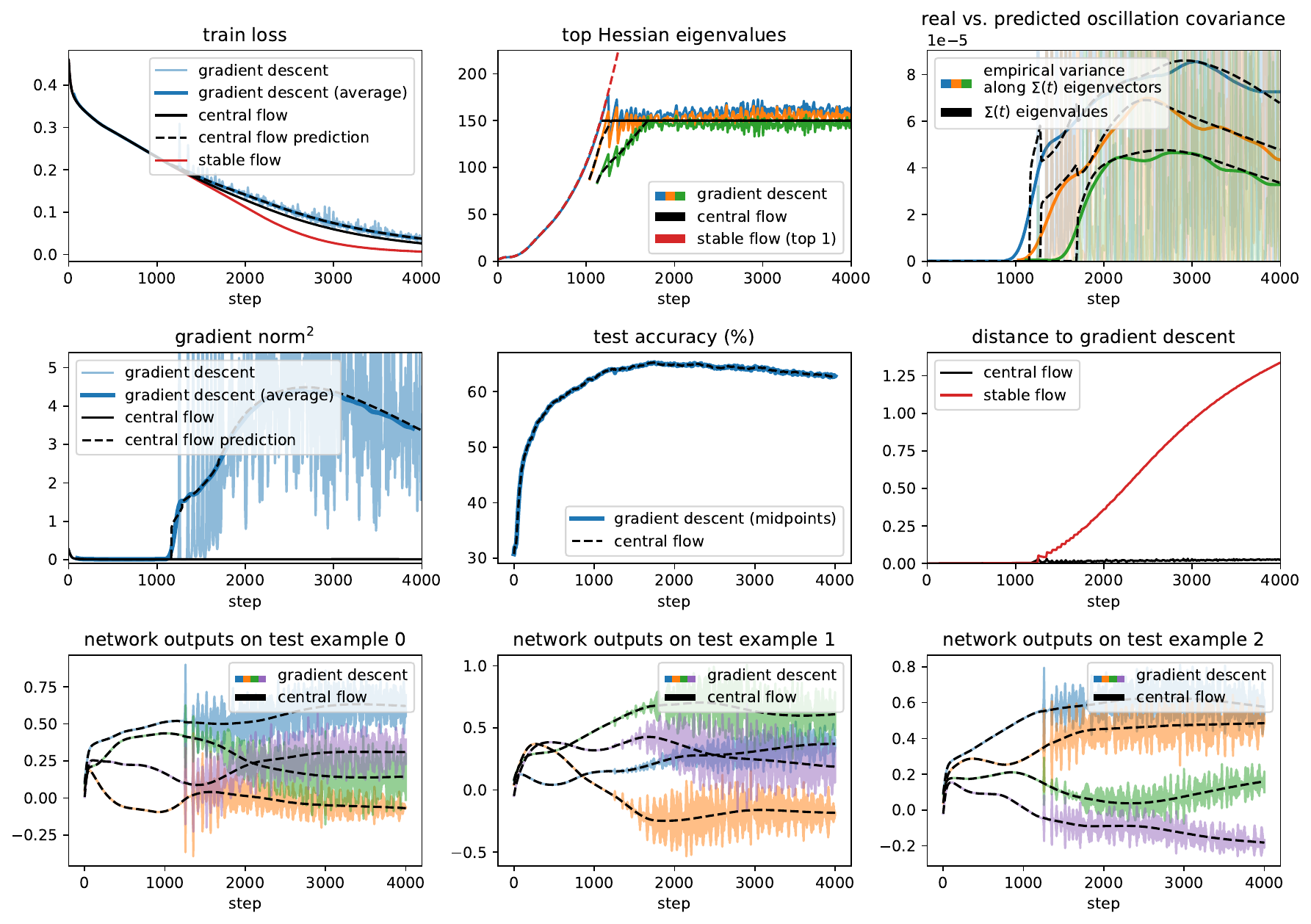}
        \caption{Gradient descent central flow for a CNN with MSE loss, $\eta=$ 0.006666.}
        \label{fig:bulk-gd:mse-cnn-1}
    \end{figure}
                
    \begin{figure}[H]
        \centering
        \includegraphics[width=0.8\linewidth]{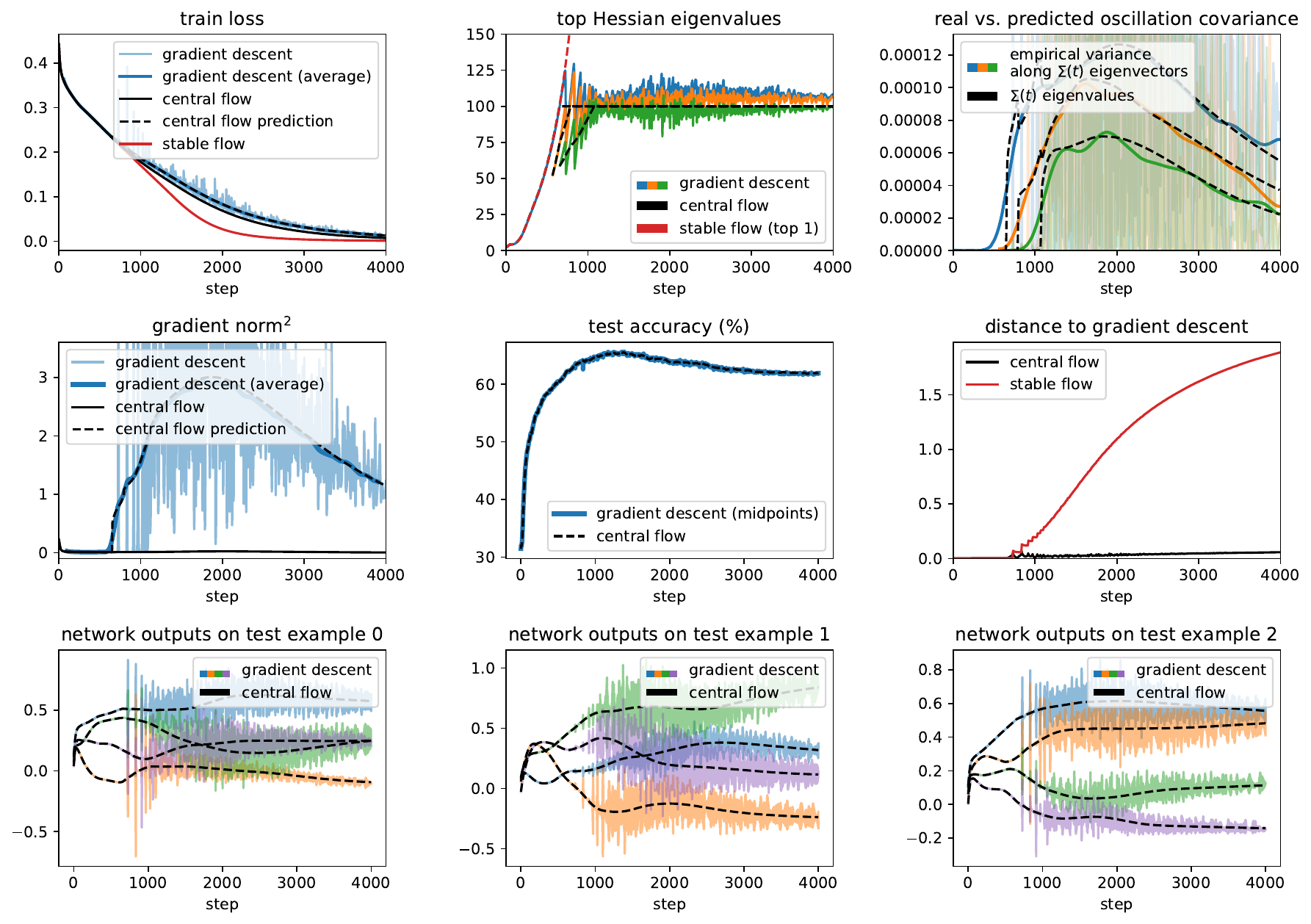}
        \caption{Gradient descent central flow for a CNN with MSE loss, $\eta=$ 0.01.}
        \label{fig:bulk-gd:mse-cnn-2}
    \end{figure}
                
    \begin{figure}[H]
        \centering
        \includegraphics[width=0.8\linewidth]{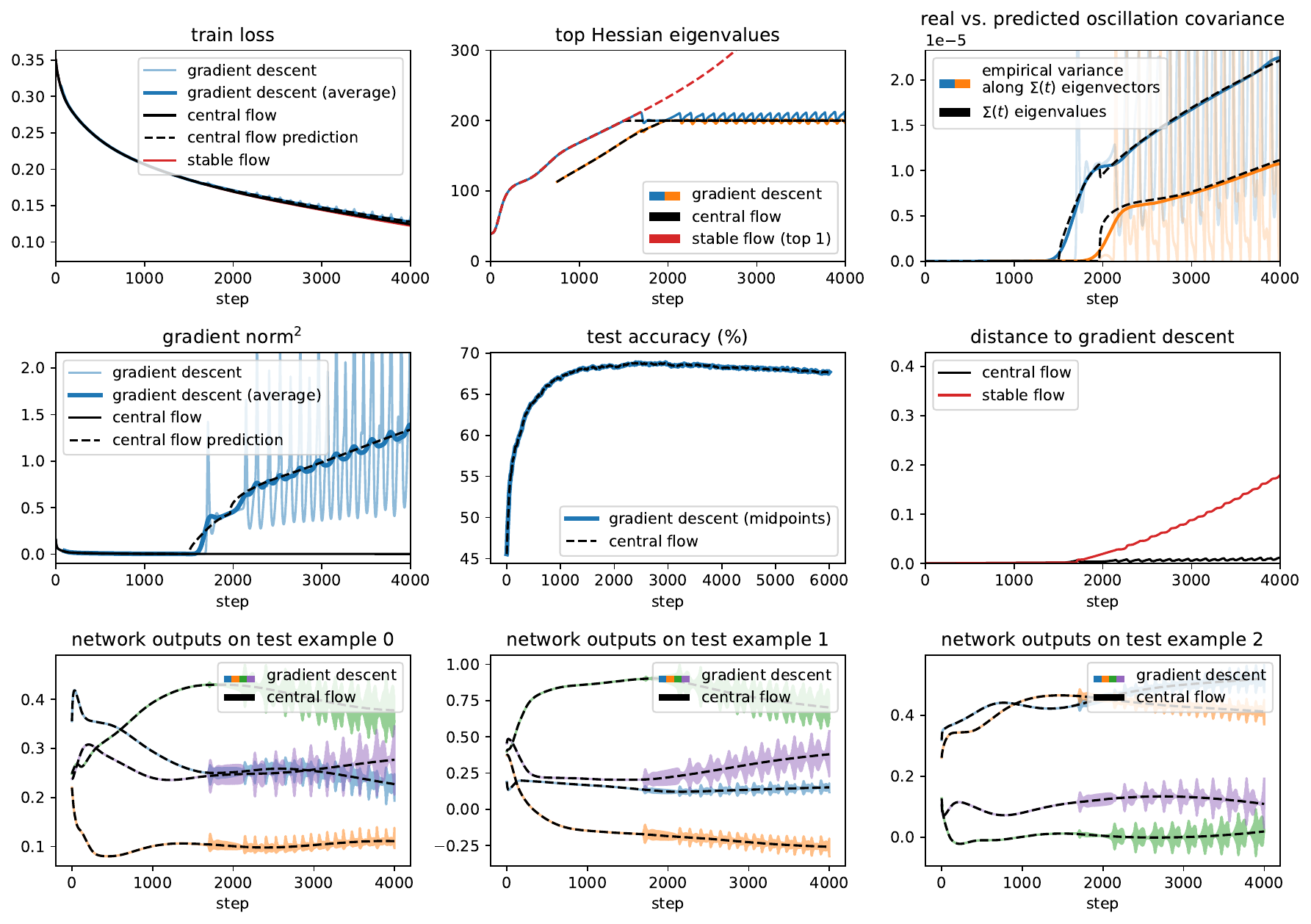}
        \caption{Gradient descent central flow for a ResNet with MSE loss, $\eta=$ 0.01.}
        \label{fig:bulk-gd:mse-resnet-0}
    \end{figure}
                
    \begin{figure}[H]
        \centering
        \includegraphics[width=0.8\linewidth]{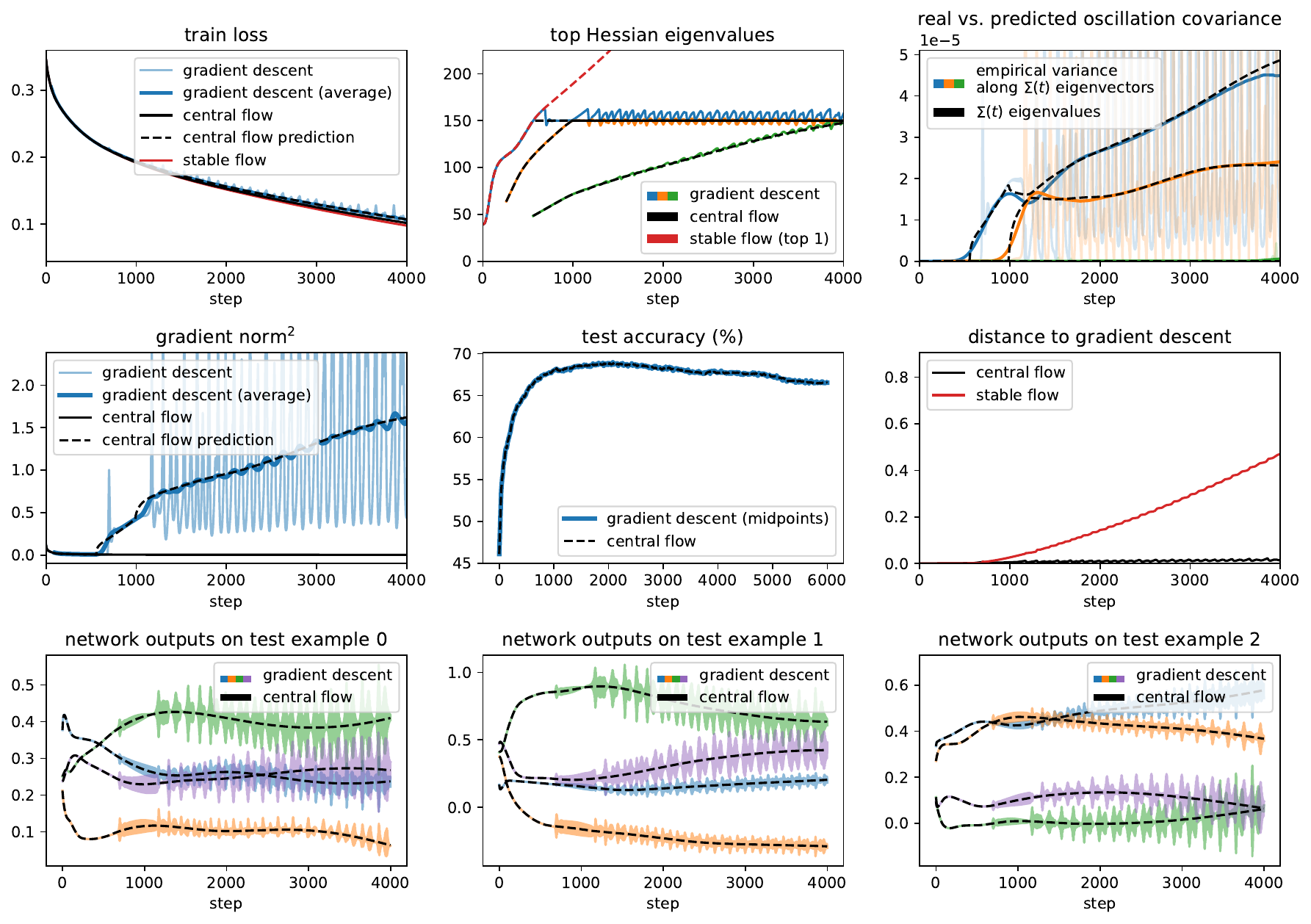}
        \caption{Gradient descent central flow for a ResNet with MSE loss, $\eta=$ 0.013333.}
        \label{fig:bulk-gd:mse-resnet-1}
    \end{figure}
                
    \begin{figure}[H]
        \centering
        \includegraphics[width=0.8\linewidth]{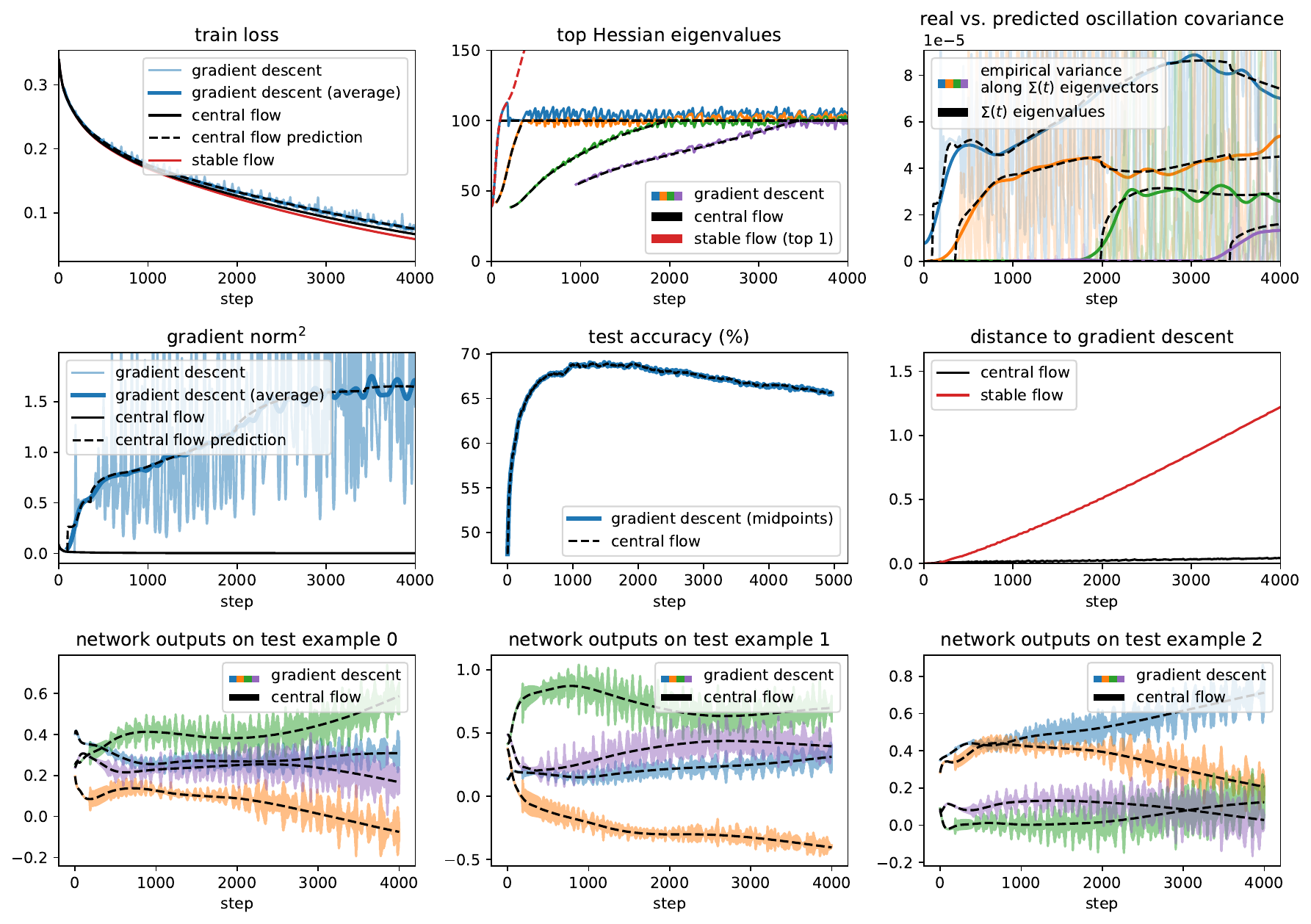}
        \caption{Gradient descent central flow for a ResNet with MSE loss, $\eta=$ 0.02.}
        \label{fig:bulk-gd:mse-resnet-2}
    \end{figure}
                
    \begin{figure}[H]
        \centering
        \includegraphics[width=0.8\linewidth]{images/bulk-gd/mse-vit-0.pdf}
        \caption{Gradient descent central flow for a ViT with MSE loss, $\eta=$ 0.01.}
        \label{fig:bulk-gd:mse-vit-0}
    \end{figure}
                
    \begin{figure}[H]
        \centering
        \includegraphics[width=0.8\linewidth]{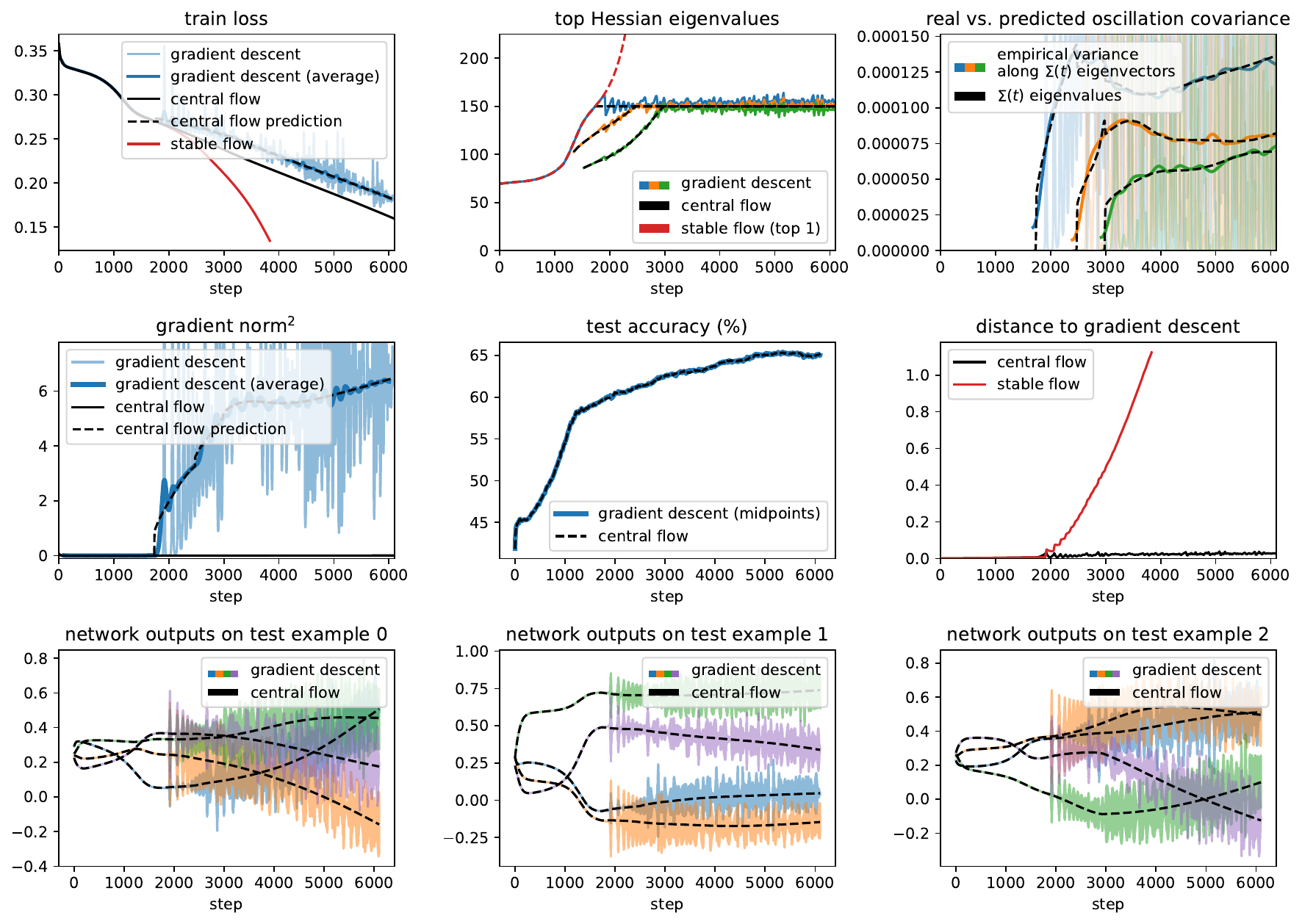}
        \caption{Gradient descent central flow for a ViT with MSE loss, $\eta=$ 0.013333.}
        \label{fig:bulk-gd:mse-vit-1}
    \end{figure}
                
    \begin{figure}[H]
        \centering
        \includegraphics[width=0.8\linewidth]{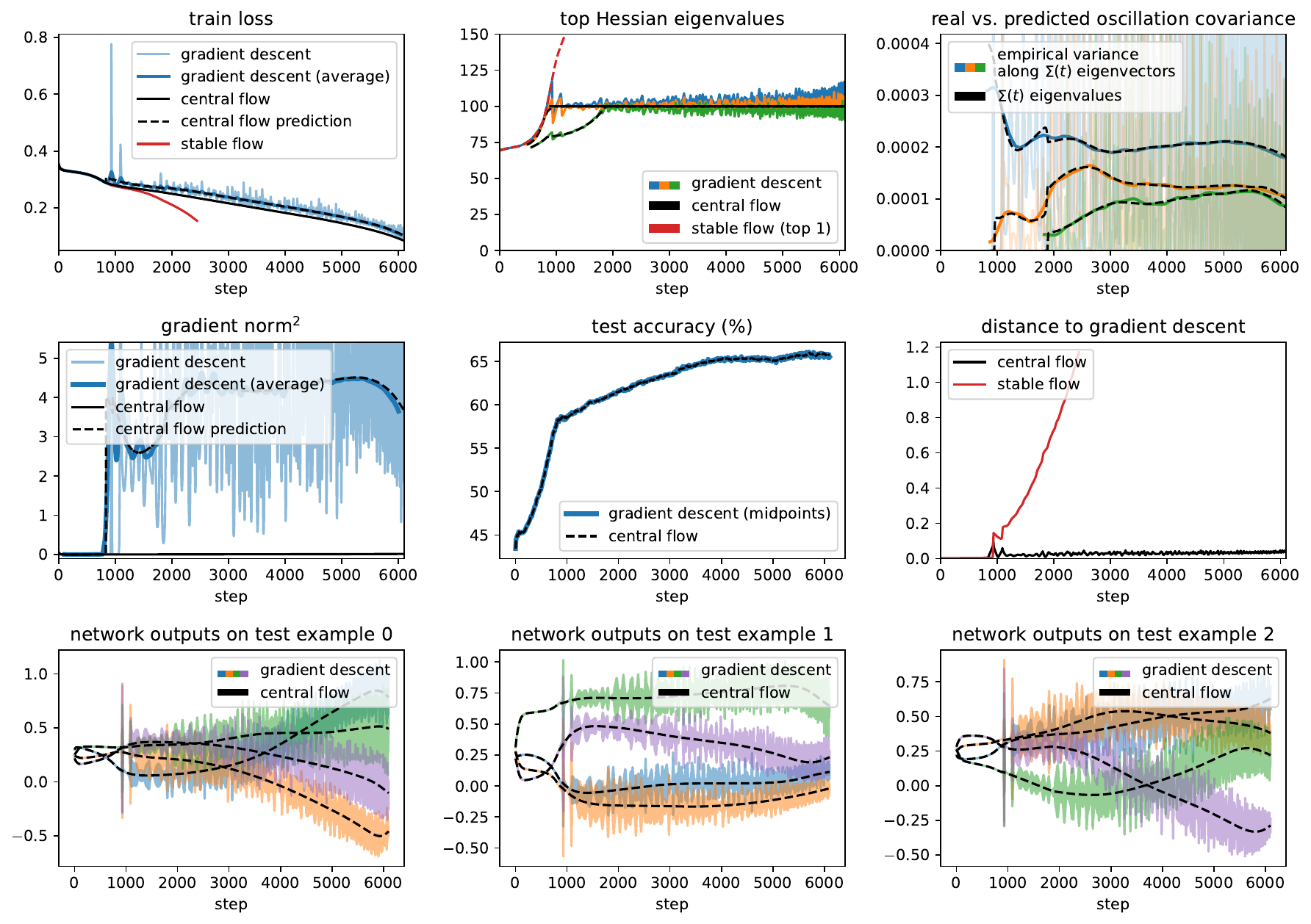}
        \caption{Gradient descent central flow for a ViT with MSE loss, $\eta=$ 0.02.}
        \label{fig:bulk-gd:mse-vit-2}
    \end{figure}
                
    \begin{figure}[H]
        \centering
        \includegraphics[width=0.8\linewidth]{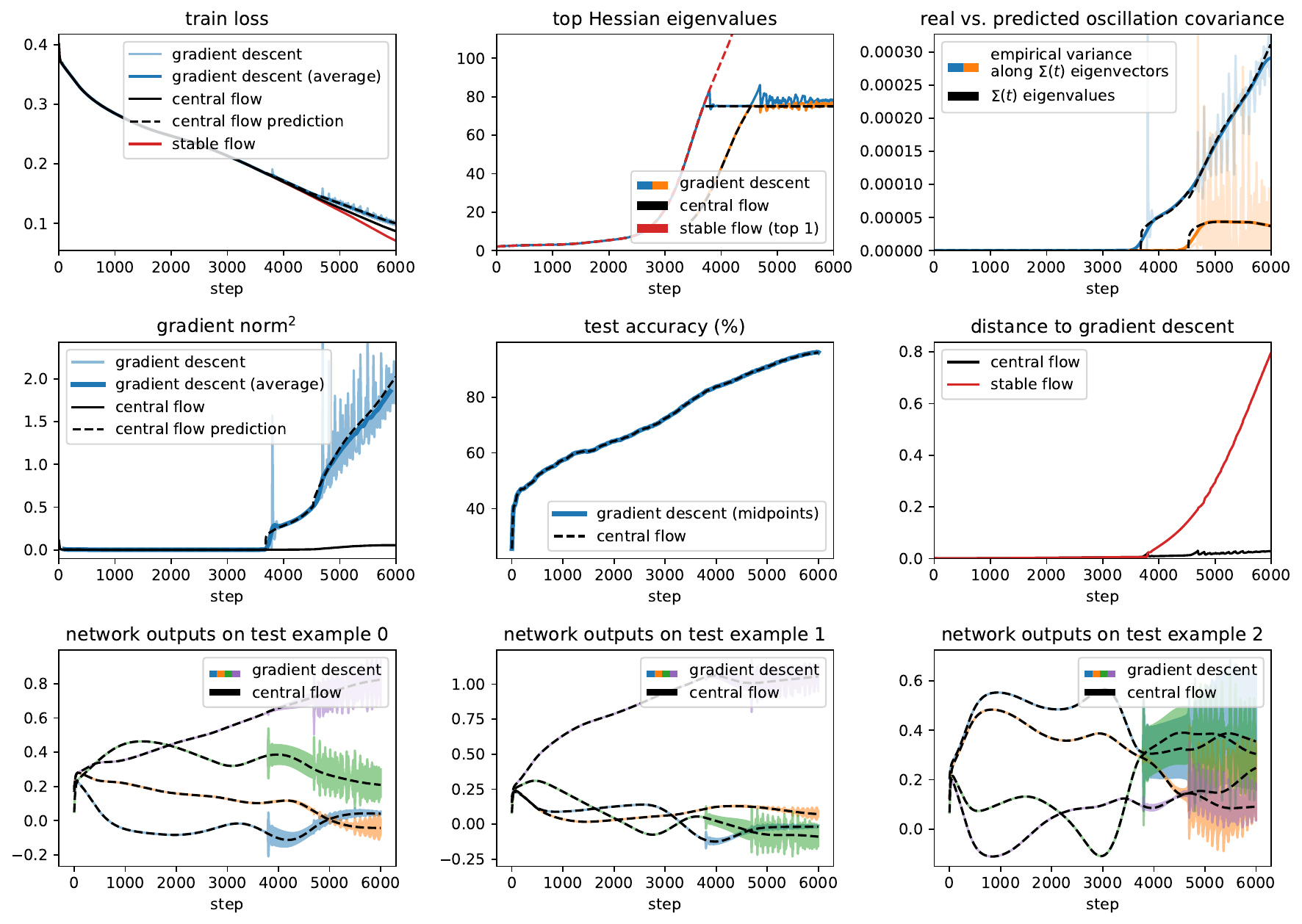}
        \caption{Gradient descent central flow for a LSTM with MSE loss, $\eta=$ 0.01333.}
        \label{fig:bulk-gd:mse-lstm-0}
    \end{figure}
                
    \begin{figure}[H]
        \centering
        \includegraphics[width=0.8\linewidth]{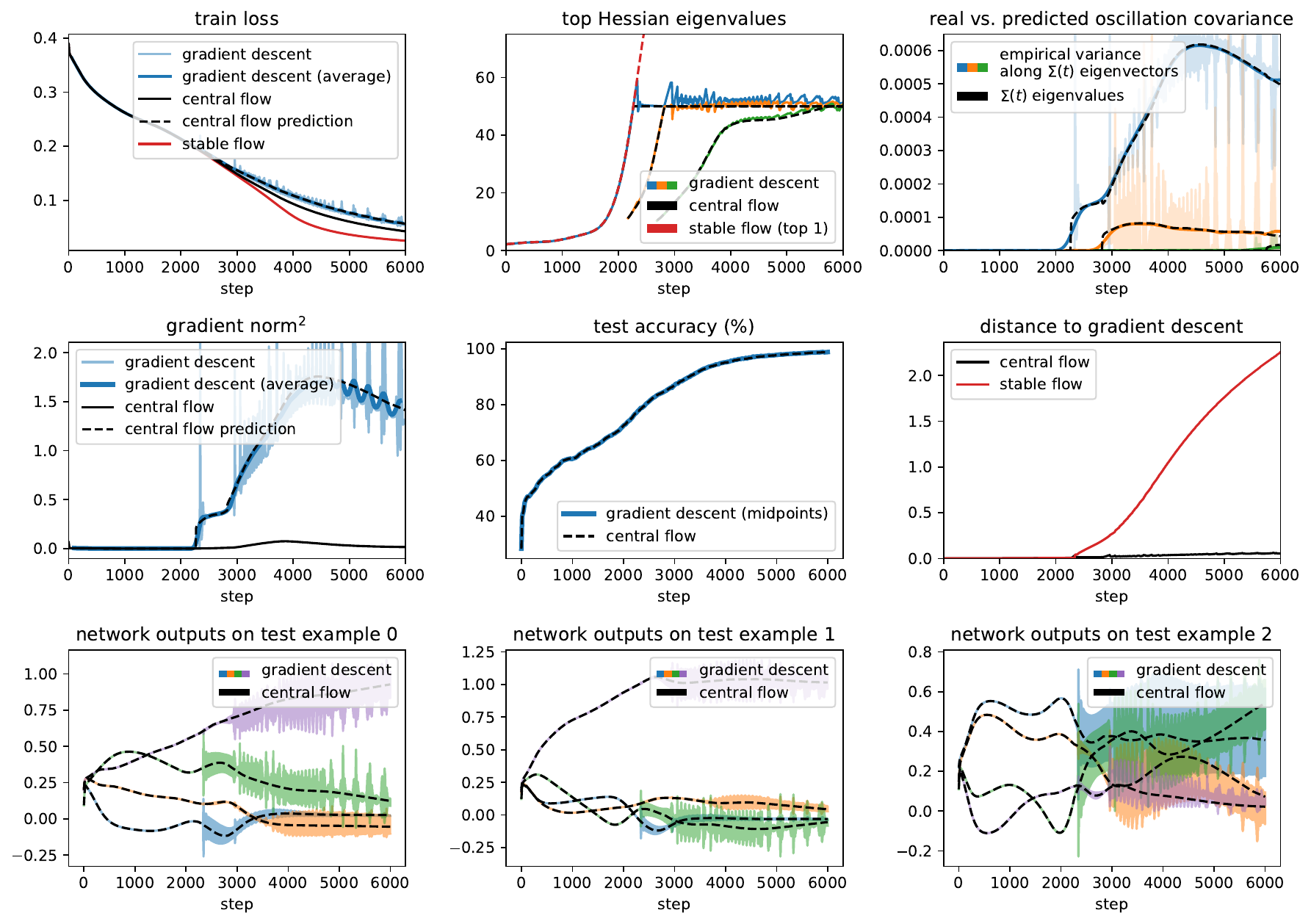}
        \caption{Gradient descent central flow for a LSTM with MSE loss, $\eta=$ 0.02.}
        \label{fig:bulk-gd:mse-lstm-1}
    \end{figure}
                
    \begin{figure}[H]
        \centering
        \includegraphics[width=0.8\linewidth]{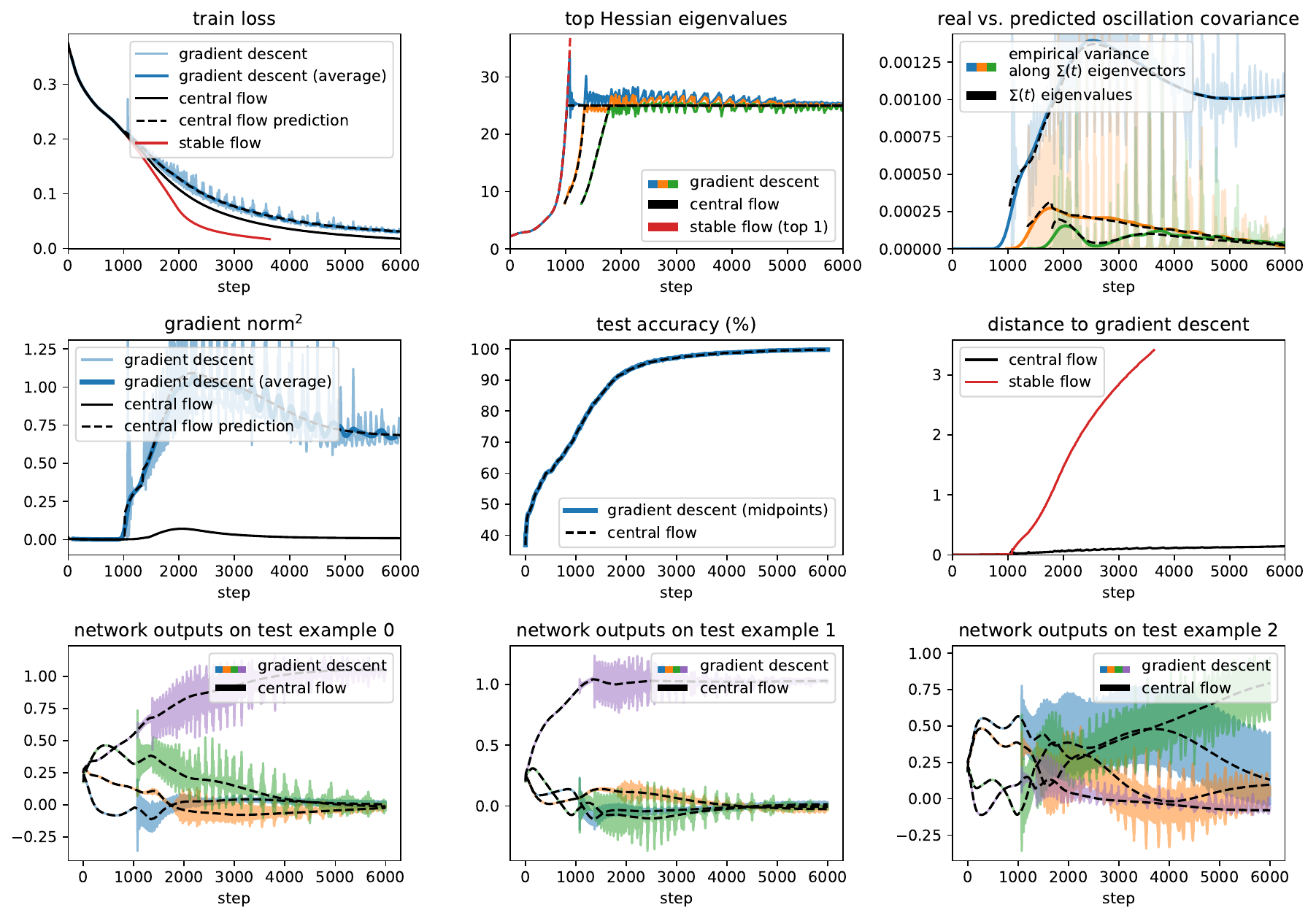}
        \caption{Gradient descent central flow for a LSTM with MSE loss, $\eta=$ 0.04.}
        \label{fig:bulk-gd:mse-lstm-2}
    \end{figure}
                
    \begin{figure}[H]
        \centering
        \includegraphics[width=0.8\linewidth]{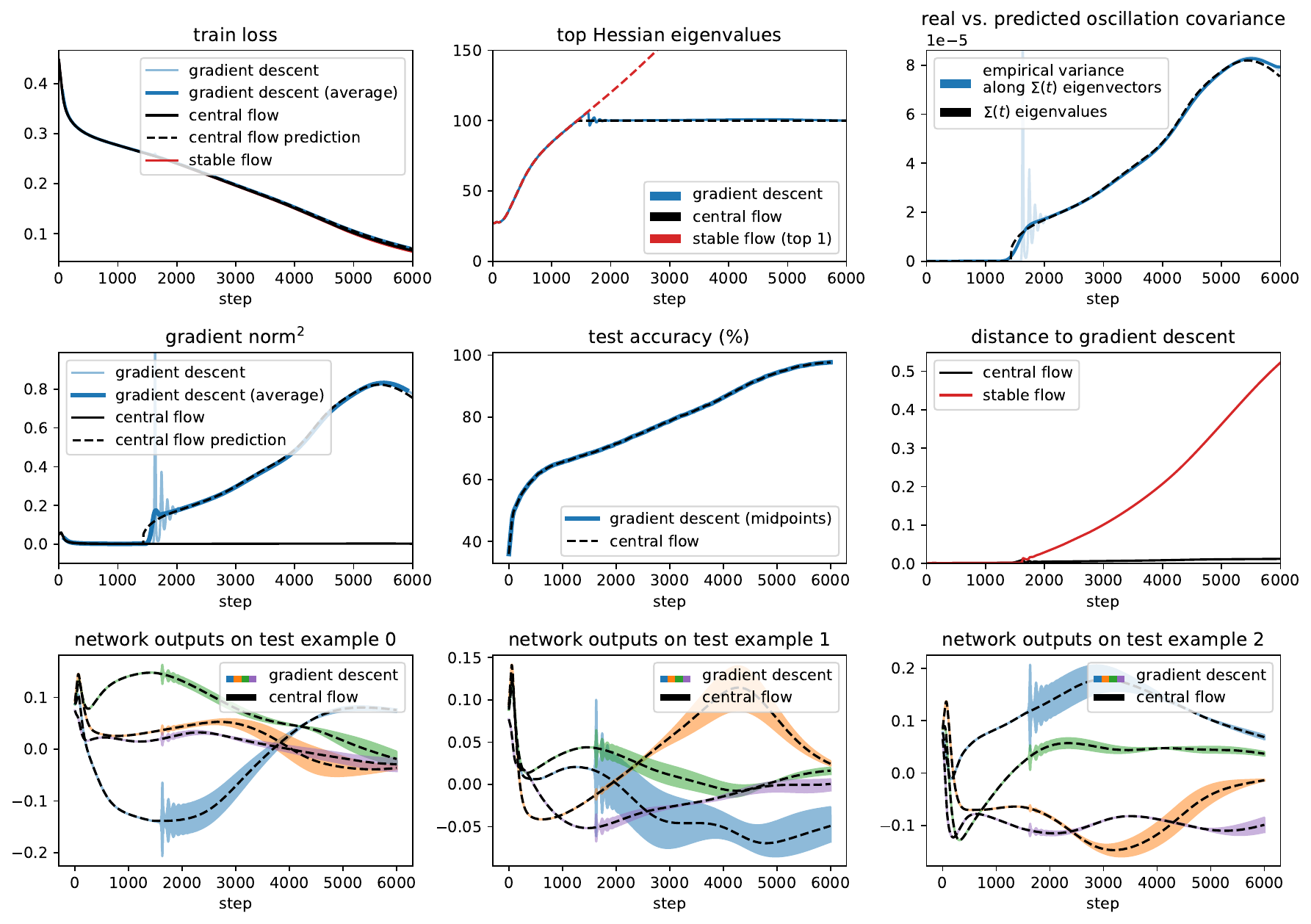}
        \caption{Gradient descent central flow for a Transformer with MSE loss, $\eta=$ 0.01.}
        \label{fig:bulk-gd:mse-transformer-0}
    \end{figure}
                
    \begin{figure}[H]
        \centering
        \includegraphics[width=0.8\linewidth]{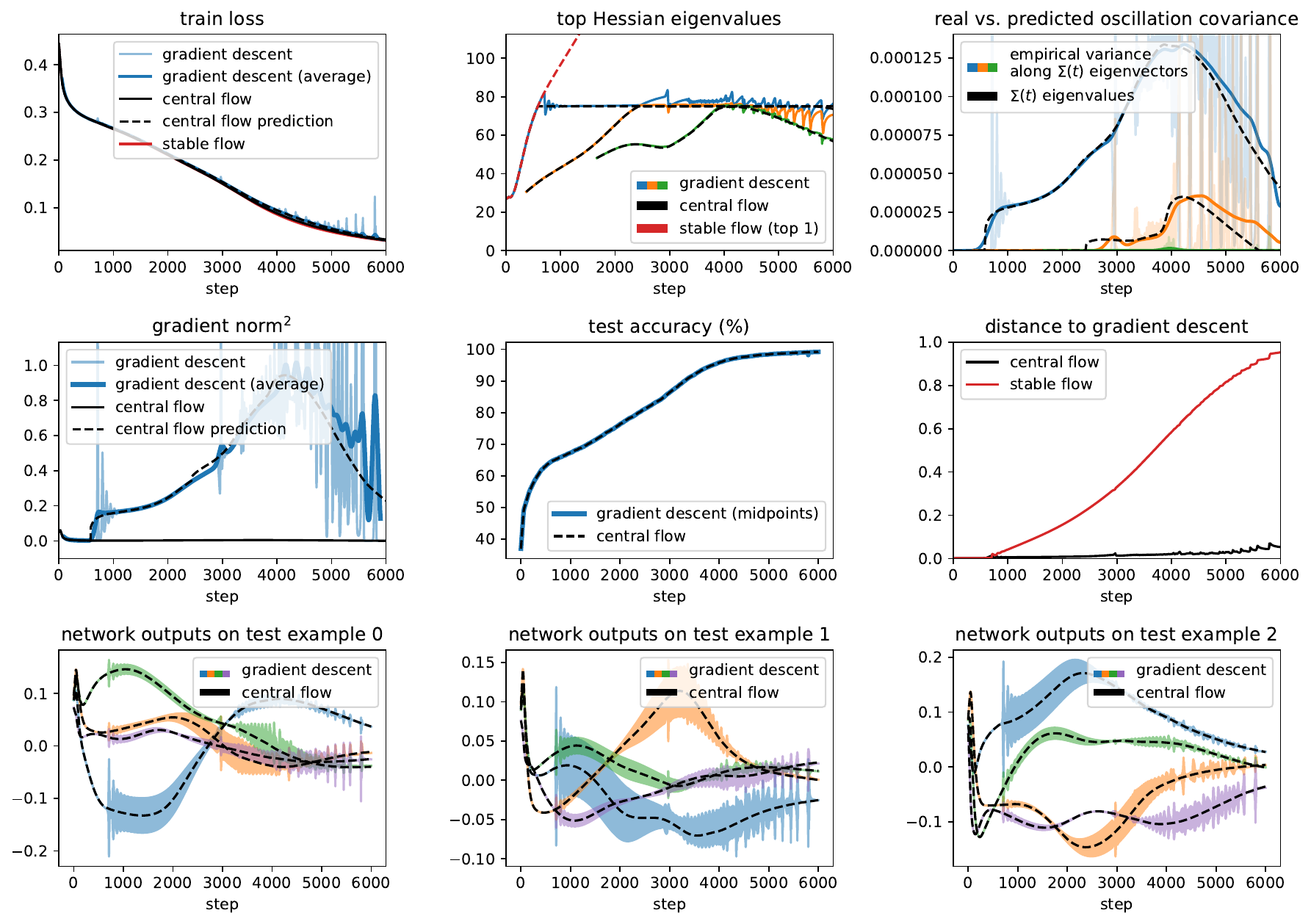}
        \caption{Gradient descent central flow for a Transformer with MSE loss, $\eta=$ 0.013333.}
        \label{fig:bulk-gd:mse-transformer-1}
    \end{figure}
                
    \begin{figure}[H]
        \centering
        \includegraphics[width=0.8\linewidth]{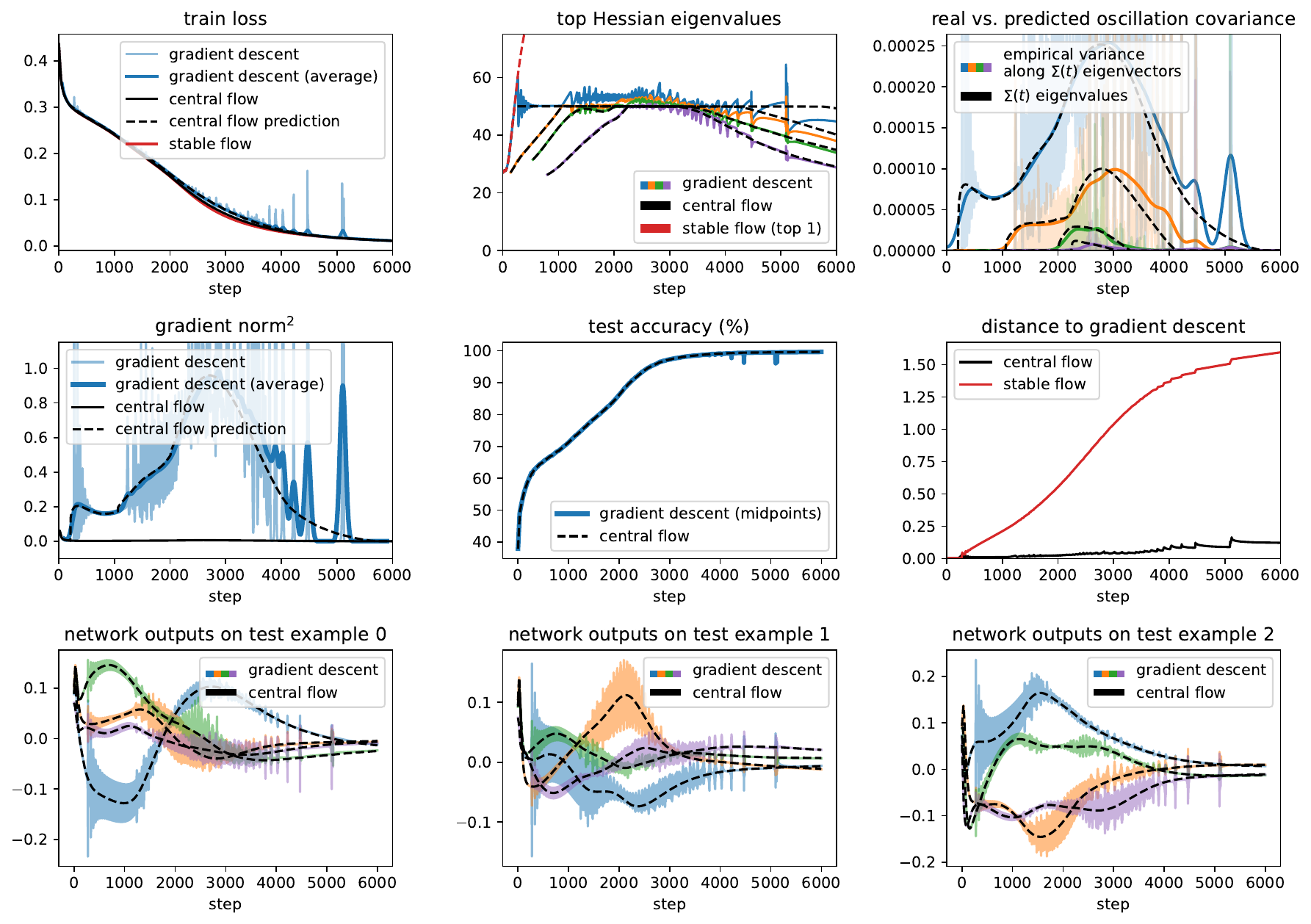}
        \caption{Gradient descent central flow for a Transformer with MSE loss, $\eta=$ 0.02.}
        \label{fig:bulk-gd:mse-transformer-2}
    \end{figure}
                
    \begin{figure}[H]
        \centering
        \includegraphics[width=0.8\linewidth]{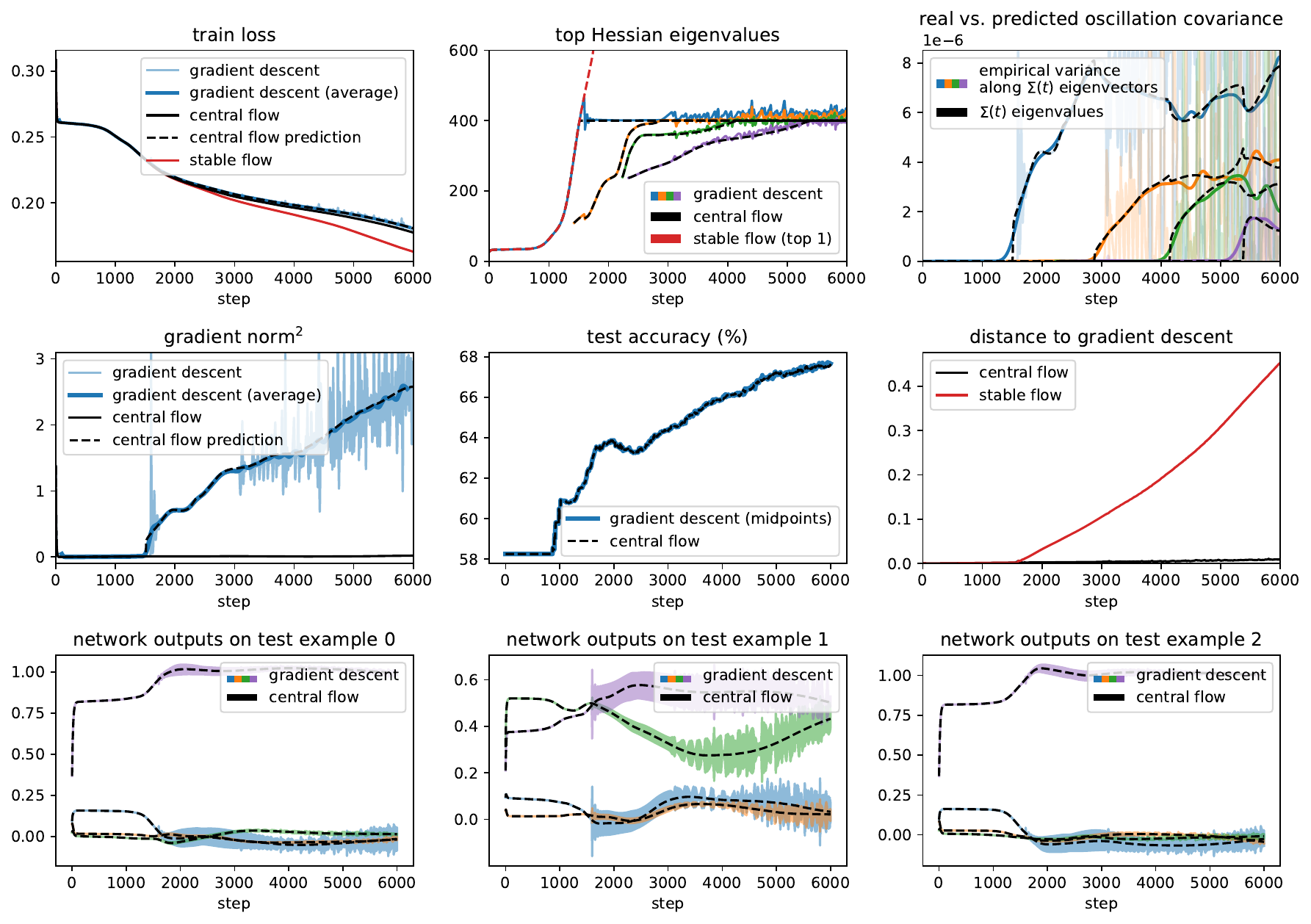}
        \caption{Gradient descent central flow for a Mamba with MSE loss, $\eta=$ 0.01.}
        \label{fig:bulk-gd:mse-mamba-0}
    \end{figure}
                
    \begin{figure}[H]
        \centering
        \includegraphics[width=0.8\linewidth]{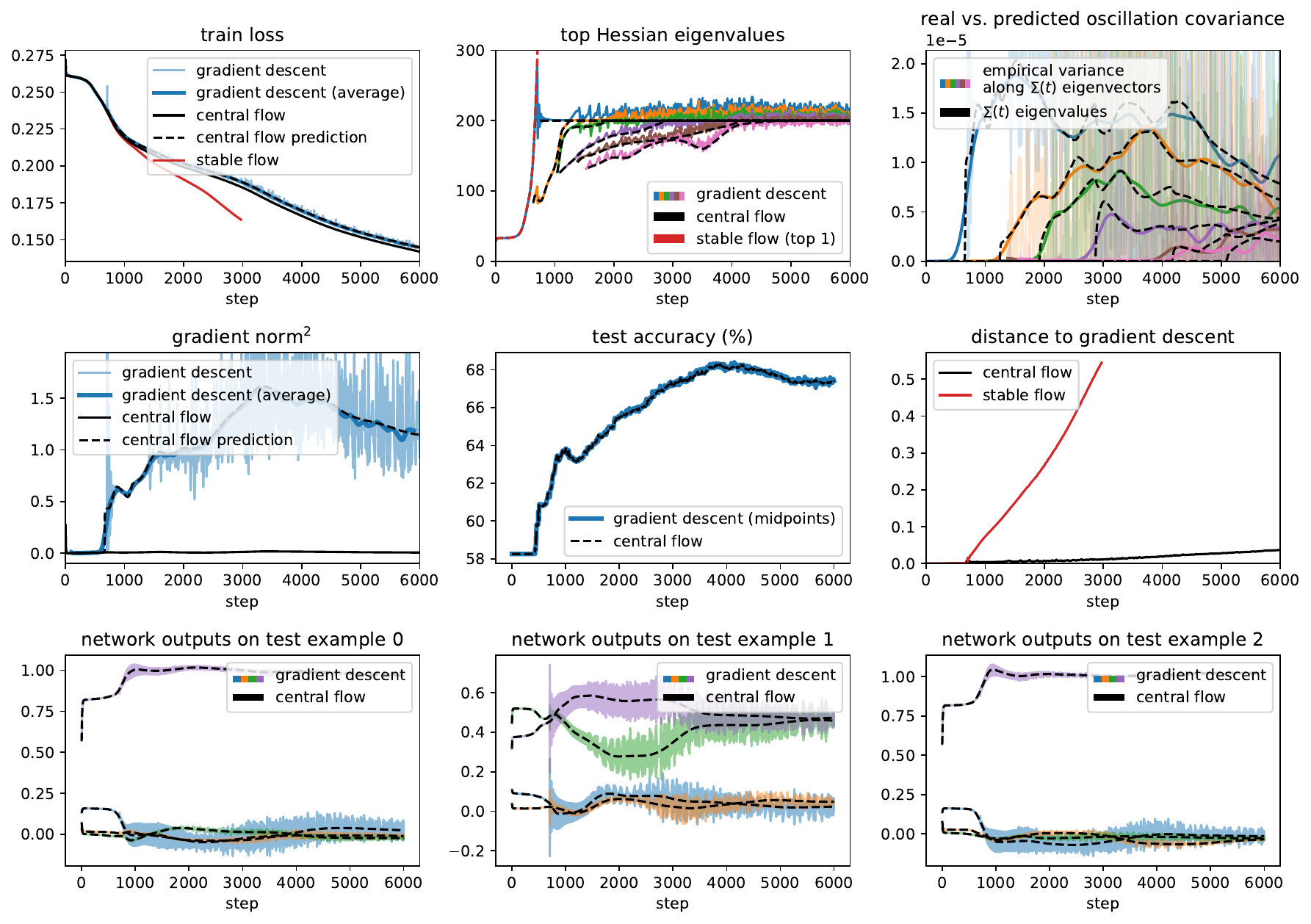}
        \caption{Gradient descent central flow for a Mamba with MSE loss, $\eta=$ 0.013333.}
        \label{fig:bulk-gd:mse-mamba-1}
    \end{figure}
                
    \begin{figure}[H]
        \centering
        \includegraphics[width=0.8\linewidth]{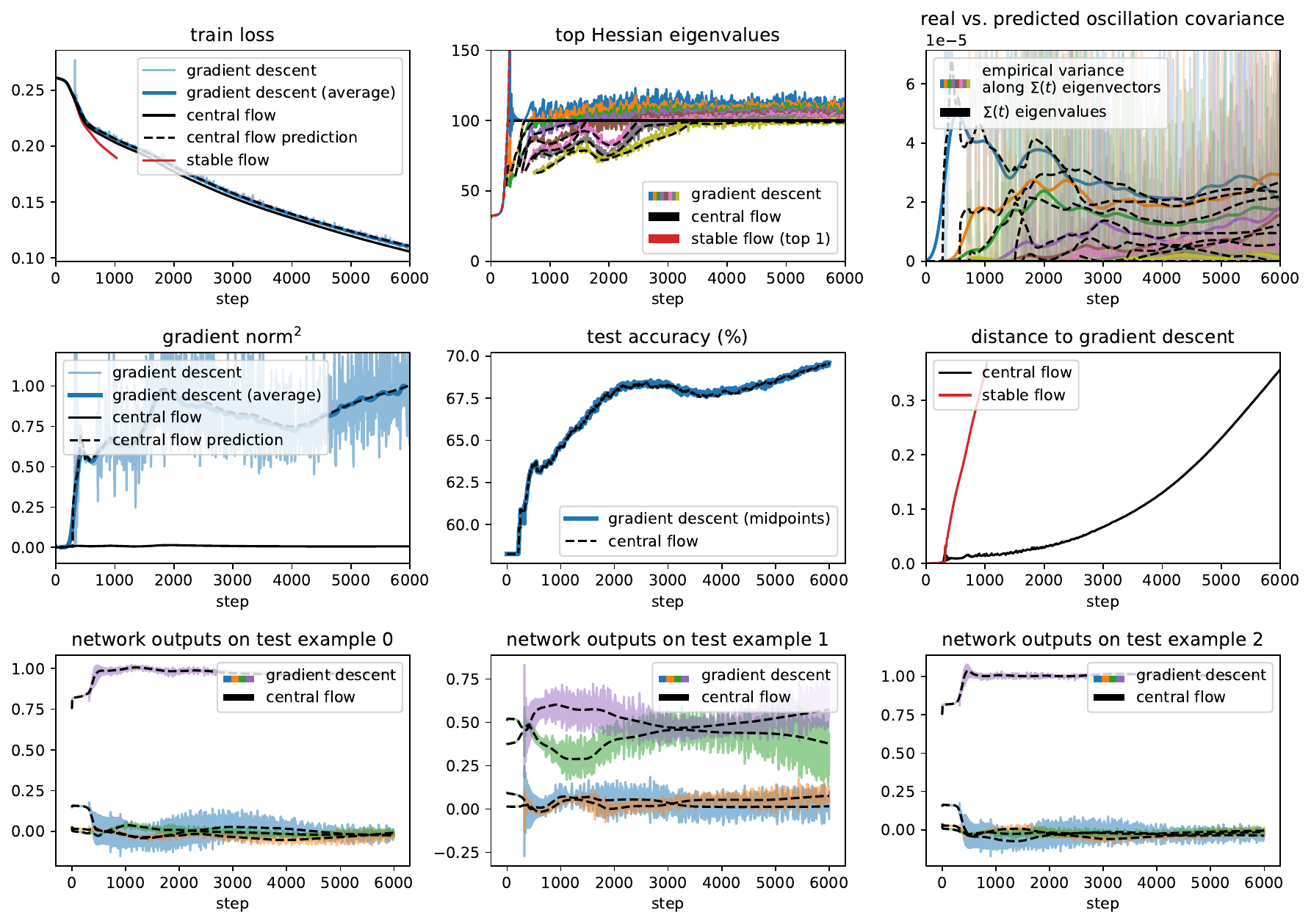}
        \caption{Gradient descent central flow for a Mamba with MSE loss, $\eta=$ 0.02.}
        \label{fig:bulk-gd:mse-mamba-2}
    \end{figure}
                \end{specialfigures}

\clearpage
\begin{specialfigures}

    \begin{figure}[H]
        \centering
        \includegraphics[width=0.8\linewidth]{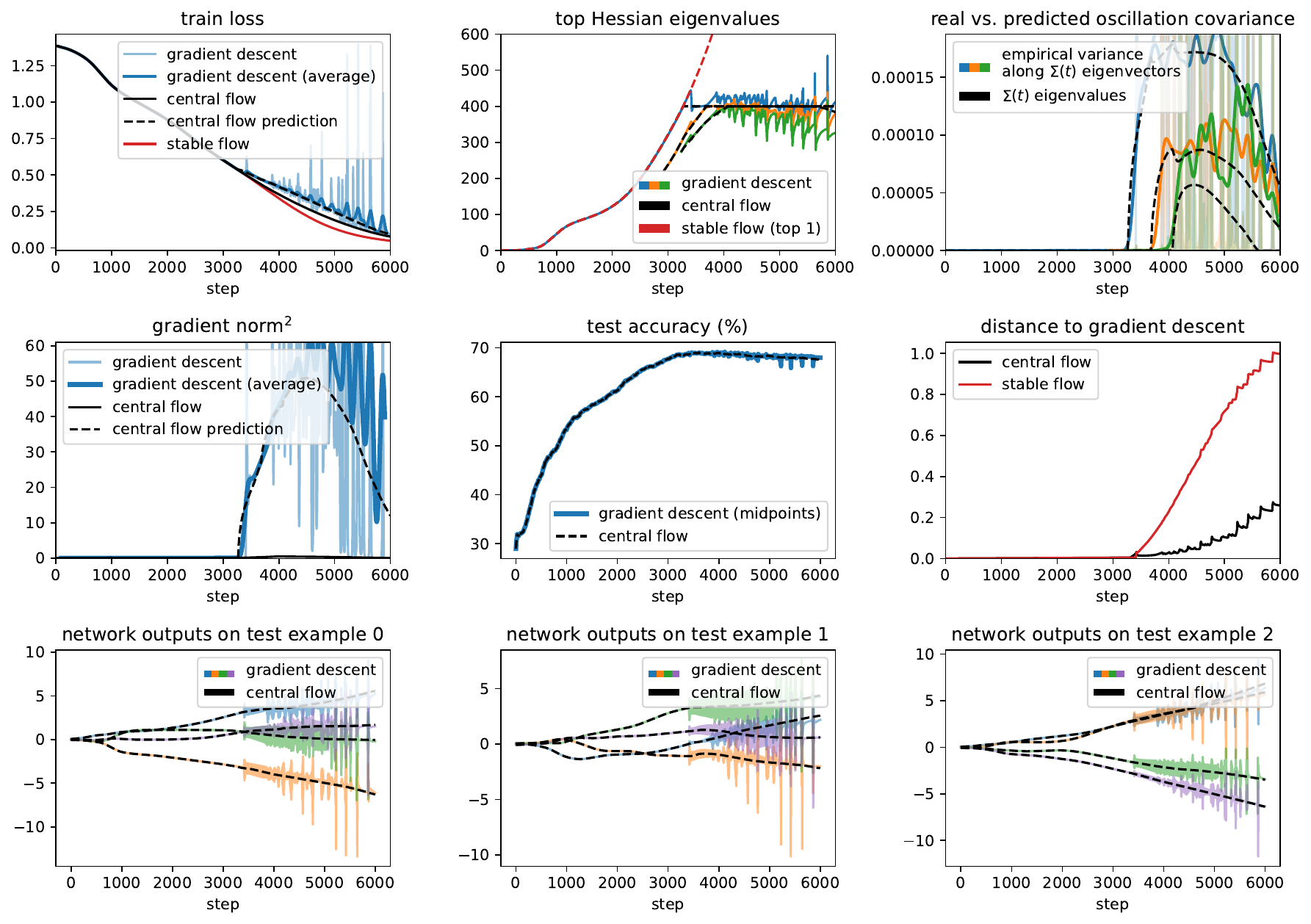}
        \caption{Gradient descent central flow for a CNN with CE loss, $\eta=$ 0.005.}
        \label{fig:bulk-gd:mse-cnn-0}
    \end{figure}
                
    \begin{figure}[H]
        \centering
        \includegraphics[width=0.8\linewidth]{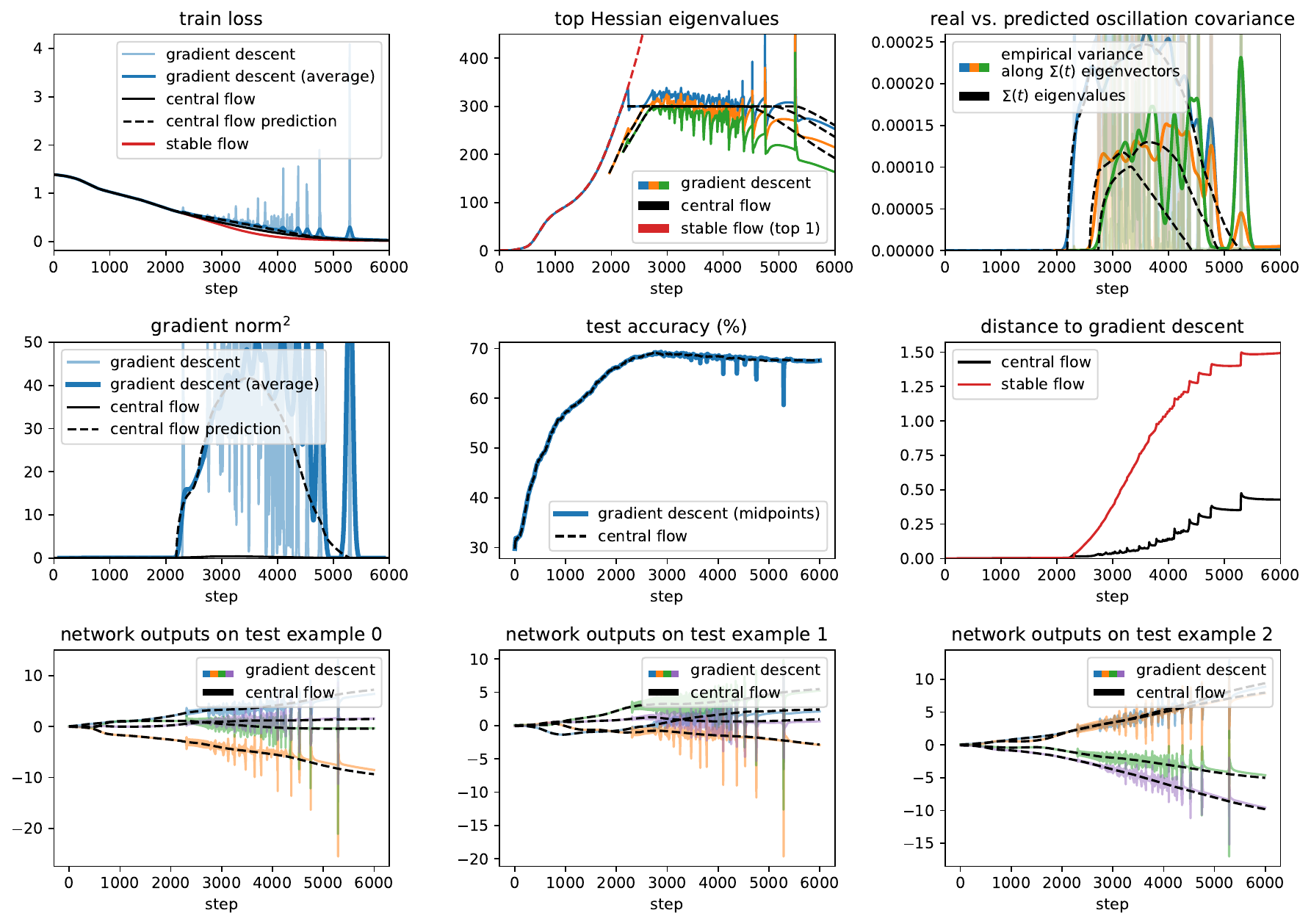}
        \caption{Gradient descent central flow for a CNN with CE loss, $\eta=$ 0.006666.}
        \label{fig:bulk-gd:mse-cnn-1}
    \end{figure}
                
    \begin{figure}[H]
        \centering
        \includegraphics[width=0.8\linewidth]{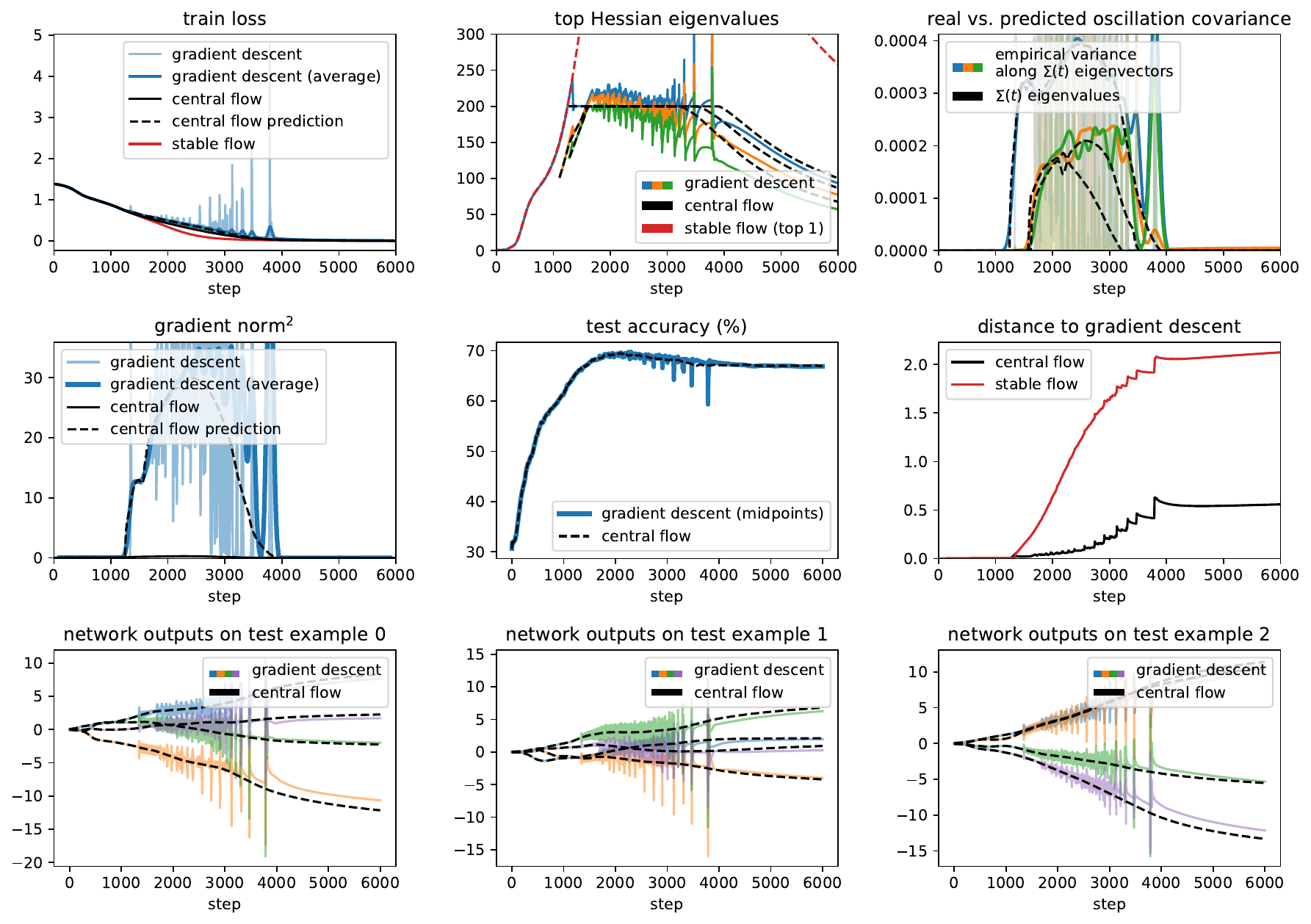}
        \caption{Gradient descent central flow for a CNN with CE loss, $\eta=$ 0.01.}
        \label{fig:bulk-gd:mse-cnn-2}
    \end{figure}
                
    \begin{figure}[H]
        \centering
        \includegraphics[width=0.8\linewidth]{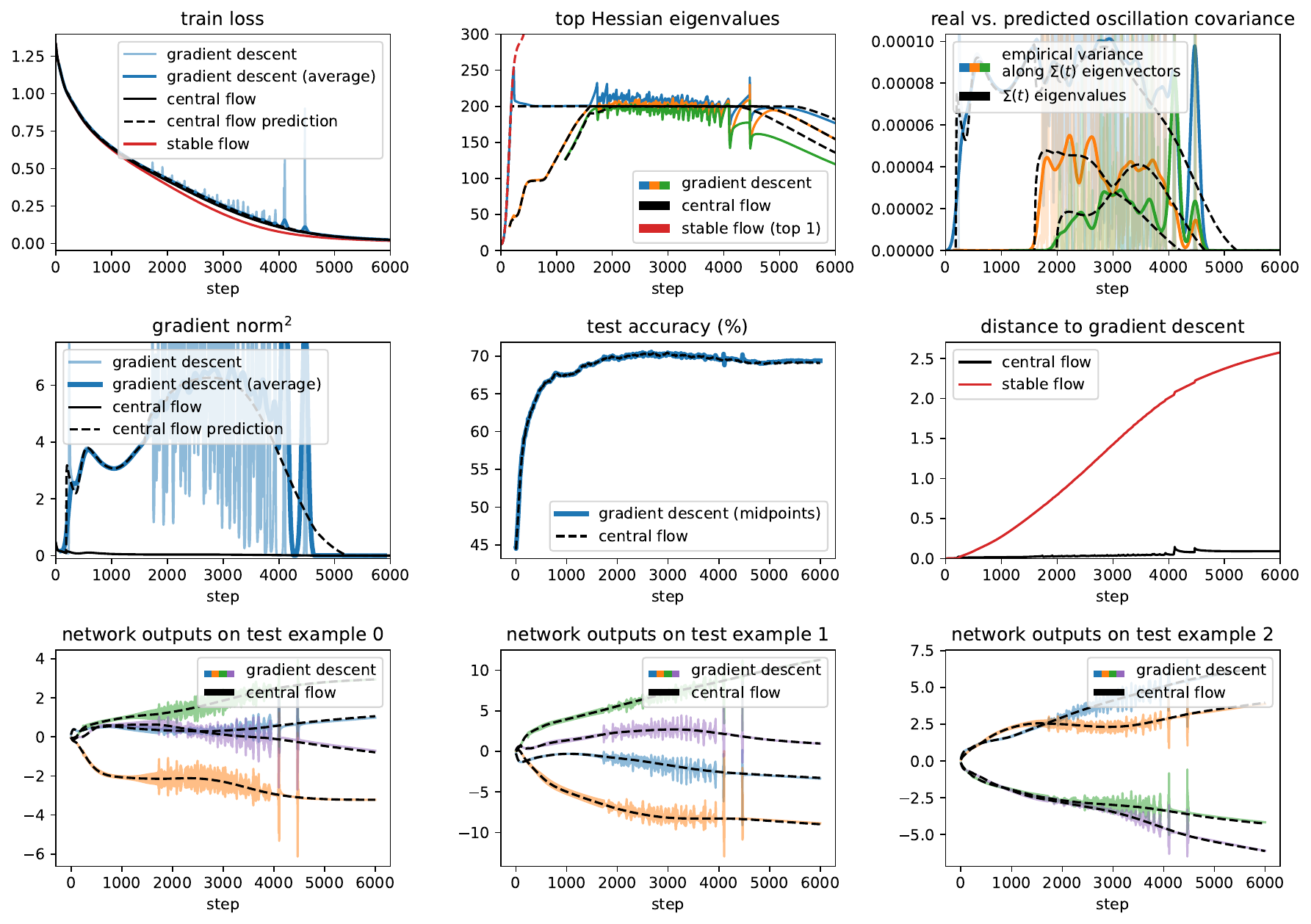}
        \caption{Gradient descent central flow for a ResNet with CE loss, $\eta=$ 0.01.}
        \label{fig:bulk-gd:mse-resnet-0}
    \end{figure}
                
    \begin{figure}[H]
        \centering
        \includegraphics[width=0.8\linewidth]{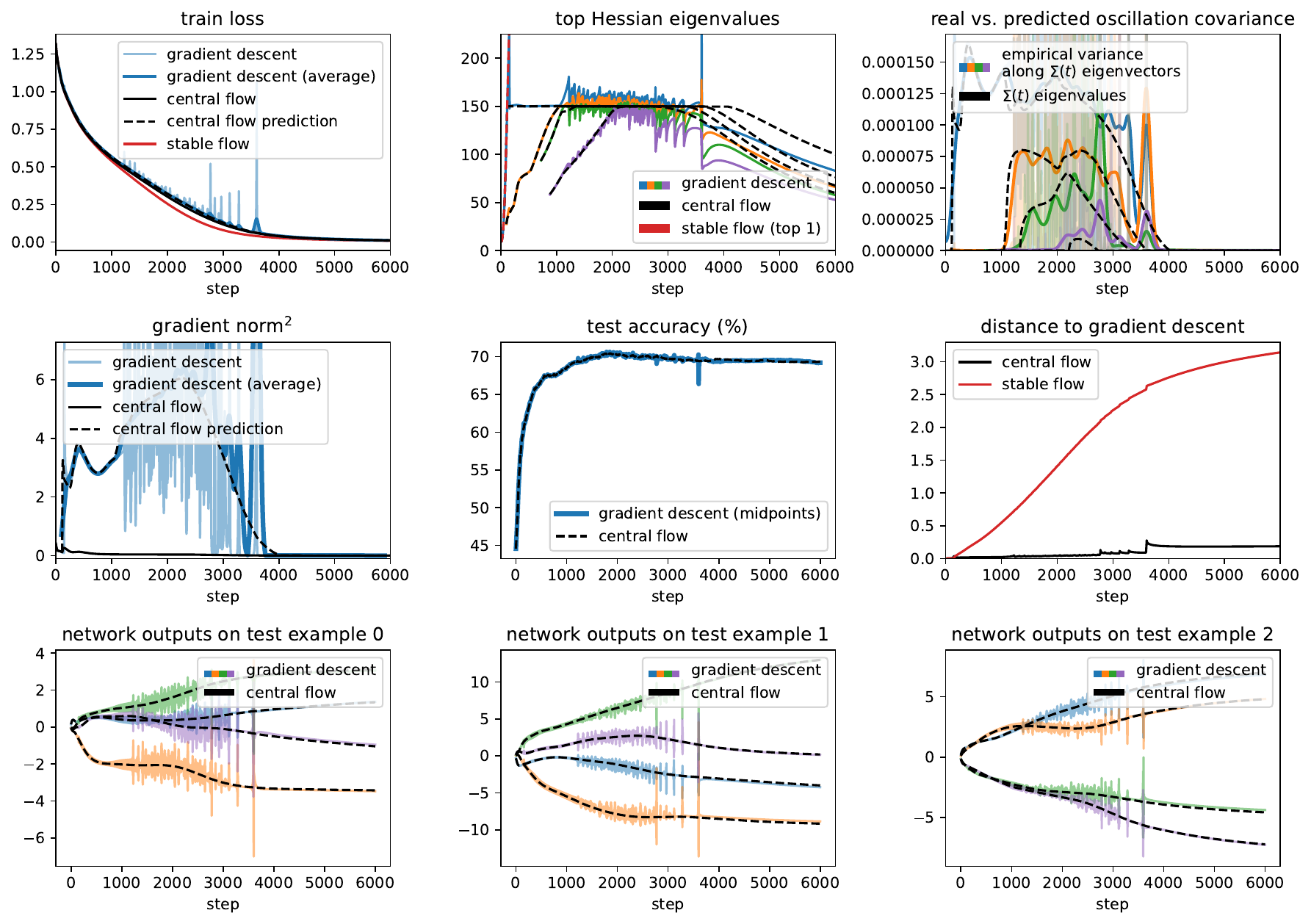}
        \caption{Gradient descent central flow for a ResNet with CE loss, $\eta=$ 0.013333.}
        \label{fig:bulk-gd:mse-resnet-1}
    \end{figure}
                
    \begin{figure}[H]
        \centering
        \includegraphics[width=0.8\linewidth]{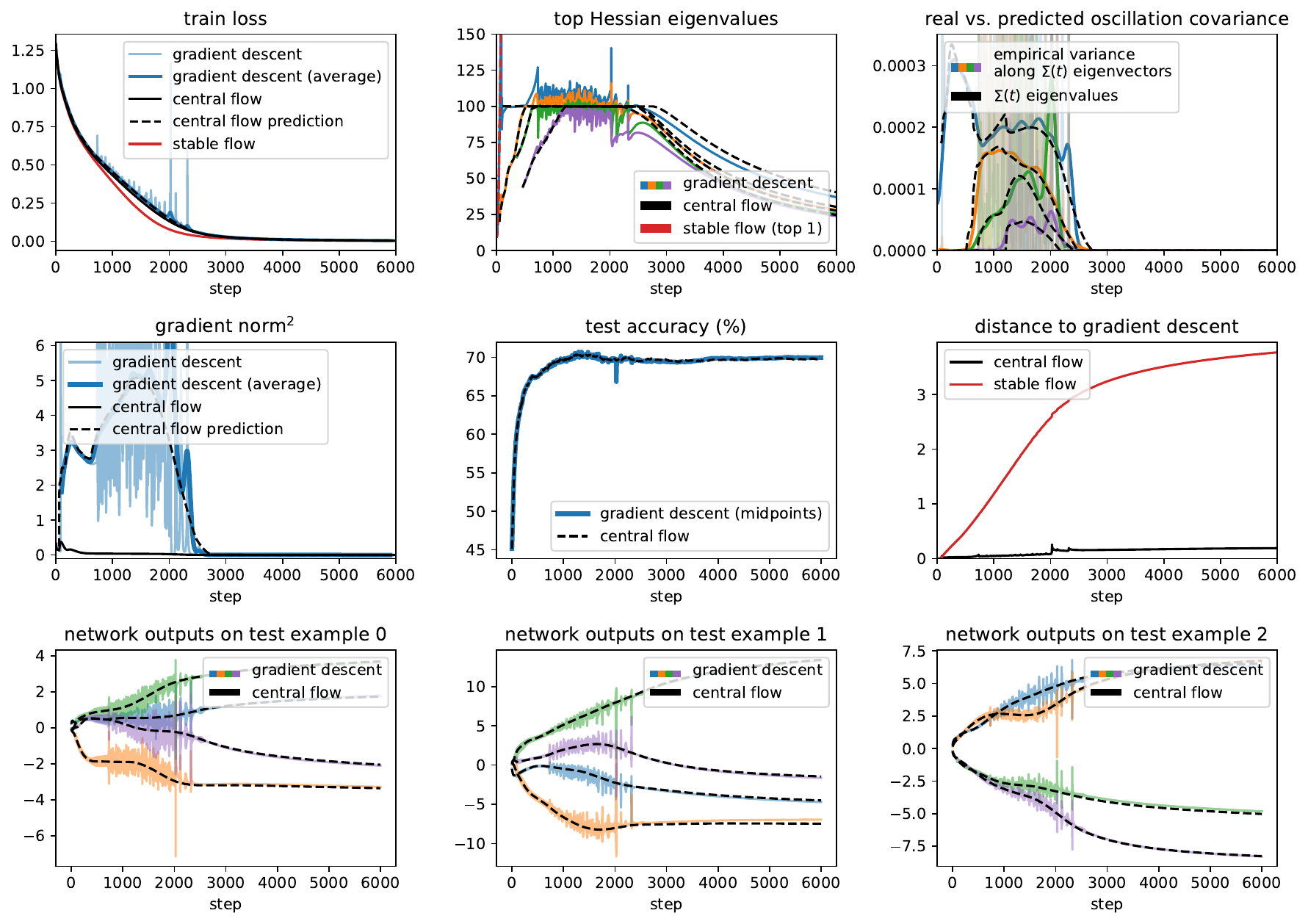}
        \caption{Gradient descent central flow for a ResNet with CE loss, $\eta=$ 0.02.}
        \label{fig:bulk-gd:mse-resnet-2}
    \end{figure}
                
    \begin{figure}[H]
        \centering
        \includegraphics[width=0.8\linewidth]{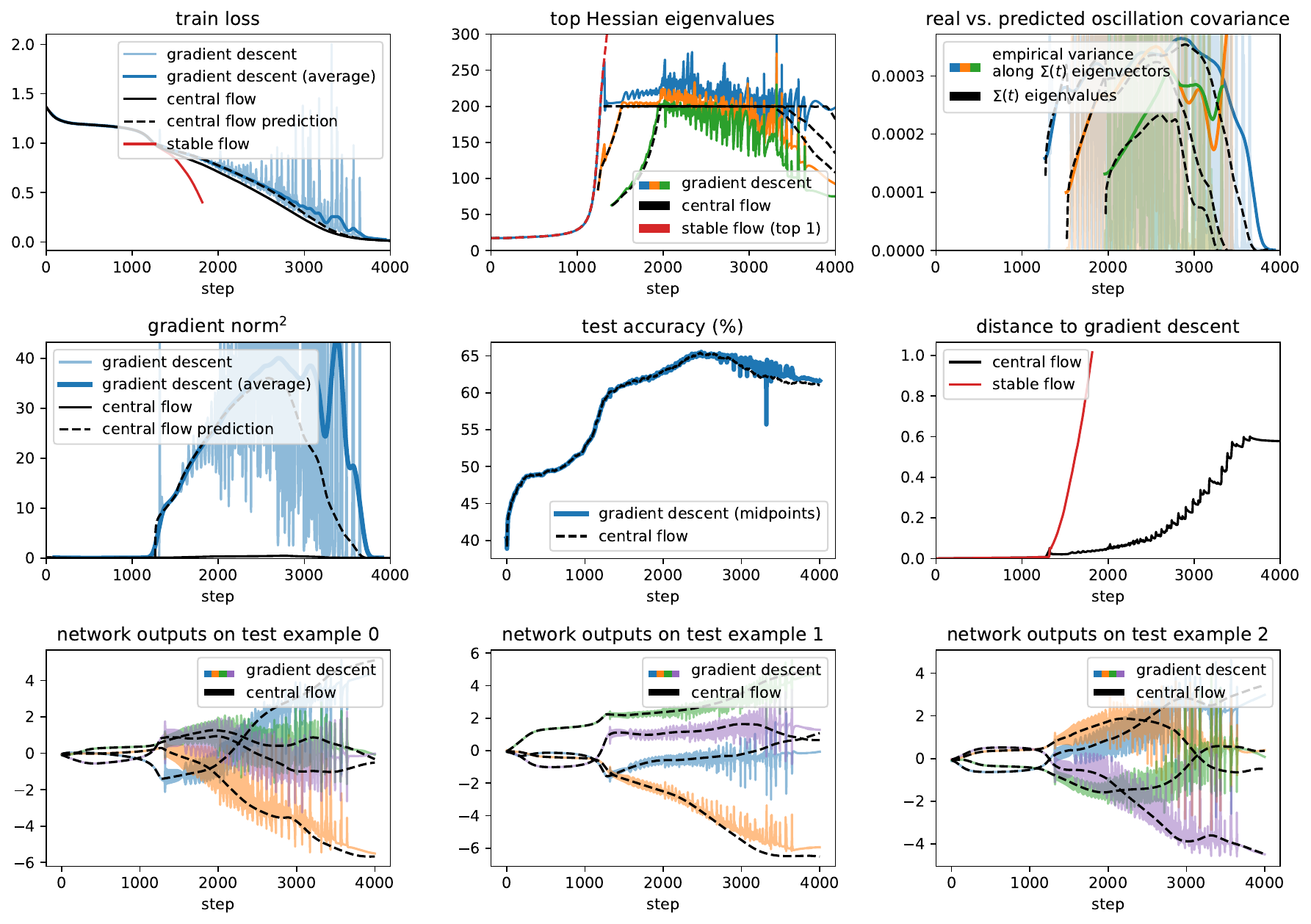}
        \caption{Gradient descent central flow for a ViT with CE loss, $\eta=$ 0.01.}
        \label{fig:bulk-gd:mse-vit-0}
    \end{figure}
                
    \begin{figure}[H]
        \centering
        \includegraphics[width=0.8\linewidth]{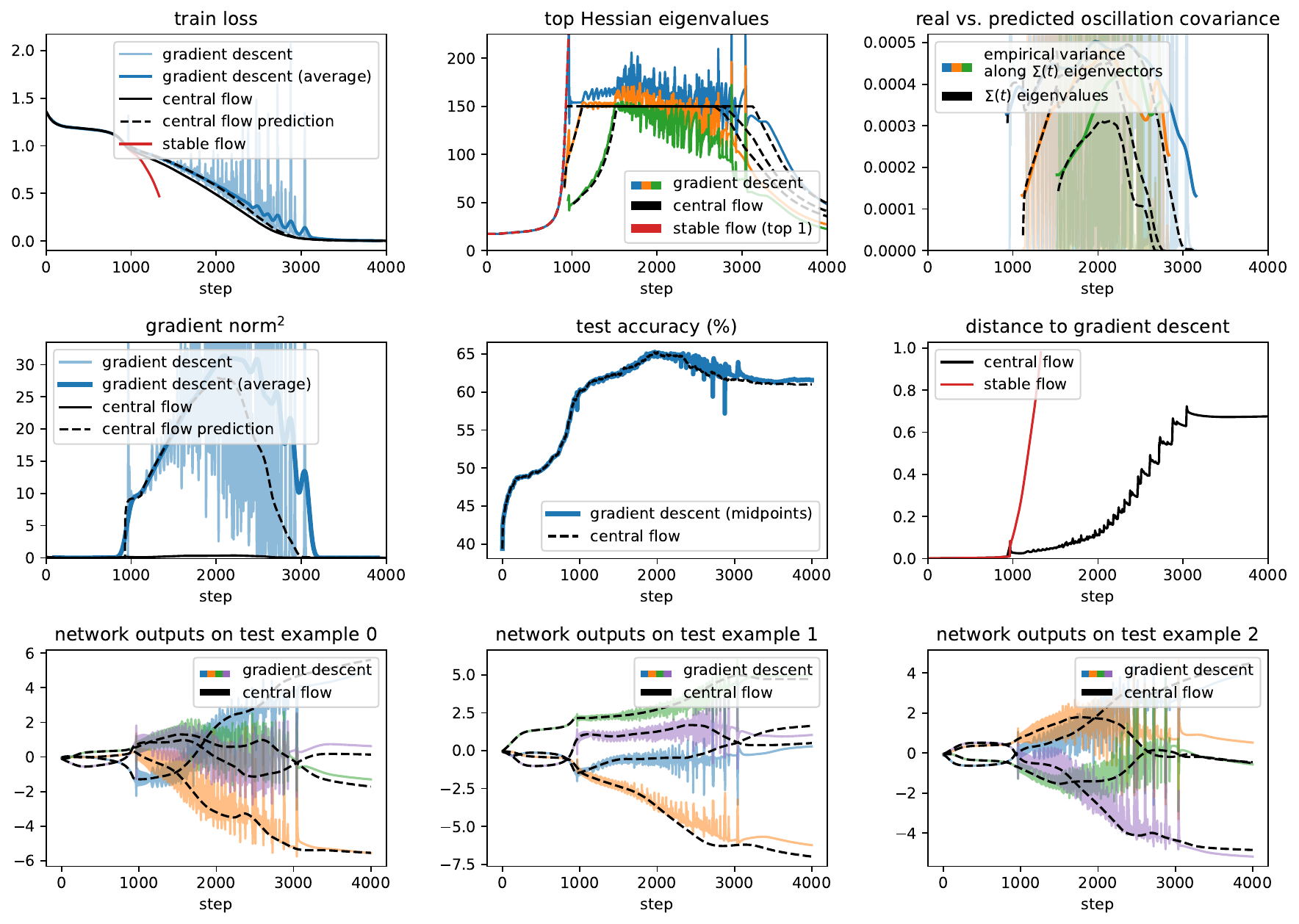}
        \caption{Gradient descent central flow for a ViT with CE loss, $\eta=$ 0.013333.}
        \label{fig:bulk-gd:mse-vit-1}
    \end{figure}
                
    \begin{figure}[H]
        \centering
        \includegraphics[width=0.8\linewidth]{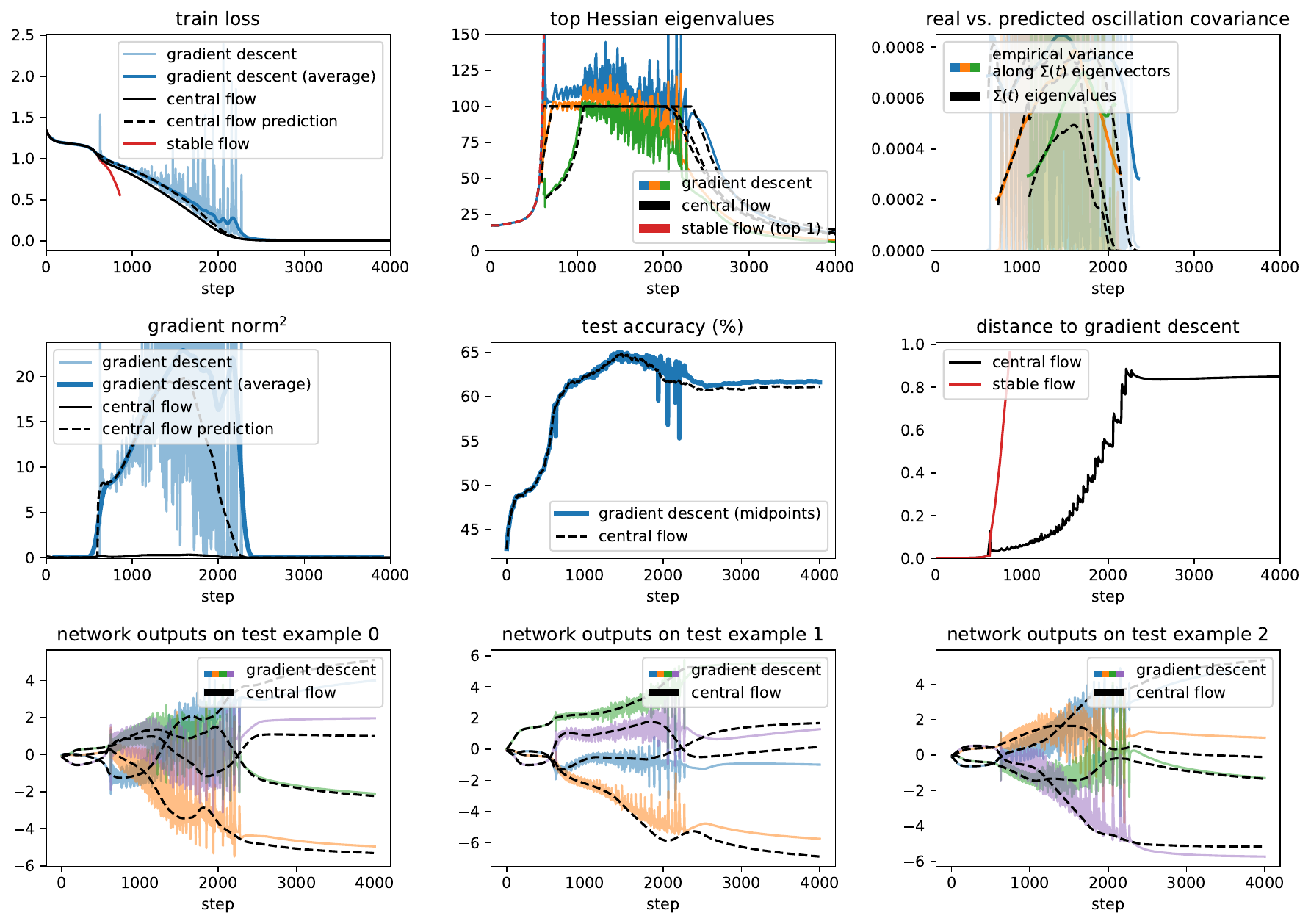}
        \caption{Gradient descent central flow for a ViT with CE loss, $\eta=$ 0.02.}
        \label{fig:bulk-gd:mse-vit-2}
    \end{figure}
                
    \begin{figure}[H]
        \centering
        \includegraphics[width=0.8\linewidth]{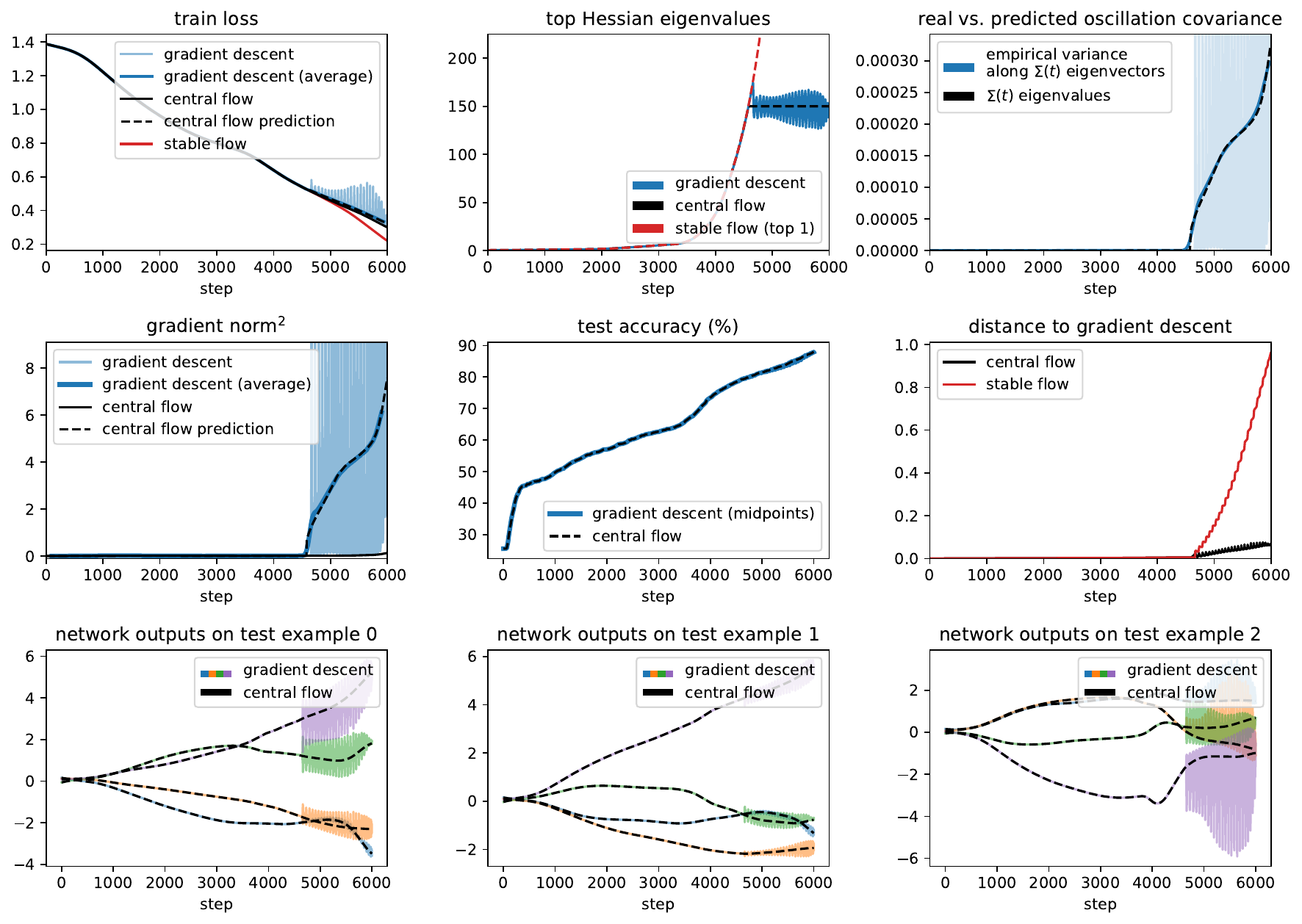}
        \caption{Gradient descent central flow for a LSTM with CE loss, $\eta=$ 0.01333.}
        \label{fig:bulk-gd:mse-lstm-0}
    \end{figure}
                
    \begin{figure}[H]
        \centering
        \includegraphics[width=0.8\linewidth]{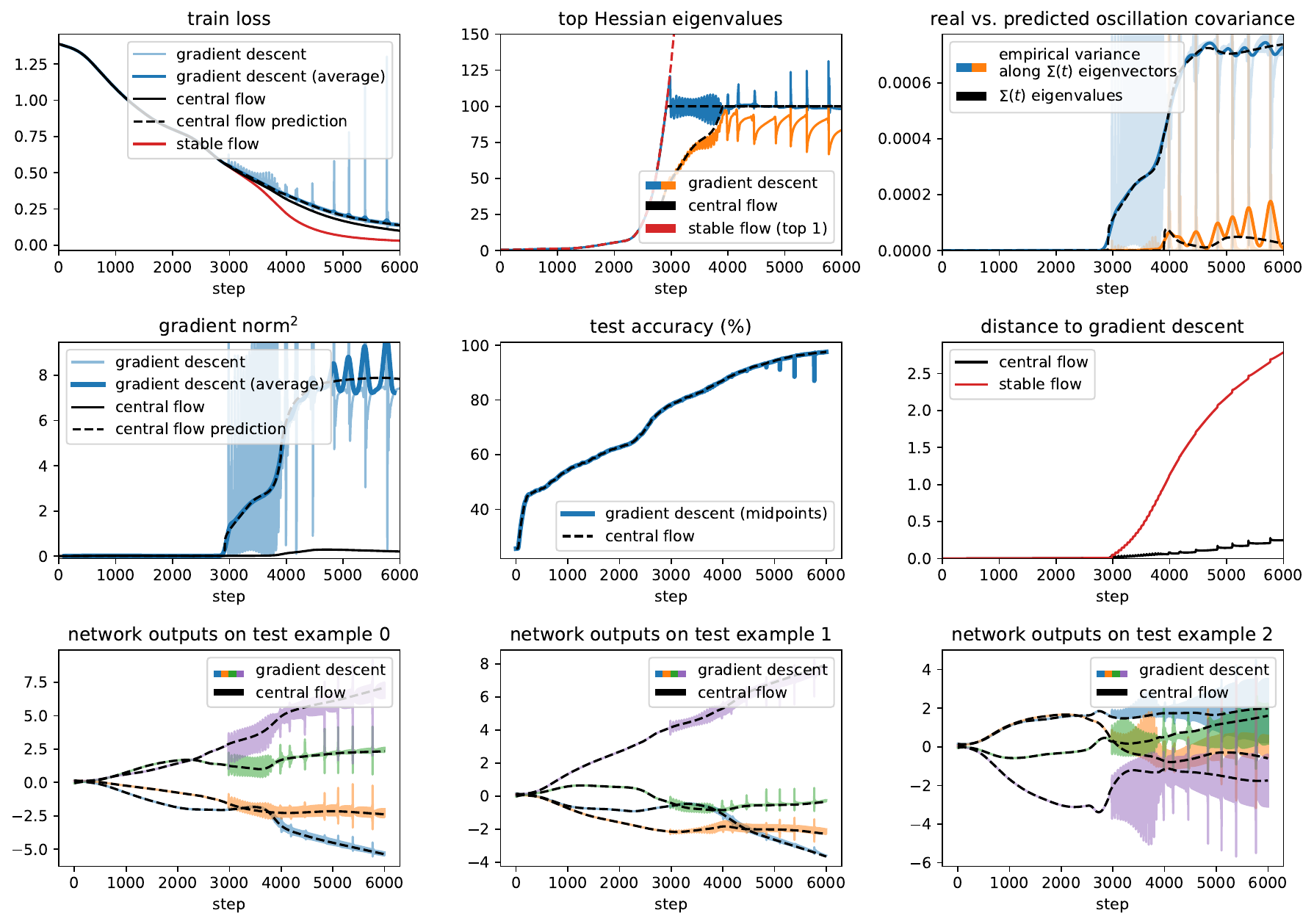}
        \caption{Gradient descent central flow for a LSTM with CE loss, $\eta=$ 0.02.}
        \label{fig:bulk-gd:mse-lstm-1}
    \end{figure}
                
    \begin{figure}[H]
        \centering
        \includegraphics[width=0.8\linewidth]{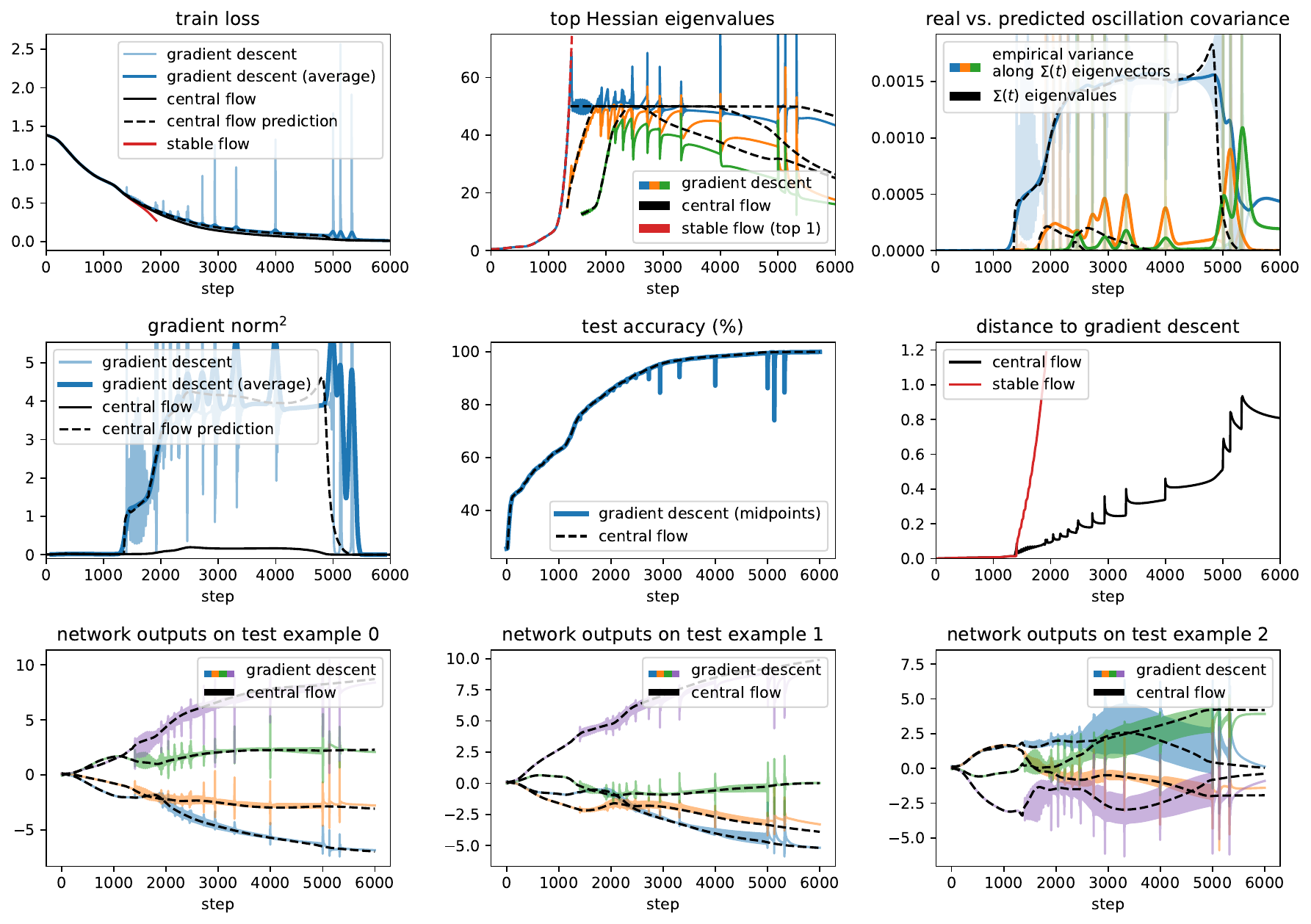}
        \caption{Gradient descent central flow for a LSTM with CE loss, $\eta=$ 0.04.}
        \label{fig:bulk-gd:mse-lstm-2}
    \end{figure}
                
    \begin{figure}[H]
        \centering
        \includegraphics[width=0.8\linewidth]{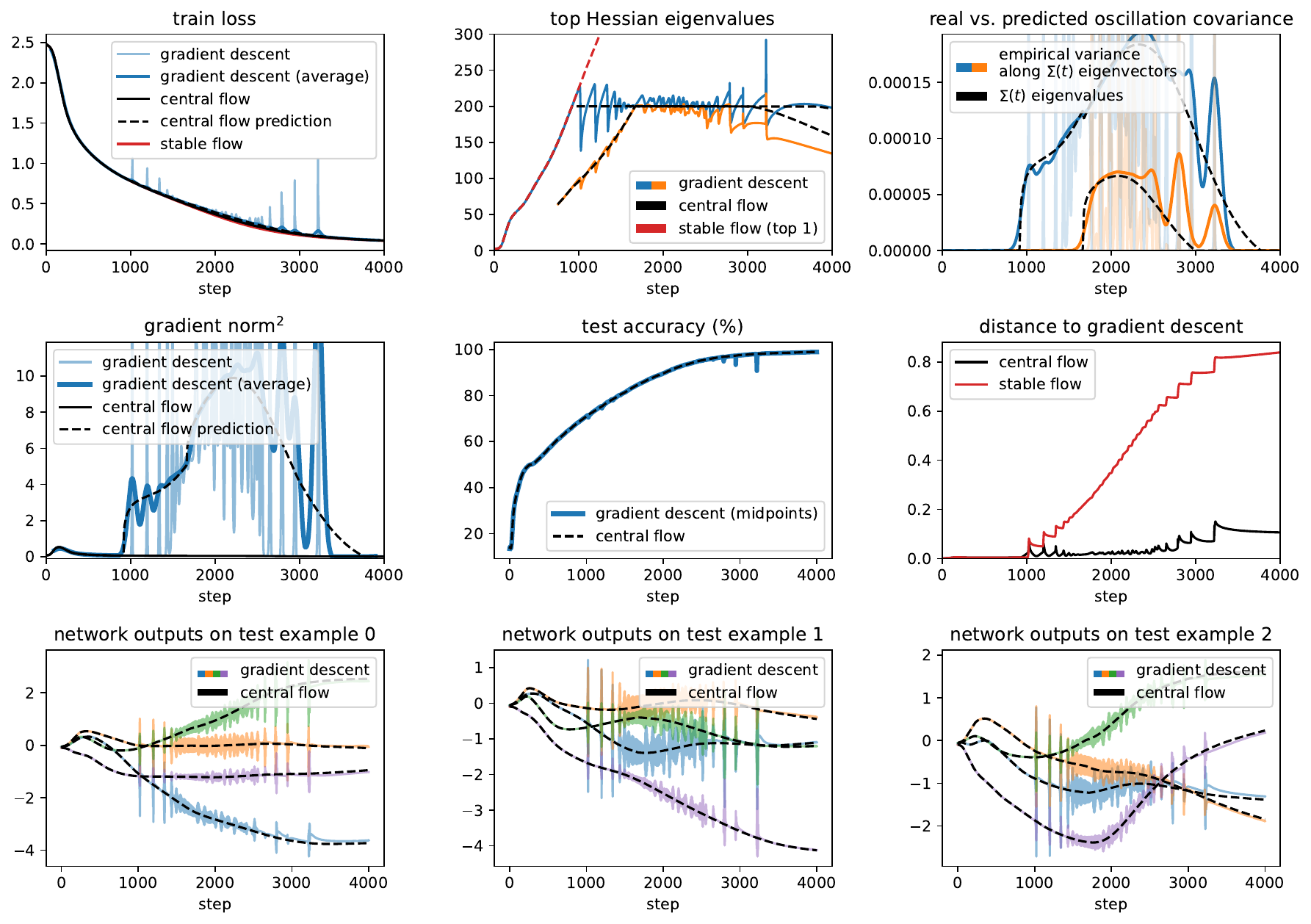}
        \caption{Gradient descent central flow for a Transformer with CE loss, $\eta=$ 0.01.}
        \label{fig:bulk-gd:mse-transformer-0}
    \end{figure}
                
    \begin{figure}[H]
        \centering
        \includegraphics[width=0.8\linewidth]{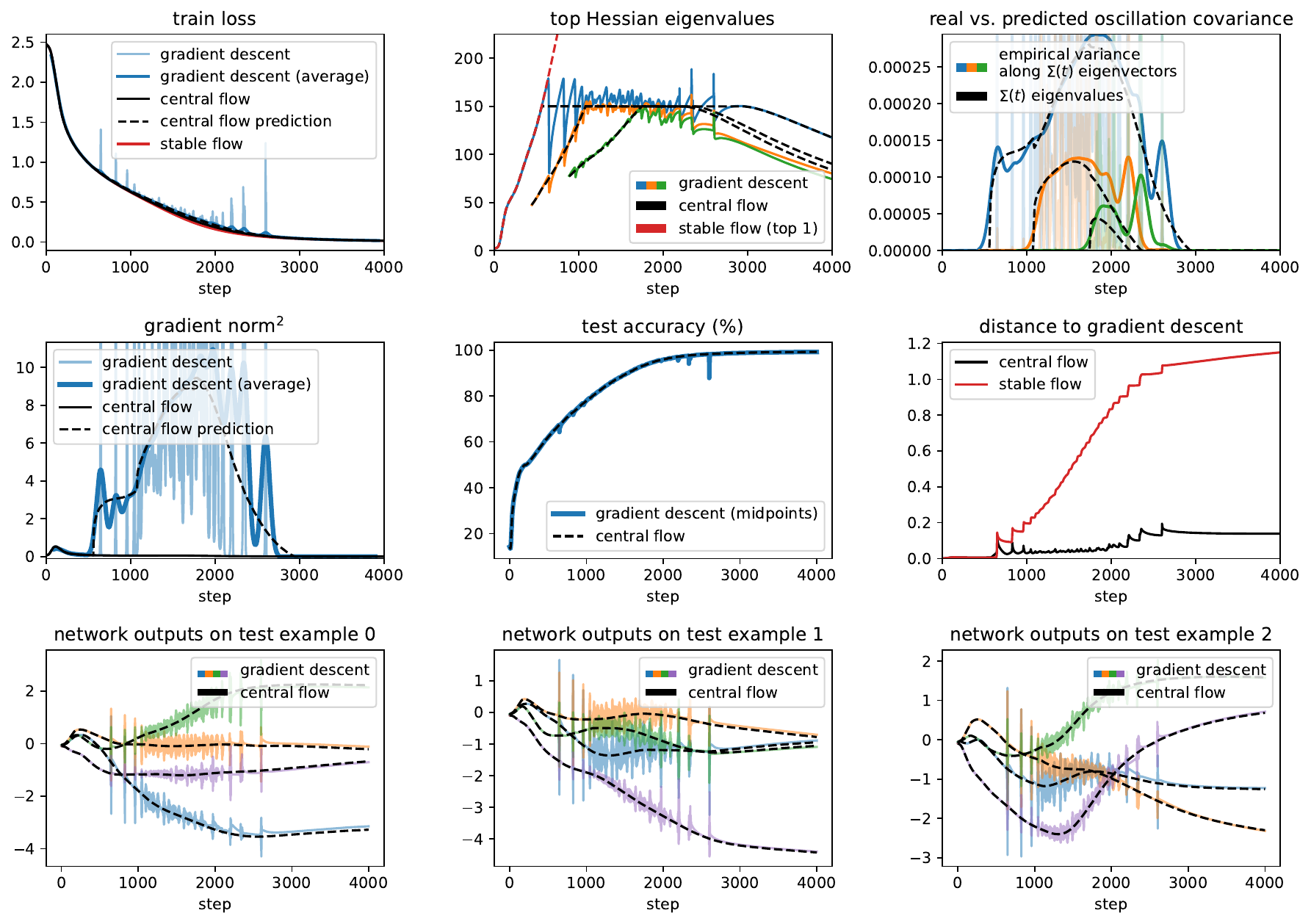}
        \caption{Gradient descent central flow for a Transformer with CE loss, $\eta=$ 0.013333.}
        \label{fig:bulk-gd:mse-transformer-1}
    \end{figure}
                
    \begin{figure}[H]
        \centering
        \includegraphics[width=0.8\linewidth]{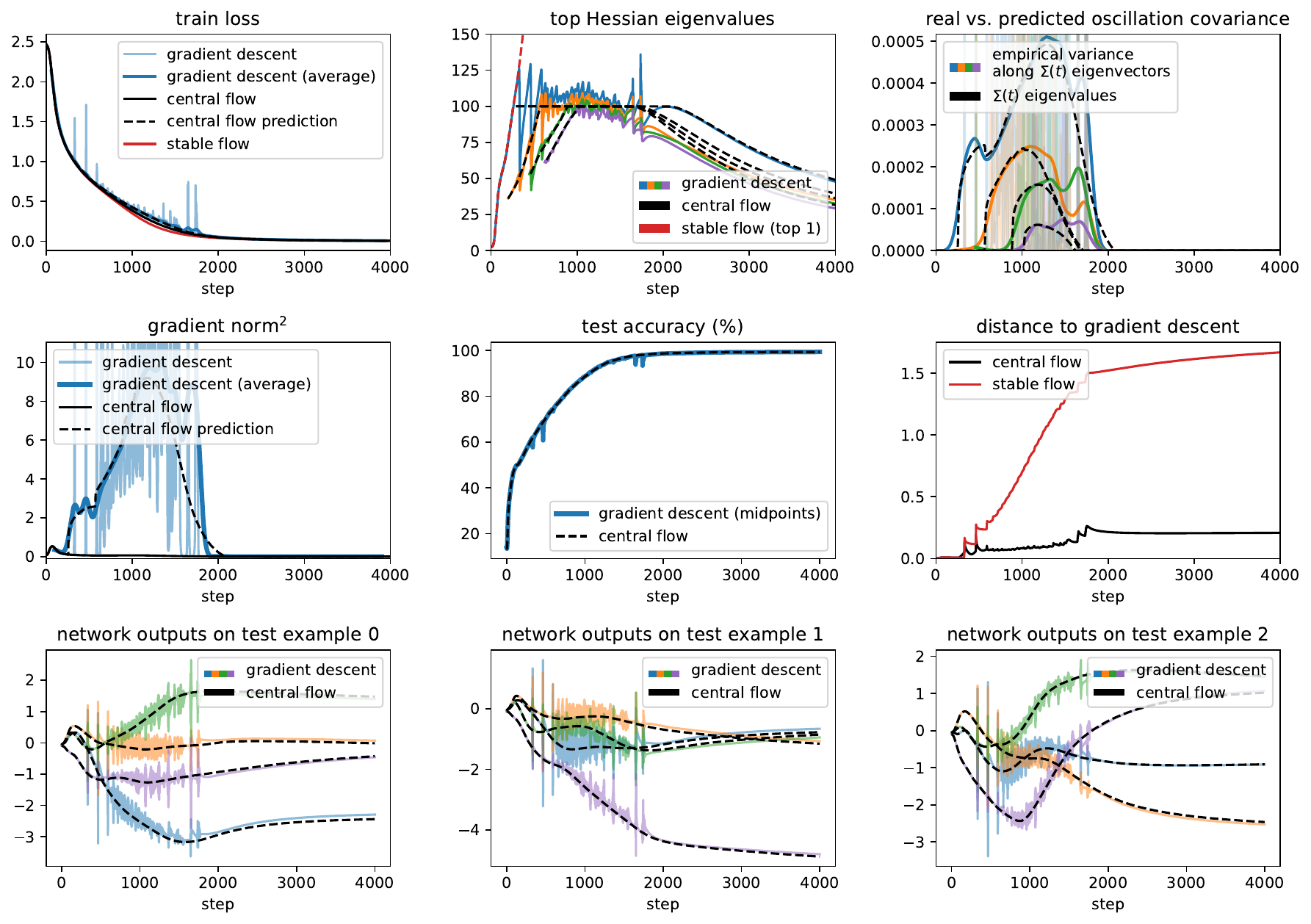}
        \caption{Gradient descent central flow for a Transformer with CE loss, $\eta=$ 0.02.}
        \label{fig:bulk-gd:mse-transformer-2}
    \end{figure}
                
    \begin{figure}[H]
        \centering
        \includegraphics[width=0.8\linewidth]{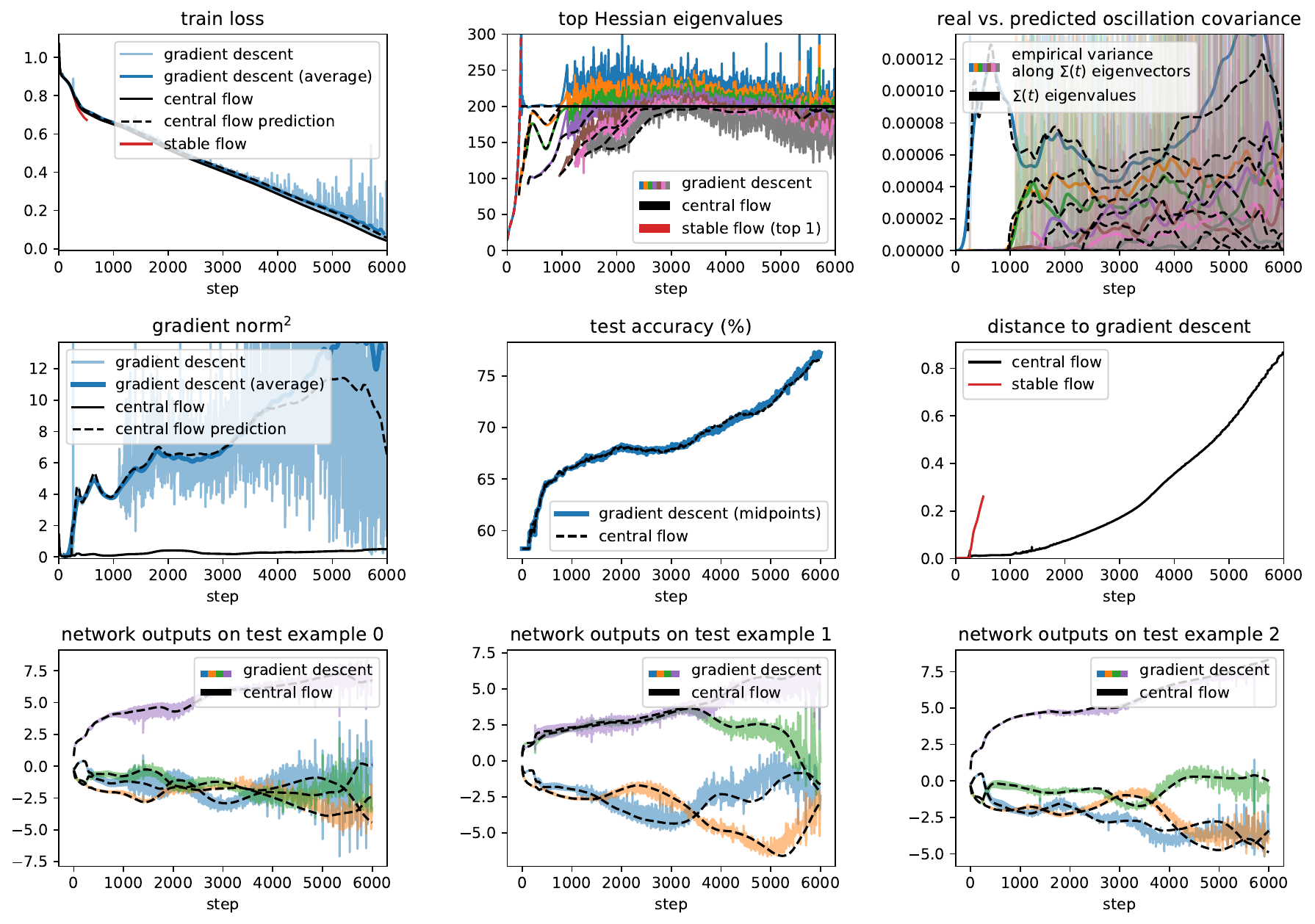}
        \caption{Gradient descent central flow for a Mamba with CE loss, $\eta=$ 0.01.}
        \label{fig:bulk-gd:mse-mamba-0}
    \end{figure}
                
    \begin{figure}[H]
        \centering
        \includegraphics[width=0.8\linewidth]{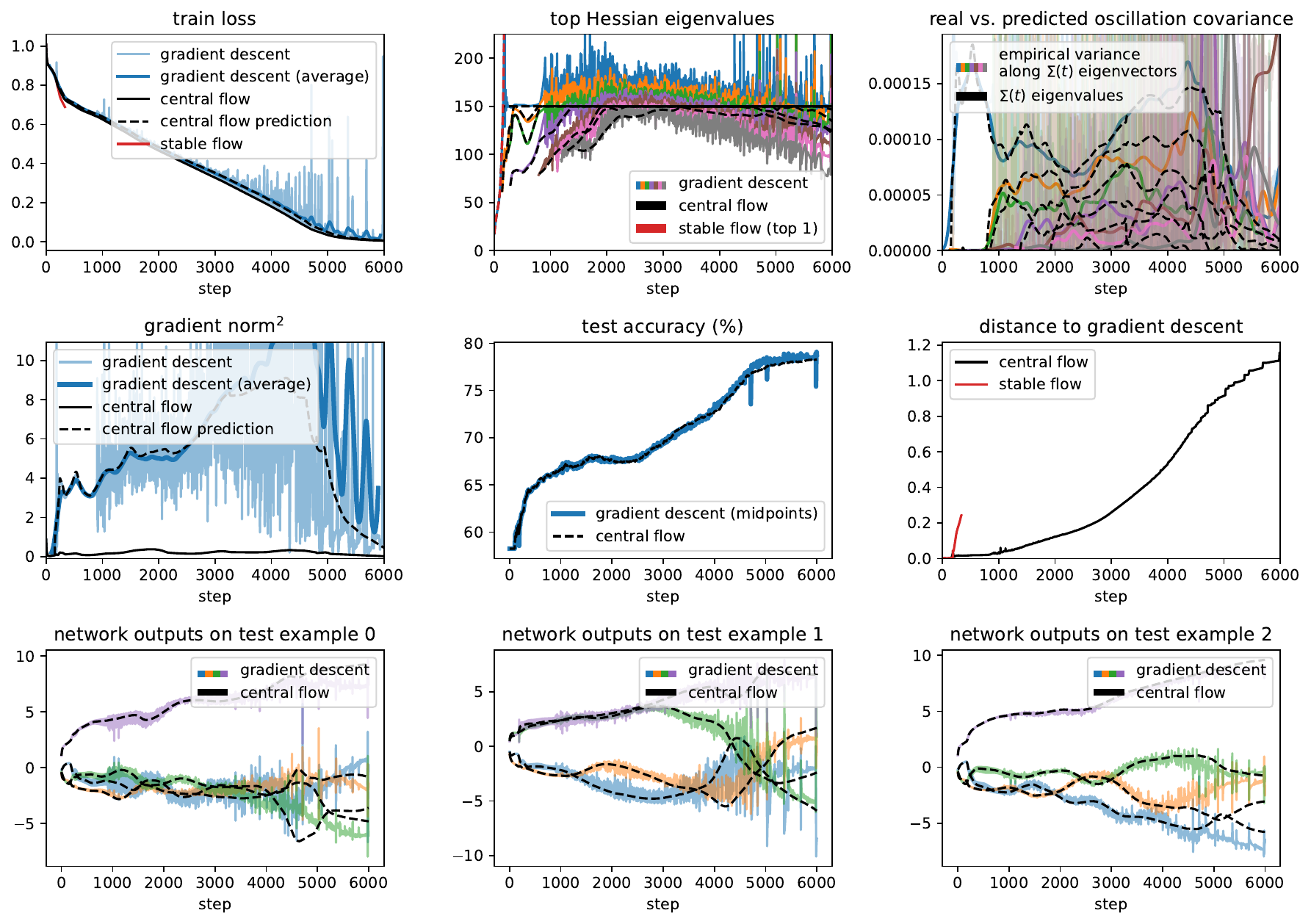}
        \caption{Gradient descent central flow for a Mamba with CE loss, $\eta=$ 0.013333.}
        \label{fig:bulk-gd:mse-mamba-1}
    \end{figure}
                
    \begin{figure}[H]
        \centering
        \includegraphics[width=0.8\linewidth]{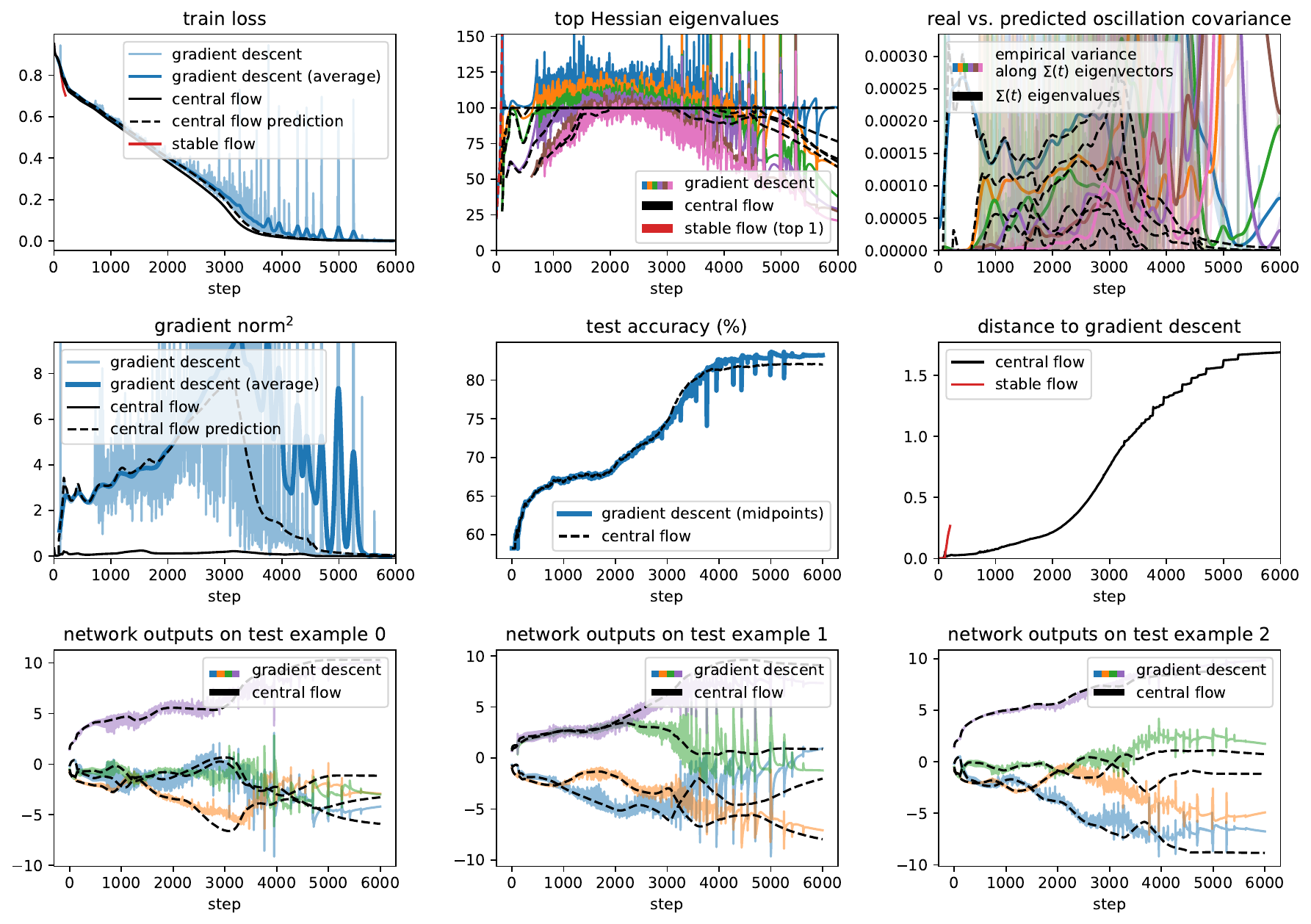}
        \caption{Gradient descent central flow for a Mamba with CE loss, $\eta=$ 0.02.}
        \label{fig:bulk-gd:mse-mamba-2}
    \end{figure}
                \end{specialfigures}

\clearpage

\subsection{Scalar RMSProp}
\label{sec:bulk-experiments-scalar-rmsprop}

\begin{figure}[H]
    \centering
    \includegraphics[width=0.8\linewidth]{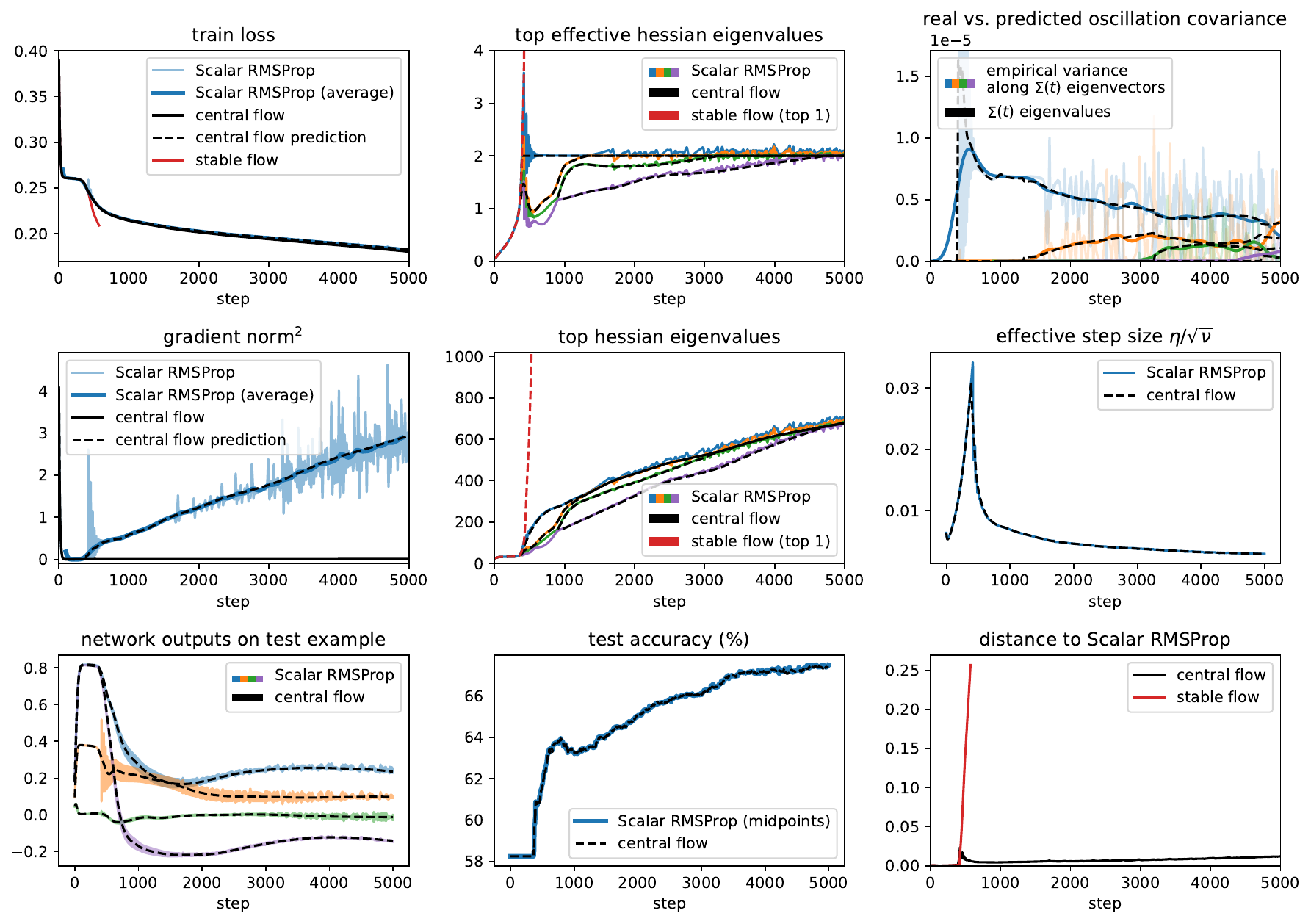}
    \caption{\textbf{Annotated example of a \protect \rmsnorm experiment.} Using Scalar RMSProp with $\eta = 2/400$, $\beta_2 = 0.99$, and bias correction, we train a Mamba network on a synthetic sequence prediction task with MSE loss. The central flow (black) accurately models the long-term trajectory of Scalar RMSProp (blue), whereas the stable flow (red) takes a different path.  As described in \Cref{appendix:experimental-details:implementation}, we terminate the stable flow once the effective sharpness gets too high.
    \vspace{0.5em}
    \\\textbf{Top left}: See \Cref{fig:experiments:gd:example} caption.  The central flow's prediction for the time-averaged loss is given by \cref{eq:rmsnorm-predict-loss}.
    \\\textbf{Top center}:  We plot the top several eigenvalues of the effective Hessian $\tfrac{\eta}{\sqrt{\nu}} H(w)$ under both \rmsnorm (colors) and its central flow (dashed black).  Under \rmsnorm, these eigenvalues equilibrate around the critical threshold 2, whereas under the central flow they are fixed exactly at 2.  We also plot the top eigenvalue under the ``stable flow'' baseline (red), and this increases far above 2.
    \\\textbf{Top right}: See \Cref{fig:experiments:gd:example} caption.  This plot is validating \cref{eq:rmsnorm-predict-oscillation-variance}.
    \\\textbf{Middle left}:  See \Cref{fig:experiments:gd:example} caption. The central flow's prediction for the time-average is given by \cref{eq:rmsnorm-predict-gradient-norm-sq}.
    \\\textbf{Middle center}: We plot the top several eigenvalues of the ``raw'' Hessian $H(w)$, under both \rmsnorm (colors) and the central flow (dashed black). These evolve throughout training, even as top eigenvalues of the \emph{effective} Hessian are equilibrating at the critical threshold (top center).  In red, we plot the top Hessian eigenvalue under the stable flow.  
    \\\textbf{Middle right}: We plot the effective step size $\eta / \sqrt \nu$ under both \rmsnorm (blue) and the central flow (dashed black). This effective step size oscillates under \rmsnorm, but varies smoothly under the central flow.
    \\\textbf{Bottom left}: We show the network's final-layer predictions on an arbitrary example.  Under Scalar RMSProp (colors) these predictions oscillate due to the oscillations in weight space.  Under the central flow (dashed black), the predictions evolve smoothly while following the same macroscopic path.
    \\\textbf{Bottom center}: See \Cref{fig:experiments:gd:example} caption.
    \\\textbf{Bottom right}: See \Cref{fig:experiments:gd:example} caption.
    }
    \label{fig:experiments:scalar-rmsprop:example}
\end{figure}

\clearpage
\begin{specialfigures}

    \begin{figure}[H]
        \centering
        \includegraphics[width=0.8\linewidth]{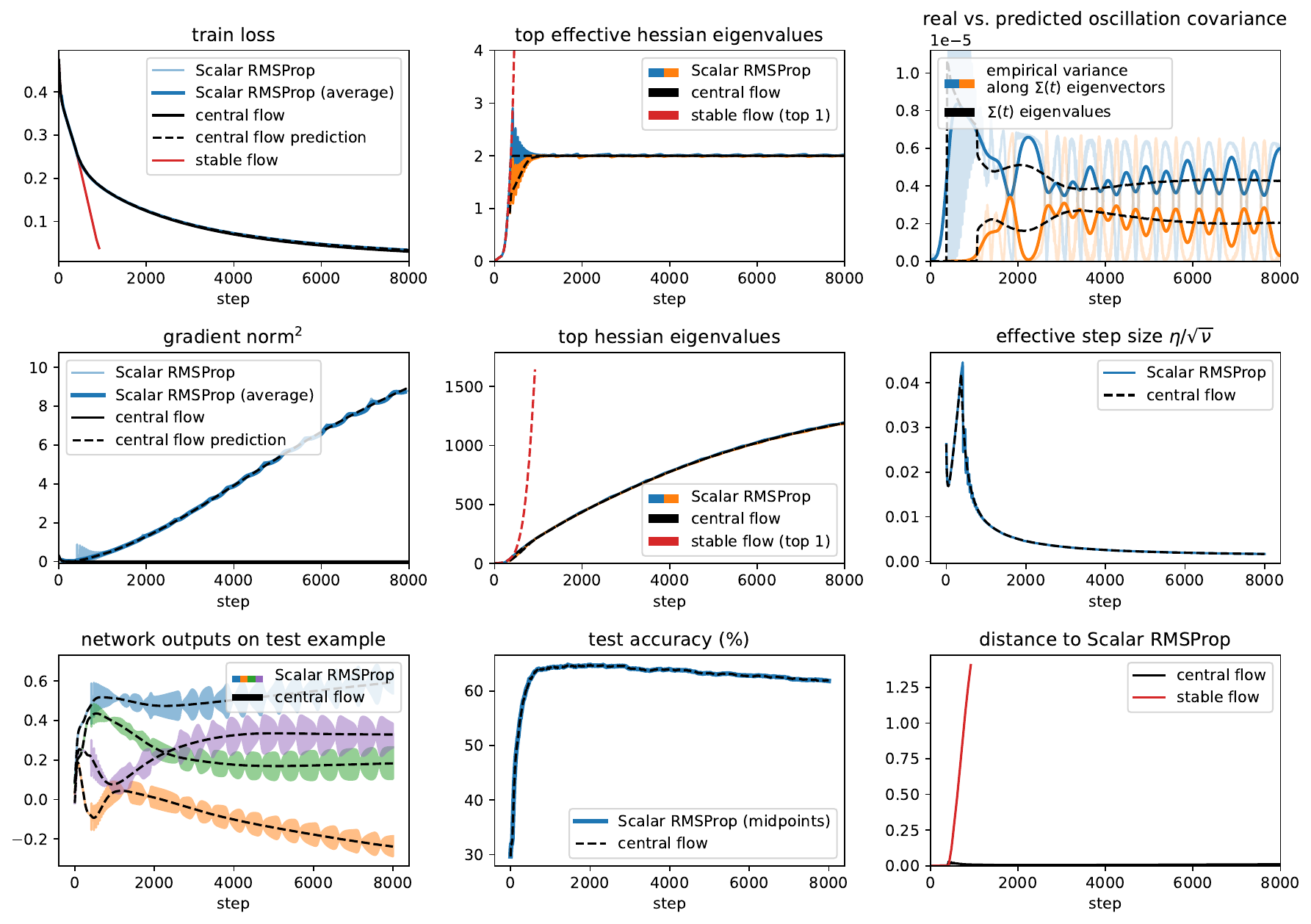}
        \caption{Scalar RMSProp central flow for a CNN with MSE loss, $\eta= $ 0.003, $\beta_2 = $ 0.99, and bias correction.}
        \label{fig:bulk-scalar-rmsprop:mse-cnn-0}
    \end{figure}
                
    \begin{figure}[H]
        \centering
        \includegraphics[width=0.8\linewidth]{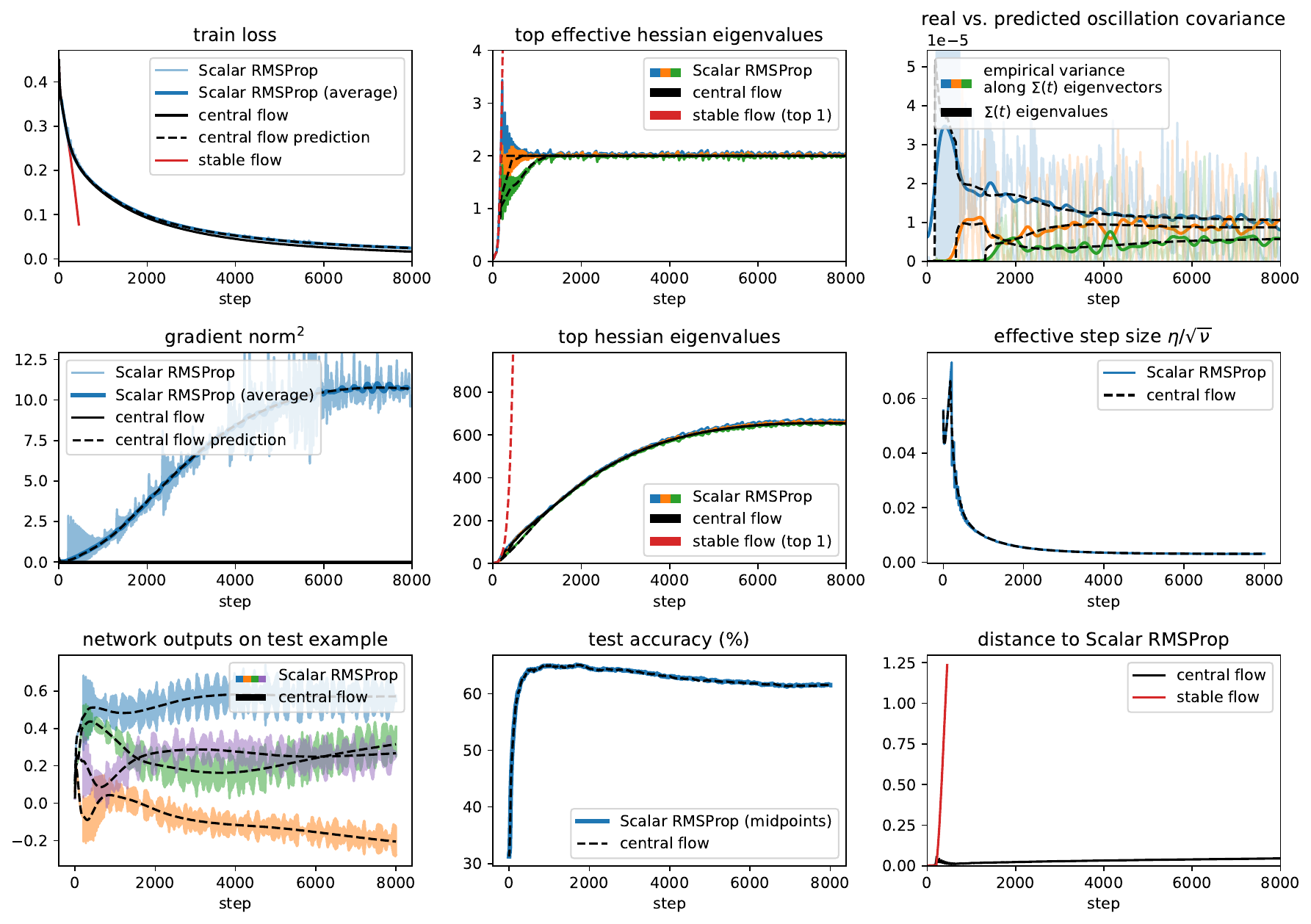}
        \caption{Scalar RMSProp central flow for a CNN with MSE loss, $\eta= $ 0.006, $\beta_2 = $ 0.99, and bias correction.}
        \label{fig:bulk-scalar-rmsprop:mse-cnn-1}
    \end{figure}
                
    \begin{figure}[H]
        \centering
        \includegraphics[width=0.8\linewidth]{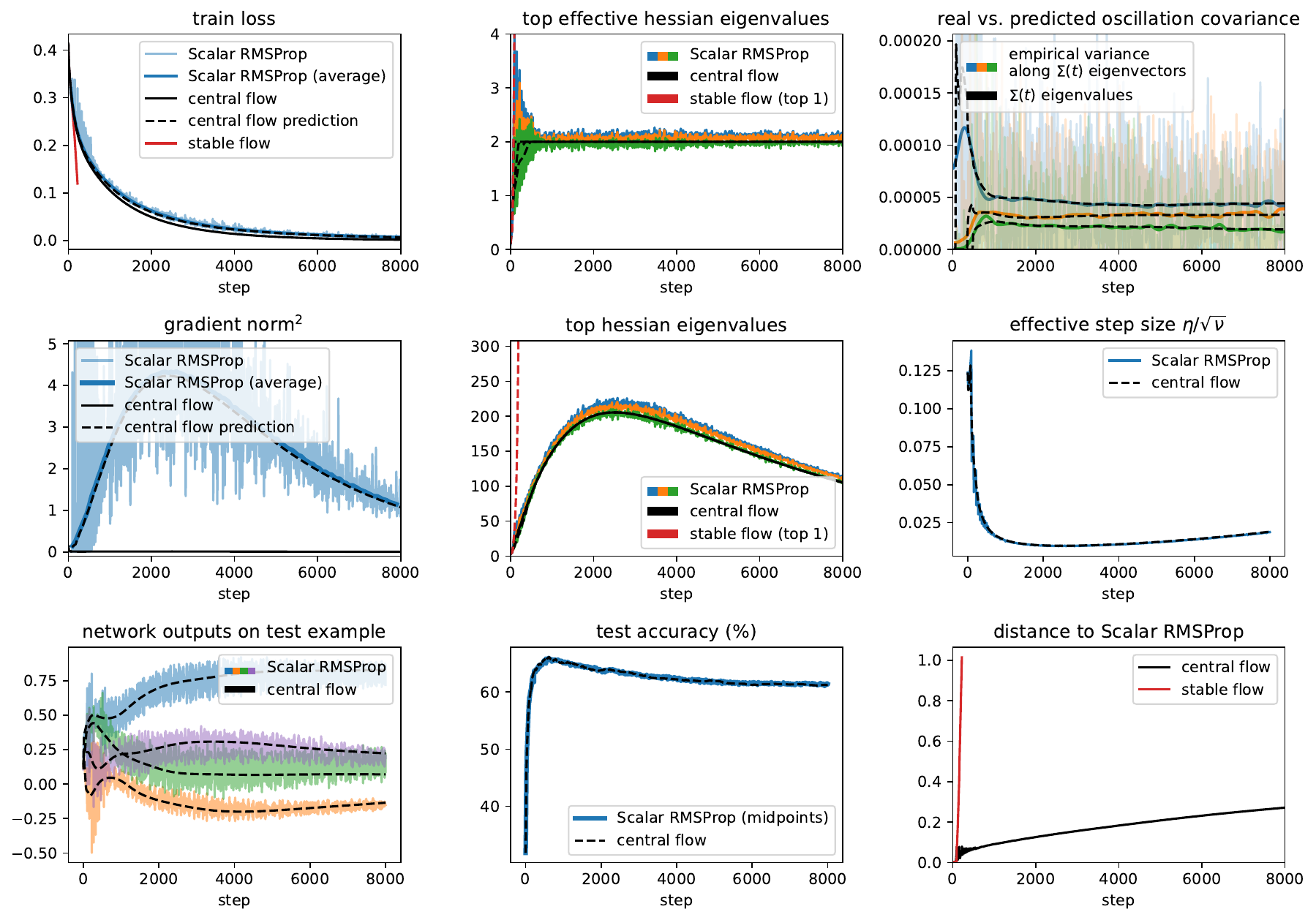}
        \caption{Scalar RMSProp central flow for a CNN with MSE loss, $\eta= $ 0.01, $\beta_2 = $ 0.99, and bias correction.}
        \label{fig:bulk-scalar-rmsprop:mse-cnn-2}
    \end{figure}
                
    \begin{figure}[H]
        \centering
        \includegraphics[width=0.8\linewidth]{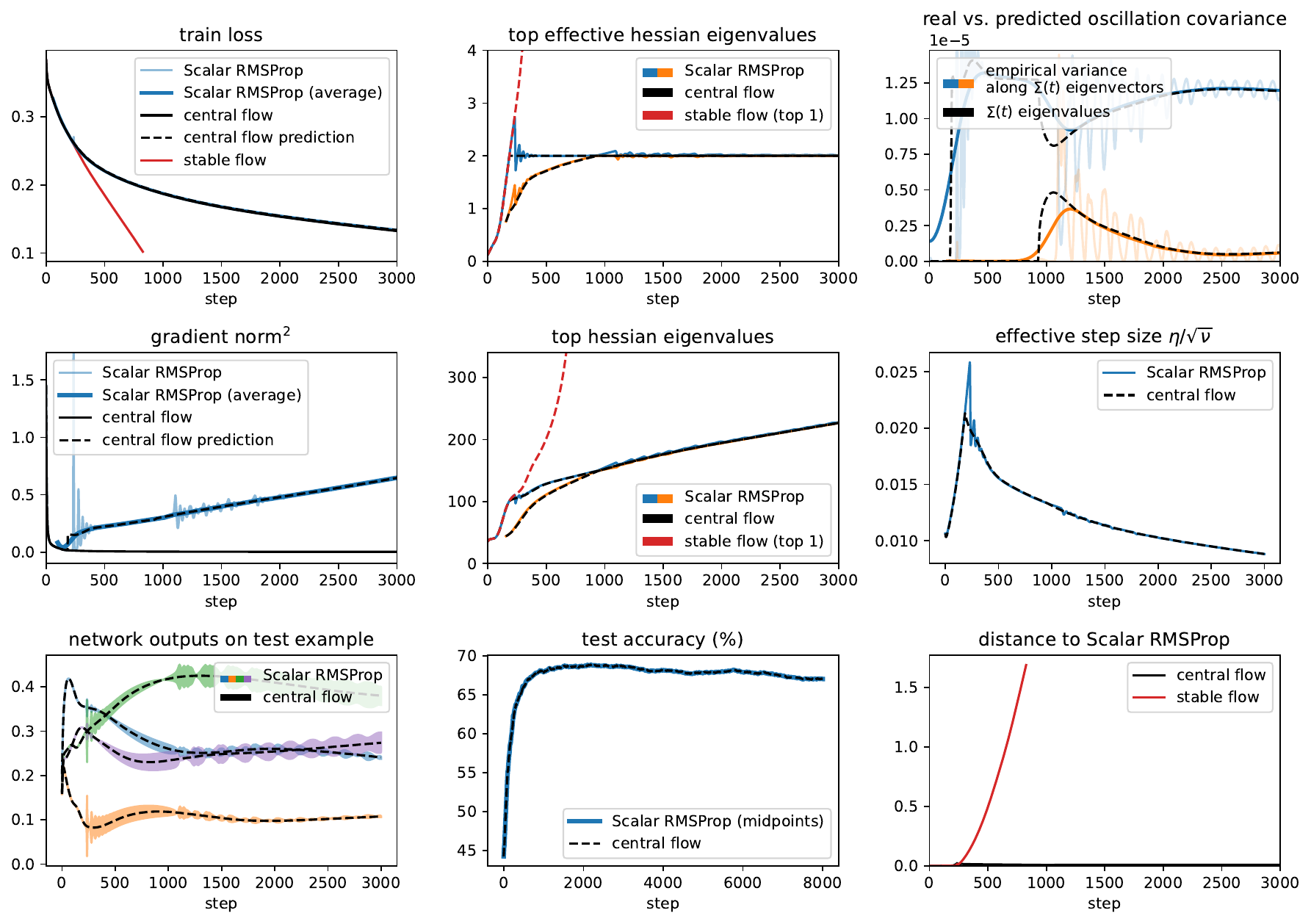}
        \caption{Scalar RMSProp central flow for a ResNet with MSE loss, $\eta= $ 0.01, $\beta_2 = $ 0.99, and bias correction.}
        \label{fig:bulk-scalar-rmsprop:mse-resnet-0}
    \end{figure}
                
    \begin{figure}[H]
        \centering
        \includegraphics[width=0.8\linewidth]{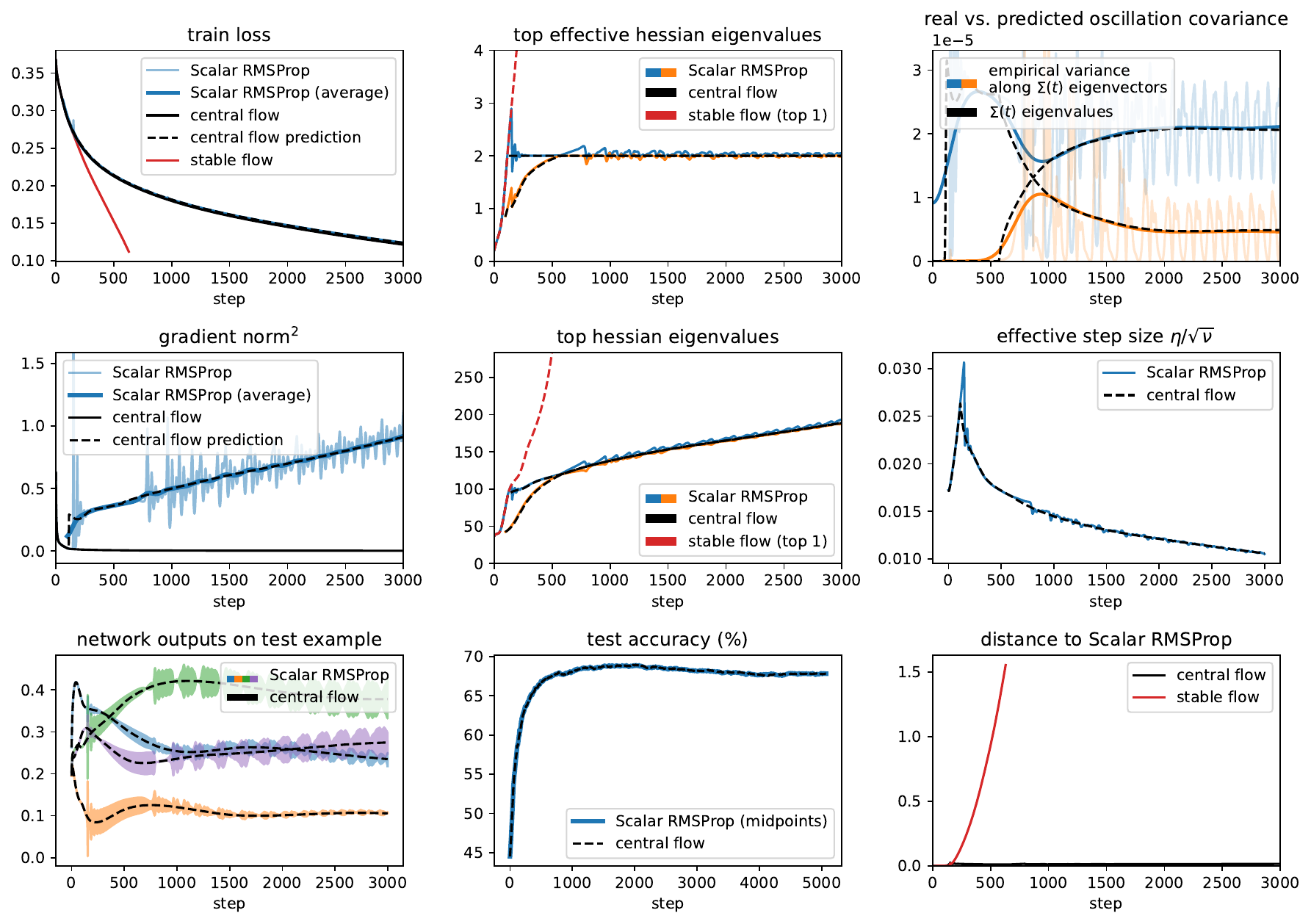}
        \caption{Scalar RMSProp central flow for a ResNet with MSE loss, $\eta= $ 0.02, $\beta_2 = $ 0.99, and bias correction.}
        \label{fig:bulk-scalar-rmsprop:mse-resnet-1}
    \end{figure}
                
    \begin{figure}[H]
        \centering
        \includegraphics[width=0.8\linewidth]{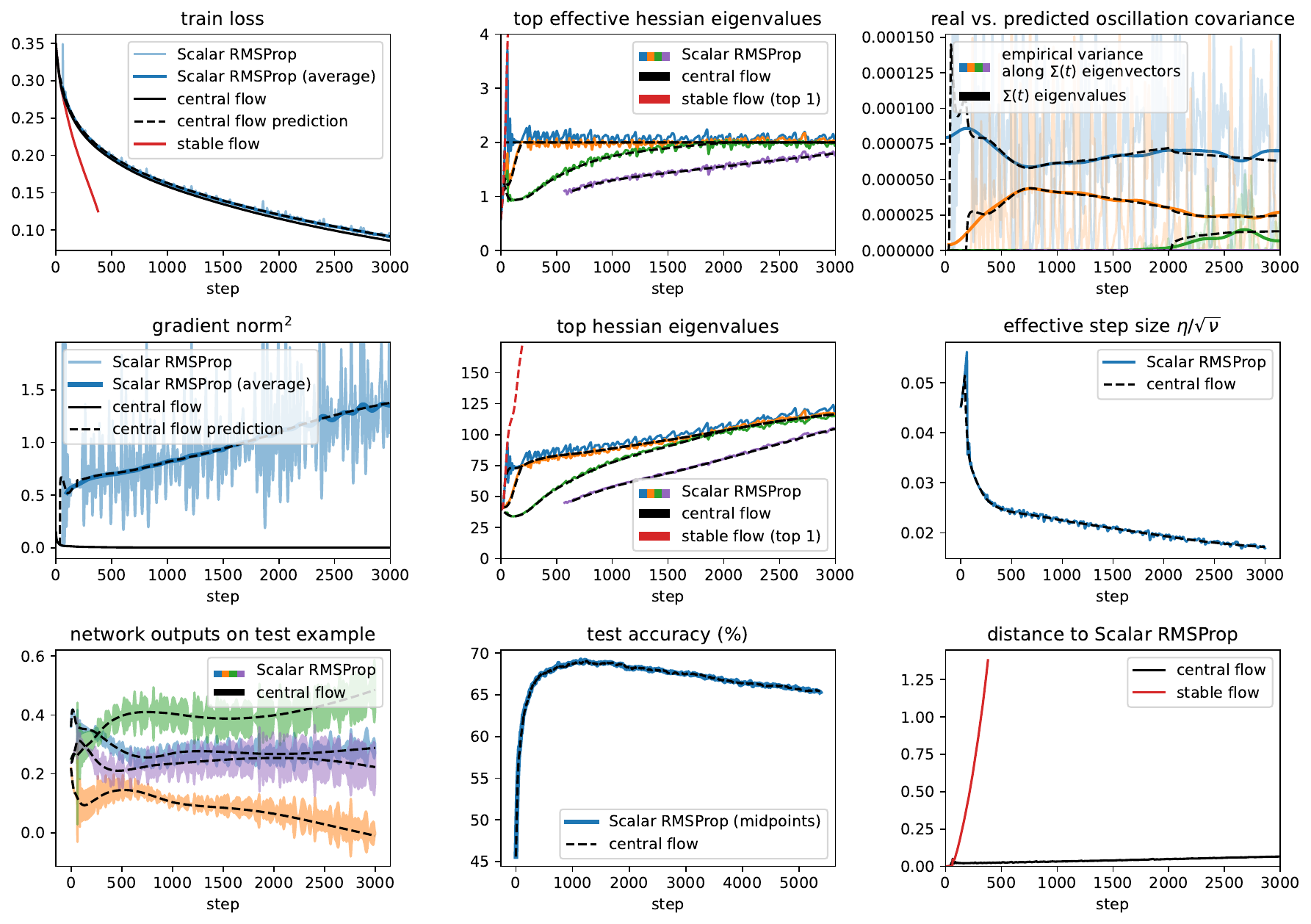}
        \caption{Scalar RMSProp central flow for a ResNet with MSE loss, $\eta= $ 0.03, $\beta_2 = $ 0.99, and bias correction.}
        \label{fig:bulk-scalar-rmsprop:mse-resnet-2}
    \end{figure}
                
    \begin{figure}[H]
        \centering
        \includegraphics[width=0.8\linewidth]{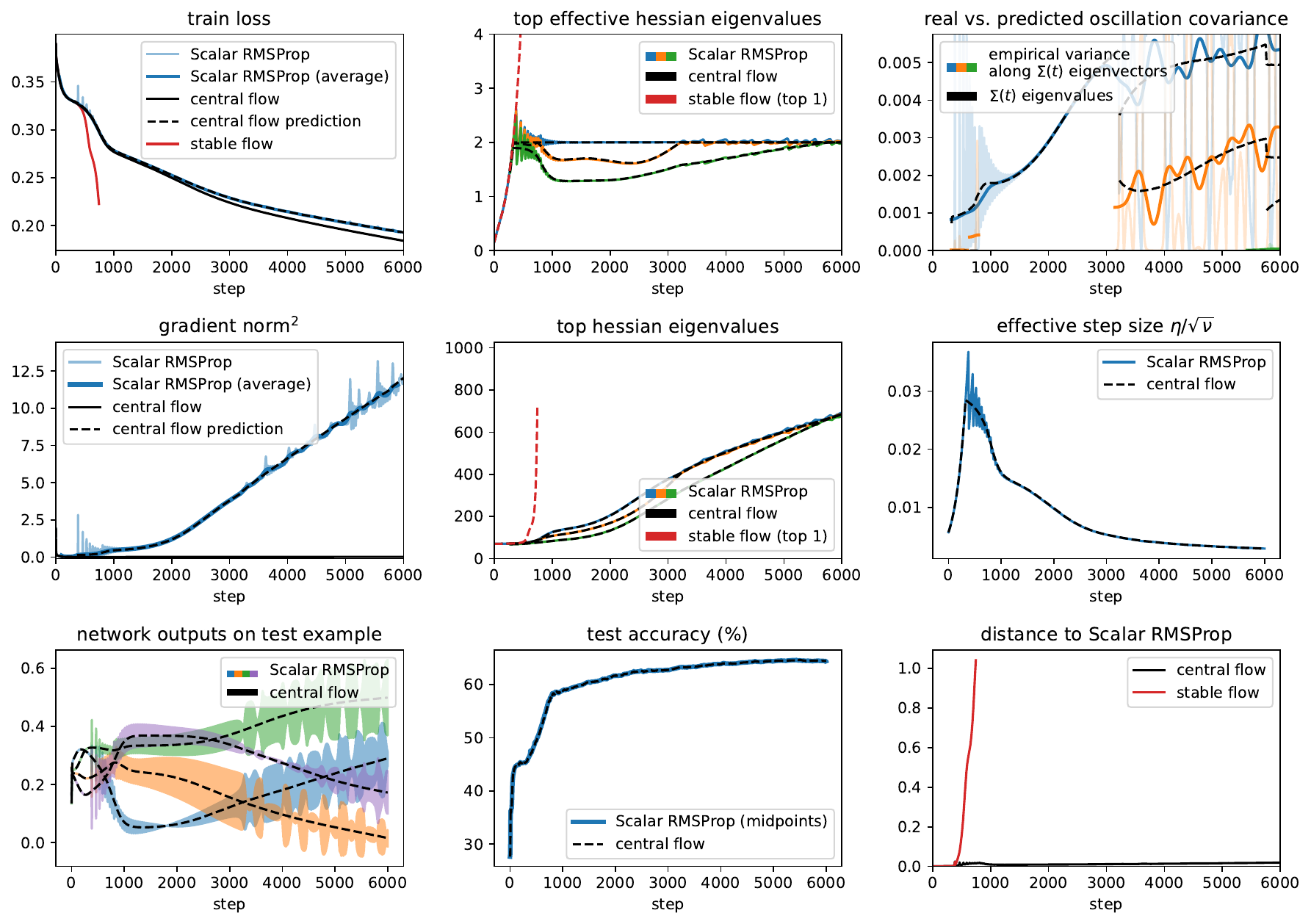}
        \caption{Scalar RMSProp central flow for a ViT with MSE loss, $\eta= $ 0.01, $\beta_2 = $ 0.99, and bias correction.}
        \label{fig:bulk-scalar-rmsprop:mse-vit-0}
    \end{figure}
                
    \begin{figure}[H]
        \centering
        \includegraphics[width=0.8\linewidth]{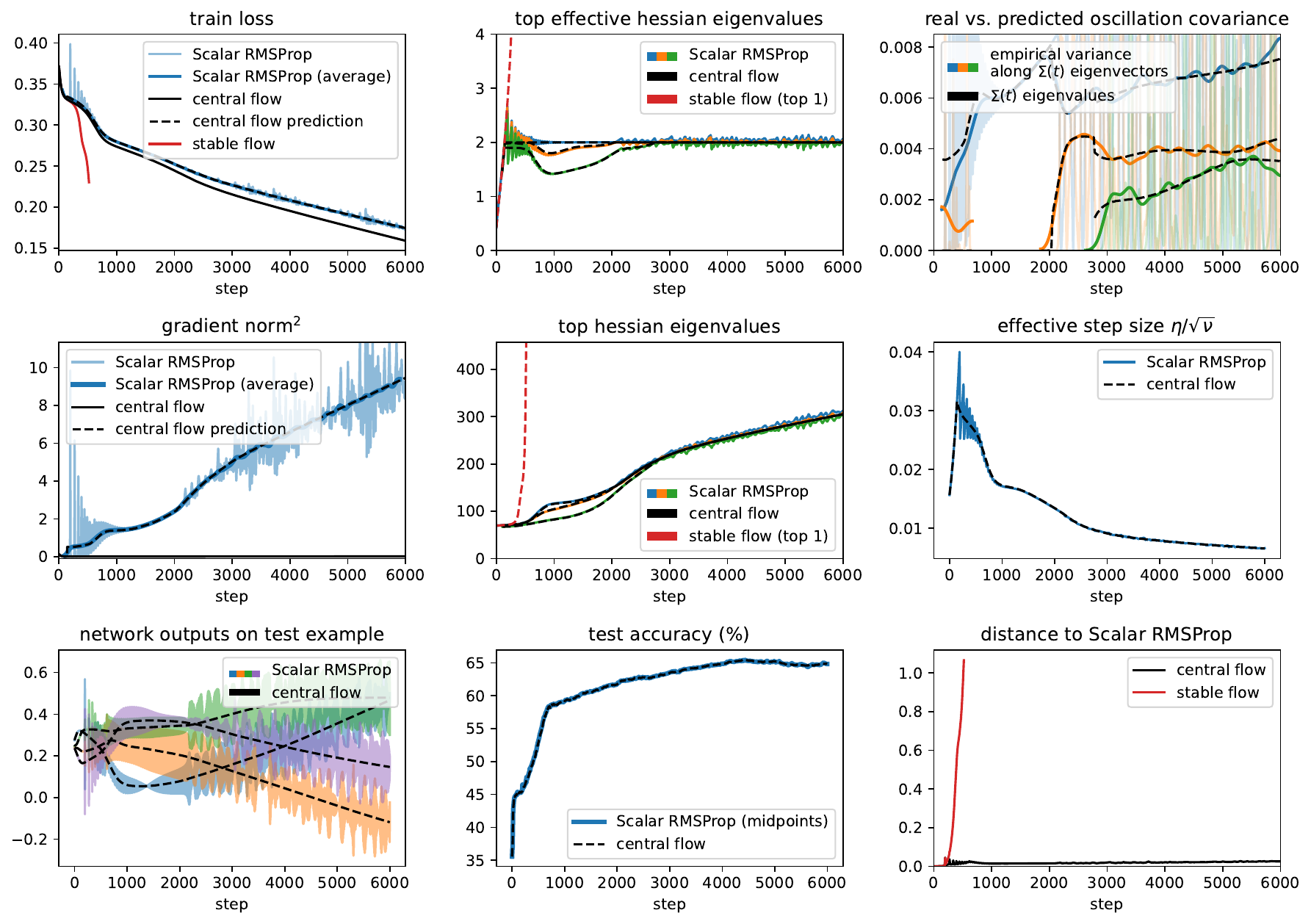}
        \caption{Scalar RMSProp central flow for a ViT with MSE loss, $\eta= $ 0.02, $\beta_2 = $ 0.99, and bias correction.}
        \label{fig:bulk-scalar-rmsprop:mse-vit-1}
    \end{figure}
                
    \begin{figure}[H]
        \centering
        \includegraphics[width=0.8\linewidth]{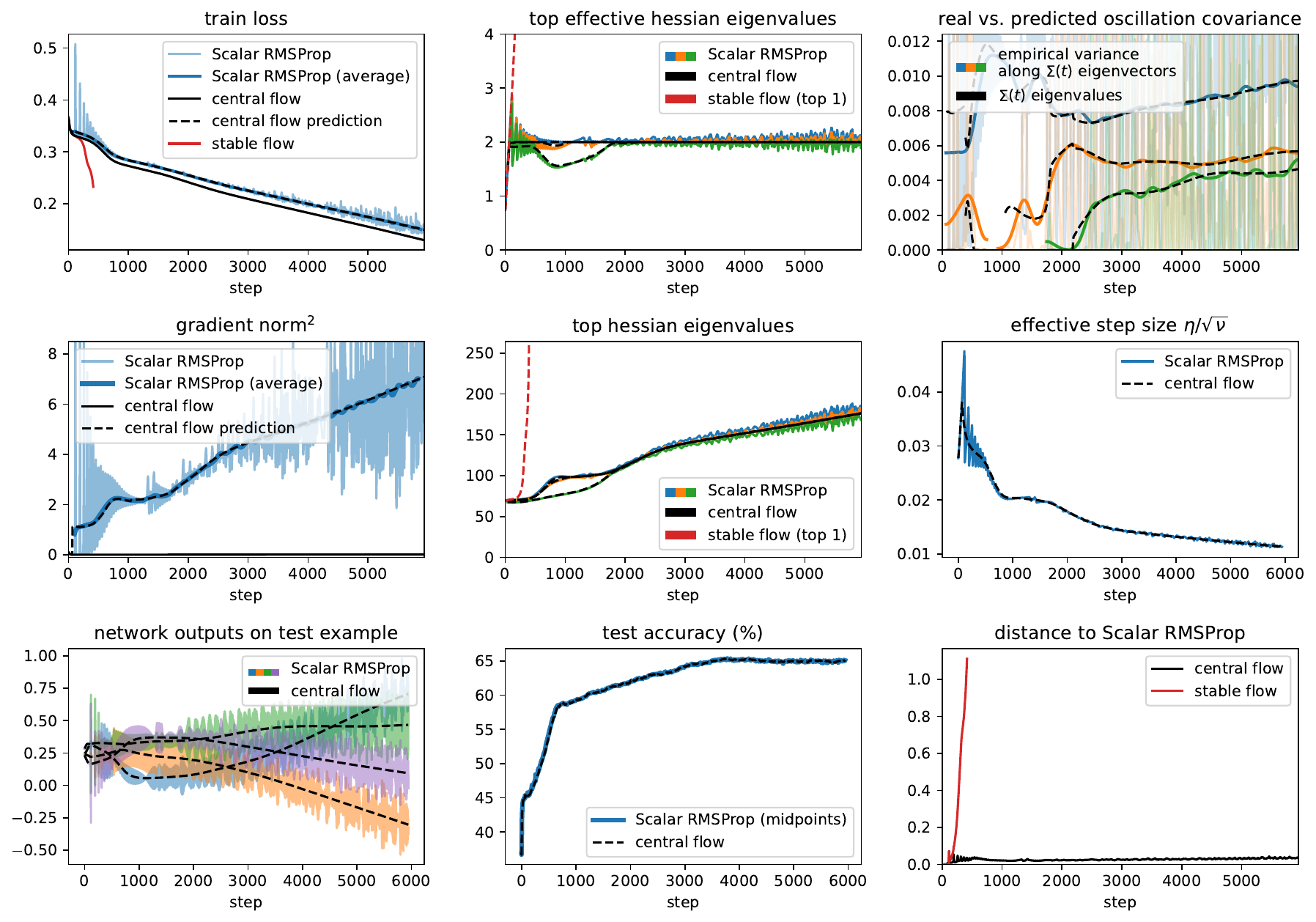}
        \caption{Scalar RMSProp central flow for a ViT with MSE loss, $\eta= $ 0.03, $\beta_2 = $ 0.99, and bias correction.}
        \label{fig:bulk-scalar-rmsprop:mse-vit-2}
    \end{figure}
                
    \begin{figure}[H]
        \centering
        \includegraphics[width=0.8\linewidth]{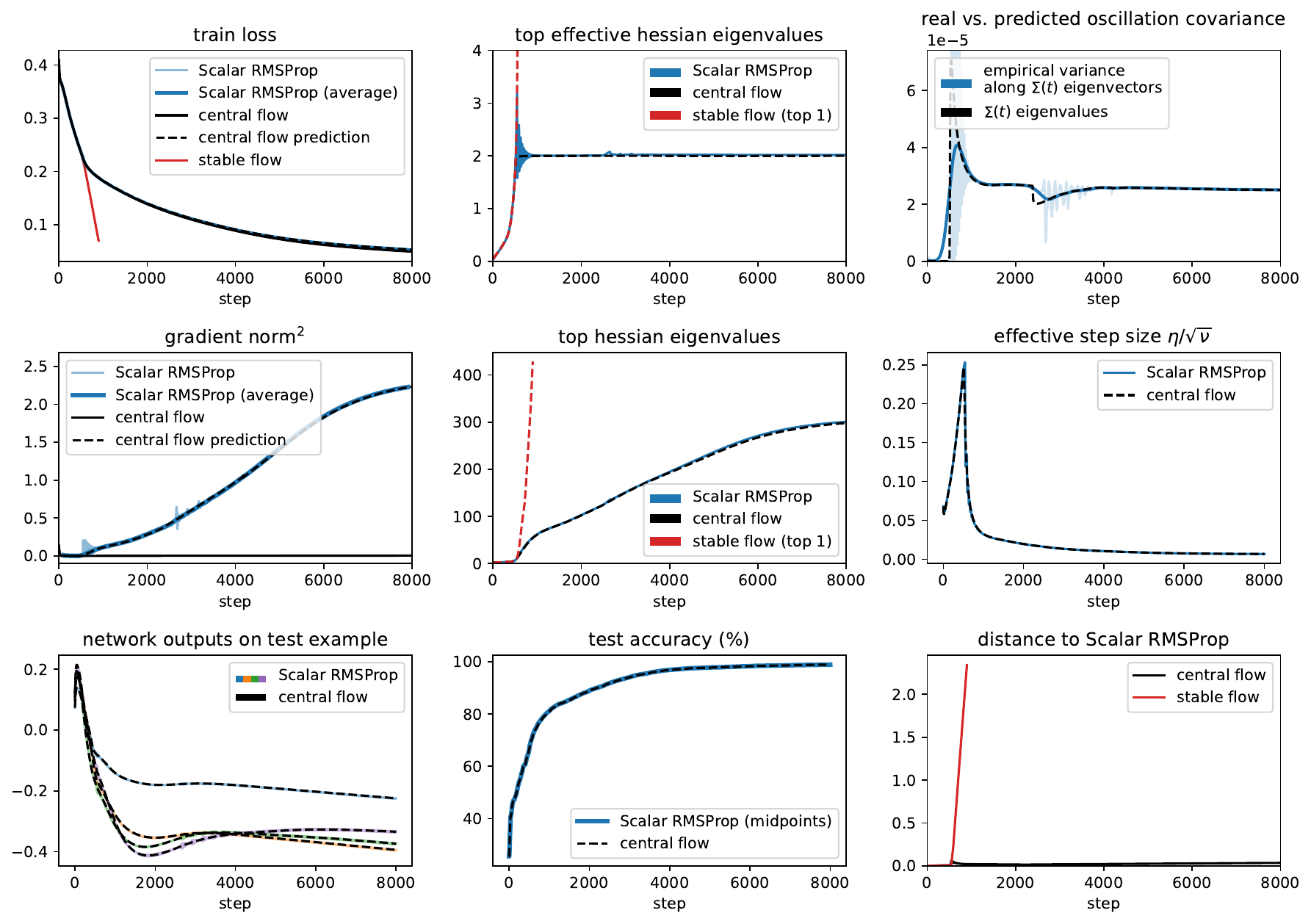}
        \caption{Scalar RMSProp central flow for a LSTM with MSE loss, $\eta= $ 0.01, $\beta_2 = $ 0.99, and bias correction.}
        \label{fig:bulk-scalar-rmsprop:mse-lstm-0}
    \end{figure}
                
    \begin{figure}[H]
        \centering
        \includegraphics[width=0.8\linewidth]{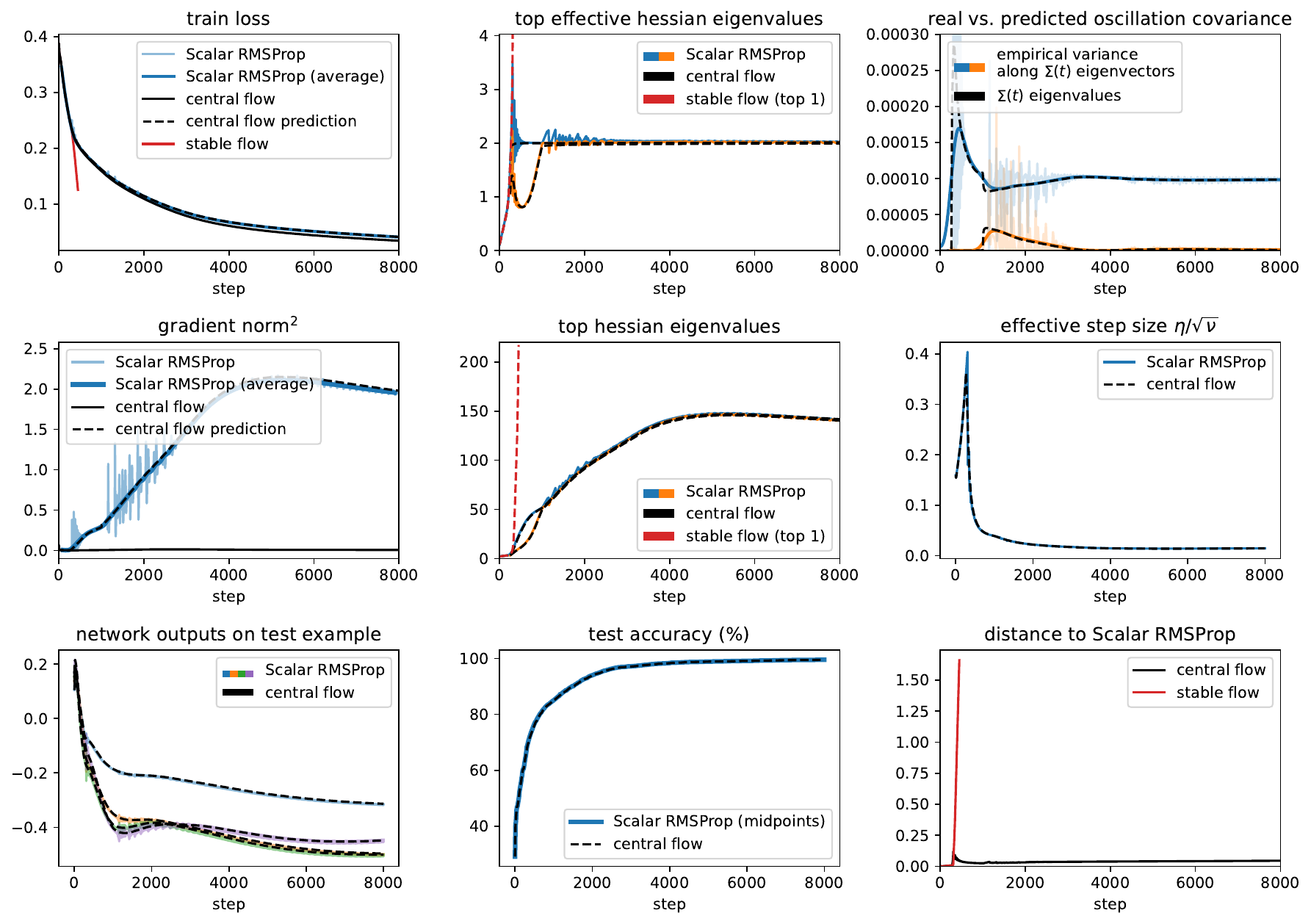}
        \caption{Scalar RMSProp central flow for a LSTM with MSE loss, $\eta= $ 0.02, $\beta_2 = $ 0.99, and bias correction.}
        \label{fig:bulk-scalar-rmsprop:mse-lstm-1}
    \end{figure}
                
    \begin{figure}[H]
        \centering
        \includegraphics[width=0.8\linewidth]{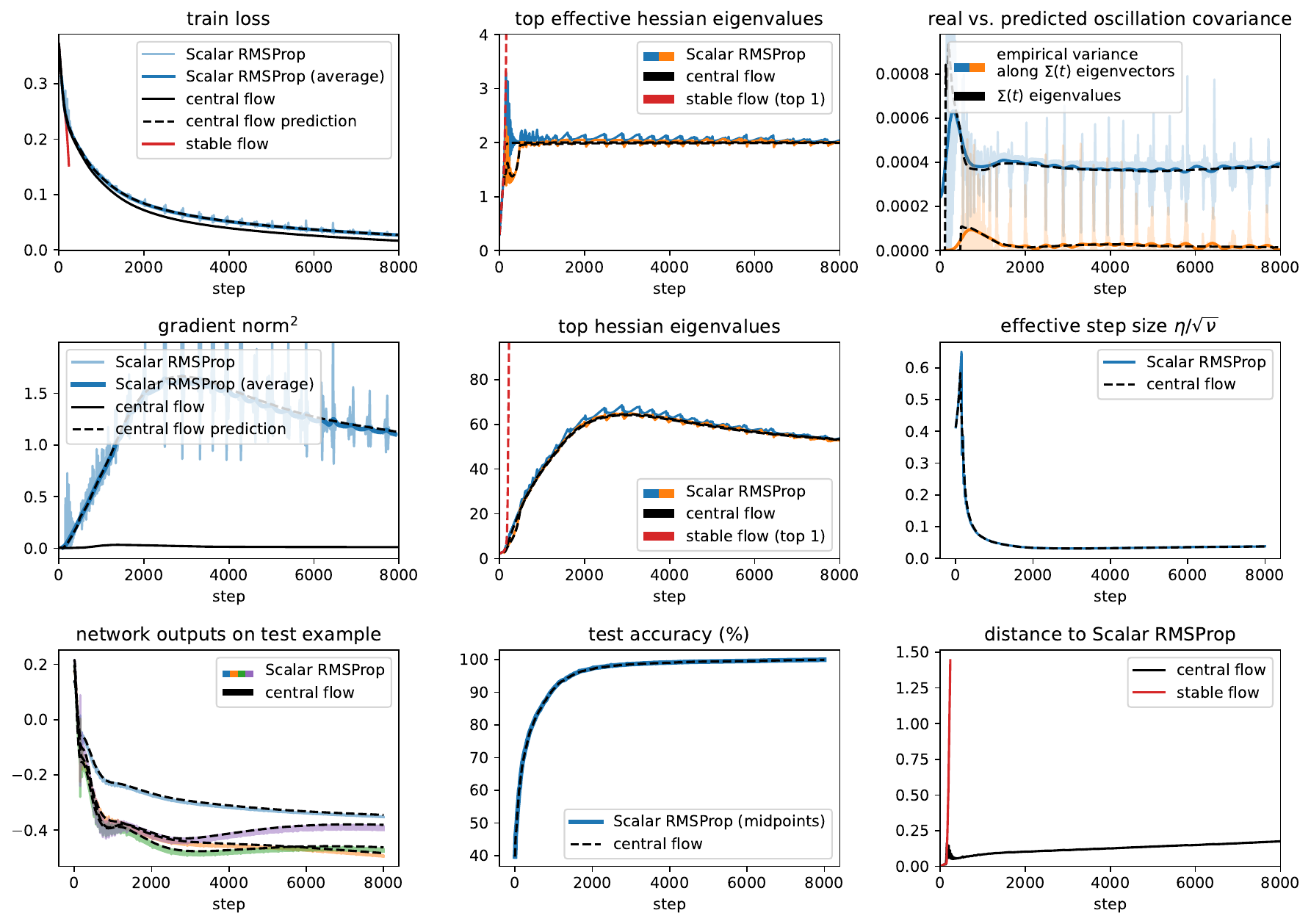}
        \caption{Scalar RMSProp central flow for a LSTM with MSE loss, $\eta= $ 0.03, $\beta_2 = $ 0.99, and bias correction.}
        \label{fig:bulk-scalar-rmsprop:mse-lstm-2}
    \end{figure}
                
    \begin{figure}[H]
        \centering
        \includegraphics[width=0.8\linewidth]{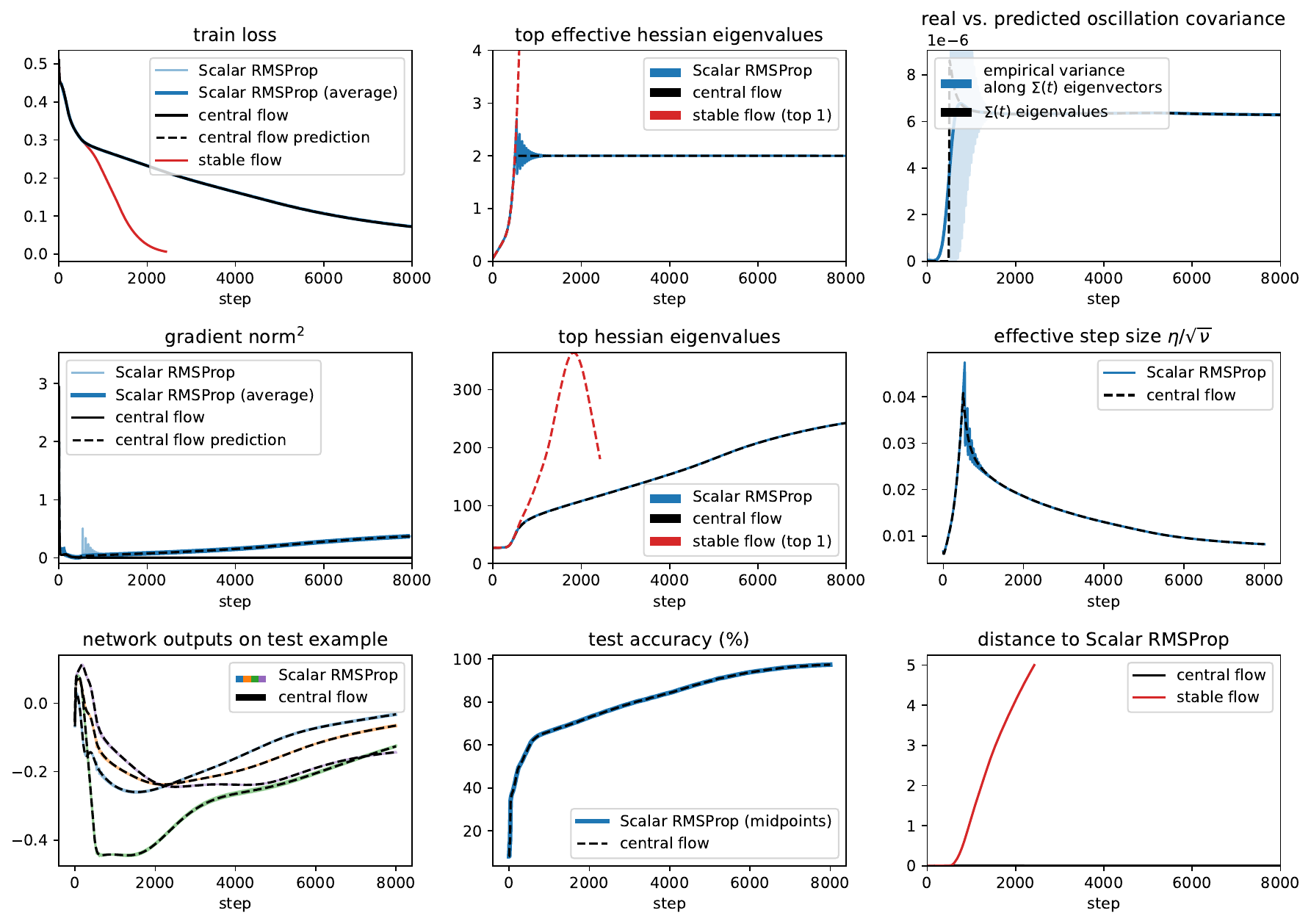}
        \caption{Scalar RMSProp central flow for a Transformer with MSE loss, $\eta= $ 0.01, $\beta_2 = $ 0.99, and bias correction.}
        \label{fig:bulk-scalar-rmsprop:mse-transformer-0}
    \end{figure}
                
    \begin{figure}[H]
        \centering
        \includegraphics[width=0.8\linewidth]{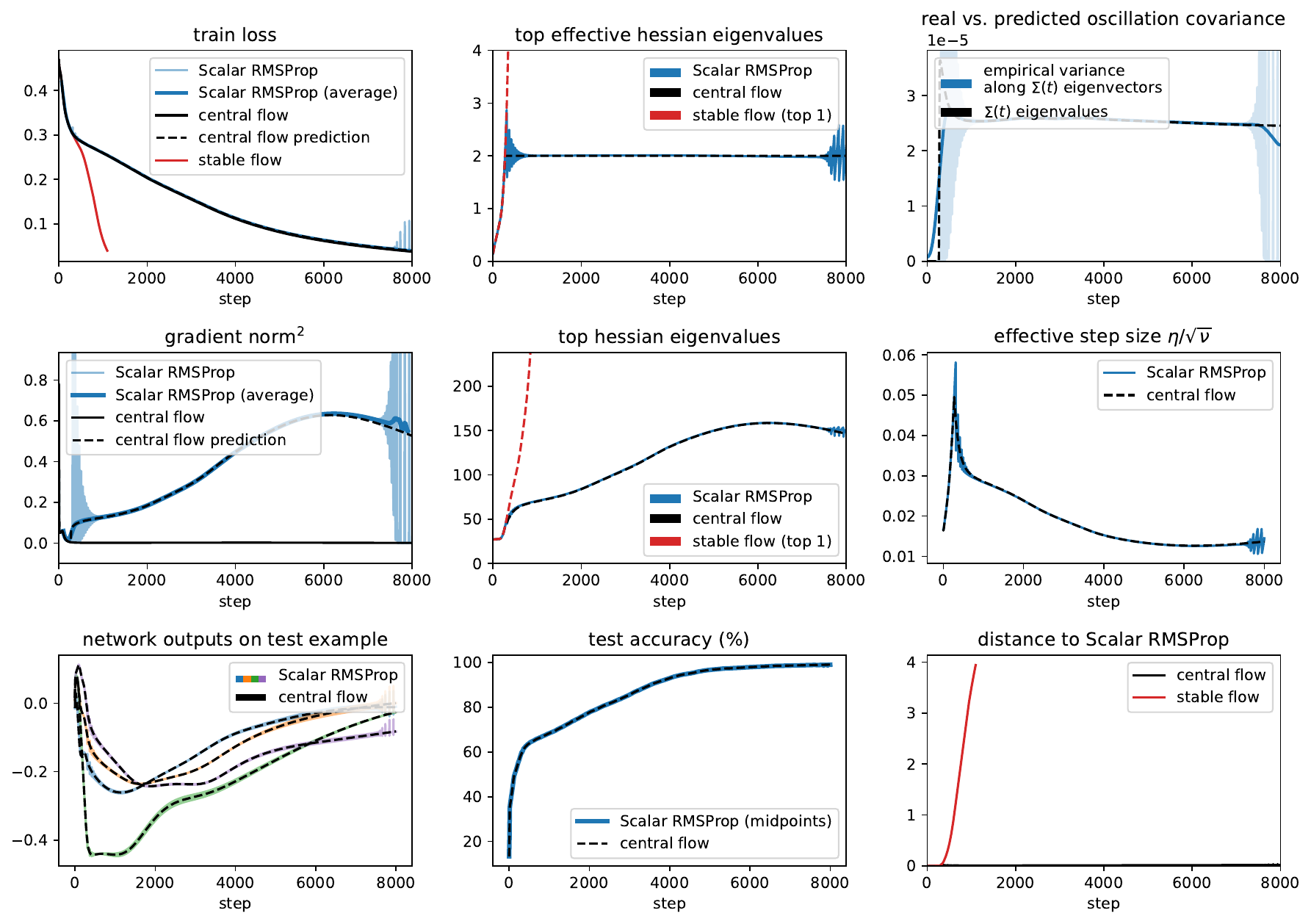}
        \caption{Scalar RMSProp central flow for a Transformer with MSE loss, $\eta= $ 0.02, $\beta_2 = $ 0.99, and bias correction.}
        \label{fig:bulk-scalar-rmsprop:mse-transformer-1}
    \end{figure}
                
    \begin{figure}[H]
        \centering
        \includegraphics[width=0.8\linewidth]{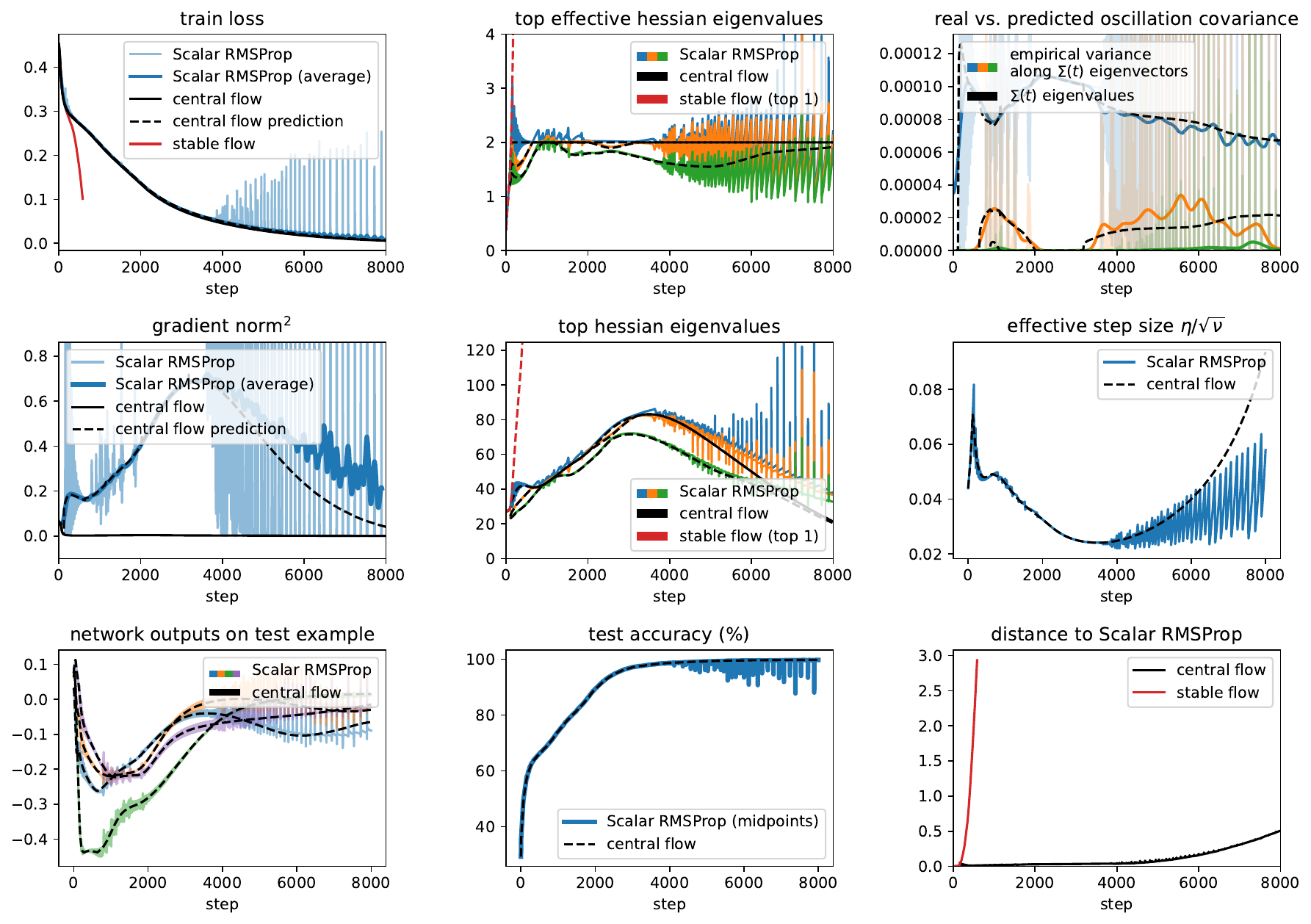}
        \caption{Scalar RMSProp central flow for a Transformer with MSE loss, $\eta= $ 0.03, $\beta_2 = $ 0.99, and bias correction.}
        \label{fig:bulk-scalar-rmsprop:mse-transformer-2}
    \end{figure}
                
    \begin{figure}[H]
        \centering
        \includegraphics[width=0.8\linewidth]{images/bulk-scalar-rmsprop/mse-mamba-0.pdf}
        \caption{Scalar RMSProp central flow for a Mamba with MSE loss, $\eta= $ 0.007, $\beta_2 = $ 0.99, and bias correction.}
        \label{fig:bulk-scalar-rmsprop:mse-mamba-0}
    \end{figure}
                
    \begin{figure}[H]
        \centering
        \includegraphics[width=0.8\linewidth]{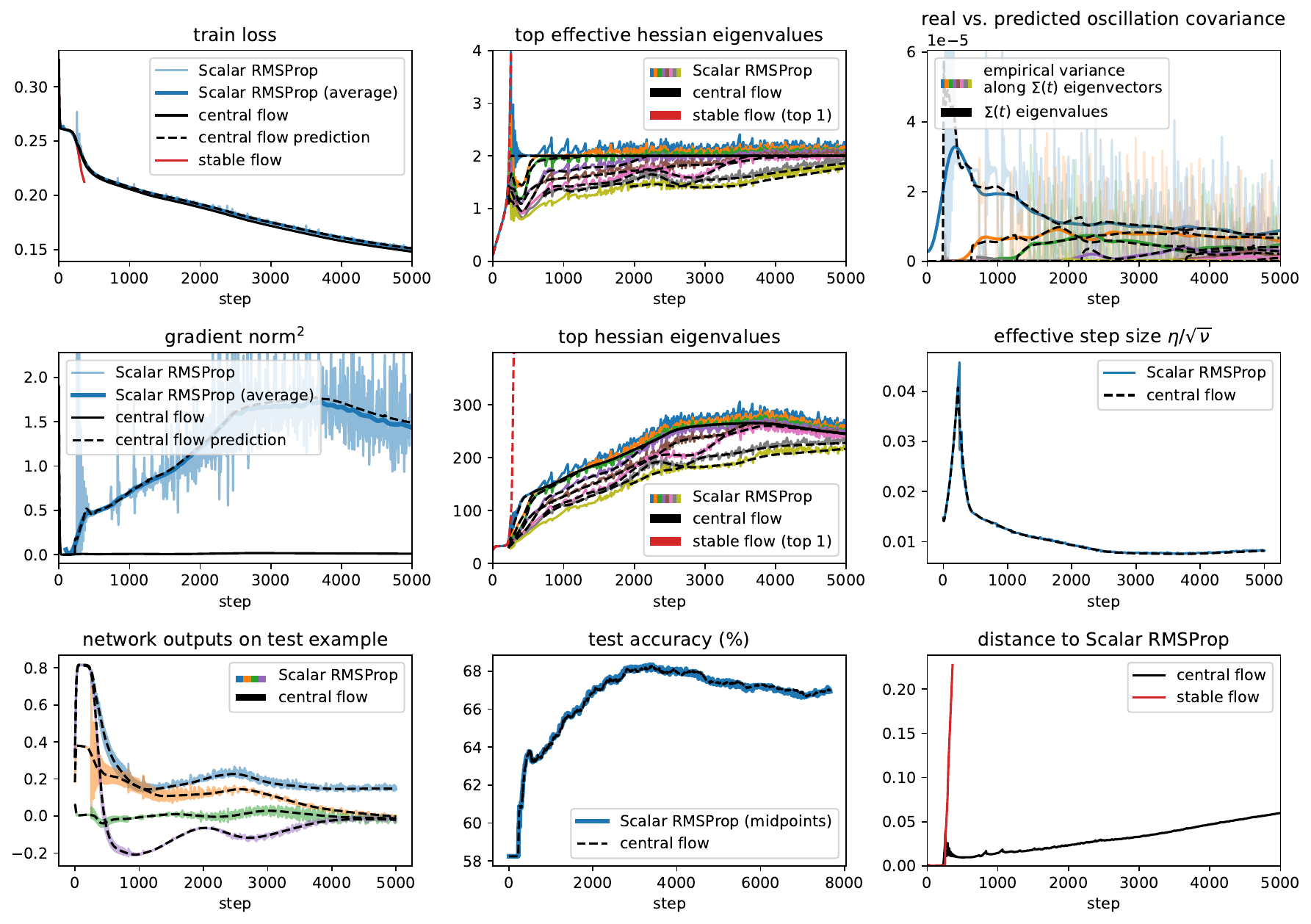}
        \caption{Scalar RMSProp central flow for a Mamba with MSE loss, $\eta= $ 0.01, $\beta_2 = $ 0.99, and bias correction.}
        \label{fig:bulk-scalar-rmsprop:mse-mamba-1}
    \end{figure}
                
    \begin{figure}[H]
        \centering
        \includegraphics[width=0.8\linewidth]{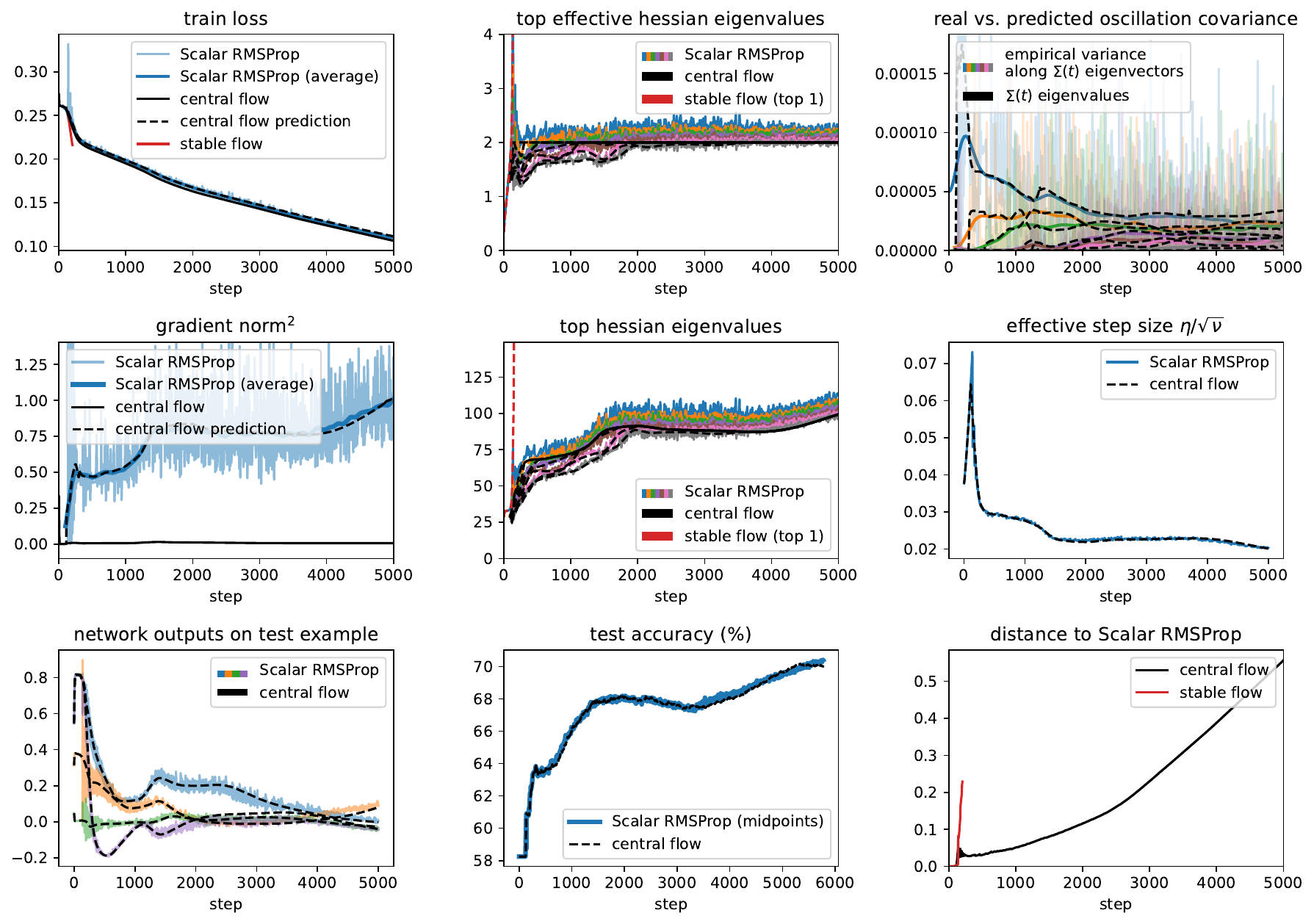}
        \caption{Scalar RMSProp central flow for a Mamba with MSE loss, $\eta= $ 0.02, $\beta_2 = $ 0.99, and bias correction.}
        \label{fig:bulk-scalar-rmsprop:mse-mamba-2}
    \end{figure}
                \end{specialfigures}

\clearpage
\begin{specialfigures}

    \begin{figure}[H]
        \centering
        \includegraphics[width=0.8\linewidth]{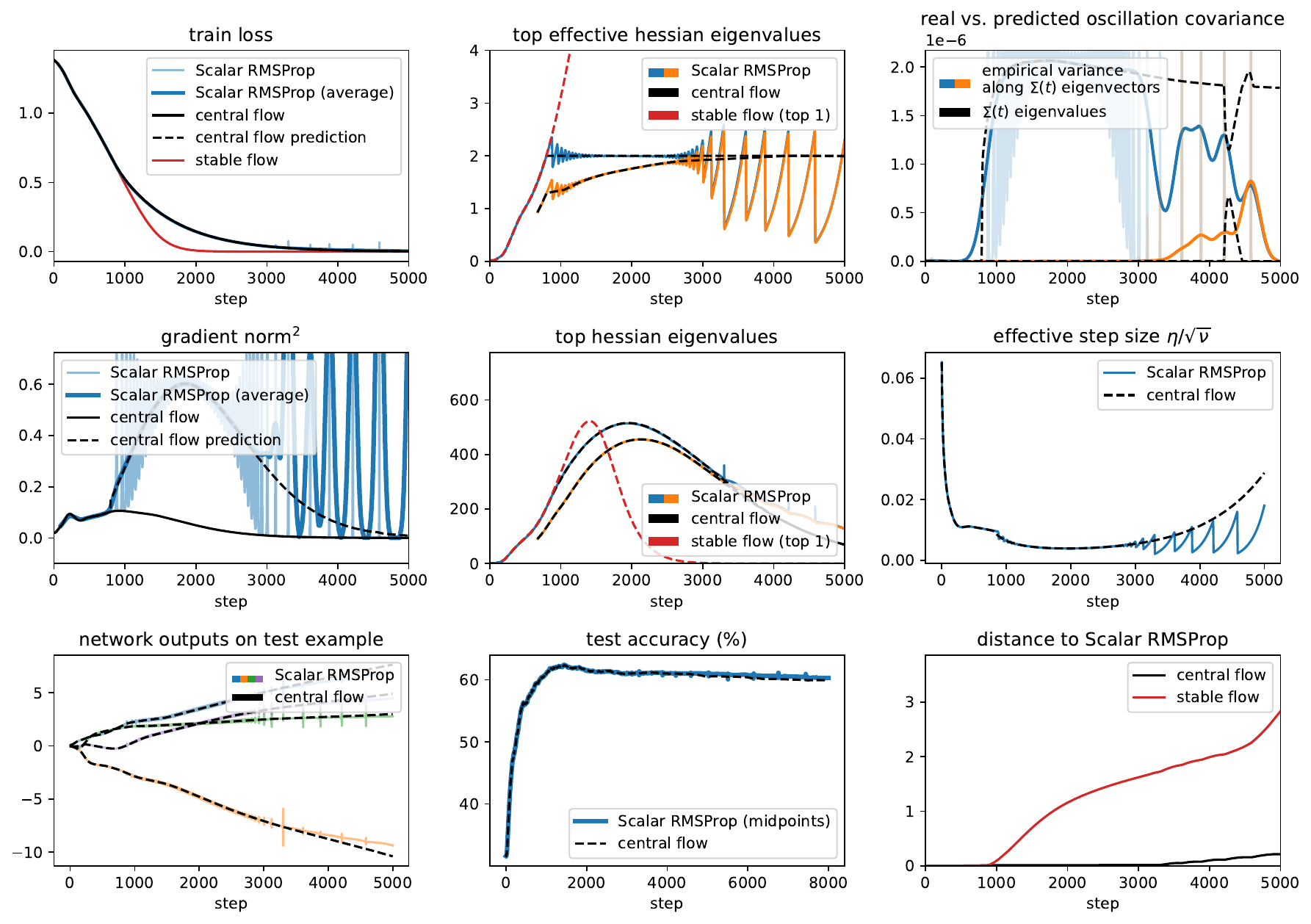}
        \caption{Scalar RMSProp central flow for a CNN with CE loss, $\eta= $ 0.003, $\beta_2 = $ 0.99, and bias correction.}
        \label{fig:bulk-scalar-rmsprop:ce-cnn-0}
    \end{figure}
                
    \begin{figure}[H]
        \centering
        \includegraphics[width=0.8\linewidth]{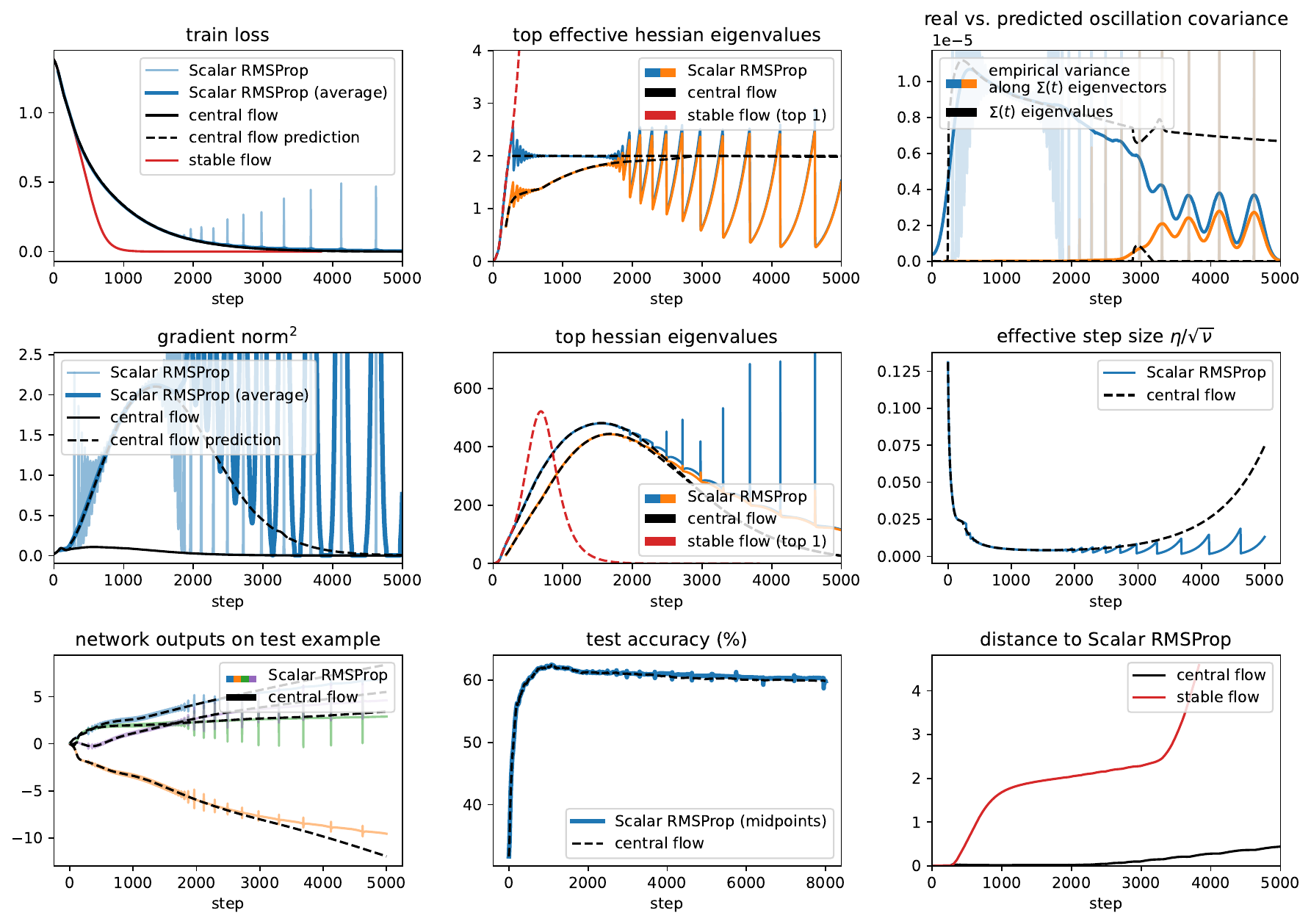}
        \caption{Scalar RMSProp central flow for a CNN with CE loss, $\eta= $ 0.006, $\beta_2 = $ 0.99, and bias correction.}
        \label{fig:bulk-scalar-rmsprop:ce-cnn-1}
    \end{figure}
                
    \begin{figure}[H]
        \centering
        \includegraphics[width=0.8\linewidth]{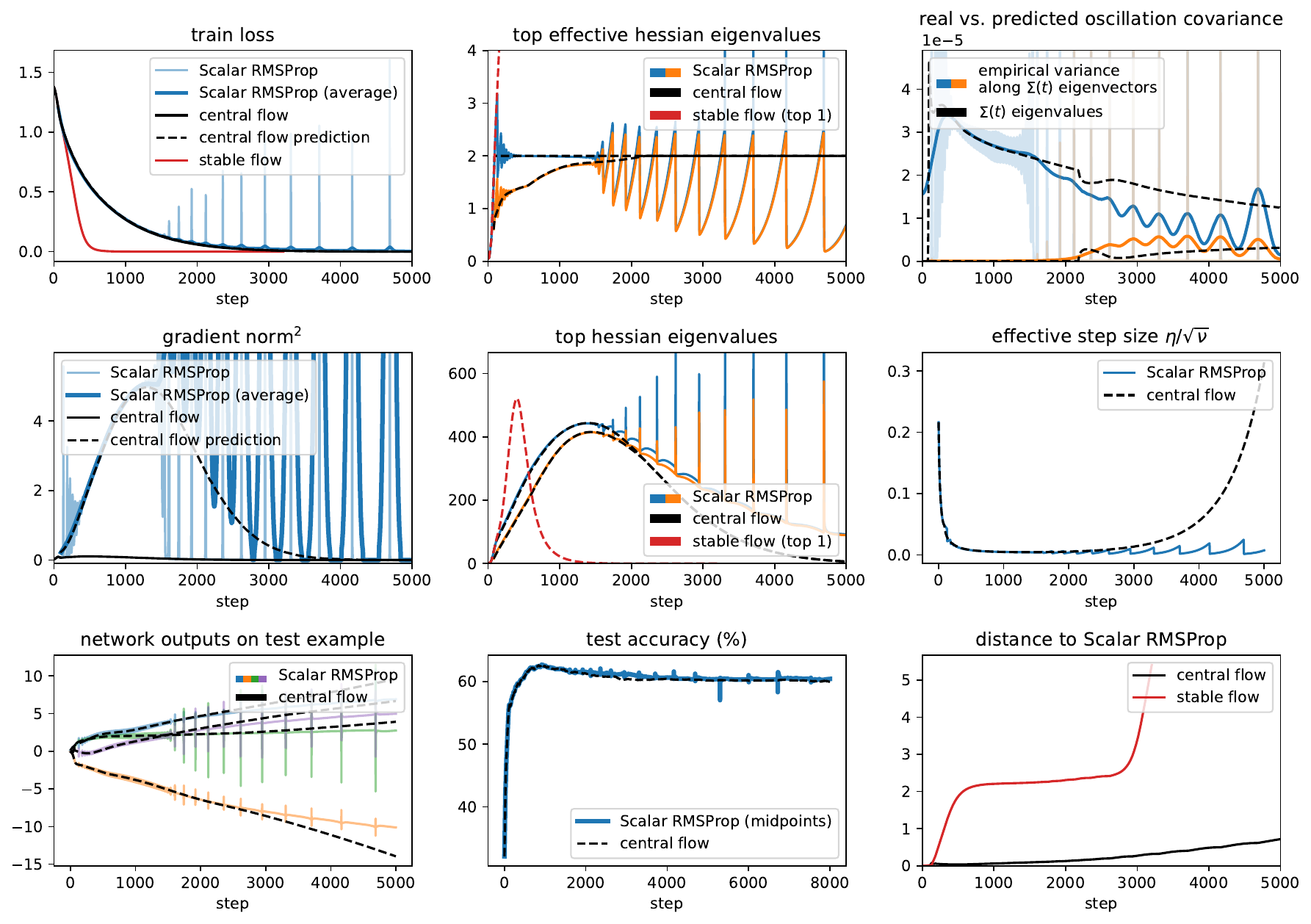}
        \caption{Scalar RMSProp central flow for a CNN with CE loss, $\eta= $ 0.01, $\beta_2 = $ 0.99, and bias correction.}
        \label{fig:bulk-scalar-rmsprop:ce-cnn-2}
    \end{figure}
                
    \begin{figure}[H]
        \centering
        \includegraphics[width=0.8\linewidth]{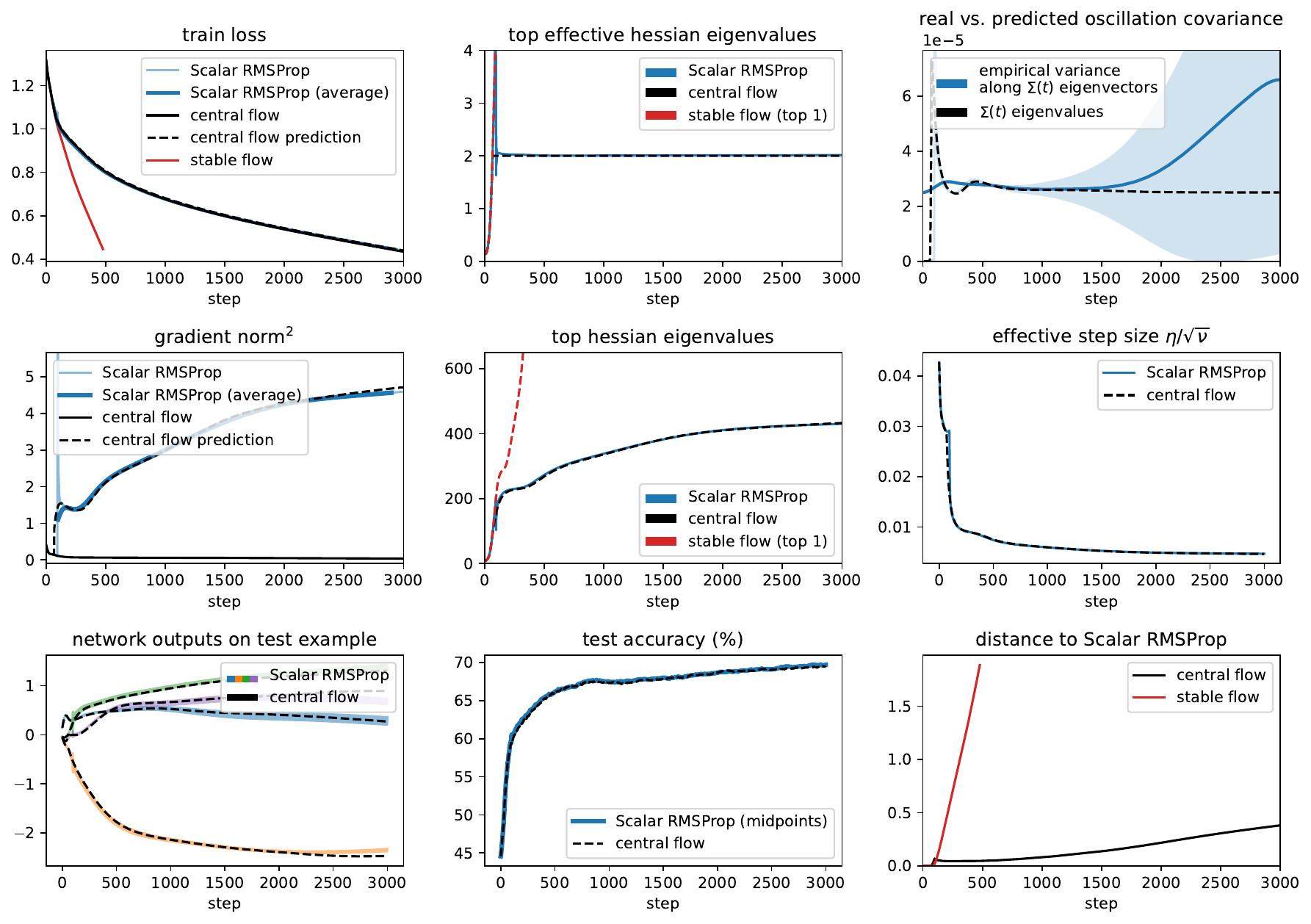}
        \caption{Scalar RMSProp central flow for a ResNet with CE loss, $\eta= $ 0.01, $\beta_2 = $ 0.99, and bias correction.}
        \label{fig:bulk-scalar-rmsprop:ce-resnet-0}
    \end{figure}
                
    \begin{figure}[H]
        \centering
        \includegraphics[width=0.8\linewidth]{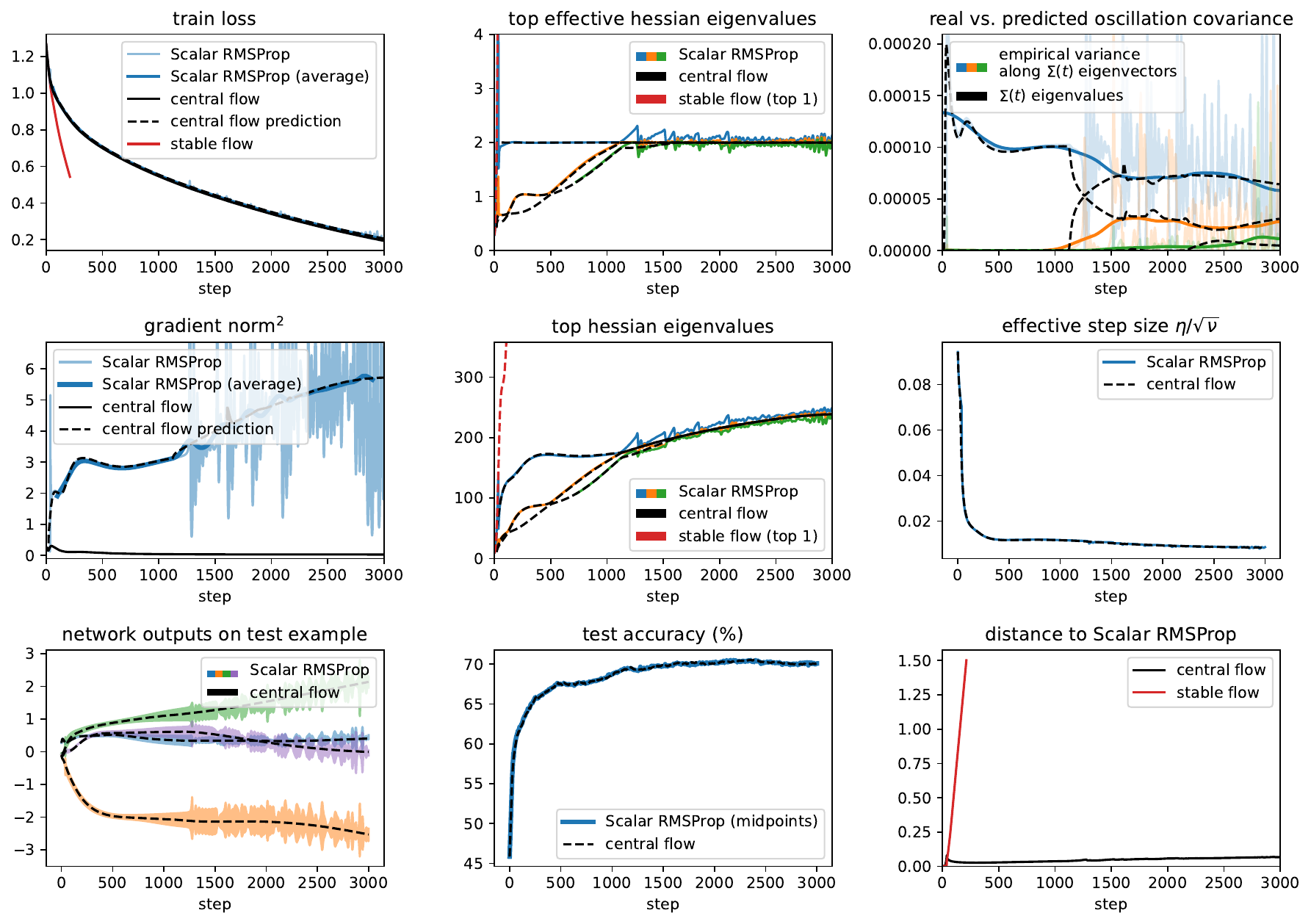}
        \caption{Scalar RMSProp central flow for a ResNet with CE loss, $\eta= $ 0.02, $\beta_2 = $ 0.99, and bias correction.}
        \label{fig:bulk-scalar-rmsprop:ce-resnet-1}
    \end{figure}
                
    \begin{figure}[H]
        \centering
        \includegraphics[width=0.8\linewidth]{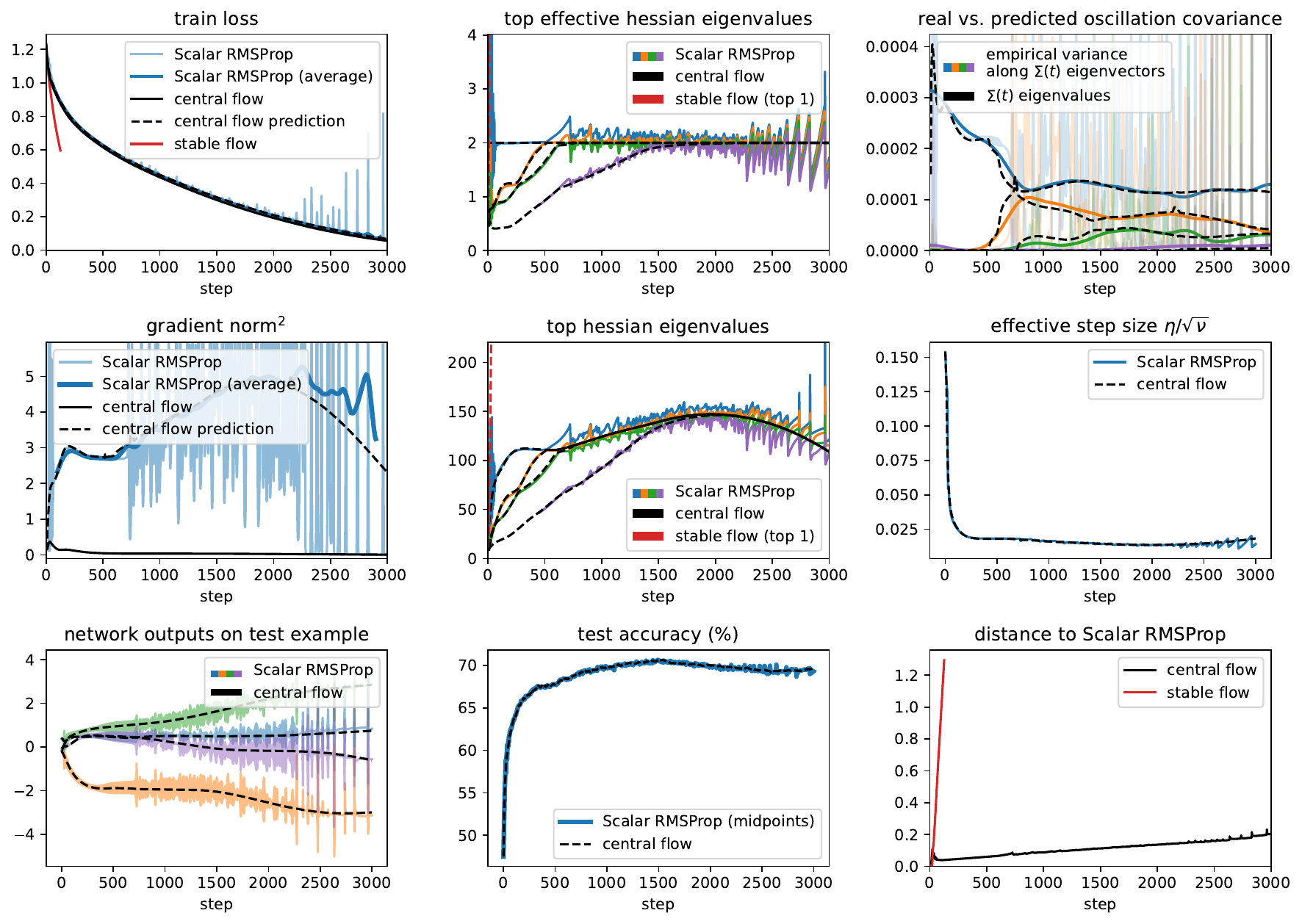}
        \caption{Scalar RMSProp central flow for a ResNet with CE loss, $\eta= $ 0.03, $\beta_2 = $ 0.99, and bias correction.}
        \label{fig:bulk-scalar-rmsprop:ce-resnet-2}
    \end{figure}
                
    \begin{figure}[H]
        \centering
        \includegraphics[width=0.8\linewidth]{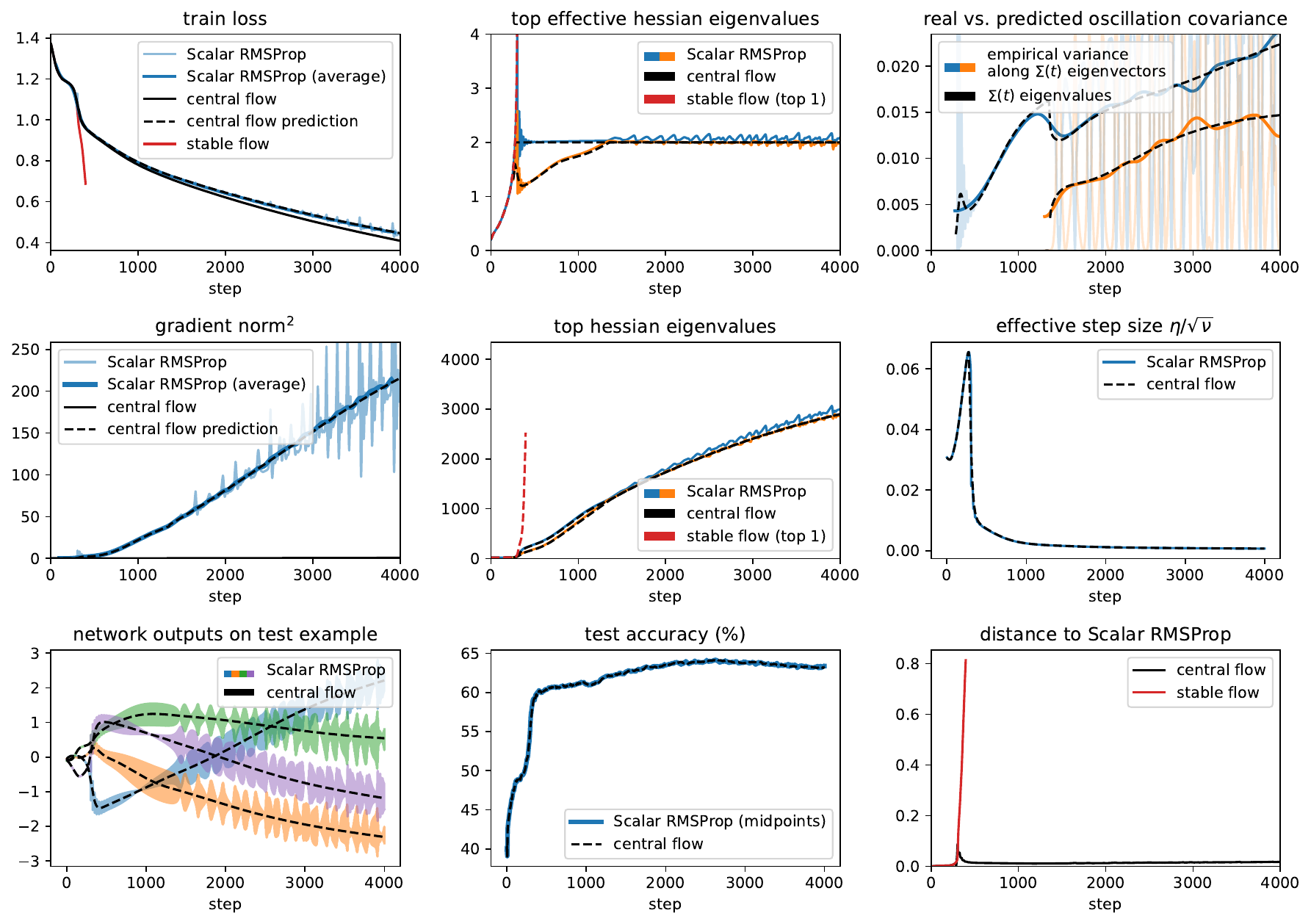}
        \caption{Scalar RMSProp central flow for a ViT with CE loss, $\eta= $ 0.01, $\beta_2 = $ 0.99, and bias correction.}
        \label{fig:bulk-scalar-rmsprop:ce-vit-0}
    \end{figure}
                
    \begin{figure}[H]
        \centering
        \includegraphics[width=0.8\linewidth]{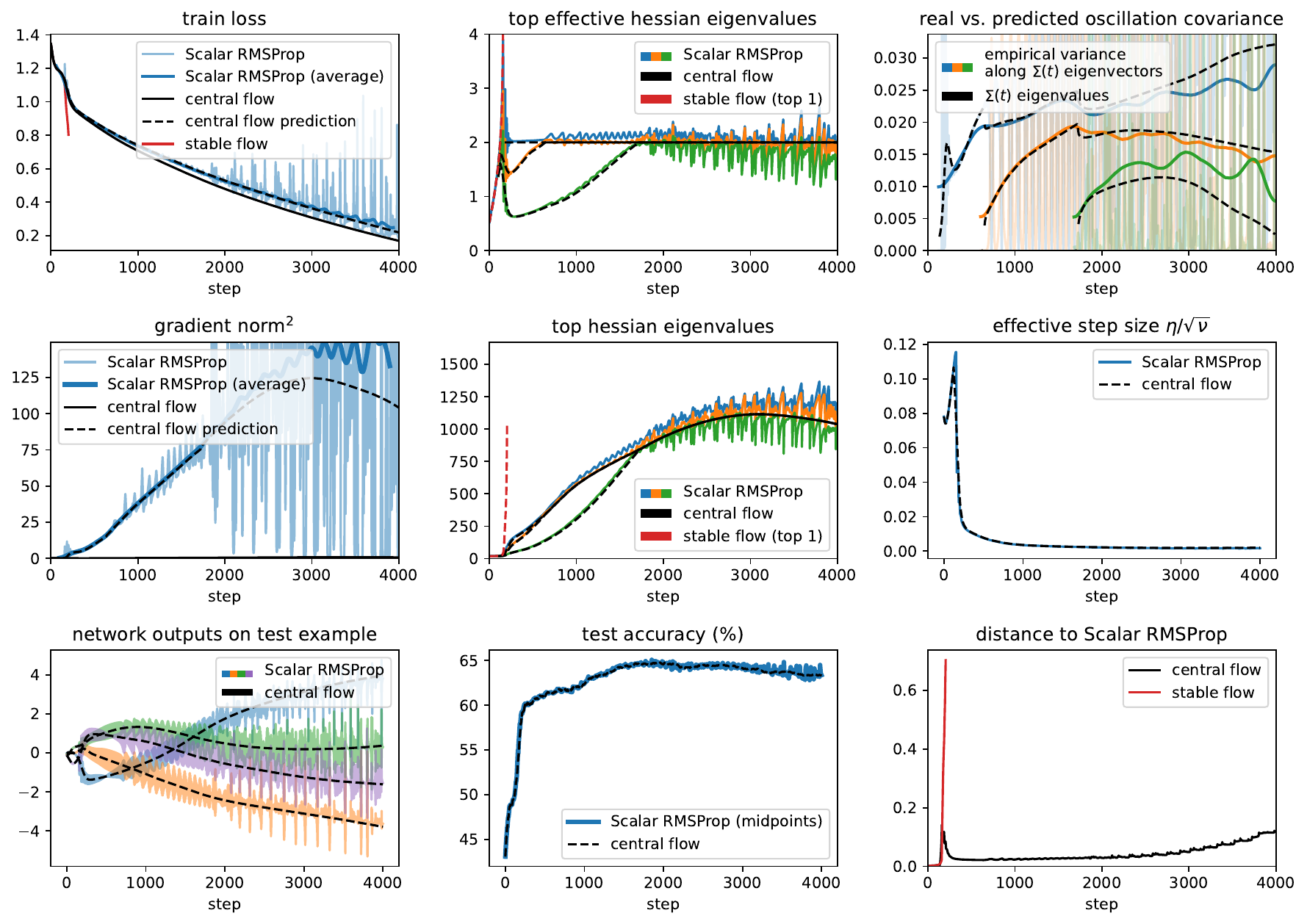}
        \caption{Scalar RMSProp central flow for a ViT with CE loss, $\eta= $ 0.02, $\beta_2 = $ 0.99, and bias correction.}
        \label{fig:bulk-scalar-rmsprop:ce-vit-1}
    \end{figure}
                
    \begin{figure}[H]
        \centering
        \includegraphics[width=0.8\linewidth]{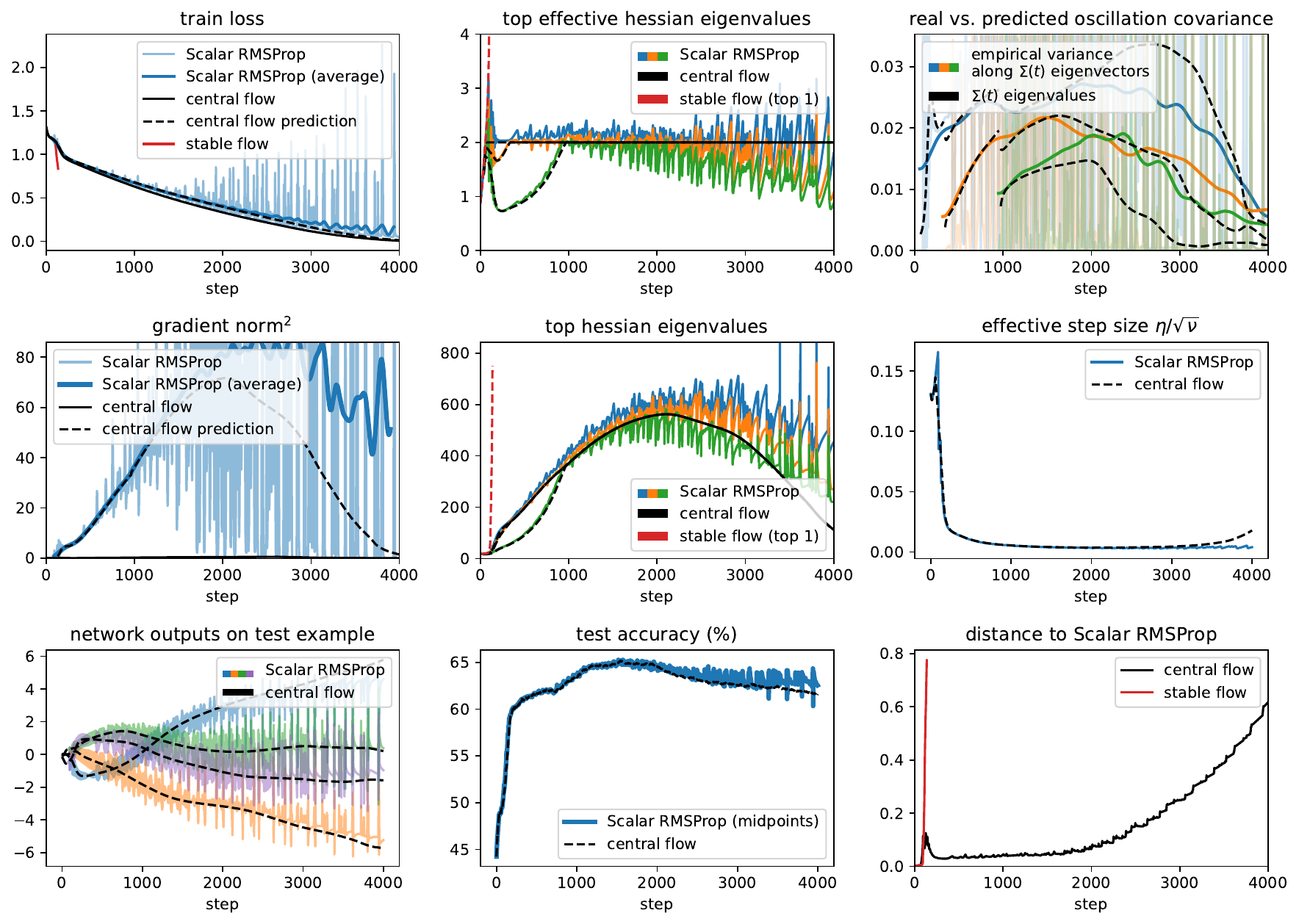}
        \caption{Scalar RMSProp central flow for a ViT with CE loss, $\eta= $ 0.03, $\beta_2 = $ 0.99, and bias correction.}
        \label{fig:bulk-scalar-rmsprop:ce-vit-2}
    \end{figure}
                
    \begin{figure}[H]
        \centering
        \includegraphics[width=0.8\linewidth]{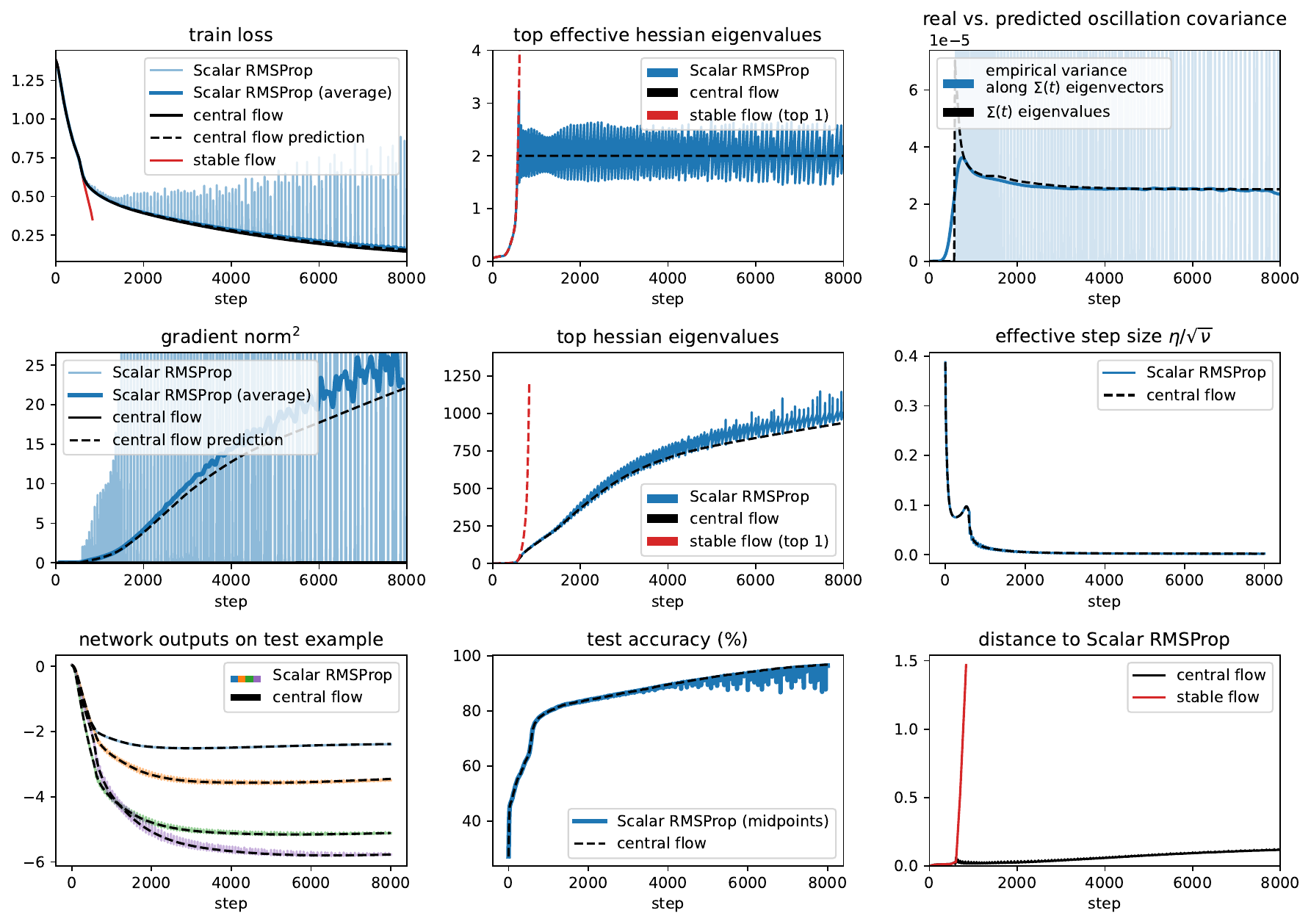}
        \caption{Scalar RMSProp central flow for a LSTM with CE loss, $\eta= $ 0.01, $\beta_2 = $ 0.99, and bias correction.}
        \label{fig:bulk-scalar-rmsprop:ce-lstm-0}
    \end{figure}
                
    \begin{figure}[H]
        \centering
        \includegraphics[width=0.8\linewidth]{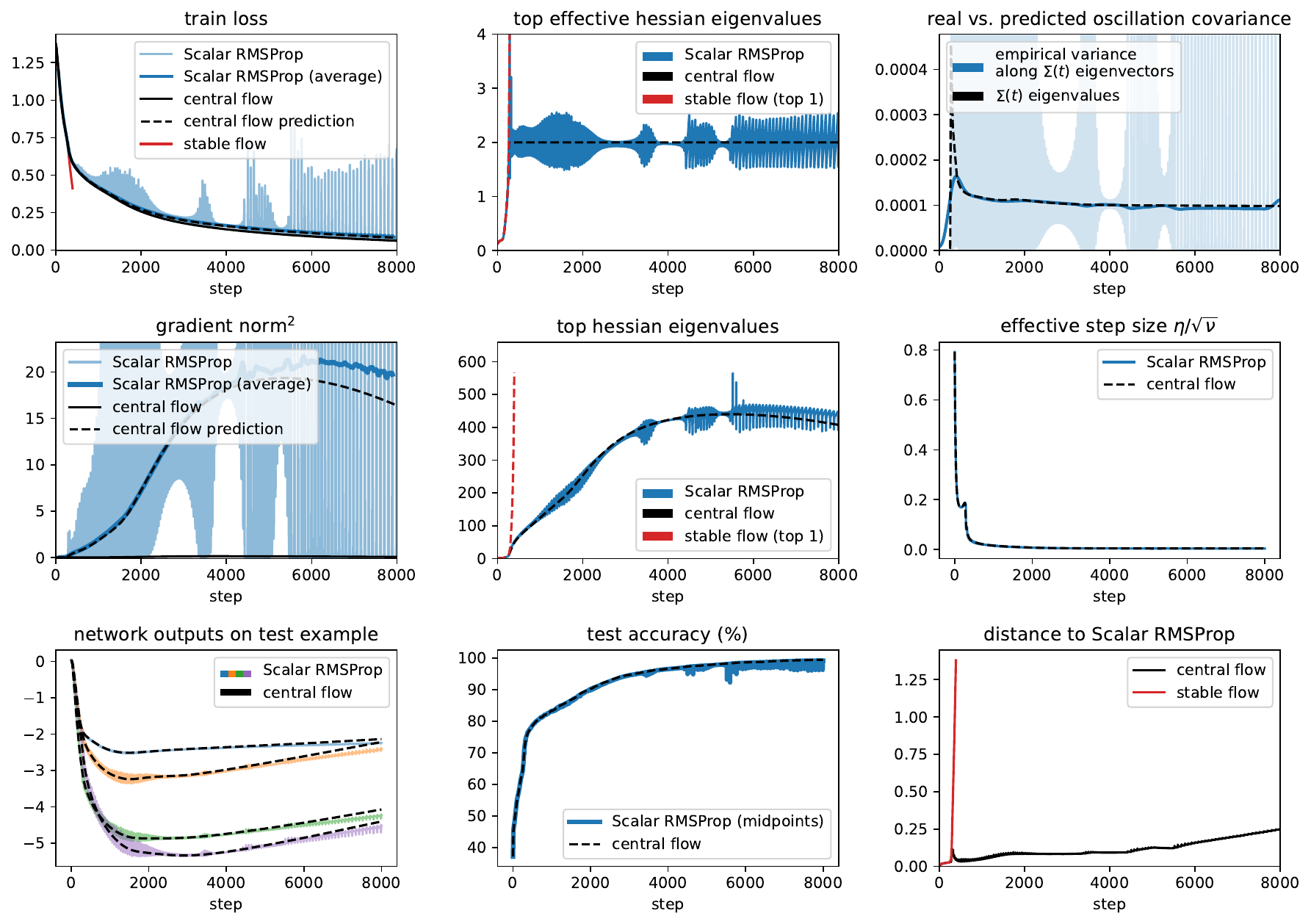}
        \caption{Scalar RMSProp central flow for a LSTM with CE loss, $\eta= $ 0.02, $\beta_2 = $ 0.99, and bias correction.}
        \label{fig:bulk-scalar-rmsprop:ce-lstm-1}
    \end{figure}
                
    \begin{figure}[H]
        \centering
        \includegraphics[width=0.8\linewidth]{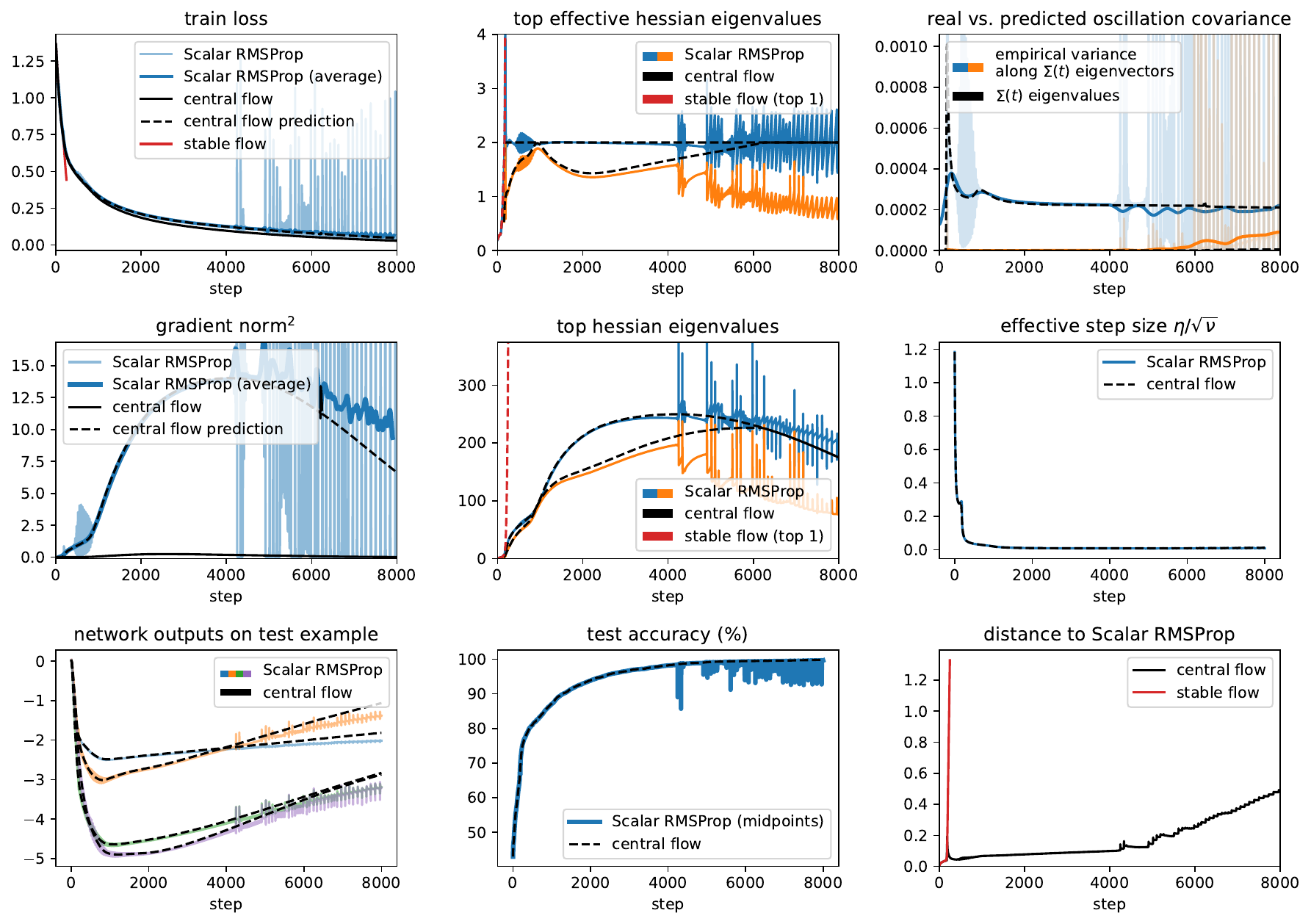}
        \caption{Scalar RMSProp central flow for a LSTM with CE loss, $\eta= $ 0.03, $\beta_2 = $ 0.99, and bias correction.}
        \label{fig:bulk-scalar-rmsprop:ce-lstm-2}
    \end{figure}
                
    \begin{figure}[H]
        \centering
        \includegraphics[width=0.8\linewidth]{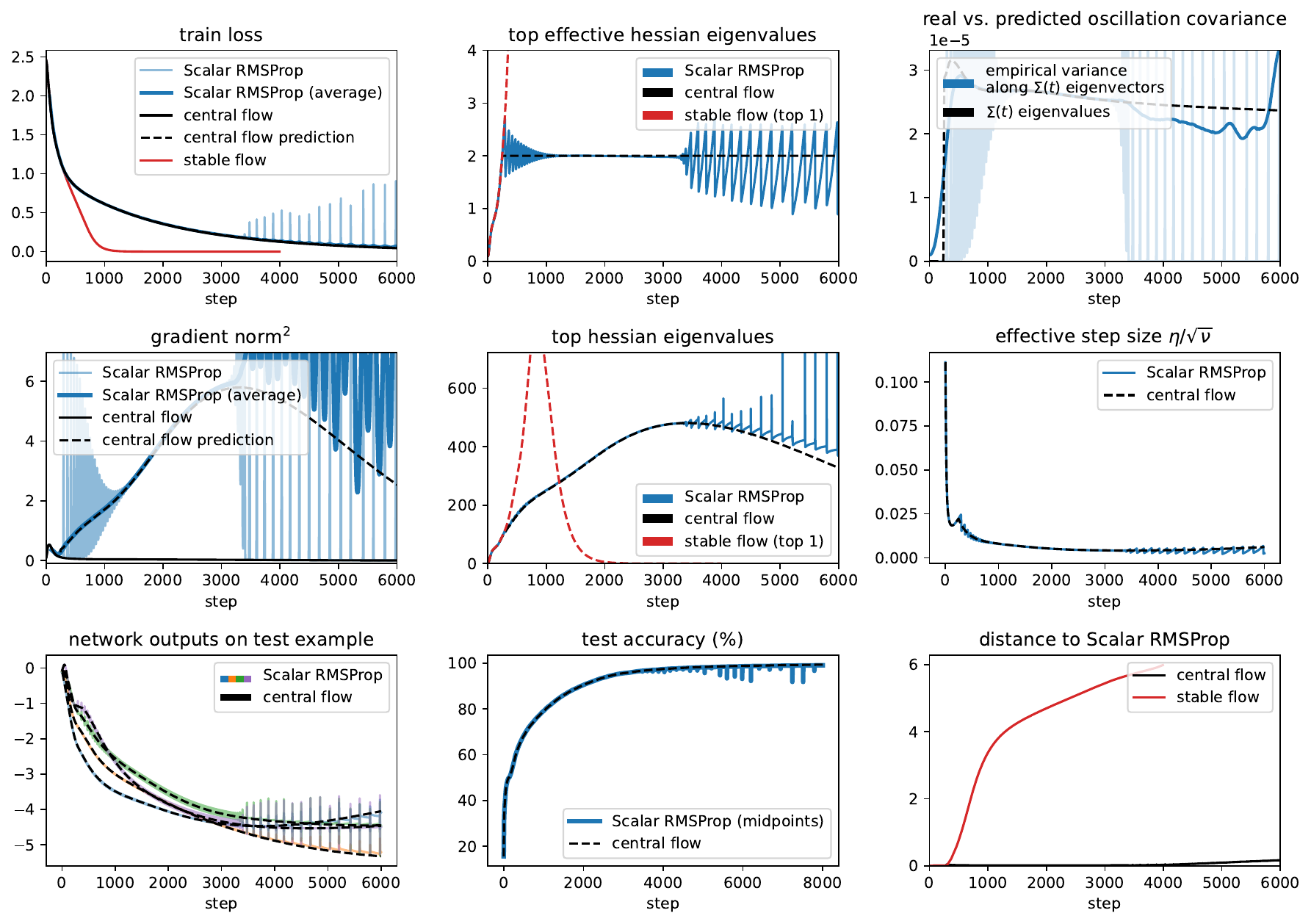}
        \caption{Scalar RMSProp central flow for a Transformer with CE loss, $\eta= $ 0.01, $\beta_2 = $ 0.99, and bias correction.}
        \label{fig:bulk-scalar-rmsprop:ce-transformer-0}
    \end{figure}
                
    \begin{figure}[H]
        \centering
        \includegraphics[width=0.8\linewidth]{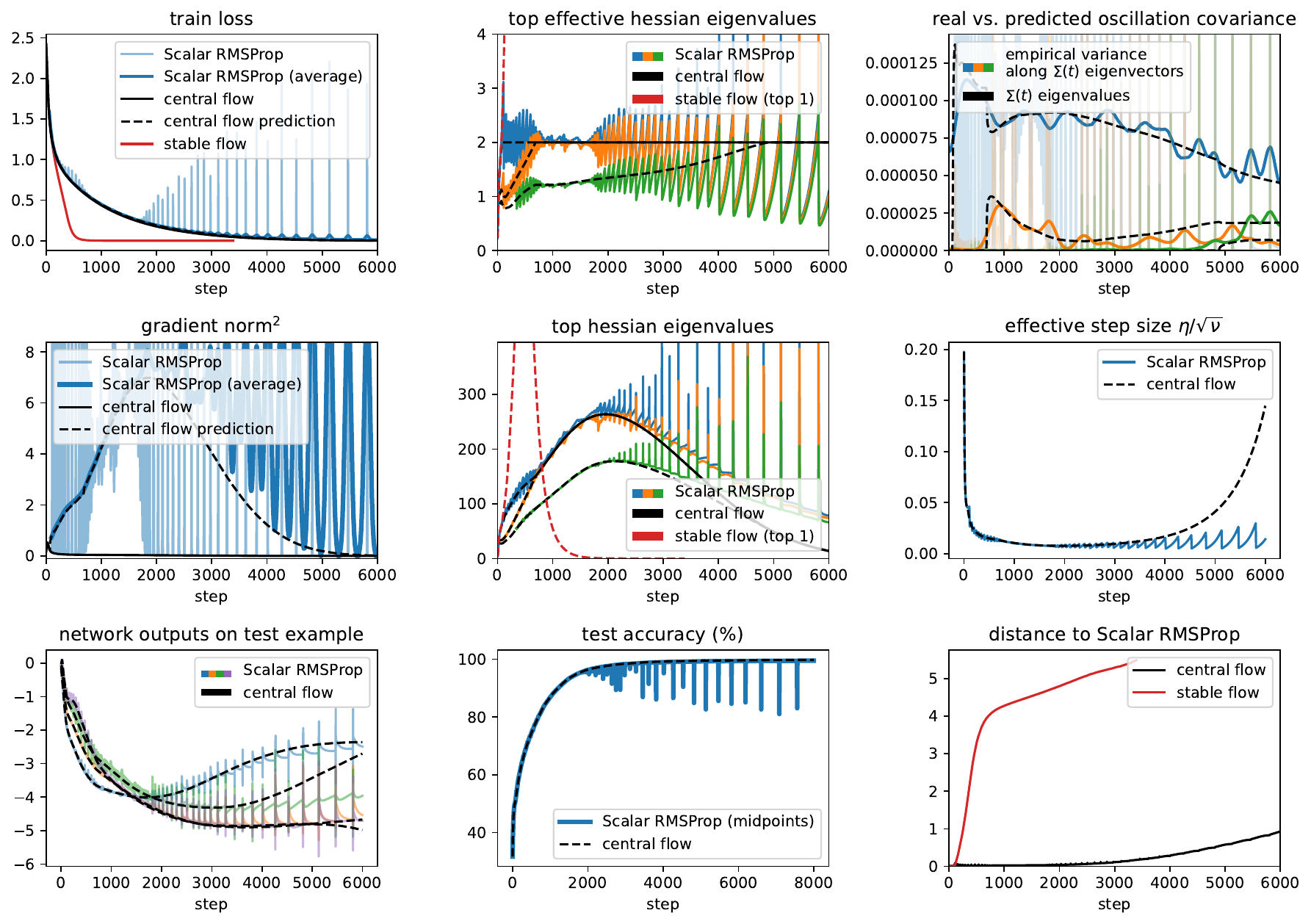}
        \caption{Scalar RMSProp central flow for a Transformer with CE loss, $\eta= $ 0.02, $\beta_2 = $ 0.99, and bias correction.}
        \label{fig:bulk-scalar-rmsprop:ce-transformer-1}
    \end{figure}
                
    \begin{figure}[H]
        \centering
        \includegraphics[width=0.8\linewidth]{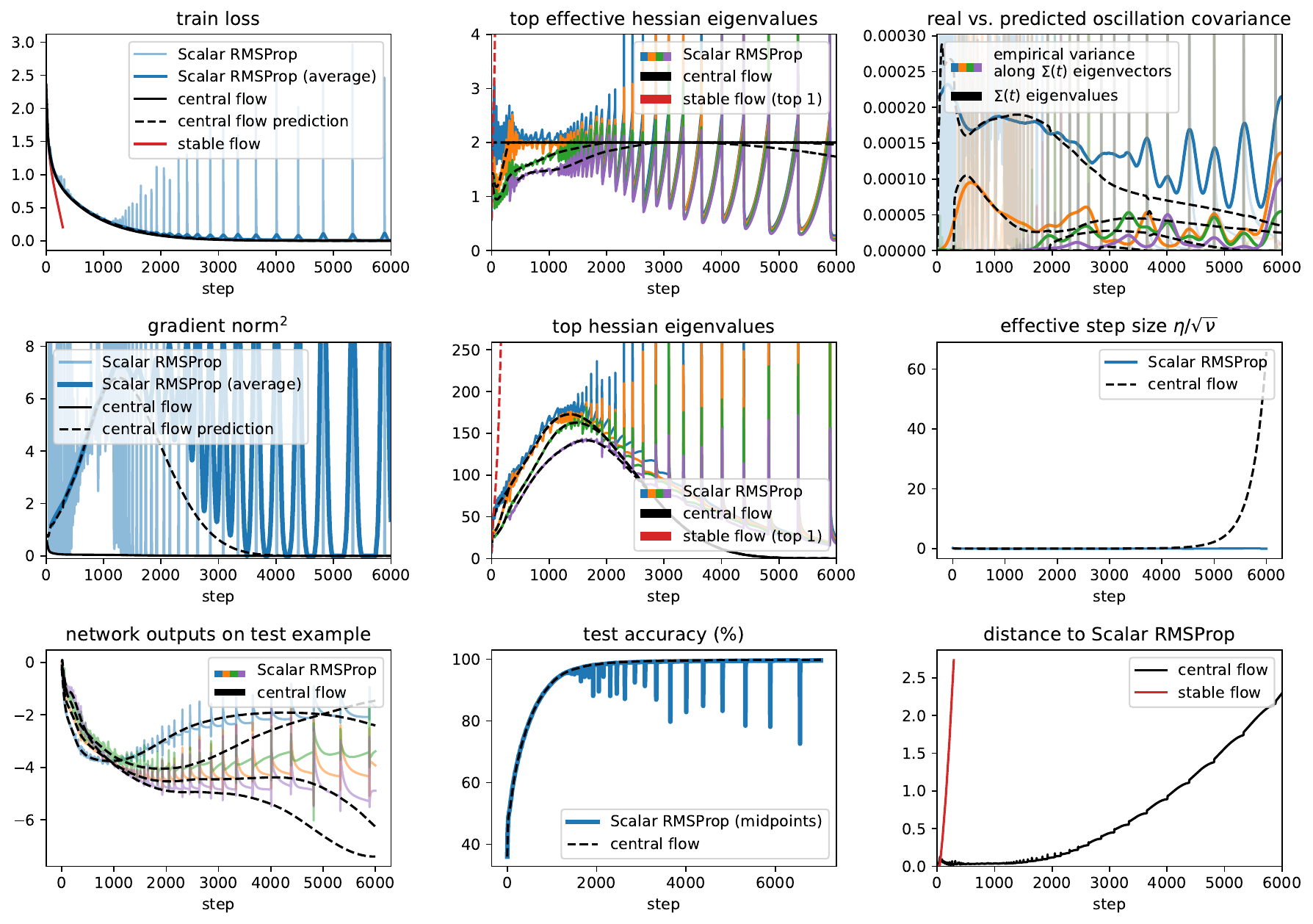}
        \caption{Scalar RMSProp central flow for a Transformer with CE loss, $\eta= $ 0.03, $\beta_2 = $ 0.99, and bias correction.}
        \label{fig:bulk-scalar-rmsprop:ce-transformer-2}
    \end{figure}
                
    \begin{figure}[H]
        \centering
        \includegraphics[width=0.8\linewidth]{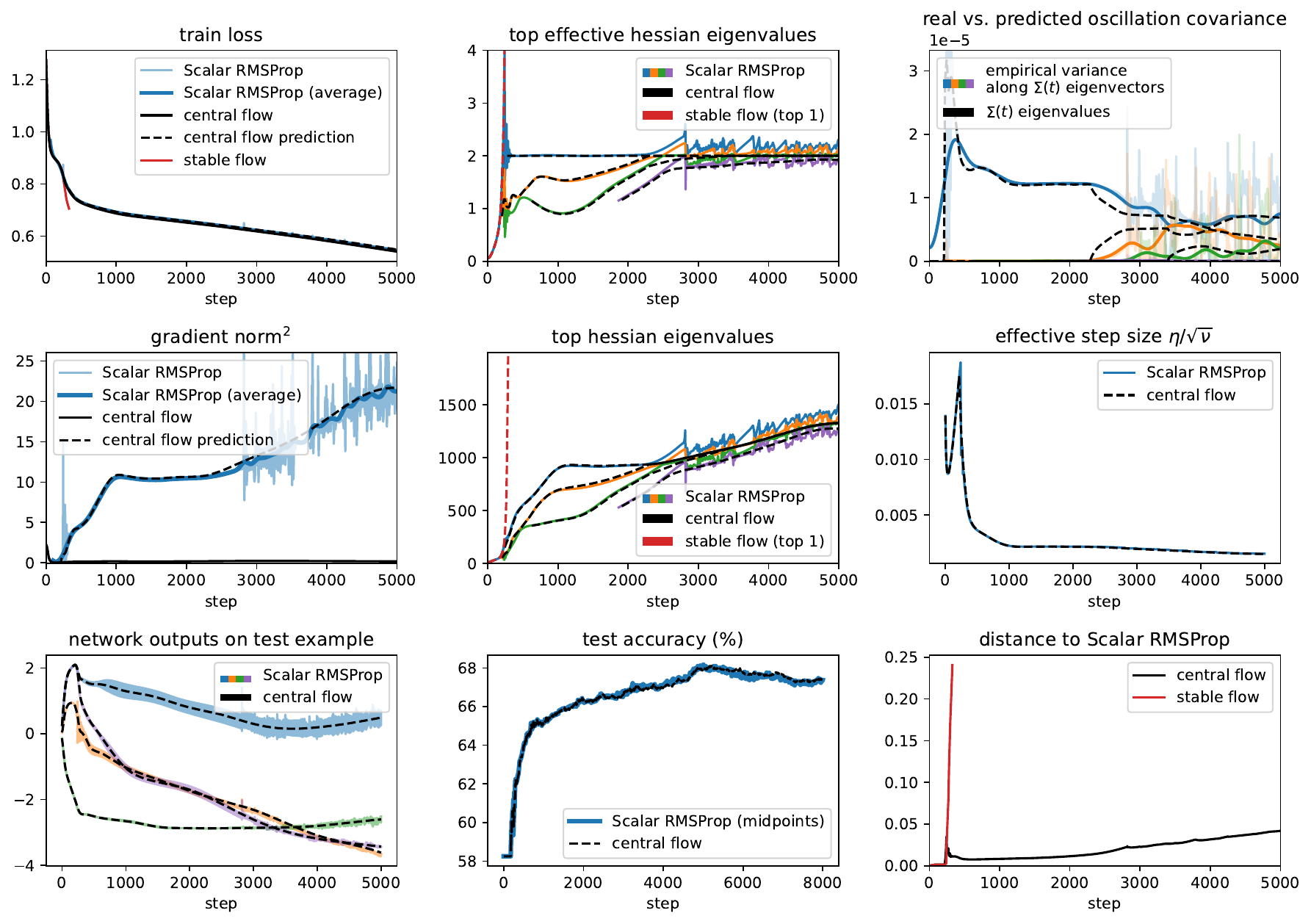}
        \caption{Scalar RMSProp central flow for a Mamba with CE loss, $\eta= $ 0.007, $\beta_2 = $ 0.99, and bias correction.}
        \label{fig:bulk-scalar-rmsprop:ce-mamba-0}
    \end{figure}
                
    \begin{figure}[H]
        \centering
        \includegraphics[width=0.8\linewidth]{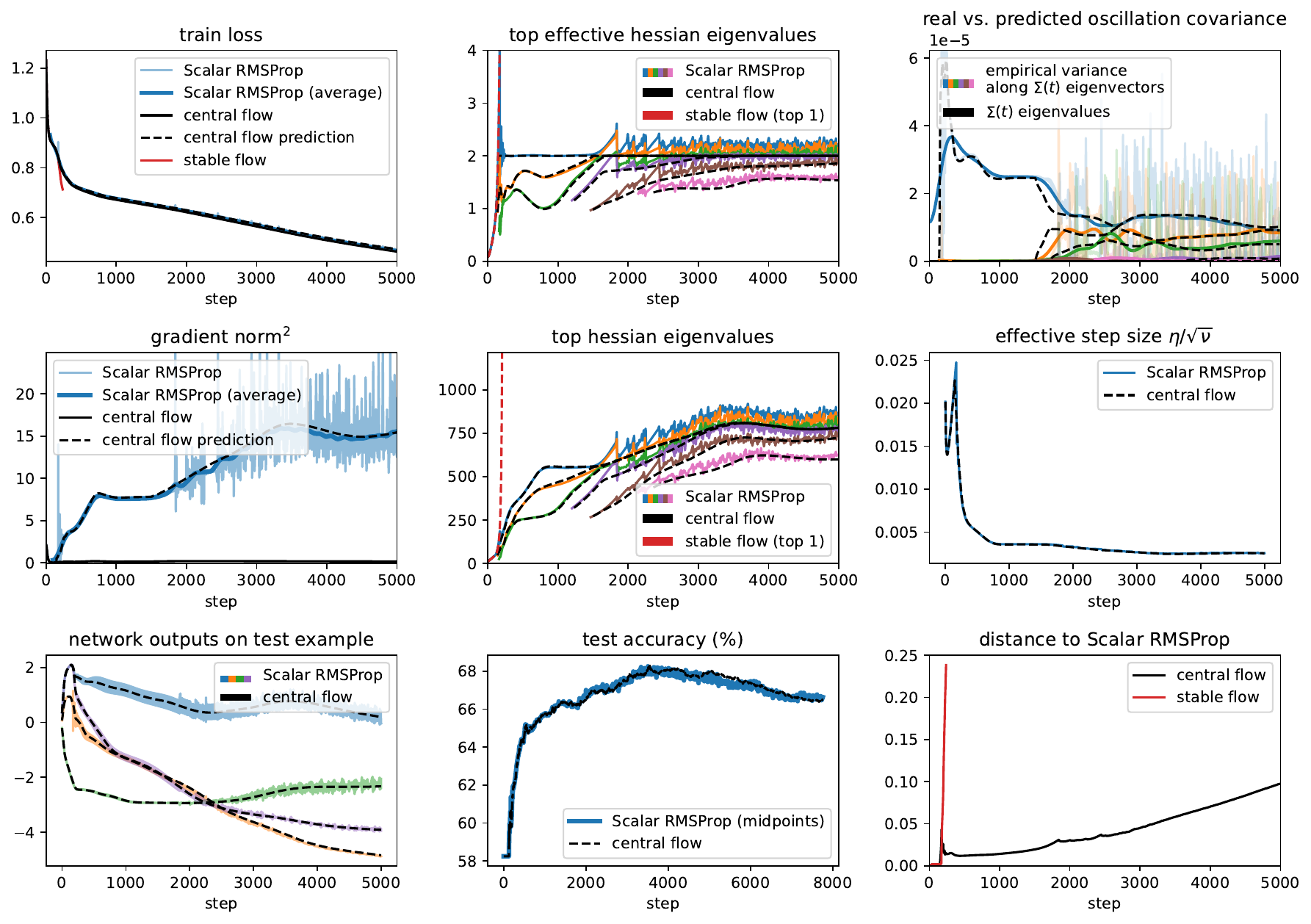}
        \caption{Scalar RMSProp central flow for a Mamba with CE loss, $\eta= $ 0.01, $\beta_2 = $ 0.99, and bias correction.}
        \label{fig:bulk-scalar-rmsprop:ce-mamba-1}
    \end{figure}
                
    \begin{figure}[H]
        \centering
        \includegraphics[width=0.8\linewidth]{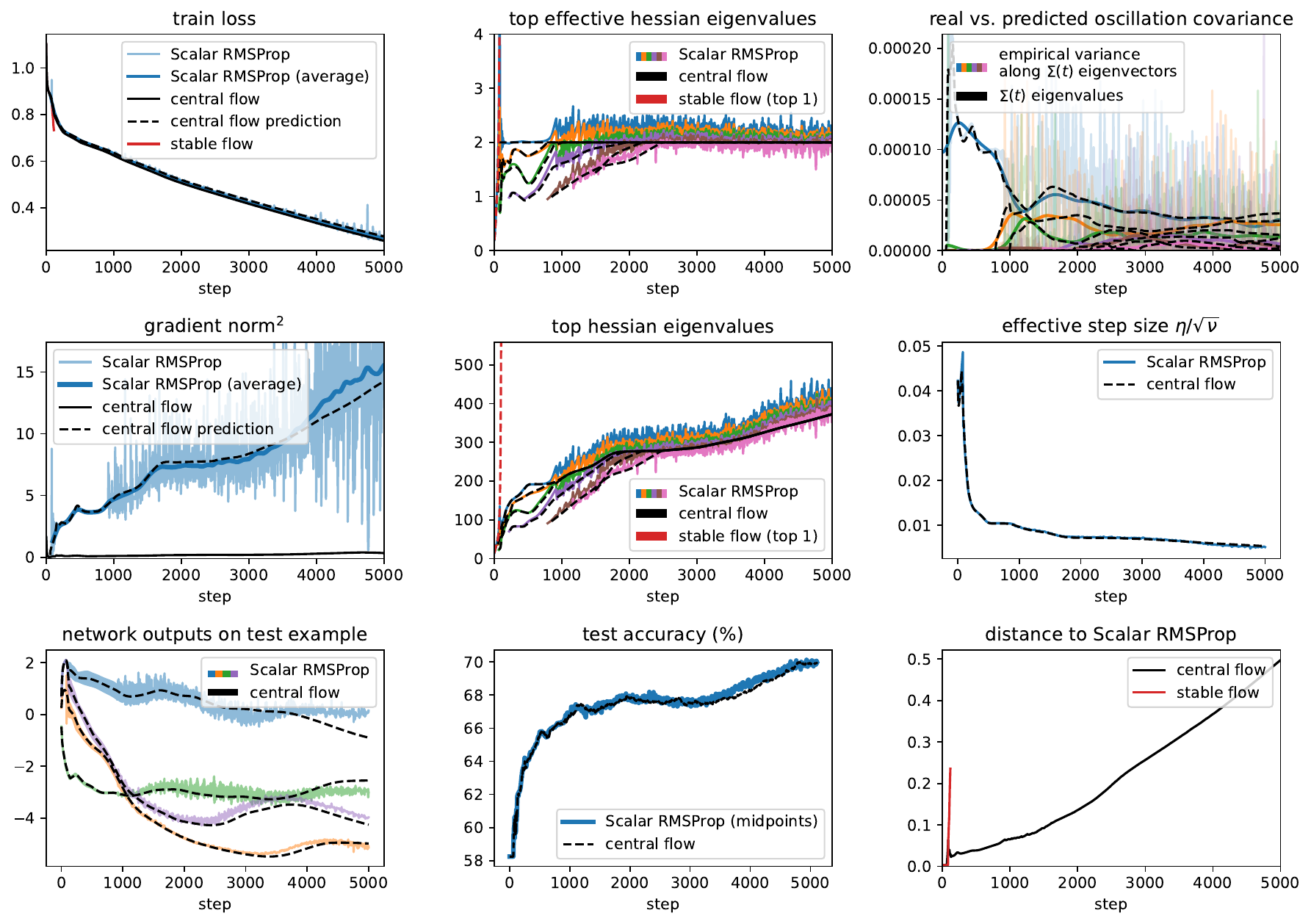}
        \caption{Scalar RMSProp central flow for a Mamba with CE loss, $\eta= $ 0.02, $\beta_2 = $ 0.99, and bias correction.}
        \label{fig:bulk-scalar-rmsprop:ce-mamba-2}
    \end{figure}
                \end{specialfigures}

\subsection{RMSProp}
\label{sec:bulk-experiments-rmsprop}

\begin{figure}[H]
    \centering
    \includegraphics[width=0.8\linewidth]{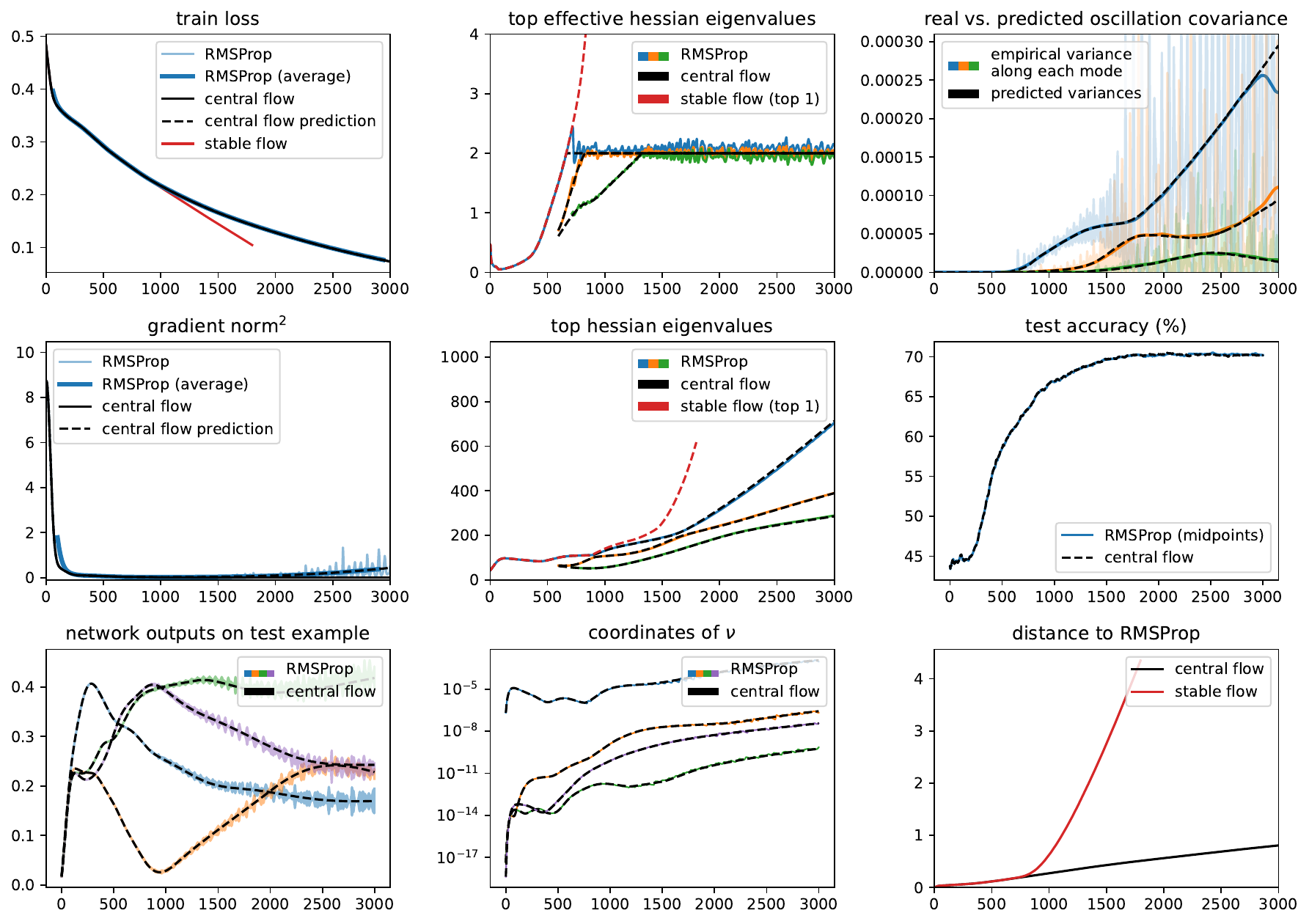}
    \caption{\textbf{Annotated example of a \protect \rmsprop experiment.} Using RMSProp with $\eta = 2 \times 10^{-5}$, $\beta_2 = 0.99$, $\epsilon = 10^{-8}$ and bias correction, we train a ResNet on a subset of CIFAR-10 with MSE loss. The central flow (black) accurately models the long-term trajectory of RMSProp (blue), whereas the stable flow (red) takes a different path.  As described in \Cref{appendix:experimental-details:implementation}, we terminate the stable flow once the effective sharpness gets sufficiently large.
    \vspace{0.5em}
    \\\textbf{Top left}: See \Cref{fig:experiments:gd:example} caption.  The central flow's prediction for the time-averaged loss is given by \cref{eq:rmsprop-predict-loss}.
    \\\textbf{Top center}:  We plot the top several eigenvalues of the effective Hessian $\diag \qty[\tfrac{\eta}{\sqrt{\nu}}] H(w)$ under both \rmsprop (colors) and its central flow (dashed black).  Under \rmsprop, these eigenvalues equilibrate around the critical threshold 2, whereas under the central flow they are fixed exactly at 2.  We also plot the top eigenvalue under the ``stable flow'' baseline (red), which increases far above 2.
   \\\textbf{Top right}: We show that the central flow accurately predicts the covariance of the oscillations.  In particular, we show that each nonzero eigenvalue $\lambda_i(t)$ of $P(t)^{1/2} \, \Sigma(t) \, P(t)^{1/2}$ accurately predicts the $P$-whitened variance of oscillations along the corresponding eigenvector $v_i(t)$, as we expect from \cref{eq:rmsprop-predict-oscillation-variance}. In black, we plot the nonzero eigenvalues of $P(t)^{1/2} \, \Sigma(t) \, P(t)^{1/2}$. 
   In faint colors, we plot the squared magnitude of the $P$-whitened displacement between \rmsprop and the central flow along each eigenvector $v_i(t)$ (see \cref{eq:rmsprop-predict-oscillation-variance}), and in thick colors, we plot the time-averages of these displacements, i.e. the empirical variances of the oscillations.  Observe that each eigenvalue accurately predicts the variance of the oscillations along the corresponding eigenvector.
    \\\textbf{Middle left}:  See \Cref{fig:experiments:gd:example} caption. The central flow's prediction for the time-average is given by \cref{eq:rmsprop-predict-gradient-norm-sq}.
    \\\textbf{Middle center}: See \Cref{fig:experiments:scalar-rmsprop:example} caption.  The central flow's prediction for the time-averaged loss is given by \cref{eq:rmsprop-predict-loss}.
     \\\textbf{Middle right}: See \Cref{fig:experiments:gd:example} caption.
    \\\textbf{Bottom left}:  See \Cref{fig:experiments:scalar-rmsprop:example} caption.
    \\\textbf{Bottom center}:  We plot four arbitrary coordinates of the EMA $\nu$ under both RMSProp (colors) and the central flow (dashed black).  The coordinates of $\nu$ are oscillatory along the RMSProp trajectory, while varying smoothly along the central flow.
    \\\textbf{Bottom right}:   See \Cref{fig:experiments:gd:example} caption.
    }
    \label{fig:experiments:rmsprop:example}
\end{figure}

\clearpage
\begin{specialfigures}

    \begin{figure}[H]
        \centering
        \includegraphics[width=0.8\linewidth]{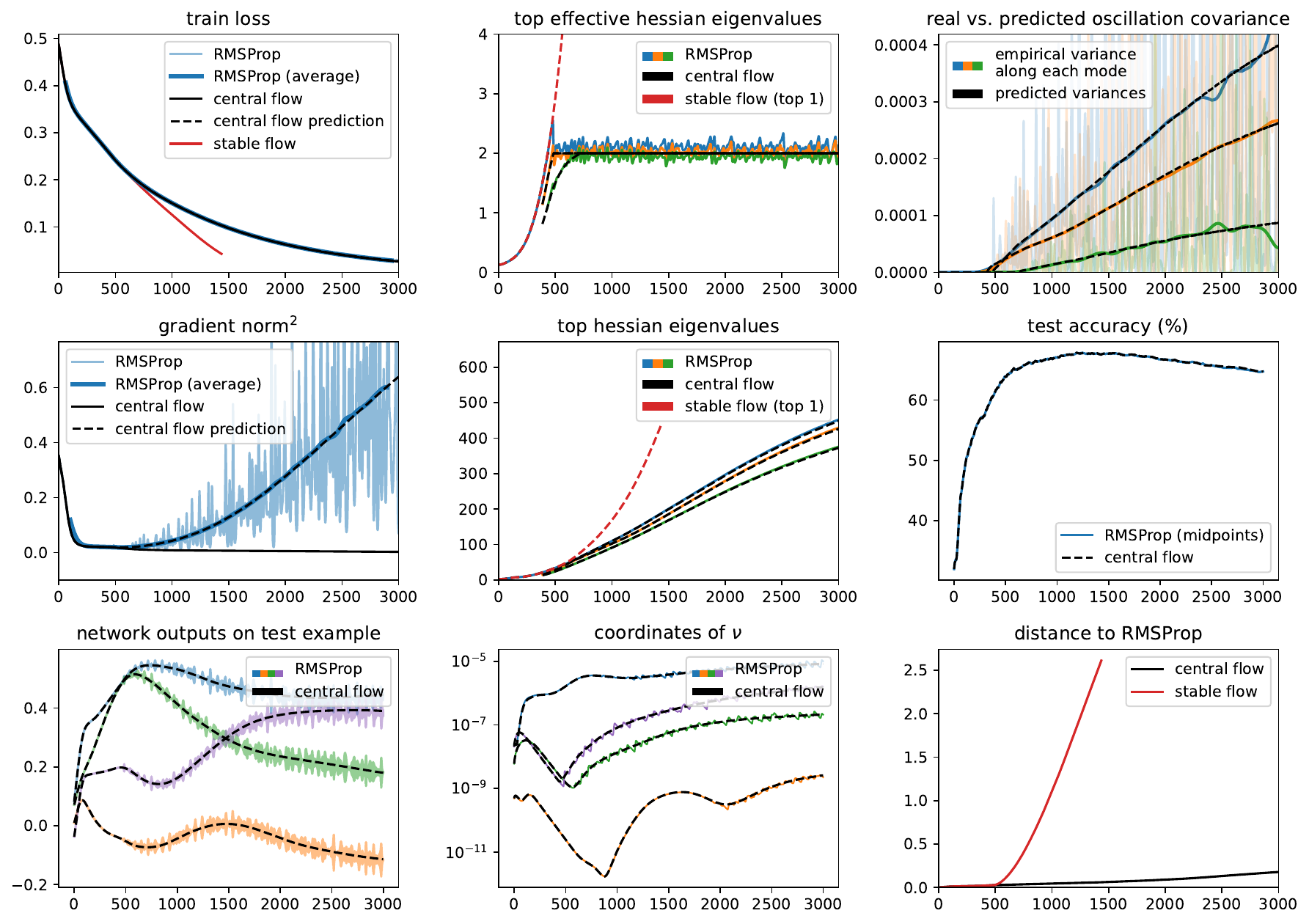}
        \caption{RMSProp central flow for a CNN with MSE loss, $\eta = $ 1e-05, $\beta_2 = $ 0.99, $\epsilon = $ 1e-08, and bias correction.}
        \label{fig:bulk-rmsprop:mse-cnn-0}
    \end{figure}
                
    \begin{figure}[H]
        \centering
        \includegraphics[width=0.8\linewidth]{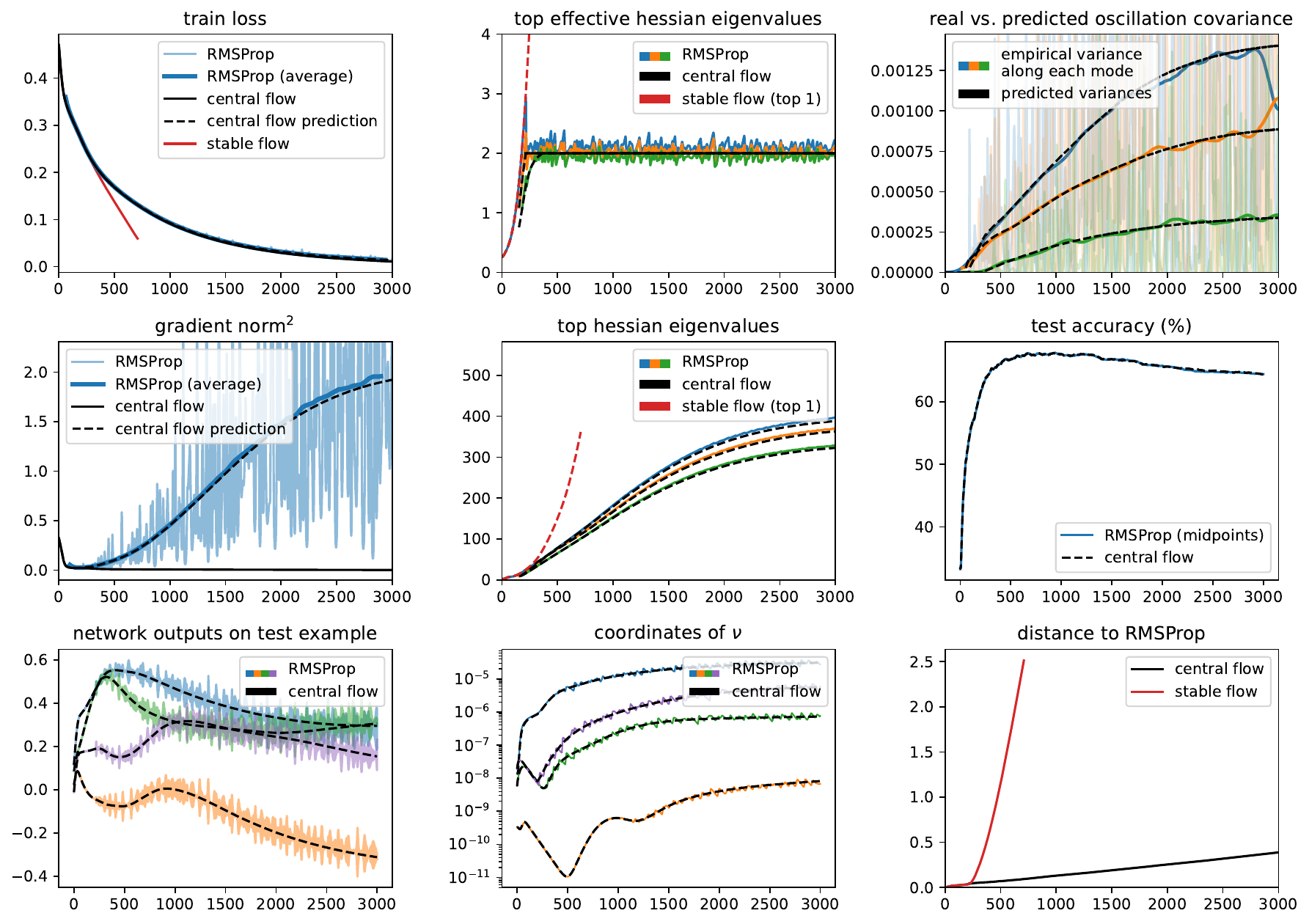}
        \caption{RMSProp central flow for a CNN with MSE loss, $\eta = $ 2e-05, $\beta_2 = $ 0.99, $\epsilon = $ 1e-08, and bias correction.}
        \label{fig:bulk-rmsprop:mse-cnn-1}
    \end{figure}
                
    \begin{figure}[H]
        \centering
        \includegraphics[width=0.8\linewidth]{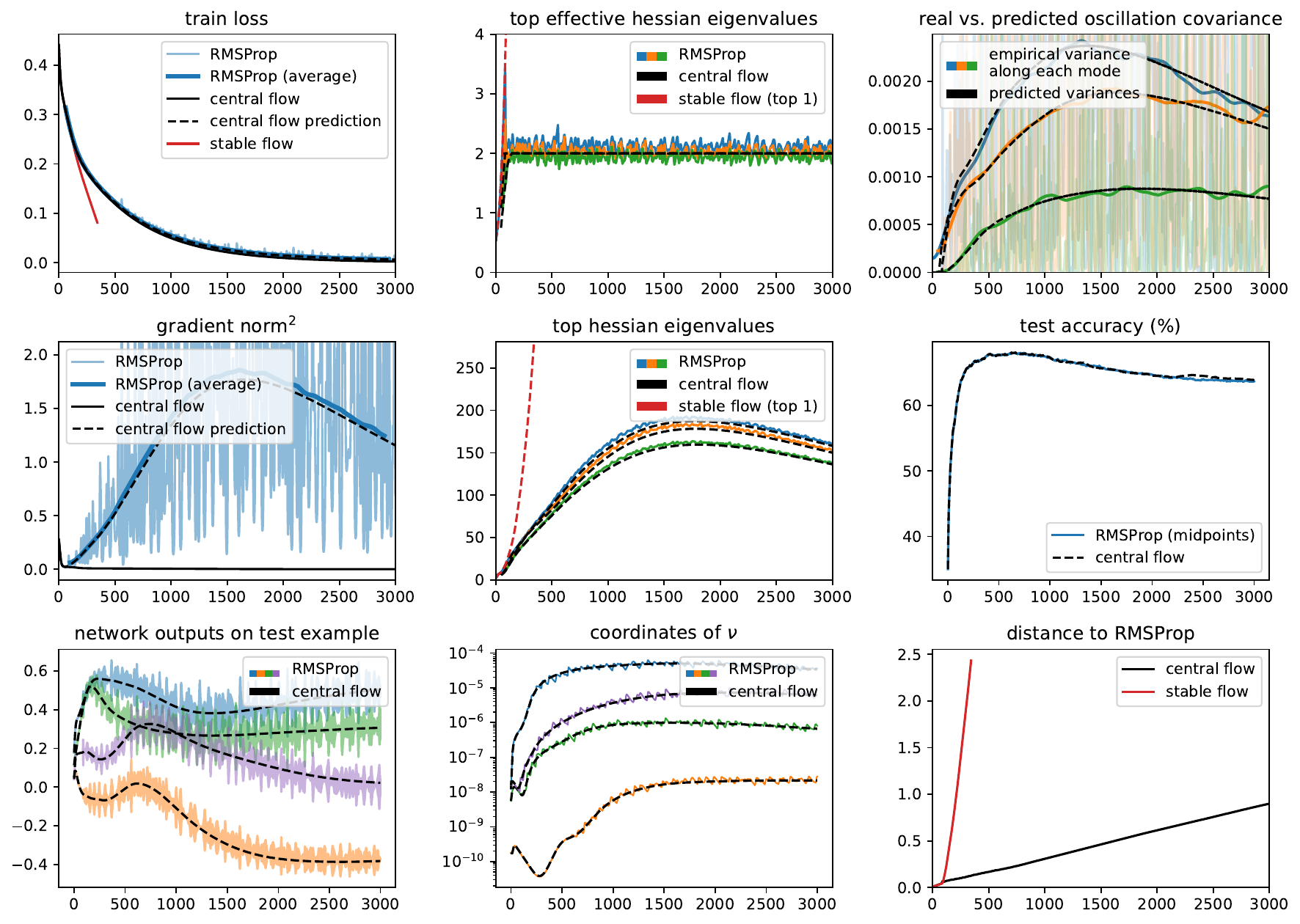}
        \caption{RMSProp central flow for a CNN with MSE loss, $\eta = $ 4e-05, $\beta_2 = $ 0.99, $\epsilon = $ 1e-08, and bias correction.}
        \label{fig:bulk-rmsprop:mse-cnn-2}
    \end{figure}
                
    \begin{figure}[H]
        \centering
        \includegraphics[width=0.8\linewidth]{images/bulk-rmsprop/mse-resnet-0.pdf}
        \caption{RMSProp central flow for a ResNet with MSE loss, $\eta = $ 2e-05, $\beta_2 = $ 0.99, $\epsilon = $ 1e-08, and bias correction.}
        \label{fig:bulk-rmsprop:mse-resnet-0}
    \end{figure}
                
    \begin{figure}[H]
        \centering
        \includegraphics[width=0.8\linewidth]{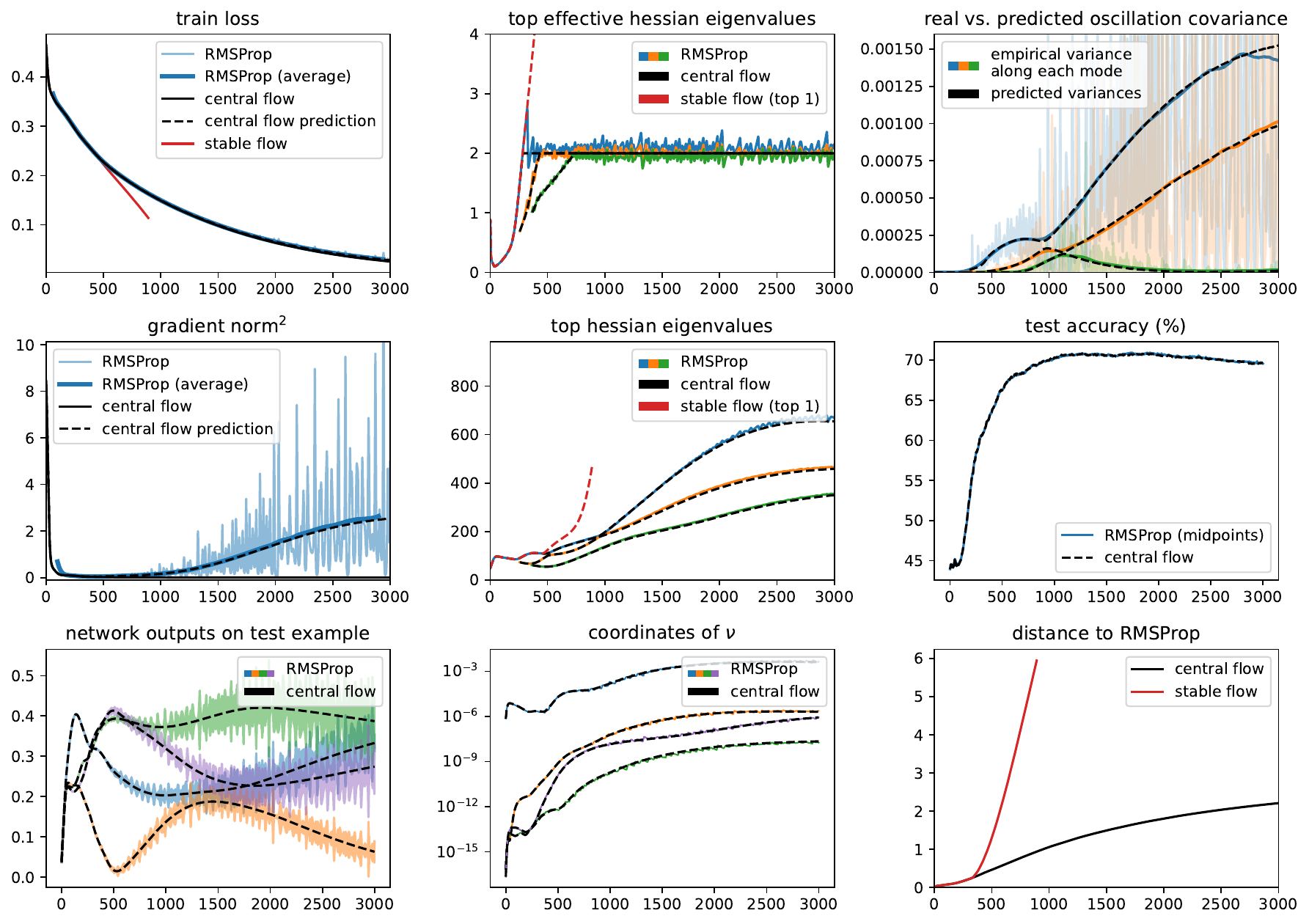}
        \caption{RMSProp central flow for a ResNet with MSE loss, $\eta = $ 4e-05, $\beta_2 = $ 0.99, $\epsilon = $ 1e-08, and bias correction.}
        \label{fig:bulk-rmsprop:mse-resnet-1}
    \end{figure}
                
    \begin{figure}[H]
        \centering
        \includegraphics[width=0.8\linewidth]{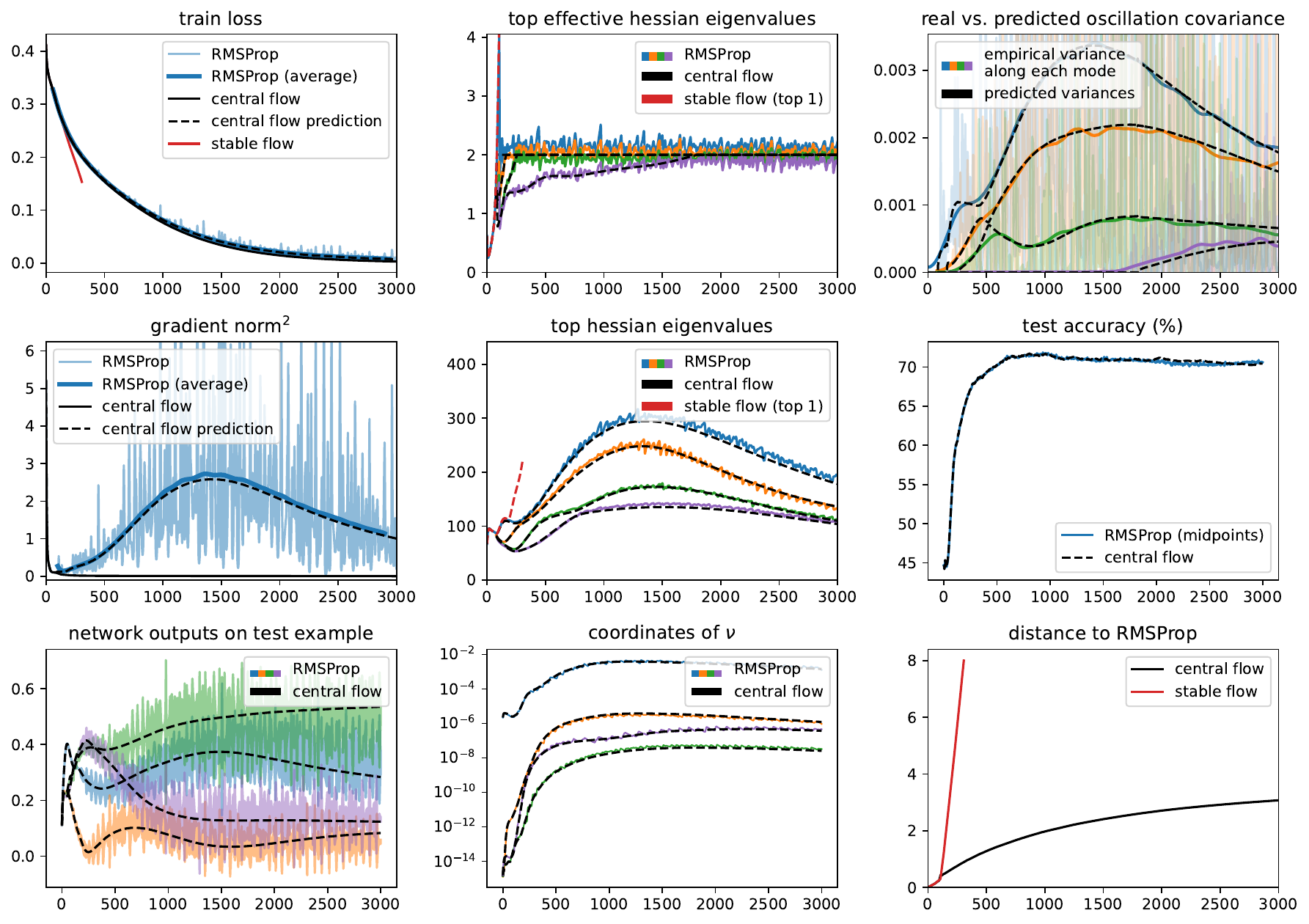}
        \caption{RMSProp central flow for a ResNet with MSE loss, $\eta = $ 0.0001, $\beta_2 = $ 0.99, $\epsilon = $ 1e-08, and bias correction.}
        \label{fig:bulk-rmsprop:mse-resnet-2}
    \end{figure}
                
    \begin{figure}[H]
        \centering
        \includegraphics[width=0.8\linewidth]{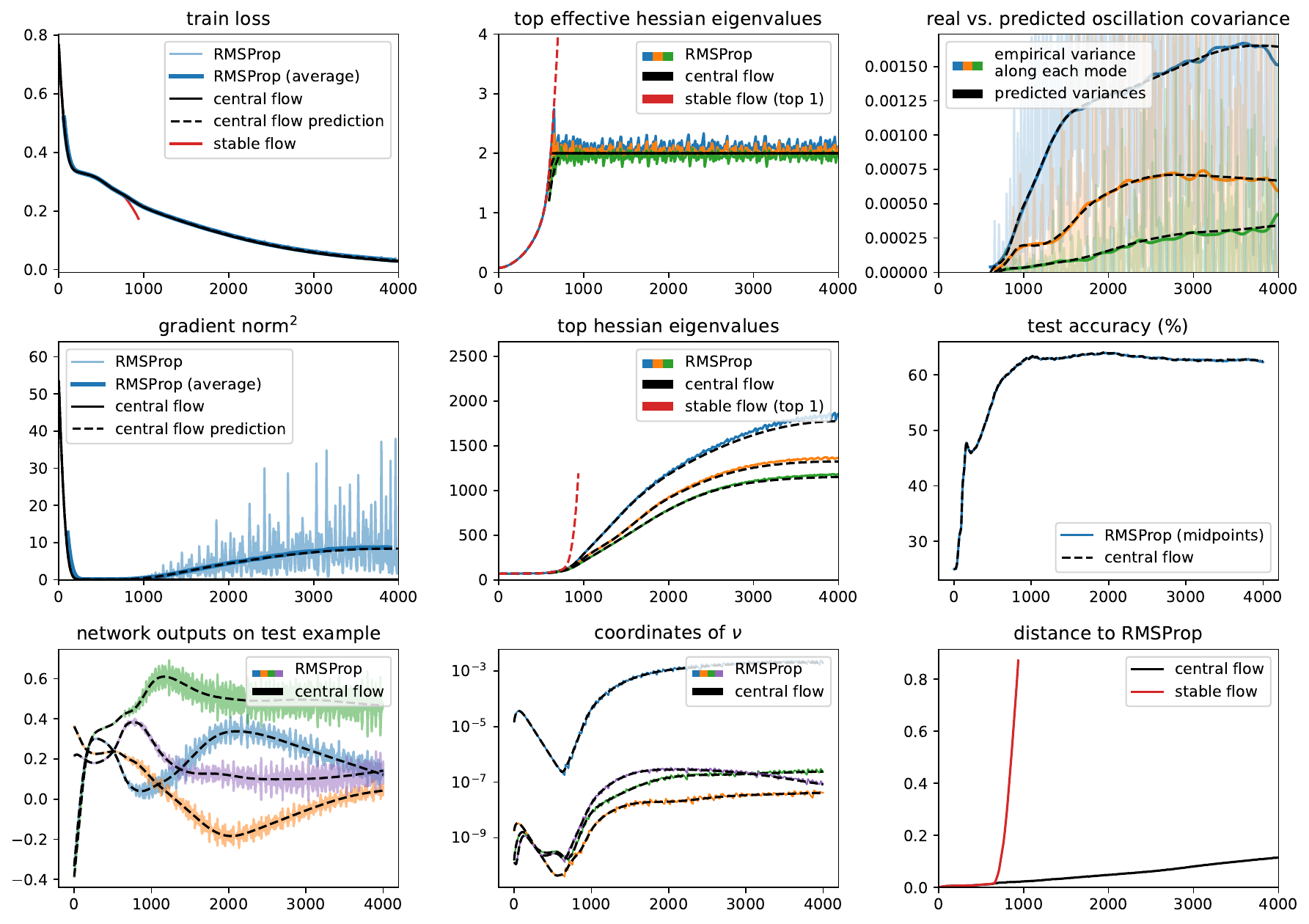}
        \caption{RMSProp central flow for a ViT with MSE loss, $\eta = $ 7e-06, $\beta_2 = $ 0.95, $\epsilon = $ 1e-08, and bias correction.}
        \label{fig:bulk-rmsprop:mse-vit-0}
    \end{figure}
                
    \begin{figure}[H]
        \centering
        \includegraphics[width=0.8\linewidth]{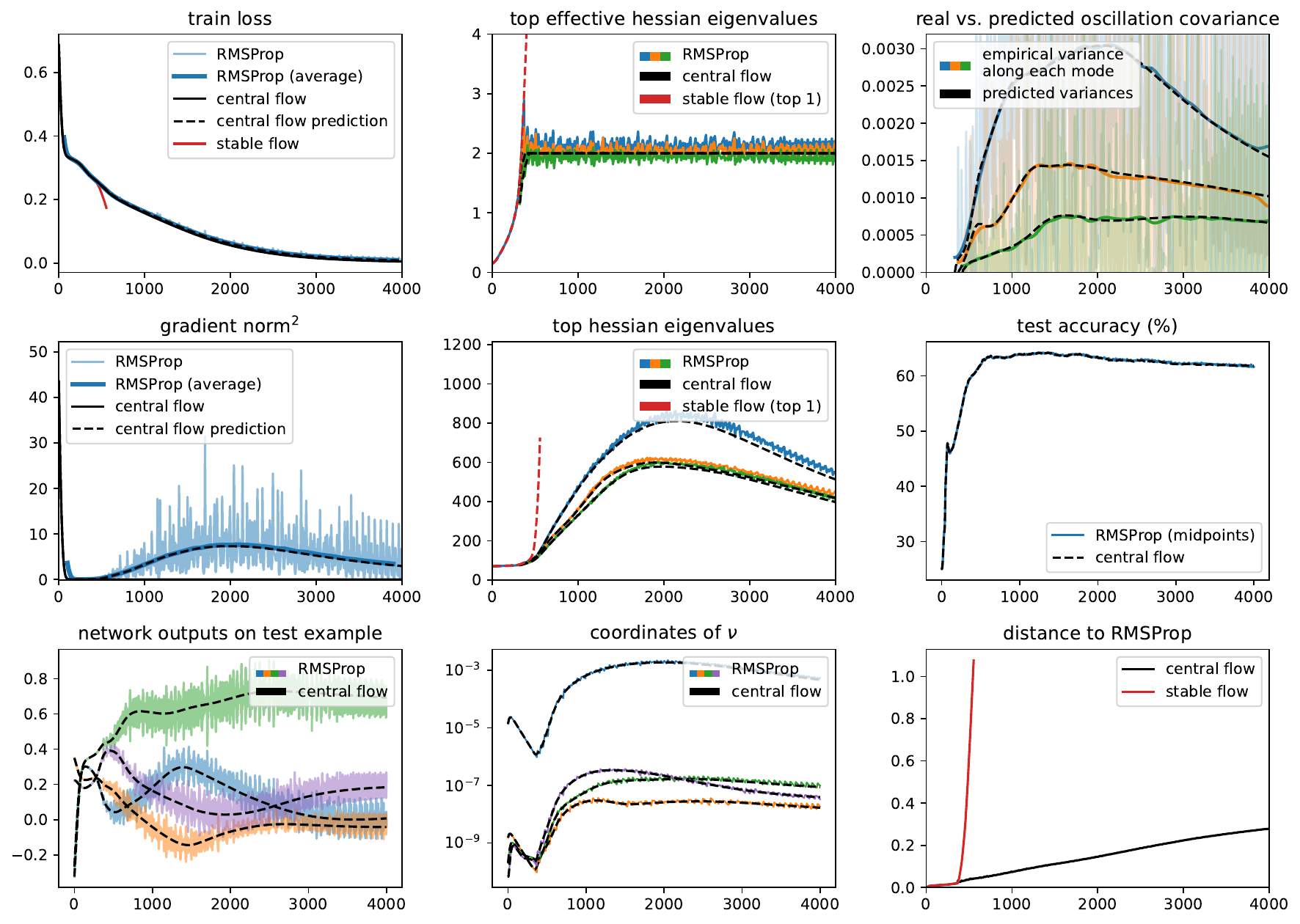}
        \caption{RMSProp central flow for a ViT with MSE loss, $\eta = $ 1e-05, $\beta_2 = $ 0.95, $\epsilon = $ 1e-08, and bias correction.}
        \label{fig:bulk-rmsprop:mse-vit-1}
    \end{figure}
                
    \begin{figure}[H]
        \centering
        \includegraphics[width=0.8\linewidth]{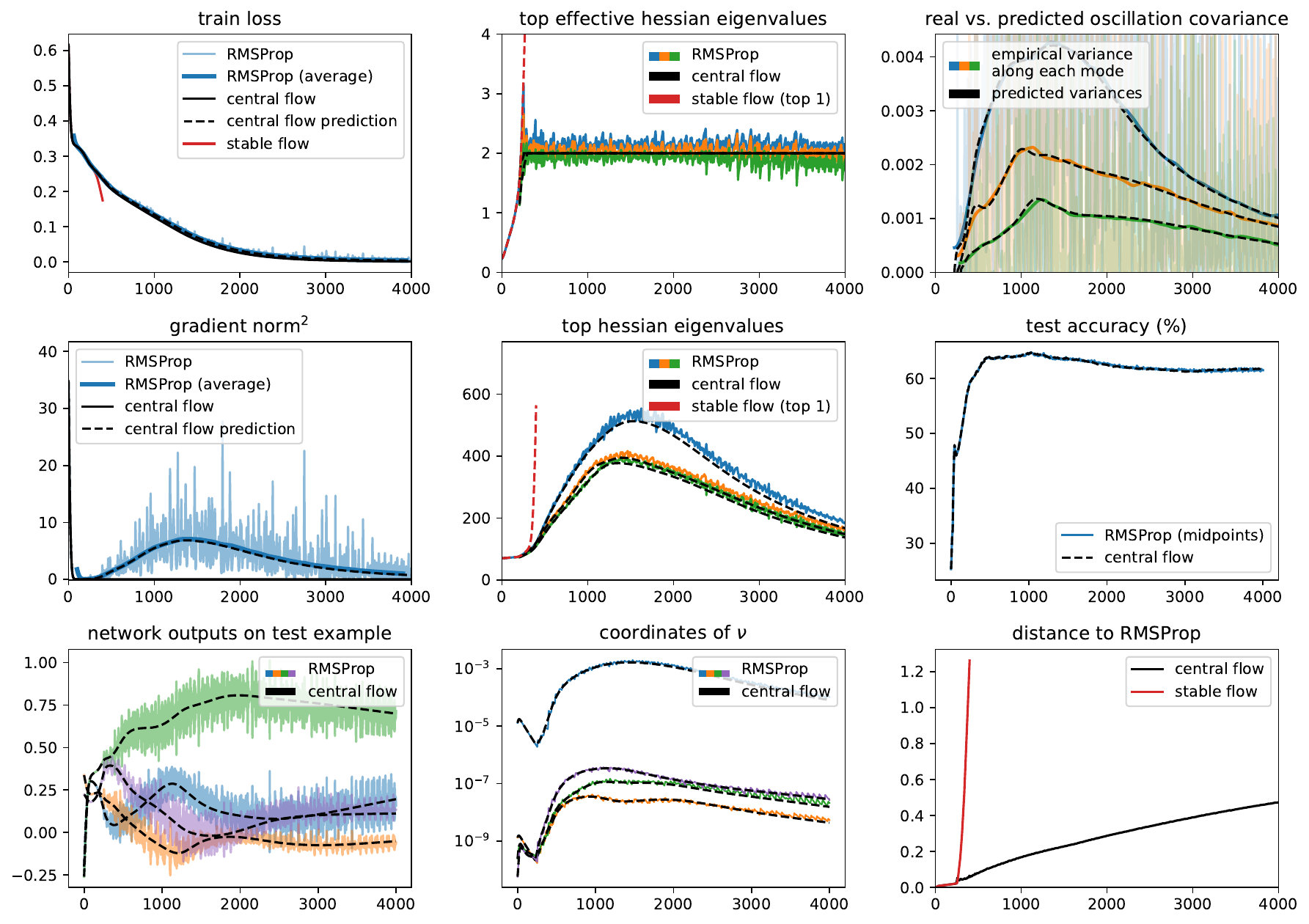}
        \caption{RMSProp central flow for a ViT with MSE loss, $\eta = $ 2e-05, $\beta_2 = $ 0.95, $\epsilon = $ 1e-08, and bias correction.}
        \label{fig:bulk-rmsprop:mse-vit-2}
    \end{figure}
                
    \begin{figure}[H]
        \centering
        \includegraphics[width=0.8\linewidth]{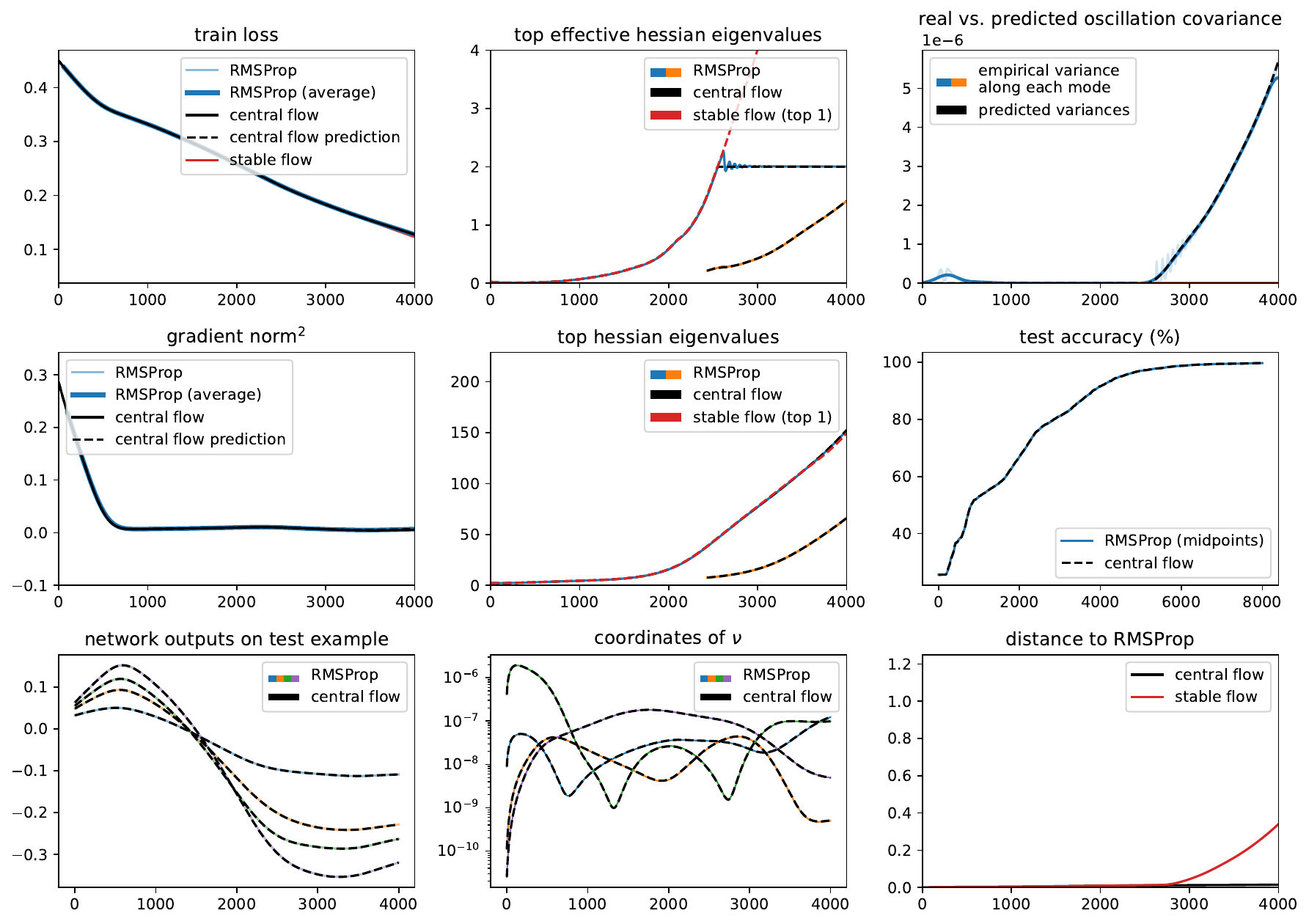}
        \caption{RMSProp central flow for a LSTM with MSE loss, $\eta = $ 1e-05, $\beta_2 = $ 0.99, $\epsilon = $ 1e-08, and bias correction.}
        \label{fig:bulk-rmsprop:mse-lstm-0}
    \end{figure}
                
    \begin{figure}[H]
        \centering
        \includegraphics[width=0.8\linewidth]{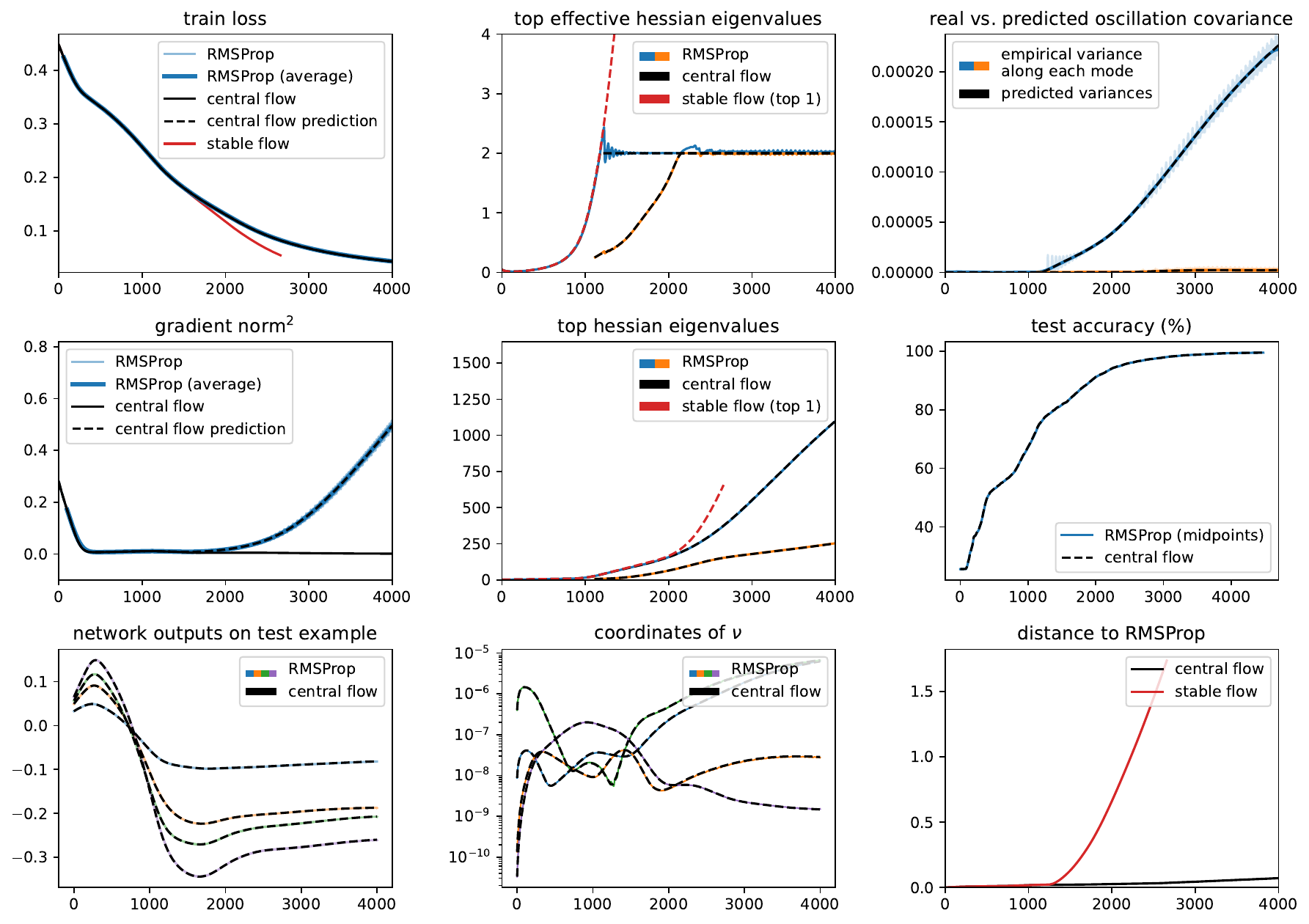}
        \caption{RMSProp central flow for a LSTM with MSE loss, $\eta = $ 2e-05, $\beta_2 = $ 0.99, $\epsilon = $ 1e-08, and bias correction.}
        \label{fig:bulk-rmsprop:mse-lstm-1}
    \end{figure}
                
    \begin{figure}[H]
        \centering
        \includegraphics[width=0.8\linewidth]{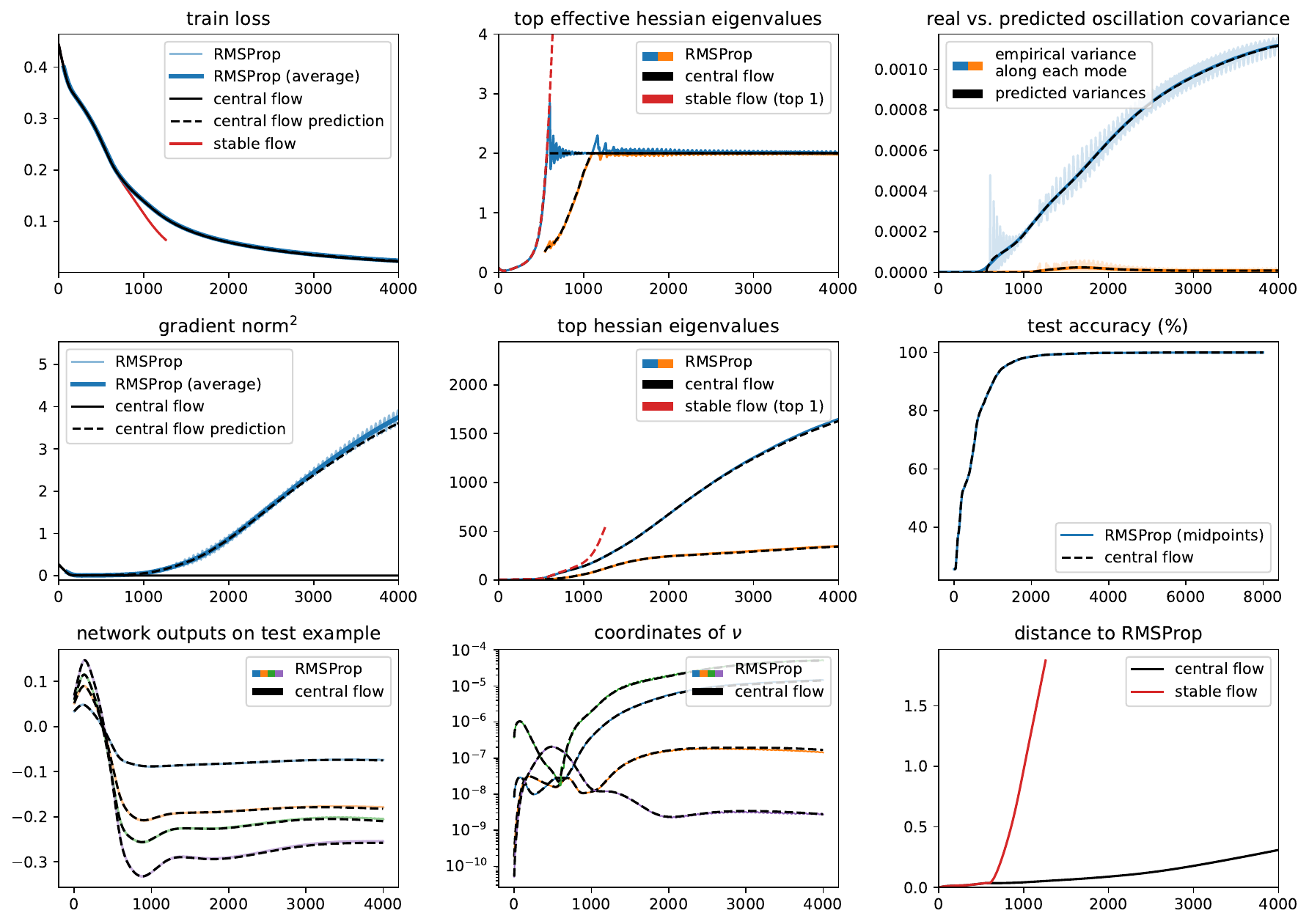}
        \caption{RMSProp central flow for a LSTM with MSE loss, $\eta = $ 4e-05, $\beta_2 = $ 0.99, $\epsilon = $ 1e-08, and bias correction.}
        \label{fig:bulk-rmsprop:mse-lstm-2}
    \end{figure}
                
    \begin{figure}[H]
        \centering
        \includegraphics[width=0.8\linewidth]{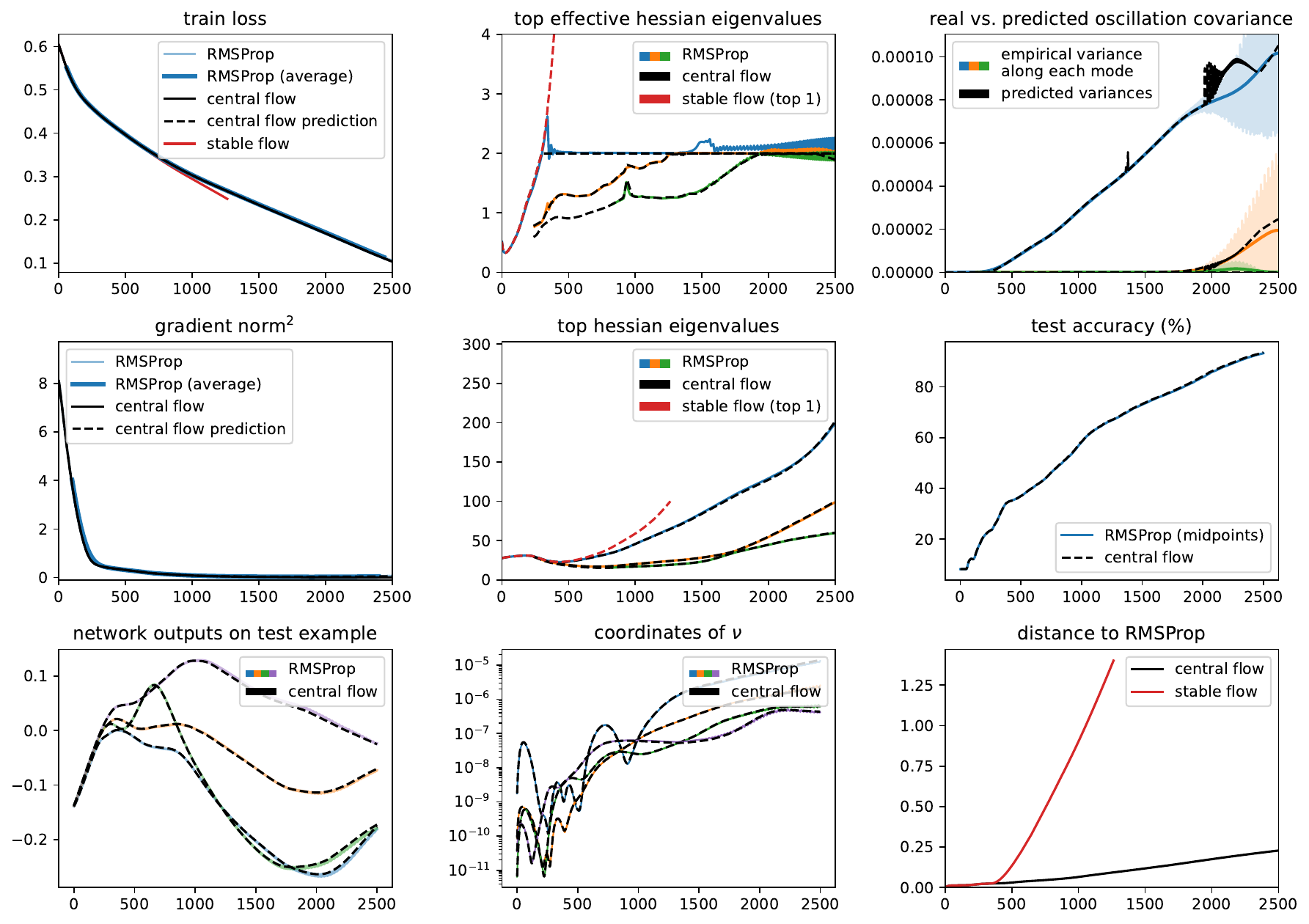}
        \caption{RMSProp central flow for a Transformer with MSE loss, $\eta = $ 2e-05, $\beta_2 = $ 0.95, $\epsilon = $ 1e-08, and bias correction.}
        \label{fig:bulk-rmsprop:mse-transformer-0}
    \end{figure}
                
    \begin{figure}[H]
        \centering
        \includegraphics[width=0.8\linewidth]{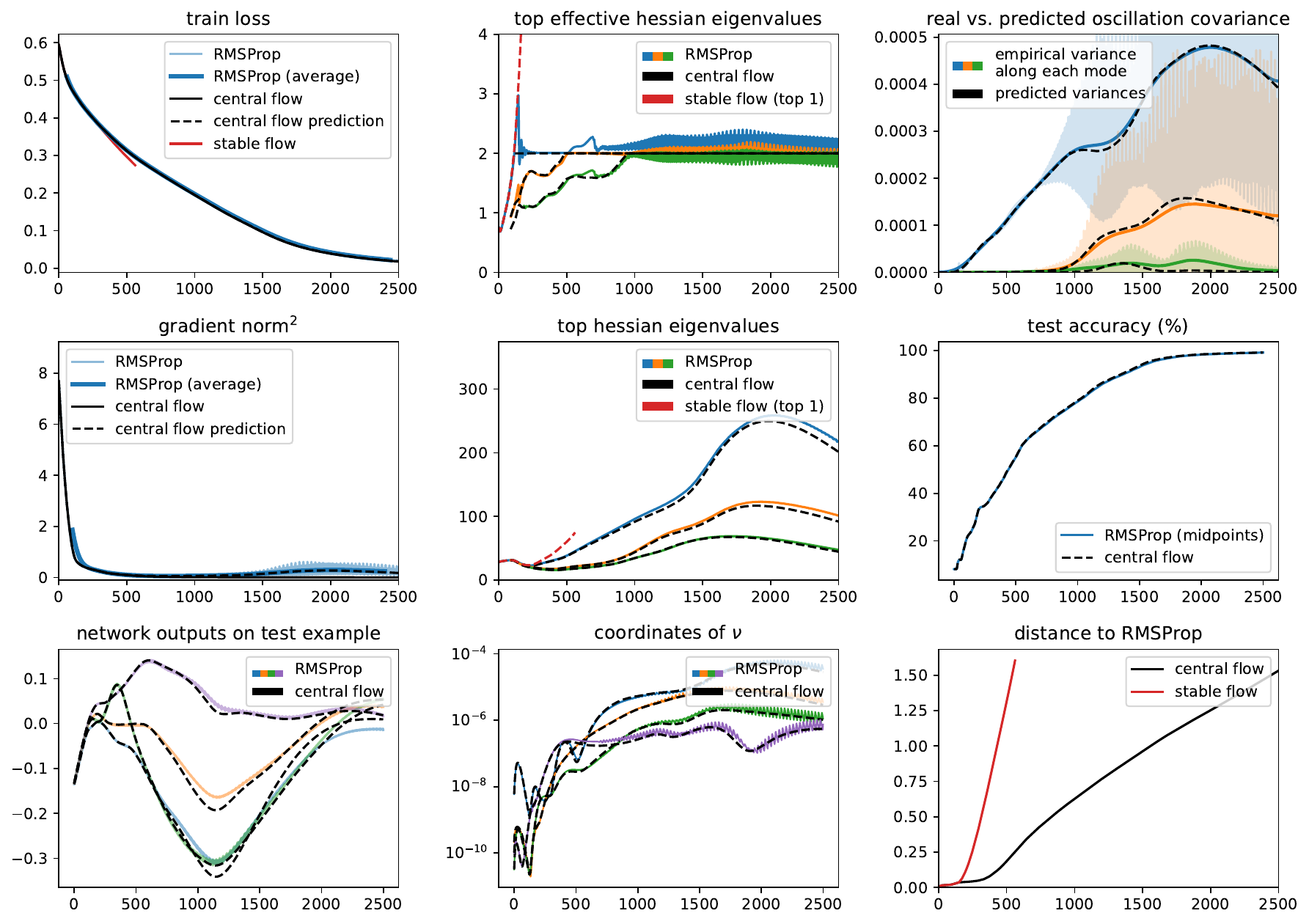}
        \caption{RMSProp central flow for a Transformer with MSE loss, $\eta = $ 4e-05, $\beta_2 = $ 0.95, $\epsilon = $ 1e-08, and bias correction.}
        \label{fig:bulk-rmsprop:mse-transformer-1}
    \end{figure}
                
    \begin{figure}[H]
        \centering
        \includegraphics[width=0.8\linewidth]{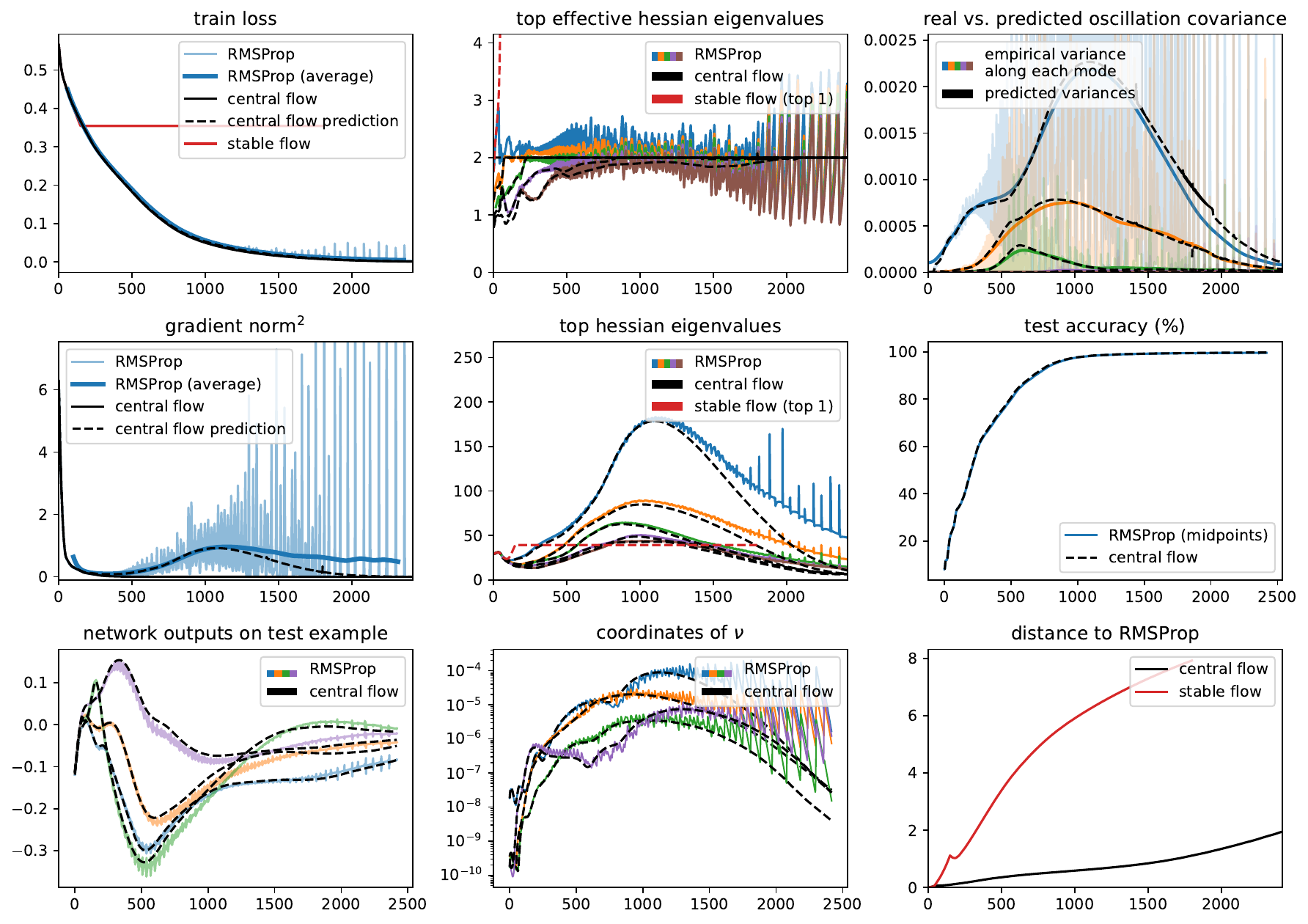}
        \caption{RMSProp central flow for a Transformer with MSE loss, $\eta = $ 0.0001, $\beta_2 = $ 0.95, $\epsilon = $ 1e-08, and bias correction.}
        \label{fig:bulk-rmsprop:mse-transformer-2}
    \end{figure}
                
    \begin{figure}[H]
        \centering
        \includegraphics[width=0.8\linewidth]{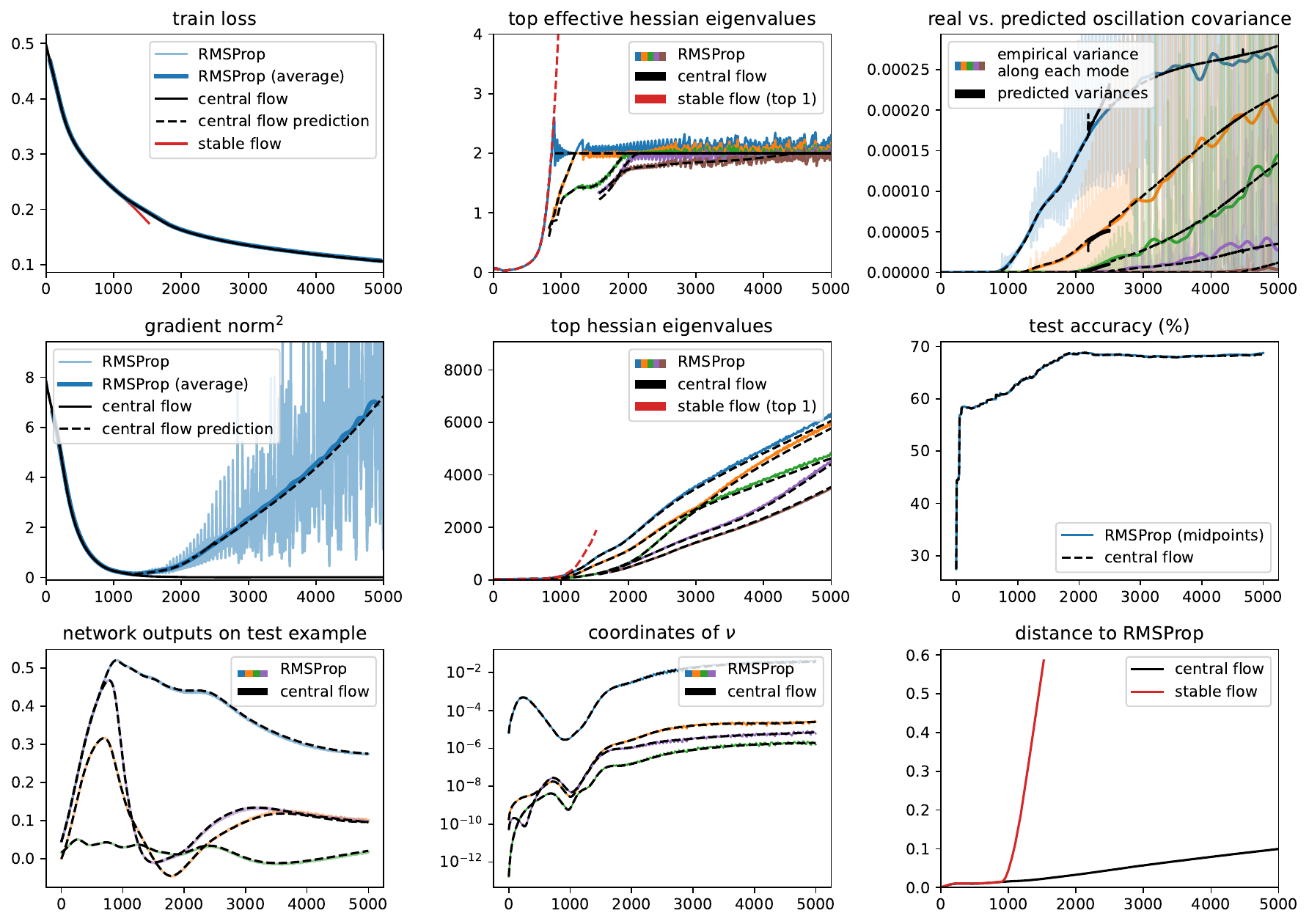}
        \caption{RMSProp central flow for a Mamba with MSE loss, $\eta = $ 1e-05, $\beta_2 = $ 0.99, $\epsilon = $ 1e-08, and bias correction.}
        \label{fig:bulk-rmsprop:mse-mamba-0}
    \end{figure}
                
    \begin{figure}[H]
        \centering
        \includegraphics[width=0.8\linewidth]{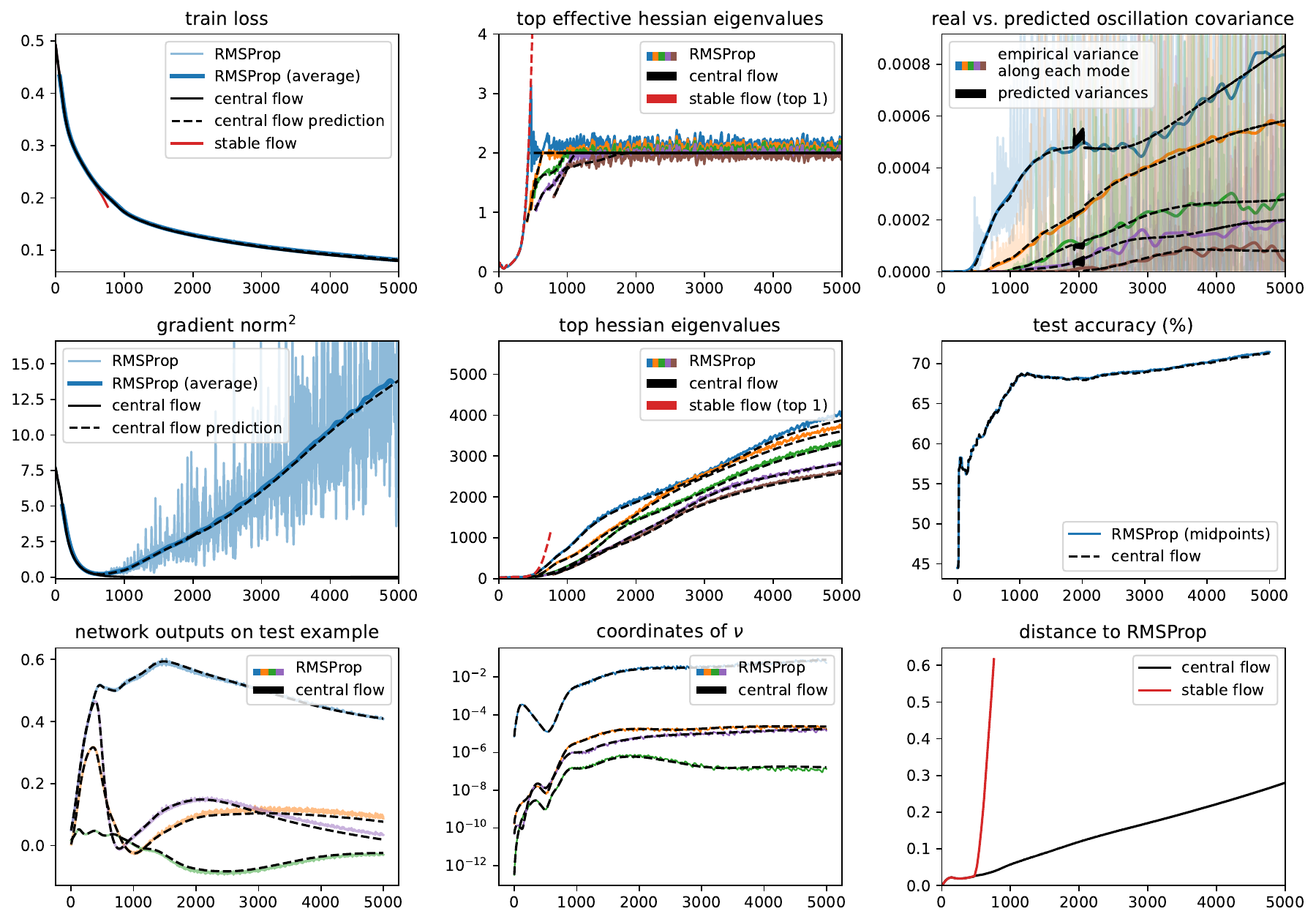}
        \caption{RMSProp central flow for a Mamba with MSE loss, $\eta = $ 2e-05, $\beta_2 = $ 0.99, $\epsilon = $ 1e-08, and bias correction.}
        \label{fig:bulk-rmsprop:mse-mamba-1}
    \end{figure}
                
    \begin{figure}[H]
        \centering
        \includegraphics[width=0.8\linewidth]{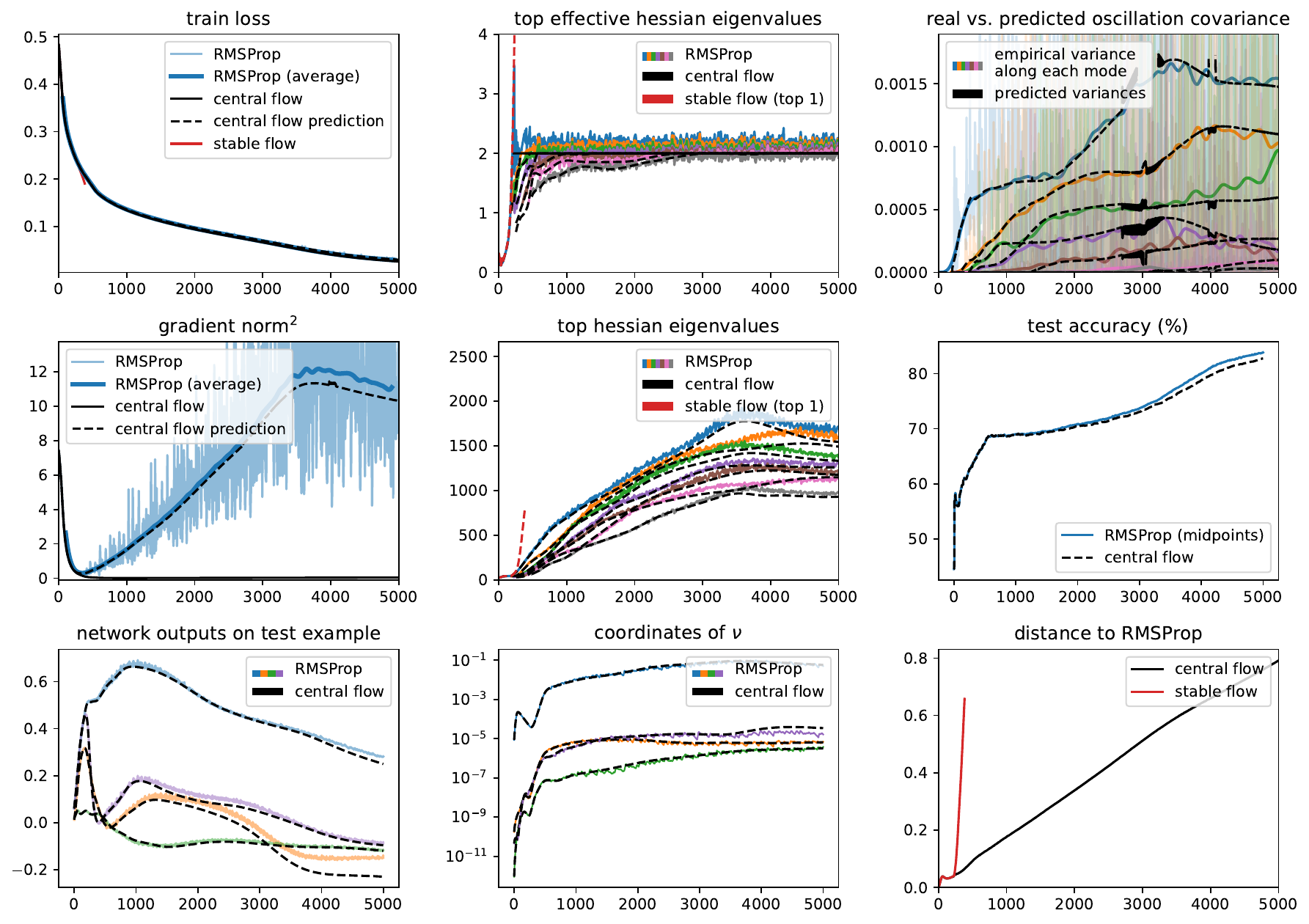}
        \caption{RMSProp central flow for a Mamba with MSE loss, $\eta = $ 4e-05, $\beta_2 = $ 0.99, $\epsilon = $ 1e-08, and bias correction.}
        \label{fig:bulk-rmsprop:mse-mamba-2}
    \end{figure}
                \end{specialfigures}

\clearpage
\begin{specialfigures}

    \begin{figure}[H]
        \centering
        \includegraphics[width=0.8\linewidth]{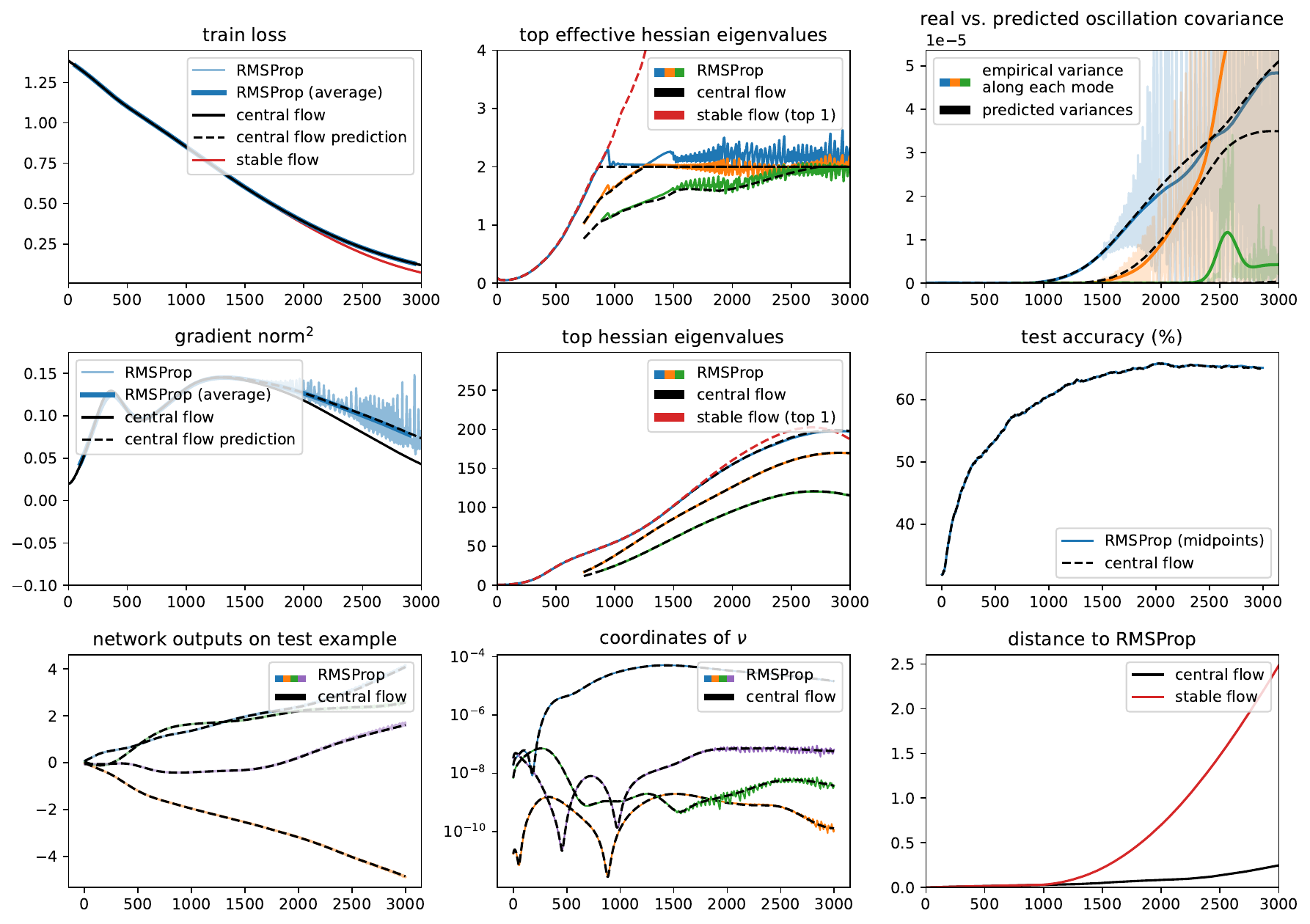}
        \caption{RMSProp central flow for a CNN with CE loss, $\eta = $ 7e-06, $\beta_2 = $ 0.95, $\epsilon = $ 1e-08, and bias correction.}
        \label{fig:bulk-rmsprop:ce-cnn-0}
    \end{figure}
                
    \begin{figure}[H]
        \centering
        \includegraphics[width=0.8\linewidth]{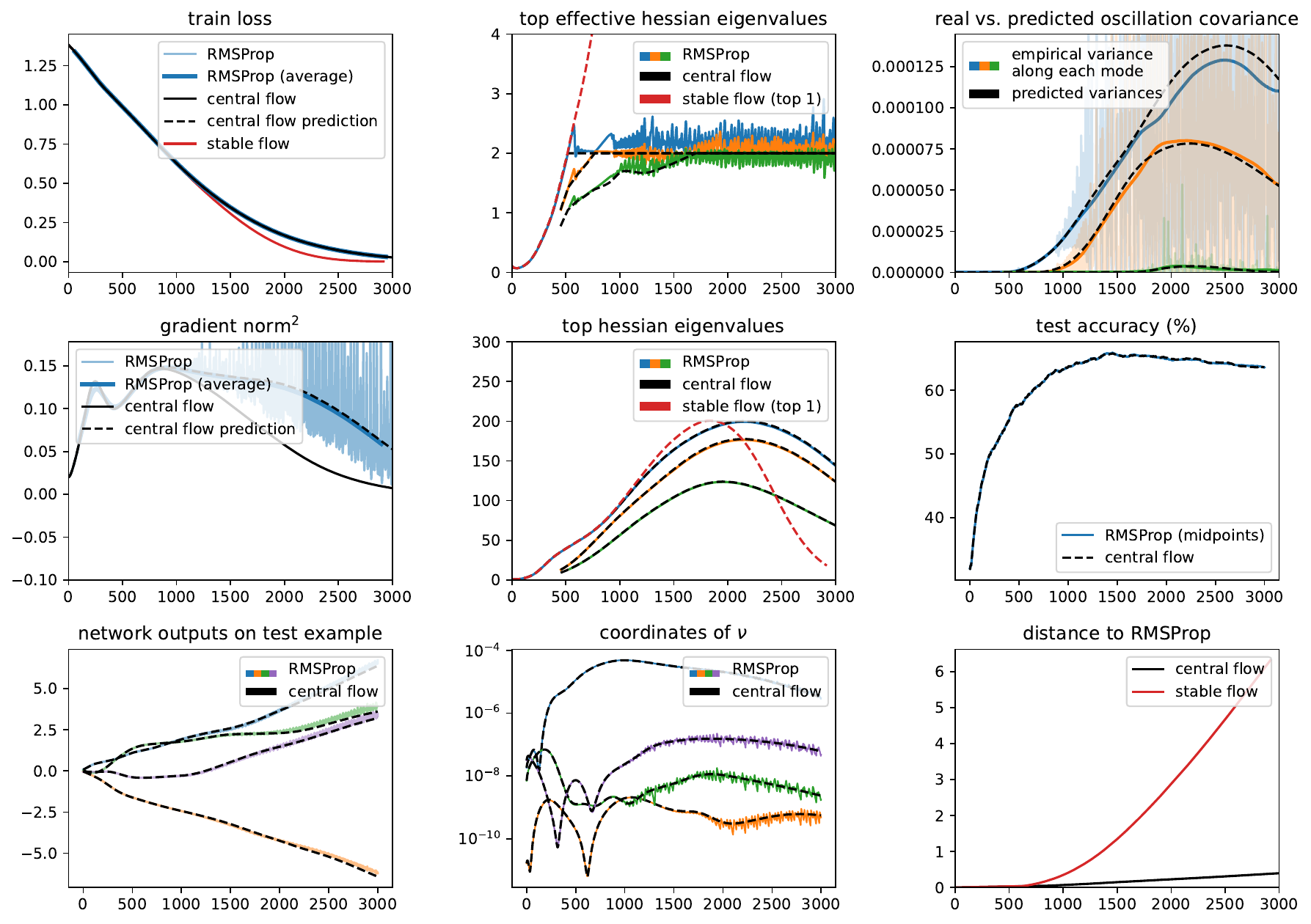}
        \caption{RMSProp central flow for a CNN with CE loss, $\eta = $ 1e-05, $\beta_2 = $ 0.95, $\epsilon = $ 1e-08, and bias correction.}
        \label{fig:bulk-rmsprop:ce-cnn-1}
    \end{figure}
                
    \begin{figure}[H]
        \centering
        \includegraphics[width=0.8\linewidth]{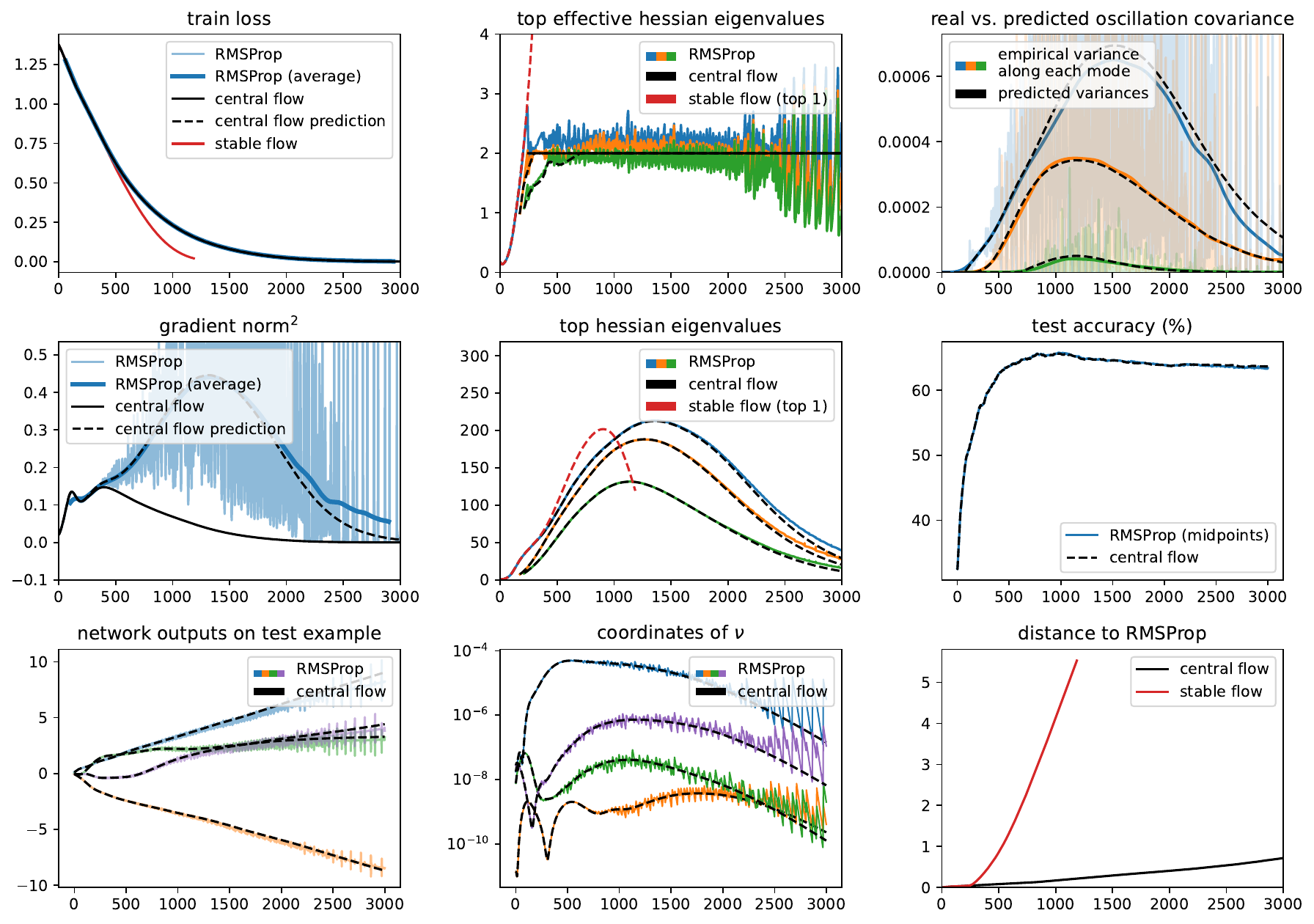}
        \caption{RMSProp central flow for a CNN with CE loss, $\eta = $ 2e-05, $\beta_2 = $ 0.95, $\epsilon = $ 1e-08, and bias correction.}
        \label{fig:bulk-rmsprop:ce-cnn-2}
    \end{figure}
                
    \begin{figure}[H]
        \centering
        \includegraphics[width=0.8\linewidth]{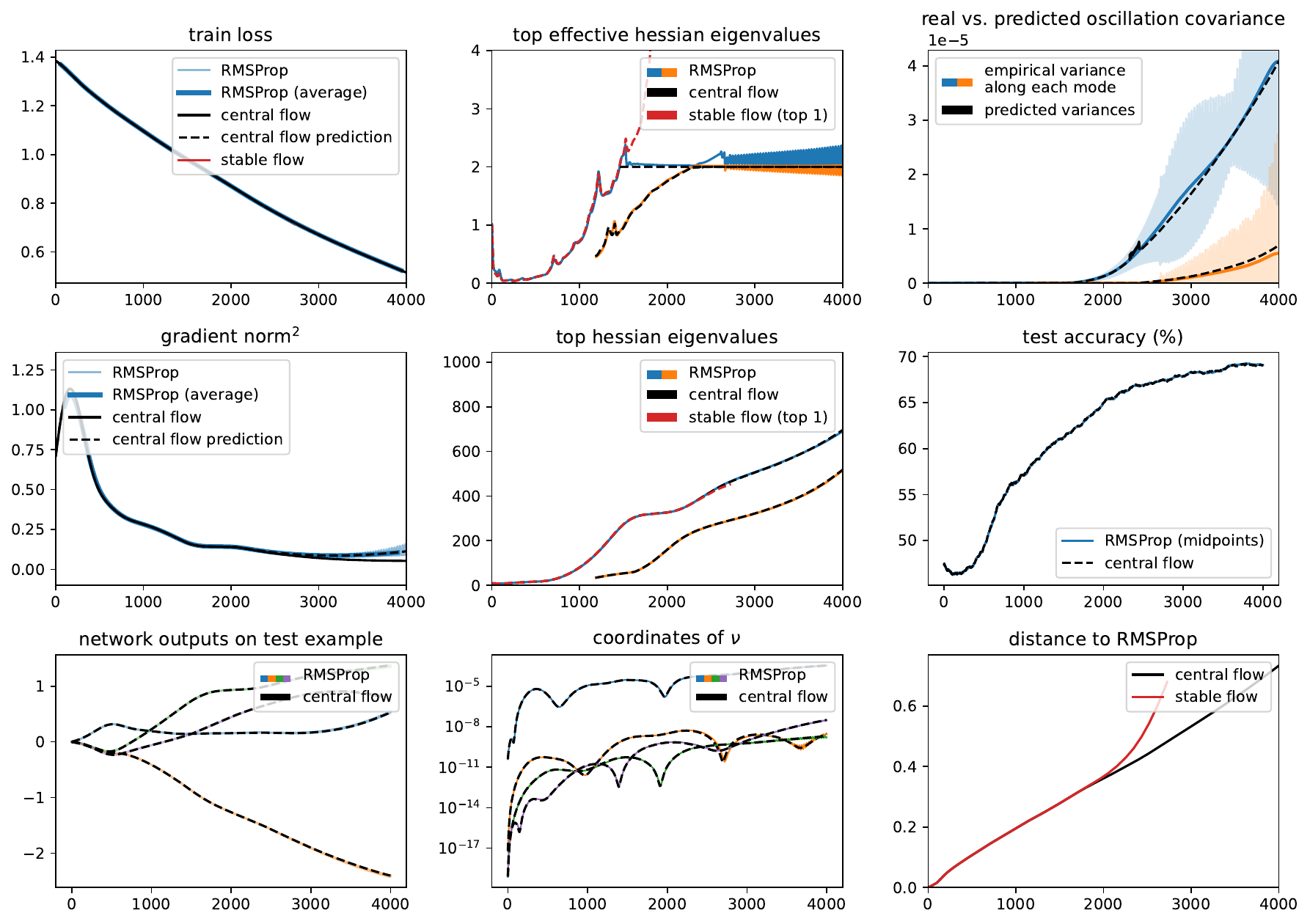}
        \caption{RMSProp central flow for a ResNet with CE loss, $\eta = $ 1e-05, $\beta_2 = $ 0.95, $\epsilon = $ 1e-08, and bias correction.}
        \label{fig:bulk-rmsprop:ce-resnet-0}
    \end{figure}
                
    \begin{figure}[H]
        \centering
        \includegraphics[width=0.8\linewidth]{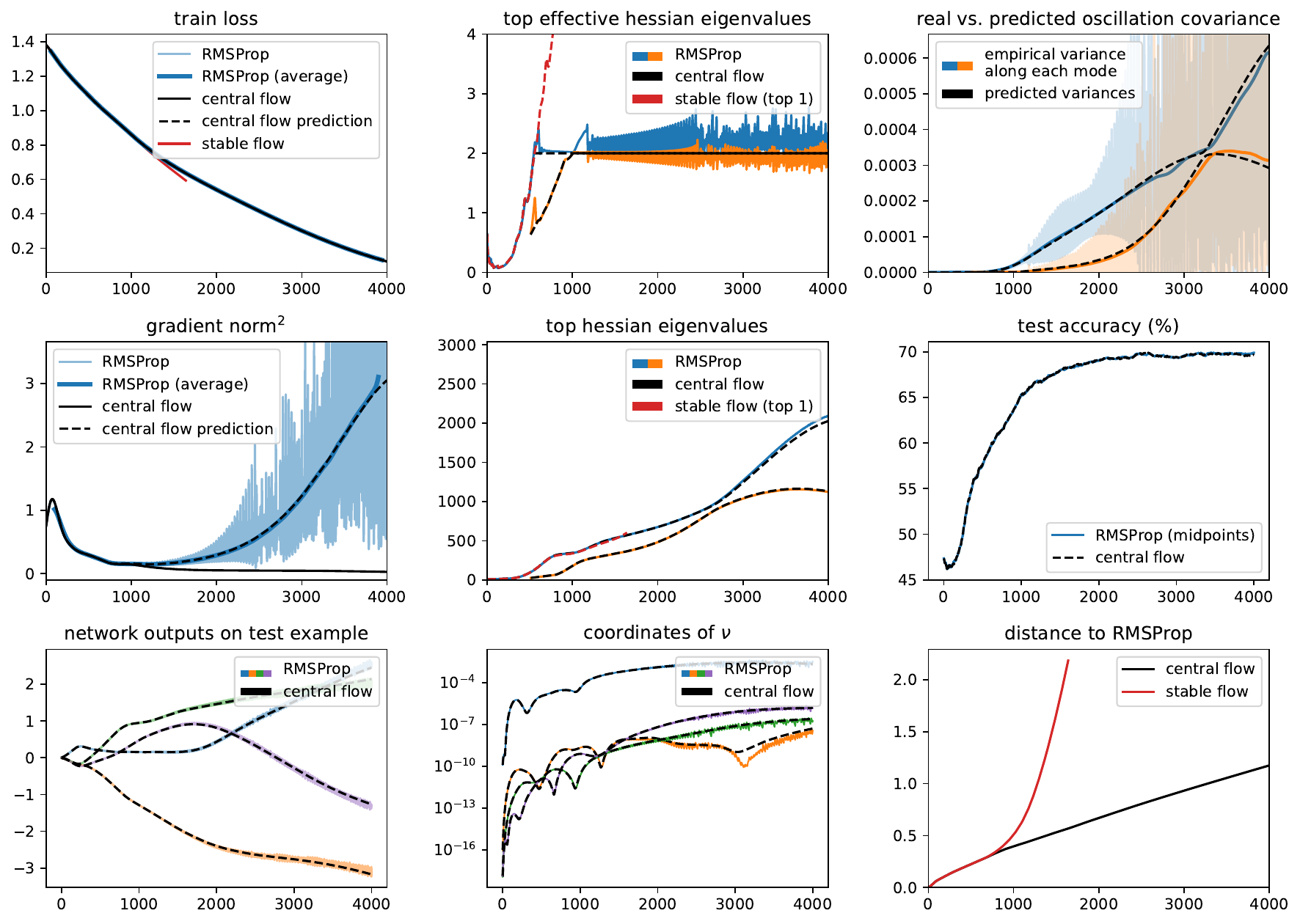}
        \caption{RMSProp central flow for a ResNet with CE loss, $\eta = $ 2e-05, $\beta_2 = $ 0.95, $\epsilon = $ 1e-08, and bias correction.}
        \label{fig:bulk-rmsprop:ce-resnet-1}
    \end{figure}
                
    \begin{figure}[H]
        \centering
        \includegraphics[width=0.8\linewidth]{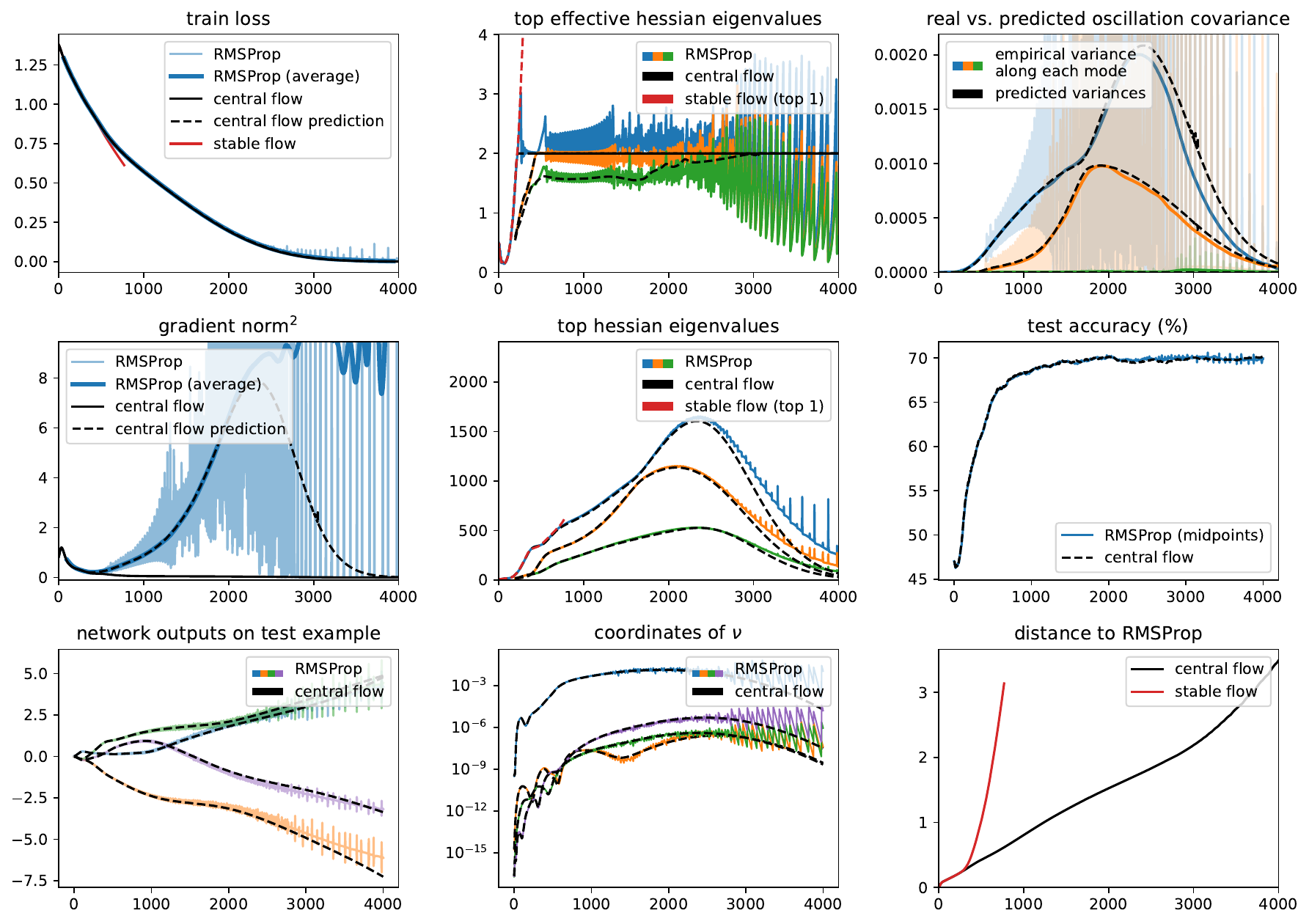}
        \caption{RMSProp central flow for a ResNet with CE loss, $\eta = $ 4e-05, $\beta_2 = $ 0.95, $\epsilon = $ 1e-08, and bias correction.}
        \label{fig:bulk-rmsprop:ce-resnet-2}
    \end{figure}
                
    \begin{figure}[H]
        \centering
        \includegraphics[width=0.8\linewidth]{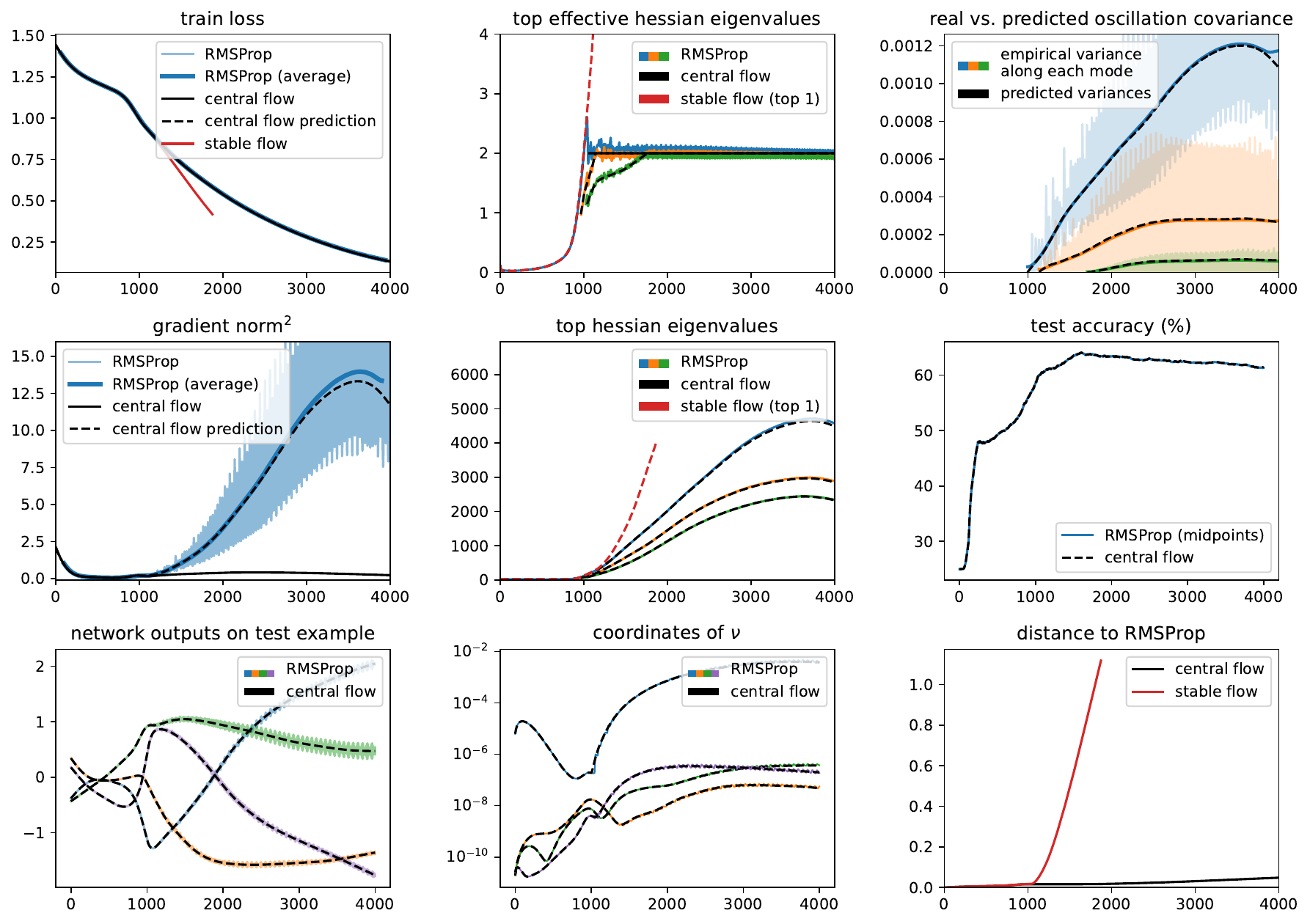}
        \caption{RMSProp central flow for a ViT with CE loss, $\eta = $ 5e-06, $\beta_2 = $ 0.95, $\epsilon = $ 1e-08, and bias correction.}
        \label{fig:bulk-rmsprop:ce-vit-0}
    \end{figure}
                
    \begin{figure}[H]
        \centering
        \includegraphics[width=0.8\linewidth]{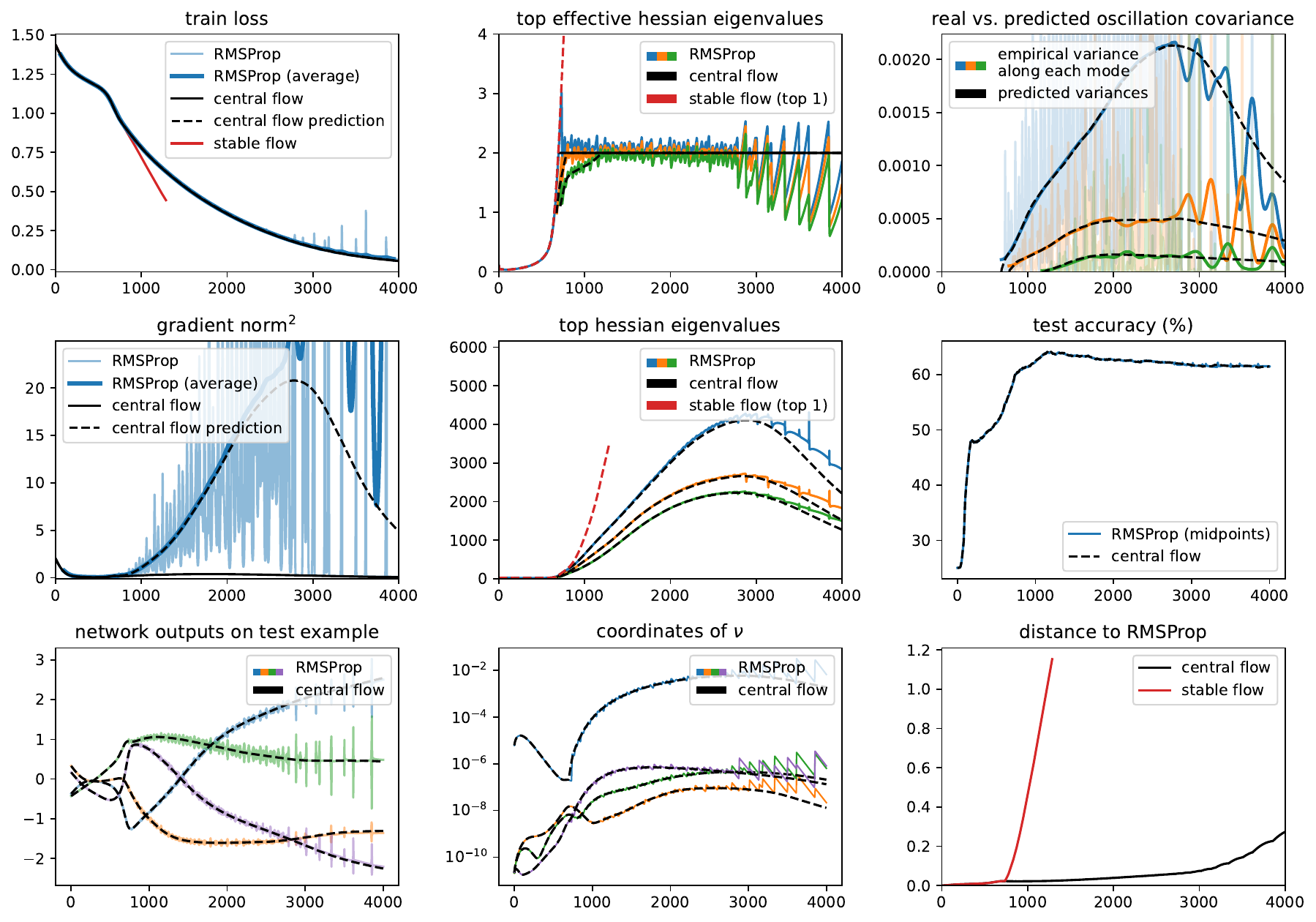}
        \caption{RMSProp central flow for a ViT with CE loss, $\eta = $ 7e-06, $\beta_2 = $ 0.95, $\epsilon = $ 1e-08, and bias correction.}
        \label{fig:bulk-rmsprop:ce-vit-1}
    \end{figure}
                
    \begin{figure}[H]
        \centering
        \includegraphics[width=0.8\linewidth]{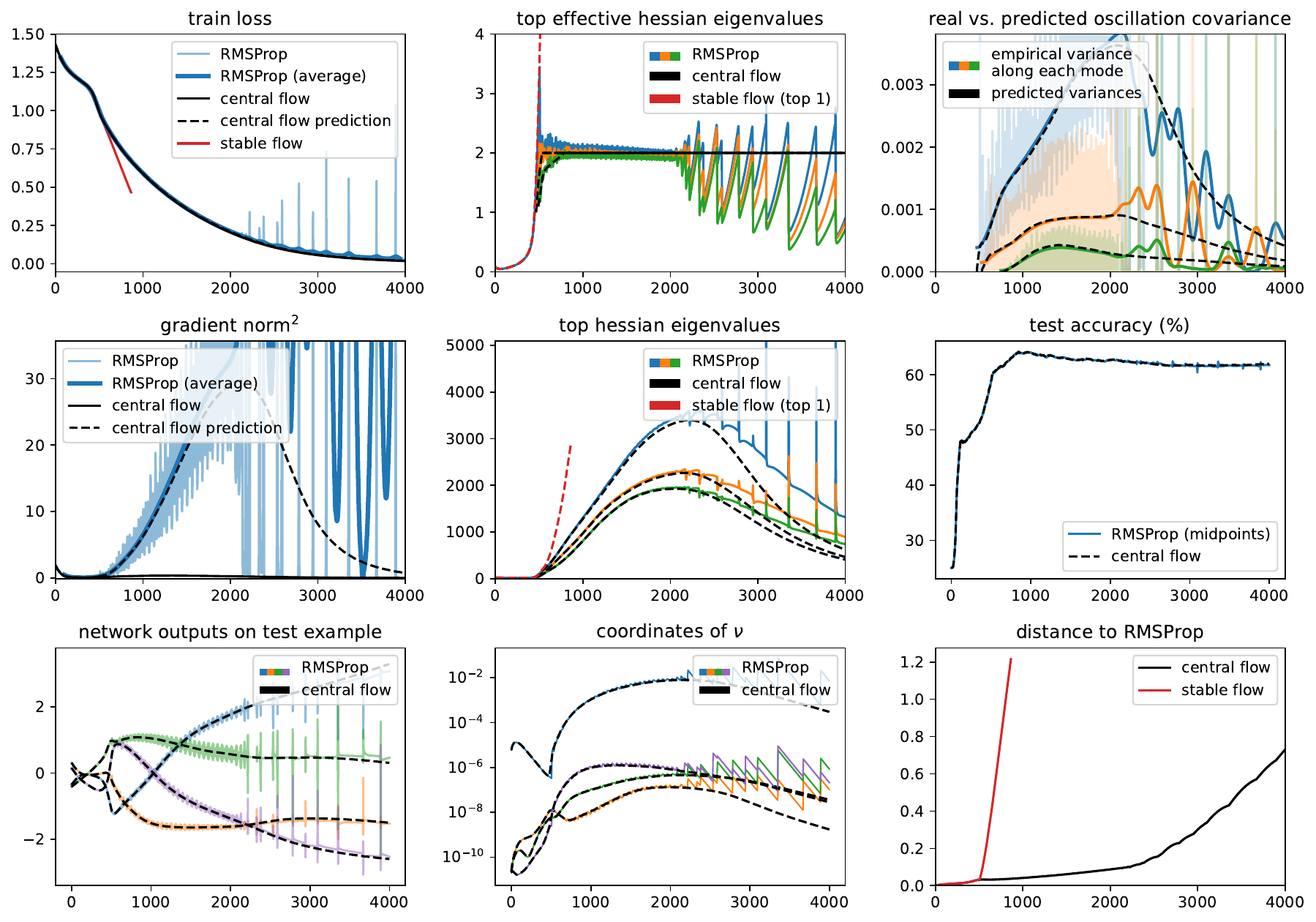}
        \caption{RMSProp central flow for a ViT with CE loss, $\eta = $ 1e-05, $\beta_2 = $ 0.95, $\epsilon = $ 1e-08, and bias correction.}
        \label{fig:bulk-rmsprop:ce-vit-2}
    \end{figure}
                
    \begin{figure}[H]
        \centering
        \includegraphics[width=0.8\linewidth]{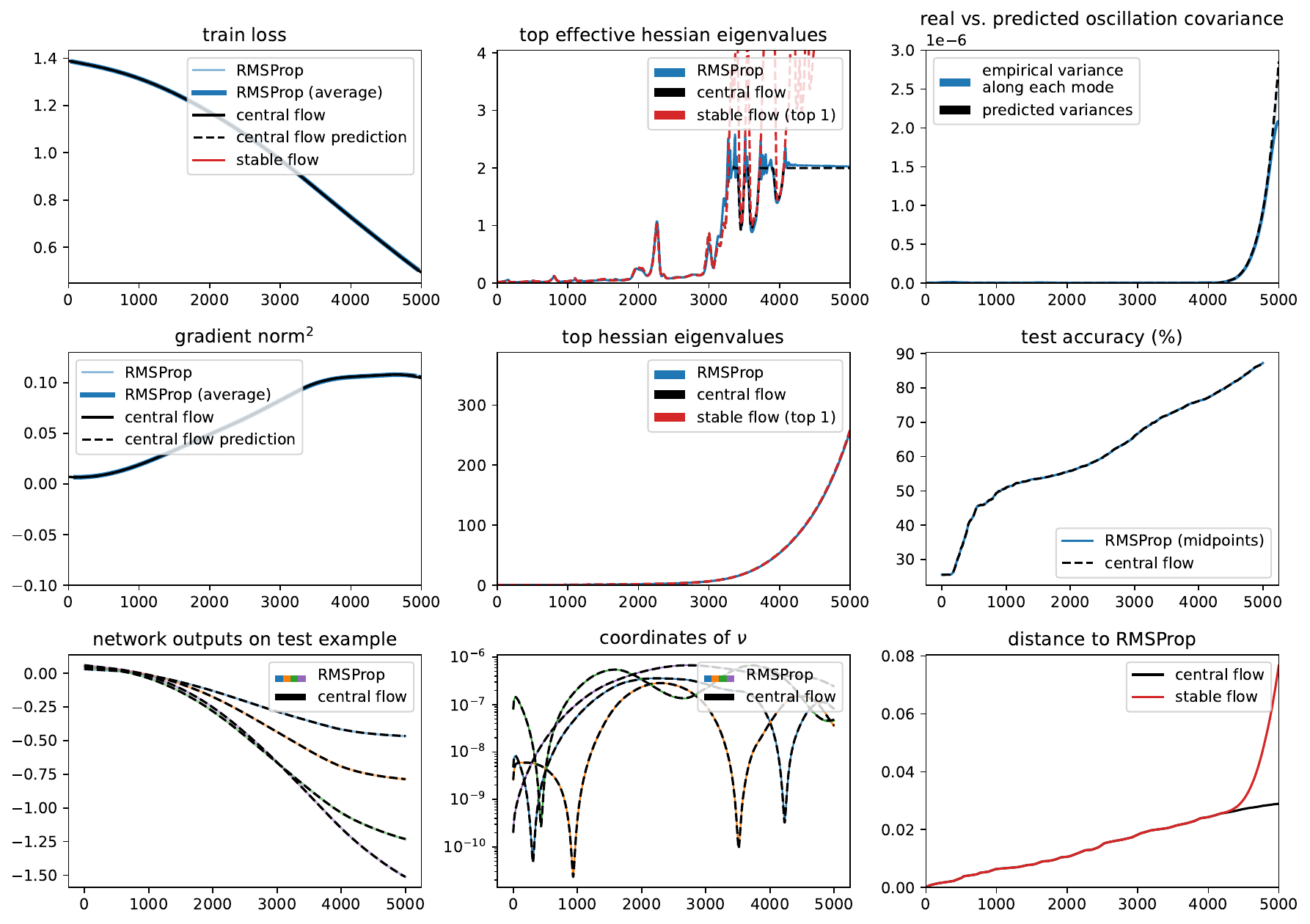}
        \caption{RMSProp central flow for a LSTM with CE loss, $\eta = $ 1e-05, $\beta_2 = $ 0.95, $\epsilon = $ 1e-08, and bias correction.}
        \label{fig:bulk-rmsprop:ce-lstm-0}
    \end{figure}
                
    \begin{figure}[H]
        \centering
        \includegraphics[width=0.8\linewidth]{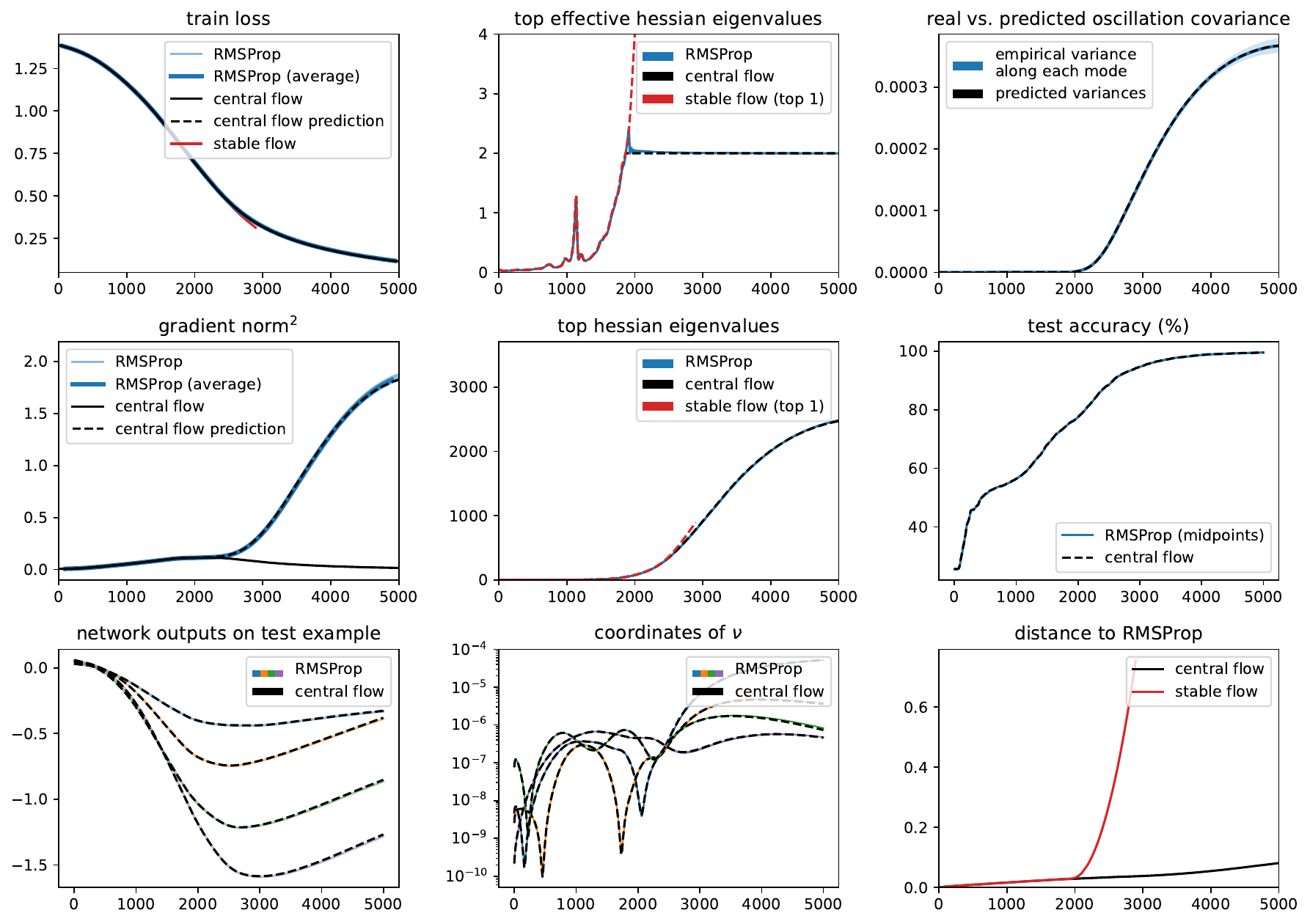}
        \caption{RMSProp central flow for a LSTM with CE loss, $\eta = $ 2e-05, $\beta_2 = $ 0.95, $\epsilon = $ 1e-08, and bias correction.}
        \label{fig:bulk-rmsprop:ce-lstm-1}
    \end{figure}
                
    \begin{figure}[H]
        \centering
        \includegraphics[width=0.8\linewidth]{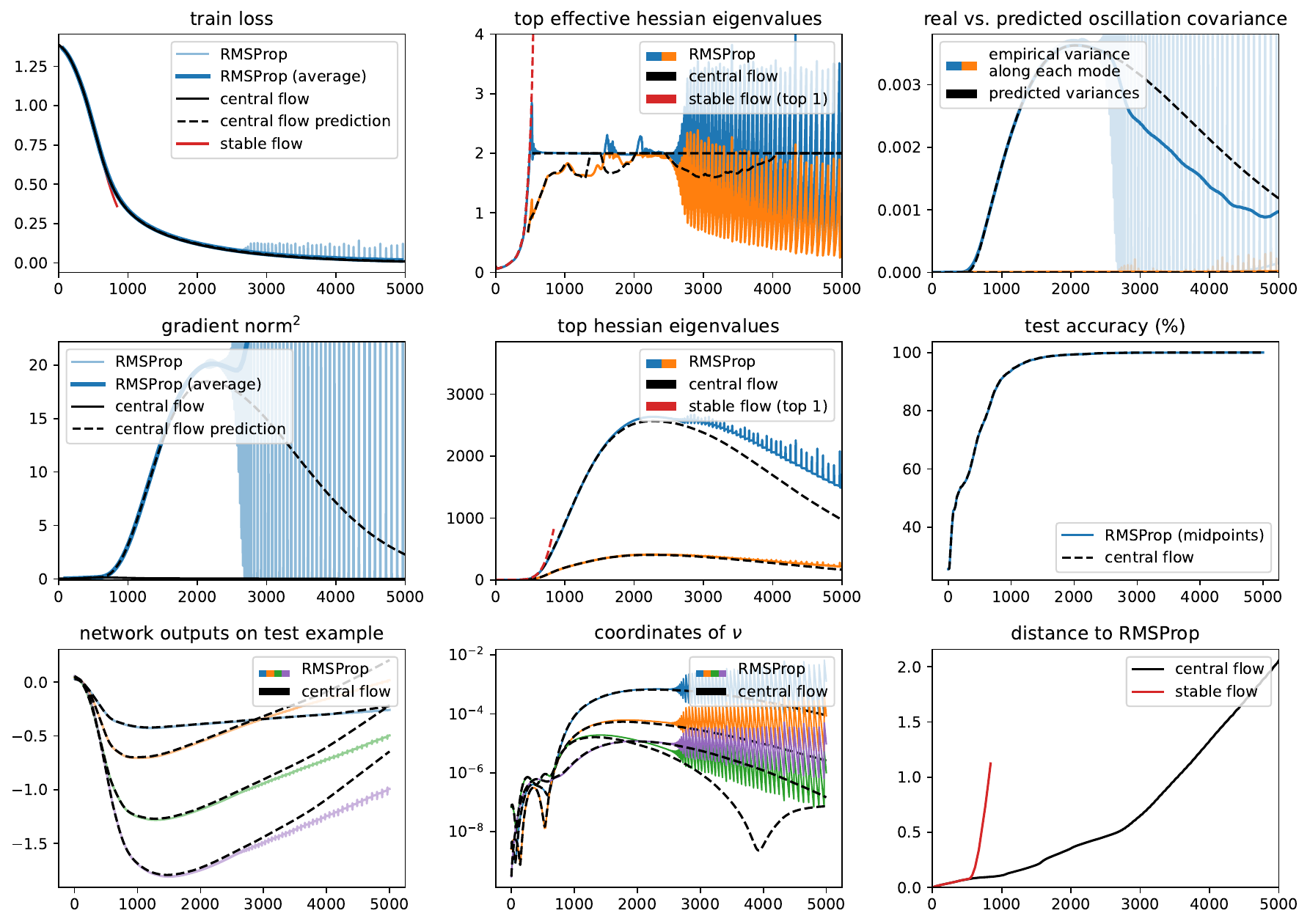}
        \caption{RMSProp central flow for a LSTM with CE loss, $\eta = $ 6e-05, $\beta_2 = $ 0.95, $\epsilon = $ 1e-08, and bias correction.}
        \label{fig:bulk-rmsprop:ce-lstm-2}
    \end{figure}
                
    \begin{figure}[H]
        \centering
        \includegraphics[width=0.8\linewidth]{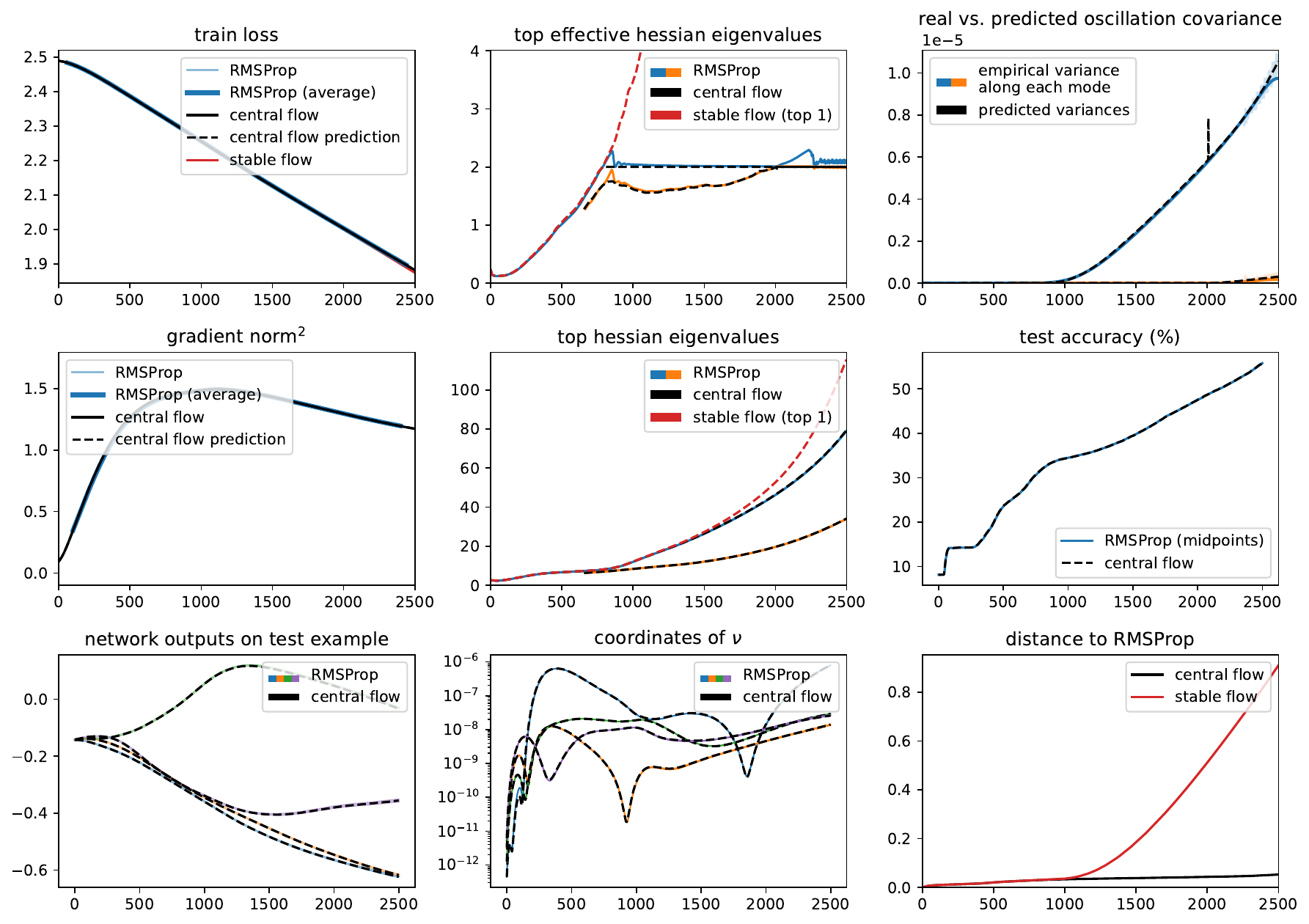}
        \caption{RMSProp central flow for a Transformer with CE loss, $\eta = $ 1e-05, $\beta_2 = $ 0.95, $\epsilon = $ 1e-08, and bias correction.}
        \label{fig:bulk-rmsprop:ce-transformer-0}
    \end{figure}
                
    \begin{figure}[H]
        \centering
        \includegraphics[width=0.8\linewidth]{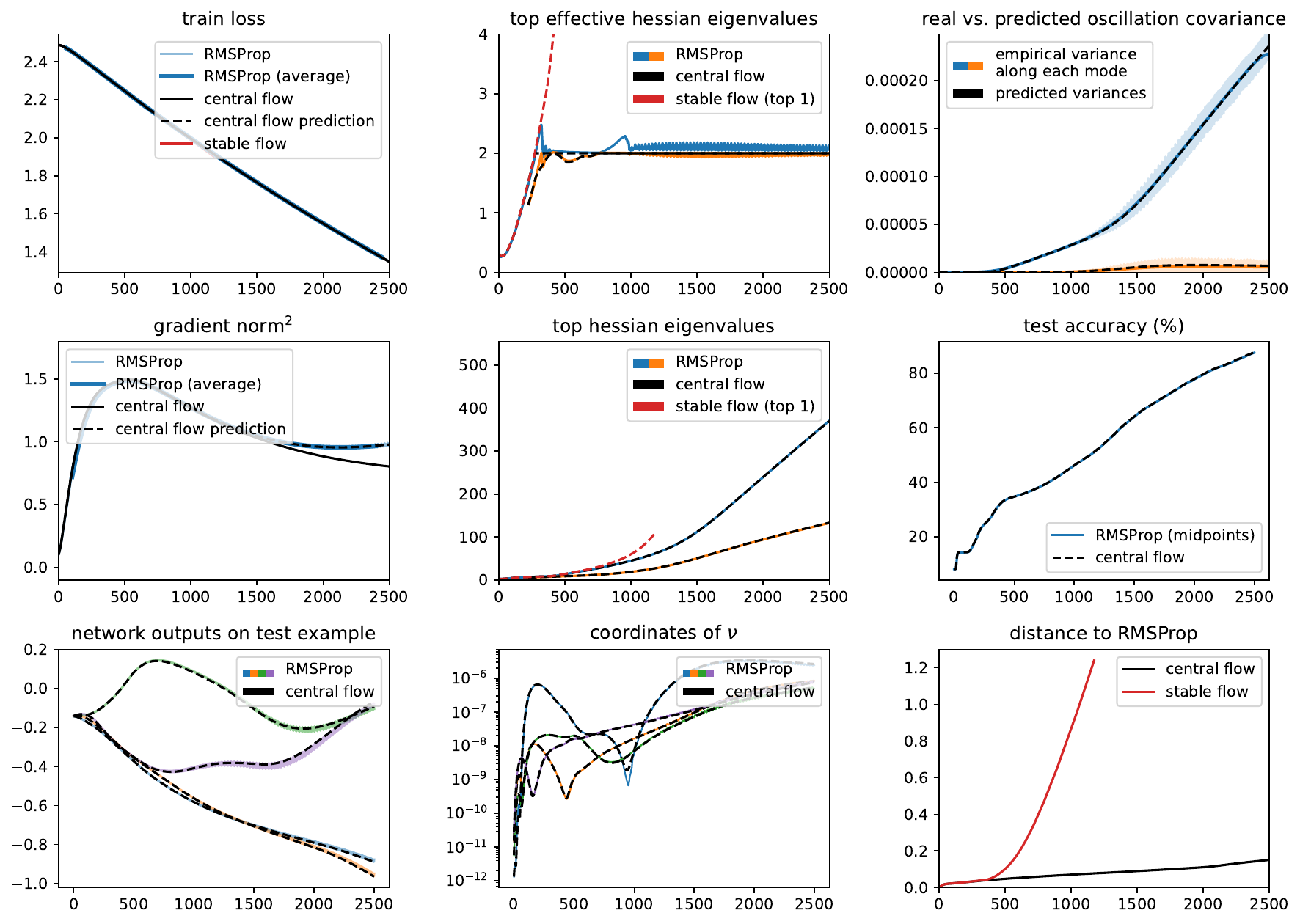}
        \caption{RMSProp central flow for a Transformer with CE loss, $\eta = $ 2e-05, $\beta_2 = $ 0.95, $\epsilon = $ 1e-08, and bias correction.}
        \label{fig:bulk-rmsprop:ce-transformer-1}
    \end{figure}
                
    \begin{figure}[H]
        \centering
        \includegraphics[width=0.8\linewidth]{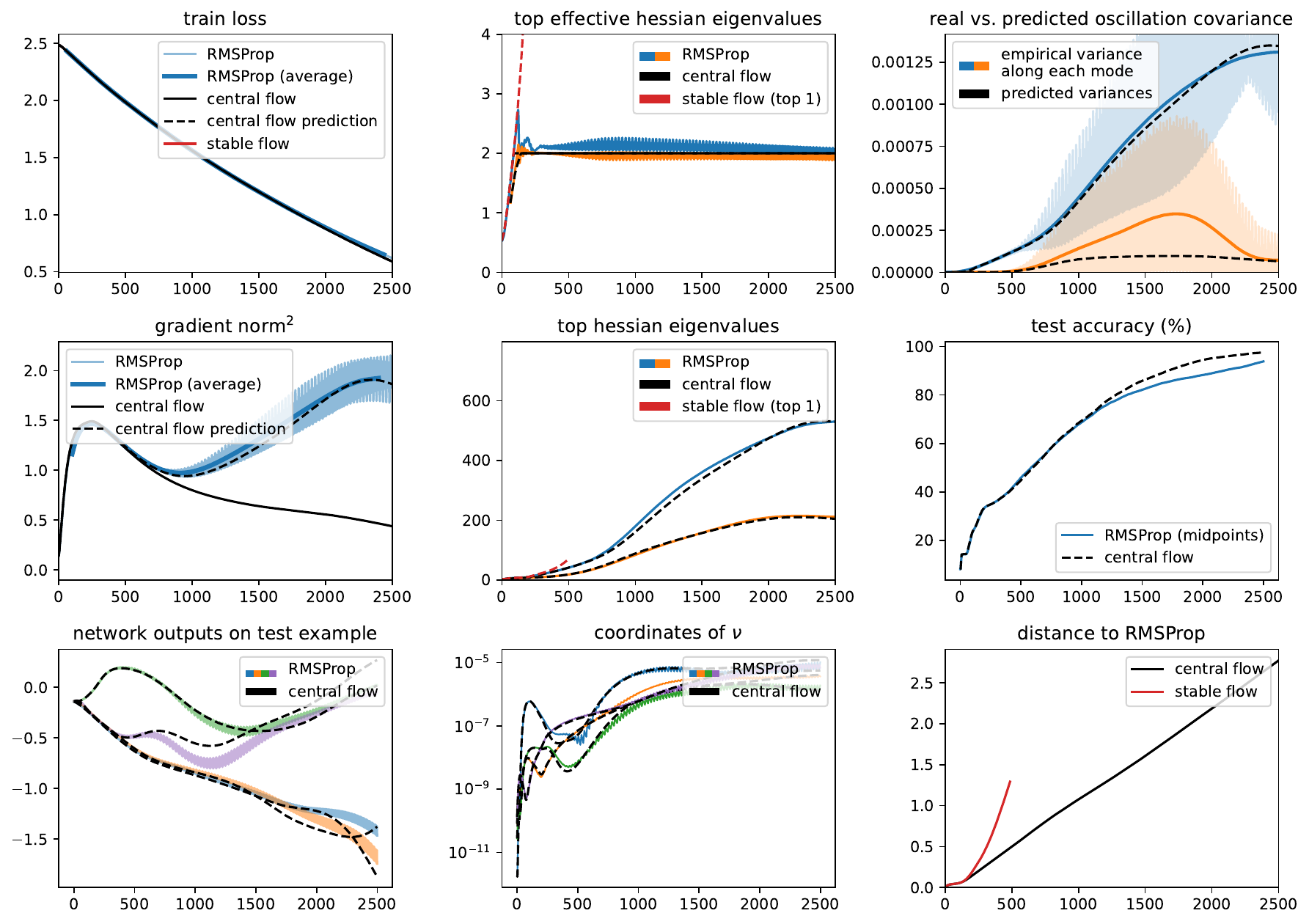}
        \caption{RMSProp central flow for a Transformer with CE loss, $\eta = $ 4e-05, $\beta_2 = $ 0.95, $\epsilon = $ 1e-08, and bias correction.}
        \label{fig:bulk-rmsprop:ce-transformer-2}
    \end{figure}
                
    \begin{figure}[H]
        \centering
        \includegraphics[width=0.8\linewidth]{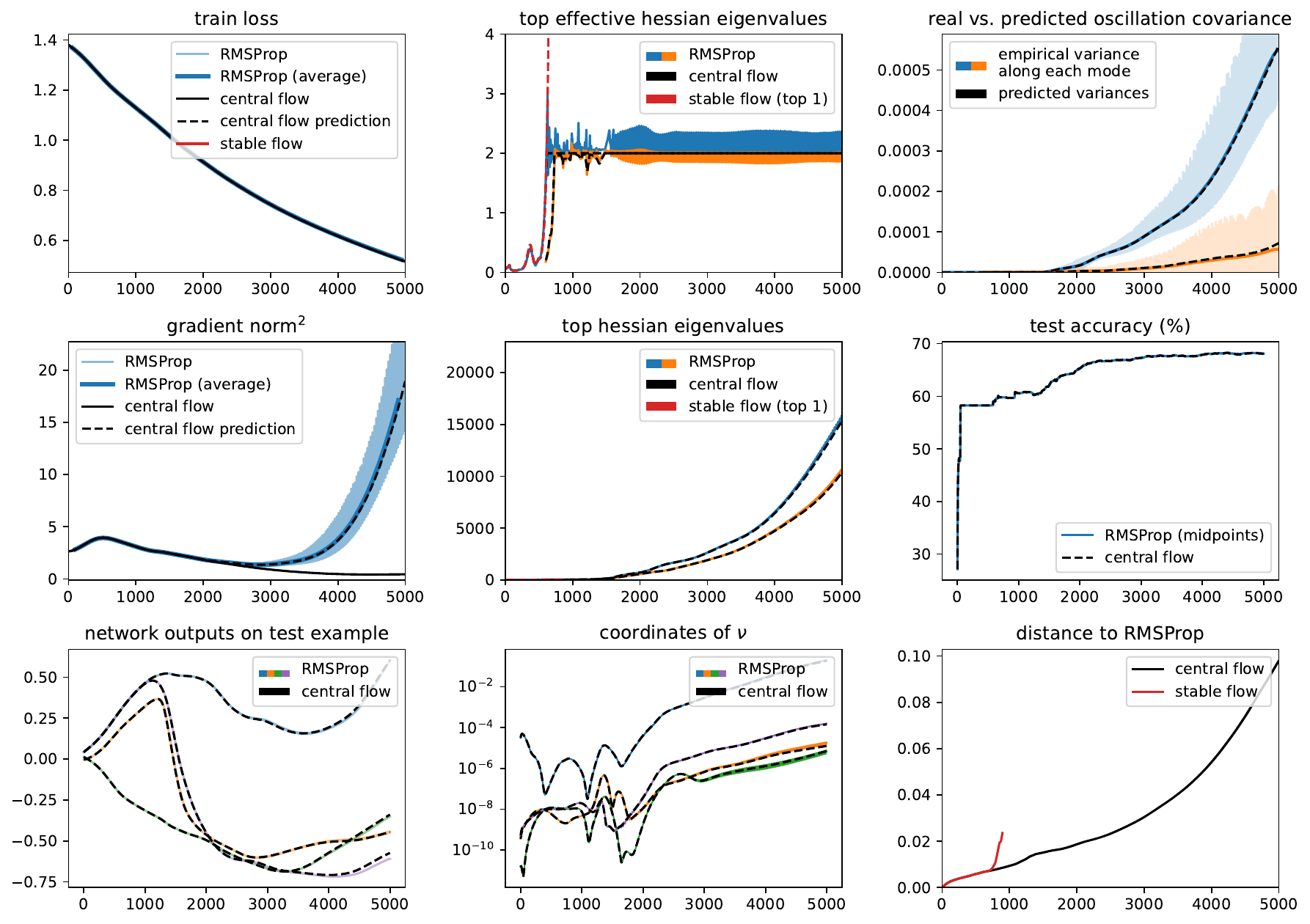}
        \caption{RMSProp central flow for a Mamba with CE loss, $\eta = $ 7e-06, $\beta_2 = $ 0.95, $\epsilon = $ 1e-08, and bias correction.}
        \label{fig:bulk-rmsprop:ce-mamba-0}
    \end{figure}
                
    \begin{figure}[H]
        \centering
        \includegraphics[width=0.8\linewidth]{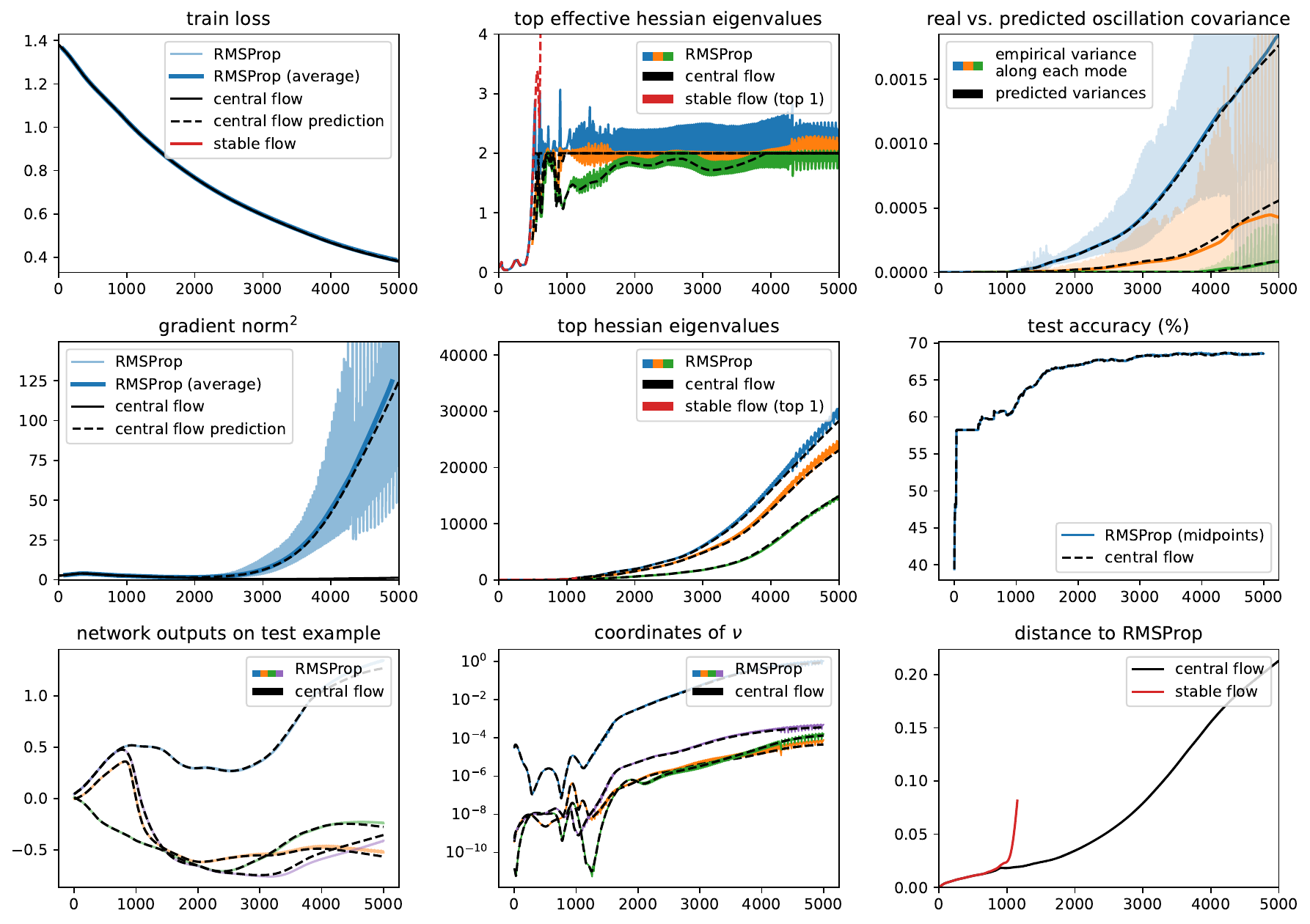}
        \caption{RMSProp central flow for a Mamba with CE loss, $\eta = $ 1e-05, $\beta_2 = $ 0.95, $\epsilon = $ 1e-08, and bias correction.}
        \label{fig:bulk-rmsprop:ce-mamba-1}
    \end{figure}
                
    \begin{figure}[H]
        \centering
        \includegraphics[width=0.8\linewidth]{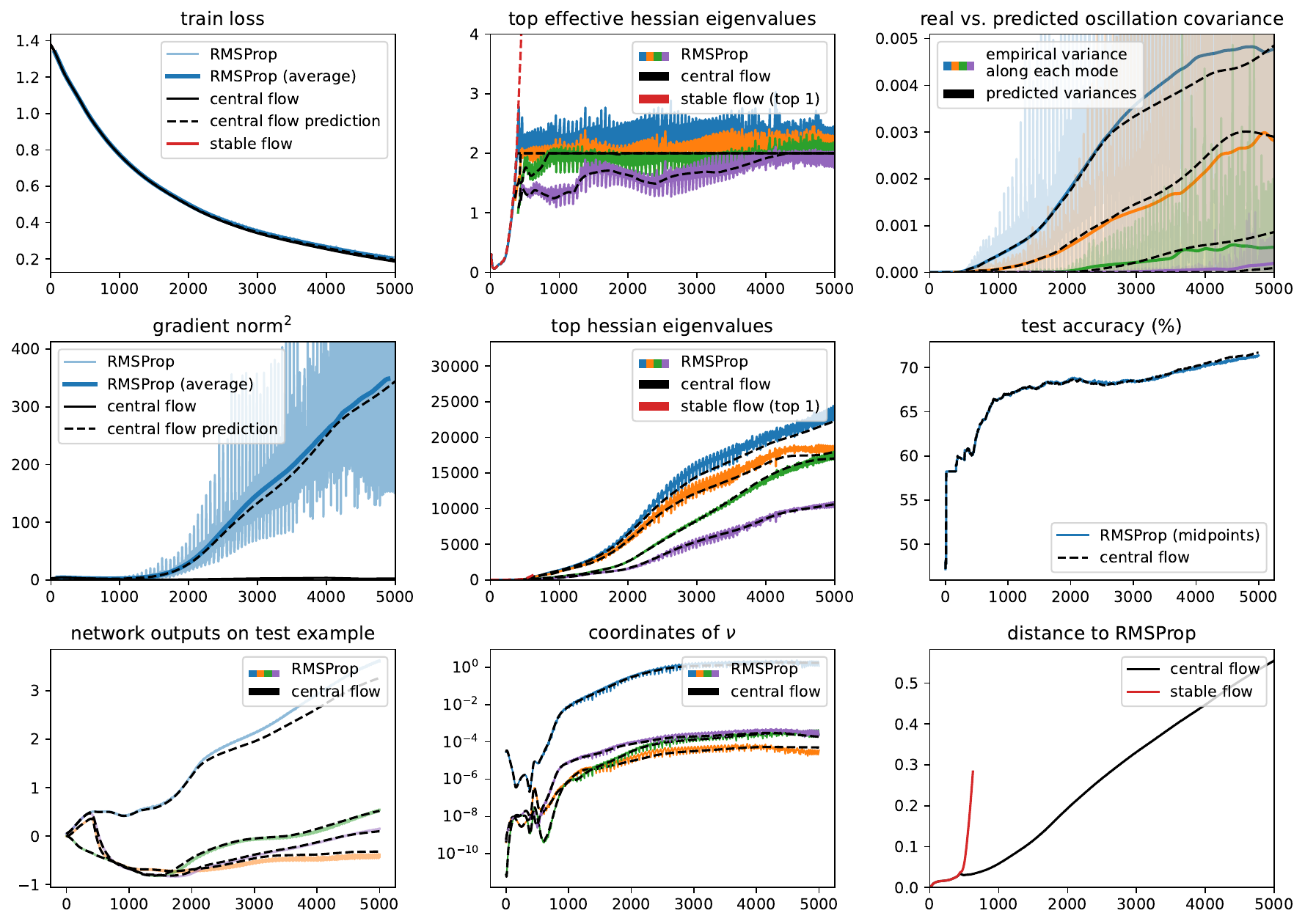}
        \caption{RMSProp central flow for a Mamba with CE loss, $\eta = $ 2e-05, $\beta_2 = $ 0.95, $\epsilon = $ 1e-08, and bias correction.}
        \label{fig:bulk-rmsprop:ce-mamba-2}
    \end{figure}
                \end{specialfigures}

\end{document}